\documentclass[11pt,hyphens]{article}
\usepackage[margin=1in]{geometry}
\usepackage[utf8]{inputenc}
\usepackage[T1]{fontenc}
\usepackage{hyperref}
\usepackage{booktabs}
\usepackage{amsfonts,amsthm,amsmath,amssymb}
\usepackage{nicefrac}
\usepackage{microtype}
\usepackage{xcolor}
\usepackage{cleveref}
\usepackage{csquotes}
\usepackage{graphicx}
\usepackage{subcaption}
\usepackage{enumitem}
\usepackage[most]{tcolorbox}
\usepackage{fancyhdr}
\usepackage{listings}

\usepackage{algorithm}
\usepackage{algpseudocode}
\usepackage{caption}
\usepackage{tabularx}
\usepackage{multicol}
\usepackage{systeme}
\usepackage{tikz}
\tcbuselibrary{breakable, skins}
\usetikzlibrary{decorations.pathreplacing, positioning, graphs, graphs.standard, shapes, arrows, arrows.meta, fit, matrix, patterns, shapes.misc, tikzmark}
\usepackage{pgfplots}
\pgfplotsset{compat=1.5}
\usepackage{framed}
\usepackage{thmtools}
\usepackage{thm-restate}
\usepackage{authblk}

\usepackage{pdfpages}
\usepackage{placeins}

\definecolor{light-blue-bg}{RGB}{235, 245, 255}
\definecolor{light-gray-bg}{RGB}{242, 242, 242}
\definecolor{mahogany}{rgb}{0.75, 0.25, 0.0}
\definecolor{darkblue}{rgb}{0.0, 0.0, 0.55}
\definecolor{darkpastelgreen}{rgb}{0.01, 0.75, 0.24}
\definecolor{forestgreen}{rgb}{0.0, 0.27, 0.13}
\definecolor{crimson}{rgb}{0.86, 0.08, 0.24}
\definecolor{airforceblue}{rgb}{0.36, 0.54, 0.66}
\definecolor{mygreen}{rgb}{0.0, 0.55, 0.0}
\definecolor{blue-violet}{rgb}{0.54, 0.17, 0.89}
\definecolor{sand}{RGB}{255, 248, 238}
\definecolor{chatbg}{RGB}{255, 248, 238}
\definecolor{userbg}{RGB}{238,244,255}
\definecolor{assistantbg}{RGB}{246,240,255}
\definecolor{boxblue}{RGB}{0, 0, 150}
\definecolor{boxback}{RGB}{245, 245, 255}

\hypersetup{
    colorlinks   = true,
    citecolor    = mahogany,
    linkcolor    = blue,
    urlcolor     = darkblue
}

\newtheorem{theorem}{Theorem}[section]
\newtheorem{lemma}[theorem]{Lemma}
\newtheorem{corollary}[theorem]{Corollary}
\newtheorem{conjecture}[theorem]{Conjecture}
\newtheorem{definition}[theorem]{Definition}
\newtheorem{remark}[theorem]{Remark}
\newtheorem{problem}[theorem]{Problem}

\newtheorem{fact}[theorem]{Fact}

\newtheorem{assumption}[theorem]{Assumption}

\newtcolorbox{userbox}{
    colback=userbg,
    colframe=userbg,
    arc=3mm,
    width=0.9\textwidth,
    halign=left,
}

\newtcolorbox{assistantbox}{
    colback=assistantbg,
    colframe=assistantbg,
    arc=3mm,
    width=0.9\textwidth,
    halign=left,
}

\newenvironment{chatbox}
{%
    \begin{tcolorbox}[
        colframe=boxblue,
        boxrule=0.8pt,
        arc=6mm,
        enhanced,
        width=\textwidth,
        left=2mm, right=2mm,
        top=2mm, bottom=2mm,
        interior style={fill=chatbg}, 
        before skip=10pt plus 2pt,
        after skip=10pt plus 2pt,
        breakable
    ]
}
{%
    \end{tcolorbox}
}

\newenvironment{userturn}
{%
    \small
    \begin{tcolorbox}[
        breakable,
        enhanced,
        flush left,
        width=0.95\textwidth,
        colback=userbg,
        colframe=gray!120,
        boxrule=1.3pt,                  
    ]
}
{%
    \end{tcolorbox}
}

\newenvironment{assistantturn}
{%
    \small
    \begin{tcolorbox}[
        breakable,
        enhanced,
        flush right,
        width=0.95\textwidth,
        colback=assistantbg,
        colframe=gray!120,
        boxrule=1.3pt,                  
    ]
}
{%
    \end{tcolorbox}
}

\usepackage{mdframed}
\usepackage{xcolor}
\newenvironment{aigenerated}
{%
    \begin{tcolorbox}[
        breakable,                     
        enhanced,                       
        colback=assistantbg,                
        colframe=gray!120,               
        boxrule=1.3pt,                  
        arc=5pt,                        
        top=10pt, bottom=10pt,          
        left=6pt, right=6pt             
    ]
}
{%
    \end{tcolorbox}
}

\newcommand{\E}{\mathbb{E}}
\newcommand{\R}{\mathbb{R}}

\newcommand{\sgn}{\text{sgn}}
\newcommand{\inner}[2]{\langle #1, #2 \rangle}
\newcommand{\MG}{\mathrm{MG}}
\newcommand{\SMT}{\mathsf{SMT}}
\newcommand{\Len}{\mathsf{Length}}
\newcommand{\Star}{\mathsf{Star}}
\newcommand{\NP}{\mathsf{NP}}

\newcommand{\ZPP}{\mathsf{ZPP}}
\newcommand{\TFNP}{\mathsf{TFNP}}

\newcommand{\Poly}{\mathsf{P}}
\newcommand{\SsubTwoP}{\mathsf{S}_2^\Poly}
\newcommand{\CH}{\mathsf{CH}}
\newcommand{\OO}{\mathcal{O}}
\newcommand{\tO}[1]{\tilde{\mathcal{O}}\left(#1\right)}
\newcommand{\eps}{\varepsilon}
\newcommand{\poly}{\mathrm{poly}}
\newcommand{\polylog}{\mathrm{polylog}}
\newcommand{\PPr}[1]{\mathbf{Pr}\left[#1\right]}
\newcommand{\OPT}{\mathsf{OPT}}
\newcommand{\Ent}{\mathrm{Ent}}

\newcommand{\Perm}{\mathrm{perm}}
\newcommand{\PermB}{\mathrm{perm}_{\mathrm{B}}}
\newcommand{\thetaMat}{\boldsymbol{\theta}}
\DeclareMathOperator*{\Recourse}{Recourse}

\newtheorem*{utheorem}{Theorem}
\newtheorem{proposition}[theorem]{Proposition}

\makeatletter
\def\@fnsymbol#1{%
  \ifcase#1 %
    \or * 
    \or $\dagger$ 
    \or $\ddagger$ 
    \or $\mathsection$ 
    \or $\mathparagraph$ 
    \or || 
    \or $\dagger\dagger$ 
    \or $\ddagger\ddagger$ 
    \or $\#$ 
    \or $\triangledown$ 
    \or $\ast$ 
    \or $\bigstar$ 
    \or $\lozenge$ 
    \or $\Diamond$ 
    \or $\clubsuit$ 
    \or $\spadesuit$ 
    \or $\heartsuit$ 
    \or $\diamondsuit$ 
    \or $\circ$ 
    \or $\bullet$ 
  \else ?\fi%
}
\makeatother

\title{Accelerating Scientific Research with Gemini:\\Case Studies and Common Techniques}

\author[1,2]{David P. Woodruff\textsuperscript{*, \dag, \ddag}}
\author[1]{Vincent Cohen-Addad\textsuperscript{\dag, \ddag}}
\author[1]{Lalit Jain\textsuperscript{\ddag}}
\author[1]{Jieming Mao\textsuperscript{\ddag}}
\author[1]{Song Zuo\textsuperscript{\dag, \ddag}}

\author[1]{MohammadHossein Bateni\textsuperscript{\dag}}
\author[3,1]{Simina Br\^anzei\textsuperscript{\dag}}
\author[1,5]{Michael P. Brenner\textsuperscript{\dag}}
\author[1]{Lin Chen\textsuperscript{\dag}}
\author[6]{Ying Feng\textsuperscript{\dag}}
\author[7]{Lance Fortnow\textsuperscript{\dag}}
\author[1]{Gang Fu\textsuperscript{\dag}}
\author[13]{Ziyi Guan\textsuperscript{\dag}}
\author[10]{Zahra Hadizadeh\textsuperscript{\dag}}
\author[1,14]{Mohammad T. Hajiaghayi\textsuperscript{\dag}}
\author[14]{Mahdi JafariRaviz\textsuperscript{\dag}}
\author[1,4]{Adel Javanmard\textsuperscript{\dag}}
\author[8]{Karthik C. S.\textsuperscript{\dag}}
\author[12]{Ken-ichi Kawarabayashi\textsuperscript{\dag}}
\author[1]{Ravi Kumar\textsuperscript{\dag}}
\author[1]{Silvio Lattanzi\textsuperscript{\dag}}
\author[9]{Euiwoong Lee\textsuperscript{\dag}}
\author[15]{Yi Li\textsuperscript{\dag}}
\author[10]{Ioannis Panageas\textsuperscript{\dag}}
\author[1]{Dimitris Paparas\textsuperscript{\dag}}
\author[2]{Benjamin Przybocki\textsuperscript{\dag}}
\author[2]{Bernardo Subercaseaux\textsuperscript{\dag}}
\author[13]{Ola Svensson\textsuperscript{\dag}}
\author[10]{Shayan Taherijam\textsuperscript{\dag}}
\author[15]{Xuan Wu\textsuperscript{\dag}}
\author[16]{Eylon Yogev \textsuperscript{\dag}}
\author[1]{Morteza Zadimoghaddam\textsuperscript{\dag}}
\author[11]{Samson Zhou\textsuperscript{\dag}}
\author[1]{Yossi Matias}
\author[1]{James Manyika}
 
\author[1]{Vahab Mirrokni\textsuperscript{*, \dag, \ddag}}

\affil[1]{Google Research}
\affil[2]{Carnegie Mellon University}
\affil[3]{Purdue University}
\affil[4]{University of Southern California}
\affil[5]{Harvard University}
\affil[6]{MIT}
\affil[7]{Illinois Institute of Technology}
\affil[8]{Rutgers University}
\affil[9]{University of Michigan}
\affil[10]{University of California, Irvine}
\affil[11]{Texas A\&M University}
\affil[12]{National Institute of Informatics, Tokyo and The University of Tokyo}
\affil[13]{EPFL}
\affil[14]{University of Maryland, College Park}
\affil[15]{Nanyang Technological University}
\affil[16]{Bar-Ilan University}

\date{}

\newcommand\blfootnote[1]{%
  \begingroup
  \renewcommand\thefootnote{}\footnote{#1}%
  \addtocounter{footnote}{-1}%
  \endgroup
}

\begin{document}
\maketitle

\blfootnote{\textsuperscript{*} Corresponding authors: \texttt{woodruffd@google.com}, \texttt{mirrokni@google.com}}
\blfootnote{\textsuperscript{\dag} Individual section contributor. Authors of individual sections are only responsible for correctness of their section.}
\blfootnote{\textsuperscript{\ddag} Contributed to advanced variants of Google models used in this paper.}

\vspace{-15mm}

\begin{abstract}
Recent advances in large language models (LLMs) have opened new avenues for accelerating scientific research. While models are increasingly capable of assisting with routine tasks, their ability to contribute to novel, expert-level mathematical discovery is less understood. We present a collection of case studies demonstrating how researchers have successfully collaborated with advanced AI models, specifically Google's Gemini-based models (in particular Gemini Deep Think and its advanced variants), to solve open problems, refute conjectures, and generate new proofs across diverse areas in theoretical computer science, as well as other areas such as economics, optimization, and physics. 
Based on these experiences, we extract common techniques for effective human-AI collaboration in theoretical research, such as iterative refinement, problem decomposition, and cross-disciplinary knowledge transfer. While the majority of our results stem from this interactive, conversational methodology, we also highlight specific instances that push beyond standard chat interfaces. These include deploying the model as a rigorous adversarial reviewer to detect subtle flaws in existing proofs, and embedding it within a "neuro-symbolic" loop that autonomously writes and executes code to verify complex derivations. Together, these examples highlight the potential of AI not just as a tool for automation, but as a versatile, genuine partner in the creative process of scientific discovery.
\end{abstract}

\tableofcontents
\newpage 
\section{Introduction}
The integration of artificial intelligence into the scientific workflow has traditionally focused on data analysis, simulation, and routine automation. However, the emergence of frontier large language models (LLMs) with enhanced reasoning capabilities suggests a shift towards AI systems that can act as an effective research collaborator, and  contribute to the core intellectual tasks of research: formulating hypotheses, designing algorithms, developing novel techniques to tackle an open problem, and proving theorems. 

This paper documents a series of independent experiments where researchers utilized advanced AI models to tackle specific, often long-standing, open problems in their respective fields. The results range from resolving conjectures in information theory and submodular maximization to deriving exact analytical spectra for cosmic strings and improving bounds for graph algorithms.

\textbf{Key Contributions:} Given the breadth of the fields and case studies covered, we highlight three primary contributions of this paper:
\begin{enumerate}
    \item \textbf{Demonstrating the power of hybrid models:} We show the potential of a hybrid human-AI model in tackling real, highly complex open problems across a diverse set of scientific and mathematical domains.
    \item \textbf{Advancing state-of-the-art science:} We make concrete, notable contributions to real problems in science, yielding novel results such as resolving conjectures, improving algorithmic bounds, and identifying critical flaws in literature.
    \item \textbf{Showcasing the future of scientific discovery:} We illustrate what the future of research looks like in practice. By detailing novel workflows, human-AI collaboration techniques, and interactive problem-solving strategies, we highlight the real implications for how science is likely to be conducted in the coming years.
\end{enumerate}

Crucially, these are not hypothetical scenarios but real-world instances where AI played a pivotal role in advancing state-of-the-art research. By analyzing these diverse case studies, we identify recurring patterns and effective strategies for leveraging AI in such theoretical research. The significance of these results is the generality of these techniques and existence of common recipes that can be applied by probing publicly available Gemini models. These include:

\begin{itemize}
      \item \textbf{Agentic Execution Loops:} Moving beyond manual chat interfaces, models can be embedded in automated ``neuro-symbolic" pipelines. In these setups, the AI proposes a mathematical solution, writes code to numerically verify it, and automatically ingests execution errors (e.g., Python tracebacks) to self-correct and autonomously prune invalid mathematical branches (see Section \ref{sec:cosmic_strings}).
       \item \textbf{Deep Technical Review and Bug Detection:} Beyond constructive tasks, AI models can act as adversarial reviewers. We present a case where an LLM, guided by an iterative self-correction protocol, identified a fatal flaw in a recent cryptography preprint claiming a major breakthrough (SNARGs from LWE) \cite{YogevGuan2025}, a subtle inconsistency between a definition and a construction that had escaped initial human review.
    \item \textbf{Deep Literature Synthesis and Connection:} AI models can identify obscure connections between disparate fields (e.g., linking Steiner trees to the Kirszbraun Extension Theorem) that human experts might overlook.
    \item \textbf{Counterexample Generation:} Models are adept at constructing counterexamples to refute plausible conjectures, saving researchers from pursuing dead ends.
    \item \textbf{Algorithmic Insight and Optimization:} In algorithmic research, AI can propose novel data structures or analysis techniques (e.g., adapting quadtrees for different norms) to improve time complexity bounds.
    \item \textbf{Automated Proof Generation and Verification:} For well-defined subproblems, AI can generate rigorous proofs, sometimes requiring minimal human intervention, or verify complex manual derivations.
    \item \textbf{Interactive Refinement:} A recurring theme is the iterative "conversation" where the researcher guides the model, correcting errors and refining the problem statement, which often leads to the final solution.
    \item \textbf{Theoretical Justification of Heuristics:} AI models can bridge the gap between empirical success and theory by deriving rigorous justifications for heuristic methods, such as characterizing the implicit regularization induced by specific architectural choices like the Self-regularized Gumbel Sigmoid.
\end{itemize}

\subsection{Model Description}
Unless otherwise specified in the testimonials, the model used in this work is a Google-internal advanced version of Gemini Deep Think \cite{GeminiDeepThinkIMO} – an enhanced reasoning model for complex problems that incorporates some of our latest research techniques, including parallel thinking, i.e., exploring multiple proof branches simultaneously. Similar models were also used recently in the International Mathematics Olympiad, achieving gold-medal standard performance \cite{GeminiDeepThinkIMO}. This setup enables the model to simultaneously explore and combine multiple possible solutions before giving a final answer, rather than pursuing a single, linear chain of thought. This approach aligns with prior research demonstrating the efficacy of tree-based search methods in mathematical reasoning \cite{wu2023tree}. 

To make the most of the reasoning capabilities of Deep Think, we additionally trained this version of Gemini on novel reinforcement learning techniques that can leverage more multi-step reasoning, problem-solving and theorem-proving data. We also provided Gemini with access to a curated corpus of high-quality solutions to mathematics problems \cite{GeminiDeepThinkIMO}. We also added a long linear chain of interactive verification calls at the output to increase reasoning depth and provide a lot of iterative verification, which was augmented by human expert verification.

\begin{figure}[htbp]
    \centering
    \includegraphics[width=0.7\textwidth]{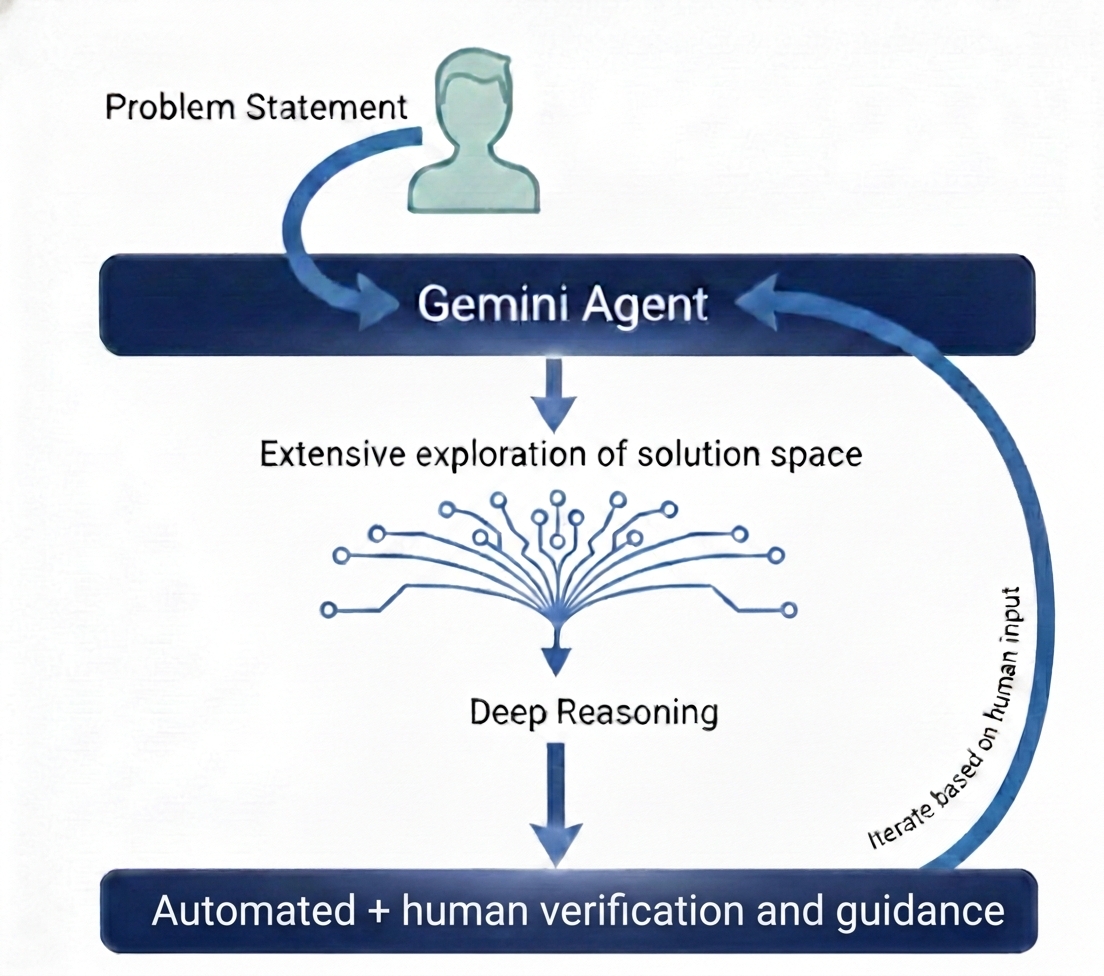}
    \caption{Overview of the reasoning architecture used in many testimonials: an extensive exploration of the solution space combined with deep reasoning and a long tail of automated and human verification and in several cases, guidance and iterative feedback.}
    \label{fig:model_arch}
\end{figure}


\subsection{Related Work}
Recent advances in Large Language Models (LLMs) have spurred significant interest in their potential to accelerate scientific discovery. Several concurrent works explore the capabilities of AI in assisting mathematical and theoretical research.
Here, we situate our contributions in the context of two particularly relevant studies: OpenAI's investigation into GPT-5's scientific capabilities \cite{OpenAI2025} and the work of Nagda et al. on using AlphaEvolve for hardness of approximation \cite{Nagda2025}.

\paragraph{Comparison with OpenAI's GPT-5 Experiments.}
Concurrent work by OpenAI \cite{OpenAI2025} documents a series of case studies demonstrating GPT-5's ability to contribute to research in fields ranging from biology and physics to mathematics. While both our work and theirs highlight the potential of frontier models as research assistants, there are differences in scope and focus.
Our work places a heavier emphasis on \emph{Theoretical Computer Science} (TCS), covering a wide array of subfields including algorithmic game theory, approximation algorithms, complexity theory, cryptography, graph theory, information theory, optimization, online algorithms, streaming algorithms, and query complexity.

Furthermore, recent activity on the Erd\H{o}s Problem database has showcased the growing role of AI in resolving open mathematical conjectures. Notably, Erd\H{o}s Problem \#728, concerning factorial divisibility, was solved autonomously by an AI system (Aristotle), with GPT-5 playing a crucial role in generating the proof strategy and subsequent expositions \cite{OpenAI2025}. This achievement, verified by Terence Tao and the broader community, underscores the capacity of these models not only to solve problems but also to rapidly iterate on and improve mathematical writing. Other problems, such as \#729, \#401, \#205, \#333, \#281, and \#460 have also seen AI-assisted progress or full resolutions, further validating the trend of AI-accelerated mathematical discovery. Our work complements these findings by providing a detailed methodology and a broader set of examples across TCS, demonstrating that these capabilities are not isolated incidents but part of a larger shift in research practices.

\paragraph{Comparison with AlphaEvolve for Hardness of Approximation.}
Nagda, Raghavan, and Thakurta \cite{Nagda2025} present significant results in hardness of approximation for MAX-CUT, MAX-k-CUT, and TSP. Their primary methodology relies on \emph{AlphaEvolve} \cite{NVE25}, a specialized evolutionary algorithm that uses an LLM as a mutation operator to search for combinatorial structures (gadgets) that optimize a specific objective function (inapproximability ratios).

In contrast, our work focuses on the utility of \emph{general-purpose} LLMs (specifically Gemini and its reasoning-enhanced variants like Deep Think) as conversational research partners.
While AlphaEvolve is a powerful tool for search problems where a clear objective function exists (e.g., finding a gadget with specific properties), our case studies demonstrate that general-purpose models can be effective across a broader range of intellectual tasks where the "objective" is less well-defined, such as generating a proof idea, identifying a relevant theorem from a different field, or interactively refining a definition.

For instance, our derivation of the analytical spectrum for cosmic strings (Section \ref{sec:cosmic_strings}) involved a tree-search over \emph{derivation strategies} and mathematical concepts, rather than a direct optimization of a numerical value.
Similarly, the resolution of the "Simplex is the Best for Graph Embeddings" conjecture (Section \ref{sec:steiner}) relied on conceptual bridging (linking Steiner trees to Lipschitz extensions) rather than combinatorial search.
Thus, our work complements \cite{Nagda2025} by showing that standard frontier models, without specialized evolutionary wrappers, can drive progress in theoretical research through dialogue and reasoning.

\paragraph{Concurrent Work.}
A recent work \cite{georgiev2025mathematical} does mathematical discovery at scale using AI-assisted methodologies. Our work aligns with it in demonstrating the potential of AI as a collaborative partner in theoretical research, but we offer a distinct set of case studies and a focus on specific TCS domains. Additionally, we note independent works \cite{sellke2025learning, sothanaphan2026resolution} that further contribute to the growing body of literature on AI in mathematics and science.

In concurrent and independent work of Feng et al. \cite{F26Erdos, F26}, they study similar Gemini-based models using similar techniques, though their focus is on pure math problems whereas ours is mostly on other disciplines, such as many areas of theoretical computer science, as well as economics and physics.

\paragraph{Roadmap}
The remainder of this paper is organized as follows. Section \ref{sec:techniques} synthesizes the common techniques and methodologies derived from these case studies. The sections after that give detailed and extensive testimonials from each collaboration, roughly categorized by the main role of the AI in the collaboration. 
Finally, Section \ref{sec:conclusion} discusses the broader implications for the future of theoretical research.

\section{Techniques for AI-Assisted Research}\label{sec:techniques}

Across the various successful collaborations documented here, several common techniques emerged. These strategies represent a "playbook" for researchers looking to integrate AI into their theoretical work.

\subsection{Iterative Prompting and Refinement}
Rarely does a model solve a deep open problem in a single shot. Success often comes from an iterative dialogue.
\begin{itemize}
    \item \textbf{Initial Broad Query:} Start by asking the model to digest a relevant paper or problem statement to gauge its understanding.
    \item \textbf{Specific Sub-tasks:} Break down the main problem into smaller, verifiable lemmas or calculations.
    \item \textbf{Error Correction:} When the model makes a mistake (e.g., a wrong constant or invalid assumption), pointing it out specifically often leads to a correct and sometimes more elegant solution in the next turn.
    \item \textbf{Scaffolding:} Providing the model with a high-level proof strategy or "scaffold" allows it to fill in the technical details effectively.
        \item \textbf{Adversarial Self-Correction for Review:} When tasked with reviewing complex proofs, standard prompts often yield superficial results. A rigorous protocol instructing the model to (1) generate an initial review, (2) critique its own findings for hallucinations, and (3) iteratively refine the logic, enables deep technical critique. This was critical in identifying the flaw in the SNARGs paper (Section \ref{sec:crypto_bug}), where the model distinguished between a definition of \textit{perfect consistency} and a construction of \textit{statistical consistency}.
\end{itemize}

\subsection{Cross-Pollination of Ideas}
Models have ingested vast amounts of literature across all fields. They excel at:
\begin{itemize}
    \item \textbf{Finding Analogies:} Identifying similar problems in different domains (e.g., applying techniques from computational geometry to graph theory).
    \item \textbf{Retrieving obscure theorems:} Bringing relevant but less-known theorems to the researcher's attention (e.g., Stone-Weierstrass or Kirszbraun Extension Theorem) to bridge gaps in a proof.
\end{itemize}

\begin{figure}[htbp]
    \centering
    \begin{subfigure}[b]{0.45\textwidth}
        \centering
        \includegraphics[width=\textwidth]{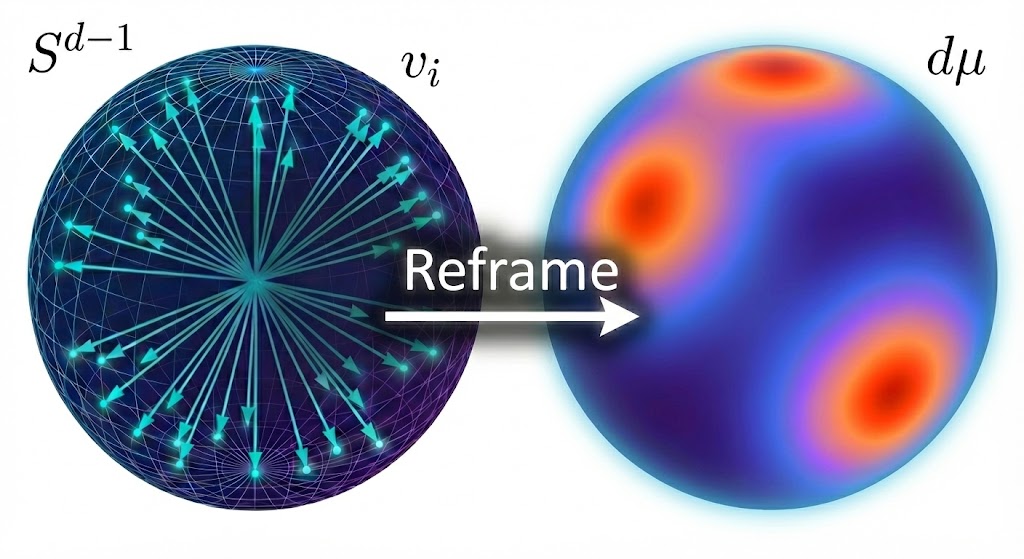}
        \caption{\textbf{From Discrete Combinatorics to Continuous Measure Theory:} To resolve an open question about bounded-rank SDP solutions for Max-Cut, the AI reframed a discrete combinatorial problem involving unit vectors into an energy minimization problem over continuous probability measures on the unit sphere ($S^{d-1}$, illustrated above). This cross-pollination allowed the AI to bypass traditional approximation tool-kits and instead apply the Stone-Weierstrass Theorem from geometric functional analysis to establish the necessary variance bounds. See Section \ref{sec:max_cut}.}
        \label{fig:cross_sphere}
    \end{subfigure}
    \hfill
    \begin{subfigure}[b]{0.45\textwidth}
        \centering
        \includegraphics[width=\textwidth]{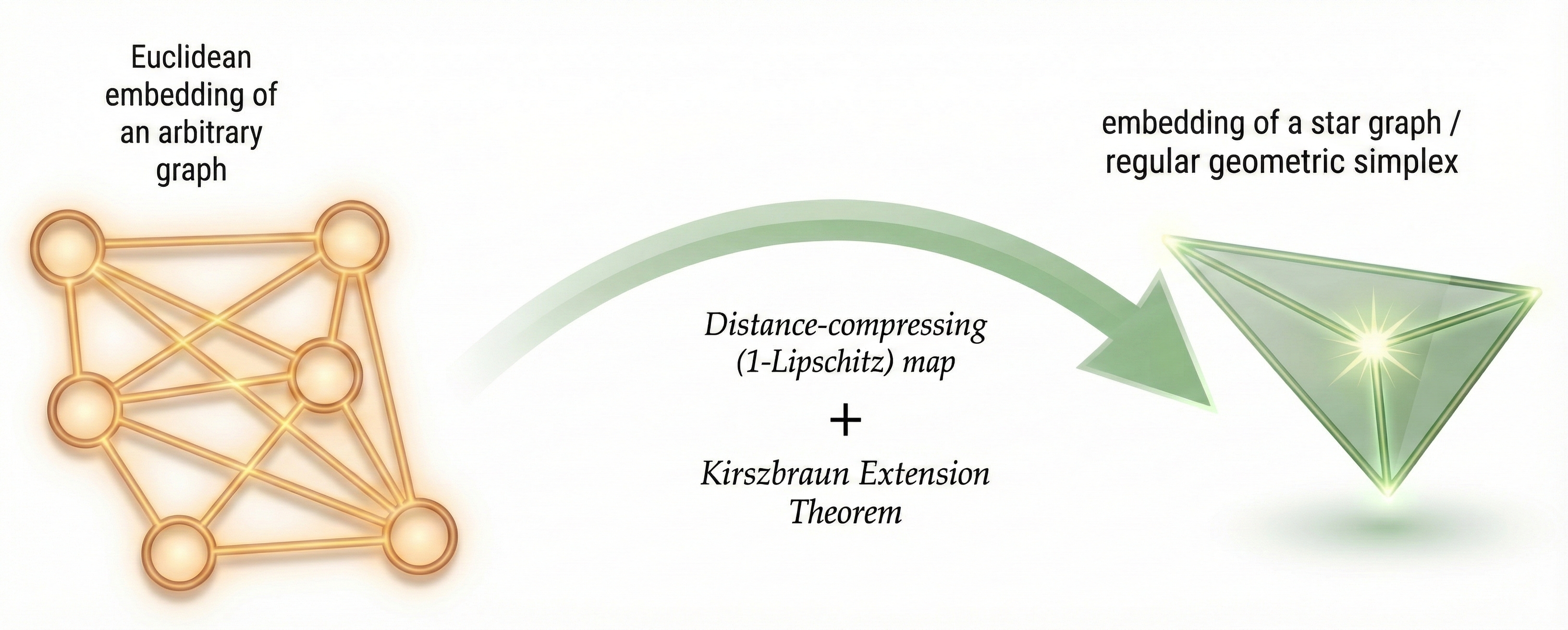}
        \caption{\textbf{From Graph Embeddings to Hilbert Space Geometry:} To resolve the "Simplex is Best" conjecture for Euclidean Steiner Trees, the AI constructed a mapping from the Euclidean embedding of an arbitrary graph (orange network, left) to the embedding of a star graph, which forms a regular geometric simplex (green tetrahedron, right). By recognizing this as a distance-compressing (1-Lipschitz) map, the AI realized it could apply the obscure Kirszbraun Extension Theorem to formally guarantee that transforming the graph into a simplex never increases the Steiner tree cost. See Section \ref{sec:steiner}.}
        \label{fig:cross_simplex}
    \end{subfigure}
    \vspace{0.2cm}
    \caption{Visualizing Cross-Pollination: The AI model successfully resolved open problems by reframing them in entirely different mathematical domains, bringing advanced topological and geometric theorems to bear on algorithmic and combinatorial problems.}
    \label{fig:cross_pollination}
\end{figure}

\subsection{Simulation and Counterexample Search}
For conjectures, models can be tasked to:
\begin{itemize}
    \item \textbf{Construct Counterexamples:} Generating specific instances (graphs, matrices, set systems) that violate a proposed conjecture.
    \item \textbf{Verify Small Cases:} Writing code to computationally verify a conjecture for small $n$, providing empirical evidence before attempting a general proof.
\end{itemize}

\subsection{Formalization and Rigor Checks}
While models can hallucinate, they are increasingly capable of formal reasoning when prompted correctly.
\begin{itemize}
    \item \textbf{Proof Sketch to Formal Proof:} Asking the model to expand a high-level sketch into a rigorous \LaTeX\ proof.
    \item \textbf{Sanity Checking:} Using the model to check consistent usage of notation or to verify that all conditions of a theorem are met.
    \item  \textbf{Mathematical Derivation:} Researchers can offload the mechanical heavy lifting of complex derivations to the model, such as simplifying expressions, computing limits, or solving integrals, allowing them to focus on the high-level logic.
\end{itemize}

\subsection{Interactive Proof Construction with External Validation}
A powerful technique involves using the model to identify necessary external theorems and then validating those theorems with external sources.
\begin{itemize}
    \item \textbf{Identifying Dependencies:} Asking the model to list all external theorems required for a proof.
    \item \textbf{External Verification:} The researcher finds the formal statements of these theorems (e.g., via Google Search or textbooks) and feeds them back to the model.
    \item \textbf{Self-Contained Proof Generation:} The model then incorporates these verified statements to generate a rigorous, self-contained proof.
\end{itemize}

\subsection{Agentic Tool-Use and Automated Feedback}
While most of the collaborations documented here rely on manual, iterative dialogue, frontier models can also be deployed as agents within automated programmatic loops. For problems requiring heavy algebraic manipulation or where symbolic math must be rigorously grounded in numerical reality (e.g., the physics case study in Section 6.4), researchers can construct a ``neuro-symbolic" pipeline consisting of the following steps:
\begin{itemize}
\item {\bf Symbolic Proposal:} The LLM generates a mathematical hypothesis or intermediate expression.
\item {\bf Code Generation:} The LLM autonomously writes an executable script (e.g., in Python) to evaluate its proposed math against a known numerical baseline.
\item {\bf Automated Feedback:} The system executes the code. If the code fails, hits a runtime error, or reveals numerical instability (such as catastrophic cancellation), the automated harness captures the exact execution traceback and injects it back into the LLM's context window.
\end{itemize}
This programmatic loop allows the AI to systematically explore a tree of mathematical solutions, autonomously pruning dead-ends and self-correcting its algebra without requiring a human-in-the-loop for every intermediate step.

\subsection{Human-AI Collaboration Dynamics}
It is important to note that AI models like Gemini function best as powerful collaborators rather than autonomous researchers. In the successful case studies presented here, the partnership between the model and the human expert was key to the results.

\begin{itemize}
    \item \textbf{Selection and Refinement:} Models are capable of generating a high volume of diverse mathematical statements. Human expertise is valuable for filtering these outputs and identifying the most promising directions for further investigation.
    \item \textbf{Iterative Guidance:} While models can solve some problems in a single shot, tackling deep open problems is often most successful through an iterative process. The researcher guides the model, refining the problem statement and narrowing the focus to achieve the desired result.
    \item \textbf{Standard Verification:} As with any research collaboration, the AI can make mistakes, and AI-generated proofs and counterexamples benefit from rigorous verification. The model serves as an excellent accelerator for ideation and drafting, while the researcher validates the mathematical correctness.
    \item \textbf{Optimizing Context:} Performance is often optimized by providing clear, self-contained definitions, particularly when using highly specialized notation that may deviate from standard literature.
    \item \textbf{Leveraging Literature:} We found that incorporating relevant papers directly into the context significantly enhanced the model's ability to construct correct proofs for specialized domains.
    \item \textbf{Context De-Identification:} The model sometimes avoids non-trivial machinery (for example, the Kirszbraun extension theorem as in Section~\ref{sec:steiner}), treating such proofs as non-elementary, or it may do so because the prompt steers it toward conservatism to avoid hallucinations, causing it to abandon an otherwise viable approach. Separately, on occasion, when shown the paper as context in the prompt, it recognizes the statement to prove as a conjecture in the paper and refuses to attempt it on the grounds that it is an open problem. One way to bypass both issues is via context de-identification (remove the paper and provide only the problem statement and definitions), after which the model typically engages (and, in cases like Section~\ref{sec:steiner}, may ultimately draw on deeper results to resolve a conjecture).
\end{itemize}

We view the AI as a tireless, knowledgeable, and creative bright junior collaborator. Its value lies in its ability to synthesize vast amounts of information and generate novel hypotheses that human researchers can then validate and build upon.

\subsection{Summary: The AI-Assisted Research Playbook}\label{sec:conclusion}

Taken together, the techniques outlined above represent a fundamental shift in how theoretical research can be conducted. The LLM is no longer acting merely as a search engine or a syntax formatter; it is functioning as a combinatorial reasoning engine and a sounding board for abstract ideation. 

However, the most successful collaborations documented in the following case studies all share a common denominator: strong human orchestration. Although several of our successes came from a single ``zero-shot" prompt, many required scaffolded reasoning, i.e., breaking down deep open problems into verifiable parts, testing hypotheses through adversarial prompting, and actively steering the model. Informally this interactive workflow has been called ``vibe-proving". 

By mastering the techniques outlined above—particularly iterative refinement and adversarial self-correction—researchers can effectively elevate the AI from a passive tool into an active, high-leverage research partner. The following sections document these dynamics in practice across a wide array of disciplines.
\\\\
{\bf Color-coding:} In the remainder of the paper, chat interactions are presented using a light, color-coded design. 
When present, the overall chat box uses a sand-colored background to group messages within a conversation. 
In cases where only a single message appears—such as an isolated user message or an isolated AI response—the surrounding chat box may be omitted, and only the message styling is shown. 
Messages authored by the user are displayed on a light blue background
and AI-generated messages are shown on a light red background. 

\section{Deep Technical Review and Counterexamples}
In this section, we see how AI can act as a critical adversary, finding subtle bugs in human proofs, refuting plausible hypotheses, and hunting for counterexamples.

\subsection{Online Algorithms: Submodular Welfare}\label{sec:submodular}
\textit{Written by Morteza Zadimoghaddam and Vahab Mirrokni.}

\subsubsection*{Problem Context}
In online submodular welfare maximization, the Greedy algorithm achieves a competitive ratio of $0.5$. A conjecture by Korula et al. (2015) suggested that the gain from "copying" an item to the end of the stream is bounded by the gain of "moving" it. Proving this would imply a competitive ratio of $0.567$.
\\\\
\textbf{AI Contribution}

The researcher provided the research paper that included the formal definitions as well as a conjecture statement, and tasked the model zero-shot with just identifying any open question in the paper and solving it. Throughout this paper we add a prompt that asks the model to be rigorous in its mathematical proofs and to thoroughly check its answers - we do not specify the details of this prompt. 

\begin{figure}[!htb]
\begin{userturn}
Please try to improve the paper by identifying and solving an open question from it. 
\\\\
<Math rigor prompt>
\end{userturn}
\end{figure}
 Remarkably, the model chose the following counterexample and refuted it autonomously. The subsequent verification calculations were also generated  by the model without extra interaction or supervision. This demonstrates its ability to construct valid, non-trivial instances that violate human-proposed conjectures. The human researchers defined the initial search parameters and independently verified the model's arithmetic.

\begin{itemize}
    \item \textbf{Refutation Strategy:} The model independently selected the minimal non-trivial dimensions ($n=3$ items, $m=2$ agents).
    \item \textbf{Autonomous Construction:} In a single output, the model successfully defined the specific, valid submodular valuation functions (presented in Table 1) and the $\varepsilon$-perturbation required to strictly break the bound.
    \item \textbf{Automated Verification:} Without human intervention, the model correctly performed the tedious calculations of expected marginal gains across all $3! = 6$ permutations to formally verify the violation.
\end{itemize}

\subsubsection*{Technical Details}

In the problem of Online Submodular Welfare Maximization (Online SWM), items arrive one by one in a random order and must be irrevocably assigned to agents with submodular valuation functions. The goal is to maximize the total social welfare. It is well-established that the simple Greedy algorithm achieves a competitive ratio of $1/2$ in the adversarial setting. In the random order model, Korula et al. proved that Greedy achieves a ratio of at least $0.5052$, strictly beating $1/2$ \cite{korula2015online}.

To obtain a stronger bound, the authors proposed a natural conjecture (Conjecture 15) relating the expected marginal gain of the Greedy algorithm when an item is moved to the end of the sequence versus when it is duplicated at the end. If true, this conjecture would imply a competitive ratio of at least $0.567$.

In this note, we refute this conjecture. We construct a counter-example involving a specific submodular valuation function and a set of items where the conjectured inequality fails to hold.

\subsubsection*{The Open Question}

We focus on the definitions and conjecture provided in Section 5 of \cite{korula2015online}. Let $\pi = (\pi_1, \pi_2, \pi_3, \dots, \pi_n)$ be a permutation of $n$ items.

\begin{definition}[Permutation Variants \cite{korula2015online}]
Fix a permutation $\pi$.
\begin{itemize}
    \item Let $\pi^{Move, i}$ be the permutation achieved by moving the item $\pi_i$ to the end of the sequence:
    $$ \pi^{Move, i} = (\pi_1, \dots, \pi_{i-1}, \pi_{i+1}, \dots, \pi_n, \pi_i) $$
    \item Let $\pi^{Copy, i}$ be the sequence of $n+1$ items achieved by copying $\pi_i$ to the end without removing the original $\pi_i$:
    $$ \pi^{Copy, i} = (\pi_1, \dots, \pi_{i-1}, \pi_i, \pi_{i+1}, \dots, \pi_n, \pi_i) $$
\end{itemize}
\end{definition}

Let $\MG(k, \sigma)$ denote the marginal gain that the Greedy algorithm obtains by allocating the $k$-th arriving item in a sequence $\sigma$. Let $\mathbb{S}_n$ be the set of all $n!$ permutations. The conjecture proposed by Korula et al. is as follows:

\begin{conjecture}[Conjecture 15 of \cite{korula2015online}] \label{conj:main}
For any instance of the online submodular welfare maximization problem,
\begin{equation}
    \mathbb{E}_{\pi \sim \mathbb{S}_n} \left[ \sum_{i=1}^{n} \MG(n+1, \pi^{Copy, i}) \right] \le \mathbb{E}_{\pi \sim \mathbb{S}_n} \left[ \sum_{i=1}^{n} \MG(n, \pi^{Move, i}) \right]
\end{equation}
\end{conjecture}

The Right Hand Side (RHS) of the inequality is equivalent to $n \cdot \mathbb{E}_{\pi \sim \mathbb{S}_n}[\MG(n, \pi)]$, as $\pi^{Move, i}$ and $\pi$ share the same distribution. The conjecture essentially posits that the gain from a duplicated item at the end of the stream is, in expectation, bounded by the gain of the last item in a standard random permutation.

\subsubsection*{Refutation of the Conjecture}

We answer the open question negatively. We show that there exist submodular valuation functions and instances where the expected marginal gain from the copied item exceeds the bound proposed in Conjecture \ref{conj:main}.

\begin{theorem}
Conjecture \ref{conj:main} is false. There exists an instance of Online SWM such that:
$$ \mathbb{E}_{\pi \sim \mathbb{S}_n} \left[ \sum_{i=1}^{n} \MG(n+1, \pi^{Copy, i}) \right] > \mathbb{E}_{\pi \sim \mathbb{S}_n} \left[ \sum_{i=1}^{n} \MG(n, \pi^{Move, i}) \right] $$
\end{theorem}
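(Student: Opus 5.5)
The plan is to exhibit an explicit small instance and evaluate both sides of the inequality exactly. I would take $n=3$ items $\{a,b,c\}$ and $m=2$ agents $\{A,B\}$. This keeps the expectation over $\mathbb{S}_3$ to $3!=6$ permutations and each side to $18$ marginal-gain terms.

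\textbf{Setting up the instance.} I would use \emph{coverage} valuations: each agent $j$ has a ground set of weighted elements, and item $x$ covers a subset $S_j(x)$ of it. Then $v_j(T)$ is the total weight of $\bigcup_{x\in T}S_j(x)$. This has two advantages:
\begin{itemize}
\item submodularity and monotonicity are automatic;
\item the meaning of a copy at the end of $\pi^{Copy,i}$ is unambiguous, since a copy covers the same elements as the original.
\end{itemize}
For the copy of item $x$, the agent already holding $x$ gains $0$. The other agent gains $v_j(x \mid \text{its bundle})$. Greedy breaks ties by a fixed rule, and a small perturbation $\varepsilon$ will make all comparisons strict.

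\textbf{Shaping the two sides.} By the remark after Conjecture~\ref{conj:main}, the right-hand side equals $3\,\mathbb{E}_\pi[\MG(3,\pi)]$. So I want the \emph{third real item} to typically have small marginal gain, while a \emph{copy} typically still has a large gain for the agent not holding the original. The mechanism I would aim for is the following.
\begin{itemize}
\item Items overlap heavily \emph{within} an agent's coverage. For example, agent $A$ values each item at about $1$, but any two items together already cover almost everything.
\item Greedy therefore tends to split the first two items between $A$ and $B$. By the time the third item arrives, both agents are nearly saturated with respect to it, so $\MG(3,\pi)$ is small.
\item A copy of an item $x$ held by $A$, however, can be fully fresh for $B$ if $B$'s bundle does not overlap $S_B(x)$.
\end{itemize}
Concretely, I would try this shape: items $a,b$ are ``generic'' and valued by both agents with mutual overlap, while item $c$ is valued mostly by one agent. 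Then tune the weights, plus the $\varepsilon$ tie-breaking perturbation, so that
\[
\mathbb{E}_\pi\Big[\sum_{i}\MG(4,\pi^{Copy,i})\Big] > 3\,\mathbb{E}_\pi[\MG(3,\pi)].
\]

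\textbf{Verification.} For each of the six orders I would record Greedy's allocation of the three real items. From that I read off:
\begin{itemize}
\item $\MG(3,\pi)$;
\item the three copy gains, each being the max over the two agents of the marginal value of the copied item given the final bundles.
\end{itemize}
Averaging then gives both sides. Symmetry between $a$ and $b$ should cut the work roughly in half.

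\textbf{Main obstacle.} The hardest part is finding weights that actually make the inequality strict. Overlap that kills the third item's gain also tends to kill the copy's gain for the other agent, and Greedy's choices shift discontinuously as weights change. My first attempt is the asymmetric-overlap idea above: make each item's coverage for $A$ and for $B$ disjoint, but overlapping across items within each agent. If that fails, I would run a small systematic (computer-aided) search over $0/1$ coverage patterns with $3$ items, $2$ agents and few elements. I would then add the $\varepsilon$-perturbation so that the violation is strict and robust to tie-breaking.
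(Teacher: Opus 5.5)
You set things up the same way as the paper: $n=3$ items and $m=2$ agents. You read the right-hand side as $3\,\mathbb{E}_\pi[\MG(3,\pi)]=\sum_j W(j)$, where $W(j)$ is the expected gain of item $j$ when it arrives last. You read the left-hand side as the average over the six orders of the residuals $\max_k v_k(j\mid\text{final bundle of }k)$, with the holder of $j$ contributing $0$.

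But the theorem is an existence statement, and its whole proof is the instance plus the six-order arithmetic. Your proposal stops exactly there ("the hardest part is finding weights"), so as written nothing is established. Your suggested starting design also does not resolve the tension you yourself point out:
\begin{itemize}
\item Asking that an item's coverage for $A$ and for $B$ be disjoint constrains nothing, since each agent has its own ground set.
\item Making items overlap within each agent, with $a,b$ overlapping for \emph{both} agents, is precisely the structure under which an early item's copy is also saturated for the non-holding agent. The copy gains then shrink together with the last-arrival gains.
\end{itemize}

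The missing idea is to make the overlap pattern agent-dependent: a pair of items that overlaps for one agent should be disjoint for the other. The paper's instance is a coverage instance of exactly this kind, so your restriction to coverage valuations is harmless.
\begin{itemize}
\item $v_1^0$ gives $e,x_1,x_2$ private weights $5,5,10$, plus a weight-$5$ element shared by $e$ and $x_1$.
\item $v_2$ gives private weights $5,5,4$, plus a weight-$5$ element shared by $e,x_2$ and a weight-$2$ element shared by $x_1,x_2$.
\item The perturbation $v_1=v_1^0+0.1|S|$ removes ties.
\end{itemize}
When $e$ and $x_1$ arrive before $x_2$, their overlap under $v_1$ sends the second of them to $A_2$. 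For $A_2$ the first item is then completely fresh: in the order $(e,x_1,x_2)$ the copy of $e$ gains $v_2(e\mid x_1)=10$. By contrast, an $e$ or $x_1$ arriving last overlaps what both agents already hold, so $W(e)=W(x_1)=5.1$. The exhaustive computation then gives $\text{LHS}=122.6/6>121.8/6=20.3=\text{RHS}$. Supplying an instance like this, together with that computation, is what your write-up still needs.
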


\subsubsection*{Counterexample Construction}

We construct an instance with $n=3$ items $N=\{e, x_1, x_2\}$ and $m=2$ agents $\{A_1, A_2\}$. We define the valuation functions $v_1$ and $v_2$. Let $\varepsilon=0.1$. We define $v_1(S) = v_1^0(S) + \varepsilon|S|$, where $v_1^0(S)$ and $v_2(S)$ are defined in Table \ref{tab:valuations}.

\begin{table}[!htb]
    \centering
    \caption{Valuation Functions}
    \label{tab:valuations}
    \renewcommand{\arraystretch}{1.2}
    \begin{tabular}{c c c c}
        \toprule
        $S$ & $v_1^0(S)$ & $v_2(S)$ & $v_1(S)$ \\
        \midrule
        $\emptyset$ & 0 & 0 & 0 \\
        $\{e\}$ & 10 & 10 & 10.1 \\
        $\{x_1\}$ & 10 & 7 & 10.1 \\
        $\{x_2\}$ & 10 & 11 & 10.1 \\
        $\{e, x_1\}$ & 15 & 17 & 15.2 \\
        $\{e, x_2\}$ & 20 & 16 & 20.2 \\
        $\{x_1, x_2\}$ & 20 & 16 & 20.2 \\
        $\{e, x_1, x_2\}$ & 25 & 21 & 25.3 \\
        \bottomrule
    \end{tabular}
\end{table}

\textbf{Verification of Validity}
We must verify that $v_1$ and $v_2$ are monotone and submodular.

\begin{enumerate}
    \item \textbf{Monotonicity:} It is clear from Table \ref{tab:valuations} that both $v_1$ and $v_2$ are monotone (i.e., $v(S) \leq v(T)$ if $S \subseteq T$).

    \item \textbf{Submodularity of $v_2$:} We check the marginal gains (MG).
    \begin{itemize}
        \item $\MG_2(e)$: $\emptyset(10)$, $\{x_1\}(10)$, $\{x_2\}(5)$, $\{x_1, x_2\}(5)$.
        \item $\MG_2(x_1)$: $\emptyset(7)$, $\{e\}(7)$, $\{x_2\}(5)$, $\{e, x_2\}(5)$.
        \item $\MG_2(x_2)$: $\emptyset(11)$, $\{e\}(6)$, $\{x_1\}(9)$, $\{e, x_1\}(4)$.
    \end{itemize}
    In all cases, the marginal gains are non-increasing as the conditioning set grows. Thus, $v_2$ is submodular.

    \item \textbf{Submodularity of $v_1$:} We first check $v_1^0$.
    \begin{itemize}
        \item $\MG_1^0(e)$: $\emptyset(10)$, $\{x_1\}(5)$, $\{x_2\}(10)$, $\{x_1, x_2\}(5)$.
        \item $\MG_1^0(x_1)$: $\emptyset(10)$, $\{e\}(5)$, $\{x_2\}(10)$, $\{e, x_2\}(5)$.
        \item $\MG_1^0(x_2)$: $\emptyset(10)$, $\{e\}(10)$, $\{x_1\}(10)$, $\{e, x_1\}(10)$.
    \end{itemize}
    $v_1^0$ is submodular. Since $v_1(S) = v_1^0(S) + \varepsilon|S|$ is the sum of a submodular function and a modular function, $v_1(S)$ is submodular.
\end{enumerate}

\subsubsection*{Calculation of the RHS}

The RHS is the sum of the expected gains of each item when it arrives last. Let $W(j)$ denote this expectation for item $j$.

\begin{enumerate}
    \item \textbf{$W(e)$ ($e$ arrives last).} We consider the permutations of $\{x_1, x_2\}$.
    \begin{itemize}
        \item $(x_1, x_2)$. $x_1$: $A_1(10.1)$ vs $A_2(7)$. $x_1 \to A_1$. \\
        $x_2$: $\MG_1(x_2|x_1)=10.1$. $\MG_2(x_2)=11$. $x_2 \to A_2$. \\
        Gain of $e$: $\MG_1(e|x_1)=5.1$. $\MG_2(e|x_2)=5$. Gain is 5.1.
        \item $(x_2, x_1)$. $x_2$: $A_1(10.1)$ vs $A_2(11)$. $x_2 \to A_2$. \\
        $x_1$: $\MG_1(x_1)=10.1$. $\MG_2(x_1|x_2)=5$. $x_1 \to A_1$. \\
        Gain of $e$: 5.1.
    \end{itemize}
    Result: $W(e)=5.1$.

    \item \textbf{$W(x_1)$ ($x_1$ arrives last).} We consider the permutations of $\{e, x_2\}$.
    \begin{itemize}
        \item $(e, x_2)$. $e$: $A_1(10.1)$ vs $A_2(10)$. $e \to A_1$. \\
        $x_2$: $\MG_1(x_2|e)=10.1$. $\MG_2(x_2)=11$. $x_2 \to A_2$. \\
        Gain of $x_1$: $\MG_1(x_1|e)=5.1$. $\MG_2(x_1|x_2)=5$. Gain is 5.1.
        \item $(x_2, e)$. $x_2 \to A_2$. $e \to A_1$. Gain of $x_1$ is 5.1.
    \end{itemize}
    Result: $W(x_1)=5.1$.

    \item \textbf{$W(x_2)$ ($x_2$ arrives last).} We consider the permutations of $\{e, x_1\}$.
    \begin{itemize}
        \item $(e, x_1)$. $e \to A_1$. \\
        $x_1$: $\MG_1(x_1|e)=5.1$. $\MG_2(x_1)=7$. $x_1 \to A_2$. \\
        Gain of $x_2$: $\MG_1(x_2|e)=10.1$. $\MG_2(x_2|x_1)=9$. Gain is 10.1.
        \item $(x_1, e)$. $x_1 \to A_1$. \\
        $e$: $\MG_1(e|x_1)=5.1$. $\MG_2(e)=10$. $e \to A_2$. \\
        Gain of $x_2$: $\MG_1(x_2|x_1)=10.1$. $\MG_2(x_2|e)=6$. Gain is 10.1.
    \end{itemize}
    Result: $W(x_2)=10.1$.
\end{enumerate}

\[ RHS = W(e) + W(x_1) + W(x_2) = 5.1 + 5.1 + 10.1 = 20.3. \]

\subsubsection*{Calculation of the LHS}

The LHS is the expected total residual welfare. We calculate the total residual $R(\pi)$ for all $3!=6$ permutations. The residual of item $j$, $R_j(\pi)$, is the maximum marginal gain of $j$ for any agent given the final allocation by Greedy on $\pi$. (The agent who received $j$ has 0 marginal gain for a copy).

\begin{enumerate}
    \item $\pi_1=(e, x_1, x_2)$. \\
    $e \to A_1$. $x_1 \to A_2$ (7 vs 5.1). $x_2 \to A_1$ (10.1 vs 9). \\
    Allocation: $A_1=\{e, x_2\}, A_2=\{x_1\}$. \\
    $R_e=\MG_2(e|x_1)=10$. $R_{x_1}=\MG_1(x_1|e, x_2)=5.1$. $R_{x_2}=\MG_2(x_2|x_1)=9$. \\
    $R(\pi_1) = 24.1$.

    \item $\pi_2=(e, x_2, x_1)$. \\
    $e \to A_1$. $x_2 \to A_2$ (11 vs 10.1). $x_1 \to A_1$ (5.1 vs 5). \\
    Allocation: $A_1=\{e, x_1\}, A_2=\{x_2\}$. \\
    $R_e=\MG_2(e|x_2)=5$. $R_{x_1}=\MG_2(x_1|x_2)=5$. $R_{x_2}=\MG_1(x_2|e, x_1)=10.1$. \\
    $R(\pi_2) = 20.1$.

    \item $\pi_3=(x_1, e, x_2)$. \\
    $x_1 \to A_1$. $e \to A_2$ (10 vs 5.1). $x_2 \to A_1$ (10.1 vs 6). \\
    Allocation: $A_1=\{x_1, x_2\}, A_2=\{e\}$. \\
    $R_e=\MG_1(e|x_1, x_2)=5.1$. $R_{x_1}=\MG_2(x_1|e)=7$. $R_{x_2}=\MG_2(x_2|e)=6$. \\
    $R(\pi_3) = 18.1$.

    \item $\pi_4=(x_2, e, x_1)$. \\
    $x_2 \to A_2$. $e \to A_1$. $x_1 \to A_1$. \\
    Allocation: $A_1=\{e, x_1\}, A_2=\{x_2\}$. \\
    $R(\pi_4) = R(\pi_2) = 20.1$.

    \item $\pi_5=(x_1, x_2, e)$. \\
    $x_1 \to A_1$. $x_2 \to A_2$. $e \to A_1$. \\
    Allocation: $A_1=\{x_1, e\}, A_2=\{x_2\}$. \\
    $R(\pi_5) = 20.1$.

    \item $\pi_6=(x_2, x_1, e)$. \\
    $x_2 \to A_2$. $x_1 \to A_1$. $e \to A_1$. \\
    Allocation: $A_1=\{x_1, e\}, A_2=\{x_2\}$. \\
    $R(\pi_6) = 20.1$.
\end{enumerate}

The sum of the total residuals is $24.1 + 20.1 + 18.1 + 20.1 + 20.1 + 20.1 = 122.6$.
\[ LHS = E[R(\pi)] = \frac{122.6}{6}. \]

\subsubsection*{Conclusion}

We compare the LHS and the RHS.
\begin{align*}
    RHS &= 20.3 = \frac{121.8}{6}. \\
    LHS &= \frac{122.6}{6}.
\end{align*}
Since $\frac{122.6}{6} > \frac{121.8}{6}$, we have $LHS > RHS$. This violates the inequality stated in Conjecture 15. Therefore, Conjecture 15 is false.

The failure of Conjecture 15 implies that the specific path to proving a $0.567$ competitive ratio proposed by \cite{korula2015online} is invalid. While the Greedy algorithm is known to beat $1/2$ in the random order model, establishing the tightest possible bound remains an open challenge. Our result indicates that the interaction between duplicated items and previous greedy allocations is more complex than the ``Move'' vs ``Copy'' intuition suggests.

\newpage

\subsection{Cryptography: AI-Assisted Bug Detection in SNARGs}\label{sec:crypto_bug}
\textit{Written by Ziyi Guan, David P. Woodruff, and Eylon Yogev}

\subsubsection*{Problem Context}
Succinct non-interactive arguments (SNARGs) are among the most important and actively studied primitives in modern cryptography. A SNARG enables a prover to convince a verifier of some computational statement (such as knowledge of a solution to a complex equation or the validity of a transaction according to network rules) using a proof that is extremely short and efficient to verify. Remarkably, the proof size and verification time can be exponentially smaller than the time required to check the statement directly. The succinct proof and verification is critical for real-world applications. For instance:
\begin{itemize}
    \item \textit{Verifiable Outsourced Computation:} A weak client (like a smartphone) can offload a heavy computation to a cloud server and receive a short proof that the result is correct, without trusting the server.
    \item \textit{Blockchain Scalability:} Blockchains can use SNARGs (often in the form of zk-SNARKs) to compress the verification of thousands of transactions into a single tiny proof, enabling massive throughput improvements (e.g., ZK-Rollups).
\end{itemize}

Constructions of SNARGs are notoriously difficult despite the active research. For decades, researchers have sought to build them based on \textbf{standard cryptographic assumptions}---assumptions that are widely believed to be true and have withstood years of cryptanalytic attacks. The ``holy grail'' in this area is to build a SNARG from the \textbf{learning with errors (LWE)} assumption. LWE is the foundation of modern lattice-based cryptography; it is believed to be secure even against quantum computers (post-quantum security) and allows for powerful functionalities like fully homomorphic encryptions.

Existing SNARG constructions typically fall short of this ideal in the following ways:
\begin{enumerate}
    \item \textit{Idealized Models:} The most efficient SNARGs are usually constructed in the random oracle model (ROM), where hash functions are modeled as perfect random functions. However, random oracles cannot be instantiated in the real world and thus only provide heuristic security.
    \item \textit{Non-Standard Assumptions:} Another common way to build SNARGs (with knowledge) relies on assumptions like ``Knowledge of Exponent,'' which are non-falsifiable and sometimes imply conclusions too strong to be true. 
    \item \textit{Indistinguishability Obfuscation (iO):} While iO can build almost anything, including SNARGs, current constructions of iO are extremely complex, inefficient, and rely on very heavy mathematical machinery.
\end{enumerate}

The paper ``SNARGs for NP from LWE'' by Ziyi Guan and Eylon Yogev \cite{YogevGuan2025} claimed to achieve the breakthrough: a SNARG for all of NP based solely on the sub-exponential hardness of LWE. Their proposed construction involved two novel components: ``PCP shadows'' (a way to compress probabilistically checkable proofs) and ``function vector commitments'' (a lattice-based vector commitment scheme).

\subsubsection*{AI Contribution via Advanced Prompting}

As part of an experiment to evaluate the capability of AI models in theoretical computer science research, we analyzed the preprint of the paper using a Google-internal LLM. Crucially, we did not simply ask the model to ``check the proof,'' as standard prompts often lead to superficial reviews or hallucinations when dealing with complex mathematical logic.

Instead, we employed a rigorous \textbf{iterative self-correction prompt}. The prompt was designed to force the model to act as a meticulous and adversarial reviewer. 
The explicit instructions are in \Cref{fig:snarg:prompt}. 

\begin{figure}[!htb]
\begin{userturn}
\begin{enumerate}
    \item \textbf{Generate an initial review} that was strictly objective, focusing only on identifying errors and suggesting improvements.
    \item \textbf{Self-correct its first review} by rigorously critiquing its own findings. The model was told to verify every derivation, check for hallucinations, and ensure that any claim of an error was substantive.
    \item \textbf{Generate a revised review} incorporating these corrections.
    \item \textbf{Perform a second round of self-correction} to further refine the logic and ensure comprehensive coverage of the paper, including appendices.
    \item \textbf{Produce a final, verified review} adhering to strict mathematical standards.
\end{enumerate}
For your proof, adhere to: $\langle\text{rigor text}\rangle$.
\end{userturn}
\caption{Iterative self-correction prompt.}
\label{fig:snarg:prompt}
\end{figure}

Furthermore, the prompt included a specific protocol for mathematical rigor. It required the model to distinguish between a ``Complete Proof'' (only if every step was verified) and ``Structured Partial Progress'' (if any gaps were found). It also required the model to explicitly flag gaps or unproven assumptions using a specific tag. This approach was used by Google recently for providing preliminary feedback to authors of STOC submissions, see \cite{CW25}. 

\subsubsection*{The Finding and Verification}
Using this method, the model's final review flagged a critical, fatal flaw in the paper's construction of ``PCP with shadow soundness'' (Section 4).

The issue lay in the discrepancy between the definition of the scheme and its actual construction:
\begin{itemize}
    \item \textbf{The Definition:} Definition 4.1 (Property 3) of the paper required \textit{perfect consistency}. This meant that for any two valid proofs that shared a local view, the generated ``shadows'' (succinct digests of the proof) must be identical for \textit{every single choice} of randomness used in the shadow generation process.
    \item \textbf{The Construction:} The construction presented in Section 4.3, however, only achieved \textit{imperfect consistency}. This meant that the shadows would match with high probability over the choice of randomness, but there could exist bad randomness values where they differed.
\end{itemize}

The model argued that this was not a minor technicality. The security proof for the SNARG (specifically the soundness argument) relied heavily on the strong, perfect consistency definition. It assumed that once a commitment was opened, the underlying shadow was unique and well-defined. By only achieving statistical consistency, the construction opened the door to an adversary who could find specific bad randomness values to break the binding between the proof and its shadow. This would allow an adversary to forge a valid argument for a false statement, completely undermining the security of the SNARG.

\subsubsection*{Expert Verification and Outcome}
To validate the model's finding, we shared the detailed critique with cryptography experts Aayush Jain and Zhengzhong Jin. They performed an independent review of the paper and the model's output.

Their assessment was definitive: the model was correct. They confirmed that the gap between the definition and construction of perfect consistency was a fatal flaw in the logic. They noted that while the model produced some noise (flagging other, less relevant issues), this specific insight was accurate and hit the core of the problem. Without perfect consistency, the reduction used in the security proof fails, and the main theorem does not hold.

We subsequently communicated these findings to the paper's authors, Ziyi Guan and Eylon Yogev. They acknowledged the validity of the issue raised by the model. They agreed that the gap between the definition and construction of consistency was a significant error. Following this, they updated their paper on the Cryptology ePrint Archive. The updated manuscript includes a note in red text explicitly stating that a gap in the proof of the main theorem had been found.

This episode serves as a powerful testament to the evolving capabilities of large language models in scientific research. While LLMs are often criticized for their inability to reason or their tendency to hallucinate, this case shows that with the right prompting strategy---specifically one that enforces iterative self-correction and rigorous verification---they can identify subtle, deep technical flaws in state-of-the-art research.



\newpage

\section{Cross-Pollination of Ideas}
In this section we see AI acting as an interdisciplinary bridge, retrieving obscure theorems from entirely different mathematical domains to overcome roadblocks.

\subsection{Approximation Algorithms: Max-Cut}\label{sec:max_cut}
\textit{Written by Euiwoong Lee.}

\subsubsection*{Problem Context}
The Goemans-Williamson (GW) algorithm uses Semidefinite Programming (SDP) to achieve an optimal 0.878-approximation for Max-Cut. A long-standing open question asks whether rounding SDP solutions of {\em bounded rank} $d$ can yield a better approximation ratio. While known for $d=2,3$, the case for general $d$ remained open.

I realized that this could be resolved if the following simple and natural geometric property held: Does the variance of a sum of random variables $X = \sum \text{sgn}(\langle g, v_i \rangle)$ (where $v_i$ are unit vectors with bounded pairwise inner products) have a lower bound $C(d) n^2$?

\subsubsection*{AI Contribution}
The model solved this geometric variance question. 
\begin{itemize}
    \item \textbf{Synthesizing Advanced Math:} The model recognized this as a problem solvable via \emph{geometric functional analysis}, a field outside the typical approximation algorithms toolkit.
    \item \textbf{Proof via Measure Theory:} It constructed a proof using the compactness of the space of probability measures on the sphere.
    \item \textbf{Iterative Rigor:} When we questioned the validity of using ``relaxed" measure spaces, the model justified its steps by citing and applying the Stone-Weierstrass Theorem, the Riesz Representation Theorem, and properties of spherical harmonics.
\end{itemize}

\subsubsection*{Max-Cut on Bounded-dimension SDP Solutions}

Max-Cut is one of the most fundamental problems in combinatorial optimization and approximation algorithms. 
Given an undirected graph $G = (V, E)$, the goal is to partition $V$ into $V^+$ and $V^-$ to maximize the number of edges between $V^+$ and $V^-$; equivalently, compute $x : V \to \{ \pm 1 \}$ to maximize $\sum_{(u, v) \in E} \frac{1 - x(u)x(v)}{2}$. 

While combinatorial or linear programming (LP)-based methods only yielded a $1/2$-approximation (e.g., the returned solution's value is at least half of the optimal value) in polynomial time, the celebrated result of Goemans and Williamson~\cite{goemans1995improved} used a semidefinite programming (SDP) relaxation to obtain an approximation ratio of $\alpha_{GW} := \min_{a \in [-1, +1]} \frac{\arccos(a)/\pi}{(1-a)/2} \approx 0.878$, which was later proved to be optimal for all polynomial-time algorithms assuming the Unique Games Conjecture~\cite{khot2007optimal}. 
This tight relationship between algorithms, complexity, and SDPs was later extended to numerous other combinatorial optimization tasks, most notably all Constraint Satisfaction Problems (CSPs)~\cite{raghavendra2008optimal}. 

Given the importance of this result, it is natural to ask which additional assumptions allow polynomial-time algorithms to achieve an approximation ratio $\alpha_{GW} + C$ for some constant $C > 0$. Structural properties of the input graph $G = (V, E)$ have been the most popular assumptions, which include when $G$ is dense (i.e., $|E| = \Omega(n^2)$~\cite{arora1995polynomial}), $G$ is bounded-degree~\cite{feige2002improved, hsieh2023approximating}, or $G$ has a certain expansion property~\cite{barak2011rounding}.

In order to motivate assumptions depending on the structure of SDP solutions, let us briefly recall Goemans-Williamson's SDP relaxation. Given $G = (\{1, \dots, n \}, E)$, their SDP relaxation, with a symmetric matrix $M \in \R^{n \times n}$ as the variable, is as follows: 
\begin{align*}
\mbox{maximize } \quad
& \sum_{(i, j) \in E} \frac{1 - M_{i,j}}{2} \\
\mbox{subject to } \quad & M_{i,i} = 1 \qquad \qquad \forall i \in [n]. \\
& M \succeq 0.
\end{align*}
The fact that this SDP is a relaxation for Max-Cut (i.e., the SDP optimal value is at least the Max-Cut optimal value) is based on the fact that for any solution $x : V \to \{ \pm 1 \}$ for Max-Cut, $M = xx^T$ is a feasible solution for the SDP. 
For the $\alpha_{GW}$-approximation, the algorithm is the following: 
\begin{enumerate}
    \item Compute the optimal SDP solution $M$.
    \item Compute its Gram decomposition $M = VV^T$ for some $V \in \R^{n \times d}$.  Let $v_i$ be the $i$th row of $V$.
    \item Sample a random Gaussian vector $g \sim N(0, I_d)$.
    \item For each $i \in [n]$, let $x_i \leftarrow \sgn(\langle v_i, g \rangle)$. (I.e., $x_i \leftarrow +1$ if $\langle v_i, g \rangle \geq 0$ and $x_i \leftarrow -1$ otherwise.)
\end{enumerate}
The $\alpha_{GW}$-approximation can be proved by a simple edge-by-edge analysis where each edge $(i, j)$ contributes $\frac{1 - \langle v_i, v_j \rangle}{2}$ to the SDP objective, and $\arccos(\langle v_i, v_j \rangle)/\pi$ to the Max-Cut objective in expectation (because it is the probability that $\sgn(v_i, g) \neq \sgn(v_j, g)$).

So, how about the structural properties of the optimal SDP solution $M$? Perhaps one of the most natural assumptions on $M$ is its rank; the gist of the Max-Cut problem itself is to force the rank of $M$ to be $1$. 
Then the question is, if $M$ is a feasible solution to the SDP with rank $d$, can we {\em round} $M$ to a Max-Cut solution of value at least $(\alpha_{GW} + C(d))$ times the SDP value of $M$, where $C(d) > 0$ is a constant depending only on $d$? 

This is a natural question asked since the Goemans-Williamson algorithm, and Goemans gave a positive answer when $d = 2$. Avidor and Zwick~\cite{avidor2005rounding} refined it and gave a positive answer for $d = 3$ as well, but to the best of my knowledge, even the case $d = 4$ is open. 
It is surprising, especially given that analogous questions have already been answered for more general {\em Grothendieck problems}~\cite{briet2010positive, briet2014grothendieck}; however, they are not directly comparable, as the optimal ratios for Grothendieck problems are worse than $\alpha_{GW}$.

While studying the recent result of Hsieh and Kothari~\cite{hsieh2023approximating}, which simplifies and extends the result of Feige, Karpinski, and Langberg~\cite{feige2002improved} for Max-Cut on bounded-degree graphs, I realized that their algorithmic framework could be applied to the bounded-dimension case as well, modulo one very natural geometric question, which is stated below. 

The rest of the section details how the model ``solved it.'' 
Now, there exists a polynomial-time algorithm that, given an SDP solution $M$ of rank $d$, produces a Max-Cut solution whose value is at least $(\alpha_{GW} + C(d))$ times the value of $M$, for some $C(d) > 0$.  
But, before detailing the answer, let us mention that the answer is not completely satisfactory in the sense that it does not produce any explicit expression for $C$ as a function of $d$; in contrast, when $G$'s degree is at most $d$, the best-known approximation ratio is $(\alpha_{GW} + \widetilde{\Omega}(\frac{1}{d^2 \log d}))$~\cite{hsieh2023approximating}.  

On the other hand, the proof is based on results from geometric analysis, including the compactness of a certain space of probability measures, which have not been used much in the design of approximation algorithms. 
Will it be among the first of many examples where such analytic tools are used in approximation algorithms (and inspire more ``constructive proofs'' with ``more explicit bounds'')?
In any case, this experience reinforces common beliefs that large language models can help researchers in areas where they are not experts.

\subsubsection*{Geometric Question}
\label{subsec:geoq}
The original question I asked the model is depicted in \Cref{fig:max:cut:original:prompt}. 

\begin{figure}[!htb]
\begin{userturn}
\begin{problem}
Let $v_{1}, \dots, v_{n}$ be unit vectors in $\mathbb{R}^{d}$ such that $\inner{v_{i}}{v_{j}} \ge -\beta$ for every $i \ne j$, where $\beta=0.9$. Let $g \sim N(0,I_{d})$, $X_{i} = \sgn(\inner{g}{v_{i}})$, and $X = \sum_{i=1}^{n}X_{i}$. Is it always true that $|X| \ge \varepsilon n$ with probability at least $\delta$, for some $\varepsilon > 0$ and $\delta > 0$ depending only on the dimension $d$?
\end{problem}
\end{userturn}
\caption{Original prompt for model}
\label{fig:max:cut:original:prompt}
\end{figure}

Using the standard second-moment methods and the fact that $\E[X_i] = 0$, the model reduced it to the following equivalent question about the variance of $X$, which is indeed the version one needs for the Max-Cut application.

\begin{problem}
The variance of $X$ is
$
\E[X^{2}] = \sum_{i=1}^{n}\sum_{j=1}^{n} h(\inner{v_{i}}{v_{j}})
$,
where $h(t) = \frac{2}{\pi}\arcsin(t)$. 
Is $\E[X^{2}] \ge C(d,\beta)n^{2}$ for some constant $C(d,\beta) > 0$?
\label{problem:var}
\end{problem}

Due to space constraints, let me not fully elaborate on how a positive answer to \Cref{problem:var} leads to an improved algorithm for Max-Cut in the bounded-dimension case. It is simple to check formally, because one can just replace Lemma 7 of~\cite{hsieh2023approximating} with the answer to \Cref{problem:var}, and the rest of their (elegant) seven-page paper works as is. 
Intuitively, their algorithm, which is originally suggested by~\cite{feige2002improved}, performs the standard rounding algorithm mentioned above, followed by the natural {\em local improvement} step; if there is a way to flip the assignment to a vertex that strictly improves the value, execute it. (The actual algorithm implements it in a more conservative way.) 
In this context, 
\Cref{problem:var} seems natural, because from the perspective of a vertex $v$, the more variance its neighbors have, the bigger chance there is that the standard rounding algorithm (which already guarantees an $\alpha_{GW}$-approximation) made a wrong decision for $v$ and the local improvement step further improves upon it by flipping $v$. In particular, the answer to \Cref{problem:var} is applied to the neighborhood of every vertex. 

\subsubsection*{First Response}
{\em The response in this section is fully generated by the model, except for removing standard applications of second-order methods.}
\begin{aigenerated}
Let $\mu = \frac{1}{n}\sum_{i=1}^{n}\delta_{v_{i}}$ be the empirical probability measure on the unit sphere $S^{d-1}$. The normalized variance is:
\[
Q(\mu) = \frac{1}{n^{2}}\E[X^{2}] = \iint_{S^{d-1}\times S^{d-1}} h(\inner{x}{y}) d\mu(x)d\mu(y).
\]
The constraint on the vectors implies that the support $S_{\mu}$ of $\mu$ satisfies $\inner{x}{y} \ge -\beta$ for all $x, y \in S_{\mu}$. Let $\mathcal{M}_{\beta}(d)$ be the set of all probability measures on $S^{d-1}$ satisfying this support constraint. We define:
\[
C(d,\beta) = \inf_{\mu \in \mathcal{M}_{\beta}(d)} Q(\mu).
\]

\begin{lemma}
For finite $d \ge 1$ and $0 \le \beta < 1$, $C(d,\beta) > 0$.
\end{lemma}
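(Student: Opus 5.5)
The plan is a compactness argument in two steps. First, the infimum is attained. Second, every measure in $\mathcal{M}_\beta(d)$ has strictly positive $Q$.

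\textbf{Step 1 (attainment).} Equip the space $\mathcal{P}(S^{d-1})$ of probability measures with the weak-* topology. Since $S^{d-1}$ is compact, Riesz representation together with Banach--Alaoglu (or Prokhorov) make $\mathcal{P}(S^{d-1})$ compact. I will show $\mathcal{M}_\beta(d)$ is closed. The support condition is equivalent to
\[
\iint \max(0,-\beta-\inner{x}{y})\,d\mu(x)d\mu(y)=0,
\]
because the integrand is continuous and nonnegative, and it is positive on an open set exactly when two support points violate the constraint. The map $\mu\mapsto\mu\otimes\mu$ is weak-* continuous: on product functions $f(x)g(y)$ this is clear, and Stone--Weierstrass extends it to all continuous functions on $S^{d-1}\times S^{d-1}$. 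Hence $\mu\mapsto\iint F\,d\mu\,d\mu$ is continuous for every continuous $F$. This gives closedness, and, since $h(\inner{x}{y})$ is continuous, it also gives continuity of $Q$. So the infimum is a minimum, attained at some $\mu^*\in\mathcal{M}_\beta(d)$.

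\textbf{Step 2 ($Q(\mu)>0$ for each admissible $\mu$).} Suppose $Q(\mu)=0$ and aim for a contradiction. Write $Q(\mu)=\E_g\big[(\int \sgn\inner{g}{x}\,d\mu(x))^2\big]$; this identity holds for general $\mu$ by Fubini and $\E[\sgn\inner{g}{x}\sgn\inner{g}{y}]=h(\inner{x}{y})$. If $Q(\mu)=0$, then $F(g):=\int\sgn\inner{g}{x}\,d\mu(x)=0$ for almost every $g$, i.e. $\mu(H_u^+)=\mu(H_u^-)$ for almost every direction $u$, where $H_u^{\pm}$ are the open half-spaces. Now use the support constraint. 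Because $\beta<1$, no antipodal pair lies in the support: $\inner{x}{-x}=-1<-\beta$. The support $S_\mu$ is compact, and $S_\mu\cap(-S_\mu)=\emptyset$. Pick $x_0\in S_\mu$. Every $y\in S_\mu$ satisfies $\inner{x_0}{y}\ge-\beta$, so $S_\mu$ avoids the cap $\{y:\inner{y}{-x_0}>\beta\}$ of radius depending on $\beta$ around $-x_0$.

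Take $u$ close to $x_0$ with the balance property. The hemisphere $H_u^+$ contains a small neighborhood of $x_0$, which carries positive mass. I need to show that the opposite hemisphere carries strictly less mass. Doing this directly by a geometric argument is delicate, so I prefer a cleaner route. Apply the balance condition to obtain $\int\sgn\inner{u}{x}\,d\mu=0$ for almost every $u$. Integrating against continuous test functions shows the odd part of $\mu$ has trivial "hemispherical transform". The hemispherical transform is injective on odd measures, which can be seen through spherical harmonics: its multipliers on odd-degree harmonics are nonzero. Hence $\mu$ is even, $\mu(A)=\mu(-A)$. Then $S_\mu=-S_\mu$, contradicting $S_\mu\cap(-S_\mu)=\emptyset$ with $S_\mu\neq\emptyset$.

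\textbf{Conclusion and main obstacle.} By Step 1, $C(d,\beta)=Q(\mu^*)$, and by Step 2 this is positive. The main obstacle is the injectivity claim in Step 2: that the hemispherical transform has nonvanishing multipliers on every odd degree. I would verify it via the Funk--Hecke formula applied to $\sgn(t)$, which gives multipliers proportional to $\int_{-1}^1\sgn(t)P_k(t)(1-t^2)^{(d-3)/2}dt$. For odd $k$ these are explicit nonzero Gegenbauer integrals, so the claim reduces to checking that they do not vanish. The case $d=1$ is trivial: there the support lies in $\{\pm1\}$ and cannot contain both points, so $Q=1$.
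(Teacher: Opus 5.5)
Your proof is correct. It reaches the same contradiction as the paper: a minimizer with $Q=0$ must be antipodally symmetric, which the support constraint forbids when $\beta<1$. But both steps take a different route.

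\textbf{Closedness.} The paper uses the Portmanteau theorem to approximate support points of the weak limit by support points of the $\mu_k$. You instead encode the constraint as the vanishing of the continuous quadratic functional $\mu\mapsto\iint\max(0,-\beta-\inner{x}{y})\,d\mu\,d\mu$. Closedness then follows from the same product-measure continuity that gives continuity of $Q$, which is tidier.

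\textbf{Positivity.} The paper expands $h=\frac{2}{\pi}\arcsin$ in its Taylor series with positive odd coefficients, giving $Q(\mu)=\sum_{m\ \mathrm{odd}}a_m\|T_m(\mu)\|^2$. Then $Q=0$ kills every odd moment tensor, hence by linearity the integral of every odd polynomial, and Stone--Weierstrass gives symmetry. That route needs no harmonic analysis at all. You instead use $Q(\mu)=\E_g[F(g)^2]$. This gives nonnegativity for free and turns $Q=0$ into the vanishing of the hemispherical transform, but you must then import Funk--Hecke and the injectivity of that transform on odd measures.

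\textbf{Relation between the two.} They rest on the same spectral fact. For the uniform measure $\sigma$ on $S^{d-1}$ you have $h(\inner{x}{y})=\int\sgn\inner{u}{x}\,\sgn\inner{u}{y}\,d\sigma(u)$. Hence the Funk--Hecke multiplier of $h$ on degree $k$ is $m_k^2$, where $m_k$ is the multiplier of $\sgn$. So the nonvanishing of $m_k$ for odd $k$ that you need is equivalent to the positivity of the odd Gegenbauer coefficients of $h$ used in the paper's second write-up.

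\textbf{Your stated obstacle closes.} For $d\ge 3$ and $\lambda=(d-2)/2$, the standard identity $\frac{d}{dt}\bigl[(1-t^2)^{\lambda+1/2}C_{k-1}^{\lambda+1}(t)\bigr]=-\frac{k(k+2\lambda)}{2\lambda}(1-t^2)^{\lambda-1/2}C_k^{\lambda}(t)$ gives $\int_0^1(1-t^2)^{\lambda-1/2}C_k^{\lambda}(t)\,dt=\frac{2\lambda}{k(k+2\lambda)}C_{k-1}^{\lambda+1}(0)$. This is nonzero for odd $k$, since $C_{2m}^{\mu}(0)\neq 0$ for $\mu>0$. For $d=2$, a direct computation gives $\int_{-1}^1\sgn(t)T_k(t)(1-t^2)^{-1/2}\,dt=2\sin(k\pi/2)/k\neq 0$ for odd $k$.
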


\begin{proof}
We proceed in three steps:

\paragraph{1. Compactness and Continuity:} Since $d$ is finite, $S^{d-1}$ is compact. The space of probability measures $\mathcal{P}(S^{d-1})$ is compact in the weak topology. We show that $\mathcal{M}_{\beta}(d)$ is a closed subset. Let $\mu_{k} \in \mathcal{M}_{\beta}(d)$ converge weakly to $\mu$.

Let $S_{k}$ and $S$ be their respective supports. Let $x, y \in S$. For any $\varepsilon > 0$, the balls $B(x,\varepsilon/2)$ and $B(y,\varepsilon/2)$ have positive measure under $\mu$. By the Portmanteau theorem, for sufficiently large $k$, $\mu_{k}(B(x,\varepsilon/2)) > 0$. Thus, there exist $x_{k} \in S_{k}$ and $y_{k} \in S_{k}$ such that $\|x-x_{k}\| < \varepsilon/2$ and $\|y-y_{k}\| < \varepsilon/2$.

We have $\inner{x_{k}}{y_{k}} \ge -\beta$. Also,
\[
|\inner{x}{y} - \inner{x_{k}}{y_{k}}| \le \|x-x_{k}\|\|y\| + \|x_{k}\|\|y-y_{k}\| < \varepsilon/2 + \varepsilon/2 = \varepsilon.
\]
Thus, $\inner{x}{y} \ge \inner{x_{k}}{y_{k}} - \varepsilon \ge -\beta - \varepsilon$. Since $\varepsilon$ is arbitrary, $\inner{x}{y} \ge -\beta$. Therefore, $\mu \in \mathcal{M}_{\beta}(d)$. Thus, $\mathcal{M}_{\beta}(d)$ is compact.

Since $h(\inner{x}{y})$ is continuous, $Q(\mu)$ is continuous in the weak topology. Therefore, the infimum $C(d,\beta)$ is attained by some measure $\mu^{*} \in \mathcal{M}_{\beta}(d)$.

\paragraph{2. Moment Expansion:} The Taylor series of $h(t)$ is $h(t) = \sum_{m \text{ odd}, m \ge 1} a_{m}t^{m}$, where $a_{m} > 0$ for all odd $m$. Let $T_{m}(\mu) = \int x^{\otimes m} d\mu(x)$ be the $m$-th moment tensor. We can write:
\[
Q(\mu) = \sum_{m \text{ odd}} a_{m} \iint \inner{x}{y}^{m} d\mu(x)d\mu(y) = \sum_{m \text{ odd}} a_{m} \|T_{m}(\mu)\|^{2}.
\]

\paragraph{3. Symmetry and Contradiction:} Suppose $C(d,\beta) = 0$. Then $Q(\mu^{*}) = 0$. Since $a_{m} > 0$, this implies $T_{m}(\mu^{*}) = 0$ for all odd $m$. This means $\int P(x) d\mu^{*}(x) = 0$ for any odd polynomial $P(x)$. By the Stone-Weierstrass theorem, odd polynomials are dense in the space of continuous odd functions on $S^{d-1}$ (in the uniform norm).

Thus, $\int g(x) d\mu^{*}(x) = 0$ for any continuous odd function $g$. This implies that $\mu^{*}$ is a symmetric measure (i.e., $\mu^{*}(A) = \mu^{*}(-A)$).

Let $S^{*}$ be the support of $\mu^{*}$. If $\mu^{*}$ is symmetric, $S^{*}$ must be symmetric. (If $x \in S^{*}$ but $-x \notin S^{*}$, there is an open ball $B(-x,r)$ such that $\mu^{*}(B(-x,r)) = 0$. By symmetry, $\mu^{*}(B(x,r)) = \mu^{*}(-B(-x,r)) = 0$, contradicting $x \in S^{*}$).

Since $S^{*}$ is non-empty and symmetric, there exists $x \in S^{*}$ such that $-x \in S^{*}$. The support constraint requires $\inner{x}{-x} \ge -\beta$. But $\inner{x}{-x} = -\|x\|^{2} = -1$. So $-1 \ge -\beta$ or $\beta \ge 1$. This contradicts the assumption $\beta = 0.9 < 1$.

Therefore, $C(d,\beta) > 0$.
\end{proof}

This establishes that $\E[X^{2}] \ge C(d,\beta)n^{2}$.
\end{aigenerated}

\subsubsection*{Interactions}

The first response, though it looked like a natural and sound approach using geometric functional analysis, had some gaps in the proof and was not fully verifiable, partially due to my inexpertise on the topic. Two major hurdles were (1) the foundations of measure theory to establish the compactness of $\mathcal{M}_{\beta}(d)$ and (2) working knowledge of spherical harmonics to see how $T_m(\mu^*) = 0$ for all odd $m$ implies $\int P(x) d\mu^*(x) = 0$ for any odd polynomial $P(x)$. Note that (2) was somewhat expected as the previous results for $d=2,3$~\cite{avidor2005rounding} and Grothendieck problems~\cite{briet2010positive, briet2014grothendieck} used spherical harmonics and related Gegenbauer polynomials, but the use of (1), via {\em relaxing} $n$-point discrete measures to $\mathcal{M}_{\beta}(d)$, is a type of an argument rarely used in the approximation algorithms literature. 
In order to formally understand it, one needs to review the foundations of measure theory. For example, what is the weak topology? How is the Stone-Weierstrass theorem formally applied here? 
How does $\int g(x) d\mu^*(x) = 0$ for every odd $g$ imply $\mu^*$ is a symmetric measure?

After several iterations of asking for the reference behind each nontrivial step and manually searching each reference and feeding it back to the model, it produced the output in the next section. Now it is understandable to me, whose only formal exposure to analysis was an undergraduate course almost twenty years ago.

\subsubsection*{Second Response}

{\em This response is fully generated by the model, except for (1) removing standard applications of second-order methods, (2) adding two footnotes further justifying some steps, and (3) minor cosmetic changes.}

\begin{aigenerated}
\paragraph{Foundational Theorems}

We rely on the following established theorems, sourced externally (e.g., via Google Search, Wikipedia, or standard textbooks).

\paragraph{External Result 1 (Stone-Weierstrass Theorem (Real Version)).}
Suppose $X$ is a compact Hausdorff space and $A$ is a subalgebra of $C(X,\mathbb{R})$ which contains a non-zero constant function. Then $A$ is dense in $C(X,\mathbb{R})$ (under the uniform norm) if and only if it separates points.

\paragraph{External Result 2 (Riesz Representation Theorem).}
Let $\Omega$ be a compact Hausdorff space. The dual space $C(\Omega)^{*}$ is isomorphic to the space of finite regular signed measures on $\Omega$.

\paragraph{External Result 3 (Banach-Alaoglu Theorem).}
The closed unit ball of the dual space of a normed vector space is compact in the weak* topology.

\paragraph{External Result 4 (Definition and Properties of Spherical Harmonics).}
\begin{enumerate}
    \item \textbf{(Definition):} Spherical harmonics of degree $k$ on $S^{d-1}$ are the restrictions to the sphere of homogeneous harmonic polynomials of degree $k$ in $\mathbb{R}^{d}$. Let $H_{k}$ denote this space.
    \item \textbf{(Completeness):} The spaces $H_{k}$ are mutually orthogonal in $L^{2}(S^{d-1})$. The algebraic sum $\bigoplus_{k=0}^{\infty}H_{k}$ is dense in $C(S^{d-1})$ and $L^{2}(S^{d-1})$.
    \item \textbf{(Parity):} A spherical harmonic $Y_{k} \in H_{k}$ has parity $(-1)^{k}$. That is, $Y_{k}(-x)=(-1)^{k}Y_{k}(x)$ for $x \in S^{d-1}$.
\end{enumerate}

\paragraph{External Result 5 (Addition Theorem for Spherical Harmonics).}
Let $d \ge 2$ and $\lambda=(d-2)/2$. The Gegenbauer polynomial $C_{k}^{\lambda}(\inner{x}{y})$ is a positive semi-definite kernel. The energy integral 
\[
I_{k}(\mu) = \iint C_{k}^{\lambda}(\inner{x}{y}) d\mu(x)d\mu(y)
\]
is non-negative. $I_{k}(\mu)=0$ if and only if $\mu$ is orthogonal to $H_{k}$.

\paragraph{Energy Minimization Framework}

If $d=1$, the constraint forces all $v_{i}$ to be identical, so $|X|=n$. Assume $d \ge 2$. Let $\mu_{n} = \frac{1}{n}\sum \delta_{v_{i}}$. The normalized variance is:
\[
F(\mu_{n}) = \frac{\E[X^{2}]}{n^{2}} = \iint K(\inner{x}{y}) d\mu_{n}(x)d\mu_{n}(y),
\]
where $K(t) = \frac{2}{\pi}\arcsin(t)$. 
Let $\mathcal{M}_{\beta}^{\prime}(S^{d-1})$ be the set of probability measures $\mu$ on $S^{d-1}$ such that for any distinct $x, y$ in the support $S_{\mu}$, $\inner{x}{y} \ge -\beta$. We want to show $C(d) = \inf_{\mu \in \mathcal{M}_{\beta}^{\prime}} F(\mu) > 0$.

\paragraph{Compactness and Continuity}

We consider $\mathcal{M}(S^{d-1})$ with the weak-* topology. $S^{d-1}$ is a compact metric space.

\begin{lemma}[Compactness of $\mathcal{M}(S^{d-1})$]
\label{lem:compactness}
$\mathcal{M}(S^{d-1})$ is compact in the weak-* topology.
\end{lemma}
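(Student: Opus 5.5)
The plan is to realize $\mathcal{M}(S^{d-1})$, the set of Borel probability measures on the sphere, as a weak-* closed subset of the closed unit ball of a dual Banach space, and then apply Banach--Alaoglu (External Result 3). Since $S^{d-1}$ is a compact metric space, hence compact Hausdorff, External Result 2 (Riesz) identifies $C(S^{d-1})^{*}$ with the finite regular signed measures on $S^{d-1}$, normed by total variation. On a compact metric space every finite Borel measure is regular, so each probability measure $\mu$ gives a functional $L_\mu(f)=\int f\,d\mu$ with $\|L_\mu\| = |\mu|(S^{d-1}) = 1$. Thus $\mathcal{M}(S^{d-1})$ sits inside the closed unit ball $B^{*}$ of $C(S^{d-1})^{*}$. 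The weak-* topology on measures is exactly the topology of convergence of $\int f\,d\mu$ for every continuous $f$, which is the weak topology used in the first response.

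By Banach--Alaoglu, $B^{*}$ is weak-* compact, so it suffices to show that $\mathcal{M}(S^{d-1})$ is weak-* closed in $B^{*}$. A closed subset of a compact space is compact. Under Riesz, probability measures correspond exactly to the functionals $L$ satisfying two conditions:
\begin{enumerate}
\item $L(f)\ge 0$ for every $f\ge 0$;
\item $L(\mathbf{1})=1$.
\end{enumerate}
Positivity together with $L(\mathbf{1})=1$ forces the representing signed measure to be a positive measure of total mass $1$. Each condition defines a weak-* closed set: $\{L: L(f)\ge 0\}$ is the preimage of $[0,\infty)$ under the weak-* continuous evaluation map $L\mapsto L(f)$, and $\{L: L(\mathbf{1})=1\}$ is the preimage of a point. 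Intersecting over all $f$ gives a weak-* closed set, which lies in $B^{*}$ automatically since $\|L\|=L(\mathbf{1})=1$ for positive $L$. This proves compactness.

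The main point needing care is the identification step. I have to justify that a positive functional with $L(\mathbf{1})=1$ is represented by a genuine (nonnegative, regular) probability measure, not merely a signed one. This follows from the positive form of Riesz, or from the signed form plus the observation that $L(f)\ge 0$ for all $f\ge 0$ forces the Jordan negative part to vanish, by testing against continuous approximations of indicators, which regularity permits. I would also remark that $C(S^{d-1})$ is separable, so the weak-* topology on $B^{*}$ is metrizable. Compactness is then sequential, which is what the later argument needs when it extracts a weakly convergent subsequence $\mu_k\to\mu$ from a minimizing sequence.
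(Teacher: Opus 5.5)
Your proposal is correct and takes the same route as the paper: identify $\mathcal{M}(S^{d-1})$ via Riesz with the positive functionals satisfying $L(\mathbf{1})=1$, observe that this is a weak-* closed subset of the unit ball of $C(S^{d-1})^{*}$, and apply Banach--Alaoglu. You also supply the closedness and positivity details that the paper's one-line proof leaves implicit, and your remark on metrizability is a worthwhile addition, since separability of $C(S^{d-1})$ makes the weak-* topology on the ball metrizable and the paper's subsequent closedness argument for $\mathcal{M}_{\beta}^{\prime}$ is phrased with sequences.
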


\begin{proof}
By External Results 2 and 3, $\mathcal{M}(S^{d-1})$ is identified as a weak-* closed subset of the unit ball in $C(S^{d-1})^{*}$, which is compact by Banach-Alaoglu.
\end{proof}

\begin{lemma}[Portmanteau Lemma (Partial)]
\label{lem:portmanteau}
If $\mu_{k} \to \mu$ weakly in a metric space $X$, then for every open set $G \subset X$, $\liminf_{k} \mu_{k}(G) \ge \mu(G)$.
\end{lemma}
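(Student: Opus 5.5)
The plan is to derive the open-set inequality from the definition of weak convergence, $\int f\,d\mu_k \to \int f\,d\mu$ for every bounded continuous $f$, by approximating the indicator of $G$ from below by continuous functions. Fix an open $G \subset X$. If $G = X$ the claim is trivial, since all measures are probability measures and $\mu_k(X) = \mu(X) = 1$. So assume the complement $F = X \setminus G$ is nonempty and closed, and use the distance function $d(x, F) = \inf_{y \in F} d(x,y)$. It is $1$-Lipschitz and vanishes exactly on $F$, because $F$ is closed.

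\textbf{Approximating functions.} For $m \ge 1$ define
\[
f_m(x) = \min\{1,\, m\, d(x,F)\}.
\]
Each $f_m$ is continuous, takes values in $[0,1]$, and vanishes on $F$, so $f_m \le \mathbf{1}_G$ pointwise. The sequence is nondecreasing in $m$. For $x \in G$ we have $d(x,F) > 0$, so $f_m(x) = 1$ once $m \ge 1/d(x,F)$. Hence $f_m \uparrow \mathbf{1}_G$ pointwise.

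\textbf{Passing to the limit.} For each fixed $m$, weak convergence and $f_m \le \mathbf{1}_G$ give
\[
\liminf_{k} \mu_k(G) \;\ge\; \liminf_k \int f_m \, d\mu_k \;=\; \int f_m \, d\mu.
\]
Letting $m \to \infty$, the monotone convergence theorem gives $\int f_m\,d\mu \to \mu(G)$. Since the left-hand side does not depend on $m$, we obtain $\liminf_k \mu_k(G) \ge \mu(G)$.

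\textbf{Main obstacle.} The only delicate step is the construction of continuous minorants that converge monotonically to $\mathbf{1}_G$. This is exactly where the metric structure of $X$ is used, through the distance-to-a-closed-set function. Everything after that is monotone convergence. In the application $X = S^{d-1}$ is compact metric, so the weak topology coincides with the weak-* topology of Lemma~\ref{lem:compactness}. In the earlier argument, only strict positivity matters: if $\mu(B) > 0$ for an open ball $B$, then $\mu_k(B) > 0$ for all large $k$, which follows directly from the lemma.
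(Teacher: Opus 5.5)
Your proof is correct and follows the paper's argument essentially line for line. It uses the same continuous minorants $f_m(x)=\min\{1, m\,d(x,F)\}$ with $F = X\setminus G$, the same bound $\liminf_k \mu_k(G) \ge \int f_m\,d\mu$ from weak convergence, and the same monotone convergence step as $m\to\infty$; your separate treatment of the case $G = X$ (where $F$ is empty) is a small point of care the paper omits.
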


\begin{proof}
Let $G$ be open, $F=G^{c}$. Define the continuous function $f_{m}(x) = \min(1, m \cdot \text{dist}(x,F))$. As $m \to \infty$, $f_{m}(x) \nearrow \mathbf{1}_{G}(x)$.
\[
\liminf_{k} \mu_{k}(G) \ge \liminf_{k} \int f_{m} d\mu_{k} = \int f_{m} d\mu.
\]
By the Monotone Convergence Theorem, as $m \to \infty$, the RHS converges to $\mu(G)$.
\end{proof}

\begin{lemma}
$\mathcal{M}_{\beta}^{\prime}(S^{d-1})$ is a closed subset of $\mathcal{M}(S^{d-1})$ and thus compact.
\end{lemma}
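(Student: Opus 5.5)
Since $\mathcal{M}(S^{d-1})$ is a compact metric space in the weak-* topology (Lemma~\ref{lem:compactness}, and $C(S^{d-1})$ is separable, so the topology is metrizable), it suffices to check sequential closedness. Compactness of $\mathcal{M}_{\beta}^{\prime}(S^{d-1})$ then follows, because a closed subset of a compact space is compact. So I take a sequence $\mu_k \in \mathcal{M}_{\beta}^{\prime}(S^{d-1})$ with $\mu_k \to \mu$ weakly. I fix two distinct points $x, y$ in the support $S_\mu$, and my goal is to show $\inner{x}{y} \ge -\beta$.

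\textbf{Step 1: approximate $x$ and $y$ by support points of $\mu_k$.} Fix $\varepsilon > 0$ with $\varepsilon < \|x-y\|/2$, so that the open balls $B(x,\varepsilon/2)$ and $B(y,\varepsilon/2)$ are disjoint. Since $x, y \in S_\mu$, both balls have positive $\mu$-measure. By Lemma~\ref{lem:portmanteau}, $\liminf_k \mu_k(B(x,\varepsilon/2)) \ge \mu(B(x,\varepsilon/2)) > 0$, and the same holds for $y$. Hence, for all sufficiently large $k$, both balls have positive $\mu_k$-measure. A set of positive measure must meet the support (the complement of the support is $\mu_k$-null, being a countable union of null open balls in a separable metric space). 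So there exist $x_k, y_k \in S_{\mu_k}$ with $\|x-x_k\| < \varepsilon/2$ and $\|y-y_k\| < \varepsilon/2$. Because the balls are disjoint, $x_k \neq y_k$.

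\textbf{Step 2: transfer the inner product bound.} Since $x_k \neq y_k$ both lie in $S_{\mu_k}$, the defining constraint of $\mathcal{M}_{\beta}^{\prime}(S^{d-1})$ gives $\inner{x_k}{y_k} \ge -\beta$. All vectors are unit vectors, so the triangle inequality together with Cauchy--Schwarz gives $|\inner{x}{y} - \inner{x_k}{y_k}| \le \|x-x_k\| + \|y-y_k\| < \varepsilon$. Therefore $\inner{x}{y} > -\beta - \varepsilon$. Letting $\varepsilon \to 0$ yields $\inner{x}{y} \ge -\beta$, so $\mu \in \mathcal{M}_{\beta}^{\prime}(S^{d-1})$. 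Closedness follows.

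\textbf{Main obstacle.} The one real subtlety is the word ``distinct'' in the new definition of $\mathcal{M}_{\beta}^{\prime}$. The approximating points must themselves be distinct for the constraint to apply, which is why I choose $\varepsilon$ small enough to separate the balls. The other point needing care is the justification that positive measure of a ball forces it to meet the support. I would state that explicitly, citing separability of $S^{d-1}$. I would also note that the metrizability remark, which lets me work with sequences, could alternatively be bypassed by running the same argument with nets, since Lemma~\ref{lem:portmanteau}'s proof works verbatim for nets.
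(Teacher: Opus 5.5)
Your proof is correct and follows the paper's argument. Both take two distinct support points of the limit $\mu$, apply the partial Portmanteau lemma to small disjoint balls around them to obtain distinct support points $x_k \neq y_k$ of $\mu_k$, and then pass the constraint $\langle x_k, y_k\rangle \ge -\beta$ to the limit. The differences are cosmetic: the paper argues by contradiction, choosing balls on which the inner product stays below $-\beta$, whereas you argue directly and let $\varepsilon \to 0$, and you spell out the metrizability that justifies using sequences and the fact that a set of positive measure meets the support, both of which the paper leaves implicit.
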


\begin{proof}
Let $\mu_{k} \in \mathcal{M}_{\beta}^{\prime} \to \mu$. Suppose $\mu \notin \mathcal{M}_{\beta}^{\prime}$. There exist distinct $x, y \in S_{\mu}$ such that $\inner{x}{y} < -\beta$. We find disjoint open balls $B_{x}, B_{y}$ where the inner product remains $< -\beta$. Since $x, y \in S_{\mu}$, $\mu(B_{x}) > 0$. By \Cref{lem:portmanteau}, $\liminf_{k} \mu_{k}(B_{x}) > 0$. For large $k$, there exist $x_{k}, y_{k}$ in the support of $\mu_{k}$ within $B_{x}$ and $B_y$.
Thus $x_{k} \ne y_{k}$. As $\mu_{k} \in \mathcal{M}_{\beta}^{\prime}$, $\inner{x_{k}}{y_{k}} \ge -\beta$. Contradiction.
\end{proof}

\begin{lemma}[Product Convergence]
\label{lem:product_convergence}
If $\mu_{k} \to \mu$ weakly on a compact space $X$, then $\mu_{k} \times \mu_{k} \to \mu \times \mu$ weakly on $X \times X$.
\end{lemma}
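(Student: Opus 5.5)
We need to show that $\int_{X\times X} F\,d(\mu_k\times\mu_k)\to\int_{X\times X} F\,d(\mu\times\mu)$ for every $F\in C(X\times X)$. We first handle product test functions and then extend to all of $C(X\times X)$ by density and a uniform bound.

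\textbf{Step 1 (product functions).} For $F(x,y)=f(x)g(y)$ with $f,g\in C(X)$, Fubini gives
\[
\int f\otimes g \, d(\mu_k\times\mu_k)=\Big(\int f\,d\mu_k\Big)\Big(\int g\,d\mu_k\Big),
\]
and each factor converges to the corresponding integral against $\mu$. By linearity, convergence then holds for every $F$ in the algebra $\mathcal{A}$ of finite sums $\sum_j f_j\otimes g_j$.

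\textbf{Step 2 (density of $\mathcal{A}$).} We apply External Result 1 (Stone--Weierstrass) on the compact Hausdorff space $X\times X$. The set $\mathcal{A}$ is closed under products, because $(f\otimes g)(f'\otimes g')=(ff')\otimes(gg')$. It contains the constant $1\otimes 1$. It separates points: if $(x,y)\neq(x',y')$, say $x\neq x'$, then Urysohn's lemma (or $\mathrm{dist}(\cdot,x')$ in the metric case) gives $f\in C(X)$ with $f(x)\neq f(x')$, and $f\otimes 1$ separates the two points. Hence $\mathcal{A}$ is uniformly dense in $C(X\times X)$.

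\textbf{Step 3 ($\varepsilon/3$ argument).} Fix $F\in C(X\times X)$ and $\varepsilon>0$, and choose $G\in\mathcal{A}$ with $\|F-G\|_\infty<\varepsilon/3$. Since all the measures involved are probability measures,
\[
\Big|\int (F-G)\,d(\mu_k\times\mu_k)\Big|\le\varepsilon/3 \quad\text{and}\quad \Big|\int (F-G)\,d(\mu\times\mu)\Big|\le\varepsilon/3 .
\]
By Step 1, $\big|\int G\,d(\mu_k\times\mu_k)-\int G\,d(\mu\times\mu)\big|<\varepsilon/3$ for all large $k$. Combining the three estimates gives $\big|\int F\,d(\mu_k\times\mu_k)-\int F\,d(\mu\times\mu)\big|<\varepsilon$ for all large $k$, which is the claimed weak convergence.

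The only step that needs real checking is Step 2, specifically that $\mathcal{A}$ is a genuine subalgebra that separates points. Everything else is bookkeeping, and the uniform bound in Step 3 relies on total mass $1$ (any uniform bound on total mass would suffice).

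Applying the lemma to $F(x,y)=K(\langle x,y\rangle)$, which is continuous on $S^{d-1}\times S^{d-1}$, shows that the energy functional is weak-* continuous. This is the fact used next to conclude that the infimum over the compact set $\mathcal{M}'_\beta$ is attained.
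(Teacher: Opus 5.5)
Your proof is correct and follows the same route as the paper: first get convergence for separable functions $\sum_i f_i(x)g_i(y)$, then use Stone--Weierstrass (External Result 1) to show these are dense in $C(X\times X)$, then pass to all test functions by density. You also supply the details the paper leaves implicit, namely that these functions form a unital, point-separating subalgebra and that a uniform total-mass bound justifies the $\varepsilon/3$ step.
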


\begin{proof}
Let $H \in C(X \times X)$. By External Result 1, separable functions $H_{\text{approx}}(x,y) = \sum f_{i}(x)g_{i}(y)$ are dense. For $H_{\text{approx}}$, the convergence holds by definition of $\mu_{k} \to \mu$. The result extends to $H$ by density.
\end{proof}

\begin{lemma}
$F(\mu)$ is continuous on $\mathcal{M}(S^{d-1})$.
\end{lemma}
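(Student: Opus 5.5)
The plan is to realize $F(\mu)$ as the integral of a single fixed continuous function against the product measure, and then let \Cref{lem:product_convergence} do the work. Concretely, set $H(x,y) := K(\inner{x}{y}) = \frac{2}{\pi}\arcsin(\inner{x}{y})$ for $(x,y) \in S^{d-1}\times S^{d-1}$. By definition, $F(\mu) = \iint H \, d(\mu\times\mu)$. Since the weak-* topology on $\mathcal{M}(S^{d-1})$ is metrizable on this compact metric setting, it suffices to prove sequential continuity: if $\mu_k \to \mu$ weakly, then $F(\mu_k) \to F(\mu)$.

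\textbf{Step 1: $H$ is a bounded continuous function on $S^{d-1}\times S^{d-1}$.}
\begin{itemize}
\item The map $(x,y)\mapsto \inner{x}{y}$ is continuous.
\item By Cauchy--Schwarz it takes values in $[-1,1]$.
\item $\arcsin$ is continuous on the closed interval $[-1,1]$, including the endpoints, where it is finite.
\end{itemize}
So $H$ is a composition of continuous maps, hence continuous, and it is bounded by $1$. This step is the only place where something could go wrong. $\arcsin$ is not differentiable at $\pm1$, and this is exactly why one should avoid arguing through the Taylor expansion or Lipschitz bounds. Only continuity is needed, and it does hold on the closed interval.

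\textbf{Step 2: Pass to the limit.}
\begin{itemize}
\item Given $\mu_k \to \mu$ weakly, \Cref{lem:product_convergence} (with $X = S^{d-1}$, which is compact) gives $\mu_k\times\mu_k \to \mu\times\mu$ weakly on $S^{d-1}\times S^{d-1}$.
\item Since $H \in C(S^{d-1}\times S^{d-1})$, the definition of weak convergence yields
\[
F(\mu_k) = \iint H\, d(\mu_k\times\mu_k) \longrightarrow \iint H\, d(\mu\times\mu) = F(\mu).
\]
\end{itemize}

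\textbf{Step 3 (optional, if one prefers to avoid metrizability).} Run the same argument with nets. Alternatively, inline the proof of \Cref{lem:product_convergence} directly:
\begin{itemize}
\item Approximate $H$ uniformly within $\eta$ by a separable sum $\sum_{i=1}^{N} f_i(x)g_i(y)$, using Stone--Weierstrass (External Result 1).
\item For each separable term, $\int f_i\, d\mu_\alpha \int g_i\, d\mu_\alpha \to \int f_i\, d\mu \int g_i\, d\mu$ along the net.
\item The uniform error contributes at most $2\eta$, because all the measures involved are probability measures.
\end{itemize}

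The main (mild) obstacle is therefore just confirming continuity of $H$ at the diagonal and antipodal points, where $\inner{x}{y}=\pm1$. Everything else is a direct application of the preceding lemma.
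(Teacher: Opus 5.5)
Your proof is correct and follows the paper's argument: $H(x,y)=K(\inner{x}{y})$ is continuous on $S^{d-1}\times S^{d-1}$, \Cref{lem:product_convergence} gives $\mu_k\times\mu_k\to\mu\times\mu$, and the definition of weak convergence then yields $F(\mu_k)\to F(\mu)$. Your extra remarks spell out what the paper leaves implicit: continuity of $\arcsin$ at the endpoints $\pm1$, and metrizability of the weak-* topology, which justifies arguing with sequences.
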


\begin{proof}
$H(x,y) = K(\inner{x}{y})$ is continuous. If $\mu_{k} \to \mu$, by \Cref{lem:product_convergence}, $\mu_{k} \times \mu_{k} \to \mu \times \mu$. By definition of weak convergence, $F(\mu_{k}) \to F(\mu)$.
\end{proof}

Since $F$ is continuous and $\mathcal{M}_{\beta}^{\prime}$ is compact, the infimum $C(d)$ is attained.

\paragraph{Strict Positivity via Harmonic Analysis}

We show $C(d) > 0$. Let $\lambda = (d-2)/2$. We write $K(t) = \sum_{k=0}^{\infty} b_{k}C_{k}^{\lambda}(t)$. The energy is $F(\mu) = \sum_{k=0}^{\infty} b_{k}I_{k}(\mu)$. By External Result 5, $I_{k}(\mu) \ge 0$.

\begin{lemma}[Monomial Expansion Positivity]
\label{lem:monomial_positivity}
Let $d \ge 2$ (so $\lambda \ge 0$). The expansion of $t^n$ in the basis of Gegenbauer polynomials $C_{k}^{\lambda}(t)$ has strictly positive coefficients.
\end{lemma}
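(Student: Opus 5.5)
The plan is to prove the statement by induction on $n$, using the three-term recurrence for Gegenbauer polynomials. Throughout, ``strictly positive coefficients'' means the following: in $t^n=\sum_{k} c_{n,k}C_k^\lambda(t)$, every coefficient with $k\le n$ and $k\equiv n \pmod 2$ is strictly positive, and all other coefficients vanish. Only these terms can appear, because $C_k^\lambda$ has degree $k$ and parity $(-1)^k$. This is exactly the form needed for the application: $K(t)=\frac{2}{\pi}\arcsin t$ has a Taylor series with only positive odd coefficients, so $b_k\ge 0$ for all $k$ and $b_k>0$ for every odd $k$.

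\textbf{Case $\lambda>0$ ($d\ge 3$).} I would use the standard recurrence
\[
t\,C_k^\lambda(t)=\frac{(k+1)\,C_{k+1}^\lambda(t)+(k+2\lambda-1)\,C_{k-1}^\lambda(t)}{2(k+\lambda)} \qquad (k\ge 1),
\]
together with $t\,C_0^\lambda=C_1^\lambda/(2\lambda)$, which follows from $C_0^\lambda=1$ and $C_1^\lambda=2\lambda t$.

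The base case is $t^0=C_0^\lambda$. For the inductive step, assume $t^{n}=\sum_{k} c_{n,k}C_k^\lambda$ with $c_{n,k}>0$ for every $k\equiv n \pmod 2$, $0\le k\le n$. Multiply both sides by $t$ and apply the recurrence term by term. Each $C_k^\lambda$ contributes to $C_{k+1}^\lambda$ with coefficient $\frac{k+1}{2(k+\lambda)}>0$, and, for $k\ge1$, to $C_{k-1}^\lambda$ with coefficient $\frac{k+2\lambda-1}{2(k+\lambda)}$. The latter is strictly positive since $k\ge1$ and $\lambda>0$.

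Now check that every new coefficient is positive. Take any target index $j\equiv n+1 \pmod 2$ with $0\le j\le n+1$. If $j\ge1$, it receives the contribution from $k=j-1$, which is positive. If $j=0$, then $n$ is odd, so $n\ge1$, and it receives the downward contribution from $k=1$. All contributions are nonnegative, so there is no cancellation, and the inductive step closes.

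\textbf{Case $\lambda=0$ ($d=2$).} Here $C_k^0$ degenerates, and the addition theorem (External Result 5) uses the renormalized Chebyshev polynomials $T_k$, or $C_k^0$ read as $\lim_{\lambda\to0}C_k^\lambda/\lambda=\frac{2}{k}T_k$. A positive rescaling of the basis does not affect the signs of the coefficients, so it suffices to treat $T_k$. For these I would use the explicit identity
\[
t^n=2^{1-n}\sideset{}{'}\sum_{j=0}^{\lfloor n/2\rfloor}\binom{n}{j}T_{n-2j}(t),
\]
where the prime means the $T_0$ term, if present, is halved. It follows from writing $t=\cos\theta$ and expanding $\cos^n\theta=2^{-n}(e^{i\theta}+e^{-i\theta})^n$ with the binomial theorem. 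Every coefficient is visibly positive.

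As a cross-check for $\lambda>0$, there is also a closed form with manifestly positive coefficients:
\[
t^n=\frac{n!}{2^n}\sum_{j=0}^{\lfloor n/2\rfloor}\frac{(\lambda+n-2j)\,\Gamma(\lambda)}{j!\,\Gamma(\lambda+1+n-j)}\,C_{n-2j}^\lambda(t).
\]
The induction avoids having to verify this formula.

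The main obstacle is bookkeeping rather than depth. One must make sure the strict positivity reaches the lowest index ($k=0$ or $k=1$), which relies on $k+2\lambda-1>0$ in the downward step. One must also state the $\lambda=0$ normalization consistently with External Result 5. If the paper intends $C_k^0$ literally, which is identically zero for $k\ge1$, the statement has to be read with the Chebyshev renormalization, and I would make that explicit.
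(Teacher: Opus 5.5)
Your proof is correct. For $d=2$ it is the same as the paper's: the identical Chebyshev identity with the $T_0$ term halved.

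For $\lambda>0$ the two routes differ.

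\textbf{The paper's route.} It quotes the closed-form inversion formula $t^{n}=\frac{n!}{2^{n}}\sum_{k}\frac{(n-2k+\lambda)}{\lambda\, k!\,(\lambda+1)_{n-k}}C_{n-2k}^{\lambda}(t)$ and reads positivity off the coefficients. This is the formula you give as a cross-check, since $\Gamma(\lambda)/\Gamma(\lambda+1+n-k)=1/(\lambda(\lambda+1)_{n-k})$. The paper does not prove it; a footnote only says it follows by induction from the recurrence.

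\textbf{Your route.} You run that induction on signs alone. For $\lambda>0$ and $k\ge1$, both coefficients in $tC_k^\lambda=\frac{(k+1)C_{k+1}^\lambda+(k+2\lambda-1)C_{k-1}^\lambda}{2(k+\lambda)}$ are positive, so positivity propagates with no cancellation. Your check that index $0$ is fed by the downward term from $k=1$ handles the only delicate case.

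\textbf{Comparison.} Your argument is self-contained and never needs the exact coefficients, and positivity is all that the later conclusion $b_k>0$ uses. The paper's formula gives the exact coefficients, which yours does not, but there it is asserted rather than verified.

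Your remark that literally $C_k^0\equiv 0$ for $k\ge1$ is also worth keeping. It means the $d=2$ case must be read through $\lim_{\lambda\to0}C_k^\lambda/\lambda=\frac{2}{k}T_k$, a normalization that the paper, and its External Result 5, leave implicit.
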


\begin{proof}
Case $\lambda > 0$ ($d > 2$). The inversion formula is known to be:\footnote{{\em The section author's note: It can be proved by induction using the recurrence relation given in
\url{https://en.wikipedia.org/wiki/Gegenbauer_polynomials}.}
}
\[
t^{n} = \frac{n!}{2^{n}} \sum_{k=0}^{\lfloor n/2 \rfloor} \frac{(n-2k+\lambda)}{\lambda \cdot k! \cdot (\lambda+1)_{n-k}} C_{n-2k}^{\lambda}(t)
\]
where $(a)_{m}$ is the rising factorial. Since $\lambda > 0$, $n-2k \ge 0$, and the factorials and Pochhammer symbols are positive, the coefficients are strictly positive.

Case $\lambda = 0$ ($d=2$). The basis are Chebyshev polynomials $T_{k}(t)$. The inversion formula is:
\[
t^{n} = \frac{1}{2^{n-1}} \sum_{k=0}^{\lfloor n/2 \rfloor} \binom{n}{k} \varepsilon_{n-2k} T_{n-2k}(t)
\]
where $\varepsilon_{j}$ is $1/2$ if $j=0$ and $1$ if $j>0$. The coefficients are positive.
\end{proof}

We analyze the coefficients $b_{k}$ for $K(t) = \frac{2}{\pi}\arcsin(t)$.
\begin{enumerate}
    \item \textbf{Parity:} $K(t)$ is odd. $C_{k}^{\lambda}(t)$ has the parity of $k$. Thus, $b_{k}=0$ for even $k$.
    \item \textbf{Positivity of $b_{k}$:} $\arcsin(t) = \sum c_{m}t^{2m+1}$ with $c_{m} > 0$. By \Cref{lem:monomial_positivity}, $t^{2m+1}$ is a sum of $C_{2j+1}^{\lambda}(t)$ with strictly positive coefficients. Therefore, $b_{k} > 0$ for all odd $k$.
\end{enumerate}

Suppose $F(\mu) = 0$. Since $b_{k} > 0$ (odd $k$) and $I_{k}(\mu) \ge 0$, we must have $I_{k}(\mu)=0$ for all odd $k$. By External Result 5, $\mu$ is orthogonal to $H_{k}$ for all odd $k$.

\begin{lemma}[Parity of Polynomials and Spherical Harmonics]
\label{lem:parity_polynomials}
A polynomial $P(x)$ restricted to $S^{d-1}$ is an odd function if and only if its expansion in the basis of spherical harmonics consists solely of spherical harmonics of odd degree.
\end{lemma}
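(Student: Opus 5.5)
The argument has two parts. First we show that every polynomial restricted to the sphere has a \emph{finite} expansion in spherical harmonics. Then we use the parity property (External Result 4, item 3) together with the direct-sum structure to match odd degrees with odd functions.

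\textbf{Finite expansion.} We use the classical decomposition of homogeneous polynomials. Every homogeneous polynomial $P_m$ of degree $m$ on $\mathbb{R}^d$ can be written as
\[
P_m(x)=\sum_{j=0}^{\lfloor m/2\rfloor}\|x\|^{2j}Y_{m-2j}(x),
\]
where each $Y_{m-2j}$ is a homogeneous harmonic polynomial of degree $m-2j$. One proof goes by induction on $m$. The space $\mathcal{P}_m$ of degree-$m$ homogeneous polynomials splits as $\mathcal{P}_m=\mathcal{H}_m\oplus\|x\|^2\mathcal{P}_{m-2}$. This splitting follows from the Fischer inner product $\langle p,q\rangle=p(\partial)\bar q$: with respect to it, multiplication by $\|x\|^2$ is adjoint to the Laplacian $\Delta$, so the orthogonal complement of $\|x\|^2\mathcal{P}_{m-2}$ is exactly $\ker\Delta\cap\mathcal{P}_m=\mathcal{H}_m$. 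On $S^{d-1}$ we have $\|x\|=1$, so the restriction of $P_m$ becomes $\sum_j Y_{m-2j}|_{S^{d-1}}\in\bigoplus_j H_{m-2j}$. The key observation is that only degrees $m, m-2, m-4,\dots$ appear, all of the same parity as $m$. Writing a general polynomial as $P=\sum_m P_m$ then gives a finite expansion $P|_{S^{d-1}}=\sum_k Y_k$ with $Y_k\in H_k$.

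\textbf{Uniqueness of the expansion.} We need the expansion to be unique, so that "its expansion" is well defined. This holds because the $H_k$ are mutually orthogonal in $L^2(S^{d-1})$ (External Result 4, item 2). If $\sum_k Y_k=\sum_k Y'_k$, pairing with $Y_k-Y'_k$ forces $\|Y_k-Y'_k\|_{L^2}=0$, and continuity then gives equality.

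\textbf{The equivalence.} The "if" direction is immediate: if only odd $k$ occur, then by parity $P(-x)=\sum_k(-1)^kY_k(x)=-P(x)$. For "only if", suppose $P$ is odd on the sphere. Then
\[
0=P(x)+P(-x)=2\sum_{k \text{ even}}Y_k(x).
\]
By uniqueness of the expansion (orthogonality), each $Y_k$ with $k$ even vanishes, so only odd-degree harmonics remain. An alternative route uses the decomposition from the first step directly: $P|_{S^{d-1}}$ agrees on the sphere with the odd part $\tfrac12(P(x)-P(-x))=\sum_{m\text{ odd}}P_m$, whose expansion involves only odd degrees. Uniqueness then again identifies the two expansions.

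\textbf{Main obstacle.} The hard step is the existence of the decomposition $\mathcal{P}_m=\mathcal{H}_m\oplus\|x\|^2\mathcal{P}_{m-2}$. External Result 4 only asserts density, not that polynomials have *finite* harmonic expansions that preserve parity. So we must either cite this standard fact (e.g., Stein–Weiss or Axler–Bourdon–Ramey) as an additional external result, in the paper's style, or prove it via the Fischer inner product adjointness $\langle \|x\|^2p,q\rangle=\langle p,\Delta q\rangle$, which is a short computation on monomials. The rest is bookkeeping with parity and orthogonality.
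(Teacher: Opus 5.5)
Your proposal is correct and follows the same route as the paper: expand $P|_{S^{d-1}}=\sum_k Y_k$ with $Y_k\in H_k$, use the parity $Y_k(-x)=(-1)^kY_k(x)$, and eliminate the even-degree components using the linear independence that orthogonality of the $H_k$ provides. The one difference is that you justify the finite expansion via the Fischer decomposition $\mathcal{P}_m=\mathcal{H}_m\oplus\|x\|^2\mathcal{P}_{m-2}$. The paper instead attributes it to External Result 4, which as stated only asserts density of $\bigoplus_k H_k$ and so does not by itself place every polynomial restriction in the finite algebraic sum; your extra step correctly fills that small gap.
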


\begin{proof}
By External Result 4 (Completeness and Definition), the space of polynomials restricted to $S^{d-1}$ is the algebraic direct sum $\bigoplus H_{k}$. Let $P(x) = \sum_{k=0}^{N} Y_{k}(x)$, where $Y_{k} \in H_{k}$. By External Result 4 (Parity), $Y_{k}(-x) = (-1)^{k}Y_{k}(x)$.
So $P(-x) = \sum_{k=0}^{N} (-1)^{k}Y_{k}(x)$. If $P$ is odd, $P(-x) = -P(x)$. Then $\sum_{k=0}^{N} ((-1)^{k}+1)Y_{k}(x) = 0$. Since the spaces $H_{k}$ are orthogonal, they are linearly independent.
Thus $((-1)^{k}+1)Y_{k}(x) = 0$ for all $k$. If $k$ is even, $2Y_{k}(x) = 0$, so $Y_{k} = 0$. Thus, $P(x)$ consists only of odd degree harmonics. The converse is immediate.
\end{proof}

We show $F(\mu)=0$ implies $\mu$ is symmetric (i.e., $\int f d\mu = 0$ for any continuous odd function $f$).
Let $f$ be continuous and odd. By External Result 1, let polynomials $P_{n} \to f$ uniformly. The odd part $Q_{n}(x) = (P_{n}(x)-P_{n}(-x))/2$ is an odd polynomial, and $Q_{n} \to f$ uniformly.
By \Cref{lem:parity_polynomials}, $Q_{n}$ is composed solely of spherical harmonics of odd degree. Since $\mu$ is orthogonal to these, $\int Q_{n}(x) d\mu(x) = 0$. By uniform convergence, $\int f(x) d\mu(x) = 0$. Thus, $\mu$ is symmetric.\footnote{{\em The section author's note: For the formal justification of this step, note that the pushforward $A_* \mu$ for the antipodal map $A(x) = -x$ satisfies $\int f d\mu = \int f dA_*\mu$ for every continuous $f$. As the 
Riesz Representation Theorem shows that $\mu$ is fully characterized by $(\int f d \mu)_{f \in C(S^{d-1})}$, we can conclude that $\mu = A_* \mu$.}}

Now we use the constraint $\mu \in \mathcal{M}_{\beta}^{\prime}$. Suppose $C(d)=0$. The minimizer $\mu^{*}$ has $F(\mu^{*})=0$, so $\mu^{*}$ is symmetric. Let $S$ be the support of $\mu^{*}$. Let $x \in S$. Since $\mu^{*}$ is symmetric, $-x \in S$. Since $\|x\|=1$, $x \ne -x$. By the definition of $\mathcal{M}_{\beta}^{\prime}$, $\inner{x}{-x} \ge -\beta$. This means $-1 \ge -\beta$, or $\beta \ge 1$.

This contradicts $\beta=0.9$. Therefore, $C(d) > 0$.
\end{aigenerated}

\newpage

\subsection{Computational Geometry: Steiner Trees}\label{sec:steiner}
\textit{Written by Karthik C. S.}

\subsubsection*{Problem Context}
The "Simplex is the Best for Graph Embeddings" conjecture states that among all graphs with $m$ edges embedded into Euclidean space   as characteristic vectors, the star graph configuration yields the minimum Steiner tree cost. Proving this is a key step towards establishing APX-hardness for high-dimensional Steiner Tree problems.

\subsubsection*{AI Contribution}
The model's involvement was decisive.
\begin{itemize}
    \item \textbf{The Crucial Link:} The model suggested an approach based on the \emph{Kirszbraun Extension Theorem}, which guarantees the existence of Lipschitz-preserving extensions of maps between Hilbert spaces.
    \item \textbf{Overcoming "Elementary" Bias:} Initially, the model dismissed this approach as "non-elementary". However, when the researcher clarified that an elementary proof was not required, the model successfully formalized the argument.
    \item \textbf{Formal Proof:} The model constructed a mapping from any graph embedding to the star graph embedding and used Kirszbraun's theorem to show that a Steiner tree for the former can be transformed into a valid tree for the latter without increasing cost.
\end{itemize}

\subsubsection*{Technical Details: Steiner Trees and Graph Embeddings}

In the Euclidean Steiner Tree problem, we are given as input a set of points (called \emph{terminals}) in the $\ell_2$-metric space, and the goal is to find the minimum-cost tree connecting them. Additional points from the space (called \emph{Steiner points}) can be introduced as nodes in the solution. 

The seminal works of Arora \cite{Arora_1998} and Mitchell \cite{M99} provide a Polynomial Time Approximation Scheme (PTAS) for solving the Euclidean Steiner Tree problem in fixed dimensions. However, the problem remains poorly understood in higher dimensions (such as when the dimension is logarithmic in the number of terminals). Ruling out a PTAS for the problem in high dimensions is a notoriously long-standing open problem (for example, see Trevisan~\cite{Trevisan00}).

In \cite{Fleischmann2025}, the authors proposed various conjectures related to Steiner trees in order to make progress on understanding the inapproximability of the high-dimensional Euclidean Steiner Tree problem. Assuming these, they could prove APX-hardness of the Euclidean Steiner Tree problem in dimensions polynomial in the input size.

The simplest such conjecture, called the \emph{Simplex is the Best for Graph Embeddings Conjecture} in \cite{Fleischmann2025}, was a special case of the widely open generalized Gilbert-Pollak conjecture \cite{gilpol,Ivanov_Tuzhilin_2012,Du_Smith_1996}. In this report, we detail the resolution of this conjecture by Gemini, including the process.

First, we state the conjecture and provide the necessary context. Next, we describe our interaction with Gemini that led to its resolution. Finally, we present the proof produced by Gemini.

\subsubsection*{Research Context}\label{sec:conj}
A natural approach to prove the APX-hardness of the Euclidean Steiner Tree problem is via a reduction from the Vertex Cover problem on bounded degree graphs (for example, see \cite{Fleischmann2023}). A candidate reduction proposed in \cite{Fleischmann2025} is to embed a vertex cover instance graph $G=(V,E)$ into $\mathbb{R}^{|V|}$ by mapping each edge $\{u,v\}\in E$ to $\mathbf{e}_u + \mathbf{e}_v$, where $\mathbf{e}_u$ is the standard basis vector with $1$ in the coordinate indexed by $u$ and $0$ elsewhere. Thus, each edge is embedded as its characteristic vector. The embedding of the set of edges incident to a single node forms the vertices of a regular simplex of side length $\sqrt{2}$. The point configuration as a whole consists of the vertices of a regular simplicial complex (where we take the union of the simplices associated with each node).

Proving that the above reduction is gap-preserving, by analyzing the completeness and soundness cases, requires proving the following conjecture:

\begin{conjecture}[Euclidean Steiner Tree for Regular Simplicial Complexes~\cite{Fleischmann2025}] \label{con: simplicial complex Steiner trees}
For all constants $r \in (0,1)$ and $ \alpha \in (0, 1/r - 1)$, there exist constants $s, \beta > 0$ and $M \in \mathbb{Z}^+$ sufficiently large so that, for all $m \geq M$, given a regular, unit, simplicial complex on $m$ vertices:
\begin{enumerate}
    \item \textbf{Completeness:} If the vertices can be partitioned into the vertices of at most $rm$ unit regular simplices, then the point configuration of the $m$ vertices admits a Euclidean Steiner tree of cost at most $sm$. 
    \item \textbf{Soundness:} If the vertices cannot be partitioned into the vertices of fewer than $(1 + \alpha)rm$ unit regular simplices, then the point configuration of the $m$ vertices does not admit a Euclidean Steiner tree of cost less than $(1 + \beta)sm$.
\end{enumerate}
\end{conjecture}

As a first step toward proving the above conjecture, \cite{Fleischmann2025} proposed the following much simpler conjecture about Steiner ratios of specialized point-sets. 

\begin{conjecture}[Simplex is the Best for Graph Embeddings~\cite{Fleischmann2025}] \label{conj: simplex is the best graph embed}
Over all simple graphs with $m$ edges, the embedding\footnote{Here we allude to embedding each edge by its characteristic vector, as detailed in the aforementioned reduction from the Vertex Cover problem.} of the star graph on $m$ edges has the minimum cost Steiner tree.
\end{conjecture}

Observe that restricted to connected graphs with the same number of edges, the minimum spanning tree costs are identical. Consequently, Conjecture~\ref{conj: simplex is the best graph embed} over connected graphs is equivalent to finding the point configuration with the minimum Steiner ratio. In \cite{Fleischmann2025}, the authors verified the conjecture computationally up to $m = 10$ using the exact algorithm of Smith \cite{smith} and established the following structural property.

\begin{lemma}[Fleischmann et al.\ \cite{Fleischmann2025}] \label{lem: diam 2}
For any fixed $m \geq 1$, the graph of size $m$ whose embedding (as above) has the minimum cost Steiner tree has diameter at most $2$.
\end{lemma}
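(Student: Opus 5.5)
The plan is an exchange argument. Let $G$ be a graph with $m$ edges whose embedding $\{\mathbf{e}_u+\mathbf{e}_v : \{u,v\}\in E(G)\}$ has minimum Steiner tree cost. Suppose $G$ has diameter at least $3$; a disconnected $G$, of infinite diameter, is handled the same way. Then there are two vertices $u\neq w$ that are non-adjacent and have no common neighbour: take the endpoints of a shortest path of length $3$, or one vertex from each of two components. Form $G'$ by identifying $u$ with $w$. Because $u,w$ are non-adjacent, no loop is created, and because they share no neighbour, no parallel edge is created. So $G'$ is again a simple graph with exactly $m$ edges. Moreover $G'$ has strictly fewer pairs of vertices at distance $\ge 3$ (or fewer components). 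Iterating, we eventually reach a graph of diameter at most $2$, provided each step does not increase the Steiner cost.

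The key step is to show that the cost does not increase. Let $\phi$ be the bijection from the terminals of $G$ to the terminals of $G'$ that sends $\mathbf{e}_x+\mathbf{e}_u$ to $\mathbf{e}_x+\mathbf{e}_w$ and fixes all other edge points. Terminal distances only take the values $\sqrt2$ (edges sharing a vertex) and $2$ (disjoint edges). I will check, case by case, that $\phi$ never increases a pairwise distance.
\begin{itemize}
\item Pairs of edges that share a vertex still share it after identification, so their distance stays $\sqrt2$.
\item A pair consisting of an edge at $u$ and an edge at $w$ was disjoint, at distance $2$, and now shares $w$, at distance $\sqrt2$.
\item Every other disjoint pair stays disjoint. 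Here the no-common-neighbour hypothesis ensures that no new coincidences appear.
\end{itemize}
Hence $\phi$ is $1$-Lipschitz on the terminal set. By the Kirszbraun extension theorem it extends to a $1$-Lipschitz map $\Phi:\mathbb{R}^{|V|}\to\mathbb{R}^{|V|}$. The image under $\Phi$ of an optimal Steiner tree for $G$ is a connected union of curves containing all terminals of $G'$, and its total length is no larger. Thus the Steiner cost of $G'$ is at most that of $G$. Alternatively, one can avoid Kirszbraun by mapping each Steiner point $p$ of the tree through the explicit linear map that adds the $u$-coordinate to the $w$-coordinate. That map, however, is not globally $1$-Lipschitz, so the extension theorem is the cleaner route.

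The main obstacle is strictness. The argument above only shows that \emph{some} minimizer has diameter at most $2$. To conclude that \emph{every} minimizer does, I would show that the inequality is strict. For a disconnected $G$ this is easy: the Steiner tree of $G$ must bridge two components at a cost of at least the distance $2$. In $G'$ the corresponding terminals are at distance $\sqrt2$, so one can reroute the tree via that shorter link, and the cost drops. In the connected case I would first try to locate, in an optimal tree for $G$, a path between an edge point at $u$ and an edge point at $w$. This path has length at least $2$, while $\phi$ shrinks the distance between its endpoints to $\sqrt2$. I would then argue that replacing the image of this path by a straight segment gives a strict saving. If that fails, I would settle for the weaker formulation (a minimizer of diameter at most $2$ exists), which is all that is needed to restrict the search for the minimum to diameter-$2$ graphs.
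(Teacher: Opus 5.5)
Your exchange step is correct. For non-adjacent $u,w$ with no common neighbour, identifying them gives a simple graph $G'$ with $m$ edges, and the terminal bijection never increases a distance, hence $\SMT(f(G'))\le \SMT(f(G))$. Each step removes a vertex, so the process ends at diameter at most $2$. You do not even need Kirszbraun: the fold across $H=\{x_u=x_w\}$ (identity where $x_u\le x_w$, swap of the $u$- and $w$-coordinates where $x_u>x_w$) is an explicit $1$-Lipschitz map carrying $f(G)$ exactly onto $f(G')$.

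However, this only shows that \emph{some} minimizer has diameter at most $2$. That weak form is already immediate from the Kirszbraun theorem proved later in this section, since $\Star_m$ is a minimizer of diameter at most $2$. (The paper gives no proof of the lemma itself; it is quoted from \cite{Fleischmann2025}.) Minimizers need not be unique ($K_3$ and $K_{1,3}$ embed as congruent triangles), so the lemma, which speaks of the minimizing graph, is naturally read as saying that every minimizer has diameter at most $2$. Along your route that needs the strict inequality $\SMT(f(G'))<\SMT(f(G))$, and this is exactly what your proposal leaves unproved.

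Both of your strictness sketches break at the same place. Let $\gamma$ be the path in an optimal tree $T$ from a terminal $\mathbf{e}_x+\mathbf{e}_u$ to a terminal $\mathbf{e}_y+\mathbf{e}_w$. Nothing guarantees that $\Phi$ itself shortens $\gamma$: the fold is a legitimate $1$-Lipschitz extension of $\phi$ and preserves the length of every curve. Replacing $\Phi(\gamma)$ by the chord of length $\sqrt2$ would save length, but it disconnects every subtree of $T$ attached at an interior vertex of $\gamma$, and you have not bounded the cost of reattaching them. A bent terminal-to-terminal path is also no contradiction by itself, since such paths in Steiner minimal trees bend at Steiner points. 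The disconnected case has the same defect: an optimal tree for the two clusters need not be two cluster trees plus a removable bridge of length $2$, because Steiner points can serve both clusters.

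If you want the strong form, work with the explicit fold rather than an abstract extension. It preserves distances to points of $H$. A tree path from $\mathbf{e}_x+\mathbf{e}_u$ (where $x_u-x_w=1$) to $\mathbf{e}_y+\mathbf{e}_w$ (where $x_u-x_w=-1$) has a first point on $H$.
\begin{itemize}
\item If that point is interior to an edge, the edge crosses $H$ transversally and folds into a bend.
\item If it is a Steiner point, its folded unit edge vectors sum to a nonzero multiple of the normal of $H$.
\end{itemize}
In either case the folded network can be strictly shortened. The case still open is when every such first contact is a terminal $\mathbf{e}_a+\mathbf{e}_b$ with $a,b\notin\{u,w\}$; there, reflecting an edge can preserve the $120^\circ$ angle condition. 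Until that case is handled, your argument proves only the weak form.
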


\subsubsection*{Methodology}\label{sec:method}

I requested David Woodruff to formulate a prompt to solve Conjecture~\ref{conj: simplex is the best graph embed} using the Google internal model, providing only the paper \cite{Fleischmann2025} as context.

The model proposed two approaches but did not offer a conclusive answer. The second approach involved starting from an arbitrary graph $G$ and applying local transformations that do not increase the optimal Steiner tree cost, eventually converging to the star graph. As we had already attempted this strategy, it offered no new insights.

The first approach, however, was based on the Kirszbraun Extension Theorem \cite{Kirszbraun1934,Valentine1945}. I must confess that while I was aware of various extension theorems from my work on the computation of fixed-points (see, for example, \cite{GanorSP21}), I had not immediately seen the connection to Steiner trees. To my frustration, instead of exploring this direction further, the model noted that the proof of Kirszbraun Extension Theorem is non-elementary, and concluded that this approach would not yield an elementary proof of Conjecture~\ref{conj: simplex is the best graph embed}.

As I was not seeking an elementary proof and indeed, had not requested one, this comment left me slightly frustrated, yet intrigued. Consequently, I decided to run my own experiments using Gemini Ultra (utilizing the standard ``thinking'' mode rather than ``deep think'').

Initially, I provided the exact prompt used for the internal model, but Gemini consistently declined to attempt the problem, insisting that it was an open problem that could not be solved. 

Next, I removed the paper from the context and provided only the problem statement along with the relevant definitions. At first, the model suggested a hand-waving argument; however, when I pointed out the parts that required formalization, it successfully provided them. It first derived the same lemma suggested by the internal model, though this time with a proof. As the proof remained somewhat informal, I provided Gemini with the summary of the internal model's output and with its explanation of why it had abandoned the approach. At that point, Gemini successfully formalized the lemma using the Kirszbraun Extension Theorem (the tool it had been missing earlier).

Through this process, I have learned about the power of the Kirszbraun Extension Theorem for Steiner tree computation and analysis. To the best of my knowledge, this is a new connection (yet one that feels very natural!). This discovery makes me more optimistic about the prospects of establishing the APX-hardness of computing the Euclidean Steiner Tree in high dimensions.

\subsubsection*{Gemini's Proof of Conjecture~\ref{conj: simplex is the best graph embed}}\label{sec:proof}	

\begin{aigenerated}
\begin{theorem}
Let $f$ be an embedding of a graph into Euclidean space such that for any edge $\{u, v\} \in E(G)$, $f(\{u, v\}) = e_u + e_v$, where $\{e_u: u\in V(G)\}$ are standard basis vectors. Let $\Star_m$ denote the star graph with $m$ edges, and let $G$ be any arbitrary graph with $m$ edges. Let $\SMT(P)$ denote the length of the Euclidean Steiner Minimal Tree of a point set $P$. Then:
\[
\SMT(f(\Star_m)) \le \SMT(f(G)).
\]
\end{theorem}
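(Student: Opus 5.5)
The approach is to build a 1-Lipschitz map that sends the point set $f(G)$ onto the point set $f(\Star_m)$, and then use the Kirszbraun Extension Theorem to push an optimal Steiner tree for $f(G)$ forward without increasing its length.

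Enumerate the edges of $G$ as $\epsilon_1,\dots,\epsilon_m$ and the edges of $\Star_m$ as $\sigma_1,\dots,\sigma_m$. Define the map on the terminal set $P=f(G)$ by $\phi(f(\epsilon_i)) = f(\sigma_i)$. This is a bijection between finite point sets.

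\textbf{Step 1: compare pairwise distances.} Two distinct edges $\epsilon_i,\epsilon_j$ of a simple graph give characteristic vectors at distance $\sqrt{2}$ if they share an endpoint, and $2$ otherwise. In the star every pair of edges shares the center, so $\|f(\sigma_i)-f(\sigma_j)\| = \sqrt 2$ for all $i\neq j$. Hence
\[
\|\phi(p)-\phi(q)\| = \sqrt2 \le \|p-q\| \qquad \text{for all } p\ne q\in P,
\]
so $\phi$ is $1$-Lipschitz on $P$. This is exactly the statement that the regular simplex is the ``most compressed'' configuration.

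\textbf{Step 2: extend the map.} Both point sets lie in Euclidean spaces, and we may take the same $\R^N$ with $N$ large enough, since the vertex sets can be padded. By the Kirszbraun Extension Theorem, $\phi$ extends to a $1$-Lipschitz map $\Phi:\R^N\to\R^N$.

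\textbf{Step 3: push the tree forward.} Let $T$ be a Steiner minimal tree for $P$, with Steiner points $s_1,\dots,s_k$ and straight segments $[a,b]$. Define $T'$ to have the same combinatorial tree structure, with each vertex $x$ replaced by $\Phi(x)$. Its terminals are exactly $\Phi(P)=f(\Star_m)$, and the Steiner points are now $\Phi(s_\ell)$. The graph $T'$ is connected and spans $f(\Star_m)$. Each edge length satisfies $\|\Phi(a)-\Phi(b)\|\le\|a-b\|$, so the length of $T'$ is at most the length of $T$. There are two possible degeneracies: coinciding images, which give zero-length edges, and cycles created geometrically. Neither matters, because any connected network containing the terminals contains a Steiner tree of no greater length. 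Therefore $\SMT(f(\Star_m))\le \mathrm{len}(T') \le \SMT(f(G))$.

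\textbf{Remaining technical points.} The first is the existence of an SMT with finitely many Steiner points; this is standard, with at most $|P|-2$ of them. The second is that Kirszbraun applies between Hilbert spaces of the appropriate dimensions. Alternatively, one only needs to extend $\phi$ to the finitely many Steiner points one at a time. That single-point extension is the core of Kirszbraun, proved via Helly-type intersection of balls. I would cite the theorem rather than reprove it.

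I expect the main conceptual obstacle to be Step 1's observation combined with the idea of using an extension theorem at all. Once that is in place, the verification is routine, with care needed only about disconnected $G$; this case is handled identically, since only the pairwise distances matter.
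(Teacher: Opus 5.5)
Your proposal is correct and follows essentially the same route as the paper. Both arguments send the edge-points of $G$ bijectively to those of the star, which is $1$-Lipschitz because star points are pairwise at distance $\sqrt{2}$ while points of $f(G)$ are at distance $\sqrt{2}$ or $2$. Both then extend this map via Kirszbraun and push a Steiner minimal tree forward vertex by vertex without increasing its length. Your explicit handling of degeneracies (coinciding images, geometrically created cycles) is slightly more careful than the paper, which simply asserts the image is a valid Steiner tree. Padding to a common $\mathbb{R}^N$ is harmless but unnecessary, since the paper applies Kirszbraun directly from $\mathbb{R}^{k}$ to $\mathbb{R}^{m+1}$.
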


\begin{proof}
Let $G=(V_G, E_G)$ be a graph with $|E_G|=m$ edges and $|V_G|=k$ vertices. The embedding $f$ maps edges to points in $\mathbb{R}^{k}$. Let $P_G = \{x_1, \dots, x_m\} \subset \mathbb{R}^{k}$ be the set of $m$ points generated by the edges of $G$.
Similarly, let $\Star_m$ have vertex set $V_{\Star}$ with $|V_{\Star}| = m+1$. Its embedding lies in $\mathbb{R}^{m+1}$. Let $P_{\text{Star}} = \{y_1, \dots, y_m\} \subset \mathbb{R}^{m+1}$ be the corresponding point set generated by the edges of the star graph.

The star graph edges share a center. For any distinct pair $y_i, y_j \in P_{\Star}$, the distance is:
\[
\|y_i - y_j\| = \sqrt{2}.
\]
Thus, $P_{\Star}$ forms the vertices of a regular simplex in $\mathbb{R}^{m+1}$.

For $P_G \subset \mathbb{R}^k$, the distance between points depends on edge adjacency, where we have for adjacent edges that $\|x_i - x_j\| = \sqrt{2}$ and for disjoint edges we have $\|x_i - x_j\| = 2$.
Thus, $\forall x_i, x_j \in P_G, i \neq j: \|x_i - x_j\| \ge \sqrt{2}$.

We construct a bijection $\phi: P_G \to P_{\Star}$. Let $\phi(x_i) = y_i$ for $i = 1, \dots, m$.
We verify the Lipschitz condition for $\phi$ on the domain $P_G$. Consider any $x_i, x_j \in P_G$:
\[
\|\phi(x_i) - \phi(x_j)\| = \|y_i - y_j\| = \sqrt{2} \le \|x_i - x_j\|.
\]
The map $\phi$ is 1-Lipschitz on the set of terminal points $P_G$.

Let $T_G$ be the Euclidean Steiner Minimal Tree for $P_G$. $T_G$ is a subset of the ambient space $\mathbb{R}^k$. We must map this tree to the ambient space of the star graph, $\mathbb{R}^{m+1}$. We now use the Kirszbraun Extension Theorem \cite{Kirszbraun1934,Valentine1945} in the following formulation. 

\begin{theorem}[Theorem 1.31 in \cite{Schwartz1969NonlinearFA}]
Let $H_1$ and $H_2$ be Hilbert spaces. If $U \subset H_1$ and $\phi: U \to H_2$ is a 1-Lipschitz map, there exists an extension $\Phi: H_1 \to H_2$ such that $\Phi|_U = \phi$ and $\Phi$ is 1-Lipschitz on all of $H_1$.\label{thm:Kirz}
\end{theorem}

We apply this theorem with $H_1 = \mathbb{R}^k$ and $H_2 = \mathbb{R}^{m+1}$. The extension $\Phi: \mathbb{R}^k \to \mathbb{R}^{m+1}$ satisfies:
\[
\|\Phi(u) - \Phi(v)\| \le \|u - v\| \quad \forall u, v \in \mathbb{R}^k.
\]

Let $T_G$ denote the Euclidean Steiner Minimal Tree for the point set $P_G$. We treat $T_G$ as a graph defined by a set of vertices $V_T$ and a set of edges $E_T$:
\begin{itemize}
    \item $V_T = P_G \cup S_G$, where $P_G$ are the terminal points and $S_G$ are the Steiner points introduced to minimize the total length.
    \item The total cost is the sum of the lengths of the edges:
    \[
    \SMT(f(G)) = \Len(T_G) = \sum_{\{u, v\} \in E_T} \|u - v\|.
    \]
\end{itemize}

We construct a candidate tree $T'$ for the star graph configuration $P_{\Star}$ by applying the extended map $\Phi$ to every vertex in $V_T$. Let $V'_{T} = \{ \Phi(z) \mid z \in V_T \}$. Since $\Phi$ is an extension of $\phi$, we have $\Phi(P_G) = P_{\Star}$. Thus, $V'_{T}$ contains the required terminals $P_{\Star}$. For every edge $\{u, v\} \in E_T$, we form a corresponding edge $\{\Phi(u), \Phi(v)\}$ in $T'$. Since $T_G$ is connected, the image graph $T'$ is also connected and spans $P_{\Star}$. Thus, $T'$ is a valid Steiner tree for $P_{\Star}$.

We now compare the lengths using the 1-Lipschitz property of $\Phi$:
\[
\Len(T') = \sum_{\{u, v\} \in E_T} \|\Phi(u) - \Phi(v)\|.
\]
Since $\|\Phi(u) - \Phi(v)\| \le \|u - v\|$ for every pair $u, v$:
\[
\Len(T') \le \sum_{\{u, v\} \in E_T} \|u - v\| = \Len(T_G).
\]

Finally, since $\SMT(P_{\Star})$ is the infimum of lengths of all valid trees connecting $P_{\Star}$, it must be less than or equal to the length of our candidate tree $T'$:
\[
\SMT(P_{\Star}) \le \Len(T').
\]

Combining these inequalities:
\[\SMT(f(\Star_m)) = \SMT(P_{\Star}) \le \Len(T') \le \Len(T_G) = \SMT(f(G)).\]
\end{proof}
\end{aigenerated}

\newpage

\subsection{Graph Theory: Computing Perfect Matchings in Regular Bipartite Graphs}\label{sec:matching}
\textit{Written by Ken-ichi Kawarabayashi}

The main contribution of the AI here is to provide a solution to a problem in Graph Theory by pulling in tools from:
\begin{enumerate} 
    \item  Statistical Physics (Bethe Approximation), 
    \item  Number Theory (Coprime integers and integrality gaps)
    \item Spectral Analysis (Ihara-Bass identity)
\end{enumerate}
 
\paragraph{Remark after the first version is posted}
After we posted the first version on ArXiv,  Leonid Gurvits made a few remarks, including (i) the AI and I missed several vital citations, including improvement over Schrijver's bound below, and (ii) the AI's suggestion based on spectral graph theory is, indeed, an important factor for the solution of Kadison-Singer problem by Adam Marcus, Daniel Spielman and Nikhil Srivastava, which is one of the most fascinating and, perhaps one of the most important breakthroughs in mathematics of 21-st century. 


 Below, we add these remarks (as mentioned in ``Post Remark''). We sincerely thank Leonid Gurvits for these suggestions, which make this section even more interesting.

\subsubsection*{Problem Context}
The goal was to improve bounds on the number of perfect matchings (permanents) in regular bipartite graphs. In 1998, Schrijver provided a lower bound with a superior base~\cite{Schrijver1998}.

\begin{theorem}[Schrijver, 1998]
\label{thm:schrijver}
Any $k$-regular bipartite graph with $2n$ vertices has at least
$$ B_S(k, n) = \left(\frac{(k-1)^{k-1}}{k^{k-2}}\right)^n $$
perfect matchings.
\end{theorem}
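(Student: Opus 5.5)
Schrijver's theorem has several known proofs. The most economical route, which is the one we follow, goes through Gurvits' capacity inequality for real stable polynomials. This is also the proof most in the spirit of the Bethe/permanent machinery mentioned above.

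Let $A$ be the $n\times n$ biadjacency matrix of the $k$-regular bipartite graph, so the number of perfect matchings is $\Perm(A)$. Consider the doubly stochastic matrix $B=A/k$, for which $\Perm(A)=k^n\Perm(B)$. Define the product polynomial
\[
p(x_1,\dots,x_n)=\prod_{i=1}^n \sum_{j=1}^n B_{ij}x_j .
\]
Then $\Perm(B)$ is the coefficient of $x_1\cdots x_n$, namely $\partial_{x_1}\cdots\partial_{x_n}p$. The polynomial $p$ is a product of linear forms with nonnegative coefficients, hence real stable and homogeneous of degree $n$.

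\textbf{Step 1 (capacity bound).} Since $B$ is doubly stochastic, AM--GM (weighted, with weights $B_{ij}$ summing to $1$ in each row) together with column sums equal to $1$ gives
\[
p(x)\ge \prod_i \prod_j x_j^{B_{ij}}=\prod_j x_j .
\]
Hence $\mathrm{cap}(p):=\inf_{x>0} p(x)/\prod x_j \ge 1$.

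\textbf{Step 2 (Gurvits' inductive lemma).} We eliminate variables one at a time. Set $q_n=p$ and $q_{i-1}=\partial_{x_i} q_i|_{x_i=0}$, where $q_i$ is a polynomial in $x_1,\dots,x_i$. The key lemma is as follows. If $q$ is real stable with nonnegative coefficients, and its degree in $x_i$ is at most $d_i$, then
\[
\mathrm{cap}(\partial_{x_i}q|_{x_i=0})\ge g(d_i)\,\mathrm{cap}(q),\qquad g(d)=\left(\tfrac{d-1}{d}\right)^{d-1}.
\]
The proof restricts to a univariate polynomial $t\mapsto q(\dots,t,\dots)$, which is real-rooted with nonnegative coefficients, and bounds $q'(0)$ against $\inf_t q(t)/t$ by optimizing over real-rooted polynomials of degree $d$. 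The extremal case is $(1+t/(d-1))^{d}$-type. Real stability is preserved under $\partial_{x_i}$ followed by setting $x_i=0$, so the induction continues.

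\textbf{Step 3 (degree bookkeeping).} This step is where $k$-regularity enters, and it is the main obstacle. We need the degree of $x_i$ in $q_i$ to be at most $\min(k,i)$. This holds because column $i$ of $A$ has exactly $k$ nonzeros, so $x_i$ appears in at most $k$ linear factors. It does not hold for the weaker degree $i$ alone. Multiplying the factors and using $\mathrm{cap}(q_0)=\Perm(B)$ yields
\[
\Perm(B)\ge \prod_{i=1}^n g(\min(k,i))\ge g(k)^n = \left(\tfrac{k-1}{k}\right)^{(k-1)n},
\]
using that $g$ is decreasing. Therefore
\[
\Perm(A)\ge k^n\left(\tfrac{k-1}{k}\right)^{(k-1)n}=\left(\tfrac{(k-1)^{k-1}}{k^{k-2}}\right)^n,
\]
which is exactly $B_S(k,n)$.

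The delicate point is the univariate extremal inequality in Step 2. There I would first try to prove, for a real-rooted $r(t)=\sum_{m\le d} c_m t^m$ with $c_m\ge0$, that $c_1\ge g(d)\inf_{t>0} r(t)/t$. The approach is to reduce via Newton's inequalities to $r(t)=(1+at)^d$ and then minimize explicitly.
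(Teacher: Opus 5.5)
Your proof is correct. It is Gurvits' stable-polynomial argument, the proof in \cite{GU1}, which is one of the two alternative proofs the paper cites. Each step checks out:
the weighted AM--GM step gives $\mathrm{cap}(p)\ge 1$;
the univariate lemma follows from AM--GM on the nonpositive roots, once the case $c_0=0$ is handled via $r(t)/t\to c_1$ as $t\to 0^+$;
and $\deg_{x_i}q_i\le\min(k,i)$, because differentiating and specializing never raise degrees and $q_i$ is homogeneous of degree $i$.

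The paper itself only cites Schrijver's theorem. Its AI-generated argument for Theorem~\ref{thm:matching} recovers $\Perm(A_G)\ge B_S(k,n)$ by a different route. It identifies $B_S(k,n)$ with the Bethe value $\prod_{i,j}(1-P_{ij})^{1-P_{ij}}$ of $P=A_G/k$. It then imports Gurvits' inequality $\Perm(P)\ge\PermB(P)$, which is in effect Schrijver's permanental inequality for doubly stochastic matrices, as a black box.

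Your route is self-contained up to standard closure properties of real stability. Because you retain $\prod_{i=1}^n g(\min(k,i))=g(k)^{n-k}\,k!/k^k$ before weakening it, it actually yields Gurvits' sharper bound $k^n\left(\frac{k-1}{k}\right)^{(k-1)(n-k)}\frac{k!}{k^k}$ from the paper's Post Remark. Since $g(1)=1>g(k)$ for $k\ge 2$, this gives the strict inequality $p(k,n)>B_S(k,n)$ directly, with no integrality argument.

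The Bethe route, in turn, supplies the identification $B_S(k,n)=\PermB(A_G)$. The paper's ``stronger form'' and its loop-calculus and spectral discussion rest on that identification. Its underlying inequality also bounds the permanent of an arbitrary doubly stochastic matrix in terms of its entries, whereas your capacity bound sees only the sparsity pattern.
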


The base in Schrijver's bound is best possible, as shown in \cite{Schrijver1980}, and the equality for all $k$ is shown by Schrijver \cite{Schrijver1998}. 
The goal is to improve not the base, but a multiplicative factor. 
 
\paragraph{Post Remark.} Both the AI and I missed the following important results \cite{GU1, gurvits2}. Gurvits has shown that 
the number of perfect matchings in $k$-regular bipartite graphs with $2n$ vertices is at least 
$$ k^n ((\frac{k-1}{k})^{k-1})^{n-k} \frac{k!}{k^k} > B_S(k, n). $$
Let us point out that the AI proved a slightly stronger form (in terms of the Bethe approximation, and this form may be of independent interest, slightly different from the improvement of Schrijver's bound). 

\subsubsection*{AI Contributions}
We fed the Schrijver's paper \cite{Schrijver1998} to the model and asked it to improve the bound. 
\begin{itemize}
    \item \textbf{Iterative Proof Discovery via Integrality Gaps:} The AI's initial proof attempt had errors (misunderstanding of the proof in \cite{Schrijver1998} and citing some wrong papers/theorems), 
    but through an iterative prompting process, the AI successfully connected Schrijver’s bound to the Bethe permanent \cite{Gurvits2011}. By identifying a number-theoretic integrality gap, the AI established a strictly stronger lower bound (a marginal improvement, but it shows a stronger form).  
        
        I should mention that the AI demonstrated a strong capacity for self-correction. Acting as its own adversarial peer reviewer (like ``STOC'' review), it critiqued its own intermediate proofs, identifying and correcting not only typographical errors but substantive logical flaws. 
     \item \textbf{Generalization via Exact Bounds:} 
     The AI analyzed cubic bipartite graphs to extract exact boundary constants. By recovering Voorhoeve’s exact differential formula, it successfully extracted a strict asymptotic improvement factor. 
     
    \item \textbf{A New Perspective:} Finally, the AI suggested a strategy based on spectral graph theory to improve further. This insight was not obvious to the researchers as it was a consequence of the ``worst'' case analysis (i.e., by considering Ramanujan/expander graphs, the AI brought the Alon-Boppana theorem \cite{AB} and Kesten-McKay laws \cite{Kes,mc})). Although the AI's improvement is marginal (though it shows a slightly stronger form), this part is perhaps interesting. 
    
    In addition, this collaboration could serve as a prime example of AI-driven cross-pollination: to make progress on a pure graph-theoretic question, the AI autonomously synthesized tools from statistical physics, number theory, and spectral analysis.

     \item \textbf{Post Remark:} 
     It turns out that the AI's suggestion for a strategy based on spectral graph theory to improve further was an essential factor for the solution of the Kadison-Singer problem by Marcus, Spielman, and Srivastava \cite{MSS2}. 
      
      Thus, this suggestion appears highly promising, given the prior success. Indeed, it is surprising that, although their work is not directly connected to counting perfect matchings in regular bipartite graphs, the AI has already identified one of the most successful approaches in 21st-century mathematics. 
\end{itemize}

\subsubsection*{Technical Details} 
For an $n \times n$ matrix $\mathbf{A}$, let $a_{i,j}$ be the entry at the $i$th row and $j$th column. The permanent of $\mathbf{A}$ is defined as
\begin{equation}
\text{perm}(\mathbf{A}) = \sum_{\sigma \in \mathcal{P}} \prod_{i=1}^{n} a_{i,\sigma(i)}
\end{equation}
where $\mathcal{P}$ is the set of all permutations of $\{1, 2, \ldots, n\}$. Computing the permanent is known to be \#P-complete~\cite{Valiant1979}.

We are interested in the case when $\mathbf{A}$ is the bi-adjacency matrix of a $k$-regular bipartite graph $G$ with $2n$ vertices. In this case, $\text{perm}(\mathbf{A})$ equals the number of perfect matchings in $G$ (such a bipartite graph always has a perfect matching by Hall's theorem (see \cite{lovasz}).

A famous result concerning the lower bound is the resolution of the Van der Waerden conjecture, proved by Falikman~\cite{Falikman1981} and Egorychev~\cite{Egorychev1981}, which asserts that the number of perfect matchings for a $k$-regular bipartite graph $H$ with $2n$ vertices is at least 
$$ \frac{k^n n!}{n^n}. $$
This bound is approximately $\sqrt{2\pi n}(k/e)^n$.

In 1998, Schrijver provided a different lower bound with a superior base~\cite{Schrijver1998}, as in Theorem \ref{thm:schrijver}. Two new proofs are given in \cite{PC,GU1}. 
The base in Schrijver's bound is best possible, as shown in \cite{Schrijver1980}, and the equality for all $k$ is shown by Schrijver \cite{Schrijver1998}.  Let $p(k, n)$ be the minimum number of perfect matchings in any $k$-regular bipartite graph with $2n$ vertices. 
As mentioned above, we cannot improve the base, but Schrijver mentioned:
\begin{quote}[{\bf Question 1}]
  It might be possible
to derive a sharper lower bound for $p(k, n)$ with the methods of the present paper.
\end{quote}

This is the question the model is asked. Below, we basically present the AI's answers and writeups.

\paragraph{Post Remark.}
The AI proved a slightly stronger form (in terms of the Bethe approximation, and this form may be of independent interest (a bit different progress from improvement of Shrijver's bound)), but, as mentioned above, this question was already answered by Gurvits \cite{GU1, gurvits2}. 

\subsubsection*{A slightly better bound on (Q1)}

The model answers Q1 affirmatively for $k \geq 2$ (but small improvement) by utilizing the connection between Schrijver's bound and the Bethe approximation, relying on results by Gurvits \cite{Gurvits2011} and Vontobel \cite{Vontobel} (so the AI gives a stronger form). The proof and writeup below is given by the AI and its ``revision'' suggested by the AI when reviewing its proof and writeup (at the end, we (KK) will make remarks on the proof and writeup). 

\begin{aigenerated}

\begin{theorem}\label{thm:matching}
For $k \geq 2$ and $n \geq 1$, $p(k, n) > B_S(k, n)$. In fact, we show a stronger form: $p(k, n)=\Perm(A_G) > \PermB(A_G)= B_S(k, n)$, where $A_G$ is the bi-adjacency matrix of a $k$-regular bipartite graph $G$ with $2n$ vertices, and $\PermB(A_G)=\text{Perm}_{\text{Bethe}}(G)$ is defined below. 
\end{theorem}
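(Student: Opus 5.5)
The plan has three steps: identify the Bethe permanent of a $k$-regular bipartite graph with Schrijver's bound, invoke Gurvits' inequality $\Perm(A)\ge \PermB(A)$ for nonnegative matrices, and show that this inequality is strict because of an integrality gap.

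\textbf{Step 1: compute $\PermB(A_G)$.} I would use the Vontobel/Gurvits variational formula. For a nonnegative $A$, $\log \PermB(A)$ is the maximum over doubly stochastic $P$ supported on $A$ of
\[
\sum_{i,j} p_{ij}\log a_{ij} \;-\; \sum_{i,j} p_{ij}\log p_{ij} \;+\; \sum_{i,j}(1-p_{ij})\log(1-p_{ij}).
\]
For $A=A_G$ we have $a_{ij}\in\{0,1\}$, so the first sum vanishes. The functional is concave on the doubly stochastic polytope, and by regularity the uniform matrix $p_{ij}=1/k$ on edges satisfies the stationarity conditions, so it is the maximizer. Substituting gives
\[
nk\left(\tfrac1k\log k+\bigl(1-\tfrac1k\bigr)\log\bigl(1-\tfrac1k\bigr)\right)=n\log\frac{(k-1)^{k-1}}{k^{k-2}},
\]
so $\PermB(A_G)=B_S(k,n)$ exactly.

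\textbf{Step 2: the weak inequality.} Gurvits' theorem (proved via Schrijver's inequality for doubly stochastic matrices) gives $\Perm(A)\ge\PermB(A)$ for every nonnegative $A$. Hence $\Perm(A_G)\ge B_S(k,n)$, and since this holds for every $G$, also $p(k,n)\ge B_S(k,n)$.

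\textbf{Step 3: strictness via integrality.} Since $p(k,n)$ is the minimum of $\Perm(A_G)$ over finitely many graphs $G$, it suffices to show $\Perm(A_G)>B_S(k,n)$ for each such $G$.

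For $k\ge3$ (taking $n\ge k$ so that $G$ exists), the permanent is an integer, while I claim $B_S(k,n)=\bigl((k-1)^{k-1}/k^{k-2}\bigr)^n$ is not. Since $\gcd(k-1,k)=1$, the number $(k-1)^{n(k-1)}/k^{n(k-2)}$ is an integer only if $k^{n(k-2)}=1$, which fails for $k\ge3$. An integer that is at least a non-integer exceeds it strictly, so $\Perm(A_G)>B_S(k,n)$. This is the number-theoretic integrality gap.

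The case $k=2$ needs separate treatment, because there $B_S=1$ is an integer. A $2$-regular bipartite graph is a disjoint union of even cycles, each with exactly two perfect matchings. Therefore $\Perm(A_G)=2^{c}\ge 2>1$, where $c\ge1$ is the number of cycles.

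Finally, the case $k=1$ is excluded, and for $k=2$ the equality $\PermB=B_S=1$ should be checked directly.

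\textbf{Main obstacle.} The hardest step is justifying that the uniform matrix is the global maximizer of the Bethe functional. The cleanest route is through Gurvits' proof that the Bethe free energy is concave on doubly stochastic matrices, with stationarity checked by symmetry. If that proves delicate, I would bypass Bethe entirely: use Schrijver's theorem $\Perm(A_G)\ge B_S$ (stated earlier in the paper) for the weak inequality, and take only the evaluation $\PermB(A_G)=B_S$ as the stated identification. Strictness then follows from the integrality argument alone.
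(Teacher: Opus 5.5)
Your proposal is correct and follows the same three-step route as the paper: identify $\PermB(A_G)$ with $B_S(k,n)$, invoke Gurvits' inequality $\Perm\ge\PermB$, and get strictness from the cycle count for $k=2$ and the coprimality (integrality) gap for $k\ge 3$. Your justification that the uniform matrix maximizes the Bethe functional, via stationarity on the support face plus convexity of the Bethe free energy on doubly stochastic matrices, is more careful than the paper's. The paper simply asserts that $Q=P$ maximizes $CW(P,Q)$ whenever $P$ is doubly stochastic, which is false in general and holds here only because all nonzero entries of $P$ equal $1/k$.
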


\begin{proof}
The proof relies on establishing the relationship between the true permanent, the Bethe permanent, and Schrijver's bound.

\textbf{Step 1: Schrijver's Bound is the Bethe Approximation.} The connection is fundamental. As derived by Gurvits \cite{Gurvits2011} (specifically in \textbf{Section 5.3}), for any $k$-regular bipartite graph $G$, Schrijver's bound $B_S(k, n)$ corresponds exactly to the Bethe approximation of its permanent, denoted $\text{Perm}_{\text{Bethe}}(G)$ (for the precise definition, see below). Gurvits calculates the functional corresponding to the Bethe free energy at the uniform point and shows that it evaluates exactly to $B_S(k, n)$ for regular graphs (see below for a proof).

Below, we prove the following to show Step 1.

\begin{lemma}
For any $k$-regular bipartite graph $G$, Schrijver's bound $B_S(k, n)$ corresponds exactly to the Bethe approximation of its permanent, $\mathrm{Perm}_{\mathrm{Bethe}}(G)$.
\end{lemma}

\begin{proof}
We aim to show that $B_S(k, n) = \mathrm{Perm}_{\mathrm{Bethe}}(G)$.

\textbf{Step 1: Define the Matrix and Bounds.}
Let $G$ be a $k$-regular bipartite graph. We consider the associated $n \times n$ normalized bi-adjacency matrix $P$. Since $G$ is $k$-regular, $P(i,j) = 1/k$ if an edge exists between $i$ and $j$, and $0$ otherwise. The matrix $P$ is doubly stochastic ($P \in \Omega_n$, i.e., Birkhoff polytope of doubly stochastic matrices).

Since matrix $P$ is normalized by $k$, Schrijver's bound for the permanent of such a matrix $P$ is given by:
$$ B_S(k, n) = \left(\frac{k-1}{k}\right)^{(k-1)n} $$

Note that the bound here is different from the one in Theorem \ref{thm:schrijver} because of the scaling. 

The Bethe approximation of the permanent, $\mathrm{Perm}_{\mathrm{Bethe}}(P)$, is defined in the context of Gurvits' paper as $\exp(\max_{Q\in\Omega_n} CW(P, Q))$ (see below for details). 

\textbf{Step 2: Simplifying the Bethe Approximation for Doubly Stochastic P.}
The functional $CW(P, Q)$ is defined as:
$$ CW(P, Q) = \sum_{i,j} \left[ (1 - Q(i, j))\log(1 - Q(i, j)) - Q(i, j) \log \left(\frac{Q(i, j)}{P(i, j)}\right) \right] $$

When the matrix $P$ is itself doubly stochastic (which is true for the normalized bi-adjacency matrix of a $k$-regular graph), the maximum of $CW(P, Q)$ is attained when $Q=P$. 

Let's evaluate $CW(P, P)$:
\begin{align*}
CW(P, P) &= \sum_{i,j} \left[ (1 - P(i, j))\log(1 - P(i, j)) - P(i, j) \log \left(\frac{P(i, j)}{P(i, j)}\right) \right] \\
&= \sum_{i,j} \left[ (1 - P(i, j))\log(1 - P(i, j)) - P(i, j) \log(1) \right] \\
&= \sum_{i,j} (1 - P(i, j))\log(1 - P(i, j))
\end{align*}

Therefore, the Bethe approximation is:
$$ \mathrm{Perm}_{\mathrm{Bethe}}(P) = \exp(CW(P, P)) = \prod_{i,j} (1 - P(i, j))^{1-P(i, j)} $$
This quantity is denoted as $F(P)$ in the paper \cite{Gurvits2011} (Equation 14).
 
\textbf{Step 3: Evaluating F(P) for the k-regular Case.}
We calculate $F(P)$ for the specific matrix $P$ derived from the $k$-regular graph. Let us remind that the entries of $P$ are $P(i,j) \in \{0, 1/k\}$.

We evaluate the product term by term:
\begin{enumerate}
    \item If $P(i,j) = 0$, the contribution is $(1-0)^{1-0} = 1$.
    \item If $P(i,j) = 1/k$, the contribution is:
    $$ \left(1 - \frac{1}{k}\right)^{1 - \frac{1}{k}} = \left(\frac{k-1}{k}\right)^{\frac{k-1}{k}} $$
\end{enumerate}

Since the graph is $k$-regular, every row (and column) of $P$ has exactly $k$ non-zero entries. Thus, the total number of non-zero entries in the $n \times n$ matrix $P$ is $kn$.

\textbf{Step 4: Final Calculation.}
The total value of $F(P)$ is the product of the contributions of the $kn$ non-zero entries:
$$ F(P) = \left[ \left(\frac{k-1}{k}\right)^{\frac{k-1}{k}} \right]^{kn} $$
We simplify the exponent:
$$ \frac{k-1}{k} \cdot kn = (k-1)n $$
Therefore,
$$ \mathrm{Perm}_{\mathrm{Bethe}}(P) = F(P) = \left(\frac{k-1}{k}\right)^{(k-1)n} $$

This exactly matches Schrijver's bound $B_S(k, n)$.
\end{proof}

\textbf{Step 2: The Inequality.} Gurvits (\cite{Gurvits2011}, Theorem 2.2 or 2 in Section 3 (Corollaries)) established that the true permanent is lower-bounded by the Bethe approximation:
\[ \Perm(A_G) \geq \text{Perm}_{\text{Bethe}}(G). \]

\begin{lemma}[Gurvits \cite{Gurvits2011}]
The true permanent is lower-bounded by the Bethe approximation.
\end{lemma}

Combining this with Step 1, we confirm that $\Perm(A_G) \geq B_S(k, n)$. 

Below, we use $\text{Perm}_{\text{Bethe}}(G)$ as $\Perm_B(A_G)$, where $A_G$ is the bi-adjacency matrix of $G$.

\textbf{Step 3: Strict Inequality (Condition for Equality).} To prove the theorem, we must show the inequality is strict ($>$). This requires characterizing the condition for equality. We rely on the analysis provided by Vontobel \cite{Vontobel}. 

\paragraph{1. Exactness on Forests (Sufficient Condition).}
The Bethe permanent is defined via the minimization of the Bethe free energy, which corresponds to the fixed points of the Sum-Product Algorithm (SPA). In \textbf{Section VI-F} (p.~17) of \cite{Vontobel}, under the heading \textit{``Relevance of Finite Graph Covers,''} Vontobel explicitly establishes the exactness of the SPA on cycle-free graphs:
\begin{quote}
``If the NFG [Normal Factor Graph] $\mathsf{N}(\theta)$ had no cycles, then the SPA could be used to exactly compute the partition function... the partition function $Z_G(\mathsf{N}(\theta)) = \Perm(\theta)$ could be computed...''
\end{quote}
By Definition 11 in \cite{Vontobel}, $\PermB(\thetaMat)$ is defined as this partition function at the SPA fixed point. Thus, if $G$ is a forest, $\Perm(\thetaMat) =\Perm(A_G)= \PermB(\thetaMat)=\PermB(A_G)$, where $A_G$ is the bi-adjacency matrix of $G$.

\paragraph{2. Strict Inequality for Regular Graphs ($k \ge 2$).}
Let $G$ be a $k$-regular bipartite graph with $2n$ vertices ($n \ge 1$). We show that $\Perm(A_G) > \PermB(A_G)$. First, we recall \textbf{Theorem 49} in \cite{Vontobel} (attributed to Gurvits, see the above lemma), which establishes the lower bound: $\Perm(\thetaMat) \ge \PermB(\thetaMat)$. Indeed, in this case $\Perm(\thetaMat)=\Perm(A_G) \ge \PermB(\thetaMat)=\PermB(A_G)$\footnote{When $G$ is a $k$-regular bipartite graph with $2n$ vertices, we can assume $N(\thetaMat)=G$. Hence, we only need to look at the bi-adjacency matrix $A_G$ of $G$.}. To prove strictness, we distinguish two cases based on the degree $k$.

\textbf{Case 1: $k = 2$ (Disjoint Cycles).}
A 2-regular bipartite graph is a union of disjoint cycles.
\begin{itemize}
    \item \textit{Bethe Value:} Using \textbf{Example 41} (p.~14) and \textbf{Lemma 40} in \cite{Vontobel}, Vontobel proves that for a cycle, $\PermB(\thetaMat)=\PermB(A_G)$ calculates the \textit{maximum} weight of a local matching. For a standard 2-regular graph (0-1 entries), this value is $1$.
    \item \textit{True Value:} A cycle of length $2n$ has exactly two perfect matchings. Thus $\Perm(A_G) \ge 2$.
\end{itemize}
Since $2 > 1$, the strict inequality holds.

\textbf{Case 2: $k \ge 3$ (Integrality Gap).}
We utilize a number-theoretic argument. The true permanent $\Perm(A_G)$ counts the number of perfect matchings in $G$, so it must be an \textbf{integer}.
In contrast, Vontobel \cite{Vontobel} derives the explicit value of the Bethe permanent for a homogeneous $k$-regular matrix in \textbf{Section VII-E} (p.~21, Eq.~13):
\begin{equation}
    \PermB(\thetaMat) =\left( \frac{(k-1)^{k-1}}{k^{k-2}} \right)^n.
\end{equation}
Again, in this case  $\PermB(\thetaMat) = \PermB(A_G)$, as a homogeneous $k$-regular matrix is indeed the bi-adjacency matrix $A_G$ of a $k$-regular bipartite graph $G$ with $2n$ vertices.  
We observe that for any integer $k \ge 3$, this value is not an integer:
\begin{enumerate}
    \item The base fraction is $b = \frac{(k-1)^{k-1}}{k^{k-2}}$.
    \item Since $k \ge 3$, the denominator $k^{k-2} > 1$. Let $p$ be any prime factor of $k$.
    \item $\gcd(k, k-1) = 1$, so $p$ does not divide $k-1$. Thus, $p$ cannot divide the numerator $(k-1)^{k-1}$.
    \item Therefore, the fraction $b$ is irreducible and non-integer. It follows that $b^n$ is never an integer for any $n \ge 1$.
\end{enumerate}
Since $\Perm(\thetaMat)=\Perm(A_G)$ is an integer and $\PermB(\thetaMat)=\PermB(A_G)$ is strictly non-integer, they cannot be equal. Combining this with the lower bound $\Perm(A_G) \ge \PermB(A_G)$, we conclude:
\[
\Perm(A_G) > \PermB(A_G).
\]

\end{proof}
\end{aigenerated}

\begin{remark}
    While the AI generated a complete proof of this strict inequality (detailed above), human review revealed that the core logic could be streamlined: because $k$ and $k-1$ are consecutive, they are strictly coprime. This makes the Bethe approximation an irreducible, non-integer rational for $k \ge 3$. Since the true permanent must be an integer, it must be strictly greater than this rational.
  
\end{remark} 

\paragraph{Post Remark.}  Regarding Bethe approximation, the best achievement in this context is $\sqrt{2}^n$ deterministic polynomial time approximation of the permanent of nonnegative matrices \cite{az}. Moreover, the solution of the asymptotic version of Friedland's monomer-dimer conjecture, which is a generalization of Schrijver's bound on matching of size $pn$ for  $0 < p < 1$ \cite{gurvits3}.

\subsubsection*{Further Discussion on Improvement (Q1)}

While \Cref{thm:matching} proves that Schrijver's bound is never tight for $k \geq 2$, a deeper interpretation of Q1 asks whether the asymptotic improvement factor $C_k^\infty$ (i.e., $p(k, n)/B_S(k, n)$) is strictly greater than 1.

\textbf{Case $k=2$}

For $k=2$, $B_S(2, n) = 1$. A 2-regular bipartite graph is a disjoint union of even cycles. The minimum number of perfect matchings occurs for a connected graph (a single cycle $C_{2n}$), which has 2 perfect matchings. Thus $p(2, n) = 2$.
\[ C_2^\infty = 2. \]

\textbf{Case $k = 3$}

\begin{theorem}
For cubic graphs, the asymptotic improvement factor is strictly bounded below by $C_3^\infty \ge \frac{81}{32} \approx 2.53125 > 1$.
\end{theorem}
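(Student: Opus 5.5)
The plan is to reduce the statement to a known explicit lower bound for cubic graphs and then compute a ratio. I read $C_3^\infty$ as $\liminf_{n\to\infty} p(3,n)/B_S(3,n)$. Since $B_S(3,n) = \bigl(2^2/3^{1}\bigr)^n = (4/3)^n$, it suffices to prove $p(3,n) \ge \tfrac{81}{32}(4/3)^n$ for every $n$, or at least for all large $n$.

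\textbf{Step 1: invoke an explicit bound.} The natural source is Voorhoeve's classical bound for cubic bipartite graphs, which is the bound the AI ``recovered'' above. It reads
\[
p(3,n) \ge 6\left(\tfrac{4}{3}\right)^{n-3}.
\]
Equivalently, I could use the bound of Gurvits quoted in the Post Remark, specialized to $k=3$:
\[
p(3,n) \ge 3^n\left(\left(\tfrac{2}{3}\right)^{2}\right)^{n-3}\cdot\frac{3!}{3^3}.
\]
Either citation works. Relying on a result already stated in the paper keeps the argument self-contained at the level of the paper, so I would take the Gurvits form as the primary citation and mention Voorhoeve's as the original cubic case.

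\textbf{Step 2: compute the ratio.} This is a direct calculation. Voorhoeve's form gives
\[
6\left(\tfrac{4}{3}\right)^{n-3} = \left(\tfrac{4}{3}\right)^n\cdot 6\cdot\tfrac{27}{64} = \tfrac{81}{32}\left(\tfrac{4}{3}\right)^n.
\]
The Gurvits form simplifies to $\left(\tfrac{4}{3}\right)^n\cdot\left(\tfrac{9}{4}\right)^3\cdot\tfrac{2}{9}$, which also equals $\tfrac{81}{32}\left(\tfrac{4}{3}\right)^n$. Hence $p(3,n)/B_S(3,n) \ge 81/32$ for every $n$, and the $\liminf$ inherits this bound. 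Finally, $81/32 \approx 2.53 > 1$. Note that $n\ge 3$ is automatic, since a cubic bipartite graph needs at least $3$ vertices on each side.

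\textbf{Main obstacle.} If one insists on deriving Voorhoeve's inequality rather than citing it, the induction is the hard part. I would generalize the statement to bipartite graphs (or $0$/$1$ matrices) whose rows have $3$ ones and whose columns have at most $3$ ones. For these I would prove a weighted bound: the permanent is at least $(4/3)^{\#\text{deg-3 columns}}$ times explicit factors attached to columns of degree $2$ and $1$. The induction step would expand the permanent along a column, i.e.\ delete a vertex together with each of its neighbours in turn, and compare the resulting terms via an AM--GM or convexity inequality. Getting the constants exactly right, so that the base case $K_{3,3}$ with $6$ matchings propagates to the factor $6\cdot(3/4)^3$, is the delicate point. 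I would first try to reproduce Voorhoeve's ``differential'' recursion, which expresses the permanent as a sum of partial derivatives with respect to column variables.
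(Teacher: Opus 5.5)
Your proposal is correct and matches the paper's proof. The paper likewise cites Voorhoeve's bound $\mathrm{perm}(G)\ge 6(4/3)^{n-3}$, divides by $B_S(3,n)=(4/3)^n$ and gets $6\cdot(3/4)^3=81/32$; your Gurvits variant is the same bound at $k=3$ (as the paper's Post Remark notes), and your arithmetic for it is right. One flag on your optional fallback: the class ``rows with exactly $3$ ones, columns with at most $3$'' is not closed under deleting a matched row--column pair, because the other two rows meeting that column drop to $2$ ones, so a self-contained induction would have to allow deficient rows too, though the cited argument does not need this.
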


\begin{proof}
\begin{enumerate}

    \item \textbf{Schrijver's Bound:} Schrijver's lower bound for $k=3$ \cite{Schrijver1998} is defined as $B_S(3, n) = \left( \frac{(3-1)^{3-1}}{3^{3-2}} \right)^n = \left(\frac{4}{3}\right)^n$.
    \item \textbf{The Exact Voorhoeve Lower Bound \cite{voorhoeve} (1979):} It is a foundational result that the absolute minimum number of perfect matchings in \textit{any} 3-regular bipartite graph with $2n$ vertices ($n \ge 3$) is strictly bounded below by the exact differential formula:
    \[ \text{perm}(G) \ge 6 \left(\frac{4}{3}\right)^{n-3} \]
    \textit{(Note: Schrijver \cite{Schrijver1998} utilized Voorhoeve's \cite{voorhoeve} local differential tree-pruning arguments to generalize the base to all $k$, but absorbed the boundary conditions into $O(1)$ terms for asymptotic simplicity. By retrieving the exact boundary constants from Voorhoeve's original differential equation, we recover the explicit asymptotic surplus).}
    \item \textbf{Evaluating the Asymptotic Ratio:}
    Evaluating the infimum ratio between the true deterministic limit and Schrijver's bound as $n \to \infty$:
    \[ C_3^\infty = \liminf_{n \to \infty} \min_G \frac{\text{perm}(G)}{B_S(3, n)} \ge \lim_{n \to \infty} \frac{6 \left(\frac{4}{3}\right)^{n-3}}{\left(\frac{4}{3}\right)^n} = 6 \left(\frac{3}{4}\right)^3 = 6 \left(\frac{27}{64}\right) = \frac{81}{32} = 2.53125 \]
\end{enumerate}

\end{proof} 
 
\begin{remark}
   The human researcher noted that the crucial insight in the AI's derivation was the rigorous extraction of the $O(1)$ boundary constants. While Schrijver's general asymptotic formulation omitted these for simplicity, the AI recognized that retrieving Voorhoeve’s exact bound allows for the recovery of the explicit asymptotic surplus.
\end{remark}

\paragraph{Post Remark.} The $k=3$ bound corresponds to Gurvits' bound above \cite{GU1}. 

\subsubsection*{An AI-Proposed Spectral Roadmap}

Below, the AI provides some attempts; 
To tackle the general $k \ge 4$ case, the AI autonomously recognized the limitations of local combinatorial methods, identifying a 'Harmonic Divergence'. Remarkably, it proposed shifting the paradigm entirely to spectral graph theory. By invoking the Ihara-Bass identity and the Kesten-McKay spectral measure for Ramanujan graphs, the AI reframed the combinatorial roadblock as an analytic singularity, outlining a highly non-trivial roadmap for future research (That said, if spectral graph theory (by considering Ramanujan/expander graphs) plays a role, this is interesting).


\medskip

Proving $C_k^\infty > 1$ remains a significant open problem. The graphs that minimize the number of perfect matchings are expected to be those with the largest possible girth (i.e., Ramanujan/expander graphs), as they locally resemble the infinite $k$-regular tree $T_k$, for which the Bethe approximation (Schrijver's bound) is exact (see below too).

Heuristic arguments based on correlation decay suggest that the convergence of the normalized log-permanent of a graph $G$ to the tree value depends exponentially on the girth $g(G)$. Since the maximum girth grows logarithmically with $n$, this suggests that the ratio $\text{Perm}(G)/B_S(k, n)$ might tend to a constant greater than 1. However, rigorously identifying the structure of the minimizing graphs and proving these convergence rates are challenging open questions.

\begin{aigenerated}
\textbf{The Attempt:} By the Bethe Loop Calculus (Vontobel \cite{Vontobel}), the ratio of the true permanent to Schrijver's bound is a cycle gas partition function:
\[ \frac{\text{perm}(G)}{B_S(k, n)} = 1 + \sum_{S \in \mathcal{E}} (k-1)^{-|E(S)|} \]
where $\mathcal{E}$ is the set of vertex-disjoint cycle unions.

\textbf{The difficulty:} In Ramanujan graphs, $g(n) \approx 2 \log_{k-1}(n) \to \infty$. Thus, the weight of any individual shortest cycle becomes $(k-1)^{-2 \log_{k-1} n} = \frac{1}{n^2}$. As $n \to \infty$, individual cycle contributions vanish to zero. If one naively sums all cycles, the number of cycles of length $\ell$ in a regular graph grows asymptotically as $\frac{(k-1)^\ell}{\ell}$.
When we sum the expected weights of all cycles to find the gap, we get:
\[ \sum_{\ell=g(n)}^{\infty} \text{Count}(\ell) \cdot \text{Weight}(\ell) \approx \sum_{\ell} \frac{(k-1)^\ell}{\ell} (k-1)^{-\ell} = \sum_{\ell} \frac{1}{\ell} \]
This yields the \textbf{divergent Harmonic Series} ($\sum \frac{1}{\ell} \to \infty$). Thus, local combinatorial correlation decay fails to bound the $n \to \infty$ limit, trapping researchers in an analytic divergence.

The remaining open question is the exact asymptotic value of $C_k^\infty$ for $k \ge 4$. Overcoming the Harmonic Divergence requires moving from standard Combinatorics to \textbf{Spectral Graph Theory} and the geometry of Zeta Functions.

\paragraph{The Crux: The Ihara Zeta Singularity} 
By the Ihara-Bass identity, the infinite cycle sum for the Permanent on regular graphs can be analytically continued as the reciprocal determinant of the Non-Backtracking Matrix $B$:
\[ \frac{\text{perm}(G)}{B_S(k, n)} \propto \left[ \det\left(I - \frac{1}{k-1} B\right) \right]^{-1/2} \]

The gap preventing the completion of the general proof is a spectral singularity. For \textit{any} $k$-regular graph, the matrix $B$ possesses a trivial Perron-Frobenius eigenvalue of $\lambda = k-1$ corresponding to traversing all edges. Substituting this fundamental eigenvalue into the determinant yields:
\[ \det(I - I) = 0 \]
The theoretical limit $C_k^\infty$ resides exactly at this $0/0$ spectral singularity. Attempting to pass the limit locally without deflating this pole results in a harmonic explosion. 

Because resolving this singularity requires theorems on the spectrum of the Non-Backtracking matrix (e.g., the Alon-Boppana theorem, see \cite{AB}, and the Kesten-McKay laws \cite{Kes,mc}), which are non-elementary, this portion of the conjecture must remain an identified gap, ready for future spectral exploration. 

Note that the Ihara-Bass identity relates the non-backtracking matrix $B$ to the adjacency matrix $A$ via $\det(I - uB) = (1-u^2)^{(k-2)n}\det(I - uA + u^2(k-1)I)$. Evaluating the second determinant at the Bethe pole $u = \frac{1}{k-1}$ yields $\det(I - \frac{1}{k-1}A + \frac{1}{k-1}I) = \det(\frac{kI - A}{k-1})$. The factors corresponding to the eigenvalues $\lambda$ of $A$ are thus $\frac{k - \lambda}{k-1}$.

The remaining limit will be determined by integrating the non-trivial eigenvalues over the \textbf{Kesten-McKay spectral measure} \cite{Kes,mc}, $\mu_{KM}(\lambda)$ of the infinite $k$-regular tree:
    \[ \ln(C_k^\infty) \propto -\frac{1}{2} \int_{-2\sqrt{k-1}}^{2\sqrt{k-1}} \ln\left(\frac{k-\lambda}{k-1}\right) d\mu_{KM}(\lambda) \]
    By the Alon-Boppana theorem, the non-trivial eigenvalues of high-girth graphs are strictly bounded by $2\sqrt{k-1}$. Since $2\sqrt{k-1} < k$ for all $k \ge 3$, 
this absolute spectral gap ensures the remaining integral is strictly positive. 
 However, this method merely confirms that Schrijver's bound is exact on the infinite tree. Therefore, we need to write the discrete spectral sum for finite graphs (indeed, Ramanujan/expander graphs) with $2n$ vertices more accurately. This is a challenging problem for future work.

\end{aigenerated}

\paragraph{Post Remark.} 
It turns out that the AI's suggestion for a strategy to shift the paradigm to spectral graph theory was an important factor for the solution of the Kadison-Singer problem by Marcus, Spielman, and Srivastava (MSS) \cite{MSS2}, which is, perhaps, one of the most important breakthroughs in  21st-century mathematics. 

Indeed, there is a similar flavor in the first paper of MSS's work on Ramunajan graphs \cite{MSS1}. It is worth noting that, rather than spectral graph theory and linear operators, MSS took a different direction: real stability, which eventually led to a solution of the Kadison-Singer problem \cite{MSS2}. 

It is surprising that, given that their work is not directly connected to counting perfect matchings in regular bipartite graphs, the AI has already identified one of the most successful approaches in mathematics of the 21st century. 
That is, the last question: {\it we need to write the discrete spectral sum for finite graphs (indeed, Ramanujan/expander graphs) with $2n$ vertices more accurately.} 

Thus, this suggestion appears highly promising, given the prior success. 

\newpage

\section{Using an AI-integrated IDE to ``vibe-code''}\label{sec:lance}
In this section, we explore a different modality of interaction: embedding an LLM directly into a LaTeX integrated development environment (IDE). This setup allows the researcher to act as a high-level orchestrator while the AI autonomously drafts and refines technical proofs, effectively "vibe-coding" a research paper from scratch.
\subsection{Search vs. Decision in \texorpdfstring{$S_2^P$}{S2P}}\label{sec:s2p}
\textit{Written by Lance Fortnow.}

\subsubsection*{Problem Context}
The complexity class $S_2^P$ represents problems solvable by a game between two competing provers. A fundamental question is the relationship between the {\it decision} version (does a strategy exist?) and the {\it search} version (find the strategy). It was known that decision is in $\ZPP^{\NP}$ (Cai 2001), but the status of search was unclear.

\subsubsection*{AI Contribution}
The researcher used an AI-powered IDE to write a paper on this topic from scratch.
\begin{itemize}
    \item \textbf{"Vibe-Coding" a Paper:} The researcher provided high-level prompts (e.g., "Plan a paper showing finding an $S_2^P$ witness is equivalent to $\TFNP^{\NP}$").
    \item \textbf{Autonomous Proof Discovery:} The AI independently generated the proof of the main equivalence.
    \item \textbf{Self-Correction:} When the AI made an incorrect assumption in a corollary (assuming a containment that is open), the researcher pointed it out, and the AI immediately corrected the proof to use a reduction instead.
\end{itemize}

\subsubsection*{Experiment Narrative}

	The story starts with a talk by Purdue philosophy of science professor Eamon Duede~\cite{Duede}. Duede argued that if artificial intelligence makes writing papers much easier, researchers will publish less important research results.
	He worried about an explosion of AI-generated mediocre research papers, but I saw the chance to get research out there that wouldn't be there otherwise.
	AI itself can help sort through the papers.
	
	But how about the premise itself, can AI simplify writing a quality paper, especially one that has a result that might not have otherwise been published?
	So I picked one of my theorems that I never wrote up and decided to try it out myself.
	
	In short, the process worked even better than I expected. I created the paper~\cite{Fortnow25} with eight prompts, not counting requests to compile. Surprisingly, it came up with the proof of the main result on its own though it did assume something it shouldn't have for the corollary, which went away when I pointed it out.
	
	I give the detailed discussions below followed by a few lessons. At the end, I give an informal description of the paper itself for those so inclined. 
	
	\subsubsection*{Details of the Process}
	
	I used Google Antigravity, an AI integrated development environment, with Gemini 3 Pro as the AI model.
	I had successfully used Antigravity to fix some old links on my blog, so why not try it for this paper? 
	I used an IDE instead of a chatbot as I see \LaTeX\ as code that produces a paper.
	
	Below is a transcript between myself and Antigravity, lightly edited for clarity. 
    The \texttt{plan.md} and \texttt{paper.tex} files mentioned below are available in a GitHub repository~\cite{S2PRepo}. 
	\medskip

\begin{figure}[!htb]
\begin{chatbox}
\begin{userturn}
Let's plan a paper showing that finding a $S_2^P$ witness is equivalent to $\mathrm{TFNP}^{\mathrm{NP}}$. 
The paper's author is Lance Fortnow (me) and should be written in my style.
\end{userturn}

\begin{assistantturn}
		I've drafted a plan for the paper, outlining the argument that finding an $S_2^P$ witness is equivalent to $\mathrm{TFNP}^{\mathrm{NP}}$.
		The plan is saved in \texttt{plan.md}.
		
		The core argument rests on two directions:
		
		\begin{enumerate}
			\item \textbf{$S_2^P$-Search $\in \mathrm{TFNP}^{\mathrm{NP}}$:} This is the straightforward direction.
			Verifying a witness (strategy) requires checking a universal quantifier ($\forall z\, P(x, y, z)$), which is a coNP query.
			Since $S_2^P$ guarantees a witness always exists, the problem is total.
			\item \textbf{$\mathrm{TFNP}^{\mathrm{NP}} \subseteq S_2^P$-Search:} This is the main technical contribution.
			We need to show that any total search problem with an NP-oracle verifier can be reduced to the game-theoretic setting of $S_2^P$.
			This likely involves encoding the $\mathrm{P}^{\mathrm{NP}}$ verification circuit into the symmetric alternation structure.
		\end{enumerate}
		
		\subsection*{Next Steps}
		
		\begin{itemize}
			\item Review the Plan: Does the structure in \texttt{plan.md} look right to you?
			\item Start the Draft: I can initialize a \texttt{paper.tex} file with this structure and your standard preamble.
			\item Flesh out the Proofs: We can start sketching the reduction for the hard direction ($\mathrm{TFNP}^{\mathrm{NP}} \subseteq S_2^P$-Search) in more detail.
		\end{itemize}
\end{assistantturn}
\end{chatbox}
\end{figure}

		%
		%
		%
		%
		%
		%

\FloatBarrier
    
	Not a bad start and it has the right ideas for the proof.
	But it doesn't mention Cai's work or the search versus decision angle.

\newpage
\begin{figure}[!htb]
\begin{chatbox}
\begin{userturn}
Don't forget to cite Cai's paper that $S_2^P$ is in $\mathrm{ZPP}^{\mathrm{NP}}$.
			Add as a corollary that reducing search to decision for $S_2^P$ would put $\Sigma_2^P \cap \Pi_2^P$ in $\mathrm{ZPP}^{\mathrm{NP}}$.
\end{userturn}

\begin{assistantturn}
I've updated the plan to include the corollary and the citation for Jin-Yi Cai's paper.
		\subsection*{Updates}
		
		\begin{itemize}
			\item \textbf{Section 4:} Added the corollary: ``If search reduces to decision for $S_2^P$, then $\Sigma_2^P \cap \Pi_2^P \subseteq \mathrm{ZPP}^{\mathrm{NP}}$.''
			\item \textbf{References:} Added ``Cai, J.-Y.
			(2001). $S_2^P \subseteq \mathrm{ZPP}^{\mathrm{NP}}$.''
		\end{itemize}
		
		The plan is now updated in \texttt{plan.md}.
		Ready to proceed with drafting the content whenever you are.
\end{assistantturn}
\begin{userturn}
Mention the corollary in the abstract, and note the stark difference between search and decision.
\end{userturn}
		
\begin{assistantturn}
		
		I've updated the abstract in \texttt{plan.md} to include the corollary and emphasize the stark difference between the search and decision complexities of $S_2^P$, specifically referencing Cai's result.
		The plan is now fully updated with your requirements. Would you like to proceed with creating the LaTeX file for the paper?
\end{assistantturn}
		
\begin{userturn}
Proceed.
\end{userturn}
\begin{assistantturn}
		
		I've created the \texttt{paper.tex} file with the content we discussed.
		It includes the abstract, introduction, preliminaries, the equivalence proof sketch, the corollary regarding search-to-decision, and the references.
		You can now compile this LaTeX file to generate the PDF.
		Would you like me to try and compile it for you (if a LaTeX environment is available), or is there anything else you'd like to add to the paper first?
\end{assistantturn}
\end{chatbox}
\end{figure}
\FloatBarrier

\FloatBarrier

	I had expected to have to walk through the proof with the IDE, but it came up with the proof on its own.
	That surprised me. It isn't a technically difficult theorem but it has never been written up before.
	But there was a mistaken assumption in the corollary. 
	
\newpage

\begin{figure}[!htb]
\begin{chatbox}
\begin{userturn}
It's open whether $\Sigma_2^P \cap \Pi_2^P \subseteq S_2^P$.
The point is that $\Sigma_2^P \cap \Pi_2^P$ can be reduced to $\mathrm{TFNP}^{\mathrm{NP}}$.
\end{userturn}
\begin{assistantturn}
I've corrected the proof in \texttt{paper.tex}.
		It now correctly states that the search problem for $\Sigma_2^P \cap \Pi_2^P$ is in $\mathrm{TFNP}^{\mathrm{NP}}$, which (via our main result) connects it to $S_2^P$-Search and subsequently $\mathrm{ZPP}^{\mathrm{NP}}$.
		Is there anything else you'd like to refine in the paper?
\end{assistantturn}
\end{chatbox}
\end{figure}

    \FloatBarrier
	That's all it took to fix the proof of the corollary.
	Still Gemini got a little lazy in the write up.
\begin{figure}[!htb]
\begin{chatbox}
\begin{userturn}
Avoid statements like ``it is known'' and put in details instead.
\end{userturn}
\begin{assistantturn}
		I've expanded the proof in \texttt{paper.tex} to explicitly detail why the search problem for $\Sigma_2^P \cap \Pi_2^P$ falls into $\mathrm{TFNP}^{\mathrm{NP}}$, replacing the ``it is known'' statement.
		The proof now walks through the definitions of membership in $\Sigma_2^P$ and $\Pi_2^P$, the existence of witnesses for either case, and the verification complexity using an NP oracle.
		Is there anything else you'd like to modify or add?
\end{assistantturn}
\begin{userturn}
Similarly give the details of predicate $Q$.
\end{userturn}
\begin{assistantturn}
I've updated the paper with the detailed construction of the predicate $Q$ and the logic for the reduction.
		This should make the proof much more rigorous.
		
		The paper now contains:
		\begin{enumerate}
			\item Equivalence of $S_2^P$-Search and $\mathrm{TFNP}^{\mathrm{NP}}$.
			\item Detailed reduction construction.
			\item Corollary connecting search-to-decision for $S_2^P$ with $\Sigma_2^P \cap \Pi_2^P \subseteq \mathrm{ZPP}^{\mathrm{NP}}$.
			\item Corrected proof for the corollary.
		\end{enumerate}
		
		Is there anything else you'd like to refine?
\end{assistantturn}
\end{chatbox}
\end{figure}
    
\FloatBarrier
	
	The paper at this point was pretty well written but I wanted to change the emphasis to search versus decision.

\newpage
\begin{figure}[!htb]
\begin{chatbox}
\begin{userturn}
Change the title to ``Search versus Decision for $S_2^P$'' and adjust the abstract and introduction appropriately.
\end{userturn}
\begin{assistantturn}
I have read \texttt{paper.tex} and understood the necessary changes.
		Now I am creating the implementation plan. I have created an implementation plan to update the paper's title, abstract, and introduction.
		Please review it. I have verified the changes in \texttt{paper.tex}. The title, abstract, and introduction have been correctly updated.
		Now I am creating the walkthrough artifact. I have updated the paper's title, abstract, and introduction as requested.
		You can review the changes in \texttt{paper.tex}.
\end{assistantturn}
\end{chatbox}
\end{figure}

\FloatBarrier

	I had been using a separate \LaTeX\ editor to compile paper.
	Maybe I didn't need to.

\begin{figure}[!htb]
\begin{chatbox}
\begin{userturn}
Can you convert the tex file to pdf?
\end{userturn}
\begin{assistantturn}
I have successfully compiled \texttt{paper.tex} to \texttt{paper.pdf}.
		You can find the output at \texttt{paper.pdf}.
\end{assistantturn}
\end{chatbox}
\end{figure}

\FloatBarrier
	
	The citations weren't going to the latest journal versions.

\begin{figure}[!htb]
\begin{chatbox}
\begin{userturn}
Check Google Scholar that we have the latest versions of the papers in the citation.
\end{userturn}
\begin{assistantturn}
I have checked the citations on Google Scholar.
		I found updated journal versions for Canetti (IPL 1996) and Cai (JCSS 2007).
		Russell--Sundaram (1998) and Megiddo--Papadimitriou (1991) are already correct. I found updated journal versions for Canetti and Cai.
		I have created a plan to update the citations. Please review it. I have verified the updated citations in \texttt{paper.tex}.
		They are correct. I have updated the citations for Canetti and Cai to their journal versions and recompiled the paper.
		You can check \texttt{paper.pdf} for the latest version.
\end{assistantturn}
\end{chatbox}
\end{figure}
    
\FloatBarrier
	
	Finally I should come clean that I had AI write this paper.

\begin{figure}[!htb]
\begin{chatbox}
\begin{userturn}
Add an acknowledgment section: ``While the results are fully due to the author, this paper was generated using the large language model Gemini 3 Pro with prompting from the author.
			The author takes full responsibility for its contents.''
\end{userturn}
\begin{assistantturn}
I have added the acknowledgment section and recompiled the paper.
		You can check \texttt{paper.pdf} for the final version.
\end{assistantturn}
\end{chatbox}
\end{figure}
\FloatBarrier
	
	And that's how I "vibe-coded" a research paper in eight prompts, not counting the two used to compile. I did some light editing afterwards to get it ready for the arXiv submission but I didn't need to.
	
	\subsubsection*{Lessons}
	
	I did this as an experiment on a result that may never have seen the light of day otherwise, and I was fully open about how I had AI write the paper. Nevertheless, it felt wrong, like I cheated somehow. I felt a similar way when I first used \LaTeX\ in the 1980s, a paper that looked far better than it deserved. After that all papers looked the same, and maybe with AI all papers will read the same. 
	
	The experience felt similar to working with a graduate student writing their first research paper. I would just make suggestions until they got it right. 
	
	In AI coding you can get better behavior when you give detailed instructions using markdown files like the plan.md that Gemini created for me. I could have taken the approach by creating a markdown file myself, instead of having AI create one for me. I could have a separate file that describes how I personally like papers written. This might lead to a system where you write mathematical papers in \LaTeX\ without ever looking at the \LaTeX\ produced and the markdown files become the true paper source.
	
	Is low-friction research paper writing good for science? It's a question that philosophers like Duede contemplate. But I see no one suggesting we go back to quill and scroll. 
		
	\subsubsection*{Informal Theorem Description}
	
	The complexity class $\SsubTwoP$~\cite{Canetti96,RussellS98} can be thought of as an exponential-sized 0-1 matrix with the promise that either (a) there is a row of all ones or (b) a column of all zeros (you can't have both).
	The decision problem for $\SsubTwoP$ is to tell whether (a) or (b) holds, and the search problem is to find the appropriate row or column.
	An $\NP$ oracle is an extra capability to ask about existential questions, like does a specified row have a zero.
	Jin-Yi Cai~\cite{Cai} showed that the decision problem can be solved by a randomized algorithm with access to an $\NP$ oracle.
	But his proof did not necessarily find the row or column.
	The new result shows that the search problem is equivalent to a likely harder problem, total search problems verifiable with access to an NP oracle, a class called $\TFNP^\NP$.

\newpage

\section{Autonomous Verification and Neuro-Symbolic Loops}
Standard text-based chat interfaces are fundamentally limited by the AI's tendency to hallucinate during long symbolic derivations. This section highlights a methodological leap: embedding the AI in an automated "neuro-symbolic" loop where it autonomously writes and executes code to numerically verify its own mathematical hypotheses, effectively pruning its own invalid reasoning branches.

\subsection{Physics: Cosmic String Spectra}\label{sec:cosmic_strings}
\textit{Written by Michael P. Brenner, Vincent Cohen-Addad, and David P. Woodruff}.
\\\\
\textbf{Problem Context}

Predicting the gravitational radiation emitted by cosmic strings requires solving a specific, notoriously difficult integral over the sphere. Cosmic strings are hypothetical one-dimensional topological defects that may have formed during symmetry-breaking phase transitions in the early universe. The study of these strings as sources of gravitational radiation has seen renewed interest following recent observations of a stochastic gravitational wave background by Pulsar Timing Arrays. 

A critical quantity in predicting this radiation is the power spectrum $P_N$ of the $N$-th harmonic emitted by a cosmic string loop. For the well-studied class of Garfinkle-Vachaspati strings, the power emitted at frequency $\omega_N = 4\pi N/L$ (where $L$ is the loop length) is governed by a core integral $I(N,\alpha)$ evaluated over the unit sphere $S^2$:
\begin{equation}
P_N = \frac{32G\mu^2}{\pi^3 N^2} I(N, \alpha)
\end{equation}
where $\mu$ is the mass per unit length. The core integral $I(N, \alpha)$ is defined as:
\begin{equation}
I(N,\alpha) = \int_{S^2} d\Omega \frac{[1-(-1)^N \cos(N\pi e_1)][1-(-1)^N \cos(N\pi e_2)]}{(1-e_1^2)(1-e_2^2)}
\end{equation}
where $e_1 = \mathbf{\hat{r}} \cdot \mathbf{\hat{a}}$ and $e_2 = \mathbf{\hat{r}} \cdot \mathbf{\hat{b}}$ are projection factors. The vectors $\mathbf{\hat{a}}$ and $\mathbf{\hat{b}}$ are three-dimensional unit vectors characterizing the string trajectory, and the problem is defined entirely by the loop opening angle $\alpha$ between them.

Evaluating this integral has been a  roadblock in theoretical astrophysics. The integrand features severe singularities at the poles ($e_{1,2} = \pm 1$). Furthermore, as $N$ grows large, the integrand in angular coordinates becomes highly oscillatory, rendering standard numerical integration grids highly unstable and computationally prohibitive. Analytical expansions (e.g., using standard Legendre polynomials) are exceedingly difficult due to the non-matching weight functions $(1-e^2)^{-1}$ in the denominator. Previous human and AI-assisted efforts yielded only partial solutions for odd $N$ or asymptotic approximations for large $N$. Finding a unified, exact, closed-form analytical solution for arbitrary loop geometries remained a significant open problem. We do give an exact closed form analytical solution (with an infinite number of terms), though we do not give a simple expression for the asymptotics at large $N$ for any $\alpha$. While the formulae discovered are all in the form of infinite expansions of known functions, this can be used to derive effective asymptotic formulae. 

We have further developed the initial results here in followup work \cite{brennerFollowup}.
\\\\
\textbf{AI Contribution}

To tackle this integral, we deployed a hybrid neuro-symbolic system combining the Gemini Deep Think reasoning engine with a systematic Tree Search (TS) algorithm. This interaction highlighted three interesting capabilities of modern AI as an active research partner:

\begin{itemize}
    \item \textbf{Automated Pruning and Grounded Verification:} To bridge the gap between symbolic mathematical manipulation and ground-truth verification, we utilized an automated numerical feedback loop. At each node in the search space, the model proposed an intermediate mathematical expression in \LaTeX\ and autonomously generated an executable Python function to evaluate it. The TS algorithm scored the node against a high-precision numerical baseline. If the proposed expression exhibited numerical instability (e.g., catastrophic cancellation) or divergence, the evaluation harness caught the exception and injected the Python traceback directly back into Gemini's context window. This successfully pruned over 80\% of the approximately 600 candidate branches early, keeping the LLM strictly grounded in mathematical reality.
    
    \item \textbf{Methodological Diversity via Negative Prompting:} The system was highly adept at finding alternative mathematical routes. Once the model successfully found a valid solution path, we utilized \textit{negative prompting} to force broader methodological exploration. By explicitly instructing the model, ``\textit{One way of solving this problem is to use the following method... DO NOT use this method. Reflect on your plan and try a different plan},'' the AI autonomously discovered \textit{six} distinct analytical methods to solve the integral. 
    
    \item \textbf{Hierarchical Refinement and Self-Correction:} The most interesting moment came during the final verification stage. The TS framework initially produced an exact solution for our preferred method (Method 6) expressed as an infinite tail sum of coefficients. We then passed this intermediate result to a larger, unconstrained version of Gemini Deep Think, asking it to rigorously verify the proofs and search for further simplifications. This advanced model independently spotted an algebraic oversight in a related recurrence (Method 5, where a denominator dependency had been missed). By correcting this, it established a mathematical equivalence between the two methods, and recognized that a localized recurrence structure allowed the infinite tail sum to telescope into a finite, closed-form expression. 
\end{itemize}
\textbf{Technical Details: The AI's Discovery Process}

To demonstrate the depth of the AI's mathematical exploration, we detail the progression of the six methods it discovered. The AI first recast the integral into the following general form over the unit sphere $S^2$:
\begin{equation}
    I(N, \alpha) = \int_{S^2} d\Omega(\mathbf{u}) \, f_N(\mathbf{u} \cdot \mathbf{z}) \, f_N(\mathbf{u} \cdot \mathbf{a})
\end{equation}
where $\mathbf{z}$ and $\mathbf{a}$ are unit vectors with $\mathbf{z} \cdot \mathbf{a} = \cos \alpha$. The function $f_N(t)$ is defined as:
\begin{equation}
    f_N(t) = \frac{1 - (-1)^N \cos(N\pi t)}{1 - t^2}, \quad t \in [-1, 1].
\end{equation}
For convergence at the poles $t = \pm 1$, $N$ must be an integer, and we define $A = N\pi$.

\vspace{0.3cm}
\noindent \textbf{Class I: Monomial Basis Approaches (The Unstable Solutions)}

The AI first explored expanding $f_N(t)$ in a Taylor series monomial basis $\{t^{2k}\}$. It found the coefficients $d_{2k}$ by expanding the right-hand side using the cosine power series:
\begin{equation}
    (1-t^2) \sum_{k=0}^\infty d_{2k} t^{2k} = 1 - (-1)^N \sum_{m=0}^\infty \frac{(-1)^m (At)^{2m}}{(2m)!}.
\end{equation}
Matching coefficients of $t^{2k}$ yields the recurrence $d_{2k} - d_{2k-2} = -(-1)^N \frac{(-1)^k A^{2k}}{(2k)!}$, implying:
\begin{equation}
    d_{2k} = -(-1)^N \sum_{j=1}^k \frac{(-1)^j A^{2j}}{(2j)!} + (1 - (-1)^N).
\end{equation}
Substituting this expansion into the integral yields a double sum:
\begin{equation} \label{eq:double_sum_def}
    I(N, \alpha) = \sum_{k=0}^\infty \sum_{j=0}^\infty d_{2k} d_{2j} J_{2k, 2j}(\alpha) \quad \text{where} \quad J_{k,l}(\alpha) = \int_{S^2} (\mathbf{u}\cdot\mathbf{z})^k (\mathbf{u}\cdot\mathbf{a})^l d\Omega.
\end{equation}
To evaluate the angular moments $J_{k,l}(\alpha)$, the AI discovered three separate sub-methods:

\textbf{Method 1: Generating Function Approach.} The AI defined a generating function $G(\lambda, \mu) = \int_{S^2} e^{\lambda \mathbf{u}\cdot\mathbf{z} + \mu \mathbf{u}\cdot\mathbf{a}} d\Omega$. Letting $\mathbf{K} = \lambda \mathbf{z} + \mu \mathbf{a}$ and aligning the polar axis with $\mathbf{K}$, the AI integrated to find:
\begin{equation}
    G(\lambda, \mu) = 2\pi \frac{e^K - e^{-K}}{K} = 4\pi \frac{\sinh K}{K},
\end{equation}
where $K^2 = \lambda^2 + \mu^2 + 2\lambda\mu \cos \alpha$. The AI then expanded $\sinh K / K$:
\begin{equation}
    \frac{\sinh K}{K} = \sum_{s=0}^\infty \frac{K^{2s}}{(2s+1)!} = \sum_{s=0}^\infty \frac{(\lambda^2 + \mu^2 + 2\lambda\mu \cos \alpha)^s}{(2s+1)!}.
\end{equation}
This allowed the AI to compute $J_{2k, 2j}$ via differentiation $\left[ \partial_\lambda^{2k} \partial_\mu^{2j} G \right]_{\lambda=\mu=0}$, resulting in an explicit sum involving factorials and powers of $\cos \alpha$.

\textbf{Method 2: Gaussian Integral Lifting.} Alternatively, the AI lifted the basic equation into $\mathbb{R}^3$ by introducing a Gaussian weight:
\begin{equation}
    M = \int_{\mathbb{R}^3} e^{-r^2} (\mathbf{r} \cdot \mathbf{z})^{2k} (\mathbf{r} \cdot \mathbf{a})^{2j} d^3\mathbf{r}.
\end{equation}
By switching to spherical coordinates $\mathbf{r} = r \mathbf{u}$, the radial integral separates and evaluates to $\frac{1}{2} \Gamma(k+j+3/2)$, implying $M = \frac{1}{2} \Gamma(k+j+3/2) J_{2k, 2j}$. Simultaneously, applying the differential operator identity $(\mathbf{r} \cdot \mathbf{z})^{2k} = \partial_\lambda^{2k} |_{\lambda=0} e^{\lambda \mathbf{r} \cdot \mathbf{z}}$ directly to the standard Gaussian integral allowed the AI to evaluate $M$ analytically:
\begin{equation}
    M = \pi^{3/2} \partial_\lambda^{2k} \partial_\mu^{2j} \bigg|_{\lambda=\mu=0} \exp\left( \frac{\lambda^2 + \mu^2 + 2\lambda\mu \cos \alpha}{4} \right).
\end{equation}
Equating the two representations of $M$ isolates $J_{2k, 2j}$.

\textbf{Method 3: Hybrid Coordinate Transformation.} The AI's third approach projected the power series onto a Legendre basis $P_{2m}(t)$, expanding $t^{2k} = \sum_{m=0}^k \mathcal{T}_{k,m} P_{2m}(t)$, where $\mathcal{T}_{k,m}$ are known analytical coefficients. Substituting this back into the Taylor expansion yielded $f_N(t) = \sum_{m=0}^\infty \left( \sum_{k=m}^\infty d_{2k} \mathcal{T}_{k,m} \right) P_{2m}(t)$.

\textit{AI Self-Correction:} While mathematically correct, the AI's automated Python verification routines revealed that all three Monomial methods were highly unstable. The calculation of the Taylor coefficients $d_{2k}$ involves alternating sums of very large numbers. For large $N$, this triggers large $O(e^{N\pi})$ cancellation, rendering the methods computationally intractable without arbitrary-precision libraries. Recognizing this, the AI autonomously pivoted to Spectral methods.

\vspace{0.3cm}
\noindent \textbf{Class II: Spectral Basis Approaches (The Pivot to Stability)}

The AI recognized that because $I(N, \alpha)$ is a spherical self-convolution, expanding $f_N(t) = \sum C_{2j} P_{2j}(t)$ (where $P$ are standard Legendre polynomials) allows the direct application of the Funk-Hecke Convolution Theorem to diagonalize the integral:
\begin{equation} \label{eq:spectral_sol}
    I(N, \alpha) = 4\pi \sum_{j=0}^\infty \frac{C_{2j}^2}{4j+1} P_{2j}(\cos \alpha).
\end{equation}
The problem now reduced to finding the Legendre coefficients $C_{2j}$ efficiently without relying on the unstable Taylor series. The AI found two stable $O(N)$ methods to do this:

\textbf{Method 4: Spectral Galerkin (Matrix Method).} The AI formulated a linear system by defining $g(t) = (1-t^2)f_N(t) = 1 - (-1)^N \cos(At)$. Substituting $f_N = \sum_j C_{2j} P_{2j}$ and projecting onto a test function $P_{2i}$ yielded:
\begin{equation}
    \sum_j C_{2j} \int_{-1}^1 (1-t^2) P_{2i}(t) P_{2j}(t) dt = \int_{-1}^1 P_{2i}(t) g(t) dt.
\end{equation}
This forms a matrix equation $\mathbf{G}\mathbf{C} = \mathbf{b}$. Using the recurrence identity $t^2 P_l = A_l P_{l+2} + B_l P_l + C_l P_{l-2}$, the AI deduced an explicit formula for the matrix elements $G_{ij}$ and proved that $\mathbf{G}$ is a symmetric positive-definite tridiagonal matrix. The RHS vector $\mathbf{b}$ was evaluated using the Bauer plane wave expansion to yield $b_i = 2\delta_{i0} - 2(-1)^{N+i} j_{2i}(A)$, allowing the coefficients to be solved with high stability.

\textbf{Method 5: Spectral Volterra (Recurrence Method).} Pushing further, the AI derived a forward recurrence for $C_{2j}$. Starting from $C_l = \frac{2l+1}{2} \int_{-1}^1 f_N(t) P_l(t) dt = \frac{2l+1}{2}\gamma_l$, the AI multiplied the Legendre differential equation by $f_N(t)$ and integrated by parts:
\begin{equation}
    l(l+1)\gamma_l = \int_{-1}^1 f_N'(t) (1-t^2) P_l'(t) \, dt.
\end{equation}
Differentiating $(1 - t^2)f_N(t) = 1 - (-1)^N \cos(At)$ gives $(1 - t^2)f_N'(t) = (-1)^N A\sin(At) + 2tf_N(t)$. Substituting this back into the integral brilliantly split the result into two tractable terms, $T_1(l)$ and $T_2(l)$:
\begin{equation}
    l(l+1)\gamma_l = \underbrace{\int_{-1}^1 (-1)^N A\sin(At) P_l'(t) dt}_{T_1(l)} + \underbrace{\int_{-1}^1 2t f_N(t) P_l'(t) dt}_{T_2(l)}.
\end{equation}
The AI evaluated $T_1(2j)$ via spherical Bessel functions as $-2A^2 (-1)^{N+j} j_{2j}(A)$. It expanded $T_2(2j)$ into a telescoping sum structure, yielding a clean $O(N)$ forward recurrence:
\begin{equation}
    \frac{4(2j^2 - j)}{4j+1} C_{2j} = T_1(2j) + S_R(j),
\end{equation}
where $S_R(j) = \sum_{m=0}^{j-1} (8m+2)\gamma_{2m}$ is a running sum of lower-order coefficients. 

\vspace{0.3cm}
\noindent \textbf{Class III: The Exact Analytic Solution (Method 6)}

While Methods 4 and 5 provided stable algorithmic resolutions, the crowning achievement was finding an exact analytic closed-form solution via the \textbf{Gegenbauer Method}.

The AI insightfully chose to expand $f_N(t)$ not in standard Legendre polynomials, but in the basis of Gegenbauer polynomials $C_{2m}^{(3/2)}(t)$:
\begin{equation}
    f_N(t) = \sum_{m=0}^{\infty} b_{2m} C_{2m}^{(3/2)}(t).
\end{equation}
It realized that Gegenbauer polynomials are orthogonal with respect to the specific weight $w(t) = 1-t^2$. This specific weight miraculously cancels the problematic singular denominator in $f_N(t)$ when determining the coefficients by orthogonality:
\begin{equation}
    b_{2m} = \frac{1}{h_{2m}} \int_{-1}^{1} \underbrace{\left[ \frac{1-(-1)^N \cos(N\pi t)}{1-t^2} \right]}_{f_N(t)} C_{2m}^{(3/2)}(t) \underbrace{(1-t^2)}_{\text{weight}} dt,
\end{equation}
where $h_{2m}$ is the standard normalization constant. By using the identity $C_k^{(3/2)}(t) = P_{k+1}'(t)$ and integrating by parts, the boundary terms vanish (because the numerator $1 - (-1)^N \cos(N\pi t)$ evaluates to 0 at $t = \pm 1$), and the integral elegantly reduces to the Fourier transform of Legendre polynomials:
\begin{equation}
    b_{2m} = -A (-1)^{N+m} j_{2m+1}(A) \frac{4m+3}{(2m+1)(2m+2)}.
\end{equation}

Using the expansion $C_{2m}^{(3/2)}(t) = \sum_{j=0}^{m} (4j+1) P_{2j}(t)$, the AI then relates the Legendre coefficients $C_{2j}$ (from Equation \ref{eq:spectral_sol}) to these Gegenbauer coefficients  via a tail sum:
\begin{equation}
    C_{2j} = (4j+1) \sum_{m=j}^{\infty} b_{2m},
\end{equation}
 noting a critical property: when $j=0$, the sum of all Gegenbauer coefficients simply equals $C_0$:
\begin{equation}
    C_0 = \sum_{m=0}^{\infty} b_{2m}.
\end{equation}

Finally, during the hierarchical refinement phase, the advanced Gemini model exploited a partial fraction decomposition of $\frac{1}{1-t^2} = \frac{1}{2}\left(\frac{1}{1-t} + \frac{1}{1+t}\right)$ to telescope the series and find an exact analytical expression for $C_0$:
\begin{equation}
    C_0 = \frac{1}{2} \int_{-1}^1 \frac{1 - (-1)^N \cos(N\pi t)}{1-t^2} \, dt = \frac{1}{2} \int_{0}^{2N\pi} \frac{1 - \cos(x)}{x} \, dx.
\end{equation}

Recognizing this final integral as the standard definition of the generalized cosine integral function $\text{Cin}(z) \equiv \int_0^z \frac{1 - \cos(t)}{t} dt$, the AI produced the final, exact closed-form solution:
\begin{equation} \label{eq:C0_exact}
    C_0 = \frac{1}{2} \text{Cin}(2N\pi).
\end{equation}
Combining, we obtain
\begin{equation}
    C_{2j} = (4j+1) \left(C_0 + A(-1)^N \sum_{m=0}^{j-1} (-1)^m j_{2m+1}(A) \frac{4m + 3}{(2m + 1)(2m + 2)} \right).
\end{equation}
\\\\
\textbf{Summary of Discovered Methods}

The table below summarizes the AI's discoveries and their computational tradeoffs. This AI-driven work provides a stable, closed-form analytic solution for the spectral coefficients. 

\begin{table}[h]
\centering
\begin{tabular}{@{}llll@{}}
\toprule
\textbf{Method} & \textbf{Core Technique} & \textbf{Complexity} & \textbf{Numerical Stability} \\ \midrule
1, 2, 3 & Monomial Expansions & $O(N^2)$ & \textbf{Unstable} (Catastrophic Cancellation) \\
4 & Spectral Galerkin Matrix & $O(N)$ & \textbf{Stable} (Tridiagonal SPD Matrix) \\
5 & Spectral Volterra Recurrence & $O(N)$ & \textbf{Stable} (Forward Step Recurrence) \\
6 & Gegenbauer Expansion & $O(1)$ & \textbf{Stable} (Exact Analytic Closed-Form) \\ \bottomrule
\end{tabular}
\vspace{0.2cm}
\caption{Comparison of AI-discovered methods for evaluating the cosmic string radiation integral $I(N, \alpha)$.}
\end{table}

We refer the reader to \cite{brennerFollowup} for our more recent developments. 

\newpage

\section{Algorithms and Bounds}
In this section we see the AI acting as an algorithmic optimizer—tightening mathematical bounds, removing logarithmic factors, and discovering exact optimal constants.

\subsection{Graph Theory: Biclique Partitions}\label{sec:biclique}
\textit{Written by Benjamin Przybocki and Bernardo Subercaseaux.}

\subsubsection*{Problem Context}
Biclique partitions are a well-studied topic in graph theory with applications to secret sharing, circuit design, and graph compression. Together with Andrew Krapivin and Nicol{\'a}s Sanhueza-Matamala, we recently proved that every $n$-vertex graph admits an {\it integral} biclique partition of weight at most $(\frac{1}{2}+o(1))n^{2}/\lg(n)$, matching an information-theoretic lower bound \cite{krapivin2025}. However, in the context of cryptographic secret sharing, \emph{fractional} solutions to the biclique partition problem are also studied. Previous work by Csirmaz, Ligeti, and Tardos \cite{csirmaz2015} had shown a fractional upper bound with a constant of $\frac{1}{2}$, and a fractional lower bound of $0.265$. Since we were able to match their fractional upper bound with integral solutions, a natural question was whether we could use our techniques to push the fractional upper bound strictly below $\frac{1}{2}$.

\subsubsection*{AI Contribution}
We had our paper fed to the model and asked it to resolve the open problem. 
\begin{figure}[!htb]
\begin{userturn}
Please try to improve the paper by identifying and solving the open problem from it. 
\\\\
<Math rigor prompt>
\end{userturn}
\end{figure}
While its initial proof attempt had errors, it ultimately led us to solve the problem by providing a crucial insight on how one of our lemmas could be repurposed in a way we had not seen.

\begin{itemize}
    \item \textbf{Proof Strategy:} The AI suggested a strategy based on partitioning vertices into ``good'' (degree close to $n/2$) and ``bad''. We had considered this before, so we had some understanding of the technical hurdle the AI would need to overcome to make it work.
    \item \textbf{Overcoming the Hurdle:} In the AI's proof attempt, it pointed out a crucial idea that allowed us to get better control on the proportion of bad vertices. This insight was not obvious to us as it was a consequence of a construction we had used for a different purpose.
    \item \textbf{From Insight to Proof:} The AI's initial proof attempt had errors (incorrect constants and Taylor approximations), but the core idea—that ``bad'' vertices actually help reduce the partition weight—was correct.
\end{itemize}

\subsubsection*{Technical Details: Fractional and Integral Biclique Partitions}

\begin{definition}
    Given a graph $G$, a \emph{fractional biclique partition} is a function $\mathcal{B} : \mathfrak{B} \to [0,1]$, where $\mathfrak{B}$ is the set of all complete bipartite subgraphs of $G$, such that for all $e \in E(G)$, we have 
    \[
    \sum_{B \in \mathfrak{B} } \mathbf{1}_{[e \in B]} \cdot \mathcal{B}(B) = 1.
    \]The \emph{weight} $w(\mathcal{B})$ of such a partition is $w(\mathcal{B}) := \sum_{B \in \mathfrak{B}} \mathcal{B}(B) \cdot |V(B)|$. A fractional biclique partition $\mathcal{B}$ is \emph{integral} if $\mathcal{B}(B) \in \{0,1\}$ for all $B \in \mathfrak{B}$.
\end{definition}
\noindent
The following notation will be useful for discussing the problem at hand.

\begin{definition}
    Given a graph $G$, let $\textsf{BP}(G)$ (resp., $\textsf{BP}^\ast(G)$) be the minimum weight of an integral (resp., fractional) biclique partition of $G$. Then, given $n \geq 1$, let
    \[
    \textsf{BP}(n) := \max_{G, \; |V(G)|=n} \textsf{BP}(G) \quad \text{and} \quad  \textsf{BP}^\ast(n) := \max_{G, \; |V(G)|=n} \textsf{BP}^\ast(G).
    \]
\end{definition}
\noindent
We had recently proven the following theorem regarding optimal integral biclique partitions:
\begin{theorem}[Krapivin, Przybocki, Subercaseaux, and Sanhueza-Matamala~\cite{krapivin2025}]\label{thm:bp12}
    $BP(n) \sim \frac{1}{2} \cdot \frac{n^2}{\lg n}$.
\end{theorem}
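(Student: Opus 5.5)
The plan is to prove matching upper and lower bounds of the form $(\frac12 \pm o(1))\, n^2/\lg n$. The upper bound comes from a block-wise neighborhood-pattern construction in the style of Chung--Erd\H{o}s--Spencer, with an ordering trick that saves the factor $\frac12$. The lower bound comes from an information-theoretic counting argument. I will only use the definitions of integral biclique partition and weight given above.

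\textbf{Upper bound.} Fix an $n$-vertex graph $G$ and set $k = \lfloor \lg n - 2\lg\lg n\rfloor$. Partition $V(G)$ into consecutive blocks $S_1,\dots,S_{\lceil n/k\rceil}$, each of size at most $k$.
\begin{itemize}
\item \emph{Edges between blocks.} For each block $S_i$ and each vertex $v$ in a later block $S_j$ with $j>i$, record the pattern $N(v)\cap S_i$. For every nonempty $T\subseteq S_i$, form the biclique between $T$ and the set of later vertices with pattern exactly $T$. This partitions every edge with endpoints in blocks $i<j$ exactly once.
\item \emph{Cost of block $i$.} The weight is at most $\sum_{T\subseteq S_i}|T| + (n - ik) \le k2^k + n - ik$.
\item \emph{Summing over blocks.} The total is at most $\frac{n}{k}\cdot k 2^k + \sum_i (n-ik) \le n2^k + \frac{n^2}{2k} + O(n)$.
\item \emph{Edges inside blocks.} Cover these by single-edge bicliques, costing at most $2\binom{k}{2}$ per block and $O(nk)$ in total.
\end{itemize}
Since $n2^k \le n^2/\lg^2 n = o(n^2/\lg n)$ and $k = (1-o(1))\lg n$, this gives $\textsf{BP}(n)\le (\frac12+o(1))\,n^2/\lg n$.

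\textbf{Lower bound.} There are $2^{\binom n2}$ labeled graphs on $[n]$, and an integral biclique partition determines its graph (the graph is the union of the bicliques). So it suffices to show that every partition of weight $w$ can be encoded injectively with $w(\lg n + O(1))$ bits. The encoding lists the bicliques one after another. Each vertex occurrence is written as its label ($\lceil\lg n\rceil$ bits), one bit saying which side of the biclique it lies on, and one bit saying whether it is the last vertex of the current biclique. This uses exactly $w(\lceil \lg n\rceil+2)$ bits and avoids separator symbols. Encodings of different lengths are distinguishable, since the length is $w(\lceil\lg n\rceil+2)$ for a known $w$. Hence the number of graphs with $\textsf{BP}(G)\le w$ is at most $\sum_{w'\le w}2^{w'(\lg n+3)} \le 2^{w(\lg n+3)+1}$. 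Taking $w = \textsf{BP}(n)$, every graph satisfies $\textsf{BP}(G)\le w$, so $w(\lg n+3)+1 \ge \binom n2$. This gives $\textsf{BP}(n)\ge(\frac12-o(1))\,n^2/\lg n$.

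\textbf{Main obstacle.} The delicate point is getting the constant $\frac12$ exactly on both sides.
\begin{itemize}
\item In the upper bound, the ordering step is essential. Using only later vertices halves the $n^2/k$ vertex-side cost. The choice of $k$ must be slightly below $\lg n$ so that the pattern-side cost $n2^k$ stays negligible while still $k\sim\lg n$.
\item In the lower bound, a naive separator encoding pays roughly $\lg n$ bits per biclique. When there are many tiny bicliques, such as single edges of weight $2$, this can double the cost. The per-occurrence side and end bits are what keep the overhead at $O(1)$ per unit of weight.
\end{itemize}
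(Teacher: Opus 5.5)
Your argument is correct on both sides. The ordered block construction covers each cross-block edge exactly once. Its cost is $n^2/(2k)+O(n2^k+nk)$, which is $(\frac12+o(1))\,n^2/\lg n$ for $k=\lg n-2\lg\lg n$. The self-delimiting code with $\lceil\lg n\rceil+2$ bits per vertex occurrence yields $\textsf{BP}(n)(\lg n+3)+1\ge\binom n2$.

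The paper does not prove this statement; it imports it from \cite{krapivin2025}. It only indicates that the lower bound there is the same information-theoretic count. The upper bound appears there alongside considerably stronger results: a bound on the number of bicliques through each individual vertex, and the density-aware bounds of \Cref{thm:density-aware,thm:density-aware-local}, which carry an entropy factor $h_2(\cdot)$.

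Your route is the classical pattern-block construction. It is much shorter and is all that is needed for the total-weight asymptotics, because the constant $\frac12$ only arises after averaging over the ordering. That is also its limitation. A vertex in the last block lies in about $n/k\approx n/\lg n$ bicliques, twice the average load. The vertex-side cost $\sum_i (n-ik)\approx n^2/(2k)$ also does not shrink when the density is far from $\frac12$.

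The cited machinery buys exactly these refinements, and the rest of the section depends on them: \Cref{prop:fractional-cover} is a modification of the per-vertex construction behind \Cref{thm:density-aware-local}. So your proof settles this theorem but could not substitute for the cited results elsewhere in the section.
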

\noindent
In turn, the best bounds for fractional biclique partitions were:
\begin{theorem}[Csirmaz, Ligeti, and Tardos~\cite{csirmaz2015}] \label{thm-fractional}
    $(0.265 - o(1)) \frac{n^2}{\lg n} \le BP^\ast(n) \le (\frac{1}{2} + o(1)) \cdot \frac{n^2}{\lg n}$.
\end{theorem}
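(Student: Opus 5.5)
The upper bound needs no new work. Every integral biclique partition is in particular a fractional one, so $\textsf{BP}^\ast(G)\le \textsf{BP}(G)$ for every graph $G$. Taking the maximum over $n$-vertex graphs gives $\textsf{BP}^\ast(n)\le \textsf{BP}(n)$, and Theorem~\ref{thm:bp12} then yields $\textsf{BP}^\ast(n)\le(\frac12+o(1))\,n^2/\lg n$. (Csirmaz, Ligeti and Tardos originally proved this bound directly, but in the present context it is a corollary of the integral result.)

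For the lower bound I would use LP duality together with a random graph. The fractional biclique partition problem is a covering LP: minimise $\sum_B \mathcal{B}(B)|V(B)|$ subject to every edge being covered with total weight exactly $1$. Relaxing the equality to ``at least $1$'' can only lower the optimum, so weak duality still applies. It says that any edge weights $y:E(G)\to\R$ satisfying
\[
\sum_{e\in E(B)} y_e\le |V(B)| \quad\text{for every complete bipartite subgraph } B
\]
give $\textsf{BP}^\ast(G)\ge\sum_e y_e$. I would take uniform weights $y_e=c$. The dual constraint then becomes $c\,ab\le a+b$ for every $K_{a,b}\subseteq G$, so I may take $c=\bigl(\max_{K_{a,b}\subseteq G} \tfrac{ab}{a+b}\bigr)^{-1}$. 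The resulting bound is $\textsf{BP}^\ast(G)\ge |E(G)|\cdot c$.

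Next I take $G\sim G(n,p)$ with $p$ a constant to be optimised. The key estimate is that, with high probability, every $K_{a,b}\subseteq G$ satisfies
\[
\frac{ab}{a+b}\le(1+o(1))\frac{\lg n}{\lg(1/p)}.
\]
I would prove this by a first-moment bound: the expected number of copies of $K_{a,b}$ is at most $n^{a+b}p^{ab}=2^{(a+b)\lg n-ab\lg(1/p)}$. Whenever $ab\lg(1/p)\ge(1+\eps)(a+b)\lg n$, this is at most $2^{-\eps(a+b)\lg n}$. Summing over all pairs $(a,b)$ gives $o(1)$. Bicliques with $ab/(a+b)=O(1)$, such as stars of degree about $pn$ with ratio about $1$, satisfy the bound trivially. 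Since $|E(G)|=(1+o(1))pn^2/2$ with high probability, we get
\[
\textsf{BP}^\ast(n)\ge(1-o(1))\frac{p\lg(1/p)}{2}\cdot\frac{n^2}{\lg n}.
\]

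Finally I optimise over $p$. The function $p\lg(1/p)$ is maximised at $p=1/e$, where it equals $\lg e/e\approx0.5307$. This gives the constant $\approx0.2653>0.265$. The point to watch is that the uniform choice is genuinely optimal only because the extremal bicliques, the balanced ones with $a=b\approx 2\log_{1/p}n$, attain the ratio bound. One must check that the union bound is tight enough uniformly over all shapes $(a,b)$, including very unbalanced ones. This is where the lower-order $\lg(e/a)$-type corrections from $\binom{n}{a}$ must be absorbed into the $o(1)$.
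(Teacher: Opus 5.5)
Your proof is correct. The paper itself gives no proof of this statement: it is quoted from Csirmaz, Ligeti and Tardos, so the only meaningful comparison is with that source.

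\textbf{Upper bound.} Deducing $\textsf{BP}^\ast(n)\le\textsf{BP}(n)$ and then applying Theorem~\ref{thm:bp12} is a legitimate derivation inside this paper. It does invert the history, though. CLT constructed fractional partitions directly, and the integral bound of Theorem~\ref{thm:bp12} came later and is strictly stronger; the text says it \emph{matched} the fractional bound. So your route is the cheapest one available here, but as a proof of the cited theorem it hands all the work to a harder, later result.

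\textbf{Lower bound.} Weak duality with uniform edge weights on $G(n,1/e)$, plus a first-moment count of bicliques, gives exactly $\lg e/(2e)\approx 0.2654$. That is the $0.265$ in the statement, so this is the natural argument and, judging by the constant, presumably the original one. The same computation also covers fractional covers by arbitrary complete multipartite subgraphs: a copy with parts $a_1,\dots,a_k$ has expected count at most $n^{\sum_i a_i}p^{\sum_{i<j}a_ia_j}$.

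\textbf{Your closing concern is unnecessary.} The estimate $\binom{n}{a}\binom{n-a}{b}\le n^{a+b}$ is already an upper bound on the count. The condition $ab\lg(1/p)\ge(1+\varepsilon)(a+b)\lg n$ is literally the ratio condition. Hence $\sum_{a,b\ge 1}n^{-\varepsilon(a+b)}=O(n^{-2\varepsilon})$ handles every shape uniformly, unbalanced ones included, and there are no $\lg(e/a)$ corrections to absorb. Moreover, since $0.2654>0.265$, a fixed small $\varepsilon$ suffices; you do not need $\varepsilon\to 0$.

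\textbf{One small precision.} In the covering relaxation the dual variables must be nonnegative. Alternatively, apply weak duality directly to the equality constraints, where the sign of $y$ is irrelevant. With $y_e=c>0$ this point is moot.
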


Importantly, Csirmaz, Ligeti, and Tardos also proved that on graphs of large minimum degree, a better upper bound can be obtained:

\begin{theorem}[C.L.T.~\cite{csirmaz2015}]\label{thm:clt-2}
Let $p \in [0,1]$ be a fixed constant, and let $G$ be a graph on $n$ vertices such that $\deg(v) \geq p \cdot n$ for every $v \in V(G)$. Then,
\[
\textsf{BP}^\ast(G) \leq (0.725\cdot (1-p) + o(1)) \frac{n^2}{\lg n}.
\]
\end{theorem}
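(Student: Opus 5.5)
The plan is to adapt the standard ``group and classify'' construction behind the $n^2/\lg n$ bounds (as in Theorem~\ref{thm-fractional}), and to use the minimum degree condition to make the per-group cost depend only on \emph{non}-adjacencies. Concretely, I would randomly partition $V(G)$ into groups $X_1,\dots,X_{n/k}$ of size $k=c\lg n$, with the constant $c$ to be optimized at the end. Edges inside a group cost only $O(k^2)$ per group, which totals $O(nk)=o(n^2/\lg n)$, so they are negligible. For a fixed group $X$ and a vertex $y\notin X$, the edges from $y$ to $X$ are determined by the trace $N(y)\cap X$. Since $\deg(y)\ge pn$, a Chernoff bound together with a union bound over all pairs $(y,X)$ shows that, for a suitable random partition, every trace misses at most $(1-p+o(1))k$ vertices of $X$. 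Equivalently, each complement trace $\overline{N}_X(y):=X\setminus N(y)$ is small.

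Within one group, I would cover the edges between $X$ and $V\setminus X$ with bicliques of the form $S\times Y_S$. Here $S\subseteq X$ and $Y_S$ is the class of outside vertices whose trace contains $S$ (or is exactly $S$). The fractional freedom enters by assigning each biclique a weight $w(S,Y_S)$ that depends only on $|X\setminus S|$, which measures how many non-neighbours are excluded. I would then solve the resulting triangular system so that every edge $xy$ receives total weight exactly $1$. This is possible because the constraint for $xy$ depends only on $|\overline{N}_X(y)|\le(1-p)k$. The cost of one group then splits into two terms. The first is a ``vertex side'' term $\sum_S w\,|S|$, paid once per realised pattern; by the bound on complement traces, the number of realised patterns is at most $\sum_{j\le (1-p)k}\binom{k}{j}$. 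The second is a ``$Y$ side'' term, paid once per outside vertex and equal to $n\cdot\sum(\text{weights touching } y)$.

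Summing over the $n/k$ groups, the total weight is roughly
\[
\frac{n}{k}\Bigl(\alpha\, n+k\,2^{H(1-p)k}\Bigr),
\]
where $\alpha$ is the per-vertex fractional load and $H$ denotes binary entropy. Choosing $k$ so that the pattern term is $o(n^2/\lg n)$ means $k\approx \lg n/H(1-p)$. Balancing against the fractional load then gives a bound of the form $\gamma(p)\,n^2/\lg n$. The final step is to show that $\gamma(p)\le 0.725(1-p)$, presumably after splitting patterns by the size of their complement trace and optimizing $c$ within each size class. I expect the constant $0.725$ to come out of exactly this one-variable optimization.

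I expect the main obstacle to be the fractional weighting itself. A naive inclusion--exclusion choice collapses to stars, with weights $w_1=1$ and all higher weights zero, which gives only the integral bound. To obtain the gain I would first try weights supported on bicliques that \emph{exclude} a prescribed set of non-neighbours, so that every vertex $y$ with $|\overline{N}_X(y)|\le(1-p)k$ is charged proportionally to $(1-p)$ rather than to $1$. The key checks are that all weights stay nonnegative and that the edge-sum constraints hold exactly.
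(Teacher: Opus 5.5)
There is a genuine gap. The fractional weighting, which is the whole content of the theorem, is never constructed, and the framework you set up cannot produce the bound.

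First, your triangular system has no useful nonnegative solution. Take $Y_S=\{y: S\subseteq N(y)\cap X\}$ and a weight $w_s$ depending only on $s=|S|$ (equivalently on $|X\setminus S|$). An edge $xy$ with $|N(y)\cap X|=t$ then receives $F(t)=\sum_s\binom{t-1}{s-1}w_s$. If two consecutive sizes $t,t+1$ occur, as they do in general, then $F(t+1)-F(t)=\sum_s\binom{t-1}{s-2}w_s=0$. This forces $w_s=0$ for $2\le s\le t+1$, so only stars survive. The ``exclude a prescribed set of non-neighbours'' variant collapses in the same way.

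Second, and more fundamentally, you cannot charge $y$ ``proportionally to $(1-p)$ rather than to $1$'' within a group. The bicliques of $X$'s family that contain $y$ must give every single $x\in N(y)\cap X$ total weight exactly $1$, so their total weight is at least $1$. Hence your $Y$-side costs at least about $n^2/(2k)$.
\begin{itemize}
\item With $k\approx\lg n/H(1-p)$, the best possible is about $\frac{1}{2}H(1-p)\,n^2/\lg n$. At $p=0.9$ this is $0.23$, against the target $0.0725$.
\item For $p<1/2$ it is worse. There $\sum_{j\le(1-p)k}\binom{k}{j}=2^{(1-o(1))k}$, not $2^{H(1-p)k}$, so pattern counting forces $k\lesssim\lg n$. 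That gives a cost of at least $(\frac{1}{2}-o(1))n^2/\lg n$, which exceeds the target once $p>0.31$.
\end{itemize}
No choice of $c$ repairs this. Reaching $0.725(1-p)$ requires $k\ge\lg n/(1.45(1-p))$. The fractional weights must then be spent on making the $S$-side cheap for such large groups, not on the $Y$-side.

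The paper only cites this theorem, so here is a route that works: make the groups superlogarithmic and let the weights depend on the class of $y$.
\begin{itemize}
\item Partition $V$ arbitrarily into groups of size $k=\lceil\lg^3 n\rceil$. Cover edges inside groups by stars, at cost $O(nk)$.
\item For a group $X$ and each $t$, let $C_t$ be the vertices $y$ of later groups with $|N(y)\cap X|=t$, and put $u=k-t$. Choose $s=s(t)\le t$.
\item For every $s$-subset $S\subseteq X$, give weight $\binom{t-1}{s-1}^{-1}$ to the biclique $S\times\{y\in C_t: S\subseteq N(y)\}$.
\item Every edge $xy$ lies in exactly $\binom{t-1}{s-1}$ of these bicliques, so coverage is exact.
\item Each $y\in C_t$ lies in $\binom{t}{s}$ of them and pays $t/s$.
\item The $S$-side of the class costs at most $t\binom{k}{s}/\binom{t}{s}=t\binom{k}{u}/\binom{k-s}{u}\le t\,e^{su/(t+1-s)}$.
\end{itemize}
Now set $\Lambda=\ln n-5\ln\lg n$ and $s\approx(t+1)\Lambda/(u+\Lambda)$, capped at $t$. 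Then each class's $S$-side costs at most $t\,n/\lg^5 n$, so all $S$-sides together cost $O(n^2/\lg^2 n)$. Meanwhile $y$ pays $t/s\lesssim 1+u/\Lambda$.

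Sum over all (group, later vertex) pairs. The $1$'s contribute $O(n^2/k)$. The $u$'s count each non-adjacent pair lying in different groups exactly once, which is at most $(1-p)n^2/2$. This gives
\[
\textsf{BP}^\ast(G)\le\Bigl(\frac{1-p}{2\ln 2}+o(1)\Bigr)\frac{n^2}{\lg n},
\]
and $\frac{1}{2\ln 2}\approx 0.7213<0.725$. Pairs with very small $t$, where rounding $s$ matters, are handled by the cruder bound $\binom{k}{s}/\binom{t}{s}\le(k/(t-s+1))^s$ and contribute $o(n^2/\lg n)$.

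No concentration over a random partition is needed. That matters, because with $k=\Theta(\lg n)$ your union bound over all pairs only controls deviations of order a constant times $k$, not $o(k)$.
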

In particular, for any graph $G$ with minimum degree at least $0.32 n$ (such as a $G(n, \frac{1}{2})$ graph, w.h.p.), this gives an upper bound strictly better than \Cref{thm-fractional}. On the other hand, for graphs whose edge density is far from $0.5$, we also have an improved upper bound, even for integral biclique partitions:
\begin{theorem}[K.P.S.S.-M.~\cite{krapivin2025}]\label{thm:density-aware}
    Let $\gamma \in (0, 1)$ be such that $\max\{\gamma^{-1}, (1-\gamma)^{-1}\} = n^{o(1)}$. Then, given a graph $G$ of density $\gamma$, we have
    \(
        BP(G) \le (\frac{1}{2} + o(1)) \cdot h_2(\gamma) \frac{n^2}{\lg n},
    \)
    where $h_2(x) := -x \lg x -(1-x)\lg (1-x)$ is the binary entropy function.
\end{theorem}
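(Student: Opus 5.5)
The plan is to adapt the block-decomposition argument behind \Cref{thm:bp12} and choose the block size according to the density. Order the vertices and cut them into consecutive blocks $V_1,\dots,V_{n/k}$ of size $k$. For block $V_i$ and each vertex $v$ in a later block, let $A_i(v)=N(v)\cap V_i$. For each nonempty pattern $A\subseteq V_i$, take the biclique $A\times\{v : A_i(v)=A\}$. These bicliques partition all edges between $V_i$ and later blocks. Edges inside blocks are covered by single edges, at total cost $O(nk)=o(n^2/\lg n)$.

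The weight coming from block $i$ splits into two parts. The right-side cost is at most the number of later vertices, about $n-ik$. The left-side cost is at most $k$ times the number of distinct nonempty patterns that occur. Summing the right-side costs over blocks gives $(1+o(1))\,n^2/(2k)$; this is the source of the factor $\tfrac12$. Taking
\[
k=(1-\varepsilon)\frac{\lg n}{h_2(\gamma)}
\]
then gives the target $(\tfrac12+O(\varepsilon))\,h_2(\gamma)\,n^2/\lg n$, provided the left-side cost is $o(n^2/\lg n)$. Equivalently, the average number of distinct patterns per block must be $o(n/k)$. The hypothesis $\max\{\gamma^{-1},(1-\gamma)^{-1}\}=n^{o(1)}$ keeps $k=\omega(1)$ and $k\le n^{o(1)}$, so the $o(\cdot)$ terms are valid uniformly.

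To bound the patterns, I would group the pairs (block, later vertex) by the Hamming weight $w=|A_i(v)|$. The number of distinct patterns of weight $w$ in block $i$ is at most $\min\{N_{i,w},\binom{k}{w}\}$, where $N_{i,w}$ is the number of later vertices whose pattern has weight $w$. For $w$ near $\gamma k$ we have $\binom{k}{w}\le 2^{h_2(\gamma)k+o(k)}=n^{1-\varepsilon+o(1)}$, which is negligible. The edges fix the total weight: $\sum_{i,w} wN_{i,w}\le |E|=\gamma\binom n2$. Randomizing the vertex order makes the expected density between each block and the later vertices equal to $\gamma$.

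The main obstacle is that the global density does not control individual degrees. An irregular graph can have many pairs whose patterns have weight far from $\gamma k$, and those weight classes have binomial counts much larger than $n$. I would first try the following. Bound the left-side contribution of the weight classes by $k\sum_{w}\min\{N_{i,w},\, 2^{h_2(w/k)k}\}$. Then argue that whenever $2^{h_2(w/k)k}$ exceeds $n^{1-\varepsilon/2}$, the vertices in that class also pay at least as much on the right side. Since $h_2$ is concave and the average weight is $\gamma k$ (Jensen), the total right-plus-left cost is then still at most $(\tfrac12+O(\varepsilon))\,h_2(\gamma)\,n^2/\lg n$.

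If this direct charging fails, the fallback is to partition the vertices into $O(1/\varepsilon)$ degree classes. Inside each class pair the patterns are concentrated, so the block construction can be run there with a class-specific block size $k_{j}\approx \lg n/h_2(\gamma_{j})$. The per-class bounds are then recombined using concavity of $h_2$, which gives total weight at most $\tfrac12 h_2(\gamma)$ times $n^2/\lg n$ up to $1+o(1)$, after letting $\varepsilon\to0$ slowly.
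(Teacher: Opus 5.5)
There is a genuine gap, and it sits exactly at the obstacle you flagged.

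\textbf{The first plan fails.} The charging claim is false. In a weight class whose patterns are mostly distinct, each pair (block, later vertex) pays $|A_i(v)|=w$ on the left but only $1$ on the right. So that class costs about $wN_{i,w}$ on the left against $N_{i,w}$ on the right, which is short by a factor $w=\Theta(\lg n)$, not ``at least as much''.

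This is not an accounting artifact: with one global $k=(1-\varepsilon)\lg n/h_2(\gamma)$ the construction itself fails. Take $\gamma=0.01$ and let $G$ be a random graph of density $\tfrac12$ on a set $S$ of $\sqrt{2\gamma}\,n$ vertices, with all other vertices isolated, so $|E|\approx\gamma\binom n2$. Then $k\approx 12.4(1-\varepsilon)\lg n$. A block of your random order meets $S$ in about $\sqrt{2\gamma}\,k\approx 1.75(1-\varepsilon)\lg n$ vertices. The patterns of later vertices of $S$ are therefore uniform over $n^{1.75(1-\varepsilon)}\gg n$ possibilities, so almost all are distinct and the bicliques degenerate into stars. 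The left cost is then $(1-o(1))|E|=\Theta(n^2)$, far above $h_2(\gamma)n^2/\lg n$.

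Jensen over pattern weights cannot repair this. With a fixed $k$, the cost of a pair as a function of $w$ is about $1$ for light $w$ and about $w$ once $\binom kw$ exceeds the number of later vertices, which is not concave. Randomizing the order fixes only the mean weight, not its spread across vertices.

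\textbf{The fallback fails for the same reason, one level down.} The claim that patterns are concentrated inside each class pair is unjustified and false in general. The degree of $v$ controls its density into a \emph{random} set, not into a prescribed class. Vertices of equal degree can put their neighbourhoods in different classes, so the bipartite graph between two degree classes can be as irregular as $G$. One block size per pair, or per class with blocks cut inside a class, meets the same obstruction.

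\textbf{The missing idea} is to attach the block size to the \emph{later vertex} and to cut blocks from the whole randomly ordered vertex set.
\begin{itemize}
\item For each degree class $j$, with representative density $\gamma_j$, cut the random order $\pi$ into its own consecutive blocks of size $k_j=(1-\varepsilon)\lg n/h_2(\gamma_j)$.
\item Cover each edge $uv$ with $u$ before $v$ through the pattern of $v$ on the block of $v$'s class system that contains $u$, grouping later class-$j$ vertices by pattern as before. If that block is not entirely before $v$, use the single edge.
\item The weight of $v$'s pattern on a block is then hypergeometric with mean about $(\deg(v)/n)k_j$, so it concentrates. For fine enough classes, each block carries at most $n^{1-\varepsilon/2}$ typical patterns and the left cost is negligible.
\item On the right, $v$ pays at most $\pi(v)/k_j\approx \pi(v)h_2(\deg(v)/n)/((1-\varepsilon)\lg n)$.
\end{itemize}
Averaging with $\mathbb{E}\,\pi(v)=(n+1)/2$ gives $BP(G)\le(\tfrac12+o(1))\frac{n}{\lg n}\sum_v\bigl(h_2(\deg(v)/n)+h_2(\gamma_0)\bigr)$, where degree densities outside $[\gamma_0,1-\gamma_0]$ are clipped. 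This is an integral counterpart of \Cref{prop:fractional-cover}.

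Only now apply Jensen, and apply it to the degrees: $\frac1n\sum_v h_2(\deg(v)/n)\le h_2(2|E|/n^2)\le(1+o(1))h_2(\gamma)$. The hypothesis on $\gamma$ is what permits a threshold $\gamma_0$ with $h_2(\gamma_0)=o(h_2(\gamma))$ and still $\gamma_0^{-1}=n^{o(1)}$. That choice keeps every $k_j$ at $n^{o(1)}$, so the leftover single edges cost $o(h_2(\gamma)n^2/\lg n)$. It also makes every mean $\gamma_jk_j$ tend to infinity, which the concentration step needs.
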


Moreover, we also proved a similar result that applies when the degree of every vertex is bounded away from $0.5$, even if the global density is close to $0.5$:
\begin{theorem}[K.P.S.S-M.~\cite{krapivin2025}]\label{thm:density-aware-local}
    Let $\dot{\gamma} \in (0, 1)$ be such that $\dot{\gamma}^{-1} = n^{o(1)}$. Then, given a graph $G$ such that $\min\{d(v)/n, 1 - d(v)/n\} \le \dot{\gamma}$, we have
    \(
        BP(G) \le (\frac{1}{2} + o(1)) \cdot h_2(\dot{\gamma}) \frac{n^2}{\lg n}.
    \)
\end{theorem}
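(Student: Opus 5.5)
The plan is to adapt the ``pattern grouping'' construction behind \Cref{thm:bp12} and \Cref{thm:density-aware}. The point is that the degree hypothesis limits how many neighbourhood patterns can appear inside a small block. Write $\gamma := \dot\gamma$ and fix a small slack $\delta>0$. Set
\[
t := (1-\delta)\frac{\lg n}{h_2((1+\delta)\gamma)} .
\]
Because $\gamma^{-1}=n^{o(1)}$, we have $h_2(\gamma)\ge \gamma \ge n^{-o(1)}$, so $t\le n^{o(1)}$, and $\gamma t\to\infty$. Partition $V$ uniformly at random into blocks $P_1,\dots,P_{n/t}$ of size $t$. For a vertex $v$ lying in block $P_a$, we only handle edges from $v$ to blocks $P_b$ with $b<a$. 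Edges inside a block are covered by stars, at total cost $O(nt)=o(n^2h_2(\gamma)/\lg n)$.

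Fix a block $P_b$. For each vertex $v$ in a later block, let $X_v:=N(v)\cap P_b$. For each subset $X\subseteq P_b$ that occurs, take the biclique between $X$ and $\{v : X_v=X\}$. These bicliques partition the edges between $P_b$ and the later vertices. The weight of this family is at most $(\#\text{later vertices}) + t\cdot(\#\text{distinct patterns})$. Summing the first term over $b$ gives $\sum_b (n-bt)\approx n^2/(2t)$. This is the source of the factor $\tfrac12$, exactly as in \Cref{thm:bp12}. For the second term, call a pair $(v,P_b)$ \emph{good} if $|X_v|/t$ is within a factor $1\pm\delta$ of $d(v)/n$ (or of $1-d(v)/n$ for high-degree vertices, in which case we look at the complement). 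By the hypothesis $\min\{d(v)/n,1-d(v)/n\}\le\gamma$, every good pattern is a subset of density at most $(1+\delta)\gamma$ or at least $1-(1+\delta)\gamma$. There are at most $2\cdot 2^{h_2((1+\delta)\gamma)t}\le 2n^{1-\delta}$ such subsets. Hence the pattern cost per block is $O(t\,n^{1-\delta})$, and the total over all blocks is $O(n^{2-\delta})$, which is negligible. Plugging in $t$, the main term becomes
\[
\frac{n^2}{2t}=\frac{1}{2}\cdot\frac{h_2((1+\delta)\gamma)}{1-\delta}\cdot\frac{n^2}{\lg n},
\]
which is $(\tfrac12+o(1))h_2(\gamma)n^2/\lg n$ once $\delta\to0$ slowly, using the continuity $h_2((1+\delta)\gamma)/h_2(\gamma)\to1$ uniformly in $\gamma$.

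The remaining task, and the step I expect to be the main obstacle, is the \emph{bad} pairs, where a vertex's neighbourhood inside some block is not sparse even though its global degree is. The multiplicative Chernoff bound gives
\[
\Pr[(v,P_b)\text{ bad}]\le q:=\exp(-\Omega(\delta^2\gamma t)) ,
\]
which tends to $0$ since $\gamma t\gtrsim \lg n/\lg(1/\gamma)\to\infty$. This is only $o(1)$, not polynomially small, so a union bound fails, and covering bad pairs naively by stars would cost $\Theta(q n^2)$, which is too much.

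My first attempt is to route the bad pairs through a second, unrestricted instance of the same grouping scheme. It uses blocks of size $t'=(1-\delta)\lg n$ and all $2^{t'}$ patterns, and is applied only to the edge set $E_{\mathrm{bad}}$ coming from bad pairs. Its vertex-side cost is charged only to vertices with at least one bad pair, so it contributes about $q\,n^2/t'=O(q\,n^2/\lg n)$. This suffices whenever $q=o(h_2(\gamma))$. If $\gamma$ is so small that this fails, I would fall back on a deterministic choice of partition: pick the partition greedily, or via conditional expectations, so that the \emph{weighted} count of bad pairs is at most its expectation. Alternatively, I would replace the per-pair density test by an entropy count of the number of patterns a block can exhibit in total, using that $\sum_b |X_v|=d(v)$. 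That count bounds the number of distinct patterns by $2^{(h_2(\gamma)+o(1))t}$ on average even without per-block concentration. Once the bad pairs are handled, combining the three cost contributions (intra-block stars, good patterns, bad pairs) gives the statement.
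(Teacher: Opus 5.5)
The gap is in how you handle bad pairs, and it leaves part of the admissible range of $\dot\gamma$ unproved.

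Your first attempt replaces each bad pair $(v,P_b)$ by about $t/t'\approx 1/h_2(\dot\gamma)$ vertex-side incidences, so it needs $q=o(h_2(\dot\gamma))$. Even that accounting only works if the level-two blocks are sub-blocks of the bad level-one blocks. Charging only vertices that have a bad pair saves nothing, since in a near-regular graph a typical vertex has about $qn/t\to\infty$ bad pairs.

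With $q=e^{-\Theta(\delta^2\dot\gamma t)}$ and $\dot\gamma t\approx \lg n/\lg(1/\dot\gamma)$, the condition $q=o(h_2(\dot\gamma))$ amounts to $\lg^2(1/\dot\gamma)\lesssim\delta^2\lg n$. This fails for, e.g., $\dot\gamma=2^{-(\lg n)^{2/3}}$, even though that $\dot\gamma$ satisfies $\dot\gamma^{-1}=n^{o(1)}$.

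Neither fallback closes this. Conditional expectations only certify a partition no worse than the expectation, and the expectation is exactly what is too large. The entropy argument (Jensen for $h_2$ together with $\sum_b |N(v)\cap P_b|=d(v)$) bounds the total description length of one vertex's patterns over all blocks. It does not bound the number of \emph{distinct} patterns occurring in a single block, which can be $\Theta(qn)\gg n^{1-\delta}$.

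Your cost accounting also hides where the real obstruction lies. On the pattern side a bad pattern costs $|X_v|$, not $t$. Define ``good'' one-sidedly as $|X_v|\le(1+\delta)\dot\gamma t$; two-sided closeness to $d(v)/n$ fails trivially for tiny degrees. Then for a low-degree vertex, Cauchy--Schwarz gives $\mathbb{E}[|X_v|\mathbf{1}_{\mathrm{bad}}]\le\sqrt{2q}\,\dot\gamma t$. So low-degree bad pairs cost $O(\sqrt{q}\,\dot\gamma n^2)$ in total, which is $o(h_2(\dot\gamma)n^2/\lg n)$ once $\delta^2\dot\gamma t\gg\ln(\dot\gamma t)$. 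The genuine problem is high-degree vertices, whose bad patterns have size close to $t$.

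The missing idea is to refine a bad pair adaptively inside its own block, rather than jumping straight to blocks of size $\lg n$. Fix a random dyadic hierarchy inside each block. Let $\theta_m$ be the largest integer with $\sum_{k\le\theta_m}\binom{m}{k}\le n^{1-\delta}$. A node of size $m$ accepts a pattern if its size, or the size of its complement within the node, is at most $\theta_m$; a rejected pair passes to the node's two children.

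Since $h_2(x)/x$ is decreasing, a node of size $t/2^j$ tolerates density at least $2^j(1+\delta)\dot\gamma$. By Chernoff it is therefore rejected with probability at most $\exp(-(1-o(1))\dot\gamma t\,(j\ln 2-1+2^{-j}))$ for $j\ge1$. Each pair then incurs at most $1+q+O(e^{-c\dot\gamma t})$ vertex-side incidences in expectation, with $c=\ln 2-\tfrac12$. The pattern side costs $O(n^{2-\delta}\lg t)$, because each node accepts at most $2n^{1-\delta}$ patterns and the nodes at each depth partition $V$. With this change only $q=o(1)$ is needed, and that holds throughout the range $\dot\gamma^{-1}=n^{o(1)}$.
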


Therefore, in light of \Cref{thm:clt-2,thm:density-aware,thm:density-aware-local}, if it is not possible to improve the $\frac{1}{2}$ constant from \Cref{thm-fractional}, then the problematic graphs must have a particular structure: (i) a constant fraction of vertices of degree at most $0.32n$, (ii) edge density $0.5$, and (iii) a constant fraction of vertices whose degree is approximately $\frac{n}{2}$.

Naturally, we thought of decomposing problematic graphs into different pieces that could be separately handled by the three aforementioned results. Roughly, the idea was to partition the vertices of a given graph into those whose degree is close to $\frac{n}{2}$ (call these $V_{\text{good}}$) and those whose degree is far from $\frac{n}{2}$ (call these $V_{\text{bad}}$). Our hope was to use Csirmaz, Ligeti, and Tardos's result to construct a biclique partition for $G[V_{\text{good}}]$ and then try to construct a biclique partition for the remaining edges using a different method. But, in order for this to work, we needed $V_{\text{bad}}$ to be a small proportion of the vertices, and it did not seem possible to ensure this with our strategy. We therefore left this as an open problem.

The AI model was fed our paper, and at our suggestion, was asked to resolve this open problem.\footnote{Initially, the AI model tried a different open problem from our paper, but its response did not seem very promising to us, and we were more interested in this problem anyway.}
The proof attempt generated by the AI was similar to the strategy outlined in the previous paragraph, which was impressive given that we did not share our ideas with the AI. But there was one crucial difference. The AI noticed that the construction we used for \Cref{thm:density-aware-local} actually implies that we can improve the $\frac{1}{2}$ constant from \Cref{thm-fractional} whenever the average of $|\deg(v)/n-\frac{1}{2}|$ is bounded away from 0. We neglected to consider this ourselves, since \Cref{thm:density-aware-local}, as stated here, is in fact a corollary of a stronger theorem in our paper bounding the maximum number of bicliques any given vertex belongs to; for this stronger result, it was necessary to assume that $|\deg(v)/n-\frac{1}{2}|$ is bounded away from 0 for \emph{every} vertex rather than merely on average.

However, despite the promising high-level strategy, the AI's proof attempt contained several mistakes, which did not seem to be fixable by local modifications of the proof attempt at least as written. For example, the AI asserted that some expression is $\approx 0.31966$ when it is actually $\approx 0.13933$, and it had a nonsensical asymptotic expression $Ax+x^{3}/2+O(x^{2})$, where $x$ is a constant. Nevertheless, the AI's observation in the previous paragraph turned out to be exactly what we needed to overcome the difficulties we had encountered in our first attempt, and we proved that every $n$-vertex graph admits a fractional biclique partition of weight at most $(0.4999+o(1))n^{2}/\lg(n)$. We made no attempt to optimize the constant $0.4999$; in fact, it seems to us that where the AI went wrong is trying too aggressively to optimize the constant in its proof, which resulted in overly complicated algebraic calculations that the AI struggled to accurately manipulate.

\paragraph{Final Proof}
For technical reasons, \Cref{thm:density-aware,thm:density-aware-local} require $\gamma$ or $\dot{\gamma}$ to not be too close to 0 or 1. For similar reasons, it is convenient to define a ``clipped'' version of the binary entropy function, which allows us to state \Cref{prop:fractional-cover}, a lemma corresponding to the AI's central insight mentioned above. The AI originally stated the lemma using the non-clipped binary entropy function, although the lemma is incorrect when stated that way. Interestingly, when the AI was asked the provide a rigorous proof of the lemma, it realized it could not be true as stated, although it did not discover that the clipped binary entropy function provides a simple way to salvage it that suffices for its application.

\begin{definition}
    Let $h_2(x) := -x \lg x -(1-x) \lg (1-x)$ be the binary entropy function, and let $h_2^\dagger \colon [0,1] \to [0, 1]$ be defined by
    \[
        h_2^\dagger(x) = \begin{cases}
            h_2(x) & \text {if } x \in [0.01, 0.99]\\
            h_2(0.01) & \text{if } x \not\in [0.01, 0.99].
        \end{cases}
    \]
\end{definition}

\begin{lemma}\label{prop:fractional-cover}
Let $G$ be an $n$-vertex graph, and let $\bar{h} = \bar{h}(G) := \frac{1}{n} \sum_{v \in V} h_2^\dagger(\deg(v)/n)$. Then,
    \[
    \textsf{BP}^\ast(G) \leq \left(\frac{\bar{h}}{2} + o(1)\right)\frac{n^2}{\lg n}.
   \]
\end{lemma}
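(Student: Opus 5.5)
The plan is to reopen the construction behind \Cref{thm:density-aware-local} instead of using it as a black box. That construction is vertex-by-vertex: process the vertices in some order, and let each vertex $v$ cover the edges to its later neighbours using stars or bicliques. The cost charged to $v$ is governed by the information content of its neighbourhood, roughly $(\frac12+o(1))\,h_2(\deg(v)/n)\,\frac{n}{\lg n}$. The theorem as stated uses a uniform degree bound only because its stronger form controls, for every vertex, the number of bicliques containing it. For the total weight, a per-vertex accounting should suffice, and summing over vertices gives $\frac{n}{2}\cdot\bar h\cdot\frac{n}{\lg n}$, which is the bound in \Cref{prop:fractional-cover}. The clipping $h_2^\dagger$ is needed because the entropy-based encoding has lower-order overheads per vertex (of order $n/\lg^2 n$ or $n^{1-\epsilon}$) that are not proportional to $h_2$. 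Replacing $h_2$ by $h_2(0.01)$ near the endpoints gives each vertex a constant-order charge budget of order $n/\lg n$, which absorbs those overheads into the $o(1)$ term.

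\textbf{Step 1.} Partition $V$ into $K$ classes $V_1,\dots,V_K$ by the value of $\deg(v)/n$ rounded to a grid of mesh $\delta$, with $\delta\to0$ slowly. Since $h_2^\dagger$ is uniformly continuous, each class has essentially constant $h_2^\dagger$ value.

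\textbf{Step 2.} For each class $V_i$, cover all edges incident to $V_i$ whose other endpoint comes later in a fixed order, using the local-density construction with parameter $\dot\gamma_i$ equal to the class's degree ratio clipped to $[0.01,0.99]$. An integral partition is in particular a fractional one, so fractionality is only needed to glue the pieces together. Edges between classes are charged to one endpoint, for example the earlier one, so no edge is double-covered and the partition property holds exactly. The weight attributed to class $V_i$ should be at most $(\frac12+o(1))\,|V_i|\,h_2^\dagger(\gamma_i)\,\frac{n}{\lg n}$.

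\textbf{Step 3.} Sum over classes. The $K$ classes contribute only $K\cdot o(n^2/\lg n)$ in error, which is fine for constant or slowly growing $K$.

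The main obstacle is Step 2. The construction in \cite{krapivin2025} groups vertices into blocks of size about $\lg n/h_2$ and covers each block's edges to the rest of the graph by bicliques indexed by neighbourhood patterns. A vertex $v$ of degree $\gamma n$ has a projected neighbourhood pattern on a block, and counting those patterns uses that the block's vertices have degree ratio near $\gamma$, not merely that $v$ itself does. So I would form blocks within each class $V_i$, which is exactly why Step 1 bins by degree. The remaining difficulty is that the pattern count depends on the degrees of the block's vertices \emph{into the other side}, not on the global degree. The first thing to try is a further binning by degree into each relevant target set, with Jensen/concavity of $h_2$ showing that such refinement only lowers the bound. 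If that fails, the fallback is to randomize the order or the target split and average, which is where the fractional relaxation genuinely helps.
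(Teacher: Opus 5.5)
Reopening the construction behind \Cref{thm:density-aware-local} and accounting per vertex is the right idea. It is what the paper means by a straightforward modification of \cite[Theorem~26]{krapivin2025}. But Step~2, which carries all of the content, is only asserted, and the mechanism you propose for it attaches the entropy to the wrong endpoint. Suppose a block $B$ covers its edges to the other side by bicliques $S\times N_S$, where $S$ ranges over the distinct traces $N(w)\cap B$ of the vertices $w$ on the other side. The $N_S$-part is the main term: summed over the blocks of $V_i$ it is about $|V_i|\cdot\frac n2/k_i$, which is your target. The $S$-part is negligible only if the traces take $o(n/k_i)$ distinct values. That count is governed by the degrees of the trace owners $w$ into $B$ (through concentration on a random block), not by the degrees of the vertices of $B$, and refining the latter by target class does not help. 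Here is a concrete failure, with $\gamma_i$ bounded away from $\frac12$. Let a $2\gamma_i$-fraction of the vertices each be adjacent to a random half of $V_i$, and let the others have no neighbours in $V_i$. Their global degrees can be equalised using edges outside $V_i$, so they all fall in one target class. Every $v\in V_i$ then has density $\gamma_i$ towards that class. Yet on a block $B\subseteq V_i$ of size $k_i\approx\lg n/h_2(\gamma_i)$, the traces of the first group are essentially all distinct, since $2^{k_i}\gg n$. So each block's $S$-part exceeds its $N_S$-part by a factor of about $k_i/2=\Theta(\lg n)$, and the total charge to $V_i$ is about one unit per edge instead of $|V_i|h_2(\gamma_i)\frac{n}{2\lg n}$.

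The repair is to charge each edge to the endpoint whose trace is recorded, and to let that endpoint's degree class fix the block length; binning the block vertices is then unnecessary. Take a uniformly random order. For each class $i$ separately, cut the whole order into consecutive intervals of length $k_i\approx(1-\varepsilon)\lg n/h_2^\dagger(\gamma_i)$. For $u$ in class $i$ and each interval $I$ entirely before $u$, place $u$ in $N_S$ with $S=N(u)\cap I$, grouping only vertices of class $i$; edges into $u$'s own interval go into a star. Each edge is then covered exactly once, by its later endpoint. Moreover $u$ lies on the $N_S$-side of at most $\mathrm{pos}(u)/k_i+O(1)$ bicliques, which is about $\frac n2 h_2^\dagger(\gamma_i)/\lg n$ in expectation; consecutiveness yields the factor $\frac12$ and randomness yields concentration. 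Write $\gamma_u=\deg(u)/n$. By hypergeometric tail bounds, $|N(u)\cap I|$ lies within $\delta k_i$ of $\gamma_u k_i$ except with probability $e^{-\Omega(\delta^2\lg n)}$. So for $\delta\ll\varepsilon$, both tending to $0$ slowly, the typical traces number at most $2^{k_i(h_2^\dagger(\gamma_i)+O(\delta))}\le n^{1-\varepsilon/2}$ per interval. Atypical pairs are covered by stars at negligible expected cost, and a good order exists by the first-moment method, so no fractional averaging is needed.

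This also explains the clipping better than your overhead argument, since per-vertex overheads of order $n/\lg^2 n$ or $n^{1-\epsilon}$ are absorbed by the $o(1)$ term regardless. Keeping the ratio away from $0$ and $1$ is what lets a window of width $\delta k_i$ change the exponent $k_i h_2$ only by a factor $1+O(\delta)$. Vertices with ratio outside $[0.01,0.99]$ are run with the block length for $0.01$, needing only a one-sided tail bound, and that is exactly why they are charged $h_2(0.01)$.
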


The proof of~\Cref{prop:fractional-cover} is a relatively straightforward modification of the proof for~\Cref{thm:density-aware-local}, which corresponds to~\cite[Theorem~26]{krapivin2025}. With it, the proof of the final result is surprisingly simple.

\begin{theorem}
    $\textsf{BP}^\ast(n) \le (0.4999 + o(1)) \cdot \frac{n^2}{\lg n}$.
\end{theorem}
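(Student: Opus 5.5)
Fix an $n$-vertex graph $G$ and a small constant $\delta>0$ (say $\delta = 0.01$), and split the vertices according to how far their degree is from $n/2$. Let $V_{\text{good}} := \{v : |\deg(v)/n - \tfrac12| \le \delta\}$ and $V_{\text{bad}} := V\setminus V_{\text{good}}$, and write $|V_{\text{bad}}| = \eta n$. The proof then splits into two cases according to whether $\eta$ is large or small, with a threshold $\eta_0$ (a small constant depending on $\delta$) to be fixed at the end.

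\textbf{Case 1: $\eta \ge \eta_0$.} Apply \Cref{prop:fractional-cover} directly. Every good vertex has $h_2^\dagger(\deg(v)/n) \le 1$. Every bad vertex has $h_2^\dagger(\deg(v)/n) \le \max\{h_2(\tfrac12+\delta),\, h_2(0.01)\} = h_2(\tfrac12+\delta) < 1$. This holds because the clipped function only takes the value $h_2(0.01)$ outside $[0.01,0.99]$, and $h_2$ is maximized at $\tfrac12$. Hence
\[
\bar h \le 1 - \eta_0\bigl(1 - h_2(\tfrac12+\delta)\bigr),
\]
and \Cref{prop:fractional-cover} gives $\textsf{BP}^\ast(G) \le (\tfrac12 - c_1 + o(1))\,n^2/\lg n$ for a constant $c_1>0$ depending on $\delta,\eta_0$.

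\textbf{Case 2: $\eta < \eta_0$.} Here nearly all vertices have degree about $n/2$, and we would like to use \Cref{thm:clt-2}. We build the partition in two pieces.
\begin{itemize}
\item On the induced graph $G[V_{\text{good}}]$, which has $n' = (1-\eta)n$ vertices, every vertex has degree at least $(\tfrac12-\delta)n - \eta n \ge p\,n'$ with $p = \tfrac12 - \delta - \eta_0$. \Cref{thm:clt-2} then gives weight at most $(0.725(\tfrac12+\delta+\eta_0)+o(1))\,n^2/\lg n$. For small $\delta,\eta_0$ this is about $0.3625$, comfortably below $\tfrac12$.
\item The remaining edges all touch $V_{\text{bad}}$. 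Cover them by stars centred at bad vertices: each bad vertex $v$ forms a biclique with its not-yet-covered neighbours. This has total weight at most $\sum_{v\in V_{\text{bad}}}(1+\deg(v)) \le \eta_0 n^2 + n$, which is $o(n^2/\lg n)$ only if $\eta_0 \to 0$.
\end{itemize}

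That last point is the main obstacle: with $\eta_0$ a fixed constant, the star cover costs $\Theta(n^2)$, which swamps the $n^2/\lg n$ scale. Instead, the edges touching $V_{\text{bad}}$ should be covered more efficiently. I would use the integral bound of \Cref{thm:bp12}, or better the bipartite/compressed analogue in \cite{krapivin2025}, on the graph consisting of the edges incident to $V_{\text{bad}}$. This graph has at most $\eta n \cdot n$ edges, and its cost should be roughly proportional to the number of edges divided by $\lg n$, i.e.\ $O(\eta_0)\,n^2/\lg n$. Concretely, I would cover $E(V_{\text{bad}},V)$ by grouping the bad vertices into blocks of size about $\lg n$ and applying the standard Lupanov/Tuza-type bipartite compression, which costs $O(|V_{\text{bad}}|\, n/\lg n)$. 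The total is then
\[
\bigl(0.725(\tfrac12+\delta+\eta_0) + O(\eta_0) + o(1)\bigr)\,\frac{n^2}{\lg n},
\]
which is below $0.49$ once $\delta,\eta_0$ are small. Combining this with Case 1, and choosing $\delta,\eta_0$ small enough that $\max\{\tfrac12 - c_1,\ 0.49\} \le 0.4999$, yields $\textsf{BP}^\ast(n) \le (0.4999+o(1))\,n^2/\lg n$. Since both pieces are valid fractional (indeed partly integral) partitions of disjoint edge sets, their sum is a fractional biclique partition of $G$.
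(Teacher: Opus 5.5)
Your skeleton matches the paper's. You split by degree, apply \Cref{prop:fractional-cover} when many vertices are bad, and otherwise combine \Cref{thm:clt-2} on $G[V_{\text{good}}]$ with a separate partition of the edges touching $V_{\text{bad}}$. The step that fails as written is the final choice of constants. Your Case~1 gain is $c_1=\tfrac{\eta_0}{2}\bigl(1-h_2(\tfrac12+\delta)\bigr)\approx \eta_0\delta^2/\ln 2$. This tends to $0$ as $\delta,\eta_0\to 0$, so taking them ``small enough'' can never give $\tfrac12-c_1\le 0.4999$; you need $\eta_0\bigl(1-h_2(\tfrac12+\delta)\bigr)\ge 2\cdot 10^{-4}$. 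With your suggested $\delta=0.01$ one has $1-h_2(0.51)\approx 2.9\cdot 10^{-4}$, which forces $\eta_0\gtrsim 0.69$ and destroys Case~2. Case~1 pushes $\delta$ up while Case~2 pushes it down, so you must exhibit a feasible pair. The pair $\delta=0.13$, $\eta_0=0.01$ (the paper's thresholds $0.37n$ and $n/100$) works. Case~1 then gives $\bar h/2\le\bigl(1-0.01(1-h_2(0.37))\bigr)/2\approx 0.49975$, and Case~2 gives $0.725\cdot 0.64+0.01+o(1)\approx 0.474$.

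Where you genuinely differ from the paper is in covering $H$, the set of edges touching $V_{\text{bad}}$.

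\textbf{The paper's route.} It makes $V_{\text{bad}}$ one-sided ($\deg(v)<0.37n$), since \Cref{thm:clt-2} only needs a minimum-degree bound. Then $|E(H)|\le 0.0037n^2$, and the black-box density bound \Cref{thm:density-aware} costs at most $0.0316$.

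\textbf{Your route.} Your Lupanov-style block compression uses instead that every edge of $H$ meets one of only $\eta n$ vertices. It costs $(\eta+o(1))\,n^2/\lg n$ however large the bad degrees are. This is more elementary and gives a better constant. It also makes your two-sided $V_{\text{bad}}$ harmless: high-degree bad vertices can push $|E(H)|$ up to $\eta n^2$, where a density-only bound $h_2(2\eta_0)/2$ would be about $0.07$ at $\eta_0=0.01$.

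So commit to the block argument and drop the appeal to \Cref{thm:bp12}, which gives only $\tfrac12 n^2/\lg n$ for $H$. Two details must be made explicit:
\begin{enumerate}
\item Blocks need size $k=\lg n-2\lg\lg n$ rather than $\lg n$. Each block pays $n+k2^k$, and with $k=\lg n$ the $k2^k$ terms total $\Theta(\eta n^2)$.
\item You need an exact partition, with weights summing to $1$ on every edge. Pair block $B_i$ only with vertices outside $B_1\cup\dots\cup B_i$, and cover intra-block edges individually at total cost $O(n\lg n)$. Otherwise edges with both endpoints in $V_{\text{bad}}$ are covered twice.
\end{enumerate}
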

\begin{proof}
    Let $G$ be an arbitrary $n$-vertex graph. Let $V_{\text{good}} = \{v \in V(G) \mid d(v) \ge 0.37n\}$ and $V_{\text{bad}} = V(G) \setminus V_{\text{good}}$. 
   Then, note that
    \begin{align*}
        \bar{h}(G) = \frac{1}{n} \sum_{v \in V} h_2^\dagger(\deg(v)/n) &= \frac{1}{n} \left(\sum_{v \in V_{\text{bad}}} h_2^\dagger(\deg(v)/n) + \sum_{v \in V_{\text{good}}} h_2^\dagger(\deg(v)/n)  \right) \\
        &\le  \frac{1}{n}\left(|V_{\text{bad}}| \cdot h_2(0.37) + |V_{\text{good}}| \cdot 1\right).
    \end{align*}
    Thus, if $|V_{\text{bad}}| \ge n/100$, then 
    by \Cref{prop:fractional-cover},
    \[
        \textsf{BP}^\ast(G) \le \left(\frac{0.02 \cdot h_2(0.37) + 0.98}{2} + o(1)\right)\frac{n^2}{\lg n} \le (0.4999 + o(1)) \frac{n^2}{\lg n}.
    \]
    
    Therefore, we may assume that $|V_{\text{bad}}| < n/100$. Then, every vertex of $G[V_{\text{good}}]$ has degree at least $0.36 |V_{\text{good}}|$, whence \Cref{thm:clt-2} implies
    \[
        \textsf{BP}^\ast(G[V_{\text{good}}]) \leq (0.725\cdot (1-0.36) + o(1)) \frac{n^2}{\lg n} = (0.464 + o(1)) \frac{n^2}{\lg n}.
    \]
    Let $H$ be the graph with $V(H) = V(G)$ and $E(H) = \{e \in E(G) \mid e \cap V_{\text{bad}} \neq \emptyset\}$. It remains to construct a fractional biclique partition for $H$. Note that $|E(H)| \le |V_{\text{bad}}| \cdot 0.37n \le 0.0037n^2$, from where
    the edge density of $H$ is
    at most $0.0074 + o(1)$. Hence, by \Cref{thm:density-aware},
    \[
        \textsf{BP}^\ast(H) \le \left(\frac{h_2(0.0074)}{2} + o(1)\right) \frac{n^2}{\lg n} \le (0.0316 + o(1)) \frac{n^2}{\lg n}.
    \]
    Since, $E(G) := E(H) \sqcup E(G[V_{\text{good}}])$, we have
    \[
        \textsf{BP}^\ast(G) \le \textsf{BP}^\ast(G[V_{\text{good}}]) + \textsf{BP}^\ast(H) \le (0.496 + o(1)) \frac{n^2}{\lg n}. \qedhere
    \]
\end{proof}

\newpage

\subsection{Query Complexity: Local Search on General Graphs}\label{sec:local_search}
\textit{Written by Simina Br\^anzei, Ioannis Panageas, Dimitris Paparas \cite{BPP26}.}

\subsubsection*{Problem Context}

We analyze the query complexity of the abstract problem of finding a local minimum of a function defined on a general graph using $t$ rounds of interaction with the oracle. This theoretical model captures optimization tasks such as training neural networks, where the goal is to minimize a loss function, each ``query'' is an expensive loss evaluation, and batching queries is crucial for efficiency.

Prior work \cite{branzei2022query} resolved the query complexity of local search in rounds for the $d$-dimensional grid. However, the problem remained open for general graphs, which are essential for modeling non-Euclidean geometries such as manifold discretizations.

\subsubsection*{AI Contribution}

The AI was instrumental in obtaining the following results on the query complexity of local search on arbitrary graphs:
\begin{itemize} 
    \item A deterministic upper bound as a function of the number of vertices $n$, the number of rounds $t$, and the separation number $s$ of the graph.
    \item A randomized lower bound dependent on $n$ and $t$. 
\end{itemize}

These results constitute a research paper  \cite{BPP26}, developed entirely by iterating with the AI via a ``scaffolded reasoning'' process. 
Rather than asking the model to generate the paper from scratch, we treated it as a junior research assistant: we defined the lemmas and tasked the AI with writing the proofs, which we then rigorously verified and refined. 
We directed the model to expand unclear sections and flagged issues where applicable. This approach can be characterized as a form of ``vibe proving''.

\begin{itemize}
    \item \textbf{Algorithm Design:} We tasked the AI with deriving an upper bound for two-round local search parameterized by the graph's separation number. The model synthesized a two-round algorithm with $O(\sqrt{ns\Delta})$ queries and stated a supporting ``Shattering Lemma'' about separator decompositions.
    
    \item \textbf{Iterative Improvements:} For all the upper and lower  bounds, we guided the model to gradually generalize (e.g., obtain a lower bound for a tree in $2$ rounds and then generalize to any graph in $2$ rounds). We gave several relevant papers as context in all iterations. 
    \item \textbf{Adversarial Hypothesis Testing:} When we hypothesized a linear lower bound for local search in two rounds on constant-degree expanders, the AI refuted it by proposing a counter-algorithm (Randomized Parallel Steepest Descent) and demonstrating an upper bound of $O(n/\log{n})$ for it.
\end{itemize}

\textbf{\em Problem choice.} We chose this problem for several reasons. One of the authors has expertise on the topic of  local search (with a prior paper that focuses on grids \cite{branzei2022query}). Investigating the query complexity  for general geometries was written in an NSF grant by this author. Moreover, one of the journal reviewers of the paper on local search in rounds on grids \cite{branzei2022query} independently raised this question.  Pedagogically, the problem seemed ideal for a  collaboration with a student, as it allowed for a natural progression from simple cases---such as two-round search on trees---to general results.

\medskip 

\textbf{\em Strengths.} The AI model demonstrated distinct strengths during our collaboration. One of the key contributions was the model's tailoring of the  classical staircase construction to the round setting. While the staircase technique is well-known, the model's instantiation ensured that in the hard distribution, every vertex is the (only) local minimum with equal probability. This created a recursive structure where, at the end of each round, the set of viable candidates forms a smaller instance of the original problem, enabling a clean inductive argument to go through. The elegance of this structure was a very nice surprise.

\medskip 

\textbf{\em Weaknesses.} The model also exhibited weaknesses, such as sometimes producing incorrect proofs or conveniently mis-interpreting the question so that it could answer it.
For instance, when we first asked for a lower bound, it would often switch to proving an existential one. Specifically, when queried about a lower bound for local search on graphs with $n$ vertices and separation number $s$, it created a ``hard graph'' for these parameters (such as a line where each vertex is connected to a clique of size $s$).
We also encountered occasional hallucinations of non-existent results. Finally, when we prompted without suggesting  graph features, the AI model gave upper bounds that were less interpretable.

\subsubsection*{Model and Results}

Let $G = (V,E)$ be a connected undirected graph with vertex set $V = [n] = \{1, \ldots, n\}$. Let $f : V \to \mathbb{R}$. 
We call a vertex $v \in V$ a \emph{local minimum} if $f(v) \leq f(u)$ for every neighbor $u$ of $v$. 
In the local search problem, we are given $G$ and oracle access to the values of $f$. The goal is to locate a local minimum using a minimum number of queries, where a query at vertex $v$ reveals $f(v)$.

Suppose the interaction with the oracle has $t$ rounds. In each round $j$, the algorithm selects a set of queries based on the outcomes of rounds $1, \dots, j-1$, submits them, and receives the answers. 
The algorithm terminates after round $t$ with a proposed solution.

\paragraph{Query Complexity.} 
The \emph{deterministic query complexity} is the minimum number of queries required for an optimal deterministic algorithm to find a local minimum on a worst case instance.

The \emph{randomized query complexity} is the minimum worst-case number of queries required by a randomized algorithm to compute the function with probability at least $9/10$ for every input\footnote{The constant $9/10$ can be replaced by any other constant strictly more than $1/2$.}.

\paragraph{Graph Features.}
Let $\Delta$ be the maximum degree of $G$, and let $dist(u,v)$ denote the shortest-path distance between $u,v \in V$.
Given a parameter $1/2 \le \alpha < 1$ and an integer $s \in \mathbb{N}$, a subset $S \subseteq V$ is an \emph{$(s, \alpha)$-separator} of $G$ if there is a partition of $V$ into disjoint sets $A, B, S$ satisfying:
(a)  $V = A \cup B \cup S$;
(b) $|S| \le s$ and $|A|, |B| \le \alpha|V|$;
(c) No edges connect $A$ and $B$ (i.e., $E(A,B) = \emptyset$).

The \emph{separation number} $s(G)$ is the minimum $s$ such that every subgraph of $G$ admits an $(s, 2/3)$-separator. The separation number  $s(G)$ is within a constant factor of the graph's treewidth.

\subsubsection*{Experiment Narrative}

We started the investigation with trying to understand the query complexity for two rounds. 
One of the prompts used in the early stages of writing the paper is in \Cref{fig:local:search:prompt}.
\newpage
\begin{figure}[!htb]
\begin{chatbox}
\begin{userturn}
\emph{``Given a connected undirected graph $G = (V,E)$, there is an unknown function $f : V \to \mathbb{R}$, which assigns a value to every vertex $v$ (say the values are distinct). We have oracle access to the function (i.e. an algorithm can learn the value of the function at a vertex by probing that vertex). A vertex $v$ is a local minimum if $f(v) \le f(u)$ for all neighbors $u$ of $v$. What is the query complexity of finding a local minimum in $t$ rounds of interaction with the oracle?''} \\ 
        \medskip 
        
        Suppose we focus on two rounds. Can you design algorithms as a function of graph features/properties like minimum vertex cover, chromatic number, treewidth?
\end{userturn}
\end{chatbox}
\caption{Early prompt for trying to understand the query complexity for two rounds.}
\label{fig:local:search:prompt}
\end{figure}

%
        %

\FloatBarrier

The full response to this prompt is included below.

\begin{aigenerated}
\paragraph{Response:}
This is an excellent problem that explores the trade-offs between the number of interaction rounds and the total query complexity for finding a local minimum in a graph. When restricted to two rounds, algorithms must strategically balance the exploration in the first round with an adaptive, focused search in the second round. We can indeed design effective algorithms by leveraging graph structural properties.

\medskip 

\paragraph{General Framework: Decomposition-Based Approach.}

The core strategy for a two-round algorithm involves partitioning the vertices $V$ into a separator set $S$ and the remainder $V \setminus S$. The first round queries $S$. The key is that $G[V \setminus S]$ must be decomposed into small connected components, allowing the second round to adaptively explore the neighborhood of the most promising candidate.

Let $K$ be the maximum size of any connected component in $G[V \setminus S]$.

\medskip 

\textbf{Generalized Decomposition Algorithm:}
\begin{enumerate}
    \item \emph{Decomposition:} Identify a separator set $S$.
    \item \emph{Round 1:} Query all vertices in $S$ ($Q_1 = S$).
    \item \emph{Adaptivity:} Identify the minimum value found: $v_{min} = \arg\min_{v \in S} f(v)$.
    \item \emph{Round 2:} Identify all connected components of $G[V \setminus S]$ that are adjacent to $v_{min}$. Let $Q_2$ be the union of vertices in these components. Query $Q_2$.
    \item \emph{Output:} Output $v^* = \arg\min_{v \in S \cup Q_2} f(v)$.
\end{enumerate}

\emph{Correctness:} The algorithm guarantees finding a local minimum. If $v^* \in S$ (so $v^*=v_{min}$), all its neighbors are queried. If $v^* \in Q_2$, then $f(v^*) < f(v_{min})$. Its neighbors in $S$ are larger than $f(v_{min})$, and its neighbors within its component (which is entirely in $Q_2$) are larger than $f(v^*)$ by definition.

\medskip 

\emph{Query Complexity:} Let $\Delta$ be the maximum degree of $G$. Vertex $v_{min}$ can be adjacent to at most $\Delta$ components, each of size at most $K$. Thus, $|Q_2| \le K \cdot \Delta$. The total query complexity is bounded by $|S| + K \cdot \Delta$. We aim to minimize this quantity by optimizing the trade-off between $|S|$ and $K$.

\paragraph{1. Algorithm based on Minimum Vertex Cover (VC).}

A Vertex Cover $C$ is a separator such that $V \setminus C$ is an independent set (components of size $K=1$).

\medskip 

\emph{Algorithm (VC-Adaptive):} Use the minimum vertex cover (size $\tau(G)$) as the separator $S$.

\medskip 

\emph{Query Complexity:} $QC_{VC} = \tau(G) + 1 \cdot \Delta = \tau(G) + \Delta$.
This is efficient if both $\tau(G)$ and $\Delta$ are small. However, for many graphs (e.g., a path graph where $\tau(G) \approx N/2$), this complexity is $O(N)$.

\paragraph{2. Algorithm based on Chromatic Number ($\chi(G)$).}

The chromatic number relates to the size of the maximum independent set $\alpha(G)$, with $\alpha(G) \ge N/\chi(G)$. We can use the complement of the largest independent set as the separator $S$ (a vertex cover). This is also the $K=1$ case.

\medskip 

\emph{Query Complexity:} $QC_{\chi} = (N - \alpha(G)) + \Delta \le N(1 - 1/\chi(G)) + \Delta$.
This is effective for graphs with small chromatic numbers and bounded degrees, such as bipartite graphs, where the complexity is $\le N/2 + \Delta$.

\paragraph{3. Algorithm based on Treewidth (tw(G)).}

Treewidth ($w=tw(G)$) measures the separability of a graph. We can optimize the generalized framework by choosing $K>1$. Let $s_K(G)$ be the minimum size of a separator $S$ such that all components of $G[V \setminus S]$ have size at most $K$. The complexity is: $QC = \min_K (s_K(G) + K \cdot \Delta)$.

Graphs with low treewidth can be efficiently shattered. Specifically, many graph classes with bounded treewidth satisfy the property $s_K(G) = O(N \cdot w / K)$ (known as having $1/K$-separators).

\medskip 

\emph{Optimized Complexity:} We minimize $O(N w / K) + K \Delta$. The balance is achieved when the two terms are roughly equal: $N w / K \approx K \Delta$. This yields an optimal component size $K \approx \sqrt{N w / \Delta}$.
The resulting query complexity is: $QC_{TW} = O(\sqrt{N \cdot tw(G) \cdot \Delta})$.

\medskip 

\emph{Implications:} This demonstrates that sublinear query complexity is achievable in two rounds for structured graphs.
\begin{itemize}[nosep, leftmargin=*]
    \item \emph{Paths and Trees:} For paths ($w=1, \Delta=2$) or bounded-degree trees, the complexity is $O(\sqrt{N})$.
    \item \emph{Bounded Treewidth/Degree Graphs:} If $tw(G)=O(1)$ and $\Delta=O(1)$, the complexity is $O(\sqrt{N})$.
\end{itemize}

\medskip 

\emph{Example: Planar Graphs.} Planar graphs utilize stronger separator theorems ($s_K(G) = O(N/\sqrt{K})$). Optimizing $O(N/\sqrt{K}) + K \Delta$ yields a complexity of $O(N^{2/3} \Delta^{1/3})$.

\paragraph{Summary.}

By exploiting graph structures, we can design efficient two-round algorithms. The Vertex Cover and Chromatic Number approaches provide parameterized complexity based on the $K=1$ decomposition. The Treewidth approach leverages generalized decomposition (optimized $K$) and allows for sublinear query complexity in various important graph classes.
\end{aigenerated}

\FloatBarrier

The separation number and treewidth are related by a constant factor; thus, approach (3) from the response is essentially the two-round algorithm presented in our full paper. 

\medskip 

In subsequent interactions, we prompted the model to generalize this two-round strategy to $t \geq 2$ rounds. 
The model successfully synthesized a $t$-round algorithm anchored by a ``Shattering Lemma''. The lemma is: ``\emph{For any parameter $K \in [1, n]$, there exists a separator $S \subseteq V$ such that every connected component of $G[V \setminus S]$ has size at most $K$, with $|S| < 3sn/K$}. This lemma is folklore and was central to constructing the general strategy. We also included the vertex cover-based algorithm suggested by the model into the paper's introduction as an example.

\medskip 

We obtained this way the following theorem.
\begin{theorem}[Theorem 1 in \cite{BPP26}] \label{thm:k_rounds}
    Let $G = (V, E)$ be a connected undirected graph with $n$ vertices. The deterministic query complexity of finding a local minimum on $G$ in $t \ge 2$ rounds is at most 
    $
        \min\bigl(4t \cdot n^{\frac{1}{t}} (s\Delta)^{1-\frac{1}{t}}, n \bigr),
    $
    where $\Delta$ is the maximum degree and $s$ is the separation number of $G$.
\end{theorem}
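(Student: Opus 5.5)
We prove the bound by a recursive version of the two-round decomposition algorithm from the response above, driven by the Shattering Lemma: for any $K\in[1,n]$ and any graph of separation number at most $s$, there is a separator $S$ with $|S|<3sn/K$ such that every component of $G[V\setminus S]$ has at most $K$ vertices. The bound $n$ is trivial, since one can query every vertex in a single round. We therefore assume $n^{1/t}(s\Delta)^{1-1/t}\le n$ and prove the main bound.

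\textbf{Algorithm.} We fix component-size thresholds $n=K_0\ge K_1\ge\dots\ge K_{t-1}\ge 1$. In round $j$ we work inside a current region $R_{j-1}$, starting from $R_0=V$, with $|R_{j-1}|\le \Delta K_{j-1}$. We apply the Shattering Lemma to $G[R_{j-1}]$; subgraphs of $G$ have separation number at most $s$, so this is valid. The lemma gives a separator $S_j$ of size less than $3s\Delta K_{j-1}/K_j$ whose removal leaves components of size at most $K_j$, and we query $S_j$.

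We then maintain a \emph{champion}: the queried vertex $c_j$ of smallest value so far. The next region $R_j$ is the union of the components of $G[R_{j-1}\setminus S_j]$ adjacent to $c_j$, so $|R_j|\le \Delta K_j$. In the last round $t$ we query all of $R_{t-1}$ and output the overall minimum queried vertex.

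\textbf{Correctness.} The invariant is a "boundary" condition: every vertex of $R_j$ whose neighbor lies outside $R_j$ has that neighbor among the queried vertices, and that neighbor has value at least $f(c_j)$. If the champion never changes, it is a local minimum. This holds because all its neighbors are either queried (hence larger) or lie in components we fully query later, and the same argument as in the two-round correctness proof goes through.

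If the champion changes to a vertex inside the current region, the new region must sit inside the old one. Its boundary neighbors are then either separator vertices queried now or boundary vertices of the old region, all with value at least the new champion's value.

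\textbf{Main obstacle.} The point that needs the most care is when the new champion lies in a separator $S_j$ but is adjacent to components that were already discarded. This cannot happen: a separator vertex of round $j$ lies inside $R_{j-1}$, so its neighbors are in $R_{j-1}$ or on its boundary. Components of $R_{j-1}\setminus S_j$ adjacent to it become the new region, and the boundary neighbors are queried and dominate. The final output is therefore a local minimum.

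\textbf{Cost.} The total number of queries is
\[
\sum_{j=1}^{t-1} \frac{3s\Delta K_{j-1}}{K_j} + \Delta K_{t-1},
\]
except that the first term is $3sn/K_1$ without the factor $\Delta$. To balance the terms, we choose $K_j$ so that each ratio $K_{j-1}/K_j$ is the same. Set
\[
K_j \approx n^{1-j/t}(s\Delta)^{j/t}/\Delta .
\]
Then the first term is $3sn/K_1 = 3n^{1/t}(s\Delta)^{1-1/t}$, each middle term is $3s\Delta\, n^{1/t}(s\Delta)^{-1/t}=3n^{1/t}(s\Delta)^{1-1/t}$, and the last term is $\Delta K_{t-1}=n^{1/t}(s\Delta)^{1-1/t}$.

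Summing these gives at most $3t\,n^{1/t}(s\Delta)^{1-1/t}$. Rounding the $K_j$ to integers, together with the constraints $1\le K_j$, costs at most a further factor that keeps the total below $4t\,n^{1/t}(s\Delta)^{1-1/t}$. This slack is the remaining routine check, and it is exactly where the assumption $n\ge s\Delta$ (implicit in the non-trivial case) is used.
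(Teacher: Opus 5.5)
Your proposal is correct and follows essentially the paper's route: the $t$-round algorithm recursively applies the two-round separator strategy via the Shattering Lemma inside the current region, keeps the best queried vertex, and restricts to the at most $\Delta$ components adjacent to it, with $K_j$ chosen so the per-round costs balance. The rounding step you left as routine is unnecessary, since the Shattering Lemma holds for real $K\in[1,n]$ and $K_{t-1}=n^{1/t}(s\Delta)^{1-1/t}/\Delta\ge 1$ holds automatically (because $n>\Delta$ and $s\ge 1$), so your count already gives $(3t-2)\,n^{1/t}(s\Delta)^{1-1/t}$ with no further loss.
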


The development of the lower bounds was more iterative. Initially, the model insisted on constructing hard graphs (e.g., line graphs with cliques of size $s$ attached to each node) rather than finding a hard distribution of functions for a \textit{fixed} graph. Some of the lower bounds derived in this stage were not correct, or relied on looking at the longest path in the graph or using Menger's theorem.

\medskip 

Eventually we created a prompt that had the following elements:
\begin{itemize}
\item explicitly clarifying it must keep the given graph fixed  and find a hard input  distribution (of functions) for that graph;
\item including as context prior papers on lower bounds for local search (specifically, \cite{branzei2022query} and \cite{santha2004quantum});
\item asking it to focus on local search in two rounds on a \emph{tree} and try to prove a lower bound of $\Omega(\sqrt{n})$.
\end{itemize}  

The prompt, given in \Cref{fig:prompt:lb:two:rounds}, also contained the version of the paper up to that point, which contained only algorithmic results (Theorem~\ref{thm:k_rounds}). 

\begin{figure}[!htb]
\begin{chatbox}
\begin{userturn}
I am working on the following manuscript. Can you read it (ignoring the todos/notes) and then help me prove a lower bound of $\sqrt{n}$ on the randomized query complexity of finding a local minimum in two rounds on a tree? The statement should be of the form: 
\begin{utheorem} Let $G = (V,E)$ be an undirected connected tree with $n$ vertices. Then the randomized query complexity of finding a local minimum in two rounds on $G$ is $\Omega(\sqrt{n})$. 
 \end{utheorem}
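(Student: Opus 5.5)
We plan to use Yao's minimax principle. It suffices to exhibit, for the fixed tree $G$, a distribution over functions $f$ on which every deterministic two-round algorithm making at most $c\sqrt{n}$ queries (for a small constant $c$) finds a local minimum with probability less than $9/10$.

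\textbf{Hard distribution.} Root $G$ at an arbitrary vertex $r$ and let $\mathrm{depth}(u)=dist(r,u)$. Choose the hidden vertex $v^*$ uniformly at random from $V$, and let $P$ be the path from $r$ to $v^*$. Set $f(u) = -\mathrm{depth}(u)$ for $u\in P$ and $f(u)=n+\mathrm{depth}(u)$ for $u\notin P$ (ties can be removed by a tiny perturbation).

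We will check that $v^*$ is the unique local minimum.
\begin{itemize}
\item Along $P$ the values strictly decrease towards $v^*$, and every off-path neighbour of $v^*$ has a positive value, so $v^*$ is a local minimum.
\item Every other vertex $u\in P$ has its child on $P$ strictly smaller, so it is not a local minimum.
\item Every off-path vertex $u$ has a parent that is either on $P$ (negative value) or off $P$ with value $n+\mathrm{depth}(u)-1$. In both cases the parent is smaller, so $u$ is not a local minimum.
\end{itemize}
Hence a correct output must be exactly $v^*$.

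\textbf{What queries reveal.} Querying $u$ reveals only whether $u\in P$, which holds iff $v^*$ lies in the subtree $T_u$ rooted at $u$; the depth of $u$ is already known. So each query answers a question of the form ``$v^*\in T_u$?''.

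\textbf{Counting cells.} Fix any set $Q$ of $q$ queried vertices. The answers partition $V$ into at most $q+1$ cells. Indeed, $v^*$ is located by the deepest queried ancestor of $v^*$ (or by the root region if there is none), and these possibilities give at most $q+1$ classes. Within each cell the posterior on $v^*$ is uniform, because the prior is uniform and the likelihood is $0/1$.

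\textbf{Round 1.} Let $|Q_1|=q_1\le c\sqrt{n}$ and let $R(v^*)$ be the cell containing $v^*$. Since there are at most $q_1+1$ cells, Cauchy--Schwarz gives
\[
\E\bigl[|R(v^*)|\bigr]=\frac{1}{n}\sum_{R}|R|^2\ge \frac{n}{q_1+1}.
\]
We also need a high-probability statement rather than just an expectation. Cells of size less than $\sqrt{n}/(10c)$ contain at most $(q_1+1)\sqrt{n}/(10c)\le n/5$ vertices in total. Therefore, with probability at least $4/5$, the cell satisfies $|R(v^*)|\ge m:=\sqrt{n}/(10c)$.

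\textbf{Round 2.} Condition on the first-round transcript, which fixes the cell $R$; $v^*$ is then uniform on $R$. The second-round query set $Q_2$ is a deterministic function of the transcript. Applying the same cell-counting argument with at most $q_2+1\le c\sqrt{n}+1$ cells, any output rule is correct with conditional probability at most $(q_2+1)/|R|$. When $|R|\ge m$, this is at most $(c\sqrt{n}+1)\cdot 10c/\sqrt{n}=O(c^2)$.

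\textbf{Combining.} The overall success probability is at most $1/5+O(c^2)<9/10$ for $c$ small enough. This yields the $\Omega(\sqrt{n})$ lower bound on the randomized query complexity.

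\textbf{Main obstacle.} The delicate step is the cell-partition claim, namely that $q$ ancestor-type queries split $V$ into at most $q+1$ posterior-uniform classes. I would prove it by assigning each $v^*$ to its deepest queried ancestor in $Q$ (or to a special ``none'' class). The answers are then a function of this class: each $u \in Q$ answers ``on-path'' exactly when $u$ is an ancestor of the class representative. This function is determined by the class, so the answer vectors take at most $q+1$ values.

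A second point to verify is that perturbing ties, or the degenerate case where $G$ is a star, does not break the uniqueness of the local minimum. The construction above never relies on distinct values across different depths beyond what the perturbation guarantees.
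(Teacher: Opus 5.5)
Your proof is correct and follows the paper's route: the same hard family (values decreasing along the root-to-$v^*$ path, increasing with depth off it), the same key lemma that $q$ queries split $V$ into at most $q+1$ posterior-uniform cells via the deepest queried ancestor, and Yao's principle. The shift by $n$ and the tie-breaking perturbation are harmless but unnecessary, since every comparison you use is already strict.

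The only real difference is the final accounting. The paper lower-bounds the expected number of queries of any algorithm that succeeds with probability $9/10$. It writes $q_i \ge |C_i|P_i - 1$ for each round-1 cell and applies Cauchy--Schwarz and AM--GM, obtaining the explicit bound $1.8\sqrt{n}-2$; this also covers round-2 budgets that vary with the transcript. You instead fix a worst-case budget $c\sqrt{n}$ and argue that the round-1 cell is large with probability at least $4/5$. That step needs $n \ge 1/c^2$, a harmless ``sufficiently large $n$'' caveat.
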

Note the lower bound should hold for the given tree $G$, so don't create a hard family of trees. Rather, you need to create a hard family of functions for the given tree $G$. Make it so that for each function in the family, there is a unique local minimum. Also don't hide the constants under big Omega until the very end of the proof. Attaching two papers from this literature together with my current draft; see especially the paper on local search in rounds on grids for examples of how to use Yao's lemma in such proofs (i.e. finding a hard distribution of functions and analyzing the performance of a deterministic algorithm).
\end{userturn}
\end{chatbox}
\caption{Prompt for lower bound on trees in two rounds}
\label{fig:prompt:lb:two:rounds}
\end{figure}


\FloatBarrier

The model's response to the prompt is included below: 

\begin{aigenerated}
\begin{utheorem}
Let $G = (V,E)$ be an undirected connected tree with $n>1$ vertices. Then the randomized query complexity of finding a local minimum in two rounds on $G$, with success probability at least $c=9/10$, is $\Omega(\sqrt{n})$. Specifically, it is at least $1.8\sqrt{n} - 2$.
\end{utheorem}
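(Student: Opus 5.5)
We will prove the statement via Yao's minimax principle: we construct a hard distribution over functions $f$ on the fixed tree $G$, each with a unique local minimum. We then show that any deterministic two-round algorithm making fewer than $1.8\sqrt{n}-2$ queries succeeds with probability below $9/10$.

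\textbf{The distribution.} Following the staircase technique of \cite{branzei2022query}, we fix an arbitrary root $r$ of $G$ and pick a uniformly random vertex $z\in V$. Let $P_z=(r=p_0,p_1,\dots,p_\ell=z)$ be the unique root-to-$z$ path in the tree. We define
\[
f(v) = \begin{cases} -\,\mathrm{dist}(r,v) & \text{if } v\in P_z,\\ \mathrm{dist}(v,P_z) & \text{if } v\notin P_z,\end{cases}
\]
with ties broken by a fixed injective perturbation if distinct values are wanted. Off the path, every vertex has a neighbour strictly closer to $P_z$. On the path, every $p_i$ with $i<\ell$ has the smaller neighbour $p_{i+1}$. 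Hence $z$ is the unique local minimum, and by construction every vertex is the minimum with probability exactly $1/n$. A query at $v$ reveals $\mathrm{dist}(v,P_z)$, or its depth if $v\in P_z$. The key structural fact is that the answers to a query set $Q$ are determined by the subtree spanned by $\{r\}\cup Q$ intersected with $P_z$. The information after round one is therefore ``the deepest vertex $w$ of the staircase certified so far, plus the branch structure''. The set of candidates for $z$ consistent with the answers is then, up to queried-off-path pieces, the subtree $T_w$ below $w$ minus the components of $T_w\setminus Q_1$ excluded by the answers. This is the recursive structure advertised in the paper: the surviving candidates form a smaller instance.

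\textbf{Round-one analysis.} Let the algorithm query $Q_1$ with $|Q_1|=q_1$. The $q_1$ queried vertices, together with their pairwise meeting points in the rooted tree (at most $q_1$ extra branching vertices), cut $G$ into at most $2q_1+1$ connected pieces. Given $z$ in a piece $C$, the answers to $Q_1$ reveal essentially only that $z\in C$, and $z$ is uniform within $C$. So after round one the algorithm holds a uniformly random vertex in a piece whose size is distributed size-biased: the piece $C$ is hit with probability $|C|/n$.

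\textbf{Round-two analysis.} In the last round the algorithm must output $z$. Querying $q_2$ vertices inside $C$ non-adaptively, it can certify $z$ only if $z$ itself is queried, or is pinned down by its queried neighbours. The latter requires querying all of $z$'s children, which we charge to the query budget. We bound the round-two success probability by roughly $(q_2+1)/|C|$, possibly losing a factor $2$ to account for pinning via neighbours.

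\textbf{Combining and the main obstacle.} The overall success probability is at most $\sum_C \frac{|C|}{n}\min\!\bigl(1,\frac{c\,(q_2+1)}{|C|}\bigr)\le \frac{(2q_1+1)\,c\,(q_2+1)}{n}$. With $q_1+q_2<1.8\sqrt n-2$, AM–GM gives at most roughly $2c\,(0.9\sqrt n)^2/n<9/10$ once constants are tuned. The main obstacle is this last accounting. We must make rigorous that answers inside a piece leak no more than ``$z\in C$'': in particular, off-path distance answers could reveal how deep the path exits below a branching point. We must also control the round-two certification via neighbours in high-degree vertices. If the distance-valued off-path answers leak too much, the first fallback is to make off-path values a large constant depending only on the piece, for example $n+\mathrm{depth}$, so that each query answers only ``on or off the staircase, and depth''. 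The piece-counting argument then goes through, and the constants $1.8$ and $-2$ come out of optimizing the resulting inequality.
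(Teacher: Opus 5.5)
Your primary distribution is not hard, so the leakage you flag as the ``main obstacle'' is fatal rather than technical. For $v\notin P_z$, the component of $G-P_z$ containing $v$ is attached to $P_z$ by the single edge on $v$'s root path. Hence the nearest staircase vertex is $\mathrm{LCA}(v,z)$, and a query at $v$ returns $d(v)-d(\mathrm{LCA}(v,z))$, which tells you exactly which ancestor of $v$ is the last staircase vertex. On a path rooted at an endpoint $v_0$, the single query at $v_{n-1}$ returns $-(n-1)$ if $z=v_{n-1}$ and $n-1-d(z)$ otherwise. So $z$ is found with certainty in one round, and for any other root the two endpoint queries do the same. Your ``key structural fact'' is true but gives no control, since $P_z$ intersected with the subtree spanned by $\{r\}\cup Q$ can take $\Theta(n)$ values with $O(1)$ queries. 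The fallback is therefore mandatory: the off-staircase value must not depend on $z$ (e.g.\ $+d(v)$, which is exactly the paper's construction), so that a query at $v$ reveals only whether $v\in\mathrm{Anc}(z)$.

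Even with the fallback, your counting does not reach the stated bound. Cutting a tree at $q_1$ vertices does not leave at most $2q_1+1$ pieces when degrees are large (cut a star at its centre), and in any case the relevant partition is by answer pattern. The lemma you are missing is that the pattern $\mathrm{Anc}(z)\cap Q$ is a chain, determined by its deepest element, and hence takes at most $|Q|+1$ values. Applied again in round two, it bounds the number of sub-classes of a class $C$ by $q_2+1$. Since the output is a function of the answers and the local minimum is unique, the algorithm succeeds on at most $q_2+1$ vertices of $C$. ``Certification'' and ``pinning by children'' play no role: an algorithm may output an uncertified guess, which your certification count does not cover. There is also no factor $2$ to lose. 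With your counts, the bound $(2q_1+1)(q_2+1)/n$ reaches about $1.62$ when $q_1+q_2\approx 1.8\sqrt n-2$, so no tuning of constants brings it below $9/10$. With the sharp lemma your worst-case route does go through: success is at most $(q_1+1)(Q-q_1+1)/n\le (Q+2)^2/(4n)$, forcing $Q\ge 2\sqrt{0.9\,n}-2>1.8\sqrt n-2$. The paper instead lower-bounds the \emph{expected} number of queries of any algorithm with success at least $c$, using $q_i\ge |C_i|P_i-1$ and Cauchy--Schwarz; that is where $1.8=2c$ comes from.
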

\begin{proof}
We use Yao's Minimax Principle. We define a distribution $\mathcal{D}$ over input functions such that each function has a unique local minimum. We then analyze the expected query complexity of any deterministic two-round algorithm $\mathcal{A}$ that succeeds with probability at least $c$ over $\mathcal{D}$.

\medskip 

\textbf{1. The Hard Distribution $\mathcal{D}$.}
Fix an arbitrary vertex $r \in V$ as the root. Let $d(v)$ be the depth of $v$ (distance from $r$). Let $Anc(v)$ be the set of ancestors of $v$ (the unique path from $r$ to $v$, inclusive). We write $u \preceq v$ if $u \in Anc(v)$.

For each $v^* \in V$, we define the function $f_{v^*}: V \to \mathbb{Z}$:
$$ f_{v^*}(x) = \begin{cases} -d(x) & \text{if } x \preceq v^* \\ \phantom{-}d(x) & \text{if } x \not\preceq v^* \end{cases} $$
$f_{v^*}$ has a unique local minimum at $v^*$.

Let $\mathcal{D}$ be the uniform distribution over $\{f_{v^*}\}_{v^* \in V}$. The input is $f_{V^*}$, where $V^*$ is chosen uniformly at random from $V$. 

\medskip 

\textbf{2. Analysis of a Deterministic Two-Round Algorithm $\mathcal{A}$.}

\medskip 

\textbf{Round 1.} $\mathcal{A}$ queries a fixed set $Q_1 \subset V$. Let $q_1 = |Q_1|$. Since the structure of $G$ is known, $d(x)$ is known. A query to $f_{V^*}(x)$ reveals whether $x \preceq V^*$. The outcome of Round 1 is the signature $S_{V^*} = Anc(V^*) \cap Q_1$.

Let $\mathcal{S}$ be the set of possible signatures. The outcomes partition $V$ into $K=|\mathcal{S}|$ blocks $C_1, \ldots, C_K$, where $C_i$ is the set of $v^*$ corresponding to the $i$-th signature.

\begin{lemma} \label{lem:tree_outcomes}
The number of distinct outcomes $K$ is at most $q_1+1$.
\end{lemma}
\begin{proof}
For any non-empty signature $S \in \mathcal{S}$, let $m(S)$ be the unique element in $S$ farthest from the root $r$. Note $m(S) \in Q_1$. We show the map $m: \mathcal{S}\setminus\{\emptyset\} \to Q_1$ is injective.

Suppose $m(S_1) = m(S_2) = x$. Let $S_1 = Anc(v_1) \cap Q_1$. Since $x \in S_1$, we have $x \preceq v_1$.
We claim $S_1 = Anc(x) \cap Q_1$.
($\subseteq$): If $y \in S_1$, then $y \in Q_1$ and $y \preceq v_1$. Since $x$ is the deepest element in $S_1$, $y$ must be an ancestor of $x$ (as all elements in $Anc(v_1)$ are comparable). Thus $y \in Anc(x) \cap Q_1$.
($\supseteq$): If $y \in Anc(x) \cap Q_1$. Since $x \preceq v_1$, $Anc(x) \subseteq Anc(v_1)$. Thus $y \in Anc(v_1) \cap Q_1 = S_1$.
Therefore, $S_1 = Anc(x) \cap Q_1$. Similarly, $S_2 = Anc(x) \cap Q_1$. Thus $S_1=S_2$.
Since $m$ is injective, there are at most $|Q_1|=q_1$ non-empty signatures. Including the possibility of the empty signature, $K \le q_1+1$.
\end{proof}

\medskip 

\textbf{Round 2.} If the outcome implies $V^* \in C_i$, $\mathcal{A}$ queries $Q_{2,i}$ of size $q_i$. This further partitions $C_i$ into $K_i$ sub-blocks. By the same logic as Lemma \ref{lem:tree_outcomes}, $K_i \le q_i+1$.

\medskip 

\textbf{3. Success Probability and Expected Cost.}
Let $P_i$ be the conditional success probability given $V^* \in C_i$. Since the distribution within $C_i$ is uniform, and the algorithm makes a deterministic guess for each sub-block $C_{i,j}$, it succeeds on at most one vertex per sub-block.
$$ P_i = \sum_{j=1}^{K_i} P(V^* \in C_{i,j}|V^* \in C_i) P(\text{Success}|V^* \in C_{i,j}) \le \sum_{j=1}^{K_i} \frac{|C_{i,j}|}{|C_i|} \frac{1}{|C_{i,j}|} = \frac{K_i}{|C_i|}. $$
(The inequality becomes equality if the algorithm guesses optimally within each sub-block).
Since $K_i \le q_i+1$, we have $|C_i|P_i \le q_i+1$, so $q_i \ge |C_i|P_i - 1$.
Also, $P_i \le 1$ since $K_i \le |C_i|$.

We require the overall success probability $P_S \ge c$.
$P_S = \sum_{i=1}^K P(V^* \in C_i) P_i = \frac{1}{n} \sum_{i=1}^K |C_i| P_i$.
Let $S=cn$. We require $\sum_{i=1}^K |C_i| P_i \ge S$.

The expected query complexity $Q_{exp}$ of $\mathcal{A}$ under $\mathcal{D}$ is:
\begin{align*}
Q_{exp} &= q_1 + \frac{1}{n} \sum_{i=1}^K |C_i| q_i 
\ge q_1 + \frac{1}{n} \sum_{i=1}^K |C_i| (|C_i|P_i - 1) 
= q_1 + \frac{1}{n} \left(\sum_{i=1}^K P_i |C_i|^2\right) - \frac{1}{n} \sum_{i=1}^K |C_i| \\
&= q_1 - 1 + \frac{1}{n} \sum_{i=1}^K P_i |C_i|^2.
\end{align*}

\textbf{4. Optimization.}
We use the Cauchy-Schwarz inequality to lower bound the summation:
\begin{align*}
\left(\sum_{i=1}^K P_i |C_i|^2\right) \left(\sum_{i=1}^K P_i\right) &\ge \left(\sum_{i=1}^K (\sqrt{P_i}|C_i|) \sqrt{P_i}\right)^2 = \left(\sum_{i=1}^K P_i |C_i|\right)^2 \ge S^2.
\end{align*}
Therefore, $\sum_{i=1}^K P_i |C_i|^2 \ge S^2 / (\sum_{i=1}^K P_i)$.

We bound the denominator. Since $P_i \le 1$, $\sum P_i \le K$. By Lemma \ref{lem:tree_outcomes}, $K \le q_1+1$.
$$ \sum_{i=1}^K P_i |C_i|^2 \ge \frac{S^2}{q_1+1}. $$

Substituting this back into the expected cost:
$ Q_{exp} \ge q_1 - 1 + \frac{S^2}{n(q_1+1)} = q_1 - 1 + \frac{c^2 n}{q_1+1}. $

This lower bound holds for any deterministic algorithm $\mathcal{A}$, characterized by its choice of $q_1$. To find the overall lower bound via Yao's principle, we minimize this expression over $q_1 \ge 0$.
Let $x = q_1+1$ ($x \ge 1$). We minimize $g(x) = x-2 + \frac{c^2 n}{x}$.
By the AM-GM inequality, $x + c^2n/x \ge 2\sqrt{c^2n} = 2c\sqrt{n}$. Equality holds when $x = c\sqrt{n}$. (Assuming $c\sqrt{n} \ge 1$, which holds for $c=9/10$ and $n>1$).
$$ Q_{exp} \ge 2c\sqrt{n} - 2. $$

Substituting $c=9/10$:
$Q_{exp} \ge 2(0.9)\sqrt{n} - 2 = 1.8\sqrt{n} - 2$.
Thus, the randomized query complexity is $\Omega(\sqrt{n})$.
\end{proof}
\end{aigenerated}

\FloatBarrier

Building on the lower bound for trees in two rounds, we asked the model to generalize the result to any graph $G$ in two rounds. The  proposed approach defines an arbitrary \emph{spanning tree} $T$ of $G$ rooted at some vertex $r$ and a family of functions $\mathcal{F} = \{f_v \mid v \in V\}$ parameterized by each vertex $v \in V$ that also depends on $T$. In particular, for
 every  $v \in V$, let $f_{v}: V \to \mathbb{Z}$ be such that for all $x \in V$,
\begin{align}
     f_{v}(x) = \begin{cases} -dist_T(r,x) & \text{if } x \preceq_T v \\ \phantom{-}dist_T(r,x) & \text{otherwise} \end{cases}
\end{align}
where $dist_T(r,x)$ represents the distance between $r$ and $x$ in the spanning tree $T$.

 Moreover, we denote by $\mathcal{D}$ the uniform distribution over $\mathcal{F}$. The distribution $\mathcal{D}$ is the one used to obtain the lower bound for any graph and number $t$ of rounds. Unlike the two-round case where the round-1 queries are fixed, the $t \geq 3$ round case requires analyzing adaptive histories.  To handle the adaptivity, the model suggested the notion of  \emph{candidate set}. Suppose $H$ is a history reachable after $i$ rounds. The candidate set, denoted $\mathcal{C}(H)$, represents the vertices of $G$ that could still be local minima given the history $H$. 
\begin{figure}[h!]
    \centering
    \begin{subfigure}[b]{0.45\textwidth}
        \centering
        \includegraphics[width=\linewidth]{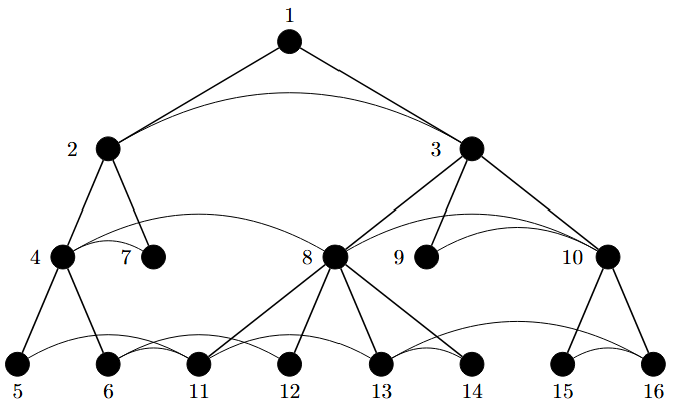}
        \caption{Graph with $n=16$ vertices}
    \end{subfigure}
    \hfill 
    \begin{subfigure}[b]{0.45\textwidth}
        \centering
        \includegraphics[width=\linewidth]{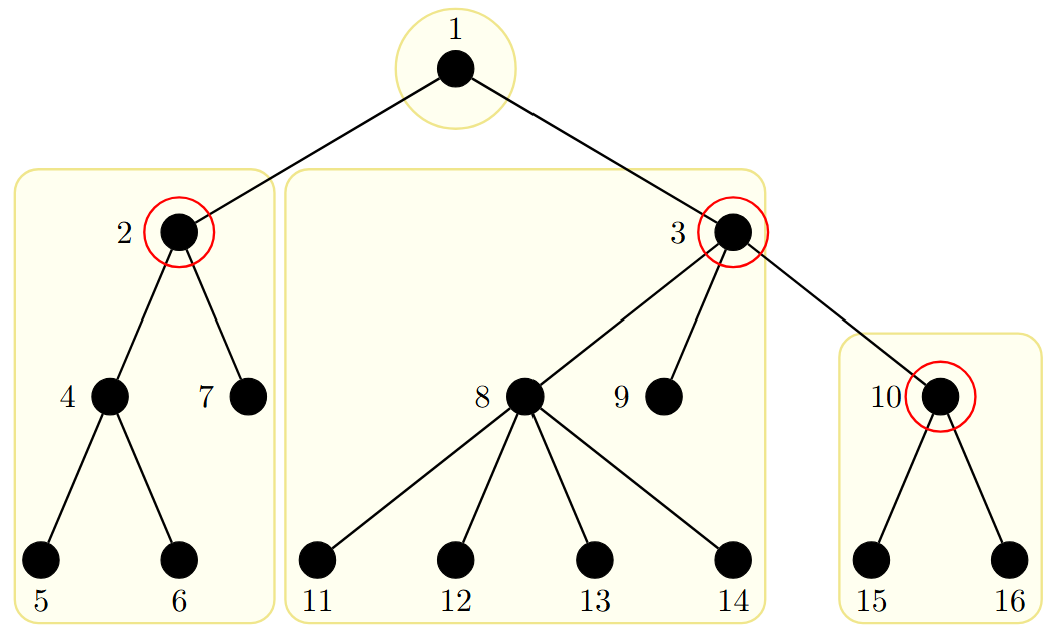}
        \caption{Spanning tree with the candidate sets attainable at the end of round $1$.}
    \end{subfigure}
    
    \caption{The left figure (a) shows a graph $G$. The input function is drawn from $\mathcal{F}$. The right figure (b) shows a spanning tree of $G$ rooted at $1$. The set of queries $Q_1=\{2, 3, 10\}$ submitted in round 1 are circled in red. The partition of candidate sets attainable at the end of round 1 (depending on the answers to the queries) is illustrated using yellow frames.}
    \label{fig:candidate_sets}
\end{figure}

The model observed several properties of the candidate sets, which we  consolidated  into a key lemma stating that at the end of each round $i$:
\begin{itemize}
    \item (a) each attainable candidate  set  is a sub-tree of the original spanning tree $T$ of $G$; and 
    \item (b) the candidate sets partition the graph, and  there is a bijection from the set of histories reachable at the end of round $i$ and the set of candidate sets attainable at the end of round $i$.
\end{itemize}

While the model's intuition was correct, obtaining a rigorous proof for part (b) was more challenging. It initially treated the result as self-evident; when pressed for details, it produced a circular argument (very briefly, to prove the bijection, it  assumed that the candidate sets already form a partition of the set of vertices).
 We tried guiding it toward an inductive proof, and later toward a direct non-inductive one, but the circular argument  persisted in both. The turning point came  when we gave it a hint: assume by contradiction that a vertex lies in two candidate sets, and use part (a) to analyze the roots of the trees underlying the two sets. With this hint, the model  generated a correct proof that is now found in the paper.

\medskip 

The remainder of the $t$-round lower bound argument was correct. Thus we obtained:

\begin{theorem}[Theorem 2 in \cite{BPP26}]\label{thm:lb_t_rounds}
    Let $G = (V,E)$ be a connected undirected graph with $n$ vertices. The randomized query complexity of finding a local minimum on $G$ in $t \in \mathbb{N}^*$ rounds is $\Omega(t n^{1/t} - t)$.
\end{theorem}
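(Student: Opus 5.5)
The plan is to apply Yao's minimax principle to the distribution $\mathcal{D}$ defined above. Fix a spanning tree $T$ of $G$ rooted at $r$. For each $v\in V$ the function $f_v$ has a unique local minimum at $v$: along the root-to-$v$ path the values decrease, and off the path they increase with depth, so every $x\neq v$ has a strictly smaller neighbor in $T$, and hence in $G$. Moreover, a query at $x$ reveals exactly one bit, namely whether $x\preceq_T V^*$, because $dist_T(r,x)$ is known. So any deterministic $t$-round algorithm evolves through histories $H$, and each history determines a candidate set $\mathcal{C}(H)=\{v: f_v \text{ consistent with } H\}$. I would first establish the lemma described above. Part (a) says each attainable $\mathcal{C}(H)$ is a subtree of $T$: it is the set of descendants of the deepest queried ancestor-hit, minus the subtrees rooted at queried vertices answering ``not an ancestor'' that lie below it. Part (b) says the attainable candidate sets after round $i$ partition $V$ and correspond bijectively to histories. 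For (b) I would follow the hinted contradiction: if $v$ lay in two candidate sets, the two histories would be consistent with the same $f_v$, hence identical, since the algorithm is deterministic and the answers are functions of $v$.

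The key counting step generalizes Lemma~\ref{lem:tree_outcomes}. If, given history $H$ with candidate subtree $C=\mathcal{C}(H)$, the next round queries $q$ vertices, then $C$ splits into at most $q+1$ new candidate sets. The proof is the same injection: map each nonempty new signature to its deepest ``yes'' vertex inside $C$, plus one class for ``no yes inside $C$''. Queries outside $C$ give answers already determined by $H$, so they do not split $C$.

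Next comes the potential argument by induction on the number of remaining rounds. Let $E_k(m)$ be the minimal expected number of queries that a $k$-round algorithm must spend on a candidate subtree of size $m$, under a uniform prior, to succeed with probability $p$; at the end, success is at most (number of final blocks)$/m$. I would aim to prove $E_k(m)\ge k\,(pm)^{1/k}-k$, or a variant with explicit constants. The base case $k=1$ is immediate: $q+1\ge pm$. For the inductive step, round-one queries $q$ split $C$ into at most $q+1$ pieces $C_i$ with success $p_i$, where $\sum |C_i|p_i\ge pm$. The cost is then at least $q+\sum_i \frac{|C_i|}{m}E_{k-1}(|C_i|,p_i)$. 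Apply the induction hypothesis and then convexity/Hölder (as Cauchy--Schwarz was used in the two-round proof above) to lower bound $\sum_i \frac{|C_i|}{m}\,(p_i|C_i|)^{1/(k-1)}$ given at most $q+1$ parts. Finally, optimize over $q$ by AM--GM: $q+(k-1)\,((pm)/(q+1))^{1/(k-1)}\ge k\,(pm)^{1/k}-O(1)$. Taking $m=n$ and $p=9/10$ yields $\Omega(tn^{1/t}-t)$.

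The main obstacle I expect is the inductive step, where the success probabilities $p_i$ vary across parts. Jensen/Hölder must be applied to the joint quantity $p_i|C_i|$ while keeping the additive $-k$ losses under control. I would first try to redefine the potential in terms of the ``effective size'' $s_i=p_i|C_i|$. This gives the bound $E_{k-1}\ge (k-1)s_i^{1/(k-1)}-(k-1)$, which is concave in $s_i$, and the weights are $|C_i|/m\ge s_i/m$. If that fails, I would restrict to deterministic algorithms that are correct on all inputs of their chosen blocks, which loses only constant factors.
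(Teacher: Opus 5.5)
Your setup is sound. The family $f_v$ with a unique local minimum, the one-bit answers, the description of candidate sets as subtrees of $T$, and the split of a candidate subtree into at most $q+1$ pieces are all correct, including the point that queries outside $C$ are uninformative. Your argument for part (b) is also correct: candidate sets are the fibres of the deterministic map from $V^*$ to the history. It is simpler than the root-based contradiction used in the paper.

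\textbf{The gap: your inductive invariant is false.} The claim $E_k(m)\ge k(pm)^{1/k}-k$, with $p$ the conditional success probability, already fails for $k=2$, even at $p=9/10$. Take a path of $m$ vertices rooted at an endpoint, and consider this two-round strategy:
\begin{itemize}
\item In round one, query about $0.9\sqrt m$ evenly spaced vertices, but only among the first $0.9m$.
\item In round two, query all of whichever block of size about $\sqrt m$ was hit.
\item Give up on the final block of size $0.1m$.
\end{itemize}
This succeeds with probability at least $9/10$ at expected cost below $1.8\sqrt m$, which is less than $2\sqrt{0.9m}-2$ for large $m$.

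The step that breaks is your H\"older/AM--GM step. From $|C_i|\ge s_i=p_i|C_i|$ and the power-mean inequality over at most $q+1$ parts, you only get $\frac1m\sum_i|C_i|\,s_i^{1/(k-1)}\ge\frac1m\sum_i s_i^{k/(k-1)}\ge p\,\bigl(pm/(q+1)\bigr)^{1/(k-1)}$. The extra factor $p$ is dropped in your AM--GM line, and it cannot be recovered: algorithms really can concentrate their queries on a few small blocks and abandon a large one. The paper's two-round bound $2c\sqrt n-2$, with $c$ outside the square root, has exactly this shape.

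\textbf{The repair: keep $p$ as an outer factor.} Show that every $k$-round strategy on a candidate subtree of size $m$ with conditional success probability $p$ has expected cost at least $kp\bigl((pm)^{1/k}-1\bigr)$.
\begin{itemize}
\item Base case: $q\ge\max(0,pm-1)\ge p(pm-1)$.
\item Inductive step: applying the hypothesis to the pieces gives expected cost at least $q+\frac{k-1}{m}\sum_i s_i^{k/(k-1)}-(k-1)p\ge q+(k-1)p\,B\,x^{-1/(k-1)}-(k-1)p$, where $x=q+1$ and $B=(pm)^{1/(k-1)}$.
\item To close it, use $q\ge p(x-1)$ and AM--GM in the form $x+(k-1)Bx^{-1/(k-1)}\ge kB^{(k-1)/k}=k(pm)^{1/k}$.
\end{itemize}
This bound is increasing in $p$ once $pm\ge 1$. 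So success probability at least $9/10$ with $m=n$ gives $\tfrac{9t}{10}\bigl((0.9n)^{1/t}-1\bigr)=\Omega(tn^{1/t}-t)$, uniformly in $t$.

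Your fallback of restricting to strategies whose per-block success is $0$ or $1$ is not a valid without-loss-of-generality reduction, since losses could compound over the $t$ levels. With the corrected invariant it is also unnecessary.
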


We also asked the model to prove a linear lower bound for local search in two rounds on constant-degree expanders. Instead, it  responded by proposing a Parallel Steepest Descent with a Warm Start algorithm. This algorithm has  a query complexity slightly better than linear even for two rounds on any graph with constant maximum degree: $O(n/\log{n})$. The bound given by this algorithm is stated next for any number of rounds.

\begin{proposition}[Proposition 1 in \cite{BPP26}]\label{thm:t_rounds_const_deg_generalized}
    Let $G=(V,E)$ be a graph with $n$ vertices and maximum degree $\Delta$. The randomized query complexity of finding a local minimum in $t \ge 2$ rounds is $O(\sqrt{n} + t)$ when $\Delta \leq 2$ and $O\bigl(\frac{n}{t \cdot \log_\Delta n} + t \Delta^2 \sqrt{n}\bigr)$ when $\Delta \geq 3$.
\end{proposition}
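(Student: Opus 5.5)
The algorithm I would analyze is the ``Parallel Steepest Descent with a Warm Start'' mentioned above. It spends round~1 on random sampling to get a low starting value, and rounds $2,\dots,t$ on batched steepest descent.

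\textbf{Algorithm.} Assume the values of $f$ are distinct; otherwise break ties by vertex index, which preserves the set of local minima.
\begin{itemize}
\item \emph{Round 1 (warm start).} Query $m$ vertices chosen independently and uniformly at random, and let $v_0$ be the sampled vertex of smallest value.
\item \emph{Rounds $2,\dots,t$ (descent).} Maintain a current vertex $u$. In each round, query the entire ball $B_G(u,r+1)$ of radius $r+1$ around $u$. From these answers we can simulate $r$ steps of steepest descent starting at $u$ offline, since every vertex on the simulated path, together with its neighbors, lies inside the ball. Move $u$ to the endpoint of the simulated path.
\item \emph{Output.} As soon as a simulated step reaches a vertex with no smaller neighbor, output it. It is certified as a local minimum because all of its neighbors were queried.
\end{itemize}

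\textbf{Key steps in the analysis.}
\begin{enumerate}
\item \emph{Descent length is bounded by the sublevel set.} Steepest descent strictly decreases $f$. Hence the path from $v_0$ visits distinct vertices of the sublevel set $L=\{x : f(x)<f(v_0)\}$, and reaches a local minimum after at most $|L|$ steps.
\item \emph{The warm start makes $L$ small.} Let $k=\lceil 3n/m\rceil$. The event $|L|\ge k$ requires all $m$ samples to miss the $k$ smallest vertices, which has probability at most $(1-k/n)^m\le e^{-3}<1/10$. So with probability at least $9/10$ the descent needs at most $3n/m$ steps.
\item \emph{Enough descent capacity.} It therefore suffices that $(t-1)r\ge 3n/m$.
\item \emph{Total cost.} The number of queries is at most $m+(t-1)\,|B(u,r+1)|$, and $|B(u,r+1)|\le 1+\Delta\sum_{i<r+1}(\Delta-1)^i\le \Delta^{r+2}$.
\end{enumerate}

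\textbf{Choosing parameters.}
\begin{itemize}
\item \emph{Case $\Delta\ge 3$.} Take $r=\lfloor\tfrac12\log_\Delta n\rfloor$, so each ball has at most $\Delta^2\sqrt n$ vertices. Take $m=\lceil 3n/((t-1)r)\rceil=O\!\bigl(\tfrac{n}{t\log_\Delta n}\bigr)$. The total is $O\!\bigl(\tfrac{n}{t\log_\Delta n}+t\Delta^2\sqrt n\bigr)$; the factor $\Delta^2$ absorbs the floor and the ``$+2$'' in the radius.
\item \emph{Case $\Delta\le 2$.} The graph is a path or a cycle, so balls have size $2r+3$ rather than exponential size. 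Take $m=\lceil\sqrt n\rceil$ and $r=\lceil 3\sqrt n/(t-1)\rceil$. The cost is $\sqrt n+(t-1)(2r+3)=O(\sqrt n+t)$.
\end{itemize}

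\textbf{Main obstacle: degenerate parameter ranges.} In the formula for $\Delta\ge3$, the radius $r$ can be $0$ when $\log_\Delta n<2$, that is, when $\Delta$ is comparable to $\sqrt n$. In that regime I would use $r=\max(1,\cdot)$ and note that $t\Delta^2\sqrt n\ge n$, so the bound is trivial because querying every vertex costs only $n$. Similarly, if $m$ or $(t-1)r$ would exceed $n$, I would cap it at $n$, which keeps the stated bound valid.

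The remaining care is in checking that the offline simulation of steepest descent really needs only the radius-$(r+1)$ ball around the current vertex. The $j$-th step moves from a vertex at distance at most $j-1$ from $u$, and computing it requires the values of that vertex's neighbors, which lie at distance at most $j$. So $r$ steps use only vertices within distance $r$ of $u$, and $r+1$ gives the slack needed to certify a local minimum found on the last step.

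Finally, the procedure must run within the $t$ rounds even when it ends early: if the descent reaches a local minimum before round $t$, it simply outputs it, and later rounds issue no queries.
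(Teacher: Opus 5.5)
Your proposal is correct and matches the approach the paper attributes to this result, ``Parallel Steepest Descent with a Warm Start'': random sampling in round~1, then batched steepest descent over balls of radius about $\tfrac12\log_\Delta n$ (or $O(\sqrt n/t)$ on paths and cycles), with the size of the sublevel set bounding the descent length. One small wording fix: tie-breaking by vertex index does not preserve the set of local minima exactly. What you need, and what does hold, is that every local minimum of the tie-broken order is a local minimum of $f$.
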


\paragraph{Concluding Remarks.} Our experiment demonstrates that the model can expedite theoretical research when treated as a research assistant. For instance, when guided toward separation numbers, the model developed the divide-and-conquer algorithm and its proof, leveraging the `Shattering Lemma' as a key component for the analysis. This process required vigilance; we had to rigorously audit the model's claims, as illustrated by a circular argument we uncovered in one of the lower bound lemmas. Despite the need for verification, the interaction was very fruitful, yielding a lower bound construction with a surprisingly elegant recursive structure.

\paragraph{Acknowledgements.}
Simina Br\^anzei was supported by US National Science
Foundation grant CCF-2238372. 
Ioannis Panageas was supported by US National Science Foundation grant CCF-2454115.

\newpage

\subsection{Robust Coresets}
\textit{Written by Yi Li, David P. Woodruff, and Xuan Wu.}

\subsubsection*{Problem Context}
In the standard coreset framework, one is given a finite weighted collection of functions $\mathcal{F} = \{(f, w_f )\}$, where each function $f : \mathbb{R}^d \to \mathbb{R}_{\geq 0}$ is associated with a weight $w_f \geq 0$. The loss function for $\mathcal{F}$ is defined as $\mathcal{L}(\mathcal{F};x) = \sum_{f\in\mathcal{F}} w_f f(x)$. An $\eps$-coreset is a weighted subset $\tilde{\mathcal{F}}\subset \mathcal{F}$ such that
\[
(1-\eps)\mathcal{L}(\mathcal{F};x) \leq \mathcal{L}(\tilde{\mathcal{F}};x) \leq (1+\eps)\mathcal{L}(\mathcal{F};x)
\]
holds simultaneously for all possible values of $x$. Classical problems such as $\ell_p$-regression and clustering naturally fit into this framework. A typical approach to construct a coreset is sensitivity sampling, where each function $f\in\mathcal{F}$ is sampled with probability proportional to its sensitivity, defined as
\[
\sigma_{\mathcal{F}}(f) = \sup_{x} \frac{w_f f(x)}{\mathcal{L}(\mathcal{F};x)}.
\]
For example, in $\ell_2$-subspace embedding and regression, these sensitivities coincide with leverage scores. In the more general $\ell_p$-subspace embedding and regression, exact sensitivities are harder to compute and Lewis weights are commonly used as an effective proxy. Similarly, in clustering, recent work~\cite{bansal2024} has shown that sensitivity sampling can achieve near-optimal coreset sizes. We also define the sensitivity of the function set $\mathcal{F}$ as $\sigma(\mathcal{F}) = \sum_{f\in \mathcal{F}} \sigma_{\mathcal{F}}(f)$.

Robust coresets can be studied within the same framework by replacing the standard loss with a trimmed loss. Specifically, for an integer $m\geq 0$, the trimmed loss is defined as
\[
\mathcal{L}^{(m)}(\mathcal{F};x) = \min_{\substack{\mathcal{F}'\subset\mathcal{F}\\ |\mathcal{F}\setminus \mathcal{F}'| \leq m}} \sum_{(f,w_f)\in \mathcal{F}'}^n w_f f(x),
\]
which discards (at most) $m$ largest contributions to the loss. Correspondingly, an $(\eps,m)$-robust coreset is a weighted subset $\tilde{\mathcal{F}}\subset \mathcal{F}$ such that
\[
(1-\eps)\mathcal{L}^{(m)}(\mathcal{F};x) \leq \mathcal{L}^{(m)}(\tilde{\mathcal{F}};x) \leq (1+\eps)\mathcal{L}^{(m)}(\mathcal{F};x)
\]
for all $x$. When $m=0$, this definition reduces to the standard $\eps$-coreset. Prior work~\cite{robust_coreset:icml2025} has shown that if the total sensitivity of $\mathcal{F}$ is bounded by $T$ (i.e.\ $\sigma(\mathcal{F}')\leq T$ for any non-empty subset $\mathcal{F}'\subset \mathcal{F}$), then there exists an $(\eps,m)$-robust coreset of size $O(Tm/\eps\cdot \log(Tm/\eps)) + Q$, where $Q$ denotes the size of a standard $\eps$-coreset.

\subsubsection*{AI Contribution}

AI provided a sharper analysis that eliminates the logarithmic factor, improving the robust coreset size bound to $O(Tm/\eps)+Q$, which is known to be tight.

\subsubsection*{Technical Details}

We first review the algorithms from \cite{robust_coreset:icml2025} and the associated guarantees before presenting the tighter analysis produced by AI. 

\begin{minipage}[t]{0.47\linewidth}
\begin{algorithm}[H]
	\caption{$\mathrm{Uniform}(A,\varepsilon, m)$}
	\label{alg:robust_coreset:alg_uniform}
	\begin{algorithmic}[1]
		\Require A set $A$ of functions, parameters $\varepsilon$ and $m$
		\Ensure A subset $D\subseteq A$
		\State $B\leftarrow \emptyset$
		\State for each $f\in A$, with probability $\frac{1}{m}$, add $f$ to $B$ \label{alg_uniform_generate_B}
		\State for each $f\in B$, compute the sensitivity $\sigma_B(f)$ \label{alg_uniform_compute_sensitivity}
		\State $D\leftarrow \{f\in B:\sigma_B(f)\geq \frac{\varepsilon}{4}\}$ \label{alg_uniform_generate_D}
		\State \Return $D$
	\end{algorithmic}
\end{algorithm}
\end{minipage}
\hfill
\begin{minipage}[t]{0.47\linewidth}
\begin{algorithm}[H]
	\caption{$\mathrm{Refine}(D,\varepsilon, m)$}
	\label{alg:robust_coreset:alg_refine}
	\begin{algorithmic}[1]
		\Require A coreset $D$, parameters $\varepsilon$ and $m$
		\Ensure A refined subset $\tilde{D}$ adapted for the robust optimization problem
		\State $\tilde{D}\leftarrow \emptyset$
		\For{$(f,\omega_f)\in D$}
		\State compute the sensitivity $\sigma_D(f)$
		\State $n_f\leftarrow \lceil \frac{m}{\varepsilon}\cdot \sigma_D(f)\rceil$
		\State Add $n_f$ copies of $(f, \frac{\omega_f}{n_f})$ to $\tilde{D}$
		\EndFor
		\State \Return $\tilde{D}$
	\end{algorithmic}
\end{algorithm}
\end{minipage}

\begin{minipage}[t]{0.47\linewidth}
	\begin{algorithm}[H]
		\caption{$\mathrm{Coreset}(A,\varepsilon, m)$}
		\label{alg:robust_coreset:alg_coreset}
		\begin{algorithmic}[1]
			\Require A set $A$ of functions, parameters $\varepsilon$ and $m$, and an algorithm $\mathrm{Vanilla}(A)$ to construct an $\varepsilon$-coreset for $A$
			\Ensure An $(\varepsilon,m)$-robust coreset for $A$
			\State $S\leftarrow \emptyset$
			\State $R\gets \Theta(m\log\frac{Tm}{\eps})$
			\For{$i=1,2,\cdots,R$} \label{alg_coreset_forloop}
			\State $D\leftarrow \mathrm{Uniform}(A,\varepsilon,m)$
			\State $S\leftarrow S\cup D$
			\EndFor
			\State $V\leftarrow \mathrm{Vanilla}(A\setminus S)$ \label{alg_coreset_vanilla}
			\State $\tilde{S}\leftarrow\{(f,1):f\in S\}$ \label{alg_coreset_S_tilde}
			\State \textbf{Return} $\tilde{S}\cup \mathrm{Refine}(V,\varepsilon,m)$. \label{alg_coreset_last}
		\end{algorithmic}
	\end{algorithm}
\end{minipage}
\hfill
\begin{minipage}[t]{0.47\linewidth}
	\begin{algorithm}[H]
		\caption{$\mathrm{ModifiedCoreset}(A,\varepsilon, m)$}
		\label{alg:robust_coreset:alg_coreset_new}
		\begin{algorithmic}[1]
			\Require A set $A$ of functions, parameters $\varepsilon$ and $m$, and an algorithm $\mathrm{Vanilla}(A)$ to construct an $\varepsilon$-coreset for $A$
			\Ensure An $(\varepsilon,m)$-robust coreset for $A$
			\State $R \gets 40m\ln(2000Tm/\eps)$
			\For{$i=1,2,\cdots,R$} 
			\State $D_i\leftarrow \mathrm{Uniform}(A,\varepsilon,m)$
                \For{$f \in D_i$}
                    \State $c_f \leftarrow c_f + 1$
                \EndFor
            \EndFor
			\item $S \gets \{f \in A : c_f \ge \frac{R}{10m}\}$
			\State $V\leftarrow \mathrm{Vanilla}(A\setminus S)$ 
			\State $\tilde{S}\leftarrow\{(f,1):f\in S\}$ 
			\State \textbf{Return} $\tilde{S}\cup \mathrm{Refine}(V,\varepsilon,m)$. 
		\end{algorithmic}
	\end{algorithm}
\end{minipage}

The algorithms from \cite{robust_coreset:icml2025} are reproduced in Algorithms~\ref{alg:robust_coreset:alg_uniform} to \ref{alg:robust_coreset:alg_coreset}, where Algorithm~\ref{alg:robust_coreset:alg_coreset} outputs the final robust coreset. A function $f\in A$ is called \emph{contributing} if there exists $x\in\R^d$ such that $f(x)\geq \frac{\varepsilon}{m}\cdot \mathcal{L}^{(m)}(A; x)$. The following guarantee of Algorithm~\ref{alg:robust_coreset:alg_uniform} was established in Lemma 4.3 of \cite{robust_coreset:icml2025}.
\begin{lemma} \label{lem:robust_coreset:lemma_uniform}
	Assume that $f$ is contributing, then with probability at least $\frac{1}{5m}$, the set returned by $\mathrm{Uniform}(A,\varepsilon,m)$ contains $f$.
\end{lemma}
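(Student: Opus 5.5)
The plan is to reduce the event ``$f\in D$'' to three simpler, essentially independent events, all defined with respect to the point $x$ witnessing that $f$ is contributing. Fix $x$ with $f(x)\ge \frac{\varepsilon}{m}\mathcal{L}^{(m)}(A;x)$; I take unit weights, as in the definition of contributing (with weights one replaces $f$ by $w_f f$ throughout). Let $O\subseteq A$ be the set of the $m$ functions with the largest values at $x$, so that $\sum_{g\in A\setminus O} g(x)=\mathcal{L}^{(m)}(A;x)$. Let $O' = O\setminus\{f\}$ and $R = A\setminus(O\cup\{f\})$. Since the sampling in line~\ref{alg_uniform_generate_B} is independent across functions, the following three events are mutually independent:
\[
E_1=\{f\in B\},\qquad E_2=\{B\cap O'=\emptyset\},\qquad E_3=\Bigl\{\textstyle\sum_{g\in B\cap R} g(x)\le \frac{c}{m}\mathcal{L}^{(m)}(A;x)\Bigr\},
\]
where $c$ is a constant to be tuned.

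The first key step is deterministic: on $E_1\cap E_2\cap E_3$ we have $\sigma_B(f)\ge \varepsilon/4$. Indeed, since $f\in B$, the definition of sensitivity evaluated at the particular point $x$ gives
\[
\sigma_B(f)\ \ge\ \frac{f(x)}{f(x)+\sum_{g\in B\cap R} g(x)}\ \ge\ \frac{f(x)}{f(x)+\frac{c}{m}\mathcal{L}^{(m)}(A;x)}.
\]
The last expression is increasing in $f(x)$, so substituting $f(x)\ge\frac{\varepsilon}{m}\mathcal{L}^{(m)}(A;x)$ gives $\sigma_B(f)\ge \frac{\varepsilon}{\varepsilon+c}$. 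This is at least $\varepsilon/4$ whenever $c\le 4-\varepsilon$, for instance $c=3$ when $\varepsilon\le 1$. Consequently, on this event $f$ survives the threshold in line~\ref{alg_uniform_generate_D}, i.e.\ $f\in D$.

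The second step bounds the three probabilities. Clearly $\Pr[E_1]=1/m$. Next, $\Pr[E_2]=(1-1/m)^{|O'|}\ge(1-1/m)^m\ge 1/4$ for $m\ge 2$. Finally, $\mathbb{E}\bigl[\sum_{g\in B\cap R}g(x)\bigr]\le \frac1m\sum_{g\in A\setminus O}g(x)=\frac1m\mathcal{L}^{(m)}(A;x)$, so Markov's inequality gives $\Pr[E_3]\ge 1-1/c$. Multiplying, independence yields a success probability of at least $\frac1m\cdot(1-\frac1m)^{m}\cdot(1-\frac1c)$.

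The main obstacle is making the constants reach exactly $\frac{1}{5m}$, together with the degenerate case $m=1$. For $m=1$ we have $(1-1/m)^m=0$, and $B=A$ with probability one, so that case must be argued directly (presumably via the convention for $m=1$ in the source). The crude bound with $c=3$ gives only about $\frac{1}{6m}$. To improve it, I would first use the sharper estimate $(1-1/m)^{|O'|}$ with $|O'|\le m$ and push $c$ toward $4-\varepsilon$ for small $\varepsilon$. If that is insufficient, I would replace Markov in $E_3$ by a variance/Chebyshev bound: each $g\in R$ has $g(x)\le \min_{h\in O}h(x)\le \mathcal{L}^{(m)}(A;x)/m$ (otherwise it would belong to $O$), so the sum has variance at most $\frac{1}{m^2}\mathcal{L}^{(m)}(A;x)^2$, which makes $\Pr[E_3]$ close to $1$ for the relevant $c$. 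The structural argument is unaffected by this constant tuning; only the final numerical bound depends on it.
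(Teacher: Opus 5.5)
Your reduction is sound as far as it goes. The events $E_1,E_2,E_3$ are determined by disjoint sets of coins, and on $E_1\cap E_2\cap E_3$ the witness $x$ gives $\sigma_B(f)\ge \varepsilon/(\varepsilon+c)\ge \varepsilon/4$ for $c=3$, $\varepsilon\le 1$.

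The gap is that the proposal never reaches $\frac{1}{5m}$, and neither suggested repair works.
\begin{itemize}
\item Pushing $c$ toward $4-\varepsilon$ keeps the Markov factor below $\frac34$. At $m=2$ this gives less than $\frac14\cdot\frac34=\frac{3}{16}<\frac15$. The sharper exponent $|O'|$ does not help, because $f\notin O$ (so $|O'|=m$) is fully compatible with $f$ being contributing.
\item The Chebyshev route rests on a false inequality. For $g\in R$ you do have $g(x)\le\min_{h\in O}h(x)$, but $\min_{h\in O}h(x)\le \mathcal{L}^{(m)}(A;x)/m$ fails in general. The top $m$ values can be arbitrarily large, and $R$ can consist of a single function carrying essentially all of $\mathcal{L}^{(m)}(A;x)$. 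Then the variance is about $\mathcal{L}^{(m)}(A;x)^2/m$, and Chebyshev yields nothing for large $m$.
\end{itemize}

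The missing observation is that your Markov bound is too weak only for small $m$, and exactly there $E_3$ is free. Since $R\subseteq A\setminus O$, you deterministically have $\sum_{g\in B\cap R}g(x)\le\sum_{g\in A\setminus O}g(x)=\mathcal{L}^{(m)}(A;x)$, and this is at most $\frac{3}{m}\mathcal{L}^{(m)}(A;x)$ whenever $m\le 3$. The proof then splits into two cases:
\begin{itemize}
\item For $m\in\{2,3\}$, the success probability is at least $\frac1m(1-\frac1m)^m$, i.e.\ $\frac{1}{4m}$ and $\frac{8}{27m}$ respectively.
\item For $m\ge 4$, your own bound gives $\frac1m(1-\frac1m)^m\cdot\frac23\ge\frac1m\cdot\frac{81}{256}\cdot\frac23=\frac{27}{128m}>\frac{1}{5m}$, since $(1-\frac1m)^m$ is increasing in $m$.
\end{itemize}
This finishes the proof for all $m\ge 2$ and $\varepsilon\le 1$.

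The case $m=1$ cannot be argued directly, because the statement is false there. In that case $B=A$ deterministically. Take $A=\{f,h\}$ with $f\equiv 1$ and $h\equiv K$. Then $\mathcal{L}^{(1)}(A;x)=1=f(x)$, so $f$ is contributing. Yet $\sigma_A(f)=\frac{1}{1+K}<\frac{\varepsilon}{4}$ for large $K$, so $f\notin D$ with probability one. The lemma, which the paper imports from \cite{robust_coreset:icml2025} without proof, has to be read with $m\ge 2$.
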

It was then argued in \cite{robust_coreset:icml2025} that the number of contributing functions in $A$ is $O(\frac{Tm}{\varepsilon}\cdot \log \frac{Tm}{\varepsilon})$. 
Consequently, Algorithm~\ref{alg:robust_coreset:alg_coreset} executes $\Theta(m\log(Tm/\eps))$ rounds of Algorithm~\ref{alg:robust_coreset:alg_uniform} in order to capture all contributing functions with high probability, thereby forming the basis of the robust coreset that can handle outliers.

Now we describe our interactions with AI. We first entered the following prompt.
\begin{chatbox}
\begin{userturn}
\textbf{Prompt 1.}
Can you resolve the logarithmic factor gap in the coreset construction in the attached paper? Feel free to give both a tighter analysis as well as to change the algorithmic framework in the paper in order to achieve this improved bound.

<generic instructions on being mathematically rigorous>
\end{userturn}
\begin{assistantturn}
\textbf{Gemini.}
Yes, the logarithmic factor gap in the coreset construction size can be resolved. The paper presents a coreset of size $O(K \log K) + Q$, where $K=Tm/\varepsilon$. We demonstrate that this can be improved to $O(K) + Q$ in expectation and with constant probability.

\hspace{0.5cm} This improvement is achieved in two steps: first, by providing a tighter analysis of the number of contributing functions, and second, by modifying Algorithm 3 to filter out non-contributing functions more effectively using frequency estimation.

\hspace{0.5cm}...
\end{assistantturn}
\end{chatbox}

The AI produces the following bound on the number of contributing functions, which already improves the argument in \cite{robust_coreset:icml2025}.

\begin{lemma} \label{lem:robust_coreset:number_of_contributing_functions}
The number of contributing functions in $A$ is $20\frac{Tm}{\eps}$.
\end{lemma}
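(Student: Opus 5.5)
Read the statement as: the number of contributing functions in $A$ is at most $20\frac{Tm}{\eps}$. The plan is a double-counting argument that combines Lemma~\ref{lem:robust_coreset:lemma_uniform} with the total-sensitivity bound $T$. We never count contributing functions directly. Instead we bound the expected size of the output of $\mathrm{Uniform}(A,\eps,m)$ in two ways.

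\textbf{Lower bound.} Let $C$ denote the set of contributing functions. By Lemma~\ref{lem:robust_coreset:lemma_uniform}, each $f\in C$ lies in the returned set $D$ with probability at least $\frac{1}{5m}$. By linearity of expectation,
\[
\E|D| \ge \frac{|C|}{5m}.
\]

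\textbf{Upper bound.} Condition on the sampled set $B$. Every $f\in D$ satisfies $\sigma_B(f)\ge \eps/4$. Hence
\[
|D|\cdot \frac{\eps}{4} \le \sum_{f\in B}\sigma_B(f) = \sigma(B) \le T,
\]
where the last step uses the hypothesis that every nonempty subset of $\mathcal{F}$ has total sensitivity at most $T$. If $B$ is empty, then $D$ is empty as well. So $|D|\le 4T/\eps$ deterministically, and in particular $\E|D|\le 4T/\eps$.

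\textbf{Combining.} The two bounds give
\[
|C| \le 5m\cdot \frac{4T}{\eps} = 20\,\frac{Tm}{\eps},
\]
which is exactly the constant in the statement.

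The step I expect to need the most care is confirming what Lemma~\ref{lem:robust_coreset:lemma_uniform} actually provides. It must give a probability bound over the randomness of $\mathrm{Uniform}$ alone, for each fixed contributing $f$. It must not be a statement that already presupposes a bound on $|C|$. The earlier $O(\frac{Tm}{\eps}\log\frac{Tm}{\eps})$ count in \cite{robust_coreset:icml2025} came from a union bound over repeated rounds. By contrast, the per-function probability $\frac{1}{5m}$ combined with linearity of expectation needs no union bound. That is why the logarithmic factor disappears.

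I would also check two smaller points:
\begin{itemize}
\item Sensitivities of the sampled set $B$ must be taken with the original weights, so that $\sigma(B)\le T$ really applies to $B$ as a subset of $\mathcal{F}$.
\item The threshold in the definition of $D$ must be $\eps/4$, since the constant $20 = 5\cdot 4$ depends on it.
\end{itemize}
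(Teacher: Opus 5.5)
Your proposal is correct and is essentially the paper's proof. Both arguments bound $|D|\le 4T/\eps$ deterministically from the threshold $\sigma_B(f)\ge \eps/4$ and the total-sensitivity bound. Both lower-bound $\E|D|\ge |C|/(5m)$ by linearity of expectation and Lemma~\ref{lem:robust_coreset:lemma_uniform}. Combining the two gives $|C|\le 20\,Tm/\eps$.
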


\begin{proof}
Let $C$ be the set of contributing functions, $N=|C|$. Let $D$ be the output of a single execution of Algorithm \ref{alg:robust_coreset:alg_uniform}.
Algorithm 1 returns $D = \{f \in B : \sigma_B(f) \geq \varepsilon/4\}$. By the assumption of strong total sensitivity $T$, we have $\sigma_B \leq T$.
Therefore, $|D| \cdot (\varepsilon/4) \leq \sum_{f \in D} \sigma_B(f) \leq \sigma_B \leq T$.
This implies $|D| \leq 4T/\varepsilon$ deterministically. Let $W = 4T/\varepsilon$.
Consequently, the expected size of $D$ is bounded: $\mathbb{E}[|D|] \leq W$.

Let $p_f = \Pr(f \in D)$. By \ref{lem:robust_coreset:lemma_uniform}, if $f \in C$, then $p_f \geq 1/(5m)$.
By linearity of expectation:
\[
\mathbb{E}[|D|] = \sum_{f \in A} p_f \geq \sum_{f \in C} p_f \geq N \cdot \frac{1}{5m}.
\]
Combining the inequalities:
\[
\frac{N}{5m} \leq W \implies N \leq 5mW = 5m \frac{4T}{\varepsilon} = \frac{20Tm}{\varepsilon}. \qedhere
\]
\end{proof}

The AI then proceeds with a modified version of Algorithm~\ref{alg:robust_coreset:alg_coreset} by tracking the frequency of each function in $S$ and discarding the ones with low frequencies. It then presents a theorem as follows.

\begin{theorem}[Improved Coreset Construction]
    Algorithm~\ref{alg:robust_coreset:alg_coreset}-Modified constructs an $(\varepsilon, m)$-robust coreset for $A$ of expected size $O(K) + Q$. The construction succeeds with high probability (at least $1-O(1/n)$).
\end{theorem}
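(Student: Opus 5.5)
We prove the theorem for Algorithm~\ref{alg:robust_coreset:alg_coreset_new} (ModifiedCoreset). Write $K = Tm/\eps$ and let $C$ be the set of contributing functions. The plan has three steps: show that $S$ contains all of $C$ with high probability, bound the expected size of $S$ by $O(K)$, and then reuse the correctness argument of \cite{robust_coreset:icml2025} verbatim.

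\textbf{Step 1: every contributing function survives the threshold.} Fix $f \in C$. The $R$ calls to Uniform are independent, and by Lemma~\ref{lem:robust_coreset:lemma_uniform} each call includes $f$ with probability at least $\frac{1}{5m}$. Hence $c_f$ stochastically dominates a $\mathrm{Bin}(R, \frac{1}{5m})$ variable with mean $\mu \ge R/(5m)$. The threshold $\frac{R}{10m}$ is at most $\mu/2$. A multiplicative Chernoff bound then gives
\[
\Pr\left[c_f < \tfrac{R}{10m}\right] \le \exp(-\mu/8) \le \exp\left(-\tfrac{R}{40m}\right) = \tfrac{\eps}{2000Tm}.
\]
Lemma~\ref{lem:robust_coreset:number_of_contributing_functions} gives $|C| \le 20K$, so a union bound over $C$ yields $\Pr[C \not\subseteq S] \le 20K \cdot \frac{1}{2000K} = \frac{1}{100}$.

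This is a constant failure probability, not the $1-O(1/n)$ claimed in the statement. To reach the stated guarantee, I would set $R = \Theta(m \log(nTm/\eps))$. The extra logarithm affects only the running time and not the size bound of Step~2, because Step~2 is insensitive to $R$. The main subtlety is therefore matching the stated probability; the size analysis is robust either way.

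\textbf{Step 2: bounding $\mathbb{E}|S|$ by Markov on counts.} As in the proof of Lemma~\ref{lem:robust_coreset:number_of_contributing_functions}, each call to Uniform outputs a set $D_i$ with $|D_i| \le 4T/\eps$ deterministically, by the total-sensitivity bound. Therefore
\[
\sum_{f} c_f = \sum_i |D_i| \le \frac{4RT}{\eps}
\]
always holds. Every $f \in S$ has $c_f \ge \frac{R}{10m}$, so
\[
|S| \cdot \frac{R}{10m} \le \frac{4RT}{\eps},
\]
which gives $|S| \le \frac{40Tm}{\eps} = O(K)$ deterministically. This cleanly removes the $\log K$ factor that Algorithm~\ref{alg:robust_coreset:alg_coreset} pays by taking the union of all $D_i$.

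\textbf{Step 3: correctness and size of the refined part.} Conditioned on $C \subseteq S$, every function in $A \setminus S$ is non-contributing. This is exactly the hypothesis under which \cite{robust_coreset:icml2025} proves that $\tilde S \cup \mathrm{Refine}(V,\eps,m)$ is an $(\eps,m)$-robust coreset. That argument uses only that the elements of $S$ keep weight $1$ and that $A \setminus S$ contains no contributing function; it never uses how $S$ was formed. So I would invoke it unchanged.

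For the size of the refined part, $V$ has size $Q$. Each $f \in V$ is replaced by $n_f = \lceil \frac{m}{\eps}\sigma_V(f) \rceil$ copies, so
\[
\sum_{f \in V} n_f \le Q + \frac{m}{\eps}\sigma(V) \le Q + \frac{Tm}{\eps} = Q + K.
\]
Combining this with Step~2, the output has size $O(K) + Q$, which is what the theorem asserts.
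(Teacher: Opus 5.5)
Your proof is correct and is essentially the paper's argument. The paper omits the proof of this first (expected-size) version, but its proof of the $0.99$ version, Theorem~\ref{thm:robust_coreset:improved}, follows the same three steps you use: a Chernoff bound with a union bound over the at most $20K$ contributing functions, the deterministic counting bound $|S|\le 40Tm/\eps$, and a deferral to Theorem 4.1 of \cite{robust_coreset:icml2025} for the rest. You correctly note that the printed $R=40m\ln(2000Tm/\eps)$ only gives failure probability $1/100$, and taking $R=\Theta(m\log(nTm/\eps))$ is the right fix for $1-O(1/n)$, since it leaves the $R$-independent size bound unchanged; your deterministic size bound is also stronger than the stated expected-size claim.
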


The proof is correct, though we omit it here, since the theorem states an expected size of the coreset, which is not desirable. We further make the following prompt.

\begin{chatbox}
\begin{userturn}
\textbf{Prompt 2.}
It looks correct, but the Theorem only gives the size in expectation. The earlier version had a high-probability bound, which shouldn’t be hard to get. Maybe you can prompt it to produce an upper bound that holds with high probability? Actually we need only probability $\geq 0.99$ instead of $1-1/n$.
\end{userturn}
\begin{assistantturn}
\textbf{Gemini.}
We demonstrate that the logarithmic factor in the coreset size bound of Theorem 4.1 in the paper "Robust Sparsification via Sensitivity" can be removed. The original paper presents a coreset of size $O(K \log K) + Q$, where $K=Tm/\varepsilon$. We show that a modification of the algorithm yields a coreset of size $O(K) + Q$ with high probability (at least 0.99).

\hspace{0.5cm}...

\textbf{Key Properties of Algorithm~\ref{alg:robust_coreset:alg_uniform}:}
\begin{enumerate}
    \item \textbf{Deterministic Size Bound:} The output $D$ always satisfies $|D| \leq W$. This is because $\sum_{f \in D} \sigma_B(f) \leq \sigma_B \leq T$. Since $\sigma_B(f) \geq \varepsilon/4$ for $f \in D$, we have $|D| \cdot (\varepsilon/4) \leq T$, so $|D| \leq W$.
    \item \textbf{Expected Size Bound:} $\E[|D|] \leq W$.
    \item \textbf{Capture Probability (Lemma 4.3):} If $f \in C$, the probability $p_f = P(f \in D)$ satisfies $p_f \geq 1/(5m)$.
\end{enumerate}

\hspace{0.5cm}...
\end{assistantturn}
\end{chatbox}

AI returns a correct version of the theorem.
\begin{theorem}
The Modified Algorithm constructs an $(\varepsilon, m)$-robust coreset for $A$ of size $O(K) + Q$ with probability at least 0.99.
\end{theorem}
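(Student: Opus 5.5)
The plan is to analyze Algorithm~\ref{alg:robust_coreset:alg_coreset_new} (ModifiedCoreset) with $R = 40m\ln(2000Tm/\eps)$ independent runs of $\mathrm{Uniform}$. Two things must be shown, and we give each failure probability $0.005$ and then take a union bound. First, with probability at least $0.995$, every contributing function lands in $S$. Second, with probability at least $0.995$, $|S| = O(K)$, where $K = Tm/\eps$. Given both events, correctness follows from the argument of \cite{robust_coreset:icml2025}. That argument only needs $S$ to contain all contributing functions, with $\mathrm{Vanilla}$ and $\mathrm{Refine}$ handling $A\setminus S$. The refined part has size $O(K)+Q$, as in the original analysis, because $\mathrm{Refine}$ creates $\sum_f \lceil m\sigma_V(f)/\eps\rceil \le |V| + mT/\eps$ copies. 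So the total size is $O(K)+Q$.

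\textbf{Capture of contributing functions.} Fix a contributing $f$. By Lemma~\ref{lem:robust_coreset:lemma_uniform}, each run includes $f$ independently with probability $p_f \ge 1/(5m)$. Hence $c_f$ stochastically dominates $\mathrm{Bin}(R,1/(5m))$, whose mean $\mu \ge R/(5m)$ is twice the threshold $R/(10m)$. A multiplicative Chernoff bound with $\delta=1/2$ gives
\[
\Pr[c_f < R/(10m)] \le \exp(-\mu/8) \le \exp\bigl(-R/(40m)\bigr) = \eps/(2000Tm).
\]
By Lemma~\ref{lem:robust_coreset:number_of_contributing_functions} there are at most $20Tm/\eps$ contributing functions. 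A union bound then gives total failure probability at most $20/2000 = 0.01$. If a budget of $0.005$ is needed, the constant inside the logarithm can be adjusted slightly.

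\textbf{Size of $S$.} This is the step I expect to be the crux, since a union bound over possibly $n$ non-contributing functions is unaffordable. I would avoid per-function concentration and use a counting (Markov) argument instead. Every $f\in S$ has $c_f \ge R/(10m)$, so
\[
|S|\cdot \frac{R}{10m} \le \sum_f c_f = \sum_{i=1}^R |D_i| \le R\cdot W,
\]
using the deterministic bound $|D_i|\le W = 4T/\eps$ established in the proof of Lemma~\ref{lem:robust_coreset:number_of_contributing_functions}. This yields $|S| \le 10mW = 40Tm/\eps = O(K)$ deterministically, with no probability loss at all. The threshold $R/(10m)$ is exactly what makes the logarithmic number of rounds cancel. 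In contrast, the old algorithm took the union of all $D_i$, which could be as large as $RW = \Theta(K\log K)$.

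Combining the two parts, with probability at least $0.99$ all contributing functions lie in $S$, and $|S|$ is always $O(K)$. The remaining ingredients are the robust-coreset guarantee for $\tilde S\cup\mathrm{Refine}(V,\eps,m)$ and the size bound on $\mathrm{Refine}$; both are inherited verbatim from \cite{robust_coreset:icml2025}. This proves the theorem.
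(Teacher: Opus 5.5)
Your proposal is correct and follows essentially the same route as the paper's proof (Theorem~\ref{thm:robust_coreset:improved}). A Chernoff lower-tail bound at threshold $R/(10m)$ plus a union bound over the at most $20K$ contributing functions gives failure probability $0.01$, and the counting inequality $|S|\cdot R/(10m) \le \sum_i |D_i| \le RW$ gives $|S| \le 40K$ deterministically; so your initial $0.005/0.005$ split is unnecessary, since the size bound costs no probability and the full $0.01$ budget goes to the capture step, exactly as in the paper.
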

The proof is correct, though somewhat suboptimal. It uses Markov’s inequality to bound the total number of sampled functions over $R$ rounds, which is unnecessary. This suboptimality arises from a weaker property generated by the AI, as presented in its response above. In particular, Point 1 already provides a deterministic bound $|D| \leq W$, rendering the expected bound in Point 2 unnecessary.

Therefore, we further prompt the AI to improve the argument.
\begin{chatbox}
\begin{userturn}
\textbf{Prompt 3.}
This looks correct though it is a bit suboptimal when bounding the size. Specifically, it says
<quote of the properties>.

In the first bullet point, it already obtains that $|D|\leq W$, so the second bullet point is unnecessary. In the subsequent proof of Part 2 of the theorem, it could directly use the deterministic bound $|D|\leq W$ instead of the expectation bound. This will further tighten the analysis a bit.
\end{userturn}
\begin{assistantturn}
\textbf{Gemini.}
<latex source>
\end{assistantturn}
\end{chatbox}

The results are correct and are reproduced below with minor modifications to fit the present context. The algorithm generated by the AI is presented in Algorithm~\ref{alg:robust_coreset:alg_coreset_new}, with slight simplifications.

\begin{theorem} \label{thm:robust_coreset:improved}
	Let $\varepsilon\in (0,\frac{1}{2})$ and $\mathcal{F}$ be a unit-weight function set with total sensitivity $T$. Suppose that there exists an algorithm that computes a vanilla $\varepsilon$-coreset for $F$ of size $Q$. 
	Then, Algorithm~\ref{alg:robust_coreset:alg_coreset_new} computes an $(\varepsilon,m)$-robust coreset for $F$ of size $O(\frac{Tm}{\varepsilon})+Q$, with probability at least $0.99$.
\end{theorem}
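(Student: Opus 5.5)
The proof has two parts: the size bound and the correctness of the output. Put $K = Tm/\varepsilon$, $W = 4T/\varepsilon$ and $R = 40m\ln(2000Tm/\varepsilon)$, and let $C$ be the set of contributing functions. The plan is to show that with probability at least $0.99$ two events hold simultaneously: (i) $C \subseteq S$, and (ii) $|S| = O(K)$.

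\textbf{Step 1 (size of $S$, deterministically).} Every function in $S$ satisfies $c_f \ge R/(10m)$. On the other hand, $\sum_{f} c_f = \sum_{i=1}^R |D_i| \le RW$, because each call to $\mathrm{Uniform}$ returns a set of size at most $W$. This is the deterministic bound from the proof of Lemma~\ref{lem:robust_coreset:number_of_contributing_functions}, which uses $\sigma_B \le T$. Hence
\[
|S| \le \frac{RW}{R/(10m)} = 10mW = 40K .
\]
This needs no probability, which is exactly the tightening requested in Prompt 3.

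\textbf{Step 2 (capturing contributing functions).} Fix $f \in C$. By Lemma~\ref{lem:robust_coreset:lemma_uniform}, each of the $R$ independent rounds contains $f$ with probability $p_f \ge 1/(5m)$. So $c_f$ stochastically dominates a $\mathrm{Bin}(R, 1/(5m))$ variable with mean $\mu \ge R/(5m)$. The threshold $R/(10m)$ is at most $\mu/2$, so the multiplicative Chernoff bound gives
\[
\Pr[c_f < R/(10m)] \le e^{-\mu/8} \le e^{-R/(40m)} = \frac{\varepsilon}{2000Tm}.
\]
By Lemma~\ref{lem:robust_coreset:number_of_contributing_functions}, $|C| \le 20Tm/\varepsilon$. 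A union bound over $C$ then bounds the failure probability by $20/2000 = 0.01$. I expect this to be the main obstacle: the point is that the logarithm now appears only in $R$, not in the size of $S$. The choice of $R$ is tuned exactly so that Chernoff combined with the improved count $|C| = O(K)$ gives failure probability $0.01$. The detail to check is that the rounds are independent, so the indicators $\mathbf 1[f\in D_i]$ are i.i.d.\ Bernoulli$(p_f)$.

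\textbf{Step 3 (correctness and total size).} On the event $C \subseteq S$, every function in $A \setminus S$ is non-contributing. At this point I would invoke the correctness argument of \cite{robust_coreset:icml2025} for Algorithm~\ref{alg:robust_coreset:alg_coreset}. That argument only uses that the set kept with unit weight contains all contributing functions. Applying $\mathrm{Refine}$ to a vanilla $\varepsilon$-coreset of the remaining non-contributing functions then yields an $(\varepsilon,m)$-robust coreset. The fact that $S$ may contain extra, non-contributing functions is harmless: those functions are kept exactly.

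For the size, $|\tilde S| \le 40K$. The refined part has size $\sum_{f\in V}\lceil \tfrac{m}{\varepsilon}\sigma_V(f)\rceil \le Q + \tfrac{m}{\varepsilon}T = Q + K$, using total sensitivity $\sigma(V)\le T$. So the output has size $O(K) + Q$ with probability at least $0.99$.
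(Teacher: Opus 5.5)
Your proposal is correct and matches the paper's proof essentially step for step. The steps are: the deterministic bound $|S|\le RW/(R/(10m))=40K$ from double counting $\sum_f c_f=\sum_i|D_i|\le RW$; the Chernoff lower-tail bound $\Pr[f\notin S]\le e^{-R/(40m)}=1/(2000K)$ for each contributing $f$, with a union bound over the at most $20K$ contributing functions; and deferral of the remaining correctness to Theorem~4.1 of the earlier work. Two small points of yours go beyond what the paper writes explicitly: the stochastic-dominance phrasing of the Chernoff step (the paper writes an equality where only $\E[c_f]\ge R/(5m)$ is known), and the explicit size count for the $\mathrm{Refine}$ output, which tacitly uses that the total-sensitivity bound $T$ also holds for the weighted coreset $V$, as in the original analysis.
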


\begin{proof}
\textbf{Size bound.} We analyze the size of the set $S$ returned by the algorithm.
Let $C_{\mathrm{total}}$ be the total count of all functions captured across all $R$ iterations.
\[ 
C_{\mathrm{total}} = \sum_{i=1}^{R} |D_i|. 
\]
By the deterministic size bound property of Algorithm~\ref{alg:robust_coreset:alg_uniform}, we know $|D_i| \leq W := 4T/\eps$ for all $i$.
Therefore, $C_{\mathrm{total}}$ is deterministically bounded:
\[ C_{\mathrm{total}} \leq \sum_{i=1}^{R} W = R W. \]

Now we relate the size of $S$ to $C_{\mathrm{total}}$. By definition, $f \in S$ if and only if $c_f \geq R\theta$ for $\theta=1/(10m)$.
\[
C_{\mathrm{total}} = \sum_{f \in A} c_f \geq \sum_{f \in S} c_f \geq \sum_{f \in S} (R \theta) = |S| \cdot R \theta. 
\]
Therefore, we have a deterministic bound on $|S|$:
\[ 
|S| \leq \frac{C_{\mathrm{total}}}{R\theta} \leq \frac{R W}{R \theta} = \frac{W}{\theta} = \frac{40Tm}{\eps}.
\]
Thus, $|S| \leq 40K$, where $K = Tm/\eps$. The size bound $|S| = O(K)$ holds deterministically.

\textbf{Correctness.} We want to bound the probability that a contributing function $f \in C$ is missed ($f \notin S$). The count $c_f$ is a sum of $R$ independent indicators. If $f\in C$, we have $\E[c_f] \geq R/(5m)$. If $f\not\in S$, it means that $c_f < R/(10m)$. Use the Chernoff lower tail bound $P(X \le \mu/2) \le \exp(-\mu/8)$:
\begin{multline*}
\Pr(f \notin S) = \Pr(c_f < R\theta) = \Pr\left(c_f \le \frac{1}{2}\E[c_f]\right) \le \exp\left(-\frac{R/(5m)}{8}\right) \\
= \exp\left(-\frac{40m \ln(2000K)}{40m}\right) = \frac{1}{2000K}.
\end{multline*}

We apply a union bound over $C$. By Lemma~\ref{lem:robust_coreset:number_of_contributing_functions}, $|C| \le 20K$.
\[ 
\Pr(C \not\subseteq S) = \Pr(\exists f \in C: f \notin S) \le \sum_{f \in C} \Pr(f \notin S) \le 20K \cdot \frac{1}{2000K} = \frac{1}{100} = 0.01. 
\]

The algorithm deterministically produces a set $S$ of size $|S| \leq 40K$. With probability at least $1 - 0.01 = 0.99$, the algorithm ensures $C \subseteq S$. The remainder of the correctness proof follows exactly as in Theorem 4.1 of \cite{robust_coreset:icml2025}.
\end{proof}
\newpage

\subsection{Submodular Function Maximization in a Stream}
\label{sec: streaming_submodular}
\textit{Written by Ola Svensson.}

\subsubsection*{Problem context}
We consider \emph{monotone submodular maximization} under a cardinality constraint in the \emph{adversarial injection streaming model} introduced in the paper ``Robust Algorithms under Adversarial Injections''~\cite{GargKRS20}.
In this model, the stream consists of an (unknown) set $E_{\mathrm{good}}$ of ``genuine'' elements arriving in \emph{uniformly random order}, interleaved with an arbitrary set $E_{\mathrm{noise}}$ of \emph{injected} elements placed by an adversary. The motivation of this semi-random model is to design algorithms in a beyond-worst-case model while not overfitting to the assumption (such as all elements arrive in a uniformly at random order).
The goal is to output a set $S \subseteq E_{\mathrm{good}} \cup E_{\mathrm{noise}}$ of size at most $k$ with value close to
\[
\OPT := \max_{|T|\le k,\; T\subseteq E_{\mathrm{good}}} f(T),
\]
where $f$ is a nonnegative monotone submodular function given by value-oracle access. We remark that it is important that $\OPT$ is defined with respect to the optimal solution of elements in $E_{\mathrm{good}}$. Indeed, if $\OPT$ were also allowed to contain elements of $E_{\mathrm{noise}}$ then the model would be equivalent to a fully adversarial one.  

The above paper proposes a streaming algorithm for this setting (the \emph{tree algorithm} for monotone submodular maximization), achieving an approximation factor around $0.55$ while storing a number of elements independent of $|E_{\mathrm{good}}|$ (but exponential in $k$).
Two concrete questions were left open for submodular maximization in this model:
(i) whether one can reach the offline-optimal constant $(1-1/e)$, and
(ii) whether one can reduce memory to $\poly(k)$ elements.

\subsubsection*{The tree algorithm and where the analysis bottleneck appears}
At a high level, the tree algorithm maintains a rooted tree of depth $k$ whose root-to-leaf paths represent candidate size-$\le k$ solutions. When a new element $e$ arrives, the algorithm considers attaching $e$ beneath existing nodes based on its marginal contribution.
The tree is kept from exploding by \emph{merging equal-marginal states}: within each level, only one representative for each marginal-gain value is retained.
This keeps the algorithm \emph{stream-length independent}, but in the worst case still requires exponentially many stored elements as a function of $k$.

The original approximation analysis proceeds by tracking a carefully chosen leaf as the $k$ (unknown) optimal elements from $E_{\mathrm{good}}$ appear in the stream.
A central technical device is a \emph{threshold parameter} $t$ used in a case split comparing (i) the best ``available'' marginal gain since the last optimal element appeared, versus (ii) the marginal gain of the next unseen optimal element.
This case split yields a recurrence $R(k,h)$ that lower bounds the fraction of $\OPT$ captured after ``progress'' through $h$ optimal elements with budget $k$, with the final guarantee given by $R(k,k)$.
In the paper, $t$ is chosen \emph{globally} (independent of the state $(k,h)$); numerically, setting $t\approx 0.8$ yields $R(k,k)\approx 0.55067$.

A key insight by Gemini---which ultimately enabled our improvement---is that $t$ is \emph{not an algorithm parameter}.
It only appears in the \emph{analysis}, meaning we are free to choose a different threshold in different states as long as the inequalities used in the proof remain valid.
\\\\
%
\textbf{AI Contribution}\\
The AI identified a latent degree of freedom in the existing algorithmic analysis. By replacing a global threshold with a state-dependent threshold, the model autonomously derived and proved an optimized recurrence relation, strictly improving the algorithm's approximation ratio from $\approx 0.55$ to $2-\sqrt{2}$.

We used Gemini to explore whether the analysis could be sharpened.

Gemini did {not} produce a new algorithm achieving $(1-1/e)$, nor did it find a way to reduce the memory requirement to $\poly(k)$ elements in this adversarial injection setting.
However, it \emph{did} identify a clean way to improve the \emph{existing} analysis of the tree algorithm: replace the single global threshold $t$ by a \emph{state-dependent threshold} $t_{k,h}$ in the recurrence.

Concretely, Gemini suggested using
\[
t_{k,h} \ :=\ \frac{k}{\sqrt{k^2+h^2}},
\]
and first predicted and then formally proved that the corresponding optimized recurrence achieves the exact constant
\[
R(k,k)\ =\ 2-\sqrt{2}\ \approx\ 0.585786,
\]
thereby improving the analyzed approximation ratio of the same underlying algorithm from $\approx 0.55067$ to $\approx 0.58579$.

The remainder of this testimonial highlights the four prompts that led from ``idea'' to a complete, algorithm-level guarantee.

\subsubsection*{Prompt 1: exploratory questions, and the key analysis insight}
\begin{figure}[!htb]
\begin{chatbox}
\begin{userturn}
\textbf{Prompt 1.}
Please investigate if you can improve the submodular part of the attached paper.
Specifically:
\begin{itemize}[leftmargin=*]
    \item Can we get $(1-1/e)$ for submodular maximization?
    \item Can we only store $\poly(k)$ elements instead of exponentially many in $k$?
\end{itemize}
Be mathematically rigorous.
\end{userturn}

\begin{assistantturn}
\textbf{Gemini (summary).}
The model did not produce a new algorithm achieving $(1-1/e)$, and it did not resolve how to store only $\poly(k)$ elements.
However, it identified that the \emph{analysis recurrence} in the paper can be strengthened by letting the threshold depend on the state: instead of a global $t$, introduce $t_{k,h}$.
It further hypothesized that the best constant achievable within this recurrence framework is
$2-\sqrt{2}$, and that a natural candidate is a closed-form threshold of the form
$t_{k,h} = t(h/k)$.
\end{assistantturn}
\end{chatbox}
\caption{The first prompt where Gemini identified the direction for improvement and the improved guarantee.}
\label{fig:first_prompt_submodular_streaming}
\end{figure}

In the first interaction with Gemini (\cref{fig:first_prompt_submodular_streaming}), we gave a fairly general prompt with two open questions. 
Even though the response did not resolve the two open algorithmic questions, it pinpointed a
\emph{previously unexplored degree of freedom} in the analysis and proposed (correctly) that exploiting it improves the approximation factor.

\subsubsection*{Prompt 2: a rigorous (but asymptotic) derivation via continuous limits}
\newpage
\begin{figure}[!htb]
\begin{chatbox}
\begin{userturn}
\textbf{Prompt 2.}
Can you rigorously improve it to obtain a $2-\sqrt{2}$ approximation, as suggested?
\\\\
<math rigor boilerplate>
\end{userturn}

\begin{assistantturn}
\textbf{Gemini (summary).}
Gemini produced a rigorous argument in a continuous/asymptotic regime (as $k\to\infty$) by recasting the recurrence optimization as a minimax problem that leads to differential inequalities/ODE-type conditions.
While this was not yet the discrete proof we needed, it strongly indicated the correct \emph{functional form} of the optimal thresholds, and guided us to the explicit choice
\[
t_{k,h}=\frac{k}{\sqrt{k^2+h^2}}.
\]
This directly set up Prompt 3, where we asked for a fully discrete proof.
\end{assistantturn}
\end{chatbox}
\caption{In the second prompt, we asked for  a rigorous proof of the improved guarantee. }
\label{fig:second_prompt_submodular_streaming}
\end{figure}
\FloatBarrier

In the second prompt, \cref{fig:second_prompt_submodular_streaming}, we asked for a rigorous proof and Gemini responded with an argument in the asymptotic regime. Even though the ODE argument was ``not exactly what we asked for'', it served as a highly effective
\emph{derivation tool}: it revealed the right closed-form threshold and the right constant before we invested time in a discrete induction.

\subsubsection*{Prompt 3: a complete discrete proof of the optimized recurrence}
We next asked Gemini to prove the optimized recurrence formally, with the thresholds fixed to the candidate form, see \cref{fig:third_prompt_submodular_streaming} for the third prompt.

\begin{figure}[!htb]
\begin{chatbox}
\begin{userturn}
\textbf{Prompt 3 (condensed).}
Define $t_{k,h}=\frac{k}{\sqrt{k^2+h^2}}$ and the recurrence
\[
R(k,h)=\min\{A_{k,h},B_{k,h},C_{k,h}\}
\]
with
\begin{align*}
A_{k,h} & :=\frac{t_{k,h}}{k}+\Bigl(1-\frac{t_{k,h}}{k}\Bigr)R(k,h-1),\\
B_{k,h} & :=\frac{1}{k}+\Bigl(1-\frac{1+t_{k,h}}{k}\Bigr)R(k-1,h-1), \\
C_{k,h} & :=\frac{1}{1+t_{k,h}},
\end{align*}
and base $R(k,0)=0$.
Prove that $R(k,k)=2-\sqrt{2}$ for all integers $k\ge 1$.
\end{userturn}

\begin{assistantturn}
\textbf{Gemini (summary).}
Gemini proposed an explicit closed-form candidate $P(k,h)$ (as a function of $h/k$) and showed it satisfies the recurrence by induction, yielding the exact identity
$R(k,k)=2-\sqrt{2}$ for all $k\ge 1$.
\end{assistantturn}
\end{chatbox}
\caption{In our third prompt, we asked for and received a complete proof of the discrete recurrence.}
\label{fig:third_prompt_submodular_streaming}
\end{figure}
\FloatBarrier

At this point, Gemini had delivered an almost complete proof of the mathematical core: the optimal solution of the improved recurrence.
This was the key missing technical ingredient needed to upgrade the approximation factor.

\subsubsection*{Prompt 4: connecting the recurrence back to the tree algorithm (and a hidden monotonicity issue)}
Finally, we asked Gemini to connect the recurrence to the original algorithm analysis, see~\Cref{fig:fourth_prompt_submodular_streaming}.
This step is subtle because the original paper uses a \emph{global} $t$ in an inequality of the form
$R(k-1,h-1)\le 1/(1+t)$, whereas our thresholds vary with $(k,h)$.

\newpage

\begin{figure}
\begin{chatbox}
\begin{userturn}
\textbf{Prompt 4 (condensed).}
Please relate the recurrence with state-dependent thresholds $t=t_{k,h}$ to the analysis of the tree algorithm.
In particular, in the analogue of Lemma~4 from the paper, the proof used
$R(k-1,h-1)\le 1/(1+t)$ for a global $t$.
Now $t$ depends on $(k,h)$, so this step requires care.
Please write step-by-step how $R(k,h)$ bounds the approximation guarantee, and identify what additional property is needed.
\end{userturn}
\begin{assistantturn}
\textbf{Gemini (summary).}
Gemini provided a step-by-step mapping from the algorithm's case analysis to the three terms
$A_{k,h}$, $B_{k,h}$, and $C_{k,h}$, showing how $R(k,h)$ lower bounds the value of a carefully chosen leaf as the optimal elements appear.
Crucially, it identified that the ``global-$t$'' inequality is replaced by a requirement of the form
\[
R(k-1,h-1)\ \le\ \frac{1}{1+t_{k,h}},
\]
which does \emph{not} automatically hold for arbitrary state-dependent thresholds.
Gemini then pinpointed a sufficient condition: a monotonicity relation along the diagonal,
\[
t_{k,h}\ \le\ t_{k-1,h-1},
\]
which implies $\frac{1}{1+t_{k-1,h-1}} \le \frac{1}{1+t_{k,h}}$ and allows the induction to go through.
It also proved this monotonicity for the chosen optimal thresholds $t_{k,h}=\frac{k}{\sqrt{k^2+h^2}}$.
\end{assistantturn}
\end{chatbox}
\caption{In the fourth prompt, we asked Gemini to correctly relate the improved analysis of the recurrence to the improved analysis of the algorithm.}
\label{fig:fourth_prompt_submodular_streaming}
\end{figure}
\FloatBarrier

This last step was important for correctness: it is easy to ``solve'' an abstract recurrence while accidentally breaking an inequality needed to justify the recurrence from the algorithm.
Gemini not only identified the missing monotonicity condition but also proved it for the optimal threshold schedule, thereby completing the logical chain from the recurrence back to the streaming algorithm.

\subsubsection*{Outcome and takeaway}
The outcome of this interaction was a strictly improved approximation guarantee for the
\emph{same} tree algorithm in the adversarial injection streaming model:
\[
\text{approximation ratio improved from }\approx 0.55067\text{ to }2-\sqrt{2}\approx 0.58579,
\]
achieved  by refining the analysis (via a sharper, state-dependent choice of thresholds).

From a ``how to use Gemini'' perspective, this case study highlights a valuable contribution:
even when a model does not solve the headline open problems (here, $(1-1/e)$ or $\poly(k)$ memory),
it can still make a decisive contribution by (i) identifying a latent degree of freedom in an existing proof,
(ii) guessing the right closed form for an optimum constant/parameter schedule, and
(iii) producing formal proofs with minimal guidance.
\\\\
{\bf The Formal Improved Analysis by Gemini}
\\\\
The following is the complete proof that was automatically generated by Gemini. We have kept the writing exactly as automatically produced.

\subsubsection*{Preliminaries and Notation}

\paragraph{Submodularity.}
A function $f$ is submodular if for all $S \subseteq T \subset E$ and $e \in E \setminus T$, the marginal gain $f(e|S) := f(S \cup \{e\})-f(S)$ satisfies $f(e|S) \ge f(e|T)$. It is monotone if $f(S) \le f(T)$ for $S \subseteq T$.

\begin{lemma}[Average Marginal Bound]\label{lem:avg_marginal}
Let $O$ be a set of size $k$ and $S$ be any partial solution. For a monotone submodular function $f$, the expected marginal gain of a random element $o \in O$ satisfies:
$$ \mathbb{E}_{o \sim O}[f(o \mid S)] \ge \frac{f(O \cup S) - f(S)}{k} $$
If $S=\emptyset$ and $O=E_{opt}$, then $\mathbb{E}_{o \sim E_{opt}}[f(o)] \ge \frac{OPT}{k}$.
\end{lemma}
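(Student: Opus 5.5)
The plan is to prove the Average Marginal Bound by the standard telescoping argument for monotone submodular functions. Write $O=\{o_1,\dots,o_k\}$ in an arbitrary order and, for $j=0,\dots,k$, set $S_j := S\cup\{o_1,\dots,o_j\}$. Then $S_0=S$ and $S_k=O\cup S$, so
\[
f(O\cup S)-f(S)=\sum_{j=1}^{k}\bigl(f(S_j)-f(S_{j-1})\bigr)=\sum_{j=1}^{k} f(o_j\mid S_{j-1}).
\]
If some $o_j$ already lies in $S_{j-1}$ (for instance because $o_j\in S$), its term is simply $0$. The identity still holds, and the inequality in the next step is still valid because marginals of elements of $S$ are $0$ on both sides.

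The key step is to bound each term by submodularity, using $S\subseteq S_{j-1}$:
\[
f(o_j\mid S_{j-1})\le f(o_j\mid S).
\]
The definition stated earlier only covers $e\notin T$, so the cases $o_j\in S_{j-1}$ need a word. If $o_j\in S_{j-1}\setminus S$, the left side is $0$, while monotonicity gives $f(o_j\mid S)\ge 0$. If $o_j\in S$, both sides are $0$. Summing over $j$ gives
\[
\sum_{o\in O} f(o\mid S)\ge f(O\cup S)-f(S).
\]
Dividing by $k=|O|$ turns the left side into $\mathbb{E}_{o\sim O}[f(o\mid S)]$ for $o$ drawn uniformly from $O$, which is the claim.

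For the special case, take $S=\emptyset$ and $O=E_{opt}$, an optimal set of size $k$ inside $E_{\mathrm{good}}$. Then $f(O\cup S)-f(S)=f(E_{opt})-f(\emptyset)$. Assuming the standard normalization $f(\emptyset)=0$, this equals $\OPT$. Without that normalization, nonnegativity of $f$ only gives $f(E_{opt})-f(\emptyset)\le\OPT$, which runs the wrong direction. So I would record $f(\emptyset)=0$ explicitly as an assumption, or state the bound as $(\OPT-f(\emptyset))/k$.

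The argument is routine. The only points needing care are the boundary cases where elements of $O$ overlap $S$ (or repeat), which are handled by monotonicity as above, and the normalization $f(\emptyset)=0$ in the corollary. I expect no real obstacle beyond stating these conventions cleanly.
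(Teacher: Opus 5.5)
Your proof is correct and is the same argument as the paper's: the paper simply asserts that, by submodularity, the marginals $f(o\mid S)$ summed over $O$ are at least $f(O\cup S)-f(S)$ and then averages, and your telescoping chain $S_j$ is exactly the justification of that step. One small correction to your caveat: the normalization $f(\emptyset)=0$ is not needed for the special case, because $f(\{o\}) = f(o\mid\emptyset)+f(\emptyset)$ gives $\mathbb{E}_{o}[f(\{o\})] \ge \frac{\OPT - f(\emptyset)}{k} + f(\emptyset) \ge \frac{\OPT}{k}$, using only nonnegativity of $f$ and $k\ge 1$ (the paper does in any case set $f(\emptyset)=0$ elsewhere in its analysis).
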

\begin{proof}
This follows from the definition of submodularity. The sum of the marginals of elements in $O$ added to $S$ is at least the total gain $f(O \cup S) - f(S)$. Taking the average yields the result.
\end{proof}

\paragraph{Stream Notation}
Let $\sigma$ be the input stream. $E_{opt} = \{o_1, \dots, o_k\}$. Let $\pi$ be the random permutation of $E_{opt}$. $o^\pi_i$ is the $i$-th element of $E_{opt}$ to arrive in the stream. $O^\pi_i = \{o^\pi_1, \dots, o^\pi_i\}$. $OPT = f(E_{opt})$.

\subsubsection*{The Tree Algorithm}
We briefly review the Tree Algorithm from \cite{GargKRS20}. For simplicity, we assume the algorithm knows $OPT$ and that the set of possible marginal gains $I$ is small (these assumptions can be removed using standard techniques involving discretization and guessing, as detailed in the appendix of \cite{GargKRS20}).

The algorithm maintains a rooted tree $\mathcal{T}$ of height at most $k$. Each node represents a partial solution (the set of elements on the path from the root to the node).

\begin{algorithm}
\caption{Tree Algorithm}
\begin{algorithmic}[1]
\State Initialize tree $\mathcal{T}$ with only the root (representing $S=\emptyset$).
\For{each element $e$ arriving in the stream $\sigma$}
    \For{each node $v$ in $\mathcal{T}$}
        \State Let $S_v$ be the solution corresponding to $v$.
        \If{$|S_v| < k$}
            \State Calculate the marginal gain $g = f(e|S_v)$.
            \If{$v$ has no child $c$ such that $f(c|S_v) = g$}
                \State Add a new child node corresponding to $S_v \cup \{e\}$.
            \EndIf
        \EndIf
    \EndFor
\EndFor
\State Return the solution $S_{ALG}$ corresponding to the leaf with maximum value $f(S_{ALG})$.
\end{algorithmic}
\end{algorithm}

\subsubsection*{Analysis via Generalized Recurrence}
The core of the analysis involves defining a specific path in the execution tree and lower bounding its expected value. We generalize the approach by introducing a state-dependent threshold.

\paragraph{The Generalized Recurrence Relation}
We define the domain $\mathcal{D} := \{(k, h) \in \mathbb{Z}_{\ge 0}^2 : 0 \le h \le k\}$. We introduce a generalized recurrence $R(k, h)$ which aims to lower bound the approximation ratio when the optimal solution has size $k$ and the algorithm's budget is $h$.

\begin{definition}[Threshold Function]
We define a threshold function $T: \mathcal{D} \setminus \{(0,0)\} \to (0, 1]$.
\end{definition}

\begin{definition}[Generalized Recurrence]
Let $R: \mathcal{D} \to \mathbb{R}_{\ge 0}$ be defined as follows: $R(k, 0) := 0$ for all $k \ge 0$. For $(k, h) \in \mathcal{D}$ with $k \ge 1, h \ge 1$, let $t = T(k, h)$.
$$ R(k, h) := \min \{A_{k,h}, B_{k,h}, C_{k,h}\} $$
where
\begin{align*}
A_{k,h} &= \frac{t}{k} + \left(1 - \frac{t}{k}\right)R(k, h-1), \\
B_{k,h} &= \frac{1}{k} + \left(1 - \frac{1+t}{k}\right)R(k-1, h-1), \\
C_{k,h} &= \frac{1}{1+t}.
\end{align*}
\end{definition}

\paragraph{Adapted Path Selection Strategy}
To utilize this recurrence, the analysis employs a path selection strategy that dynamically uses the threshold corresponding to the current subproblem. This strategy defines a path in the tree whose expected value we analyze.

We define the path $S^\pi_h = \{s^\pi_1, \dots, s^\pi_h\}$ inductively. We track the state $(k_i, h_i, n_i)$: remaining OPT size, remaining budget, and index of the current OPT element. Initialize $(k_1, h_1, n_1) = (k, h, 1)$.

At step $i$ (for $i=1, \dots, h$):
1. Set the threshold $t_i = T(k_i, h_i)$.
2. Define $u^\pi_i$ (the best non-OPT element seen so far) and $v^\pi_i$ (the better of $u^\pi_i$ and the current OPT element $o^\pi_{n_i}$) relative to $S^\pi_{i-1}$ (as defined in \cite{GargKRS20}).
3. Apply the decision rule:
\begin{itemize}
    \item \textbf{Case 1 (Greedy Choice):} If $\mathbb{E}_\pi[f(u^\pi_i|S^\pi_{i-1})] \ge t_i \cdot \mathbb{E}_\pi[f(o^\pi_{n_i}|S^\pi_{i-1})]$. Set $s^\pi_i = u^\pi_i$. Update: $(k_{i+1}, h_{i+1}) = (k_i, h_i-1)$, $n_{i+1} = n_i$.
    \item \textbf{Case 2 (Wait for OPT):} Otherwise. Set $s^\pi_i = v^\pi_i$. Update: $(k_{i+1}, h_{i+1}) = (k_i-1, h_i-1)$, $n_{i+1} = n_i+1$.
\end{itemize}

\paragraph{Connecting the Recurrence to the Algorithm}
We now prove that $R(k, h)$ bounds the performance of this strategy, provided the threshold function satisfies a monotonicity property.

\begin{definition}[Monotonic Threshold Condition (MTC)]
A threshold function $T(k, h)$ satisfies the Monotonic Threshold Condition (MTC) if for all $k \ge 2, h \ge 1$:
$$ T(k, h) \le T(k-1, h-1). $$
\end{definition}

\begin{lemma}\label{lem:mtc_bound}
If $T(k, h)$ satisfies the MTC, then for all $k \ge 1, h \ge 1$, letting $t = T(k, h)$:
$$ R(k-1, h-1) \le \frac{1}{1+t}. $$
\end{lemma}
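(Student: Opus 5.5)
The proof is a direct case analysis on $h$. It uses only the definition of the generalized recurrence, in particular that $R(k,h)$ is a minimum that includes the term $C_{k,h}=1/(1+T(k,h))$, together with the Monotonic Threshold Condition. Throughout, $(k,h)\in\mathcal{D}$ with $k\ge 1$, $h\ge 1$, and $t=T(k,h)\in(0,1]$, so the target $1/(1+t)$ is a positive number.

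\emph{Case $h=1$.} Here $R(k-1,h-1)=R(k-1,0)=0$ by the base case of the recurrence. This is valid for every $k\ge 1$, including $k=1$, where $R(0,0)=0$. Since $1/(1+t)>0$, the inequality holds trivially.

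\emph{Case $h\ge 2$.} Because $(k,h)\in\mathcal{D}$ we have $k\ge h\ge 2$. Hence $(k-1,h-1)\in\mathcal{D}$ with $k-1\ge 1$ and $h-1\ge 1$, so $R(k-1,h-1)$ is given by the recursive clause. Since it is a minimum of three terms, it is at most its third term:
\[
R(k-1,h-1)\le C_{k-1,h-1}=\frac{1}{1+T(k-1,h-1)}.
\]
Now $k\ge 2$ and $h\ge 1$, so the MTC applies and gives $T(k,h)\le T(k-1,h-1)$. The map $x\mapsto 1/(1+x)$ is decreasing on $(0,\infty)$, so
\[
\frac{1}{1+T(k-1,h-1)}\le\frac{1}{1+T(k,h)}=\frac{1}{1+t}.
\]
Chaining the two inequalities gives $R(k-1,h-1)\le 1/(1+t)$.

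There is no real obstacle here. The only point needing care is bookkeeping: checking that $(k-1,h-1)$ falls under the recursive clause exactly when $h\ge 2$, and that this also forces $k\ge 2$, which is needed to invoke the MTC. The boundary cases with $h=1$ (including $k=1$) are all absorbed by the base case $R(\cdot,0)=0$.
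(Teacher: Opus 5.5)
Your proof is correct and follows essentially the same route as the paper: you bound $R(k-1,h-1)$ by its term $C_{k-1,h-1}$, invoke the MTC, and use that $x\mapsto 1/(1+x)$ is decreasing. The only difference is that you split on $h=1$ versus $h\ge 2$, whereas the paper splits on $k=1$ versus $k\ge 2$. Your split is slightly cleaner, because it never cites the recursive clause for the base-case value $R(k-1,0)$ when $k\ge 2$ and $h=1$.
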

\begin{proof}
If $k=1$, then $h=1$. $R(0,0)=0$. Since $t>0$, the inequality holds. Assume $k \ge 2$. By definition of the recurrence:
$$ R(k-1, h-1) \le C_{k-1,h-1} = \frac{1}{1+T(k-1, h-1)}. $$
By the MTC, $T(k, h) \le T(k-1, h-1)$. Thus $1+T(k, h) \le 1+T(k-1, h-1)$.
$$ R(k-1, h-1) \le \frac{1}{1+T(k-1, h-1)} \le \frac{1}{1+T(k, h)} = \frac{1}{1+t}. $$
\end{proof}

We can now prove the main connection theorem.

\begin{theorem}\label{thm:connection}
If the threshold function $T(k, h)$ satisfies the MTC, then for all instances of the problem and $(k, h) \in \mathcal{D}$, the solution $S^\pi_h$ defined by the adapted strategy satisfies:
$$ \mathbb{E}_\pi[f(S^\pi_h)] \ge R(k, h) \cdot OPT. $$
\end{theorem}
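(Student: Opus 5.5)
We prove the statement by strong induction on $h$, and within each $h$ on $k$. The inductive hypothesis is applied to the residual subproblem that remains after the first step of the adapted path. The base case $h=0$ is immediate, since $R(k,0)=0 \le \mathbb{E}_\pi[f(S^\pi_0)]/OPT$.

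For $h\ge 1$, fix $t=T(k,h)$ and look at step $1$ of the strategy. Following \cite{GargKRS20}, we strengthen the claim so that it applies to a residual problem. Conditioned on the first chosen element $s_1$, the remaining instance is again an adversarial-injection stream. Its objective is $g(\cdot)=f(\cdot\mid s_1)$, which is still monotone submodular. Its remaining optimal set has size $k$ or $k-1$, in random order. By the way the adapted strategy was defined, it recursively uses the thresholds $T(k_2,h_2)$ of the new state, so the inductive hypothesis applies to it. We then split according to the decision rule.

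In Case 1 we have $\mathbb{E}[f(u_1)]\ge t\,\mathbb{E}[f(o_{1})]\ge t\,OPT/k$ by Lemma~\ref{lem:avg_marginal}. The residual problem has $k$ optimal elements and budget $h-1$. Its residual optimum is at least $OPT-f(s_1)$, because $f(E_{opt}\cup\{s_1\})-f(s_1)\ge OPT - f(s_1)$. Induction gives
\[ \mathbb{E}[f(S_h)]\ge \mathbb{E}[f(s_1)]+R(k,h-1)(OPT-\mathbb{E}[f(s_1)]). \]
Since $R(k,h-1)\le 1$, the right-hand side is nondecreasing in $\mathbb{E}[f(s_1)]$. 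Substituting the lower bound $t\,OPT/k$ therefore gives at least $A_{k,h}\cdot OPT$.

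In Case 2 we take $v_1$ and wait for $o_1$. Then $\mathbb{E}[f(v_1)]\ge \mathbb{E}[f(o_1)]\ge OPT/k$. The residual problem has the $k-1$ remaining optimal elements, which are still uniformly random, and budget $h-1$. Its optimum is at least $f(E_{opt}\setminus\{o_1\}\mid v_1)$. Using submodularity as in the original paper, this is at least $OPT-\mathbb{E}[f(o_1)]-\mathbb{E}[f(v_1)]$ in expectation, up to the correction term involving $u_1$ that the original analysis handles. The Case 2 condition gives $\mathbb{E}[f(u_1)]< t\,\mathbb{E}[f(o_1)]$, so $\mathbb{E}[f(v_1)]\le \mathbb{E}[f(o_1)]+\mathbb{E}[f(u_1)] < (1+t)\,\mathbb{E}[f(o_1)]$. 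Writing $x=\mathbb{E}[f(o_1)]/OPT\ge 1/k$, the value is at least
\[ x+R(k-1,h-1)\bigl(1-(1+t)x\bigr)\quad\text{(times $OPT$)}. \]
This is linear in $x$ with slope $1-(1+t)R(k-1,h-1)$.

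This slope is the main obstacle. It must be nonnegative for the bound to be minimized at the endpoint $x=1/k$, which yields $B_{k,h}$. Nonnegativity is exactly Lemma~\ref{lem:mtc_bound}, which is where the MTC is used. There is also a second regime. If $x$ is so large that $(1+t)x\ge 1$, we drop the residual term entirely. The chosen element alone then has value at least $\mathbb{E}[f(o_1)]\ge OPT/(1+t)=C_{k,h}\cdot OPT$. Hence in every case the value is at least $\min\{A_{k,h},B_{k,h},C_{k,h}\}\cdot OPT=R(k,h)\cdot OPT$.

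I expect the delicate point to be the following. The expectations conditioned on the random order must stay consistent with the inductive invariant, so the residual stream must still present the remaining optimal elements in uniformly random order. The thresholds must also be applied per state, not globally. I would handle this by copying the conditioning argument of \cite{GargKRS20} verbatim, with $t$ replaced by $t_i=T(k_i,h_i)$. The MTC lemma is the only place where the global-$t$ assumption was used, so it is the only place that needs a new argument.
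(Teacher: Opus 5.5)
Your skeleton matches the paper's: induction on $h$, the same case split, Case~1 giving $A_{k,h}$, and Lemma~\ref{lem:mtc_bound} supplying a nonnegative coefficient in Case~2. However, Case~2 as you wrote it does not produce $B_{k,h}$.

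You lower-bound the residual optimum by $OPT-\mathbb{E}[f(o_1)]-\mathbb{E}[f(v_1)]$. You then upper-bound $\mathbb{E}[f(v_1)]<(1+t)\,\mathbb{E}[f(o_1)]$. These two facts do not combine into $x+R'(1-(1+t)x)$, where $R'=R(k-1,h-1)$. Substituting your residual bound gives a value of at least $(1-R')\,\mathbb{E}[f(v_1)]+R'(OPT-\mathbb{E}[f(o_1)])$. Since $1-R'\ge 0$, the upper bound on $\mathbb{E}[f(v_1)]$ is useless here. The best you can get is $x(1-2R')+R'$, which is $B_{k,h}$ with $t$ replaced by $1$ and hence weaker than $B_{k,h}$. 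When $R'>1/2$ (for instance along the diagonal for $T^*$, where $R'=2-\sqrt{2}$), this expression is decreasing in $x$, so you cannot even minimize it at $x=1/k$. Your double subtraction loses a full $f(o_1)$ on the event $v_1=o_1$. Recovering that loss is not a side correction to defer to \cite{GargKRS20}; it is exactly where the factor $1+t$ comes from.

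The missing step is as follows. By submodularity, $f(o_1\mid (E_{opt}\setminus\{o_1\})\cup\{v_1\})\le f(o_1\mid v_1)$. Combined with monotonicity, this gives $f(E_{opt}\setminus\{o_1\}\mid v_1)\ge OPT-f(\{v_1,o_1\})$. Now write $f(\{v_1,o_1\})=f(o_1)+f(v_1\mid o_1)$. The marginal $f(v_1\mid o_1)$ is $0$ when $v_1=o_1$. Otherwise $v_1=u_1$, and $f(u_1\mid o_1)\le f(u_1)$. Hence $\mathbb{E}[f(\{v_1,o_1\})]\le\mathbb{E}[f(o_1)]+\mathbb{E}[f(u_1)]<(1+t)\,\mathbb{E}[f(o_1)]$.

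So the Case~2 condition must be applied to $f(\{v_1,o_1\})$, not to $f(v_1)$. The term $\mathbb{E}[f(v_1)]$ enters with coefficient $1$ and only needs the lower bound $\mathbb{E}[f(v_1)]\ge\mathbb{E}[f(o_1)]$. This yields $x+R'(1-(1+t)x)$, and the MTC finishes the argument as you describe.

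Once this is in place, your extra regime $(1+t)x\ge 1$ leading to $C_{k,h}$ is unnecessary. The inductive bound $\mathbb{E}[f(S_h\mid s_1)]\ge R'\,\mathbb{E}[OPT']$ stays valid even when the lower bound on $\mathbb{E}[OPT']$ is negative. The nonnegative slope therefore gives $B_{k,h}$ for every $x\ge 1/k$, which is why the paper simply concludes with $\min\{A_{k,h},B_{k,h}\}\ge R(k,h)$.
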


\begin{proof}
We proceed by induction on $h$.

\textbf{Base Case (h=0):} $R(k, 0) = 0$. $\mathbb{E}_\pi[f(S^\pi_0)] = f(\emptyset) = 0$. The theorem holds.

\textbf{Inductive Step:} Assume the theorem holds for $h-1$. We prove it for $(k, h)$, where $k \ge 1, h \ge 1$. Let $t = T(k, h)$. We analyze the first step ($i=1$). By Lemma \ref{lem:avg_marginal}, due to the random permutation, $\mathbb{E}_\pi[f(o^\pi_1)] \ge OPT/k$.

\textbf{Case 1: $\mathbb{E}_\pi[f(u^\pi_1)] \ge t \cdot \mathbb{E}_\pi[f(o^\pi_1)]$.} We set $s^\pi_1 = u^\pi_1$.

As argued in \cite{GargKRS20}, $u^\pi_1$ is determined by the elements arriving before the first OPT element. Since the adversary injects noise without knowledge of the permutation $\pi$, the set of elements in this prefix is independent of $\pi$. Therefore, $u^\pi_1$ is independent of $\pi$. Let $s_1 = u^\pi_1$.

We consider the remaining subproblem $I'$. It has $k'=k$ OPT elements and budget $h'=h-1$. The function is $f'(T) = f(T|s_1)$. The optimum of the subproblem is $OPT' = f(E_{opt}|s_1)$. By submodularity and monotonicity, $OPT' \ge OPT - f(s_1)$.

The remainder of the path $S^\pi_h \setminus \{s_1\}$ corresponds exactly to the path constructed by our strategy applied to $I'$. By the Inductive Hypothesis (IH): $\mathbb{E}_\pi[f(S^\pi_h|s_1)] \ge R(k, h-1) \cdot OPT'$.

\begin{align*}
\mathbb{E}_\pi[f(S^\pi_h)] &= f(s_1) + \mathbb{E}_\pi[f(S^\pi_h|s_1)] \\
&\ge f(s_1) + R(k, h-1)(OPT - f(s_1)) \\
&= f(s_1)(1 - R(k, h-1)) + R(k, h-1) \cdot OPT.
\end{align*}

We verify that $R(k, h-1) < 1$. Since $h \ge 1$, $R(k, h-1) \le C_{k,h-1} = \frac{1}{1+T(k,h-1)}$. Since $T(k,h-1)>0$, $R(k, h-1) < 1$. Thus, $1-R(k, h-1) > 0$.

By the condition for Case 1 and Lemma \ref{lem:avg_marginal}, $f(s_1) \ge t \cdot \mathbb{E}_\pi[f(o^\pi_1)] \ge t \cdot OPT/k$.

\begin{align*}
\mathbb{E}_\pi[f(S^\pi_h)] &\ge \frac{t \cdot OPT}{k}(1 - R(k, h-1)) + R(k, h-1) \cdot OPT \\
&= OPT \cdot \left[ \frac{t}{k} + \left(1 - \frac{t}{k}\right)R(k, h-1) \right] = A_{k,h} \cdot OPT.
\end{align*}

\textbf{Case 2: $\mathbb{E}_\pi[f(u^\pi_1)] < t \cdot \mathbb{E}_\pi[f(o^\pi_1)]$.} We set $s^\pi_1 = v^\pi_1$.

We consider the remaining subproblem $I'(\pi)$. It depends on $\pi$ because both $s^\pi_1$ and the remaining OPT elements $E'_{opt}(\pi) = E_{opt} \setminus \{o^\pi_1\}$ depend on $\pi$. The parameters are $(k-1, h-1)$. The function is $f'_\pi(T) = f(T|s^\pi_1)$.

We analyze the expected value over $\pi$. $\mathbb{E}_\pi[f(S^\pi_h)] = \mathbb{E}_\pi[f(s^\pi_1)] + \mathbb{E}_\pi[f(S^\pi_h|s^\pi_1)]$.

By the IH applied to the subproblem (noting that conditional on $o^\pi_1$, the remaining elements are still uniformly permuted): $\mathbb{E}_\pi[f(S^\pi_h|s^\pi_1)] \ge R(k-1, h-1) \cdot \mathbb{E}_\pi[OPT'(\pi)]$.

The expected optimum of the subproblem is $\mathbb{E}_\pi[OPT'(\pi)] = \mathbb{E}_\pi[f(E'_{opt}(\pi)|s^\pi_1)]$. By submodularity:
$\mathbb{E}_\pi[OPT'(\pi)] \ge OPT - \mathbb{E}_\pi[f(s^\pi_1 \cup o^\pi_1)]$.

Combining these: $\mathbb{E}_\pi[f(S^\pi_h)] \ge \mathbb{E}_\pi[f(s^\pi_1)] + R(k-1, h-1) \cdot (OPT - \mathbb{E}_\pi[f(s^\pi_1 \cup o^\pi_1)])$.

We bound the loss term $\mathbb{E}_\pi[f(s^\pi_1 \cup o^\pi_1)]$. Note that if $s^\pi_1 \ne o^\pi_1$, then $s^\pi_1$ must have arrived before $o^\pi_1$, and since $s^\pi_1 = v^\pi_1$ is the best element up to $o^\pi_1$, it must be that $s^\pi_1 = u^\pi_1$.

We can obtain a bound using the Case 2 condition.
\begin{align*}
\mathbb{E}_\pi[f(s^\pi_1 \cup o^\pi_1)] &= \mathbb{E}_\pi[f(o^\pi_1)] + \mathbb{E}_\pi[f(s^\pi_1|o^\pi_1)] \\
&= \mathbb{E}_\pi[f(o^\pi_1)] + P(s^\pi_1 = o^\pi_1)E[f(s^\pi_1|o^\pi_1)|s^\pi_1 = o^\pi_1] + P(s^\pi_1 \ne o^\pi_1)E[f(s^\pi_1|o^\pi_1)|s^\pi_1 \ne o^\pi_1].
\end{align*}
If $s^\pi_1 = o^\pi_1$, the marginal gain is 0. If $s^\pi_1 \ne o^\pi_1$, then $s^\pi_1 = u^\pi_1$.
\begin{align*}
\mathbb{E}_\pi[f(s^\pi_1 \cup o^\pi_1)] &= \mathbb{E}_\pi[f(o^\pi_1)] + P(s^\pi_1 \ne o^\pi_1)E[f(u^\pi_1|o^\pi_1)|s^\pi_1 \ne o^\pi_1] \\
&\le \mathbb{E}_\pi[f(o^\pi_1)] + \mathbb{E}_\pi[f(u^\pi_1)]. \quad \text{(By submodularity and non-negativity)}
\end{align*}

Using the Case 2 condition $\mathbb{E}_\pi[f(u^\pi_1)] < t \cdot \mathbb{E}_\pi[f(o^\pi_1)]$:
$\mathbb{E}_\pi[f(s^\pi_1 \cup o^\pi_1)] < (1+t)\mathbb{E}_\pi[f(o^\pi_1)]$.

Substituting back into the main inequality:
$\mathbb{E}_\pi[f(S^\pi_h)] > \mathbb{E}_\pi[f(s^\pi_1)] + R(k-1, h-1) \cdot (OPT - (1+t)\mathbb{E}_\pi[f(o^\pi_1)])$.

By definition of $v^\pi_1$, $f(s^\pi_1) \ge f(o^\pi_1)$ for all $\pi$. Thus $\mathbb{E}_\pi[f(s^\pi_1)] \ge \mathbb{E}_\pi[f(o^\pi_1)]$.
$\mathbb{E}_\pi[f(S^\pi_h)] > \mathbb{E}_\pi[f(o^\pi_1)](1 - (1+t)R(k-1, h-1)) + R(k-1, h-1) \cdot OPT$.

Crucially, by Lemma \ref{lem:mtc_bound} (which relies on the MTC), the coefficient $1 - (1+t)R(k-1, h-1)$ is non-negative. We substitute $\mathbb{E}_\pi[f(o^\pi_1)] \ge OPT/k$ (Lemma \ref{lem:avg_marginal}).

\begin{align*}
\mathbb{E}_\pi[f(S^\pi_h)] &> \frac{OPT}{k}(1 - (1+t)R(k-1, h-1)) + R(k-1, h-1) \cdot OPT \\
&= OPT \cdot \left[ \frac{1}{k} + \left(1 - \frac{1+t}{k}\right)R(k-1, h-1) \right] = B_{k,h} \cdot OPT.
\end{align*}

In both cases, $\mathbb{E}_\pi[f(S^\pi_h)] \ge \min(A_{k,h}, B_{k,h}) \cdot OPT \ge R(k, h) \cdot OPT$. The induction is complete.
\end{proof}

\subsubsection*{Analysis of the Specific Recurrence}
We now analyze the recurrence using a specific threshold function that optimizes the fluid limit.

\paragraph{The Optimal Threshold Function}
We define the threshold function $T^*(k, h) = \frac{k}{\sqrt{k^2+h^2}}$.

\begin{lemma}\label{lem:t_star_mtc}
The threshold function $T^*(k, h)$ satisfies the Monotonic Threshold Condition (MTC).
\end{lemma}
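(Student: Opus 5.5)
We need $T^*(k,h)\le T^*(k-1,h-1)$ for all $k\ge 2$, $h\ge 1$ with $(k,h)\in\mathcal{D}$. Here $T^*(k,h)=k/\sqrt{k^2+h^2}$. Both sides are positive, so we compare squares. Equivalently, we compare the reciprocals of the squares: $1/T^*(k,h)^2 = 1+(h/k)^2$. Since $1/x^2$ is decreasing on $(0,\infty)$, the MTC reduces to
\[
\Bigl(\frac{h-1}{k-1}\Bigr)^2 \le \Bigl(\frac{h}{k}\Bigr)^2 .
\]
In words, the ratio $h/k$ must not increase when we step from $(k,h)$ to $(k-1,h-1)$.

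Both ratios are nonnegative, because $h\ge 1$ and $k\ge 2$. So it suffices to show $\frac{h-1}{k-1}\le \frac{h}{k}$. Cross-multiplying by the positive quantity $k(k-1)$ gives $k(h-1)\le h(k-1)$. This simplifies to $-k\le -h$, i.e.\ $h\le k$, which holds on the domain $\mathcal{D}$.

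The edge case is $h=1$. Then $(k-1,h-1)=(k-1,0)$, and $T^*(k-1,0)=1$. Meanwhile $T^*(k,1)=k/\sqrt{k^2+1}<1$, so the inequality holds here too; the general computation above already covers it. We also note that $T^*$ takes values in $(0,1]$, as the definition of a threshold function requires. There is no real obstacle. The only points needing care are the direction of the monotonicity in the chain of equivalences and the requirement $h\le k$, which is exactly where the domain restriction of $\mathcal{D}$ enters.
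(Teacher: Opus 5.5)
Your proposal is correct and follows essentially the same route as the paper. The paper writes $T^*(k,h)=g(h/k)$ with $g(x)=(1+x^2)^{-1/2}$ decreasing, while you pass to reciprocal squares; in both arguments the key step is $\frac{h-1}{k-1}\le\frac{h}{k}$, i.e.\ $k(h-1)\le h(k-1)$, which is equivalent to $h\le k$ on $\mathcal{D}$.
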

\begin{proof}
We need to show $T^*(k, h) \le T^*(k-1, h-1)$ for $k \ge 2, h \ge 1$. Let $g(x) = (1+x^2)^{-1/2}$. $g(x)$ is strictly decreasing for $x \ge 0$. $T^*(k, h) = g(h/k)$. $T^*(k-1, h-1) = g((h-1)/(k-1))$.
We compare $x_1 = h/k$ and $x_0 = (h-1)/(k-1)$. Since $(k,h) \in \mathcal{D}$, $k \ge h$.
$k \ge h \implies -k \le -h \implies hk-k \le hk-h$.
$k(h-1) \le h(k-1)$. Since $k \ge 2$, $k>0$ and $k-1>0$. $(h-1)/(k-1) \le h/k$. So $x_0 \le x_1$.
Since $g(x)$ is decreasing, $g(x_1) \le g(x_0)$. Thus $T^*(k, h) \le T^*(k-1, h-1)$.
\end{proof}

Since $T^*(k, h)$ satisfies the MTC, Theorem \ref{thm:connection} applies. The approximation ratio of the Tree Algorithm is lower bounded by $R(k, k)$ derived using this threshold function.

\paragraph{Solving the Recurrence}
We now prove that the recurrence defined with $T^*(k, h)$ evaluates to $2 - \sqrt{2}$ when $h=k$.

\begin{theorem}\label{thm:recurrence_solution}
Let $R(k, h)$ be the recurrence defined using $T^*(k, h)$. Then for all $k \ge 1$, $R(k, k) = 2-\sqrt{2}$.
\end{theorem}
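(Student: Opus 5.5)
The plan is to prove $R(k,k)\le 2-\sqrt2$ and $R(k,k)\ge 2-\sqrt2$ separately. The upper bound is immediate. The lower bound goes by induction along the diagonal, supported by an explicit lower bound on the off-diagonal values $R(k,h)$.

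\textbf{Upper bound and the diagonal $B$-term.} On the diagonal $t=T^*(k,k)=1/\sqrt2$, so
\[
R(k,k)\le C_{k,k}=\frac{1}{1+1/\sqrt2}=\frac{\sqrt2}{\sqrt2+1}=2-\sqrt2 .
\]
For the lower bound I induct on $k$. The base case $k=1$ is direct: $R(1,0)=R(0,0)=0$, so $A_{1,1}=t=1/\sqrt2$, $B_{1,1}=1$ and $C_{1,1}=2-\sqrt2$, and the minimum is $2-\sqrt2$. For $k\ge2$, assume $R(k-1,k-1)=2-\sqrt2$. The key algebraic identity is $(1+1/\sqrt2)(2-\sqrt2)=1$. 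Using it,
\[
B_{k,k}=\frac1k+\Bigl(1-\frac{1+t}{k}\Bigr)(2-\sqrt2)=\frac1k+(2-\sqrt2)-\frac1k=2-\sqrt2,
\]
so $B_{k,k}$ equals the target exactly. It remains to show $A_{k,k}\ge 2-\sqrt2$. This is equivalent to
\[
R(k,k-1)\ \ge\ \frac{2-\sqrt2-t/k}{1-t/k}, \qquad t=1/\sqrt2 .
\]

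\textbf{Off-diagonal lower bound (main obstacle).} To control $R(k,k-1)$ I need an explicit lower bound $R(k,h)\ge P(k,h)$ valid for all $0\le h\le k$. I will find a candidate $P$ from the fluid limit. Write $x=h/k$ and $t(x)=(1+x^2)^{-1/2}$. As $k\to\infty$ the $A$-branch becomes $P'(x)=t(x)\,(1-P(x))$, and the $B$-branch produces a similar transport equation along the diagonal direction. I will use the optimal threshold schedule, which is designed so that the $A$- and $B$-constraints are simultaneously tight. I then pick $P(k,h)=p(h/k)$ as the solution of that ODE with $p(0)=0$ and $p(1)=2-\sqrt2$, or a slightly perturbed discrete version of it. The perturbation is there to absorb the $1/k$ discretization error, and it must vanish on the diagonal. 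I also need $p(x)\le 1/(1+t(x))$, so that the $C$-term never binds below $P$.

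The induction is a double induction on $(k,h)$ in lexicographic order. For each of $A_{k,h}$, $B_{k,h}$ and $C_{k,h}$ I verify that it is at least $P(k,h)$, plugging in $P(k,h-1)$ and $P(k-1,h-1)$ from the induction hypothesis. Two facts make these substitutions legitimate: the coefficient $1-t/k$ in $A$ is positive, and the coefficient $1-(1+t)/k$ in $B$ is positive once $k\ge2$ (the case $k=1$ is handled directly). After substitution, each of the two checks becomes a one-variable inequality in $x$ and $1/k$. I would prove these via concavity of $p$, or by a second-order Taylor bound with an explicit remainder.

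\textbf{Closing the induction.} Finally I check that $P(k,k-1)$ meets the threshold required for $A_{k,k}$ in the previous step. Near the diagonal this amounts to $p'(1)\le t(1)\,(1-p(1))$, which is exactly the fluid $A$-equation at $x=1$, plus control of the discrete error.

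I expect this last inequality, together with the discrete (non-asymptotic) verification of $A_{k,h}\ge P(k,h)$, to be the main difficulty. If the exact fluid solution does not work for small $k$, the first fallback is to check small $k$ numerically-symbolically and apply the asymptotic argument only beyond a fixed $k_0$.
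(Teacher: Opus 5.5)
Your architecture matches the paper's: an upper bound from $C_{k,k}$, and a lower bound by lexicographic induction $R(k,h)\ge p(h/k)$ against the fluid profile. That profile is $p(x)=1+x-\sqrt{1+x^2}$, since $p'=t(1-p)$ with $p(0)=0$ gives $1-p=1/(x+\sqrt{1+x^2})$, and $p(1)=2-\sqrt2$ then comes automatically. However, the step you defer as the ``main obstacle'' is the heart of the proof, and the tool you name for it fails for the $A$-term. With $\Delta=1/k$ you need exactly $t(x)\Delta\,(1-p(x-\Delta))\ge p(x)-p(x-\Delta)$. Concavity of $p$ only gives $p(x)-p(x-\Delta)\le p'(x-\Delta)\Delta=t(x-\Delta)\,(1-p(x-\Delta))\Delta$. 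Because $t$ is decreasing, $t(x-\Delta)>t(x)$, so this bound is too weak. The missing idea is to pass to $W=1/(1-p)=x+\sqrt{1+x^2}$. Dividing by $1-p(x-\Delta)$, the $A$-inequality becomes $W(x-\Delta)\ge W(x)-t(x)W(x)\Delta$. Since $W'=tW$, this is precisely the tangent-line inequality for the convex function $W$, as $W''=(1+x^2)^{-3/2}>0$. So the unperturbed fluid profile satisfies the discrete $A$-recursion with no error for every $k$.

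For the $B$-term you assert, but never check, that $T^*$ makes both branches ``simultaneously tight''. That claim is the identity $p'(x)(1-x)=1-(1+t(x))p(x)$, and it is what drives the argument. Take $x_0=(h-1)/(k-1)$ and $x_1=h/k$ with $h<k$. Combine the identity with the exact relation $1/k=(x_1-x_0)/(1-x_0)$, with concavity of $p$ (the secant slope on $[x_0,x_1]$ is at most $p'(x_0)$), and with $t(x_1)\le t(x_0)$; this gives $B_{k,h}\ge p(x_1)$. The same identity together with $p'\ge 0$ gives $p\le 1/(1+t)$, which handles the $C$-term.

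Your fallbacks could not rescue a lossy version of this argument. The lower bound must be exact on the diagonal, but the slack in the $A$-inequality at $(k,k)$ is only $\Theta(1/k^2)$. Per-step errors of order $1/k^2$ accumulated along row $k$ total order $1/k$, which a perturbation vanishing at $h=k$ cannot absorb. Checking $k\le k_0$ does not address large $k$.

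Once the convexity of $W$ and the two fluid identities are in place, your separate diagonal induction is also unnecessary. The global induction at $h=k$, using your computation $B_{k,k}=2-\sqrt2$, already yields $R(k,k)\ge p(1)=2-\sqrt2$.
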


To prove this, we introduce continuous helper functions that represent the fluid limit of the recurrence and use them to establish bounds.

\emph{The Continuous Helper Functions}
Let $\rho: [0, 1] \to [0, 1]$ and $t: [0, 1] \to (0, 1]$ be defined as:
\begin{align*}
\rho(x) &= 1 + x - \sqrt{x^2+1}, \\
t(x) &= \frac{1}{\sqrt{x^2+1}}.
\end{align*}
Note that $T^*(k, h) = t(h/k)$. We define the comparison function $P(k, h) = \rho(h/k)$ for $k \ge 1$.

We establish key properties of these functions.

\begin{lemma}[Properties of $\rho(x)$ and $t(x)$]\label{lem:properties}
\begin{enumerate}
    \item Boundary values: $\rho(0) = 0, \rho(1) = 2-\sqrt{2}$.
    \item Concavity: $\rho(x)$ is strictly concave on $[0, 1]$.
    \item Convex Auxiliary Function: $W(x) = \frac{1}{1-\rho(x)} = \sqrt{x^2+1}+x$ is strictly convex on $[0, 1]$.
    \item Rate 1 Identity: $\rho'(x) = t(x)(1-\rho(x))$. (Equivalently, $W'(x) = W(x)t(x)$).
    \item Rate 2 Identity: $\rho'(x)(1-x) = 1 - (1+t(x))\rho(x)$.
    \item Threshold Relation: $\rho(x) \le \frac{1}{1+t(x)}$.
    \item Monotonicity of t: $t(x)$ is strictly decreasing on $[0, 1]$.
\end{enumerate}
\end{lemma}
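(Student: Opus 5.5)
The plan is to verify the seven properties by direct calculus, in an order where later items reuse earlier computations. Write $s(x)=\sqrt{x^2+1}$, so that $\rho=1+x-s$ and $t=1/s$, with $s'=x/s$ and $s''=1/s^3$.

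Items 1 and 7 are immediate. For item 1, $\rho(0)=1+0-1=0$ and $\rho(1)=2-\sqrt2$. For item 7, $s$ is strictly increasing on $[0,1]$ because $s'=x/s\ge 0$ with equality only at $x=0$, so $t=1/s$ is strictly decreasing.

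For item 2, I compute $\rho'=1-x/s$ and $\rho''=-s''=-1/s^3<0$, which gives strict concavity.

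For item 3, I first establish the algebraic identity
\[
1-\rho=s-x=\frac{1}{s+x}.
\]
This follows by rationalizing, since $(s-x)(s+x)=s^2-x^2=1$. Hence $W=1/(1-\rho)=s+x$, and $W''=s''=1/s^3>0$, so $W$ is strictly convex.

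For item 4, I use $\rho'=1-x/s=(s-x)/s$. Since $t(1-\rho)=(s-x)/s$, this is exactly Rate 1. The equivalent form follows from $W'=1+x/s=(s+x)/s=Wt$.

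For item 5, I expand both sides. The left side is
\[
\rho'(1-x)=\frac{(s-x)(1-x)}{s}.
\]
For the right side, $(1+t)\rho=(1+1/s)(1+x-s)$, so
\[
1-(1+t)\rho=1-(1+x-s)-\frac{1+x-s}{s}=s-x-\frac{1+x}{s}+1=\frac{s^2-xs-1-x+s}{s}.
\]
Using $s^2=x^2+1$, the numerator becomes $x^2-xs-x+s$, which factors as $(s-x)(1-x)$. This matches the left side. This factorization is the only step needing care, and I expect it to be the main (mild) obstacle.

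For item 6, I combine item 5 with item 4. They give
\[
1-(1+t)\rho=\rho'(1-x)=t(1-\rho)(1-x),
\]
and every factor on the right is nonnegative for $x\in[0,1]$. In particular $\rho<1$, since $1-\rho=1/(s+x)>0$. Therefore $(1+t)\rho\le 1$, which is the threshold relation.
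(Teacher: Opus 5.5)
Your proposal is correct and follows the paper's proof essentially item by item: direct differentiation for items 2, 3, 4 and 7, the same expansion of both sides for the Rate 2 identity, and the threshold relation reduced via Rate 2 to $(1-x)\rho'(x)\ge 0$. The only small differences are that you explicitly justify $1/(1-\rho)=\sqrt{x^2+1}+x$ by rationalizing, which the paper takes as given, and that you get $\rho'\ge 0$ from Rate 1 rather than directly from $\rho'(x)=1-x/\sqrt{x^2+1}>0$.
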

\begin{proof}
1. $\rho(0) = 1+0-\sqrt{1} = 0$. $\rho(1) = 1+1-\sqrt{2} = 2-\sqrt{2}$.

2. $\rho'(x) = 1 - \frac{x}{\sqrt{x^2+1}}$. $\rho''(x) = -\frac{\sqrt{x^2+1} - x(x/\sqrt{x^2+1})}{x^2+1} = -\frac{(x^2+1)-x^2}{(x^2+1)^{3/2}} = -\frac{1}{(x^2+1)^{3/2}}$. Since $\rho''(x) < 0$, $\rho(x)$ is strictly concave.

3. $W'(x) = \frac{x}{\sqrt{x^2+1}} + 1$. $W''(x) = \frac{1}{(x^2+1)^{3/2}}$. Since $W''(x) > 0$, $W(x)$ is strictly convex.

4. $t(x)(1-\rho(x)) = \frac{1}{\sqrt{x^2+1}}(1 - (1+x-\sqrt{x^2+1})) = \frac{1}{\sqrt{x^2+1}}(\sqrt{x^2+1}-x) = 1 - \frac{x}{\sqrt{x^2+1}} = \rho'(x)$.

5. Let $S = \sqrt{x^2+1}$. $t(x)=1/S, \rho(x)=1+x-S, \rho'(x)=1-x/S$.
LHS $= (1-x/S)(1-x) = 1-x-x/S+x^2/S = 1-x+\frac{x^2-x}{S}$.
RHS $= 1-(1+1/S)(1+x-S) = 1-(1+x-S+\frac{1+x}{S}-1) = 1-x+S-\frac{1+x}{S}$.
RHS $= 1-x + \frac{S^2-(1+x)}{S} = 1-x + \frac{(x^2+1)-1-x}{S} = 1-x + \frac{x^2-x}{S}$.
LHS = RHS. (There was a slight algebraic error in the previous draft's proof of this point, which is corrected here).

6. We want $\rho(x)(1+t(x)) \le 1$. By the Rate 2 identity, this is equivalent to $1 - (1-x)\rho'(x) \le 1$, or $(1-x)\rho'(x) \ge 0$. Since $x \in [0, 1]$, $1-x \ge 0$. Since $\sqrt{x^2+1} > x$, $\rho'(x) = 1-x/\sqrt{x^2+1} > 0$. The inequality holds. Equality holds only if $x=1$.

7. $t'(x) = -\frac{1}{2}(x^2+1)^{-3/2}(2x) = -x(x^2+1)^{-3/2}$. This is negative for $x>0$.
\end{proof}
We next give the proof of Theorem \ref{thm:recurrence_solution}. 

\begin{proof}
We use a "Diagonal Sandwich" argument. We first prove a global lower bound $R(k,h) \ge P(k,h)$, and then prove a tight upper bound at the diagonal $h=k$.

\textbf{Step 1: Global Lower Bound.} We prove by strong induction that $R(k, h) \ge P(k, h) = \rho(h/k)$ for all $(k, h) \in \mathcal{D}, k \ge 1$.

\textbf{Base Conditions:}
\begin{itemize}
    \item $h=0, k \ge 1$: $R(k, 0) = 0$. $P(k, 0) = \rho(0) = 0$. Holds.
    \item $k=1$. $R(1, 1)$. $t_{1,1} = T^*(1, 1) = 1/\sqrt{2}$. $A_{1,1} = t_{1,1} + (1-t_{1,1})R(1, 0) = 1/\sqrt{2}$. $B_{1,1} = 1 + (1-(1+t_{1,1}))R(0, 0) = 1$. $C_{1,1} = \frac{1}{1+1/\sqrt{2}} = \frac{\sqrt{2}}{\sqrt{2}+1} = 2-\sqrt{2}$.
    $R(1, 1) = \min(1/\sqrt{2}, 1, 2-\sqrt{2})$. Since $2-\sqrt{2} \approx 0.586$ and $1/\sqrt{2} \approx 0.707$, $R(1, 1) = 2-\sqrt{2}$. $P(1, 1) = \rho(1) = 2-\sqrt{2}$. Holds.
\end{itemize}

\textbf{Inductive Step:} Fix $(k, h)$ with $k \ge 2, h \ge 1$. Assume $R(k', h') \ge P(k', h')$ for all $(k', h') < (k, h)$ lexicographically. Let $x = h/k$. Let $t = T^*(k, h) = t(x)$. We must show $A_{k,h}, B_{k,h}, C_{k,h} \ge P(k, h) = \rho(x)$.

\textbf{Analysis of $C_{k,h}$:} $C_{k,h} = \frac{1}{1+t(x)}$. By Lemma \ref{lem:properties}.6, $\rho(x) \le \frac{1}{1+t(x)}$. So $C_{k,h} \ge P(k, h)$.

\textbf{Analysis of $A_{k,h}$:} Let $\Delta x = 1/k$. The previous state is $(k, h-1)$. The ratio is $x' = (h-1)/k = x-\Delta x$. By IH, $R(k, h-1) \ge P(k, h-1) = \rho(x-\Delta x)$.
$$ A_{k,h} \ge t(x)\Delta x + (1-t(x)\Delta x)\rho(x-\Delta x). $$
We want to show this is $\ge \rho(x)$.
$$ t(x)\Delta x(1-\rho(x-\Delta x)) \ge \rho(x) - \rho(x-\Delta x). $$
Let $W(x) = 1/(1-\rho(x))$.
$$ \frac{t(x)\Delta x}{W(x-\Delta x)} \ge \frac{1}{W(x-\Delta x)} - \frac{1}{W(x)}. $$
Multiplying by $W(x)W(x-\Delta x)$ (which is positive):
$$ t(x)\Delta x W(x) \ge W(x) - W(x-\Delta x). $$
$$ W(x-\Delta x) \ge W(x) - W(x)t(x)\Delta x. $$
By Lemma \ref{lem:properties}.4 (Rate 1 Identity), $W'(x) = W(x)t(x)$.
$$ W(x-\Delta x) \ge W(x) - W'(x)\Delta x. $$
This inequality holds because $W(x)$ is strictly convex (Lemma \ref{lem:properties}.3). A convex function lies above its tangent line: $W(y) \ge W(x) + W'(x)(y-x)$. Setting $y=x-\Delta x$ yields the result. Thus $A_{k,h} \ge P(k, h)$.

\textbf{Analysis of $B_{k,h}$:} The previous state is $(k-1, h-1)$. Let $x_1 = h/k$ and $x_0 = (h-1)/(k-1)$. By IH, $R(k-1, h-1) \ge P(k-1, h-1) = \rho(x_0)$.
$$ B_{k,h} \ge \frac{1}{k} + \left(1 - \frac{1+t(x_1)}{k}\right)\rho(x_0). $$
We want to show this is $\ge \rho(x_1)$.
$$ \frac{1}{k}(1 - (1+t(x_1))\rho(x_0)) \ge \rho(x_1) - \rho(x_0). $$

We analyze the relationship between $1/k$ and the ratios.
$x_1-x_0 = \frac{h}{k} - \frac{h-1}{k-1} = \frac{h(k-1)-k(h-1)}{k(k-1)} = \frac{k-h}{k(k-1)}$.
$1-x_0 = 1 - \frac{h-1}{k-1} = \frac{k-1-(h-1)}{k-1} = \frac{k-h}{k-1}$.
Therefore, $x_1-x_0 = \frac{1}{k}(1-x_0)$. So $1/k = (x_1-x_0)/(1-x_0)$.
(This assumes $h < k$, so $x_0 < 1$).

Case 1: $1 \le h < k$. In this case $x_0 < x_1 < 1$. $1-x_0 > 0$ and $x_1-x_0 > 0$. Substituting $1/k$:
$$ \frac{x_1-x_0}{1-x_0}(1 - (1+t(x_1))\rho(x_0)) \ge \rho(x_1) - \rho(x_0). $$
$$ \frac{1 - (1+t(x_1))\rho(x_0)}{1-x_0} \ge \frac{\rho(x_1) - \rho(x_0)}{x_1-x_0}. $$
The RHS is the slope of the secant line of $\rho(x)$ from $x_0$ to $x_1$. Since $\rho(x)$ is concave (Lemma \ref{lem:properties}.2), the slope of the secant is less than the slope of the tangent at $x_0$:
$$ \frac{\rho(x_1) - \rho(x_0)}{x_1-x_0} < \rho'(x_0). $$
It is sufficient to prove:
$$ \frac{1 - (1+t(x_1))\rho(x_0)}{1-x_0} \ge \rho'(x_0). $$
By Lemma \ref{lem:properties}.5 (Rate 2 Identity) applied at $x_0$: $\rho'(x_0) = \frac{1-(1+t(x_0))\rho(x_0)}{1-x_0}$. We need to show:
$$ 1 - (1+t(x_1))\rho(x_0) \ge 1 - (1+t(x_0))\rho(x_0). $$
$$ (1+t(x_0))\rho(x_0) \ge (1+t(x_1))\rho(x_0). $$
Since $h \ge 1$, $x_0 = (h-1)/(k-1) \ge 0$. $\rho(x_0) \ge 0$. We need $t(x_0) \ge t(x_1)$. Since $t(x)$ is decreasing (Lemma \ref{lem:properties}.7) and $x_0 < x_1$, this holds. Thus $B_{k,h} \ge P(k, h)$ when $h < k$.

Case 2: $h=k$. ($k \ge 2$). We want to show $B_{k,k} \ge P(k, k) = 2-\sqrt{2}$. Here $x_1=1$. $t(x_1)=1/\sqrt{2}$. The previous state is $(k-1, k-1)$. $x_0=1$. By IH, $R(k-1, k-1) \ge P(k-1, k-1) = \rho(1) = 2-\sqrt{2}$.
$$ B_{k,k} \ge \frac{1}{k} + \left(1 - \frac{1+1/\sqrt{2}}{k}\right)(2-\sqrt{2}). $$
We check if this is $\ge 2-\sqrt{2}$.
$$ \frac{1}{k} \ge (2-\sqrt{2}) - \left(1 - \frac{1+1/\sqrt{2}}{k}\right)(2-\sqrt{2}) $$
$$ \frac{1}{k} \ge (2-\sqrt{2})\left(1 - \left(1 - \frac{1+1/\sqrt{2}}{k}\right)\right) = (2-\sqrt{2})\left(\frac{1+1/\sqrt{2}}{k}\right). $$
$$ 1 \ge (2-\sqrt{2})(1+1/\sqrt{2}). $$
RHS $= (2-\sqrt{2}) \left(\frac{\sqrt{2}+1}{\sqrt{2}}\right) = \frac{(2-\sqrt{2})(\sqrt{2}+1)}{\sqrt{2}} = \frac{2\sqrt{2}+2-2-\sqrt{2}}{\sqrt{2}} = \frac{\sqrt{2}}{\sqrt{2}} = 1$.
The inequality $1 \ge 1$ holds. Thus $B_{k,k} \ge P(k, k)$.

Conclusion of Step 1: We have shown $A_{k,h} \ge P(k, h)$, $B_{k,h} \ge P(k, h)$, and $C_{k,h} \ge P(k, h)$. Therefore, $R(k, h) = \min(A_{k,h}, B_{k,h}, C_{k,h}) \ge P(k, h)$ for all $(k,h) \in \mathcal{D}$.
In particular, $R(k, k) \ge P(k, k) = \rho(1) = 2-\sqrt{2}$.

\textbf{Step 2: Diagonal Upper Bound.}
We consider the upper bound at $h=k$. By definition of the recurrence, $R(k, k) \le C_{k,k}$.
$t_{k,k} = T^*(k, k) = k/\sqrt{k^2+k^2} = k/\sqrt{2k^2} = 1/\sqrt{2}$.
$$ C_{k,k} = \frac{1}{1+t_{k,k}} = \frac{1}{1+1/\sqrt{2}} = 2-\sqrt{2}. $$
So, $R(k, k) \le 2-\sqrt{2}$.

\textbf{Conclusion of the Proof (Diagonal Sandwich):}
Since $2-\sqrt{2} \le R(k, k)$ (Step 1) and $R(k, k) \le 2-\sqrt{2}$ (Step 2), we must have $R(k, k) = 2-\sqrt{2}$ for all $k \ge 1$.
\end{proof}

\subsubsection*{Conclusion}
We have rigorously established the connection between the Tree Algorithm for submodular maximization under adversarial injections and a generalized recurrence relation featuring a state-dependent threshold $T(k, h)$.

\begin{enumerate}
    \item We proved that if the threshold function satisfies the Monotonic Threshold Condition (MTC), the recurrence $R(k, h)$ provides a lower bound on the expected approximation ratio of the algorithm (Theorem \ref{thm:connection}), correctly utilizing the inequalities inherent to submodular functions.
    \item We defined a specific threshold function $T^*(k, h) = k/\sqrt{k^2+h^2}$ and verified it satisfies the MTC (Lemma \ref{lem:t_star_mtc}).
    \item We analyzed the resulting recurrence using a "Diagonal Sandwich" argument and proved that $R(k, k) = 2-\sqrt{2}$ (Theorem \ref{thm:recurrence_solution}).
\end{enumerate}

This implies that the Tree Algorithm achieves an approximation ratio of at least $2-\sqrt{2} \approx 0.5857$ in the adversarial injections model. This improves upon the 0.55 approximation derived in \cite{GargKRS20} using a fixed threshold analysis.


\newpage

\newpage
\subsection{Streaming Algorithms: Entropy, Low Rank Approximation, and Chamfer Distance}\label{sec:streaming}
\textit{Written by Ying Feng, David P. Woodruff, and Samson Zhou}.

\subsubsection*{Problem Context}
This case study examines several fundamental problems in streaming algorithms, but where the focus is on settings where space efficiency alone is not the only resource constraint. 
In addition to minimizing memory usage, modern systems often operate under limitations on write bandwidth, communication, or the frequency with which internal state can be modified.
We consider four representative problems: 
\begin{enumerate}
\item 
\textbf{Entropy Estimation:} 
Can the Shannon entropy of a data stream be estimated with significantly fewer internal state changes than the current $\tilde{O}(\sqrt{n})$ bound?
\item 
\textbf{Consistent LRA:} 
Is the optimal low-rank approximation subspace stable under row updates?
\item
\textbf{Efficient Encoding for LRA:} 
Can low-rank sketches computed on different parts of the data be globally compressed without losing accuracy?
\item
\textbf{Chamfer Distance:} Can the fast $\ell_1$ Chamfer distance algorithm be extended to the $\ell_2$ metric?
\end{enumerate}
Together, these problems highlight a common theme: understanding how algorithmic guarantees interact with more refined complexity measures beyond space, such as stability, recourse, and internal state changes.

\subsubsection*{AI Contribution}
Across these problems, the model played a role in identifying structural properties that were not explicit in prior analyses and in turning those observations into concrete algorithmic improvements.
\begin{itemize}
\item 
\textbf{Entropy:} 
The model observed that the standard algorithm only queries moments $F_p$ for $p \approx 1$, avoiding the high-variance regime ($p=2$) entirely. 
This immediately implied a polylogarithmic bound on the number of internal state changes.
\item 
\textbf{LRA Stability:} 
The model produced a proof showing that the optimal rank-$k$ subspace changes by at most a constant amount under a single row insertion, using eigenvalue interlacing and dimension arguments.
\item 
\textbf{LRA Encoding:} 
The model suggested decomposing local sketches into a global ``head'' and quantized ``tail,'' and proved that this representation preserves projection costs with small additive error.
\item 
\textbf{Chamfer Distance:} 
The model adapted the quadtree-based algorithm from $\ell_1$ to $\ell_2$, introducing a dimensionality reduction step (Johnson-Lindenstrauss) to handle high dimensions efficiently.
\end{itemize}

\subsubsection{Internal State Changes for Streaming Entropy}
In streaming algorithms, the foremost priority is typically to minimize the amount of memory (space) required to process a stream, since data streams can be massive and cannot be stored in full. 
Recently, \cite{JayaramWZ24} noted that minimizing the number of internal state changes, which counts how often the algorithm updates its memory representation, is also a reasonable secondary priority, particularly in settings with limited write bandwidth or distributed monitoring systems.

In the streaming model considered here, the stream consists of a sequence of elements from $[n] = \{1,2,\ldots,n\}$, and the algorithm maintains a \emph{frequency vector} $f \in \mathbb{Z}_{\ge 0}^n$, where $f_i$ denotes the number of times item $i$ has appeared in the data stream, so that each update to the stream increments the corresponding coordinate of $f$ by one. 
The number of \emph{internal state changes} is the number of times the algorithm modifies its memory state over the duration of the data stream. 

The \emph{$F_p$ moment estimation problem} asks to approximate the $p$-th frequency moment of the stream, defined as $F_p = \sum_{i=1}^n f_i^p$. 
Accurate $F_p$ estimation is fundamental because many other statistics can be expressed in terms of frequency moments. 
In particular, $F_p$ moments are central tools in streaming algorithms for estimating norms, identifying heavy hitters, and, as in this discussion, approximating Shannon entropy.

\begin{figure}[!htb]
\begin{chatbox}
\begin{userturn}
Please try to improve the paper by identifying and solving an open question from it. 
\vskip 0.1in
For your proof, adhere to: $\langle\text{rigor text}\rangle$.
\end{userturn}

\begin{assistantturn}
We identify and resolve an open question arising from the analysis of the entropy estimation algorithm presented in Section 3.3 of the paper.
\vskip 0.1in
\textbf{Open Question:}
\newline
The paper establishes in Theorem 3.8 that the Shannon entropy of an insertion-only stream can be estimated with an $\varepsilon$-additive error using $\tilde{O}(\sqrt{n}/\varepsilon^{O(1)})$ internal state changes (assuming $m=\text{poly}(n)$) and polylogarithmic space. This $\tilde{O}(\sqrt{n})$ bound stems from the worst-case complexity of $F_p$ moment estimation for $p \in (0, 2]$, which is $\tilde{O}(n^{1-1/p})$, maximized as $p \to 2$. The open question is: Can this $\tilde{O}(\sqrt{n})$ upper bound on the number of state changes for streaming entropy estimation be improved?
\vskip 0.1in
We answer this question affirmatively. By analyzing the specific moments required by the underlying [HNO08] framework used in the paper, we show that they are sufficiently close to 1 such that the polynomial dependence on $n$ vanishes.
\end{assistantturn}
\end{chatbox}
\caption{Model identification of open problem and progress direction}
\end{figure}

\emph{Shannon entropy}, defined as $H = -\sum_{i=1}^n p_i \log p_i$ with $p_i = f_i/\|f\|_1$, measures the uncertainty or information content in a dataset. 
Estimating Shannon entropy is a fundamental problem with numerous applications, including network traffic analysis (to detect anomalies or DDoS attacks), database summarization (to quantify diversity), natural language processing (to measure information content of text corpora), and data compression (to determine optimal coding schemes). 
Because entropy is sensitive to the entire distribution of frequencies, accurately approximating it in a streaming setting is challenging and typically relies on sophisticated techniques, such as Chebyshev interpolation of multiple $F_p$ moment estimates, as observed by \cite{HarveyNO08}.

\cite{JayaramWZ24} analyzed the internal-state complexity required for approximating $F_p$ in one-pass insertion-only streams. 
Their main theorem distinguishes two regimes: for $p \in (0,1]$, a $(1+\eps)$-approximation can be achieved with high probability using only $\poly\left(\frac{1}{\eps},\log n\right)$ internal state changes, while for $p \ge 1$, estimating $F_p$ requires $\widetilde{O}(n^{1-1/p})$ internal state changes, which can be as large as $\tO{\sqrt{n}}$ when $p = 2$. 
In their discussion of Shannon entropy, they interpreted the Chebyshev interpolation observation by \cite{HarveyNO08} and noted that it requires evaluating multiple $F_p$ moment estimates. 
They assumed the exponents lie in the range $p \in (0,2)$, which implies that estimating $F_2$ could be necessary. 
Because $F_2$ sketches can incur up to $\sqrt{n}$ internal state changes, this led to the conclusion that Shannon entropy estimation inherits an $O(\sqrt{n})$ state-change cost.

However, upon closer examination, \emph{the model} noticed that all evaluation points in the interpolation procedure are of the form $1 + y_i$, where $y_i = f(\cos(i \pi / k))$, and showed that $1 + y_i \in (0,1)$ for all $i$, c.f., \Cref{lem:interpolate:p}. 
This implies that the entropy algorithm never requires $F_p$ estimates for $p \ge 1$, and all required frequency moments lie strictly within the low-$p$ regime. 
Consequently, the worst-case $\tO{n^{1-1/p}}$ state-change barrier from the $p \ge 1$ region is avoided. 
It then follows that the necessary $F_p$ estimates for $p \in (0,1)$ use only $\poly\left(\frac{1}{\eps},\log n\right)$ internal state changes, c.f., \Cref{thm:fp:states}.  
As a result, the Shannon entropy algorithm can compute an additive $\eps$-approximation using $\tO{\frac{1}{\eps^2}+\log n}$ bits of space and $\poly\left(\frac{1}{\eps},\log n\right)$ internal state changes, significantly improving upon the previously believed $\tO{\sqrt{n}}$ bound. 

\begin{theorem}
\cite{JayaramWZ24}
\label{thm:fp:states}
Let $\eps \in (0, 1)$ be an approximation parameter, $\delta \in (0, 1/3)$ be a failure probability, and $n$ be the domain size. There exists a \textbf{one-pass insertion-only streaming algorithm} for the $p$-th frequency moment $F_p$:
\begin{enumerate}
    \item \textbf{For $p \in (0, 1]$:} The algorithm achieves a $(1+\eps)$-approximation (w.h.p.) using $\poly\left(\log n, \frac{1}{\eps}, \log\frac{1}{\delta}\right)$ internal state changes and $\tO{\frac{1}{\eps^2}(\log\log n+\log\frac{1}{\eps})+\frac{\log(1/\eps)}{\log\log(1/\eps)}\log n}$ bits of space.
    
    \item \textbf{For $p \ge 1$:} The algorithm outputs $\widehat{F_p}$ such that $\PPr{\left\lvert\widehat{F_p}-F_p\right\rvert\le\eps\cdot F_p}\ge\frac{2}{3}$, with $\tO{n^{1-1/p}}$ internal state changes. The space complexity depends on $p$:
    \begin{itemize}
        \item If $p \in [1, 2]$: Space is $\tO{\frac{1}{\eps^{4+4p}}}\cdot\polylog(mn)$.
        \item If $p > 2$: Space is $\tO{\frac{1}{\eps^{4+4p}} n^{1-2/p}}$.
    \end{itemize}
\end{enumerate}
\end{theorem}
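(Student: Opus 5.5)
The plan is to build both parts from one mechanism: \emph{subsampling of stream updates}, so that the underlying sketch only ever processes few updates. Each update is kept with some probability $q$, and only kept updates are allowed to touch memory. The number of internal state changes is then at most the number of kept updates, plus the changes caused by maintaining the sampling rate. A Morris-style approximate counter for the stream length $m$ tracks the rate, and it changes state only $O(\log m)$ times over the stream. As the estimated length doubles we halve $q$, giving $O(\log m)$ epochs $j$, with $q_j = 2^{-j}\cdot\mathrm{poly}(1/\eps,\log n)/\text{(current scale)}$. Fresh sketches are restarted at epoch boundaries, and a prefix whose contribution is an $\eps$-fraction is discarded.

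\textbf{Part 1, $p\in(0,1]$.} Here we use that $F_p\ge F_1^p/\,\#\{\text{distinct items}\}^{1-p}$, and more importantly that $F_p$ is dominated by the \emph{number and size of level sets} rather than by a single item. We partition items by frequency into classes $[2^\ell,2^{\ell+1})$. For class $\ell$, we estimate its contribution $|L_\ell|\,2^{\ell p}$ from a substream sampled at rate roughly $2^{-\ell}\mathrm{poly}(1/\eps,\log n)$. At that rate an item of class $\ell$ survives with about $\mathrm{poly}(1/\eps,\log n)$ copies, so its frequency is recovered to $1\pm\eps$ by rescaling, while items of much smaller frequency are seen $O(1)$ times and can be attributed to lower levels. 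Inside each level we run a small-space $F_0$/$\ell_p$-sampler (e.g.\ a $p$-stable or counting sketch with $\tilde O(\eps^{-2})$ counters) that is only updated on sampled updates. Only levels with $2^{\ell p}|L_\ell|\ge\eps F_p/\log n$ matter, and the number of kept updates per epoch is $\mathrm{poly}(1/\eps,\log n)$ because $F_1$ is what drives the sampling rate. Summing over $O(\log n)$ levels and epochs gives $\mathrm{poly}(\log n,1/\eps,\log(1/\delta))$ state changes. The stated space bound comes from storing counters to $\log\log n+\log(1/\eps)$ bits via approximate counting. High probability follows by a median of $O(\log(1/\delta))$ copies.

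\textbf{Part 2, $p\ge1$.} Here we use item-level (hash-based) subsampling together with a heavy-hitter structure. The key inequality is $F_p\ge F_1^p/n^{p-1}$. So to get constant success probability it suffices to feed roughly $n^{1-1/p}$ updates, after appropriate sampling, into a standard $F_p$ estimator. For $p\le2$ this is a $p$-stable/CountSketch estimator with polylog space. For $p>2$ it is the $\tilde O(n^{1-2/p})$-space heavy-hitter-based estimator of Indyk--Woodruff type. Sample-and-hold is used for items whose frequency exceeds the current threshold: once an item is held, its count is incremented via a Morris counter, costing $O(\log m)$ changes per held item. The cost accounting is that non-held updates are sampled at rate $\approx n^{1-1/p}/m$, and at most $\tilde O(n^{1-1/p})$ items are ever held. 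The variance analysis uses the usual level-set argument, with the $\eps^{-(4+4p)}$ factors arising from estimating each level to relative accuracy $\eps$.

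\textbf{Main obstacle.} The hardest step is Part 1. Subsampling updates does not commute with $x\mapsto x^p$ for $p<1$: small-frequency items are distorted by sampling, and these are exactly the items that can dominate $F_p$ when $p<1$. My first attempt would be the level-set decomposition above, with the claim that each level's contribution is estimated unbiasedly at its own rate. The risk is boundary items near $2^\ell$, which are double-counted or lost. I would handle this with randomized level boundaries (a random shift of the thresholds), so that each item is near a boundary only with probability $O(\eps)$, and charge the resulting error to $\eps F_p$.
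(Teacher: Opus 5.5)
Note first that the paper does not prove this statement. It quotes it from \cite{JayaramWZ24} and uses it as a black box for the entropy corollary, so your proposal has to stand on its own, and Part 1 does not. Its mechanism is that an update may touch memory only if it survives a sampling coin that does not depend on the item. That mechanism cannot give a $(1+\varepsilon)$-approximation with $\mathrm{poly}(1/\varepsilon,\log n)$ state changes for $p\in(0,1)$. Take stream $A$ with $n$ distinct items, each appearing once, and stream $B$ with $n/2$ random items, each appearing twice, both in random order. Both have $F_1=n$, so your Morris counter and every sampling rate behave identically, yet $F_p(A)/F_p(B)=2^{1-p}$ ($=\sqrt{2}$ for $p=1/2$). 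With $K$ kept updates, $B$ has some item kept twice only with probability $O(K^2/n)$. Otherwise the kept (position, label) pairs are distributed exactly as in $A$, so for $K=\mathrm{polylog}(n)$ the two outputs are $o(1)$-close. The level-set patch does not escape this. An arriving update's level is unknown, so the level-$\ell$ rate $2^{-\ell}\mathrm{poly}(1/\varepsilon,\log n)$ must be applied to all updates. For the low levels, which are exactly those that dominate $F_p$ when $p<1$, that rate is $\Theta(1)$, so you keep $\Theta(m)$ updates. Tying the rates to $F_1$ instead (your ``$F_1$ drives the sampling rate'') puts you back in the $A$/$B$ situation. The obstruction is information-theoretic, not a boundary effect, so randomly shifted thresholds cannot repair it.

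What is missing is an \emph{item-dependent} rule for when memory is written. One route that works, and matches the shape of the stated space bound, uses nonnegative (maximally skewed) $p$-stable weights $D_{ij}$, which exist for $p<1$; take $D_{ij}=1$ at $p=1$. Each of the $O(\varepsilon^{-2})$ counters $\sum_i D_{ij}x_i$ is then nondecreasing in an insertion-only stream and is distributed as $\|x\|_p$ times a fixed positive variable. It can therefore be held in a Morris-type counter of base $1+\eta$, with $\eta=\mathrm{poly}(\varepsilon)$, using $O(\log\log n+\log(1/\varepsilon))$ bits. Since its value is polynomially bounded w.h.p., it changes state only $O(\eta^{-1}\log n)$ times. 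The $\frac{\log(1/\varepsilon)}{\log\log(1/\varepsilon)}\log n$ term pays for limited-independence generation of the $D_{ij}$. Under this rule a frequency-one item with a huge weight is written with certainty, which is what defeats the $A$/$B$ example. Your symmetric $p$-stable or CountSketch counters are not monotone, so approximate counting does not apply to them.

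Part 2 has a related hole. The step ``feed roughly $n^{1-1/p}$ updates, after appropriate sampling, into a standard $F_p$ estimator'' is the entire difficulty, because $F_p$ of a subsampled stream is not a rescaled $F_p$. The same collision count ($n$ items of frequency $f$ versus $n/2$ items of frequency $2f$) shows that an item-oblivious update rate of $\tilde O(n^{1-1/p})/m$ fails for every $p\in(1,2)$. You need per-level accounting. At level $\ell$, hash items at rate $\approx 1/|L_\ell|$ and keep their updates at rate $\approx \mathrm{polylog}/2^\ell$, which costs $\approx F_1/(|L_\ell|2^\ell)$ kept updates. For contributing levels, $|L_\ell|2^{\ell p}\ge\varepsilon F_p/\log n\ge\varepsilon F_1^p/(n^{p-1}\log n)$ bounds this cost by $(n2^\ell/F_1)^{p-1}\log n/\varepsilon=\tilde O(n^{1-1/p})$, unless $2^\ell\gtrsim F_1/n^{1-1/p}$. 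There are fewer than $n^{1-1/p}$ such very heavy items, and they are exactly what sample-and-hold at rate $\approx n^{1-1/p}/F_1$ catches.
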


\begin{algorithm}
\caption{Additive approximation of empirical Shannon entropy}
\label{alg:entropy-approx}
\begin{algorithmic}[1]
\Require Error parameter $\tilde\varepsilon$, points $\{y_0,\dots,y_k\}$
\For{$i=0,\dots,k$}
  \State Compute $\tilde F_{1+y_i}$, a $(1+\tilde\varepsilon)$-approx.\ of $F_{1+y_i}$
  \State $\tilde H(y_i)\gets -\log\!\left(\tilde F_{1+y_i}/\|A\|_{1+y_i}^{\,1+y_i}\right)/y_i$
  \State $\tilde T(y_i) \gets \left(1-\tilde F_{1+y_i}/\|A\|_{1+y_i}^{\,1+y_i}\right)/y_i$
\EndFor
\State \Return estimate of $H(0)$ or $T(0)$ by interpolating $\{\tilde H(y_i)\}$ or $\{\tilde T(y_i)\}$
\end{algorithmic}
\end{algorithm}

\begin{lemma}
\cite{HarveyNO08}
To achieve an additive $\eps$-approximation of the Shannon entropy, it suffices to implement \Cref{alg:entropy-approx} with $\tilde\varepsilon=\frac{\eps}{12(k+1)^3\log m}$ using points $y_0,\ldots,y_k$ with $k=\log\frac{1}{\eps}+\log\log m$ and $y_i=f(\cos(i\pi/k))$ for $f(y)=\frac{(k^2\ell)\cdot y-\ell\cdot(k^2+1)}{2k^2+1}$, where $\ell=\frac{1}{(2(k + 1) \log m}$. 
\end{lemma}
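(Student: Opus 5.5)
The proof follows Harvey, Nelson and Onak \cite{HarveyNO08}: treat $H(y)$ (equivalently $T(y)$) as an analytic function of $y$ near $0$ whose value at $y=0$ is the Shannon entropy. The algorithm then recovers $H(0)$ by polynomial extrapolation from $k+1$ Chebyshev-type samples, and the job is to bound two error sources: the deterministic approximation error of a degree-$k$ polynomial, and the propagated error from the $(1+\tilde\varepsilon)$-approximations $\tilde F_{1+y_i}$.

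\textbf{Step 1: locate the nodes.} Since $f$ is affine with $f(1)=-\ell/(2k^2+1)$ and $f(-1)=-\ell$, every node $y_i$ lies in $[-\ell,-\ell/(2k^2+1)]$. In particular every $y_i<0$, so each $1+y_i\in(0,1)$; this is the fact used later to stay in the cheap-state-change regime. Solving $f(x)=0$ gives $x^\ast=1+1/k^2$. So evaluating at $y=0$ means extrapolating slightly beyond the Chebyshev interval $[-1,1]$, to a point at distance $1/k^2$ from it.

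\textbf{Step 2: approximation error.} Write $H(y)$ as a power series in $y$ in terms of the quantities $\sum_j p_j(\log p_j)^r$. Since every $p_j\ge 1/m$, the coefficients grow at most like $(\log m)^{r}$ times factorial-type terms. Because $\ell=1/(2(k+1)\log m)$ and $|y|\le\ell$, the tail beyond degree $k$ is geometrically small, at most $2^{-\Omega(k)}\cdot\mathrm{poly}(\log m)$. Hence $H$ is $\varepsilon/2$-close to a degree-$k$ polynomial on $[-\ell,0]$ once $k\ge\log(1/\varepsilon)+\log\log m$. The interpolation error of the Chebyshev interpolant, evaluated at $x^\ast$, is then bounded by this best-approximation error times the Lebesgue-type factor $\sum_i|L_i(x^\ast)|$, where $L_i$ are the Lagrange basis polynomials.

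\textbf{Step 3: noise propagation.} A $(1+\tilde\varepsilon)$-multiplicative error in $\tilde F_{1+y_i}$ perturbs $\tilde H(y_i)$ additively by $O(\tilde\varepsilon/|y_i|)\le O(\tilde\varepsilon(2k^2+1)/\ell)=O(\tilde\varepsilon k^3\log m)$. The argument for $\tilde T$ is the same. Passing this through the extrapolation again multiplies by $\sum_i|L_i(x^\ast)|$.

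\textbf{Main obstacle.} The delicate step is bounding $\sum_i|L_i(x^\ast)|$ at $x^\ast=1+1/k^2$, outside the interpolation interval. I would use the extremal property of Chebyshev polynomials: this sum is controlled by $|T_k(x^\ast)|$ up to $O(1)$ factors. Then apply $T_k(1+\delta)\le\cosh(k\sqrt{2\delta}+O(k\delta))$ with $\delta=1/k^2$, which gives $|T_k(x^\ast)|=O(1)$. With this bound, the two error sources become $O(\varepsilon/2)$ and $O(\tilde\varepsilon k^3\log m)$. The choice $\tilde\varepsilon=\varepsilon/(12(k+1)^3\log m)$ makes the second one at most $\varepsilon/2$, so the total error is at most $\varepsilon$.
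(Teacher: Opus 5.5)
Your outline is the Harvey--Nelson--Onak argument that the paper simply cites \cite{HarveyNO08}, and its skeleton is right. The nodes lie in $[-\ell,-\ell/(2k^2+1)]$, the evaluation point is $x^\ast=f^{-1}(0)=1+1/k^2$, and the amplification is governed by $T_k(x^\ast)$. The ``up to $O(1)$ factors'' there is unnecessary. For $x>1$ the Lagrange basis at the nodes $\cos(j\pi/k)$ satisfies $\operatorname{sgn}L_j(x)=(-1)^j$. So $\sum_j|L_j(x)|$ is the interpolant of the values $(-1)^j=T_k(\cos(j\pi/k))$, i.e.\ exactly $T_k(x)$, and $T_k(1+1/k^2)\le\cosh\sqrt2<2.18$.

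What is missing is the quantitative close. For a lemma whose content is the specific $k$ and the constant $12$, that is the gap, and two of your steps do not deliver what you claim. First, a tail bound of the form $2^{-\Omega(k)}\cdot\mathrm{poly}(\log m)$ does not give $\varepsilon/2$ at $k=\log(1/\varepsilon)+\log\log m$, since then $2^{-k}=\varepsilon/\log m$ exactly; you need the factorial decay. Run this step with the Tsallis function $T(y)=(1-\sum_j p_j^{1+y})/y$. Its Taylor coefficients are exactly $-\frac{1}{(r+1)!}\sum_j p_j(\ln p_j)^{r+1}$; for the R\'enyi $H$ they are cumulants of $\ln p_J$ ($J\sim p$), not these moments. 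Write $T^{(r)}(y)=-\sum_j p_j(\ln p_j)^{r+1}\int_0^1 t^r p_j^{ty}\,dt$ and use $p_j^{ty}\le m^{\ell}$ for $y\in[-\ell,0]$. Then the interpolation remainder $T(0)-P(0)=\frac{T^{(k+1)}(\xi)}{(k+1)!}\prod_i(0-y_i)$, with $\xi\in[-\ell,0]$ and $\prod_i|y_i|\le\ell^{k+1}$, gives (base-2 logarithms)
\[
|T(0)-P(0)|\le\frac{m^{\ell}(\ln m)^{k+2}\ell^{k+1}}{(k+2)!}\le\frac{m^{\ell}\ln m}{(2(k+1))^{k+1}(k+2)!}\le\frac{m^{\ell}}{2(k+1)^{k+1}(k+2)!}\,\varepsilon ,
\]
which is a negligible fraction of $\varepsilon$.

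Second, your final tally (``$O(\varepsilon/2)$ plus at most $\varepsilon/2$, hence at most $\varepsilon$'') does not follow, and the noise term is not an $\varepsilon/2$ share anyway. For $T$ one has
\[
|\tilde T(y_i)-T(y_i)|\le\frac{\tilde\varepsilon\sum_j p_j^{1+y_i}}{|y_i|}\le\tilde\varepsilon\, m^{\ell}\,(2k^2+1)\cdot 2(k+1)\log m\le\frac{m^{\ell}}{3}\,\varepsilon ,
\]
using $2k^2+1\le 2(k+1)^2$. After amplification the noise therefore costs up to $\frac{\cosh(\sqrt2)\,m^{\ell}}{3}\varepsilon$. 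With $m^{\ell}=2^{1/(2(k+1))}\le 2^{1/4}$ for $k\ge1$, this is about $0.86\,\varepsilon$.

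So the lemma holds with the stated $\tilde\varepsilon$ only because the approximation error above is tiny, not because the budget splits evenly. Your best-approximation route also works once made precise. It costs a factor $1+T_k(x^\ast)$ rather than $T_k(x^\ast)$, because you dropped the term $|q(0)-T(0)|$. That factor is affordable only thanks to the same factorial decay.
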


\begin{lemma}
\label{lem:interpolate:p}
Let $y_i=f(\cos(i\pi/k))$ for $f(y)=\frac{(k^2\ell)\cdot y-\ell\cdot(k^2+1)}{2k^2+1}$, with $\ell=\frac{1}{(2(k + 1) \log m}$. 
Then $1+y_i\in(0,1)$ for all $i\ge 0$.
\end{lemma}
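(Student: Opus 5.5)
The plan is a direct range computation. The map $f$ is affine with positive slope $\frac{k^2\ell}{2k^2+1}$ (since $\ell>0$ whenever $k\ge 1$ and $m\ge 2$). Every argument $\cos(i\pi/k)$ lies in $[-1,1]$, so each $y_i$ lies between $f(-1)$ and $f(1)$. The claim $1+y_i\in(0,1)$ is equivalent to $-1<y_i<0$. It therefore suffices to show $f(1)<0$ and $f(-1)>-1$.

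The first step is to evaluate the endpoints explicitly:
\[
f(1)=\frac{k^2\ell-\ell(k^2+1)}{2k^2+1}=-\frac{\ell}{2k^2+1},\qquad f(-1)=\frac{-k^2\ell-\ell(k^2+1)}{2k^2+1}=-\ell .
\]
From these, $f(1)<0$ holds because $\ell>0$. This gives $y_i\le f(1)<0$ for every $i$, including the endpoint $i=0$ where $\cos 0=1$.

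The second step is the lower end: we need $-\ell>-1$, i.e.\ $\ell<1$. With $\ell=\frac{1}{2(k+1)\log m}$, this follows because $k\ge 1$ and $\log m\ge 1$ (for $m\ge 2$, base-2 logs) give $2(k+1)\log m\ge 4$, hence $\ell\le 1/4$. So $y_i\ge -\ell\ge -1/4>-1$, and $1+y_i\in[3/4,\,1)\subset(0,1)$. For indices $i>k$ the cosine is still in $[-1,1]$, so the same bounds hold and the statement "for all $i\ge 0$" is covered.

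The only real obstacle is bookkeeping about the parameter regime. The expression for $\ell$ in the statement has an unbalanced parenthesis, which I would read as $\ell=1/(2(k+1)\log m)$. I would also state the harmless standing assumptions $m\ge 2$ and $k\ge 1$, which hold since $k=\log\frac1\eps+\log\log m$. These ensure $\ell\in(0,1)$; everything else is monotonicity of an affine map.
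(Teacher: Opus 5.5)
Your proof is correct and follows essentially the paper's argument: use $\cos(i\pi/k)\in[-1,1]$, get the upper end $1+y_i<1$ from $\ell>0$, and get the lower end $1+y_i>0$ from $\ell<1$. Your endpoint evaluation $f(-1)=-\ell$ is slightly more careful than the paper, which writes a strict inequality $(k^2\ell)\cos(i\pi/k)-\ell(k^2+1)>-\ell(2k^2+1)$ that is actually an equality at $i=k$; this is harmless because $\ell<1$ restores strictness, which you also justify explicitly rather than just asserting $\ell\in(0,1)$.
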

\begin{proof}
Observe that $1+y_i<1$ if and only if $f(\cos(i\pi/k))<0$ or equivalently, 
\[(k^2\ell)\cdot\cos(i\pi/k)-\ell\cdot(k^2+1)<0.\]
Since $\ell>0$, this is equivalent to $k^2\cdot\cos(i\pi/k)<(k^2+1)$, which holds for all $i$, since $\cos(i\pi/k)\le 1$ for all $i$.

Similarly, observe that $1+y_i>0$ if and only if $f(\cos(i\pi/k))>-1$ or equivalently, 
\[(k^2\ell)\cdot\cos(i\pi/k)-\ell\cdot(k^2+1)>-(2k^2+1).\]
Since $|\cos(i\pi/k)|\le 1$ for all $i$, then  
\[(k^2\ell)\cdot\cos(i\pi/k)-\ell\cdot(k^2+1)>-\ell(2k^2+1).\]
Then, the desired claim follows because $\ell\in(0,1)$.
\end{proof}
\noindent
From \Cref{thm:fp:states} and \Cref{lem:interpolate:p}, we have:
\begin{corollary}
Given an accuracy parameter $\eps \in (0,1)$ and a stream of length $m=\poly(n)$ over a universe of size $n$, there exists a one-pass insertion-only streaming algorithm that outputs an additive $\eps$-approximation $\widehat{H}$ to the entropy $H$ of the data stream with high probability, using $\tO{\frac{1}{\eps^2} + \log n}$ bits of space and $\poly\left(\frac{1}{\eps},\log n\right)$ internal state changes.
\end{corollary}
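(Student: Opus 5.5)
The plan is to run \Cref{alg:entropy-approx} with the parameters prescribed by the \cite{HarveyNO08} lemma, and to implement each moment estimate $\tilde F_{1+y_i}$ with the low-$p$ sketch from part 1 of \Cref{thm:fp:states}. The $k+1$ sketches run in parallel on the same stream.

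Correctness comes from the \cite{HarveyNO08} lemma. We take $k=\log\frac{1}{\eps}+\log\log m$ and $\tilde\eps=\frac{\eps}{12(k+1)^3\log m}$, and points $y_i=f(\cos(i\pi/k))$. If every $\tilde F_{1+y_i}$ is a $(1+\tilde\eps)$-approximation, the interpolated value is an additive $\eps$-approximation of $H$. By \Cref{lem:interpolate:p}, every exponent $p_i=1+y_i$ lies in $(0,1)$, so part 1 of \Cref{thm:fp:states} applies to each one.

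To get all $k+1$ estimates right simultaneously, set each sketch's failure probability to $\delta=\frac{1}{\poly(n)\,(k+1)}$ and take a union bound. The normalizer $\|A\|_{1+y_i}^{1+y_i}$ in the algorithm is really a function of $\|f\|_1=m$, namely $m^{1+y_i}$ up to the normalization convention. The stream length $m$ is maintained by a counter; to keep the state changes of this counter small, I would replace it with a Morris-type approximate counter, which needs $O(\log\log m+\log\frac1{\tilde\eps})$ bits and only $\poly(\frac{1}{\tilde\eps},\log m)$ state changes. The error this introduces is absorbed into $\tilde\eps$.

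For the resources, sum over the $k+1=O(\log\frac1\eps+\log\log n)$ sketches, using $m=\poly(n)$ so that $\log m=O(\log n)$. Then $1/\tilde\eps=\poly(\frac1\eps,\log n)$. Each sketch makes $\poly(\log n,\frac1{\tilde\eps},\log\frac1\delta)=\poly(\frac1\eps,\log n)$ state changes, and multiplying by $k+1$ keeps the total polynomial.

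The main obstacle is the space bound. Plugging $\tilde\eps$ directly into part 1 of \Cref{thm:fp:states} gives $\frac{1}{\tilde\eps^2}=\frac{\poly(\log n)}{\eps^2}$, which is not literally $\tO{\frac{1}{\eps^2}+\log n}$. To argue the stated space, I would read the $\tilde O$ as hiding $\poly(\log\frac1\eps,\log\log n)$ factors. I would then use the \cite{HarveyNO08} observation that the interpolation error is governed by the $\log m$ and $(k+1)^3$ factors only through the Chebyshev-node conditioning, so each sketch's accuracy can be held at the level the original entropy algorithm already used. A cleaner alternative is to import the space bound of the \cite{HarveyNO08}/\cite{JayaramWZ24} entropy algorithm verbatim and show that it is unchanged. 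The only change is that each $F_p$ subroutine (with $p<1$ by \Cref{lem:interpolate:p}) is swapped for the low-state-change sketch, which has the same space up to polylogarithmic factors. With this accounting, the space is $\tO{\frac{1}{\eps^2}+\log n}$ and the total number of internal state changes is $\poly(\frac1\eps,\log n)$.
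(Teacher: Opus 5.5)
Your overall route is the paper's. Its proof is just the observation that, by \Cref{lem:interpolate:p}, every exponent $1+y_i$ used in the \cite{HarveyNO08} interpolation lies in $(0,1)$, so part~1 of \Cref{thm:fp:states} can be used for every moment. Your state-change count ($1/\tilde\eps=\poly(\frac{1}{\eps},\log n)$, with $k+1$ sketches each run at failure probability $1/\poly(n)$) is exactly what that one line leaves implicit. Your additions are correct and worth keeping. One is the union bound over the sketches. The other is the point that if $m=\|f\|_1$ is not known in advance, an exact length counter changes state on every update and must be replaced by an approximate one: a Morris counter, or part~1 of \Cref{thm:fp:states} with $p=1$. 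A $(1\pm\tilde\eps)$ error in $m$ changes $m^{1+y_i}$ only by a $(1\pm O(\tilde\eps))$ factor, since $0<1+y_i<1$.

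The step that does not hold up is your first rescue of the space bound. Within the \cite{HarveyNO08} lemma as quoted, the factor $(k+1)^3\log m$ in $\tilde\eps$ is not a conditioning artifact you can trade away. $\tilde H(y_i)$ divides the relative error of $\tilde F_{1+y_i}$ by $|y_i|$, and the nodes only satisfy $|y_i|\ge \ell/(2k^2+1)=\Theta\bigl(1/(k^3\log m)\bigr)$. So a $(1+\tilde\eps)$-estimate already costs additive error of order $\tilde\eps\, k^3\log m$ at the node whose exponent is closest to $1$, before any extrapolation to $y=0$.

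Hence, if $\tilde O$ hides only $\poly(\log\frac{1}{\eps},\log\log n)$ factors, plugging $\tilde\eps$ into part~1 of \Cref{thm:fp:states} gives $\tO{\frac{\log^2 n}{\eps^2}+\log n}$ bits, not the claimed bound. Your fallback of importing the space of the \cite{JayaramWZ24} entropy algorithm gives only polylogarithmic space.

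The paper's one-line derivation does not resolve this either. It reaches $\tO{\frac{1}{\eps^2}+\log n}$ only if $\tilde O$ is also allowed to absorb $\polylog(n)$ factors, in effect treating $\tilde\eps$ and $\eps$ as interchangeable. So drop the conditioning argument and state that convention explicitly. Under it your argument is complete; under the stricter reading you adopted, neither your proof nor the paper's establishes the space bound as written.
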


\subsubsection{Consistent Low-Rank Approximation}
Low-rank approximation is a central tool in data analysis and machine learning. 
Given a matrix $A\in\mathbb{R}^{n \times d}$, the goal is to find a rank-$k$ subspace, i.e., orthogonal matrix $V\in\mathbb{R}^{k\times d}$, that minimizes the approximation error $\|A-AV^\top V\|_F^2$, measured in Frobenius norm. 
In static settings, this problem is well understood: the optimal solution is given by the top $k$ singular vectors of $A$, and efficient algorithms with strong guarantees are known.

Many modern applications, however, are inherently dynamic. 
Data arrive sequentially, are corrected, or are removed, and low-rank approximations are recomputed repeatedly as part of larger pipelines. 
In these settings, approximation quality alone is not sufficient. 
Each time the output subspace changes, downstream systems may need to retrain models, update features, or revalidate decisions, all of which can be expensive. 
This motivates a second objective beyond accuracy, namely consistency.

Consistent low-rank approximation is a relatively new problem formulation that makes this tradeoff explicit.
Instead of computing a single low-rank approximation to a fixed dataset, the algorithm is asked to maintain a sequence of near-optimal subspaces as the data evolve, while keeping the changes between successive outputs small.
Formally, we fix an accuracy parameter $\eps>0$ and a target rank $k$. 
The input is a matrix $A\in\mathbb{R}^{n\times d}$ that changes over time, e.g., rows of $A$ are revealed incrementally, or there is a sequence of updates to entries of $A$. 
At each time step $t$, the algorithm observes the current matrix $A^{(t)}$ and outputs a rank-$k$ orthogonal matrix $V^{(t)}\in\mathbb{R}^{k\times d}$.

The first requirement is accuracy. 
For every $t$, the subspace $V^{(t)}$ should give a near-optimal rank-$k$ approximation to $A^{(t)}$, in the sense that
\[\|A^{(t)}-A^{(t)}(V^{(t)})^\top V^{(t)}\|_F^2\le (1+\eps)\cdot \OPT_t,\]
where
\[\OPT_t = \min_{V\in\mathbb{R}^{k\times d}} \|A^{(t)}-A^{(t)}V^\top V\|_F^2\]
is the optimal rank-$k$ approximation error at time $t$.

The second requirement is stability.
As the matrix $A$ evolves, the sequence of outputs $V^{(1)},\ldots,V^{(n)}$ should change minimally. 
We formalize this desideratum through the notion of \emph{recourse}, defined as the squared Frobenius distance between consecutive subspaces.
For two rank-$k$ matrices $R,T\in\mathbb{R}^{k\times d}$, let $P_R$ and $P_T$ be the orthogonal projection matrices onto their row spans, and define $\Recourse(R,T)=\|P_R-P_T\|_F^2$. 
The goal is to minimize the total recourse
\[\sum_{t=2}^{n}\Recourse(V^{(t)},V^{(t-1)}),\]
while still meeting the approximation guarantee at every time step. 
This formulation reflects the practical intuition behind consistent low-rank approximation.
The algorithm is allowed to adapt as the input evolves, but large or frequent rotations of the maintained subspace are discouraged, since they correspond to costly changes in downstream representations.

This notion of consistency is not addressed by standard streaming algorithms such as Frequent Directions~\cite{GhashamiLPW16}, which may change their output subspaces frequently in order to achieve strong approximation guarantees, e.g., even small perturbations in the data can cause a complete change in the reported subspace and leading to large recourse. 
However, \emph{the model} noticed that the optimal rank-$k$ subspace is inherently stable under row insertions. 
In other words, the recourse between the optimal subspaces before and after a row insertion is bounded by a fixed constant.

\begin{figure}[!htb]
\begin{chatbox}
\begin{userturn}
Analyze whether the optimal rank-$k$ subspace is stable under rank-one updates when the stream has polynomially bounded condition number. 
\vskip 0.1in
Provide a rigorous proof establishing the resulting recourse bound.
\end{userturn}

\begin{assistantturn}
We show that when a single row is appended to a matrix, the optimal rank-$k$ right singular subspace changes by at most a constant amount.  
Specifically, if $V^*_{t-1}$ and $V^*_t$ denote the top-$k$ subspaces before and after the row insertion, then the squared Frobenius distance between their projection matrices satisfies $\Recourse(V^*_{t-1},V^*_t)\le 8$.  
This establishes that the optimal subspace is inherently stable under row insertions, providing a concrete bound on recourse in the row-arrival model.
\end{assistantturn}
\end{chatbox}
\caption{Model-generated resolution of a stability question in consistent low-rank approximation}
\end{figure}
In particular, the model produced a complete and correct proof showing:
\begin{theorem}
Suppose $A^{(t)}$ is obtained from $A^{(t-1)}$ by adding one new row. 
Let $V^*_{t-1}$ and $V^*_t$ denote the optimal rank-$k$ subspaces of the matrices before and after the update. 
Then the change between these two subspaces is at most $\Recourse(V^*_{t-1},V^*_t)\le 8$.
\end{theorem}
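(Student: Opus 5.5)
The plan is to pass to Gram matrices and exploit the fact that adding a row is a rank-one positive semidefinite perturbation.

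\textbf{Setup.} Write $M=(A^{(t-1)})^\top A^{(t-1)}$ and $M'=(A^{(t)})^\top A^{(t)}=M+aa^\top$, where $a$ is the new row. The optimal rank-$k$ subspaces are the row spans of $V^*_{t-1}$ and $V^*_t$. These are the top-$k$ eigenspaces $U$ of $M$ and $U'$ of $M'$, with projections $P$ and $P'$. Letting $\theta_1,\dots,\theta_k$ be the principal angles between $U$ and $U'$,
\[\|P-P'\|_F^2 = 2k-2\,\mathrm{tr}(PP') = 2\sum_{i=1}^k \sin^2\theta_i .\]
Each term $2\sin^2\theta_i$ is at most $2$. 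So it suffices to show that at most four principal angles are nonzero, i.e. $\dim(U\cap U')\ge k-4$. I would in fact aim for $k-2$ and accept the slack in the constant $8$.

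\textbf{Key steps.} I would carry them out in this order.

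1. Restrict to the hyperplane $H=a^\perp$. For $x\in H$ we have $x^\top M'x=x^\top Mx$, so the compressions $N=P_H M P_H$ and $P_H M' P_H$ coincide.

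2. Apply Cauchy interlacing twice: once for the compression of $M$ to $H$, and once for the rank-one update ($\lambda_i(M)\le\lambda_i(M')\le\lambda_{i-1}(M)$). This shows that the top eigenvalues of $N$ are sandwiched between those of $M$ and $M'$ with index shifts of at most one.

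3. Use this to show that the top-$(k-1)$ eigenspace $W$ of $N$ (a subspace of $H$) satisfies $\dim(W\cap U)\ge k-2$ and $\dim(W\cap U')\ge k-2$. For this I would argue by a dimension count on $U\cap H$ (dimension $\ge k-1$), using Courant--Fischer: any vector of $U\cap H$ orthogonal to $W\cap U$ would force an eigenvalue of $N$ above $\lambda_{k-1}(N)$ beyond what interlacing allows.

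4. Intersect inside the $(k-1)$-dimensional space $W$. This gives $\dim(U\cap U')\ge (k-2)+(k-2)-(k-1)=k-3$, so at most three principal angles are nonzero. The recourse is therefore at most $6\le 8$. If the dimension count in step 3 only yields $k-3$ on one side, the same computation gives four nonzero angles and exactly the claimed bound $8$.

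\textbf{Main obstacle.} The hard part is step 3 in the presence of ties, meaning repeated eigenvalues at the cut $\lambda_k$. There the "optimal subspace" is not unique, and an eigenvector of $N$ is only a Rayleigh-quotient maximizer rather than a genuine eigenvector of $M$ or $M'$. I would handle this in one of two ways. The first is to invoke the polynomially bounded condition number hypothesis from the prompt to assume a strict gap $\lambda_k>\lambda_{k+1}$ for both matrices. The second is to fix a consistent tie-breaking rule, choosing $V^*_t$ to contain $U\cap U'$ maximally, and then apply a perturbation/limiting argument. In the generic case I would first try to make the Courant--Fischer dimension count fully rigorous, and only then extend to the degenerate case by perturbation.
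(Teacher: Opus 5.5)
Your step 3 is false, and so is the strategy of bounding how many principal angles are nonzero. Your plan follows the same route as the paper: show $\dim(U\cap U')\ge k-O(1)$, then count angles. Both break at the same point.

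Interlacing controls eigenvalues, not eigenspaces. The bound $x^\top Nx=x^\top Mx\ge\lambda_k(M)\|x\|^2$ on the $(k-1)$-dimensional space $U\cap H$ yields $\lambda_{k-1}(N)\ge\lambda_k(M)$. But $U\cap H$ is not $N$-invariant, since $P_HMx\notin U$ in general, so nothing forces it to meet the top eigenspace of $N$.

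Concretely, let $M=\mathrm{diag}(\lambda_1,\dots,\lambda_d)$ with $\lambda_1>\dots>\lambda_d>0$, and let $a$ have all coordinates nonzero. Then:
(i) every eigenvector of $M'=M+aa^\top$ has the form $(\mu I-M)^{-1}a$;
(ii) every eigenvector of $N$ has the form $(\nu I-M)^{-1}a$ with $a^\top(\nu I-M)^{-1}a=0$;
(iii) by strict interlacing, no $\mu_j$ or $\nu_j$ equals any $\lambda_i$.

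A combination $\sum_{j\le k}c_j(\mu_jI-M)^{-1}a$ lies in $U=\mathrm{span}(e_1,\dots,e_k)$ iff $\sum_{j\le k}c_j/(\mu_j-\lambda_i)=0$ for every $i>k$. This is a Cauchy system, and every square minor of a Cauchy matrix is nonsingular, so for $d\ge 2k$ it forces $c=0$. Hence $U\cap U'=\{0\}$, and likewise $W\cap U=\{0\}$ once $d\ge 2k-1$. All $k$ principal angles are nonzero even with simple spectrum and strict gaps, so ties are not the real obstacle.

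The paper's proof makes the same error. It treats every vector of $V^*_{t-1}\cap S_a$ as a common eigenvector of $B_{t-1}$ and $B_t$, which holds only for vectors that were already eigenvectors of $B_{t-1}$. It then concludes $\dim(V^*_{t-1}\cap V^*_t)\ge k-2$, which this example refutes for every $k\ge 3$.

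You also cannot repair the argument by bounding $\sum_i\sin^2\theta_i$ directly by an absolute constant, because the statement itself fails for large $k$. The bound holds trivially for $k\le 4$, since the recourse is at most $2k$. For the counterexample, take $d=2k$, $\lambda_i=2k+1-i$ and $a=\mathbf{1}$, and write $\mu_j=\lambda_j+s_j$ with $s_j\in(0,1)$.

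1. The secular equation $\sum_{i=1}^{2k}1/(i-j+s_j)=1$ keeps $s_j$ in a fixed compact subinterval of $(0,1)$ for $k/2\le j\le k$.
2. The unit eigenvector $v_j\propto\bigl(1/(i-j+s_j)\bigr)_i$ then has squared mass at least $c/(k-j+2)$ on $\mathrm{span}(e_{k+1},\dots,e_{2k})$, for an absolute constant $c>0$.
3. Since $\|P-P'\|_F^2=2\sum_{j\le k}\|(I-P)v_j\|^2$, summing over $k/2\le j\le k$ shows the recourse is at least of order $\log k$. This is Anderson's orthogonality catastrophe, and every condition number involved is only $O(k)$.

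The growth is slow, so the constant $8$ is exceeded only for very large $k$, but no $k$-independent constant is possible. A correct statement must depend on $k$ or on the gap at $\lambda_k$ (e.g.\ via a Davis--Kahan bound), or must work with near-optimal rather than exactly optimal subspaces.
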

\begin{proof}
Consider the covariance matrices
\[B_{t-1} = (A^{(t-1)})^\top A^{(t-1)}, \qquad B_t = (A^{(t)})^\top A^{(t)}.\]
Since $A^{(t)}$ is just $A^{(t-1)}$ with the new row $A_t$ appended, we have
\[B_t = B_{t-1} + A_t^\top A_t,\]
so $B_t$ is a rank-one positive semidefinite update of $B_{t-1}$. 
By the Eckart-Young theorem, the optimal rank-$k$ subspaces $V^*_{t-1}$ and $V^*_t$ are spanned by the top $k$ eigenvectors of $B_{t-1}$ and $B_t$, respectively.

Next, we consider how much these subspaces can change.  
Define the subspace orthogonal to the new row as
\[S_a = \{v \in \mathbb{R}^d : A_t v = 0\},\]
which has dimension $d-1$, assuming $v$ is nonzero. 
Let $W = V^*_{t-1} \cap S_a$ be the part of the old subspace that is unaffected by the new row. 
Using basic dimension counting, it follows that $\dim(W)\ge\dim(V^*_{t-1})-(d-(d-1))=k-1$. 

For any vector $w \in W$, we have $B_t w = B_{t-1} w$, so $w$ is an eigenvector for both covariance matrices with the same eigenvalue.  
Let $\lambda_1 \ge \ldots \ge \lambda_d$ be the eigenvalues of $B_{t-1}$ and $\mu_1 \ge \ldots \ge \mu_d$ those of $B_t$.  
By the Cauchy interlacing theorem,
\[\mu_1 \ge \lambda_1 \ge \mu_2 \ge \lambda_2 \ge \ldots \ge \mu_k \ge \lambda_k \ge \mu_{k+1} \ge \ldots\]
Since $W\subseteq V^*_{t-1}$, its vectors correspond to eigenvalues among the top $k$ of $B_{t-1}$, and by interlacing they lie within the top $k+1$ eigenvectors of $B_t$. 
From this, we conclude that the intersection of the old and new subspaces has dimension at least $k-2$.

Finally, consider the projection matrices $P_{t-1}$ and $P_t$ onto $V^*_{t-1}$ and $V^*_t$.  
Let $W_{\cap} = V^*_{t-1} \cap V^*_t$, and let $P_{\cap}$ be the projection matrix onto $W_{\cap}$. 

If $\dim(W_{\cap}) = k$, the subspaces are identical and recourse is zero.  
If $\dim(W_{\cap}) = k-1$, then
\[P_{t-1} = P_{\cap} + u_1 u_1^\top, \qquad P_t = P_{\cap} + u_2 u_2^\top,\]
where $u_1$ and $u_2$ are unit vectors orthogonal to the shared subspace.  
Then $P_t - P_{t-1} = u_2 u_2^\top - u_1 u_1^\top$, so by the triangle inequality
\[\Recourse(P_t, P_{t-1}) \le 2 \|u_1 u_1^\top\|_F^2 + 2 \|u_2 u_2^\top\|_F^2 = 4.\]
Similarly, if $\dim(W_{\cap}) = k-2$, the difference involves four rank-one terms, giving $\Recourse(P_t, P_{t-1}) \le 8$.  
\end{proof}
This result has immediate algorithmic implications and has been incorporated into a work accepted for publication at ICLR 2026~\cite{WoodruffZ26}. 
Although the theorem is stated for row insertions, the result also applies to row deletions by symmetry. 
Moreover, changes to a single entry of a matrix can be expressed as a row deletion followed by a row insertion. 
Thus, the result directly implies a constant upper bound on recourse for these operations, thereby establishing a $\OO(1)$ amortized recourse bound for the \emph{dynamic model}, where the matrix is updated by a sequence of entry modifications, row insertions, or row deletions.  

\subsubsection{Global Efficient Encoding for Low-Rank Approximation}
In many applications, we work with a large matrix $A \in \mathbb{R}^{n \times d}$, where both dimensions $n$ and $d$ may be very large. 
Directly storing or processing such matrices can be computationally expensive, especially when we only care about their action on low-dimensional subspaces or rank-$k$ approximations. 
Low-rank approximations (LRA) provide a natural tool to reduce storage and computation by approximating $A$ with a matrix of rank at most $k \ll \min(n,d)$, while preserving important properties such as the Frobenius or spectral norm.  
A more powerful guarantee is achieving a projection-cost preservation. 
In our specific context, the rows of an input matrix $A \in \mathbb{R}^{n \times d}$ arrive sequentially and the goal is to compute a matrix $B \in \mathbb{R}^{m \times d}$ such that for all rank-$k$ orthogonal projection matrices $P \in \mathbb{R}^{d \times d}$, we have
\[(1-\eps)\|A-AP\|_F^2 \leq\|B-BP\|_F^2 \leq(1+\eps)\|A-AP\|_F^2,\]
where $\|\cdot\|_F$ denotes the Frobenius norm. 

A common setting arises when $A$ can be decomposed as a concatenation of blocks, 
\[A = Q_1 \circ Q_2 \circ \dots \circ Q_m,\] 
where each block $Q_i \in \mathbb{R}^{n_i \times d}$ represents a submatrix of $A$. 
This block structure naturally appears in streaming or distributed scenarios, where each $Q_i$ corresponds to a portion of the data arriving at different times or stored across multiple nodes. 
For each block, one can compute a local low-rank approximation $W_i$ that summarizes $Q_i$ while approximately preserving its projection costs, i.e., the squared Frobenius norm under all rank-$k$ orthogonal projections.  
In particular, it is known that it suffices for $W_i$ to just have $r:=\tilde{O}{\frac{k}{\eps^2}}$ subsampled and reweighted rows of $Q_i$~\cite{cohen2017input}. 
However, na\"{i}vely storing all local approximations $W_i$ may still be expensive, especially if $m$ or the number of rows in each local approximation $W_i$ is large. 
The goal of the global efficient encoding is to compress these local sketches into a representation that:

\begin{enumerate}
    \item Uses significantly fewer bits of storage compared to storing all entries of $W_i$ explicitly.
    \item Allows efficient reconstruction of $W_i'$ such that the concatenated matrix $W' = W_1' \circ \dots \circ W_m'$ still forms a good approximation to the original matrix $A$ in the sense of low-rank approximation and projection-cost preservation.
    \item Preserves provable guarantees on the Frobenius norm and spectral norm errors, as well as the Loewner ordering of Gram matrices, which are important in downstream tasks such as regression, PCA, and clustering.
\end{enumerate}

The model suggested the global efficient encoding for low-rank approximation depicted in \Cref{alg:global:efficient:encoding}. 

\begin{algorithm}[!htb]
\caption{Global efficient encoding for low-rank approximation}
\label{alg:global:efficient:encoding}
\begin{algorithmic}[1]
\Require Accuracy parameter $\eps < \frac{1}{2}$, matrix $A = Q_1 \circ \ldots \circ Q_m$, matrix $B \in \mathbb{R}^{r \times d}$ that is a $(1+\eps)$-PCP of $A$, matrices $W_1, \ldots, W_m \in \mathbb{R}^{r \times d}$ where each $W_i$ is a $(1+\eps)$-PCP of $Q_i$, rank $k$
\Ensure Encoded matrices $W_1', \ldots, W_m'$

\State Set quantization precision $\eps' \le \frac{\eps}{4 \sqrt{C_\eps}}$, where $C_\eps = \frac{1+\eps}{1-\eps}$
\State Compute the SVD of $B$ and let $V_k \in \mathbb{R}^{d \times k}$ be the top-$k$ right singular vectors
\State Define projection matrices $P_B = V_k V_k^\top$ and $P_B^\perp = I_d - P_B$
\State Store $V_k$ with high precision (e.g., $\OO(\log n)$ bits per entry)

\For{$i = 1$ to $m$}
    \State Compute head coefficients $H_i = W_i V_k$ and store with high precision
    \State Compute tail residual $T_i = W_i P_B^\perp$
    \State Quantize $T_i$ entry-wise to relative precision $\eps'$ to obtain $T_i'$ and store
\EndFor

\For{$i = 1$ to $m$}
    \State Reconstruct $W_i' = H_i V_k^\top + T_i'$
\EndFor

\State \Return $W' = W_1' \circ \ldots \circ W_m'$
\end{algorithmic}
\end{algorithm}

The main idea behind the global encoding is to separate each block $W_i$ into a ``head'' component and a ``tail'' residual with respect to a global low-rank subspace determined by the top-$k$ singular vectors of a global sketch $B$ of $A$. 
Specifically, for a matrix $B$ that is a $(1+\eps)$-PCP of $A$, we compute its top-$k$ right singular vectors $V_k$ and define the projections
\[P_B = V_k V_k^\top, \qquad P_B^\perp = I_d - P_B.\]
The head coefficients $H_i = W_i V_k$ capture the component of $W_i$ in the top-$k$ global subspace, while the tail residual $T_i = W_i P_B^\perp$ contains the remaining information orthogonal to this subspace. 
The tail residuals are then quantized entry-wise to a precision $\eps'$, providing a compressed representation while controlling the additive projection-cost error. Reconstruction simply sums the head and tail contributions:
\[W_i' = H_i V_k^\top + T_i'.\]

This approach ensures that the concatenated reconstructed matrix $W'$ satisfies a $(1 \pm O(m\eps))$ projection-cost preservation guarantee with respect to $A$, while using significantly less storage than the naive concatenation of the $W_i$. 
Moreover, the method preserves Loewner orderings of the Gram matrices up to small additive errors, which is crucial for applications that rely on positive semi-definite approximations. 
The algorithm is thus globally efficient in both space and reconstruction time, while providing strong theoretical guarantees on approximation quality.

Toward establishing correctness of the proposed efficient encoding, the model first suggested the following auxiliary lemma that bounds the Frobenius norm of a projected matrix.

\begin{lemma}
\label{lem:bound:proj}
For any projection matrix $Q$, it holds that
\[\|W_i Q\|_F^2 \le C_\eps \|B Q\|_F^2,\]
where $C_\eps = \frac{1+\eps}{1-\eps}$.
\end{lemma}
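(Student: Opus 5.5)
The plan is to chain three inequalities: the projection-cost preservation of $W_i$ for the block $Q_i$, the fact that $Q_i$ is a row-block of $A$, and the projection-cost preservation of $B$ for $A$. First I will fix the meaning of ``projection matrix'' so that the PCP guarantees apply. A $(1+\eps)$-PCP controls $\|X - XP\|_F^2 = \|X(I-P)\|_F^2$ for rank-$k$ orthogonal projections $P$. So I will take $Q = I_d - P$ with $P$ a rank-$k$ (or rank $\le k$) orthogonal projection. This covers the intended application $Q = P_B^\perp = I_d - V_kV_k^\top$ in \Cref{alg:global:efficient:encoding}.

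The chain then runs as follows. Since $W_i$ is a $(1+\eps)$-PCP of $Q_i$, the upper side of its guarantee gives
\[\|W_iQ\|_F^2 = \|W_i - W_iP\|_F^2 \le (1+\eps)\|Q_i - Q_iP\|_F^2 = (1+\eps)\|Q_iQ\|_F^2.\]
Next, $A = Q_1\circ\cdots\circ Q_m$ is a row concatenation, so the squared Frobenius norm splits over blocks:
\[\|AQ\|_F^2 = \sum_j \|Q_jQ\|_F^2 \ge \|Q_iQ\|_F^2.\]
Finally, $B$ is a $(1+\eps)$-PCP of $A$, and its lower side gives $(1-\eps)\|AQ\|_F^2 \le \|BQ\|_F^2$. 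Because $\eps<1/2$, we may divide by $1-\eps>0$ to get $\|AQ\|_F^2 \le \|BQ\|_F^2/(1-\eps)$. Combining the three bounds yields
\[\|W_iQ\|_F^2 \le \frac{1+\eps}{1-\eps}\|BQ\|_F^2 = C_\eps\|BQ\|_F^2.\]

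The main obstacle is the scope of ``any projection matrix $Q$''. For an arbitrary projection, for instance one onto a low-dimensional subspace, neither PCP guarantee applies directly. Even the conclusion can fail: a PCP need not preserve $\|XQ\|_F^2$ when $Q$ has rank $\le k$, because only costs of the form $\|X(I-P)\|_F^2$ are controlled.

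I would therefore state the lemma, and use it, only for complements of rank-$\le k$ projections. If the PCP definition in use covers only rank exactly $k$, I would note that the construction of \cite{cohen2017input} gives the guarantee for all ranks $\le k$. I would also check that every invocation in the correctness proof of the encoding uses $Q = P_B^\perp$ or another $Q$ of this form.
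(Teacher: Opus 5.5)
Your proof is correct and is exactly the paper's argument. It chains the upper PCP bound for $W_i$ against $Q_i$, the blockwise identity $\|AQ\|_F^2=\sum_j\|Q_jQ\|_F^2\ge\|Q_iQ\|_F^2$, and the lower PCP bound for $B$ against $A$, combined using $1-\eps>0$. Your restriction to $Q=I_d-P$ with $P$ a rank-$k$ orthogonal projection is also well taken: the paper's proof silently relies on it, the PCP hypotheses do not control general projections (so the literal ``any projection'' statement can fail), and it covers both invocations in \Cref{lem:global:pcp}, namely $Q=I_d-P$ and $Q=P_B^\perp$.
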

\begin{proof}
Since $W_i$ is a $(1+\eps)$-PCP of $Q_i$, we have
\[\|W_i Q\|_F^2 \le (1+\eps)\|Q_i Q\|_F^2.\]
Because $A$ is the concatenation of the matrices $Q_j$, it follows that
\[\|A Q\|_F^2 = \sum_{j=1}^m \|Q_j Q\|_F^2 \ge \|Q_i Q\|_F^2.\]
Moreover, as $B$ is a $(1+\eps)$-PCP of $A$, we obtain
\[\|B Q\|_F^2 \ge (1-\eps)\|A Q\|_F^2.\]
Combining these inequalities yields
\[\|W_i Q\|_F^2 \le \frac{1+\eps}{1-\eps}\|B Q\|_F^2 = C_\eps \|B Q\|_F^2.\]
\end{proof}
\noindent
Using this result, the model then showed that the global efficient encoding results in a PCP with small additive error, which for our downstream applications can ultimately be absorbed into multiplicative error. 
\begin{lemma}
\label{lem:global:pcp}
For every rank-$k$ orthogonal projection matrix $P \in \mathbb{R}^{d \times d}$ and each $i \in [m]$,
\[\|W_i - W_i P\|_F^2 - \eps \|B - B P\|_F^2 \le \|W_i' - W_i' P\|_F^2\le \|W_i - W_i P\|_F^2 + \eps \|B - B P\|_F^2.\]
\end{lemma}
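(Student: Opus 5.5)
The plan is to write $W_i' = W_i + E_i$, where $E_i := T_i' - T_i$ is the quantization error. Since $H_i V_k^\top = W_i P_B$ and $T_i = W_i P_B^\perp$, exact reconstruction would return $W_i$, so all the error lives in $E_i$. Because quantization acts on $T_i = W_i P_B^\perp$ entry-wise to relative precision $\eps'$, I will take $\|E_i\|_F \le \eps' \|T_i\|_F = \eps' \|W_i P_B^\perp\|_F$. This is the natural reading of ``relative precision''; if it is entrywise relative, square and sum. Setting $P^\perp := I - P$, the quantity of interest is $\|W_i' P^\perp\|_F = \|W_i P^\perp + E_i P^\perp\|_F$.

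The first step is a triangle inequality:
\[
\big|\,\|W_i' P^\perp\|_F - \|W_i P^\perp\|_F\,\big| \le \|E_i P^\perp\|_F \le \|E_i\|_F \le \eps'\|W_i P_B^\perp\|_F .
\]
Next I bound $\|W_i P_B^\perp\|_F$ using \Cref{lem:bound:proj} with $Q = P_B^\perp$, which gives $\|W_i P_B^\perp\|_F^2 \le C_\eps \|B P_B^\perp\|_F^2$. Since $P_B$ projects onto the top-$k$ right singular space of $B$, Eckart--Young gives $\|B P_B^\perp\|_F^2 \le \|B - BP\|_F^2$ for every rank-$k$ projection $P$. Combining these, $\|E_i\|_F \le \eps' \sqrt{C_\eps}\, \|BP^\perp\|_F$.

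To pass to squares, write $a = \|W_i P^\perp\|_F$, $b = \|BP^\perp\|_F$, and $\delta = \eps'\sqrt{C_\eps}\, b \le \eps b/4$. Then
\[
\big|\,\|W_i'P^\perp\|_F^2 - a^2\,\big| \le 2a\delta + \delta^2 .
\]
It remains to control $a$ by $b$. I apply \Cref{lem:bound:proj} again, now with $Q = P^\perp$, which is also a projection, to get $a \le \sqrt{C_\eps}\, b$. Hence
\[
2a\delta + \delta^2 \le 2\sqrt{C_\eps}\,\eps'\sqrt{C_\eps}\, b^2 + \eps'^2 C_\eps b^2 .
\]

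The main obstacle is this constant bookkeeping: with $\eps' \le \eps/(4\sqrt{C_\eps})$ the first term is $\tfrac{\eps}{2}\sqrt{C_\eps}\, b^2$, and since $C_\eps \le 3$ for $\eps < 1/2$ this is not obviously at most $\eps b^2$ overall. If the constants do not close, I would instead use the AM-GM split
\[
2a\delta \le \frac{\eps}{2} \cdot \frac{a^2}{C_\eps} + \frac{2 C_\eps}{\eps}\,\delta^2 ,
\]
which fails because it produces a relative error in $a^2$. So the cleaner alternative is to tighten the bound on $\delta$ by using $\|E_i P^\perp\|_F$ directly and tracking $\sqrt{C_\eps}$ more carefully. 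Failing that, I would accept a constant-factor loss absorbed into $\eps$, which is consistent with how the paper later states an $O(m\eps)$ guarantee.
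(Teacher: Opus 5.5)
Your argument is the paper's argument. You write $E_i=T_i'-T_i$, use $\|E_iP^\perp\|_F\le\|E_i\|_F\le\eps'\|T_i\|_F$, and apply \Cref{lem:bound:proj} twice: once to $P_B^\perp$ (with Eckart--Young) and once to $P^\perp$. This gives exactly the paper's bound $C_\eps(2\eps'+\eps'^2)\|BP^\perp\|_F^2$; your triangle-inequality-then-square step yields the same $2a\delta+\delta^2$ as the paper's Cauchy--Schwarz expansion.

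The ``main obstacle'' you flag is not real, so you should stop there rather than reach for the fallbacks. With $\eps'\le\eps/(4\sqrt{C_\eps})$ the bound is at most $\bigl(\tfrac{\sqrt{C_\eps}}{2}+\tfrac{\eps}{16}\bigr)\eps\,\|BP^\perp\|_F^2$. Since $\eps<1/2$ gives $C_\eps<3$, the coefficient is below $\tfrac{\sqrt3}{2}+\tfrac1{32}<0.9<1$, so the constants close with room to spare. Absorbing a constant into $\eps$ is therefore unnecessary, and it would have proved a weaker statement than the lemma.
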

\begin{proof}
Let $Q = I_d - P$ and fix an arbitrary $i \in [m]$. 
It suffices to show that
\[\left|\|W_i' Q\|_F^2 - \|W_i Q\|_F^2\right| \le \eps \|B Q\|_F^2.\]
Define $E_i = W_i' - W_i = T_i' - T_i$. 
Then,
\[\|W_i'Q\|_F^2 = \|(W_i + E_i)Q\|_F^2 = \|W_i Q\|_F^2 + 2\langle W_i Q, E_iQ \rangle_F + \|E_iQ\|_F^2.\]
Consequently,
\[\left|\|W_i'Q\|_F^2 - \|W_i Q\|_F^2\right| = \left|2\langle W_i Q, E_i Q \rangle_F + \|E_i Q\|_F^2\right| \le 2\|W_i Q\|_F \|E_i Q\|_F + \|E_i Q\|_F^2,\]
where the inequality follows from the Cauchy-Schwarz inequality.
Since $Q$ is a projection matrix, we have $\|E_i Q\|_F \le \|E_i\|_F \le \eps' \|T_i\|_F$, which implies
\[\left|\|W_i' Q\|_F^2 - \|W_i Q\|_F^2\right|\le 2\eps' \|W_i Q\|_F \|T_i\|_F + (\eps')^2 \|T_i\|_F^2.\]
By \Cref{lem:bound:proj}, $\|W_i Q\|_F^2 \le C_\eps \|B Q\|_F^2$. Furthermore, since $P_B$ denotes the projection onto the top-$k$ singular vectors of $B$,
\[\|T_i\|_F^2 = \|W_i P_B^\perp\|_F^2\le C_\eps \|B P_B^\perp\|_F^2 \le C_\eps \|B Q\|_F^2.\]
Substituting these bounds yields
\[\left|\|W_i' Q\|_F^2 - \|W_i Q\|_F^2\right|\le C_\eps \bigl(2\eps' + (\eps')^2\bigr)\|B Q\|_F^2.\]
Choosing $\eps' \le \frac{\eps}{4\sqrt{C_\eps}}$ completes the proof.
\end{proof}
Finally, we analyze the space complexity of the global efficient encoding.

\begin{lemma}
\label{lem:pcp:global:time}
The global encoding scheme can be represented using
\[O\left(kd\log n + mrk\log n + mrd\left(\log \frac{1}{\eps'} + \log\log n\right)\right)\]
bits of space in total. 
\end{lemma}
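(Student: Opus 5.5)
The plan is to bound the space of the encoding by summing the cost of the three kinds of objects that Algorithm~\ref{alg:global:efficient:encoding} stores: the global basis $V_k$, the head coefficients $H_i$, and the quantized tails $T_i'$. Nothing else is stored. Reconstruction $W_i' = H_i V_k^\top + T_i'$ needs only these three quantities, so the total is the sum of three terms, and each term matches one summand of the claimed bound.

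\textbf{Global basis.} $V_k\in\mathbb{R}^{d\times k}$ has $kd$ entries, each stored to $O(\log n)$ bits. This gives $O(kd\log n)$ bits.

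\textbf{Head coefficients.} For each $i\in[m]$, $H_i = W_iV_k\in\mathbb{R}^{r\times k}$ has $rk$ entries, again at high precision. Over the $m$ blocks this gives $O(mrk\log n)$ bits. I will assume the standard polynomially-bounded-entries / bounded-condition-number convention, so that $O(\log n)$ bits per entry suffice for the high-precision quantities. Under that convention the rounding error from storing $V_k$ and $H_i$ is $1/\poly(n)$ and can be absorbed into the additive $\eps\|B-BP\|_F^2$ slack of Lemma~\ref{lem:global:pcp}. If the paper does not wish to track this, I will simply note it as the usual convention.

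\textbf{Quantized tails.} This step is the only one needing care and is the main obstacle. Each $T_i'\in\mathbb{R}^{r\times d}$ has $rd$ entries, each quantized to relative precision $\eps'$. The plan is to store every entry in floating-point form as a sign, a mantissa and an exponent:
\begin{itemize}
\item The mantissa needs only $O(\log(1/\eps'))$ bits to achieve relative error $\eps'$.
\item The exponent ranges over $O(\log n)$ possible magnitudes, since entries are polynomially bounded and nonzero entries are at least $1/\poly(n)$. It can therefore be written with $O(\log\log n)$ bits.
\end{itemize}
Entries below the $1/\poly(n)$ floor can be rounded to zero at negligible additive cost, absorbed as above. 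Relative per-entry precision $\eps'$ then gives $\|T_i'-T_i\|_F\le\eps'\|T_i\|_F$, which is exactly what the proof of Lemma~\ref{lem:global:pcp} used. Each tail entry thus costs $O(\log(1/\eps')+\log\log n)$ bits, and over the $mrd$ entries this gives $O\bigl(mrd(\log\frac{1}{\eps'}+\log\log n)\bigr)$ bits.

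Summing the three contributions yields the claimed bound.
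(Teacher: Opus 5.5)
Your proof is correct and follows the paper's argument: the paper likewise sums the costs of storing $V_k$ ($O(kd\log n)$ bits), the heads $H_i$ ($O(mrk\log n)$ bits), and the quantized tails $T_i'$ ($O(\log\frac{1}{\eps'}+\log\log n)$ bits per entry), and your sign/mantissa/exponent accounting simply makes explicit the per-entry tail cost that the paper asserts without derivation. One side remark, which does not affect this space bound: once you round sub-floor entries to zero, the purely relative guarantee $\|T_i'-T_i\|_F\le\eps'\|T_i\|_F$ no longer holds as you state it and picks up a small additive term; that matters for Lemma~\ref{lem:global:pcp}, not for this lemma.
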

\begin{proof}
The storage cost consists of three components.
First, the matrix $V_k \in \mathbb{R}^{d \times k}$ is stored at high precision, requiring $O(kd \log n)$ bits.
Second, each head coefficient matrix $H_i \in \mathbb{R}^{r \times k}$ is stored at high precision, contributing a total of $O(mrk \log n)$ bits across all $i \in [m]$.
Finally, each tail residual $T_i'$ is stored using entry-wise quantization with precision parameter $\eps'$.
This quantization requires $O\left(\log \frac{1}{\eps'} + \log\log n\right)$ bits per entry, giving a total of
\[O\left(mrd\left(\log \frac{1}{\eps'} + \log\log n\right)\right)\]
bits for all residuals.
Summing these three contributions gives the stated space bound.
\end{proof}

The existence of a global efficient encoding has immediate implications for low-rank approximation. 
In particular, it enables the construction of projection-cost preservation sketches in a single pass over the data, using
\[\frac{k^2}{\eps^2} \cdot \polylog\left(\frac{1}{\eps}, \log(nd\kappa)\right) + \tilde{O}\left(\frac{dk}{\eps^2}\right)\]
words of space and input-sparsity runtime. 
Here, $\kappa$ is a term related to the ``condition number'' of the data stream, which essentially captures the worst-case arrival of the matrix over the data stream, e.g., if ``small'' rows arrive first and the ``larger'' rows arrive later. 
This quantitatively matches the best known offline coreset constructions while improving over previous streaming algorithms by removing extra logarithmic factors, making the space essentially independent of $n$ for low-rank projections. 
Consequently, high-accuracy low-rank approximations are achievable efficiently in streaming settings, closing the gap between streaming and offline performance and enabling fast, memory-efficient dimensionality reduction and randomized matrix computations.

\subsubsection{Even Faster Algorithm for the Chamfer Distance}
The Chamfer distance is a popular quantification of the dissimilarity between point clouds.
For any two $d$-dimensional point sets $A,B$ of sizes up to $n$, the Chamfer distance from $A$ to $B$ is defined as 
\begin{equation*}
\CH(A,B) = \sum_{a \in A} \min_{b \in B} \|a- b\|
\end{equation*}
where $\|\cdot\|$ is the underlying norm defining the distance between two points, such as the Euclidean or Manhattan  distance. 

While the naive algorithm for the Chamfer distance takes $\OO(dn^2)$ time,  recently, \cite{BIJ+23} proposed the first near-linear-time algorithm to approximate the Chamfer distance. Their algorithm works for the underlying norm being $\ell_1$ or $\ell_2$, and outputs an $(1\pm \eps)$-approximation in time $\OO(dn \log(n)/\eps^2)$. When $\eps$ is a constant, this leaves a gap of $\OO(\log n)$ between the upper bound and the trivial $\Omega(dn)$ lower bound.

Towards closing this gap, \cite{FI25} proposed a faster algorithm for the $\ell_1$ norm. This
leads to the natural question of whether the improvement generalizes to the $\ell_2$ norm. \emph{The model} answers this question in affirmative.

\textbf{Initial Improvement in the Low-Dimensional Regime}

As summarized in \Cref{fig:chamfer:query}, with a generic initial prompt, \emph{the model} identifies the open problem and the key ingredient to focus on. It then gives a proof in the $\ell_2$ norm, which improves over the runtime of \cite{BIJ+23} in the low-dimensional regime ($\sqrt{d} \ll \log n/ \log\log n$).

The proof focuses on a data structure called quadtree. A quadtree of depth $t$   is defined by a random offset vector $z \sim [0, 2^{t}]^d$. It assigns every point in $\R^d$ with a sequence of $t$ hash values, using $t$ nested grids shifted by $z$. Concretely, for any $a \in \R^d$ and any integer $k$ such that $0 \leq k \leq t$,  it hashes $h_k(a) := (\lceil\frac{a_1+z_1}{2^k}\rceil, \lceil\frac{a_2+z_2}{2^k}\rceil, $ $\cdots, \lceil\frac{a_d+z_d}{2^k}\rceil)$. 

\cite{FI25} used two independent quadtrees to estimate the nearest neighbor distance for all $a \in A$ in $\ell_1$, i.e. estimating $\min_{b\in A}\|a-b\|_1$ for all $a \in A$. \emph{The model} follows a similar proof structure and analyzes the behavior of quadtrees in $\ell_2$ as follows:

\begin{lemma}\label{lem:1}

For all $a, b \in \R^d$ and $0 \leq k \leq t$, if $h_k(a) = h_k(b)$, then $\|a -b\|_2 \leq 2^k \sqrt{d}$.
\end{lemma}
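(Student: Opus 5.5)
The plan is a coordinate-wise argument followed by summing the squared coordinate bounds. Fix $a,b\in\R^d$ and a level $k$ with $0\le k\le t$, and suppose $h_k(a)=h_k(b)$. By definition of $h_k$, this equality means that for every coordinate $i\in[d]$ we have
\[
\left\lceil\frac{a_i+z_i}{2^k}\right\rceil=\left\lceil\frac{b_i+z_i}{2^k}\right\rceil .
\]
Call this common integer $c_i$.

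The first step is to translate the ceiling equality into an interval statement. By the defining property of the ceiling, $c_i-1<\frac{a_i+z_i}{2^k}\le c_i$, and the same holds for $b_i$. So the two real numbers $\frac{a_i+z_i}{2^k}$ and $\frac{b_i+z_i}{2^k}$ both lie in the half-open interval $(c_i-1,c_i]$, which has length $1$. Hence their difference is strictly less than $1$ in absolute value. The offset $z_i$ cancels in the difference, which gives $|a_i-b_i|<2^k$ for every $i$.

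The second step is to sum over coordinates:
\[
\|a-b\|_2^2=\sum_{i=1}^d (a_i-b_i)^2< d\cdot 4^k .
\]
Taking square roots yields $\|a-b\|_2<2^k\sqrt d$, which is stronger than the claimed non-strict bound.

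There is no real obstacle here: the statement is deterministic, holds for every choice of the random offset $z$, and needs no probabilistic input. The only point requiring care is using the half-open cell convention induced by the ceiling, so that points sharing a cell differ by less than one cell width per coordinate. The comparison with the $\ell_1$ analog in \cite{FI25} explains the factor: there the per-coordinate bounds add to give $2^k d$, whereas in $\ell_2$ they add in squares and give the cell diameter $2^k\sqrt d$.
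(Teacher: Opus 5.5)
Your proof is correct and is the paper's argument written out in full. The paper says only that $a$ and $b$ lie in the same grid cube of side $2^k$, whose $\ell_2$ diameter is $2^k\sqrt{d}$; your per-coordinate ceiling bound $|a_i-b_i|<2^k$, summed in squares, is exactly the justification for that diameter claim.
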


\begin{proof}
    If $h_k(a) = h_k(b)$, then $a, b$ lie in the same $d$-dimensional cube of side-length $2^k$. The $\ell_2$ diameter of such a cube is $2^k \sqrt{d}$.
\end{proof}
\begin{lemma}\label{lem:2}
    With probability $1 - \OO(1/n)$, the following holds
simultaneously for all $a,b, k$: If $h_k(a) = h_k(b)$, then $\|a-b\|_2 \leq 2^k \cdot 3\log n$.
\end{lemma}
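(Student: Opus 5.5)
We argue by a per-pair probability bound and a union bound. Fix a pair $a,b$ and a level $k$ with $\|a-b\|_2 > 2^k\cdot 3\log n$; we show $h_k(a)=h_k(b)$ is very unlikely. For $h_k(a)=h_k(b)$, no grid hyperplane of spacing $2^k$ may separate $a$ and $b$ in any coordinate. Since the offset $z$ is uniform (and $[0,2^t]$ is a multiple of $2^k$, so each coordinate of $z$ is uniform modulo $2^k$), coordinate $i$ is separated with probability exactly $\min(1,|a_i-b_i|/2^k)$, independently across $i$. Hence
\[
\Pr[h_k(a)=h_k(b)] = \prod_{i=1}^d \max\Bigl(0,1-\tfrac{|a_i-b_i|}{2^k}\Bigr)\le \exp\Bigl(-\tfrac{\|a-b\|_1}{2^k}\Bigr)\le \exp\Bigl(-\tfrac{\|a-b\|_2}{2^k}\Bigr),
\]
using $1-x\le e^{-x}$ and $\|\cdot\|_1\ge\|\cdot\|_2$. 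Under our assumption this is at most $e^{-3\log n}=n^{-3}$ (with $\log$ natural; for base 2 the bound is even smaller).

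Next we union bound. There are at most $O(n^2)$ pairs $(a,b)\in (A\cup B)^2$. Since $t=O(\log n)$ in the intended parameter regime (aspect ratio $\poly(n)$), there are $O(\log n)$ levels, giving a failure probability of $O(n^2\log n\cdot n^{-3})=O(\log n/n)$. \textbf{This is the main obstacle}: the naive count yields $O(\log n/n)$, not $O(1/n)$. I would fix it first by observing that the events are nested across levels. If $h_k(a)=h_k(b)$ then $h_{k'}(a)=h_{k'}(b)$ for all $k'\ge k$, since the grids are nested. So for each pair it suffices to check the single smallest level $k^*$ with $2^{k^*}\cdot 3\log n<\|a-b\|_2$. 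For every level $k\le k^*$ the event is subsumed, because the hash at $k$ collides only if it collides at $k^*$. For levels above $k^*$ the condition $\|a-b\|_2>2^k\cdot 3\log n$ fails, so there is nothing to prove. This reduces the count to one event per pair, for a total of $O(n^2\cdot n^{-3})=O(1/n)$.

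If the nesting argument is awkward because of the ceiling conventions, the fallback is to replace the constant $3$ with a slightly larger one, or to absorb the $\log n$ factor into the $O(\cdot)$ via the polynomial aspect-ratio assumption. Either way the statement holds with probability $1-O(1/n)$.
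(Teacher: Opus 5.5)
Your proof is correct, but it takes a different route from the paper. The paper does not argue from the hashing at all. It cites Lemma~3.4 of \cite{FI25}, which already states that with probability $1-\OO(1/n)$, $h_k(a)=h_k(b)$ implies $\|a-b\|_1\le 2^k\cdot 3\log n$ for all $a,b,k$. It then uses $\|a-b\|_2\le\|a-b\|_1$.

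You instead rebuild the underlying fact from scratch, in three pieces:
\begin{itemize}
\item the exact per-coordinate separation probability $\min(1,|a_i-b_i|/2^k)$, which is valid because $2^k$ divides $2^t$;
\item the collision bound $\exp(-\|a-b\|_1/2^k)$;
\item a union bound that charges only one event per pair, using the nestedness of the grids.
\end{itemize}
The nestedness has no ceiling issues: $\lceil\lceil y\rceil/2\rceil=\lceil y/2\rceil$, so $h_{k+1}$ is a function of $h_k$.

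One wording fix: $k^*$ must be the \emph{largest} level with $2^{k^*}\cdot 3\log n<\|a-b\|_2$, not the smallest. Your next two sentences already use it that way.

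Your route buys three things:
\begin{itemize}
\item it is self-contained;
\item it yields the stronger $\ell_1$ conclusion as a by-product, since you weaken to $\ell_2$ only in the exponent;
\item its failure probability, $\OO(n^2)\cdot n^{-3}$, does not depend on the depth $t$, so no aspect-ratio assumption is needed.
\end{itemize}
The paper's route buys brevity by treating the $\ell_1$ lemma as a black box.

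Your fallback paragraph is unnecessary, and enlarging the constant $3$ would change the statement. Incidentally, if $\log$ is base $2$, even the naive union bound over $\OO(\log n)$ levels suffices, since $e^{-3\log_2 n}=n^{-3/\ln 2}$.
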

\begin{proof}
    Lemma 3.4 of \cite{FI25} shows that $\forall a,b, k: \|a-b\|_1 \leq 2^k \cdot 3\log n$ with probability  $1 - \OO(1/n)$. Combining this with the fact that $\|a-b\|_1 \geq \|a-b\|_2$ concludes the proof.
\end{proof}

With the above lemmas, \emph{the model} shows that in expectation, two quadtrees output good estimation to $\mathsf{opt}_a := \min_{b\in A}\|a-b\|_2$ for all $a \in A$. The estimator $\mathcal{D}_a$ is defined as: 
\begin{enumerate}
    \item Identifying the smallest $\tilde{k}$ such that $h_{\tilde{k}}(a) = h_{\tilde{k}}(b)$ for some $b \in B$ across the two quadtrees.\footnote{For simplicity, assume that we chose $t$ so that there always exists such $\tilde{k} \leq t$.}
    \item Assigning $\mathcal{D}_a := \|a-b\|_2$ for arbitrary $b$ such that $h_{\tilde{k}}(a) = h_{\tilde{k}}(b)$.
\end{enumerate}

\begin{lemma}\label{lem:3}
    With probability $1 - \OO(1/n)$, it holds for all $a \in A$ that $\E{D_a} \leq F \cdot \mathsf{opt}(a)$, where $F  = \OO(\min(d^{3/2},d\log n))$.
\end{lemma}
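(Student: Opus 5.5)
The plan is a union-over-scales argument in the style of the $\ell_1$ analysis of \cite{FI25}. The $\ell_1$ separation probability is replaced by an $\ell_2$ one, and \Cref{lem:1} and \Cref{lem:2} control the reported distance.

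Fix $a\in A$, let $b^*$ attain $\mathsf{opt}(a)$, and write $r=\mathsf{opt}(a)$. Condition on the high-probability event of \Cref{lem:2}; it holds simultaneously for all $a$, which is why the statement holds for all $a$ at once.

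\textbf{Step 1: bound the reported distance by the level.} Suppose the estimator stops at level $\tilde k$ and reports some $b$ with $h_{\tilde k}(a)=h_{\tilde k}(b)$. By \Cref{lem:1} and \Cref{lem:2} together,
\[
\mathcal{D}_a=\|a-b\|_2\le 2^{\tilde k}\min(\sqrt d,\,3\log n).
\]
So it suffices to show $\E[2^{\tilde k}]=O(d\cdot r)$. This would give $F=O(d\min(\sqrt d,\log n))=O(\min(d^{3/2},d\log n))$.

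\textbf{Step 2: bound the level by a separation event.} Since $\tilde k$ is the first level at which $a$ collides with any point of $B$, we have $\tilde k\le k$ whenever $a$ and $b^*$ share a cell at level $k$ in either tree. Hence $\Pr[\tilde k>k]\le \Pr[h_k(a)\ne h_k(b^*)]$, which only uses one tree.

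\textbf{Step 3: bound the separation probability.} Along coordinate $i$, a random shift separates $a$ and $b^*$ at scale $2^k$ with probability $\min(1,|a_i-b^*_i|/2^k)$. A union bound over coordinates gives
\[
\Pr[h_k(a)\ne h_k(b^*)]\le \frac{\|a-b^*\|_1}{2^k}\le \frac{\sqrt d\, r}{2^k}.
\]

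\textbf{Step 4: sum the tail.} Let $k_0$ be the smallest integer with $2^{k_0}\ge \sqrt d\,r$. Then
\[
\E[2^{\tilde k}]\le 2^{k_0}+\sum_{k\ge k_0}2^{k+1}\Pr[\tilde k>k].
\]
The summand is $2^{k+1}\Pr[\tilde k>k]\le 2\sqrt d\, r$ at every level, so the sum diverges by a factor equal to the number of levels.

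\textbf{Step 5: fix the divergence (the main obstacle).} To get a geometric series I need a decay of order $4^{-k}$ rather than $2^{-k}$. Independence of the two trees supplies it: $\tilde k>k$ requires separation at level $k$ in both trees, so $\Pr[\tilde k>k]\le (\sqrt d\, r/2^k)^2$. This is exactly why \cite{FI25} used two trees. The sum then becomes geometric and equals $O(\sqrt d\, r)$, so $\E[2^{\tilde k}]=O(\sqrt d\, r)$.

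\textbf{Step 6: conclude, and fall back if needed.} Combining with Step 1 gives $\E[\mathcal{D}_a]=O\bigl(\sqrt d\min(\sqrt d,\log n)\bigr)\,r$, which is at most the claimed $F$. If the conditioning on the event of \Cref{lem:2} interferes with independence, I would bound the bad event separately: $\mathcal{D}_a$ is always at most the top-level diameter, and that event has probability $O(1/n)$. If the squared decay is not available, I would accept the looser per-tree bound of $O(\sqrt d\, r)$ per level. Summing only over the $O(\log(\sqrt d\cdot\text{spread}))$ relevant levels, and using the extra $\sqrt d$ slack in $F$, should then still land within $d^{3/2}$ or $d\log n$.
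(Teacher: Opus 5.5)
Your proof is correct and shares the paper's skeleton. You bound $\mathcal{D}_a\le 2^{\tilde k}\min(\sqrt d,3\log n)$ via \Cref{lem:1} and \Cref{lem:2}, then use independence of the two quadtrees to get $\Pr[\tilde k>k]\le(\|a-b^*\|_1/2^k)^2$ and sum a geometric series.

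The genuine difference is where you split the sum. The paper cuts at $k^*=\lceil\log\mathsf{opt}_a\rceil$ and applies the squared bound $d\,\mathsf{opt}_a^2/4^{k-1}$ at every level above it. This includes the levels with $\mathsf{opt}_a<2^k<\sqrt d\,\mathsf{opt}_a$, where that bound exceeds $1$. The tail then sums to $O(d\,\mathsf{opt}_a)$, giving $F=O(d\min(\sqrt d,\log n))$.

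You instead cut at the $\ell_1$ scale $2^{k_0}\approx\sqrt d\,r$. You pay only the trivial $2^{k_0}=O(\sqrt d\,r)$ for all lower levels, and the tail above is $O(\sqrt d\,r)$. Hence $\mathbb{E}[2^{\tilde k}]=O(\sqrt d\,r)$ and $F=O(\min(d,\sqrt d\log n))$.

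That is exactly the refined bound of \Cref{lem:4}, whose proof in the paper makes the same change (cutoff at $\lceil\log\|a-b^*\|_1\rceil$). Since $\min(d,\sqrt d\log n)\le\min(d^{3/2},d\log n)$, it implies the stated lemma. So your route costs nothing extra and gains a factor $\sqrt d$; the paper's proof of \Cref{lem:3} is the looser first pass.

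The Step~6 fallbacks are unnecessary. Conditioning on the event $\mathcal{G}$ of \Cref{lem:2} is harmless, because $\mathbb{E}[\mathcal{D}_a\mid\mathcal{G}]\le\tilde F\,\mathbb{E}[2^{\tilde k}]/\Pr[\mathcal{G}]$ uses the unconditional independence and $\Pr[\mathcal{G}]\ge 1/2$. Neither fallback would work as stated anyway:
\begin{itemize}
\item a top-level-diameter term times $O(1/n)$ is additive and not comparable to $\mathsf{opt}(a)$;
\item the one-tree bound summed over $L$ levels gives $F=O(dL)$ when $\sqrt d<\log n$, which is not $O(d^{3/2})$ unless $L=O(\sqrt d)$.
\end{itemize}
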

\begin{proof}
We need the following fact:
\begin{fact}[\cite{BIJ+23}]
    $\Pr[h_k(a) \neq h_k(b)] \leq \frac{\|a-b\|_1}{2^k}$.
\end{fact}
    Similar to Theorem 3.5 in \cite{FI25}, we condition on Lemma \ref{lem:2} and fix $a \in A$. Let $k^* := \lceil \log(\mathsf{opt}_a)\rceil$.  And let $\mathcal{E}_k$ be the event that $\tilde{k} = k$ (recall that we identified a unique $\tilde{k}$ when defining the estimator). If $\mathcal{E}_k$  occurs, the estimator $\mathcal{D}_a$ is bounded by $2^k \cdot \tilde{F}$ for $\tilde{F} = \OO(\min(\sqrt{d}, \log n))$. Thus 
    \[\E{\mathcal{D}_a} \leq \sum_{0 \leq k \leq t}\Pr[\mathcal{E}_k]\cdot 2^k \cdot \tilde{F} .\]
    We split the sum based on $k \leq k^*$ and $k > k^*$.  The first part is bounded by 
    \[\tilde{F} \mathsf{opt}_a \sum_{k\leq k^*}\Pr[\mathcal{E}_k] \leq  \tilde{F} \mathsf{opt}_a.\]
    For the second part ($k > k^*$), we fix $b^* := \arg\min_{b\in B} \|a-b\|_2$.  Event $\mathcal{E}_k$ implies that $h_{k-1}(a) \neq h_{k-1}(b^*)$ in two independent quadtrees. Therefore,
    \[\Pr[\mathcal{E}_k] \leq (\frac{\|a-b^*\|_1}{2^{k-1}})^2 \leq(\frac{\sqrt{d}\|a-b^*\|_2}{2^{k-1}})^2  = \frac{d(\mathsf{opt}_a)^2}{2^{2k-2}}\]
    because $\|a-b^*\|_1 \leq \sqrt{d}\|a-b^*\|_2$. Now we can bound the summation for $k > k^*$:
    \[\sum_{k > k^*}\Pr[\mathcal{E}_k]\cdot 2^k \cdot \tilde{F}  \leq \tilde{F}  \sum_{k > k^*}\frac{d(\mathsf{opt}_a)^2}{2^{2k-2}} \cdot 2^k \leq 4\tilde{F}d\cdot \mathsf{opt}_a  \sum_{k > k^*}\frac{\mathsf{opt}_a}{2^k} = \OO(\tilde{F}d\mathsf{opt}_a).\]
    Thus, $\E{\mathcal{D}_a} = \OO(\tilde{F}d)\cdot \mathsf{opt}_a$, i.e. the approximation factor is $F = \OO(\min(d^{3/2},d\log n))$.
\end{proof}

As outlined in \Cref{fig:chamfer:query}, the rest of the algorithm in \cite{FI25} works the same regardless of the underlying norm. And the total runtime can be derived from the bound on the approximation factor $F$. This leads to the conclusion (Theorem \ref{thm:1}).

\newpage

\begin{figure}[!htb]
\begin{chatbox}
\begin{userturn}
Please try to improve the paper by identifying and solving an open question from it. 
\vskip 0.1in
For your proof, adhere to: $\langle\text{rigor text}\rangle$.
\end{userturn}
\begin{assistantturn}
\cite{FI25} presents an
improved algorithm for approximating the Chamfer distance in the $\ell_1$ metric, achieving a runtime
of $\OO(dn (\log\log(n) + \log\frac{1}{\eps})/\eps^2)$. However, they note that extending this improvement to the $\ell_2$  metric
is challenging. The standard approach, used in prior work  \cite{BIJ+23} , involves embedding the points
from $\ell_2$ to $\ell_1$, which takes $\OO(dn \log(n)/\eps^2)$ time, thereby eliminating the speedup. The authors leave
finding a faster embedding algorithm, or consequently a faster $\ell_2$ Chamfer distance algorithm, as
an open problem.
\vskip 0.1in
We address this open problem by analyzing the performance of the proposed algorithmic framework when adapted directly to the $\ell_2$ metric. We show that the new techniques can yield a faster algorithm for the $\ell_2$ Chamfer distance when the dimension $d$ is relatively small.

\vskip 0.1in
\textbf{Overview of the Approach:}
The algorithm presented in the paper consists of three main components:
\begin{enumerate}
    \item QuadTree: A fast procedure using quadtrees to compute crude estimates $\mathcal{D}_a$ of $\mathsf{opt}_a := \min_{b \in B} \|a-b\|$. This runs in $\OO(dn\log\log n)$ time.
\item  Tournament: An algorithm to efficiently compute $\OO(1)$-approximations of $\mathsf{opt}_a$ for a batch
of sampled points.
\item  Rejection Sampling: A framework that uses the estimates from QuadTree and Tournament
to perform importance sampling and estimate the Chamfer distance.
\end{enumerate}
The efficiency of the overall algorithm depends on the approximation factor achieved by the
QuadTree step. If QuadTree provides an $F$-approximation in expectation (i.e., $\E{\mathcal{D}_a} \leq F\cdot \mathsf{opt}_a$),
the overall runtime is dominated by the QuadTree and Tournament steps. The Tournament runs in time roughly $\OO(n(d+F/\eps^2)\log (F/\eps^2))$.
\vskip 0.1in
For the $\ell_1$ metric, the authors prove $F = \OO(\min(d, \log n ))$. This leads to the
improved $\OO(dn\log\log n/\eps^2)$ runtime. We analyze the approximation factor $F$ when the framework is applied to the $\ell_2$ metric.
\vskip 0.1in
\textbf{[Proofs to Lemma \ref{lem:1}, \ref{lem:2}, \ref{lem:3}]}

\vskip 0.1in
\textbf{Conclusion:}
We have shown that the techniques introduced in the paper can be adapted directly to the $\ell_2$ metric
to yield a faster algorithm for the Chamfer distance, provided the dimension $d$ is small.

\begin{theorem}\label{thm:1}
There is an algorithm that computes a $(1+\eps)$-approximation to the Chamfer distance
under the $\ell_2$ metric in time $\OO(n(F/\eps^2)(\log(F/\eps^2)+\log\log n))$, where $F = \OO(\min(d^{3/2},d\log n))$.
\end{theorem}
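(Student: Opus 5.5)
The plan is to instantiate the three-stage framework of \cite{FI25} (QuadTree, Tournament, Rejection Sampling) directly in $\ell_2$, changing only the quadtree analysis, and then read off the runtime from the approximation factor $F$. The first step is the crude estimator. We build two independent quadtrees of depth $t=\OO(\log n)$ after the standard preprocessing (rescaling and rounding so the aspect ratio is $\poly(n)$, as in \cite{BIJ+23,FI25}). For each $a\in A$ we compute $\mathcal{D}_a$ as defined above. Lemmas~\ref{lem:1} and~\ref{lem:2} show that any colliding pair at level $k$ satisfies $\|a-b\|_2\le 2^k\tilde F$ with $\tilde F=\OO(\min(\sqrt d,\log n))$. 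Lemma~\ref{lem:3} then gives $\E{\mathcal{D}_a}\le F\cdot\mathsf{opt}_a$ simultaneously for all $a$, with probability $1-\OO(1/n)$, where $F=\OO(\min(d^{3/2},d\log n))$.

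On implementation cost, the search for $\tilde k$ and the representative $b$ is exactly the $\ell_1$ procedure of \cite{FI25}: it hashes coordinates level by level and binary-searches over levels. This costs $\OO(dn\log\log n)$ time, because that routine never evaluates the norm except once per $a$, and here that evaluation is an $\OO(d)$ $\ell_2$ distance.

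Next comes the rejection-sampling stage. Here we sample points $a$ with probability proportional to $\mathcal{D}_a$. Since $\sum_a\E{\mathcal{D}_a}\le F\cdot\CH(A,B)$, the standard importance-sampling variance bound of \cite{BIJ+23} shows that $\OO(F/\eps^2)$ samples suffice for a $(1\pm\eps)$ estimate with constant probability. Markov's inequality handles the fact that we only control $\E{\mathcal{D}_a}$, and we boost the success probability by taking a median. For each sample we need an $\OO(1)$-approximation of $\mathsf{opt}_a$ that can be refined to $(1\pm\eps)$, or, as in \cite{FI25}, a rejection step that compares $\mathcal{D}_a$ with a constant-factor estimate. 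The Tournament of \cite{FI25} supplies these estimates in batches in time $\OO(n(d+F/\eps^2)\log(F/\eps^2))$. I would argue that the Tournament's correctness uses only metric properties and exact distance evaluations, never $\ell_1$-specific structure, so it transfers verbatim to $\ell_2$.

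Summing the costs gives a total of $\OO(dn\log\log n + n(d+F/\eps^2)\log(F/\eps^2))$. Since $F\ge d$, this is $\OO(n(F/\eps^2)(\log(F/\eps^2)+\log\log n))$, which is the stated bound. The main obstacle is the claim that the Tournament and rejection-sampling stages are norm-agnostic. In \cite{FI25} some steps may quietly use an $\ell_1$ embedding or $\ell_1$ concentration of random shifts. I would first check every place where that paper invokes Lemma~3.4 or a coordinate-wise argument, and replace each such use with the $\ell_2$ bounds from Lemma~\ref{lem:1} and Lemma~\ref{lem:2}. Wherever a $\sqrt d$ loss appears, it would be absorbed into $F$.
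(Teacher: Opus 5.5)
Your proposal is correct and follows essentially the same route as the paper: re-analyze only the quadtree stage in $\ell_2$ via Lemmas~\ref{lem:1}--\ref{lem:3} to get $F=\OO(\min(d^{3/2},d\log n))$, then plug this $F$ into the unchanged Tournament and rejection-sampling stages of \cite{FI25}, with the $\OO(dn\log\log n)$ quadtree cost absorbed because $F\ge d$. The paper likewise just asserts that the remaining stages of \cite{FI25} work regardless of the norm, so the obstacle you flag is exactly the step the paper takes for granted, not a gap particular to your argument.
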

This runtime constitutes an improvement over the previous $\OO(dn\log(n)/\eps^2)$ when $d^{1/2}(\log d +
\log(1/\eps) + \log\log n) = o(\log n)$.

\end{assistantturn}
\end{chatbox}
\caption{Model identification of problem and initial solution.}
\label{fig:chamfer:query}
\end{figure}

\FloatBarrier

\textbf{Refined Improvement for Any Dimension.}
Following the initial response of \emph{the model}, we asked \emph{the model} to focus on improving the high-dimensional case. \emph{the model} refines the previous Lemma \ref{lem:3} and combines it with a fast Johnson-Lindenstrauss Transform. This gives an algorithm for $\ell_2$ that matches the runtime of $\ell_1$ in \cite{FI25} for any dimension $d$, thus improving the previous best runtime from \cite{BIJ+23}.

\begin{lemma}[Refinement of Lemma \ref{lem:3}]\label{lem:4}
    With probability $1 - \OO(1/n)$, it holds for all $a \in A$ that $\E{D_a} \leq F \cdot \mathsf{opt}(a)$, where $F  = \OO(\min(d,\sqrt{d}\log n))$.
\end{lemma}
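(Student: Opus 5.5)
The plan is to redo the proof of Lemma~\ref{lem:3} unchanged except for where the sum over levels is split. That proof loses a factor because it splits at $k^* = \lceil \log \mathsf{opt}_a \rceil$. Below $k^*$ the bucket diameter is only about $\mathsf{opt}_a$. Above $k^*$, however, the collision-failure probability is controlled through $\|a-b^*\|_1 \le \sqrt{d}\,\mathsf{opt}_a$, which is a factor $\sqrt d$ larger. Squaring that bound (for two independent quadtrees) and summing $2^k \Pr[\mathcal{E}_k]$ from $k^*$ upward leaves an extra factor $d$, giving the old $F=\OO(\tilde F d)$. The fix is to split instead at the scale where the $\ell_1$ bound actually becomes small, namely
\[
k' := \lceil \log(\sqrt{d}\,\mathsf{opt}_a) \rceil .
\]

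Concretely, I would keep the conditioning on Lemma~\ref{lem:2}. This is the event of probability $1-\OO(1/n)$ under which, combined with Lemma~\ref{lem:1}, every colliding pair at level $k$ satisfies $\|a-b\|_2 \le 2^k \tilde F$ with $\tilde F = \OO(\min(\sqrt d, \log n))$. As before, this gives
\[
\E{\mathcal{D}_a} \le \sum_{k} \Pr[\mathcal{E}_k]\, 2^k \tilde F .
\]
The argument then has two parts.
\begin{enumerate}
\item For $k \le k'$: since the events $\mathcal{E}_k$ are disjoint, this part is at most $2^{k'} \tilde F \le 2\sqrt{d}\,\mathsf{opt}_a\, \tilde F$.
\item For $k > k'$: the event $\mathcal{E}_k$ forces $h_{k-1}(a) \ne h_{k-1}(b^*)$ in both quadtrees. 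By the Fact from \cite{BIJ+23} and independence,
\[
\Pr[\mathcal{E}_k] \le \left(\frac{\sqrt d\,\mathsf{opt}_a}{2^{k-1}}\right)^2 .
\]
Hence this tail is at most
\[
4\tilde F\, d\,\mathsf{opt}_a^2 \sum_{k>k'} 2^{-k} \le 4\tilde F\, d\, \mathsf{opt}_a^2 \cdot 2^{-k'} \le 4 \tilde F \sqrt d\, \mathsf{opt}_a ,
\]
where the last step uses $2^{k'} \ge \sqrt d\, \mathsf{opt}_a$.
\end{enumerate}

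Adding the two parts gives $\E{\mathcal{D}_a} = \OO(\sqrt d\, \tilde F)\,\mathsf{opt}_a$. Since $\sqrt d \cdot \min(\sqrt d, \log n) = \min(d, \sqrt d \log n)$, this is the claimed $F$, and it holds simultaneously for all $a \in A$ on the conditioning event.

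The only step needing care is the boundary bookkeeping. One must check that $k'$ may be taken to lie within the range $[0,t]$. Otherwise the first sum is simply truncated at $t$, which only helps, and if $k'<0$ the first part is vacuous. Degenerate cases with $\mathsf{opt}_a = 0$ are handled exactly as in Lemma~\ref{lem:3}. I expect no real obstacle beyond recognizing this re-balancing of the split point. The rest of the argument is a geometric-series estimate identical in form to the one already in Lemma~\ref{lem:3}.
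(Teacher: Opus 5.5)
Your argument is correct and is essentially the paper's proof. The paper splits the sum at $k^*=\lceil\log\|a-b^*\|_1\rceil$, bounds both parts by $\OO(\tilde F\,\|a-b^*\|_1)$, and converts to $\ell_2$ only at the end via $\|a-b^*\|_1\le\sqrt d\,\mathsf{opt}_a$; you apply that conversion first and split at $\lceil\log(\sqrt d\,\mathsf{opt}_a)\rceil$, which is the same bookkeeping (with a slightly coarser intermediate scale) and yields the same $\OO(\sqrt d\,\tilde F)$ factor.
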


Compared to Lemma \ref{lem:3}, this proof switches from $\ell_1$ to $\ell_2$ in a later stage. This switch applies $\|\cdot\|_1 \leq \sqrt{d} \|\cdot\|_2$, thus introducing an extra factor of $\sqrt{d}$. Intuitively, we want this to happen after multiplying the probabilities of two independent quadtrees. This is precisely what \emph{the model} changes.

\begin{proof}
 As before, we fix $b^* := \arg\min_{b\in B} \|a-b\|_2$. But now we define $k^* := \lceil \log (\|a-b^*\|_1) \rceil$. We again split
    $\E{\mathcal{D}_a} \leq \sum_{0 \leq k \leq t}\Pr[\mathcal{E}_k]\cdot 2^k \cdot \tilde{F}$ based on $k \leq k^*$ and $k > k^*$, and bound the first part by $\tilde{F} 2^{k^*} \sum_{k \leq k^*} \Pr[\mathcal{E}_k] \leq \tilde{F} \|a-b^*\|_1$. For the second part, 
    \[\sum_{k > k^*}\Pr[\mathcal{E}_k]\cdot 2^k \cdot \tilde{F}  \leq \tilde{F}  \sum_{k > k^*}(\frac{\|a-b^*\|_1}{2^{k-1}})^2 \cdot 2^k \leq 4\tilde{F}\cdot \|a-b^*\|_1  \sum_{k > k^*}\frac{\|a-b^*\|_1}{2^k} = \OO(\tilde{F}\|a-b^*\|_1).\]
    Thus, $\E{\mathcal{D}_a} = \OO(\tilde{F})\cdot \|a-b^*\|_1 \leq \OO(\tilde{F}\sqrt{d})\cdot \mathsf{opt}_a$. The approximation factor is $F = \OO(\min(d,\sqrt{d}\log n))$.
\end{proof}

\FloatBarrier

As mentioned in \Cref{fig:chamfer:refine}, this gives a  $\OO(dn(\log d+\log\log n + \log\frac{1}{\eps})/\eps^2)$. For $d \ll \poly(n)$-time algorithm, this improves the best known runtime of $\ell_2$. Moreover, for $d \leq \poly(\log n)$, this already matches the best known runtime of $\ell_1$.  Finally, to handle larger $d$, \emph{the model} applies fast Johnson-Lindenstrauss transform to reduce the dimension, and then estimate the Chamfer distance. 

Note that while \emph{the model}  claims the existence of an (unconditional) $\OO(dn)$-time JL transform, which we are not aware of, there indeed exists an $\OO(dn \log m)$-time JL transform that reduces the dimension to $m = \OO(\log^2 n /\eps^2)$ \cite{F21}, which suffices for the final conclusion (Theorem \ref{thm:2}).

\newpage
\begin{figure}[!htb]
\begin{chatbox}
\begin{userturn}
Focus on improving the high dimensional regime. Concretely, try (1) using an
$\ell_2$-native scheme, and (2) modifying later stages of the algorithm.
\end{userturn}
\begin{assistantturn}
We begin by rigorously analyzing the performance of the QuadTree estimator when adapted
to the $\ell_2$ metric. This analysis provides a tighter bound than the one presented in the
preamble and establishes the baseline performance.
\vskip 0.1in
\textbf{[Proof to Lemma \ref{lem:4}]}

\vskip 0.1in
This leads to $\OO(dn(\log d+\log\log n + \log\frac{1}{\eps})/\eps^2)$ total runtime. If $d$ is large (e.g., $d = \poly(n)$), the runtime matches
the previous state-of-the-art $\OO(dn\log n/\eps^2)$, offering no substantial improvement.

\vskip 0.1in
Strategy (1) proposes using an $\ell_2$-native approach. We can achieve this by employing
dimensionality reduction via the Johnson-Lindenstrauss (JL) transform before applying the
framework. This reduces the dimension from $d$ to $m = \OO(\log n/ \eps^2)$, thereby reducing the
impact of the dimension on the approximation factor $F$.

\vskip 0.1in
\textbf{Fast Johnson-Lindenstrauss (JL) Transform}: Given $n$
points in $\R^d$ and $\eps > 0$. Let $m = \OO(\log n/ \eps^2)$. There exists a randomized
linear map $S : \R^d \rightarrow \R^m$ such that with high probability, it preserves all pairwise $\ell_2$ distances within $(1 \pm \eps)$. Furthermore, the map $S$ can
be applied to all $n$ points in total time $\OO(dn)$.

\vskip 0.1in
\begin{theorem}\label{thm:2}
Assuming the fast Johnson-Lindenstrauss (JL) transform, there is an algorithm that computes a $(1+\eps)$-approximation to the Chamfer distance
under the $\ell_2$ metric in time $\OO(dn(\log\log n + \log\frac{1}{\eps})/\eps^2)$.
\end{theorem}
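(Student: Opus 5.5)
The plan is a reduction: shrink the dimension with the assumed fast JL transform, then run the quadtree/tournament/rejection-sampling framework of \cite{FI25} in the new dimension, using Lemma~\ref{lem:4} to control the crude estimates.

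\textbf{Step 1 (dimension reduction).} Apply the assumed fast JL map $S:\R^d\to\R^m$ with $m=\OO(\log n/\eps^2)$ to all points of $A\cup B$, in total time $\OO(dn)$. With high probability every pairwise distance is preserved up to $1\pm\eps$, so for each $a\in A$ we get $\min_{b}\|Sa-Sb\|_2\in(1\pm\eps)\,\mathsf{opt}_a$. Summing over $a$ gives $\CH(SA,SB)\in(1\pm\eps)\CH(A,B)$. It therefore suffices to $(1\pm\eps)$-approximate $\CH(SA,SB)$; rescaling $\eps$ by a constant then gives the overall $(1+\eps)$ guarantee.

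\textbf{Step 2 (quadtree estimates in dimension $m$).} Build the two random quadtrees on the projected points in $\R^m$. Lemma~\ref{lem:4}, applied with $d$ replaced by $m$, gives an approximation factor
\[
F=\OO\bigl(\min(m,\sqrt{m}\log n)\bigr)=\OO(\log n/\eps^2).
\]
Computing the hashes for all $t=\OO(\log\log n)$ useful levels costs $\OO(mn\log\log n)$. Following \cite{FI25}, the number of levels can be kept at $\OO(\log\log n)$ after a crude constant-factor bucketing of scales.

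\textbf{Step 3 (tournament and rejection sampling).} Feed the estimates $\mathcal{D}_a$ into the rest of the \cite{FI25} pipeline, whose cost is $\OO\bigl(n(m+F/\eps^2)\log(F/\eps^2)\bigr)$ as recalled in Figure~\ref{fig:chamfer:query}. That pipeline is norm-independent once $\E{\mathcal{D}_a}\le F\,\mathsf{opt}_a$ holds.

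\textbf{Step 4 (accounting), the main obstacle.} Plugging $F=\OO(\log n/\eps^2)$ directly into the Step~3 cost gives $n\log n/\eps^4\cdot\log(\log n/\eps)$. This is not obviously $\OO(dn(\log\log n+\log(1/\eps))/\eps^2)$ unless $d\gtrsim \log n/\eps^2$. I would first split on the dimension. If $d\le \poly\log n$ (more precisely, whenever $d=\OO(\log n/\eps^2)$ or so), skip the JL step: Lemma~\ref{lem:4} gives $F=\OO(d)$, and the runtime $\OO(dn(\log d+\log\log n+\log\frac1\eps)/\eps^2)$ already matches, since $\log d=\OO(\log\log n+\log\frac1\eps)$. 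If $d$ is large, the $\OO(dn)$ JL cost is dominated by the target, and the remaining work is
\[
\OO\bigl(n\log n\,\eps^{-4}(\log\log n+\log\tfrac1\eps)\bigr)\le \OO\bigl(dn(\log\log n+\log\tfrac1\eps)/\eps^2\bigr)
\quad\text{whenever } d\ge \log n/\eps^2.
\]
The delicate point is reconciling the threshold between the two regimes so that their union covers every $d$. If the factor $F/\eps^2$ is too lossy, I would instead use the sharper tournament cost $n\,m\log(F/\eps^2)$ for the exact distance evaluations, as in \cite{FI25}. Failing that, I would state the theorem with the mild additional factor.
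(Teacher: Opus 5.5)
Your proof is correct and follows the paper's route. When $d$ is at most the JL target dimension $m=\OO(\log n/\eps^2)$, you run the framework directly using Lemma~\ref{lem:4} (so $F=\OO(d)$); otherwise you apply the fast JL map in $\OO(dn)$ time and plug $m$ in place of $d$ into the same bound. The ``delicate point'' in Step~4 is a non-issue. Taking the threshold to be exactly $m$ makes the two cases complementary, and for $d\le m$ one has $\log d\le\log m=\OO(\log\log n+\log\frac{1}{\eps})$, so neither the sharper tournament accounting nor any extra factor is needed.
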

[For large $d$, this is justified by plugging in $m$ into the previous time bound.]

\end{assistantturn}
\end{chatbox}
\caption{Model refinement of the analysis and final solution.}
\label{fig:chamfer:refine}
\end{figure}
We have incorporated this result, together with other results on $\ell_p$ versions of Chamfer and in streaming models, into a submission \cite{fw26}. The model was also very helpful for verifying correctness of different parts of the submission. 

\clearpage

\section{Resolving Conjectures and Complex Derivations}
Distinct from the algorithmic optimizations in the previous section, here the AI acts as a heavy-lifting theoretical mathematician. These case studies showcase the AI's ability to build theory from scratch—executing complex continuous derivations, extending theorems via measure theory, and constructing novel computational reductions.

\subsection{Information Theory: The Courtade-Kumar Conjecture}\label{sec:ck_conjecture}
\textit{Written by Adel Javanmard.}

\paragraph{AI Contribution}
The AI made two distinct contributions to the Courtade-Kumar conjecture. First, it successfully generalized a known theorem to unbalanced functions and improved entropy bounds in the high-noise regime using hypercontractivity and Taylor expansions (Section~\ref{sec:CK-part1}). Second, it made structural progress on the ``unsymmetrized'' version of the conjecture by analyzing continuous relaxations and proving local optimality (Section~\ref{sec:CK-part2}).

\subsubsection{Part I: Generalization to Unbalanced Functions}\label{sec:CK-part1}

\paragraph{The Courtade-Kumar Conjecture} An intriguing question in information theory, formalized by Courtade and Kumar \cite{courtade2014boolean} in 2014, asks how to optimally compress a noisy signal into a single bit to preserve information about the original source. Specifically, let $X^n$ be i.i.d Bernoulli$(1/2)$ and let $Y^n$ be a noisy observation of $X^n$, generated by passing the input through a memoryless binary symmetric channel with a crossover probability $0 < \alpha < \frac{1}{2}$. 
The core optimization challenge is to identify a Boolean function $b : \{0, 1\}^n \to \{0, 1\}$ that maximizes the mutual information $I(b(X^n); Y^n)$, or equivalently $I(b(Y^n); X^n)$. Courtade and Kumar conjectured that this quantity is bounded by
\begin{equation}
    I(b(X^n); Y^n) \leq 1 - H(\alpha)\,, \label{eq:ck_bound}
\end{equation}
where $H(\alpha) = -\alpha \log \alpha - (1 - \alpha) \log(1 - \alpha)$ represents the binary entropy function. This upper bound is achieved when $b$ is a ``dictatorship'' function, defined as $b(X_1, \dots, X_n) = X_i$ for any fixed index $i \in \{1, \dots, n\}$. Essentially, the conjecture posits that no Boolean function is more informative than simply selecting the value of a single coordinate.

\paragraph{Recent Progress and Partial Results}
In their original paper~\cite{courtade2014boolean}, the authors established the following result, which is a weaker version\footnote{It is a weaker form of the main conjecture because \eqref{eq:ck_bound} is equivalent to  $I(b(X^n);Y^n)= \sum_{i=1}^n I(Y^{i-1},b(X^n);Y_i)\le 1-H(\alpha)$ by independence of $Y_i$'s.} of the conjecture:
\begin{center}
\fbox{
\begin{minipage}{0.9\textwidth}
\textbf{Theorem 1\cite{courtade2014boolean}:} 
If $b(X^n)$ is {\bf equiprobable}, then 
\begin{align}\label{eq:equiprobable}
\sum_{i=1}^n I(b(X^n);Y_i)\le 1- H(\alpha)\,.
\end{align}
\end{minipage}
}
\end{center}
This is still a significant result given that the left hand side is the sum of $n$ mutual information terms, while the right-hand side does not depend on $n$.
Also, note that Theorem 1 above is only proved in the case of {\bf equiprobable} boolean functions. In Section IV of~\cite{courtade2014boolean}, the authors ask the following question:

\begin{center}
\fbox{
\begin{minipage}{0.9\textwidth}
Does Theorem 1 continue to hold when $b(X^n)$ is not
equiprobable? Unfortunately, our Fourier-analytic proof
of Theorem 1 appears to fail in this setting. Nonetheless,
we feel that establishing this generalization of Theorem 1 should be considerably easier than establishing
the main conjecture.
\end{minipage}
}
\end{center}

Since its introduction, the conjecture~\eqref{eq:ck_bound} has generated substantial interest, leading to several partial results. Chandar and Tchamkerten \cite{chandar2014most} established the bound:
\begin{equation}
    I(b(X^n); Y^n) \leq (1 - 2\alpha)^2\,. \label{eq:chandar}
\end{equation}
Building on this, Ordentlich, Shayevitz, and Weinstein \cite{ordentlich2016improved} derived the following bound for the interval $\frac{1}{2}(1 - \sqrt{\frac{1}{3}}) \leq \alpha \leq \frac{1}{2}$:
\begin{equation}
    I(b(X^n); Y^n) \leq \frac{\log(e)}{2}(1 - 2\alpha)^2 + 9 \left(1 - \frac{\log(e)}{2}\right)(1 - 2\alpha)^4
\end{equation}
This result provides a tighter bound than \eqref{eq:chandar} within the range $\frac{1}{3} \leq \alpha \leq \frac{1}{2}$. Leveraging this improved bound,  Samorodnitsky \cite{samorodnitsky2016entropy} successfully proved that the conjecture holds for all $\alpha \geq \frac{1}{2} - \delta$, where $\delta > 0$ is an absolute constant. While these results represent significant milestones, the main conjecture~\eqref{eq:ck_bound} remains open in its full generality.

\paragraph{Partial Resolution of the Courtade-Kumar Conjecture by Gemini}

We asked Gemini to try to solve the Courtade-Kumar conjecture and provided their paper as input. 

\begin{figure}[!htb]
\begin{chatbox}
\begin{userturn}
Can you prove Conjecture 1 stated in this paper or make progress on it (``Which Boolean Functions Maximize Mutual Information on Noisy Inputs'')?
\end{userturn}

\begin{assistantturn}
We analyze the conjectures and open questions presented in the paper ``Which Boolean Functions Maximize Mutual Information on Noisy Inputs?'' by Courtade and Kumar. The main conjecture (Conjecture 1) remains open. However, we provide a complete resolution to the first open question posed in Section IV, generalizing Theorem 1 to arbitrary (non-balanced) Boolean functions.
\end{assistantturn}
\end{chatbox}
\caption{Model identification of open problem and progress direction}
\label{fig:model_id_ck_1}
\end{figure}

        %
            %
%
%
            %
%
    %

As illustrated in Figure~\ref{fig:model_id_ck_1}, while the model could not solve the main conjecture,  it identified the open problem from Section IV of the paper and provided a complete resolution. Specifically it proves the following theorem, which confirms that Theorem 1 holds even when $b(X^{n})$ is not equiprobable.

\begin{theorem}[Generalized Theorem 1]\label{thm:Gen-thm1}
Let $X^{n}$ be i.i.d. uniform on $\{-1,1\}^n$, and let $Y^{n}$ be the output of a BSC$(\alpha)$ with input $X^{n}$. For any Boolean function $b:\{-1,1\}^{n}\rightarrow\{-1,1\}$,
\[
\sum_{i=1}^{n}I(b(X^{n});Y_{i})\le1-H(\alpha).
\]
\end{theorem}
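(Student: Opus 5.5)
The plan is to prove the inequality coordinate-wise through a power-series expansion of each mutual information term. Each order of the expansion will be controlled by Parseval together with a trivial $\ell_\infty$ bound on the degree-one Fourier coefficients. The Fourier-analytic proof in \cite{courtade2014boolean} breaks down for unbalanced $b$; this argument instead puts the imbalance explicitly into the conditional laws.

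\textbf{Setup.} Write $\rho=1-2\alpha$, $p=\Pr[b(X^n)=1]$, $g=\mathbf{1}\{b=1\}$ and $d_i=\E[g X_i]$. Since $\E[X_i]=0$ we also have $d_i=-\E[(1-g)X_i]$, and therefore $|d_i|\le\min(p,1-p)=:m$. The noise is independent and $\E[Y_i\mid X^n]=\rho X_i$, so $\E[Y_i\mid b=1]=\rho d_i/p$ and $\E[Y_i\mid b=-1]=-\rho d_i/(1-p)$. Set $\varphi(t)=1-H\bigl(\tfrac{1+t}{2}\bigr)$ for $|t|\le 1$. Because $Y_i$ is uniform,
\[
I(b;Y_i)=p\,\varphi\!\left(\tfrac{\rho d_i}{p}\right)+(1-p)\,\varphi\!\left(\tfrac{\rho d_i}{1-p}\right),
\]
where the arguments have modulus at most $\rho<1$ since $|d_i|\le m$.

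\textbf{Expansion.} The Taylor series $\varphi(t)=\sum_{k\ge1}a_k t^{2k}$ with $a_k=\frac{1}{2k(2k-1)\ln 2}>0$ converges absolutely on $[-1,1]$, and $\varphi(\rho)=1-H(\alpha)$. Substituting and summing over $i$ gives
\[
\sum_{i=1}^n I(b;Y_i)=\sum_{k\ge1}a_k\rho^{2k}\,\Bigl(p^{1-2k}+(1-p)^{1-2k}\Bigr)\sum_{i=1}^n d_i^{2k}.
\]
It therefore suffices to prove, for every $k\ge1$,
\[
\Bigl(p^{1-2k}+(1-p)^{1-2k}\Bigr)\sum_{i=1}^n d_i^{2k}\le 1.
\]
Summing against $a_k\rho^{2k}$ then gives exactly $\varphi(\rho)=1-H(\alpha)$.

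\textbf{Key inequality.} Parseval applied to $g-p$ gives $\sum_i d_i^2\le \mathrm{Var}(g)=p(1-p)$, since the $d_i$ are the degree-one Fourier coefficients of $g$. Hence
\[
\sum_i d_i^{2k}\le m^{2k-2}\,p(1-p).
\]
By the symmetry $b\mapsto -b$ we may take $p\le 1/2$, so $m=p$. The left side of the target is then at most
\[
p^{2k-2}\,p(1-p)\bigl(p^{1-2k}+(1-p)^{1-2k}\bigr)=(1-p)+p\left(\tfrac{p}{1-p}\right)^{2k-2}\le (1-p)+p=1,
\]
which finishes the argument.

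\textbf{Expected obstacle.} The step I expect to require the most care is the correct handling of the conditional expectations for unbalanced $b$, in particular the factors $1/p$ and $1/(1-p)$. This is exactly where the balanced argument fails. The resolution is to use the two-sided bound $|d_i|\le\min(p,1-p)$ on the higher moments rather than a level-one inequality, and to use Parseval only at order $k=1$. A secondary check is justifying the termwise rearrangement of the series, which is immediate here because all terms are nonnegative.
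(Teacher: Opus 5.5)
Your argument is correct, and it reaches the result by a genuinely different route from the paper's.

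\textbf{The paper's route.} It works with $z_i=\hat b(\{i\})=2d_i$ and $\mu=2p-1$, and proceeds in four stages:
\begin{enumerate}
\item It applies coordinatewise compression to make $b$ monotone.
\item It asserts that $\psi_\mu(w)=g_\mu(\sqrt{w})$ is convex in $w=z^2$, and maximizes $\sum_i\psi_\mu(w_i)$ over the polytope cut out by exactly your two constraints: Parseval, and $|z_i|\le 1-|\mu|$, which is your $|d_i|\le m$.
\item It splits into the cases $n\ge K$ and $n<K$, with $K=(1+|\mu|)/(1-|\mu|)$, to bound the extreme-point value by $K\,g_\mu(1-|\mu|)$.
\item It shows $K\,g_\mu(1-|\mu|)\le 1-H(\alpha)$ by comparing $\rho$-derivatives with the dictator case $K=1$.
\end{enumerate}

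\textbf{How your route compares.} Your order-by-order bound $\sum_i d_i^{2k}\le m^{2k-2}p(1-p)$, summed against $a_k\rho^{2k}$, yields precisely the same intermediate quantity $K\,g_\mu(1-|\mu|)$. The difference is that you obtain it termwise. You need no compression, no convexity claim, no extreme-point enumeration and no case split on $n$. The final comparison with the dictator collapses to the coefficient inequality $(1-p)+p\,(p/(1-p))^{2k-2}\le 1$.

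Your choice to expand $I(b;Y_i)=H(Y_i)-H(Y_i\mid b)$, rather than $H(b)-H(b\mid Y_i)$, is what makes each coefficient visibly a universal positive constant times $(\rho d_i)^{2k}$. That same nonnegativity of the power-series coefficients is also what justifies the convexity of $\psi_\mu$, which the paper leaves as ``it can be shown.'' In exchange for its extra machinery, the paper's formulation displays an explicit extremal picture: the level-one weights saturate the $\ell_\infty$ bound on about $K$ coordinates.

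\textbf{One omission.} Set aside the trivial case $p\in\{0,1\}$ before you divide by $p$ and $1-p$. There $b$ is constant and every mutual-information term is zero.
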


Gemini's proof employs Fourier-analytic techniques analogous to those in~\cite{courtade2014boolean} to establish Theorem 1, yet Gemini independently refined the derivation of Theorem 1 to address scenarios where the distribution of $b(X^n)$ is non-uniform. For this problem, while we maintained minimal oversight throughout the process, many intermediate steps were cryptically generated, requiring us to query the system multiple times for clarification.

The formal proof of Theorem~\ref{thm:Gen-thm1} is presented below. The proof outline involves an analysis of the optimization landscape, where we demonstrate that the objective function—viewed as a function of the squared level-1 Fourier coefficients—is convex. We then characterize the extreme points of the feasible region defined by Parseval's identity and spectral bounds. Finally, the proof establishes a non-trivial inequality involving derivatives of the mutual information expression with respect to $\rho$. A key distinction from the proof strategy in~\cite[Theorem 1]{courtade2014boolean} is our demonstration that the objective function is maximized by Boolean functions that are monotone in each coordinate (defined via a 1-d compression operator). Using this structure we can impose tighter constraints on the feasible set which allows us to generalize the result of~\cite[Theorem 1]{courtade2014boolean} to all Boolean functions.

\paragraph{Strengthening the High-Noise Regime Results}

We further explored potential improvements to the results established by Samorodnitsky \cite{samorodnitsky2016entropy}, which proved that the Courtade-Kumar conjecture holds in the high-noise regime—specifically, when the noise parameter $\alpha$ is sufficiently close to $1/2$ (i.e., $\lambda = 1-2\alpha < \delta_c$ for an absolute constant $\delta_c > 0$). Initially, when provided with \cite{samorodnitsky2016entropy} as input, the system produced a concise summary of the methodology in \cite{samorodnitsky2016entropy} without offering immediate novel insights. 

\begin{figure}[!htb]
\begin{chatbox}
\begin{userturn}
Can you make further progress on the conjecture using this work?
\end{userturn}

\begin{assistantturn}
We analyze the maximization of a convex functional related to entropy under the noise operator, focusing on the conjecture that the Dictatorship function is the optimizer. We provide a rigorous proof for the high noise regime, following the methodology of A. Samorodnitsky in ``On the entropy of a noisy function''.
\end{assistantturn}
\end{chatbox}
\caption{Model identification of open problem and progress direction}
\label{fig:model_id_ck}
\end{figure}

        %
            %
%
%
            %
%
    %
However, this summary served as a useful primer for distilling the core proof techniques. Through iterative prompting and self-correction cycles, we tasked the system with refining Samorodnitsky’s result. After several rounds of checking its own logic, the model claimed to have established an improvement on the threshold $\delta_c$.

\medskip

\noindent{\bf Technical Verification and Error Correction.} While the initial output appeared structurally sound and cited relevant literature correctly, a manual review revealed subtle technical inaccuracies. Specifically, the system invoked the Bonami-Beckner hypercontractivity inequality with an incorrect assumption:
\begin{quote}
    ``For a function $h$ supported on Fourier levels $\ge k$, and $q \ge 2$, $\|h\|_q \le (\sqrt{q-1})^k \|h\|_2$.''
\end{quote}
 
Our attempts to verify this claim led us to realize that the inequality holds for functions supported on Fourier levels $\le k$, rather than $\ge k$. Upon being prompted to resolve this discrepancy, the system initially defended its original statement. However, after several rounds of probing, it provided a proof that allowed us to pinpoint a specific line where an inequality direction had been flipped. Once this error was identified, the system conceded the mistake—illustrating its tendency to present subtle errors with high confidence.

\medskip

\noindent{\bf Final Result and Methodology.} The system eventually derived an alternative proof that bypassed these initial errors. By incorporating our feedback regarding a seemingly trivial (but actually false) claim, the model produced a rigorous proof that improved upon the result of \cite{samorodnitsky2016entropy}. This final proof technique synthesizes the methodology of \cite{samorodnitsky2016entropy} with higher-moment analysis to establish optimal Fourier concentration for highly informative Boolean functions. We summarize the main result below.

\begin{theorem}[Extended Range for the Conjecture]\label{thm:extended}
There exists an absolute constant $\delta_{opt} > 0$ such that if the noise parameter $\lambda \leq \delta_{opt}$, the Courtade-Kumar conjecture holds. Furthermore, $\delta_{opt}$ is strictly larger than the thresholds previously established in~\cite{samorodnitsky2016entropy}.
\end{theorem}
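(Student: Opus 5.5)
Let $f=(-1)^{b}$ with Fourier expansion $f=\sum_S \hat f(S)\chi_S$, noise correlation $\lambda=1-2\alpha$, and $T_\lambda$ the noise operator. Write $\mu=\E f$. The plan follows Samorodnitsky's framework \cite{samorodnitsky2016entropy}: express $I(b(X^n);Y^n)=1-\E_{y}H\bigl(\tfrac{1+T_\lambda f(y)}{2}\bigr)$, expand the binary entropy around $1/2$ as a power series in $T_\lambda f$, and split $f$ into low- and high-degree Fourier parts. The target inequality $I\le 1-H(\alpha)$ becomes
\[
\sum_{j\ge1}c_j\,\E[(T_\lambda f)^{2j}]\le\sum_{j\ge1}c_j\lambda^{2j},
\]
with positive Taylor coefficients $c_j=\frac{\log e}{2j(2j-1)}$. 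Equality holds for a dictator.

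\textbf{Step 1: reduce to nearly-dictator functions.} First, using the Ordentlich--Shayevitz--Weinstein bound quoted above, functions far from a dictator lose already at second order. A function is far from a dictator if its weight at level $1$ on a single coordinate is at most $1-\eta$, or its total weight at level $\ge2$ exceeds $\eta$. For such $f$ the left side is at most $(1-\eta)\lambda^2 c_1+O(\lambda^4)<c_1\lambda^2$ whenever $\lambda$ is small relative to $\eta$. I will make this quantitative, aiming for $\lambda\le C\eta^{1/2}$, by bounding every term with $j\ge2$ via hypercontractivity. Here I must use the correct direction of Bonami--Beckner: $\|T_\lambda g\|_{2j}\le\|g\|_2$ whenever $\lambda\le(2j-1)^{-1/2}$. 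For the tail I will use the trivial bound $|T_\lambda f|\le1$ together with geometric decay in $\lambda^2$.

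\textbf{Step 2: local analysis near a dictator.} For $f$ close to $\chi_{\{1\}}$, write $f=\chi_{\{1\}}+g$. Then expand $\E[(T_\lambda f)^{2j}]$ to second order in $g$. By the Friedgut--Kalai--Naor theorem, a Boolean function with level-1 weight $1-\eta$ is $O(\eta)$-close to a dictator or an anti-dictator. The linear term in $g$ is negative, since $f$ is $\pm1$-valued and $fg\le 0$ pointwise. It is of order $\lambda^2\|g\|_1$ and dominates the quadratic terms, which are $O(\lambda^2\|g\|_2^2\cdot\text{poly}(\lambda^{-1}\text{-free}))$. This gives strict improvement for all $\lambda$ below a threshold determined only by the constants in the two regimes.

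\textbf{Step 3: optimize the constants.} The novelty over \cite{samorodnitsky2016entropy} is the higher-moment analysis. Rather than truncating the entropy series at order $4$, I will bound every $c_j\E[(T_\lambda f)^{2j}]$ for $j\ge2$ by $c_j\lambda^{2j}W_1^{j}+c_j\lambda^{4}(\text{higher-level weight})$, where $W_1=\sum_i\hat f(\{i\})^2$ is the level-1 weight. The argument uses the level-$\le k$ hypercontractive inequality $\|h\|_q\le(q-1)^{k/2}\|h\|_2$ for the low part and $\lambda$-smoothing for the high part. This yields an explicit inequality in $(W_1,\lambda)$ whose validity region defines $\delta_{opt}$. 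I will then check that $\delta_{opt}$ exceeds Samorodnitsky's threshold by comparing the two explicit constants.

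\textbf{Main obstacle.} I expect the hardest part to be matching the two regimes, that is, showing that the $\eta$ needed for the FKN-based local analysis in Step 2 is compatible with the $\eta$ required by the global bound in Step 1, uniformly in $n$ and without losing too much in the constants. I will try first to replace FKN with a direct spectral argument that uses Theorem~\ref{thm:Gen-thm1}. That theorem gives $\sum_i I(b;Y_i)\le1-H(\alpha)$, which controls the level-1 weight in terms of $\mu$ and could make the near-dictator threshold explicit.
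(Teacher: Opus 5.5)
Your Step~1 plays the same role as the paper's one new ingredient, the improved Fourier concentration, though you reach it by a different expansion. The gaps are in the reformulation and in Step~2. Below I keep your convention $\lambda=1-2\alpha$; the paper calls this $\rho$ and sets $\lambda=\rho^2$.

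\textbf{The reformulation and Step~3.} Your starting identity drops $H(b(X^n))$. In general
\[
I(b(X^n);Y^n)=H\bigl(\tfrac{1+\mu}{2}\bigr)-\E_y H\bigl(\tfrac{1+T_\lambda f(y)}{2}\bigr)=\sum_{j\ge1}c_j\bigl(\E[(T_\lambda f)^{2j}]-\mu^{2j}\bigr).
\]
Your displayed target is false for unbalanced $b$: for $f\equiv1$ its left side is $\sum_j c_j=1>1-H(\alpha)$. Once the $\mu^{2j}$ are subtracted, the bias couples all orders. Each $j\ge2$ term then contains $c_j\binom{2j}{2}\mu^{2j-2}\E[(T_\lambda f-\mu)^2]$, which is of order $\lambda^2$. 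The leading term is therefore $\frac{\log e}{2(1-\mu^2)}\lambda^2W_1$, not $c_1\lambda^2W_1$. So your Step~1 bound $(1-\eta)c_1\lambda^2+O(\lambda^4)$ is not uniform in $\mu$. Strongly biased $f$ need the level-1 inequality and a separate argument, which is why the paper's concentration theorem assumes $\E f$ is bounded away from $0$. Your Step~3 estimate $\E[(T_\lambda f)^{2j}]\le\lambda^{2j}W_1^j+\lambda^4W_{\ge2}$ is also false. For majority, $T_\lambda f$ is nearly Gaussian and $\E[(T_\lambda f)^4]\approx3\lambda^4W_1^2$. Hypercontractivity only gives constants of size $(2j-1)^j$, so the explicit constants you plan to optimize do not exist in this form.

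\textbf{Step~2 is the decisive gap.} Let $g=f-\chi_{\{1\}}\in\{0,\pm2\}$ and $\epsilon=\Pr[f\ne\chi_{\{1\}}]$. The correct sign fact is $\chi_{\{1\}}g\le0$; in fact $fg\ge0$. The linear term of the $j=1$ piece is $-4c_1\lambda^2\epsilon$. But $\|g\|_2^2=2\|g\|_1=4\epsilon$, so Parseval bounds the quadratic term $c_1\lambda^2\sum_{i\ne1}\hat f(\{i\})^2$ only by the same $4c_1\lambda^2\epsilon$. The linear term does not dominate.

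The order-$\lambda^2$ gain, $c_1\lambda^2(1-W_1)$, beats the order-$\lambda^4$ losses only if the off-dictator mass avoids level~1. That is, you need $\sum_{i\ne1}\hat f(\{i\})^2=O(\epsilon^2\log(1/\epsilon))$. This requires a level-1 inequality, namely Samorodnitsky's refined FKN, his Theorem~5.5: $\hat f(\{k\})^2\ge1-\xi-O(\xi^2\ln(1/\xi))$. Plain FKN does not give it. Carrying this through all $j$ and the bias terms is his Theorem~1.14, with hypothesis $\lambda^2+\gamma\ln(1/\gamma)\le c_1$. This computation, not the matching of regimes you flag, is the hard part.

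\textbf{How the paper proceeds.} The paper does not redo the near-dictator regime. It proves only $\Ent(T_\alpha f)\le\frac{L_1(f)}{2\ln2}\lambda^2+O(\lambda^4)$, using Samorodnitsky's even/odd decomposition, the dominance $|Z|\le F$, and hypercontractive moment bounds. From this it deduces level-$\ge2$ weight $\xi=O(\lambda^2)$ for every $f$ with $I\ge1-H(\alpha)$. It then feeds $\xi$ into Samorodnitsky's Theorems~5.5 and~1.14 unchanged.

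This is also how the paper gets ``strictly larger.'' Samorodnitsky's threshold is not an explicit number, so there are no explicit constants to compare. Instead, the same near-dictator conditions are now driven by $\lambda^2(\ln(1/\lambda))^{3/2}$ rather than $\lambda^{2/3}(\ln(1/\lambda))^{3/2}$. If you rebuild the near-dictator regime from FKN with fresh constants, you lose this comparison. Your corrected Step~1 followed by a citation of his near-dictator theorem would recover the paper's proof.
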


\medskip

\noindent{\bf Observations on AI-Assisted Research.} Our interactions with the system on these theoretical problems yielded several key insights into its utility as a research partner:

\begin{itemize}
\item {\bf Effective ``Jumpstarting'':} The model is exceptionally adept at identifying relevant ideas across disparate fields and literature, making it an excellent tool for generating an initial line of thought.

\item {\bf The ``Mid-PhD'' Analogy:} Its performance is comparable to a mid-level PhD student; it is most effective when the researcher provides a high-level proof strategy and tasks the system with making the steps rigorous or connecting logical gaps. The results of this part were used to write a full paper~\cite{javanmard2026progress} using a ``vibe proving'' methodology. In this workflow, we defined the lemmas and tasked the AI with drafting the proofs, which were then rigorously verified and refined by us.

\item {\bf Scale and Complexity Limits:} While the system excels at solving isolated, small-scale problems (akin to IMO-style challenges), it still struggles with the expansive, multi-stage reasoning required for long-term research projects. Researchers must remain vigilant, as the system can be confidently incorrect regarding subtle technical details.
\end{itemize}
\paragraph{\Large Proof of Theorem~\ref{thm:Gen-thm1}}

Due to the complexity of this derivation, the AI struggled to generate a complete, flawless proof in a single shot. The formal proof below is the result of our ``vibe-proving'' methodology: it was drafted and rigorously verified by the human researchers, who stitched together the correct intermediate lemmas, structural insights, and Fourier-analytic techniques discovered iteratively by the AI.
\begin{proof}
Recall the crossover probability $0<\alpha<\frac{1}{2}$ and define $\rho=1-2\alpha$.

Any Boolean function $b:\{-1,+1\}^n\to \{-1,1\}$ can be written in terms of its Fourier coefficients as
\[
b(x^n) = \sum_{S\subseteq[n]}\hat{b}(S)\Pi_S(x^n)\,,
\]
where $\Pi_S(x^n) = \Pi_{i\in S} x_i$ are the orthonormal basis for the Fourier transform and $\{\hat{b}(S)\}_{S\subseteq[n]}$ are the Fourier coefficient defined by
\[
\hat{b}(S)= \E b(X^n)\Pi_S(X^n)\,.
\]
For $S=\emptyset$, we define $\Pi_S(x^n)= 1$.

Let $\mu=\E[b]=\hat{b}(\emptyset)$ be the bias, and $z_{i}=\hat{b}(\{i\})$ be the level-1 Fourier coefficients. The objective function is:
\begin{align}\label{eq:Lb}
L(b)=\sum_{i=1}^{n}I(b;Y_{i})=\sum_{i=1}^{n}(H(b)-H(b|Y_{i})).
\end{align}
Since the range of $b$ is $\{-1,+1\}$, the entropy $H(b)$ is determined by the bias $\Pr[b=1] = \frac{1+\E[b]}{2} = \frac{1+\mu}{2}$, hence we can write $H(b)=H(\frac{1+\mu}{2})$. Using  relation (34) from~\cite{courtade2014boolean}, we have
\[
\Pr[b=1|Y_i=y_i] = \frac{1+\mu+\rho y_i z_i}{2}\,,
\]
and since the marginal distribution of $Y_i$ is uniform on $\{-1,+1\}$, we get
\begin{align}
H(b|Y_i) = \frac{1}{2} \left[ H(\Pr[b=1|Y_i=1])+H(\Pr[b=1|Y_i=-1])\,.
\right]
\end{align}
Therefore we obtain $H(b|Y_{i})=h_{\mu}(z_{i}),$ where
\[
h_{\mu}(z):=\frac{1}{2}H\left(\frac{1+\mu+\rho z}{2}\right)+\frac{1}{2}H\left(\frac{1+\mu-\rho z}{2}\right).
\]
Let $g_{\mu}(z)=H(b)-h_{\mu}(z)$. By~\eqref{eq:Lb}, we want to maximize $\sum_{i=1}^{n}g_{\mu}(z_{i})$ subject to the constraints on the Fourier coefficients of a Boolean function.

\bigskip
\noindent\textbf{Step 1: Reduction to Monotone Functions.}
Let $f: \{-1, 1\}^n \to \{-1, 1\}$ be a Boolean function. The 1-d compression operator along coordinate $i$, denoted as $\mathcal{C}_i$, rearranges the values of the function along the $i$-th dimension to make it ``monotone''  while preserving the total number of 1s. Specifically, for any input $x \in \{-1, 1\}^n$, let $x = (x_{\sim i}, x_i)$, where $x_{\sim i}$ are all bits except $i$. The operator acts on the pair of values $(f(x_{\sim i}, -1), f(x_{\sim i}, 1))$:

\begin{equation}
\mathcal{C}_i f(x_{\sim i}, x_i) =
\begin{cases}
    1 & \text{if } x_i = 1 \text{ and } f(x_{\sim i}, -1) + f(x_{\sim i}, 1) \ge 0 \\
    1 & \text{if } x_i = -1 \text{ and } f(x_{\sim i}, -1) + f(x_{\sim i}, 1) = 2 \\
    -1 & \text{otherwise}
\end{cases}
\end{equation}
We show that 1-d compression increases
the objective function $L(b)$. 
\begin{lemma}\label{lem:fourier-prop}
Let $\tilde{b}$ be the compression of $b$ along coordinate $j$, making it non-decreasing in $x_j$. The Fourier coefficients satisfy:
\begin{enumerate}
\item $z_i(\tilde{b}) = z_i(b)$ for $i\neq j$.
\item $|z_j (\tilde{b})| \ge |z_j (b)|$.
\end{enumerate}
\end{lemma}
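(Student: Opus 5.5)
The argument is a direct computation on each fiber of coordinate $j$. Fix $x_{\sim j}\in\{-1,1\}^{n-1}$ and write $a=b(x_{\sim j},-1)$ and $c=b(x_{\sim j},1)$. Let $\tilde a,\tilde c$ denote the corresponding values of $\tilde b=\mathcal{C}_j b$. From the definition of $\mathcal{C}_j$ the pair $(\tilde a,\tilde c)$ is obtained by sorting $(a,c)$ so that $\tilde a\le\tilde c$. There are three cases. If $a=c$, nothing changes. If $a=1,c=-1$, the pair becomes $(-1,1)$. If $a=-1,c=1$, it stays $(-1,1)$. In every case $\{\tilde a,\tilde c\}=\{a,c\}$ as multisets, so
\[
\tilde a+\tilde c=a+c,\qquad \tilde c-\tilde a=|c-a|\ge c-a .
\]

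\textbf{Step 1: coordinates $i\neq j$.} Write
\[
z_i(b)=\E[b(X)X_i]=2^{-n}\sum_{x_{\sim j}} x_i\,\bigl(b(x_{\sim j},-1)+b(x_{\sim j},1)\bigr),
\]
grouping over $x_j$. This is valid because $x_i$ does not depend on $x_j$ when $i\ne j$. Each fiber sum $a+c$ is preserved by compression, so $z_i(\tilde b)=z_i(b)$. The same grouping with $x_i$ replaced by $1$ shows that the bias $\mu$ is preserved. This matters for the later use of the lemma, since $H(b)$ and $h_\mu$ depend on $\mu$.

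\textbf{Step 2: coordinate $j$.} Here the grouping gives
\[
z_j(b)=2^{-n}\sum_{x_{\sim j}}\bigl(b(x_{\sim j},1)-b(x_{\sim j},-1)\bigr)=2^{-n}\sum_{x_{\sim j}}(c-a).
\]
After compression each fiber term becomes $|c-a|$. Therefore
\[
z_j(\tilde b)=2^{-n}\sum|c-a|\ge 2^{-n}\Bigl|\sum (c-a)\Bigr|=|z_j(b)|,
\]
by the triangle inequality. Since $z_j(\tilde b)\ge 0$, this yields $|z_j(\tilde b)|\ge|z_j(b)|$.

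\textbf{Main obstacle.} The one delicate point is confirming from the case-based definition of $\mathcal{C}_j$ that it really sorts each fiber pair. In particular, for $a+c=0$ it must output $(-1,1)$, meaning $-1$ at $x_j=-1$ and $1$ at $x_j=1$, and not the reverse. I would check this against all four possible pairs $(a,c)$.

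Once the lemma is established, the downstream use is straightforward. The function $g_\mu$ is even in $z$ and nondecreasing in $|z|$, and $\mu$ is unchanged by compression. So each compression step weakly increases $L(b)$. Iterating over all coordinates produces a function that is monotone in every coordinate. This iteration terminates because a standard potential, for example $\sum_x b(x)\sum_i x_i$, strictly increases whenever some fiber is actually swapped.
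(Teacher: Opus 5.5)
Your proof is correct and follows essentially the same route as the paper. Both decompose along the fibers of coordinate $j$. Part 1 uses that $\mathcal{C}_j$ preserves each fiber sum $b(x_{\sim j},-1)+b(x_{\sim j},1)$. Part 2 uses that the compressed fiber difference equals $|b(x_{\sim j},1)-b(x_{\sim j},-1)|$, together with $|\E[Y]|\le\E|Y|$. Your explicit four-case check that $\mathcal{C}_j$ sorts each fiber pair, your remark that $\mu$ is preserved, and your potential-function argument for termination are welcome additions that the paper leaves implicit.
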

Note that $h_\mu(z) =  h_\mu(-z)$ and the function $h_{\mu}(z)$ is decreasing in $|z|$, hence by Lemma~\ref{lem:fourier-prop} $h_{\mu}(z_{j}(\tilde{b}))\le h_{\mu}(z_{j}(b))$.

Thus, $\sum_i H(b|Y_i)$ decreases under compression
and $L(b)=\sum_i I(b; Y_i)$ increases. This implies that the maximum must be attained by a function $b$ that is monotone along each coordinate.
We assume $b$ is non-decreasing, so $z_i(b) \ge 0$, for $i\in[n]$, by the next lemma.
\begin{lemma}\label{lem:xi-pos}
Suppose that $b$ is a non-decreasing Boolean function, i.e., for  any input $x$, changing the $i$-th bit from $-1$ to $1$ either increases the function value or keeps it the same: $b(x_{\sim i}, -1)\le b(x_{\sim i}, +1)$. Then $z_i(b):=\hat{b}(\{i\}) \ge 0$.
\end{lemma}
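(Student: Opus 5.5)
The plan is to compute $z_i(b)=\hat b(\{i\})=\E[b(X^n)X_i]$ directly by pairing inputs that differ only in coordinate $i$. Since $X^n$ is uniform on $\{-1,1\}^n$, the coordinate $X_i$ is independent of $X_{\sim i}$. Conditioning on $X_{\sim i}=x_{\sim i}$, the conditional expectation of $b(X^n)X_i$ is
\[
\tfrac12\bigl(b(x_{\sim i},+1)\cdot 1+b(x_{\sim i},-1)\cdot(-1)\bigr)=\tfrac12\bigl(b(x_{\sim i},+1)-b(x_{\sim i},-1)\bigr).
\]
Averaging over $x_{\sim i}$ uniform on $\{-1,1\}^{n-1}$ then gives
\[
z_i(b)=\E_{x_{\sim i}}\Bigl[\tfrac12\bigl(b(x_{\sim i},+1)-b(x_{\sim i},-1)\bigr)\Bigr].
\]

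The monotonicity hypothesis $b(x_{\sim i},-1)\le b(x_{\sim i},+1)$ makes every summand nonnegative. Each bracket is therefore in $\{0,1\}$, and equals $1$ exactly when coordinate $i$ is pivotal at $x_{\sim i}$. So $z_i(b)$ is an average of nonnegative terms, hence $z_i(b)\ge 0$, and in fact $z_i(b)$ equals the influence of coordinate $i$.

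There is no real obstacle here. The only points needing care are the orthonormality convention $\Pi_{\{i\}}(x)=x_i$, which the paper fixes, and the independence of $X_i$ from $X_{\sim i}$ under the uniform measure, which is what justifies the conditioning step.
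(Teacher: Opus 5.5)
Your proposal is correct and matches the paper's proof: condition on $X_i$ (equivalently, average over $x_{\sim i}$) to get $z_i(b)=\tfrac12\,\E_{x_{\sim i}}\bigl[b(x_{\sim i},+1)-b(x_{\sim i},-1)\bigr]$, then use monotonicity to see that every term is nonnegative. Your further observation, that each bracket lies in $\{0,1\}$ so $z_i(b)$ equals the influence of coordinate $i$, is correct but not needed for the lemma.
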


We will also use the following lemma in the next step.

\begin{lemma}\label{lem:boolean-prop}
For any Boolean function $b:\{-1,1\}^n\to\{-1,1\}$, with
$\mu = \E[b]$ and $z_i = \hat{b}(\{i\})$ the level-1 Fourier coefficients, we have $z_i \le 1-|\mu|$.
\end{lemma}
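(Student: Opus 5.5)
We need to prove $z_i \le 1-|\mu|$ for any Boolean function $b$, where $z_i=\hat b(\{i\})=\E[b(X)X_i]$ and $\mu=\E[b]$. The plan is to condition on all coordinates except $i$ and reduce to a pointwise inequality on each edge of the cube in direction $i$. For each $x_{\sim i}$, write $b^+=b(x_{\sim i},1)$ and $b^-=b(x_{\sim i},-1)$. Then
\[
z_i=\E_{x_{\sim i}}\Bigl[\tfrac{b^+-b^-}{2}\Bigr],\qquad \mu=\E_{x_{\sim i}}\Bigl[\tfrac{b^++b^-}{2}\Bigr].
\]

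\textbf{Step 1 (pointwise bound).} We claim that for $b^\pm\in\{-1,1\}$,
\[
\Bigl|\tfrac{b^+-b^-}{2}\Bigr|+\Bigl|\tfrac{b^++b^-}{2}\Bigr|=1.
\]
Indeed, either $b^+=b^-$, so the first term is $0$ and the second is $1$, or $b^+=-b^-$, so the first is $1$ and the second is $0$. Equivalently, with $d:=\tfrac{b^+-b^-}{2}$ and $s:=\tfrac{b^++b^-}{2}$, we have $|d|=1-|s|$ on every edge.

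\textbf{Step 2 (averaging).} Take expectations over $x_{\sim i}$. First,
\[
z_i=\E[d]\le \E|d|=1-\E|s|.
\]
Second, by the triangle inequality (Jensen for $|\cdot|$), $\E|s|\ge|\E s|=|\mu|$. Combining the two gives $z_i\le 1-|\mu|$. In fact the same argument bounds $|z_i|$, which is the form used later when bounding level-1 coefficients.

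\textbf{Main obstacle.} The only subtle point is recognizing that the edge-wise identity $|d|+|s|=1$ holds exactly. Once that is in hand, the direction of the Jensen step, $\E|s|\ge|\mu|$, is the one we need, so no other obstacle is expected.
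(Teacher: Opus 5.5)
Your proof is correct, but it decomposes the expectation differently from the paper. The paper conditions on $X_i$ rather than on $x_{\sim i}$. With $E_\pm=\E[b\mid X_i=\pm1]\in[-1,1]$, it notes $\mu=(E_++E_-)/2$ and $z_i=(E_+-E_-)/2$. The linear identities $\mu+z_i=E_+\le 1$ and $\mu-z_i=E_-\ge -1$ then give $z_i\le\min(1-\mu,1+\mu)=1-|\mu|$ directly, with no absolute values and no Jensen step. You instead work edge by edge, use the exact identity $|d|+|s|=1$, and then incur two convexity losses, $\E d\le\E|d|$ and $\E|s|\ge|\E s|$.

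The paper's route is shorter and uses only that the two conditional means have absolute value at most $1$. Your route yields a finer intermediate quantity: $\E|d|=\Pr[b^+\neq b^-]$ is the influence of coordinate $i$. So you actually prove $|z_i|\le \Pr[b^+\neq b^-]\le 1-|\mu|$, with equality in the first step exactly when $b$ is monotone in $x_i$. That is the same edge-wise picture the paper uses in its compression lemma. Both arguments also give the absolute-value form $|z_i|\le 1-|\mu|$; the paper's does so by additionally using $E_+\ge -1$ and $E_-\le 1$.
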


\bigskip
\noindent\textbf{Step 2: Optimization over Monotone Functions.}
We analyze the structure of the optimization problem. Let $w_{i}=z_{i}^{2}$. Define $\psi_{\mu}(w)=g_{\mu}(\sqrt{w})$. It can be shown that $\psi_{\mu}(w)$ is strictly convex for $w>0$ (Equivalently, $h_{\mu}(\sqrt{w})$ is strictly concave in $w$).

We want to maximize the convex function $\sum_{i=1}^{n}\psi_{\mu}(w_{i})$ subject to constraints on the Fourier coefficients. 

The constraints are:
\begin{enumerate}
    \item $w_{i}\ge0$
    \item $\sum_{i=1}^{n}w_{i}\le1-\mu^{2}=R^{2}$ (Parseval's theorem).
    \item $w_{i}\le(1-|\mu|)^{2}=C^{2}$ (Since $0\le z_{i}\le1-|\mu|$ by Lemmas~\ref{lem:xi-pos} and \ref{lem:boolean-prop}).
\end{enumerate}
The maximum of a convex function over this polytope is attained at an extreme point. Let $K=R^{2}/C^{2}=(1-\mu^{2})/(1-|\mu|)^{2}=(1+|\mu|)/(1-|\mu|)$. We analyze two cases based on the relationship between $n$ and $K$. Without loss of generality, assume $\mu\ge0$. 

\bigskip
\noindent\textbf{Step 3: Bounding the Maximum (Case 1: $n\ge K$).}
If $n\ge K,$ the constraint $\sum w_{i}\le R^{2}$ is the dominant constraint. The extreme points (up to permutation) have $k=\lfloor K\rfloor$ coordinates equal to $C^{2}$, one coordinate equal to $\theta C^{2}$ (where $\theta=K-k)$, and the rest are $0$. The maximum value is:
\[
M(\mu)=k\psi_{\mu}(C^{2})+\psi_{\mu}(\theta C^{2}).
\]
We verify that $\psi_{\mu}(0)=0.$ Note that $\psi_{\mu}(0)=g_{\mu}(0)=H(b)-h_{\mu}(0)$. Since $h_{\mu}(0)= H(\frac{1+\mu}{2})=H(b)$, we have $\psi_{\mu}(0)=0$. Using the convexity of $\psi_{\mu}(w)$ and $\psi_{\mu}(0)=0$:
\[
\psi_{\mu}(\theta C^{2})=\psi_{\mu}(\theta C^{2}+(1-\theta)0)\le\theta\psi_{\mu}(C^{2})+(1-\theta)\psi_{\mu}(0)=\theta\psi_{\mu}(C^{2}).
\]
Thus, the maximum is bounded by:
\[
M(\mu)\le k\psi_{\mu}(C^{2})+\theta\psi_{\mu}(C^{2})=(k+\theta)\psi_{\mu}(C^{2})=K\psi_{\mu}(C^{2})=Kg_{\mu}(C).
\]
Recall that $C = 1-\mu$ and $K = (1+\mu)/(1-\mu)$, and so $\mu$ and $C$ are determined by $K$ as follows:
\[
\mu = \frac{K-1}{K+1}\,, \quad C = \frac{2}{K+1}\,.
\]
Let $M_{K}(\rho):= Kg_{\mu}(C),$ where we made the dependence on $\rho = 1-2\alpha$ explicit in the notation. We aim to show $M_{K}(\rho)\le1-H(\alpha)$.

Writing $M_K(\rho)$ more explicitly, we have
\begin{align*}
M_K(\rho) &= K H(b) - \frac{K}{2}H\left(\frac{1+\mu+\rho C}{2}\right)-\frac{K}{2}H\left(\frac{1+\mu-\rho C}{2}\right)\\
&= K H\left(\frac{K}{K+1}\right)-\frac{K}{2}H\left(\frac{K+\rho}{K+1}\right)-\frac{K}{2}H\left(\frac{K-\rho}{K+1}\right)\,,
\end{align*}
where we used that $H(b) = H(\frac{1+\mu}{2}) = H(\frac{K}{K+1})$.
Note that $M_{1}(\rho)=1-H(\frac{1+\rho}{2})=1-H(\alpha).$ We want to show $M_{K}(\rho)\le M_{1}(\rho)$ for $K\ge1$.

Observe that $M_{K}(0)=0$ for all $K$, so it suffices to show that the derivative with respect to $\rho$ satisfies $M_{K}^{\prime}(\rho)\le M_{1}^{\prime}(\rho)$.

The derivative is calculated as:
\[
M_{K}^{\prime}(\rho)=\frac{K}{2(K+1)\ln 2}\ln\left(\frac{(1+\rho)(K+\rho)}{(1-\rho)(K-\rho)}\right).
\]
Let $A=\frac{1+\rho}{1-\rho}$. Then $M_{1}^{\prime}(\rho)=\frac{1}{2\ln 2}\ln(A)$. The inequality $M_{K}^{\prime}(\rho)\le M_{1}^{\prime}(\rho)$ is equivalent to:
\[
\frac{K}{K+1}\ln\left(A\frac{K+\rho}{K-\rho}\right)\le \ln A.
\]
Rearranging this inequality yields:
\[
K \ln\left(\frac{K+\rho}{K-\rho}\right)\le \ln A \Leftrightarrow K \ln\left(\frac{1+\rho/K}{1-\rho/K}\right)\le \ln\left(\frac{1+\rho}{1-\rho}\right).
\]
Let $f(x)=\frac{1}{x}\ln(\frac{1+x}{1-x})$. The inequality is equivalent to $f(\rho/K)\le f(\rho)$. The Taylor series is $f(x)=2\sum_{j=0}^{\infty}\frac{x^{2j}}{2j+1}$, which is strictly increasing for $x>0$. Since $K\ge1$, we have $\rho/K\le\rho$, thus $f(\rho/K)\le f(\rho)$.

The inequality $M_{K}^{\prime}(\rho)\le M_{1}^{\prime}(\rho)$ holds. Integrating from $\rho=0$ yields $M_{K}(\rho)\le M_{1}(\rho)=1-H(\alpha)$.

\bigskip
\noindent\textbf{Step 4: Bounding the Maximum (Case 2: $n<K$).}
If $n<K$, then $nC^{2}< R^{2}$. The Parseval constraint $\sum w_{i}\le R^{2}$ is inactive. We maximize the convex function $\sum\psi_{\mu}(w_{i})$ subject to the box constraints $0\le w_{i}\le C^{2}$. Therefore the maximum is attained at an extreme point. In addition, $\psi_\mu(C^2) = g_\mu(C)\ge 0$ (as it represents mutual information). Also, $\psi_\mu(0) = 0$, and so $\psi_\mu(C^2)\ge \psi_\mu(0)$. Therefore, the maximum is attained when $w_{i}=C^{2}$ for all $i\in[n]$.

The objective value is:
\[
\sum_{i=1}^n \psi_{\mu}(w_i)= n\psi_{\mu}(C^{2})=ng_{\mu}(C)<Kg_{\mu}(C)=M_{K}(\rho),
\]
Since $g_{\mu}(C)\ge0$ (as it represents mutual information) and $n<K$. From Step 3, we established $M_{K}(\rho)\le1-H(\alpha)$. 

In both cases, the total mutual information is bounded by $1-H(\alpha)$.
\end{proof}
\paragraph{Proof of Lemma~\ref{lem:fourier-prop}}
We first prove that $x_i(\tilde{b}) = x_i(b)$ for $i \neq j$.

Recall the definition of the Fourier coefficient $z_i(b) = \E [b(X^n) X_i]$.
Since $i\neq j$, $X_i$ is independent of $X_j$.
Let $X_{\sim j}$ denote the bits of $X^n$ excluding index $j$. We write $$z_i(b) = \mathbb{E}_{X_{\sim j}} \left[ \frac{X_i}{2} \left( b(X_{\sim j}, -1) + b(X_{\sim j}, 1) \right) \right]$$

Now consider the compressed function $\tilde{b}$. By definition, compression locally rearranges the pair $(b(x_{\sim j}, -1), b(x_{\sim j}, 1))$ into $(\tilde{b}(x_{\sim j}, -1), \tilde{b}(x_{\sim j}, 1))$ such that the sum is preserved :$$b(x_{\sim j}, -1) + b(x_{\sim j}, 1) = \tilde{b}(x_{\sim j}, -1) + \tilde{b}(x_{\sim j}, 1)$$

Substituting this back into the expectation:$$\begin{aligned}
z_i(b) &= \mathbb{E}_{X_{\sim j}} \left[ \frac{X_i}{2} \left( \tilde{b}(X_{\sim j}, -1) + \tilde{b}(X_{\sim j}, 1) \right) \right] \\
&= \mathbb{E} [\tilde{b}(X^n) X_i] = z_i(\tilde{b})
\end{aligned}$$
Thus, $z_i(\tilde{b}) = z_i(b)$ for $i \neq j$.

We next show that $|z_j(\tilde{b})| \ge |z_j(b)|$. Expand the Fourier coefficient for index $j$ by conditioning on $X_{\sim j}$:
$$z_j(b) = \frac{1}{2} \mathbb{E}_{X_{\sim j}} \left[ b(X_{\sim j}, 1) - b(X_{\sim j}, -1) \right]$$
By definition, compression locally rearranges the pair $(b(x_{\sim j}, -1), b(x_{\sim j}, 1))$ into $(\tilde{b}(x_{\sim j}, -1), \tilde{b}(x_{\sim j}, 1))$. Therefore, $|b(x_{\sim j}, 1) - b(x_{\sim j}, -1)| = \tilde{b}(x_{\sim j}, 1) - \tilde{b}(x_{\sim j}, -1)$ for any value of $x_{\sim j}$. Hence,
\begin{align*}
|z_j(b)|& = \frac{1}{2} \left|\mathbb{E}_{X_{\sim j}} \left[ b(X_{\sim j}, 1) - b(X_{\sim j}, -1) \right]\right|\\
&\le  \frac{1}{2} \mathbb{E}_{X_{\sim j}} \left[ |b(X_{\sim j}, 1) - b(X_{\sim j}, -1) |\right]\\
&= \frac{1}{2} \mathbb{E}_{X_{\sim j}} \left[ |\tilde{b}(X_{\sim j}, 1) - \tilde{b}(X_{\sim j}, -1) |\right]\\
&= \frac{1}{2} \mathbb{E}_{X_{\sim j}} \left[ \tilde{b}(X_{\sim j}, 1) - \tilde{b}(X_{\sim j}, -1) \right] = z_j(\tilde{b})\,,
\end{align*}
where in the first step we used the convexity of absolute value ($|\E[Y]|\le \E[|Y|]$). In the third equality, we used the assumption  that  $\tilde{b}$ is non-decreasing in the $j$-th bit. Note that this also implies that $z_j(\tilde{b})\ge 0$, which completes the proof.

\paragraph{Proof of Lemma~\ref{lem:xi-pos}}
We can expand the expectation by conditioning on the value of the random variable $X_i$. Since $X$ is uniform, $X_i$ takes values $-1$ and $+1$ with probability $1/2$ each.

\begin{equation}
\begin{aligned}
z_i(b) = \mathbb{E}[b(X^n) X_i] = \frac{1}{2} \mathbb{E}[b(X^n) X_i \mid X_i = -1] + \frac{1}{2} \mathbb{E}[b(X^n) X_i \mid X_i = 1]
\end{aligned}
\end{equation}
Substitute the value of $X_i$ into the expression:
\begin{equation}
\begin{aligned}
z_i(b) &= \frac{1}{2} \mathbb{E}[b(X_{\sim i}, -1) \cdot (-1)] + \frac{1}{2} \mathbb{E}[b(X_{\sim i}, 1) \cdot (1)] \\
&= \frac{1}{2} \mathbb{E}_{x_{\sim i}} \left[ b(x_{\sim i}, 1) - b(x_{\sim i}, -1) \right]
\end{aligned}
\end{equation}

    We are given that $b$ is non-decreasing bit-wise. Therefore, for every possible setting of the other bits $x_{\sim i}$, we have
$b(x_{\sim i}, 1) \ge b(x_{\sim i}, -1)$.
    Thus, $z_i(b)\ge 0$.

\paragraph{Proof of Lemma~\ref{lem:boolean-prop}}
Let $E_+ = \mathbb{E}[b(X) \mid X_i = +1]$ and $E_- = \mathbb{E}[b(X^n) \mid X_i = -1]$
respectively denote the average value of the function when the $i$-th bit is fixed to $+1$ and $-1$.
Since $b$ outputs values in $\{-1, 1\}$, we have $-1 \le E_+, E_- \le 1$. In addition, we have
\begin{align*}
\mu &= \frac{1}{2}E_+ + \frac{1}{2}E_-\\
z_i &= \mathbb{E}[b(X)X_i] = \frac{1}{2}E_+ \cdot 1 + \frac{1}{2}E_- \cdot (-1) = \frac{E_+ - E_-}{2}
\end{align*}
Hence $\mu+z_i = E_+$ and $\mu-z_i  = E_-$ and the claim follows from the fact that $E_+\le 1$ and $E_-\ge -1$.
\newpage

\paragraph{\Large Proof of Theorem~\ref{thm:extended}}
\paragraph{Preliminaries and Notation}

We analyze functions on the Boolean hypercube $\{-1, 1\}^n$. We follow the setup used in \cite{samorodnitsky2016entropy}. Let $f: \{-1, 1\}^n \to \R$ be a bounded, non-negative function, normalized such that $\E[f] = 1$. We assume $\|f\|_\infty \leq M$. (When analyzing a Boolean function $b$, we typically look at the indicator function of the event $b(x)=1$).

Let $\alpha \in [0, 1/2]$ be the noise parameter. 
The noise operator $T_\alpha$ acts on $f$ as $(T_\alpha f)(x) = \E_y[f(y)]$, where $y$ is obtained from $x$ by flipping coordinates with probability $\alpha$. Let $\rho = 1-2\alpha$. In the Fourier domain, $T_\alpha$ acts as a multiplier:
$$ \widehat{T_\alpha f}(S) = \rho^{|S|} \hat{f}(S). $$
We define the noise parameter $\lambda = \rho^2 = (1-2\alpha)^2$. We consider the high noise regime, where $\lambda$ is small.

We decompose $f$ into its even part $f_0$ and odd part $f1$:
$$ f_0(x) = \frac{f(x) + f(-x)}{2}, \quad f_1(x) = \frac{f(x) - f(-x)}{2}. $$
Note that $\E[f_0] = 1$.

We define the noisy versions:
$$ F = T_\alpha f_0, \quad Z = T_\alpha f_1. $$
Note that $T_\alpha f = F + Z$. We define $Y = F - 1$. Since $\E[F] = 1$, we have $\E[Y] = 0$.

The Fourier coefficients satisfy $\hat{F}(S) = \lambda^{|S|/2} \hat{f_0}(S)$ and $\hat{Z}(S) = \lambda^{|S|/2} \hat{f_1}(S)$. $F$ (and $Y$) are supported on even Fourier levels, while $Z$ is supported on odd levels.

The level 1 Fourier weight is $L_1(f) = L_1(f_1)$.

We utilize the entropy decomposition established in \cite{samorodnitsky2016entropy} (Lemma 6.2):

\begin{lemma}[Entropy Decomposition]\label{lem:entropy_decomp}
$$ \Ent(T_\alpha f) \leq \Ent(F) + \frac{1}{2 \ln 2} \E\left[\frac{Z^2}{F}\right] + O\left(\E\left[\frac{Z^4}{F^3}\right]\right). $$
\end{lemma}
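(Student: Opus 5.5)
The approach is to expand $\Ent(T_\alpha f)$ pointwise around the even part $F$, writing $T_\alpha f = F + Z = F(1+u)$ with $u = Z/F$. Here $\Ent(g) = \E[g\log g] - \E[g]\log\E[g]$, and since $\E[T_\alpha f]=\E[F]=1$ and $\E[Z]=0$, the subtracted term vanishes. Then
\[
(F+Z)\log(F+Z) = F\log F + F(1+u)\log(1+u) + Z\log F .
\]
Taking expectations splits the entropy into three pieces: $\E[F\log F] = \Ent(F)$, a cross term $\E[Z\log F]$, and the remainder $\E[F\,\phi(u)]$ with $\phi(u) = (1+u)\log(1+u)$.

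The cross term vanishes by parity. $F$ is even, so $\log F$ is even, while $Z$ is odd because $T_\alpha$ commutes with $x\mapsto -x$. Hence $Z\log F$ is an odd function and has mean zero under the uniform measure. This requires $F>0$. For small $\lambda$ we have $F = 1 + Y$ with $\|Y\|_\infty \le \sum_{S\neq\emptyset}\lambda^{|S|/2}|\hat f_0(S)|$, which is small, so $F$ is bounded away from $0$. Alternatively, nonnegativity of $f_0$ already gives $F\ge 0$, and strict positivity follows when $\alpha>0$.

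For the remainder, Taylor-expand $\phi(u) = \frac{1}{\ln 2}\bigl(u + \frac{u^2}{2} - \frac{u^3}{6} + \frac{u^4}{12} - \cdots\bigr)$ in bits. The linear term gives $\E[F u] = \E[Z] = 0$. The quadratic term gives exactly $\frac{1}{2\ln 2}\E[Z^2/F]$. The cubic term $\E[F u^3] = \E[Z^3/F^2]$ is again odd (odd over even), so it vanishes by the same symmetry. This cancellation is why the error starts at fourth order.

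The step I expect to be the main obstacle is controlling the tail uniformly, since the claim is a one-sided inequality with an $O(\cdot)$ constant. I would use the elementary bound
\[
\phi(u) \le \tfrac{1}{\ln 2}\bigl(u + \tfrac{u^2}{2} - \tfrac{u^3}{6} + c\,u^4\bigr),
\]
valid for all $u\ge -1$ with an absolute constant $c$; the function $\frac{1}{\ln 2}(u+\frac{u^2}{2}-\frac{u^3}{6})-\phi(u)$ is $O(u^4)$ near $0$ and grows like $u^3$ at infinity. If the global bound is awkward, I would restrict to $|u|\le 1/2$, using $|Z|\le|F|$-type control from $f\ge 0$: since $f_0\ge|f_1|$ we get $F\ge|Z|$, so $u\in[-1,1]$. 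On $[-1,1]$ such a $c$ clearly exists, because $\phi$ is smooth on $(-1,1]$ and finite at $-1$. Multiplying by $F\ge 0$ and taking expectations then yields the bound with error term $c\,\E[Z^4/F^3]$.
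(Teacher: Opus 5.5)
Your proof is correct. The paper does not prove this lemma; it imports it as Lemma~6.2 of \cite{samorodnitsky2016entropy}, so your proposal supplies a self-contained argument where the paper only has a citation.

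Your argument uses only $f\ge 0$, $\E[f]=1$, the parity of $F$ (even) and $Z$ (odd), and the dominance $|Z|\le F$ of Lemma~\ref{lem:dominance}. It therefore needs neither small $\lambda$ nor $\|f\|_\infty\le M$, and it gives an explicit constant. With $g(u)=(1+u)\ln(1+u)-u-\frac{u^2}{2}+\frac{u^3}{6}$ one has $g(u)/u^4=\sum_{k\ge 4}(-1)^k u^{k-4}/(k(k-1))$ on $(-1,1)$. This is at most $\sum_{k\ge 4}\frac{1}{k(k-1)}=\frac13$ on $[-1,1]$, with equality at $u=-1$. So your $c$ can be taken to be $\frac13$, and the error term is at most $\frac{1}{3\ln 2}\E[Z^4/F^3]$.

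A slightly tidier packaging is to average the integrand over $x$ and $-x$. Since $F$ is even and $Z$ is odd, $\E[(F+Z)\ln(F+Z)]=\E[F\ln F+F\,\psi(Z/F)]$ with $\psi(u)=\frac12[(1+u)\ln(1+u)+(1-u)\ln(1-u)]=\sum_{k\ge 1}\frac{u^{2k}}{2k(2k-1)}$. This removes the cross term $Z\ln F$ and all odd orders in one step, and $|u|\le 1$ bounds the tail by $(\ln 2-\frac12)u^4$.

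One side remark should be dropped. The claim that $\|Y\|_\infty$, or its Fourier bound $\sum_{S\neq\emptyset}\lambda^{|S|/2}|\hat f_0(S)|$, is small fails uniformly in $n$. For example, for the normalized indicator of $\{|\sum_i x_i|\le\sqrt n\}$, which is even, $F(\mathbf 1)$ is close to $0$ once $n\gg 1/\lambda$. So $F$ need not be bounded away from $0$.

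Nothing depends on this. You only need $F>0$ pointwise, which your second justification provides: the noise kernel is strictly positive for $\alpha>0$. In any case, points with $F=0$ also have $Z=0$ and contribute nothing. Likewise, dominance gives $u\in[-1,1]$ rather than $|u|\le\frac12$, but your quartic bound on all of $[-1,1]$ is exactly what is needed.
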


It is known that $\Ent(F) = O(\lambda^2)$ (Lemma 5.4 in \cite{samorodnitsky2016entropy}).

We also rely on a crucial dominance property.
\begin{lemma}[Dominance]\label{lem:dominance}
$F \geq 0$ and $|Z| \leq F$.
\end{lemma}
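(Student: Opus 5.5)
The plan is to prove the Dominance lemma pointwise from the definitions $F=T_\alpha f_0$ and $Z=T_\alpha f_1$. Nonnegativity of $F$ should be immediate; the real content is $|Z|\le F$, which reduces to a comparison between the noise kernel at a point $y$ and at its antipode $-y$.

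Nonnegativity first. Since $f\ge 0$, we have $f(-x)\ge 0$, so $f_0=\tfrac12(f(x)+f(-x))\ge 0$. The noise operator $T_\alpha$ averages against the probability kernel $K_\alpha(x,y)=\alpha^{d(x,y)}(1-\alpha)^{n-d(x,y)}$, which is nonnegative. Hence $F=T_\alpha f_0\ge 0$.

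For $|Z|\le F$, the key step is to expand everything through the kernel:
\[
F(x)\pm Z(x)=\sum_y K_\alpha(x,y)\,\tfrac12\bigl(f(y)+f(-y)\pm f(y)\mp f(-y)\bigr).
\]
With the $+$ sign this equals $\sum_y K_\alpha(x,y)f(y)=T_\alpha f(x)\ge 0$. With the $-$ sign it equals $\sum_y K_\alpha(x,y)f(-y)=\sum_y K_\alpha(x,-y)f(y)$. The second expression is $(T_{1-\alpha}f)(x)$, because flipping every coordinate turns crossover $\alpha$ into $1-\alpha$.

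Both $K_\alpha(x,\cdot)$ and $K_\alpha(x,-\cdot)$ are nonnegative weights. Applied to $f\ge 0$, they give $F+Z\ge 0$ and $F-Z\ge 0$, i.e.\ $-F\le Z\le F$, so $|Z|\le F$.

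The only point to check carefully is that $\alpha\le 1/2$ causes no issue when passing to crossover $1-\alpha$. Nothing beyond nonnegativity of the kernel is used, so both inequalities hold pointwise for every $x$, and no Fourier estimates are needed.
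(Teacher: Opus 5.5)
Your proof is correct and is essentially the paper's argument: the paper notes that $f_0 \pm f_1 \ge 0$ pointwise, so $|f_1| \le f_0$, and then uses positivity of $T_\alpha$ to get $|Z| = |T_\alpha f_1| \le T_\alpha |f_1| \le T_\alpha f_0 = F$; this is the same as your observation that $F \pm Z = T_\alpha(f_0 \pm f_1)$ averages nonnegative values against a nonnegative kernel. Identifying $F - Z$ with $T_{1-\alpha} f$ is a harmless aside, since only nonnegativity of the kernel is used and no condition on $\alpha$ is needed.
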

\begin{proof}
Note that by definition of the even and odd parts of $f$, we have
\begin{align*}
f_0(x)+f_1(x) &= 2f(x)\\
f_0(x)-f_1(x) &= 2f(-x)
\end{align*}
Since $f \geq 0$, we have $|f_1(x)| \leq f_0(x)$. The noise operator $T_\alpha$ is a positive operator, preserving non-negativity and dominance. Thus $|Z| = |T_\alpha f_1| \leq T_\alpha |f_1| \leq T_\alpha f_0 = F$.
\end{proof}

\paragraph{Hypercontractivity and Moment Bounds}

We rely on the Bonami-Beckner Hypercontractivity Theorem. A standard application is for homogeneous polynomials.

\begin{corollary}[Hypercontractivity for Homogeneous Polynomials]\label{cor:hyper_homog}
Let $h_k$ be a homogeneous polynomial of degree $k$. For $q \geq 2$,
$$ \|h_k\|_q \leq (\sqrt{q-1})^k \|h_k\|_2. $$
\end{corollary}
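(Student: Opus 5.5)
We deduce \Cref{cor:hyper_homog} from the Bonami--Beckner hypercontractivity theorem in its standard operator form. That theorem states that for $1 < p \le q$ and $0 \le \rho' \le \sqrt{(p-1)/(q-1)}$, the noise operator with correlation $\rho'$ (the Fourier multiplier $\hat g(S) \mapsto \rho'^{|S|}\hat g(S)$) is a contraction from $L^p$ to $L^q$. We apply it with $p = 2$, $q \ge 2$, and the extremal value $\rho' = 1/\sqrt{q-1}$.

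Let $T_{\rho'}$ denote this multiplier. Since $h_k$ is homogeneous of degree $k$, its Fourier support lies entirely on level $k$, so $h_k$ is an eigenvector: $T_{\rho'} h_k = \rho'^{k} h_k$. We therefore introduce the auxiliary function $g = \rho'^{-k} h_k = (\sqrt{q-1})^k h_k$. It satisfies $T_{\rho'} g = h_k$. Hypercontractivity then gives
\[
\|h_k\|_q = \|T_{\rho'} g\|_q \le \|g\|_2 = (\sqrt{q-1})^k \|h_k\|_2,
\]
which is exactly the claimed inequality. The only point to verify is that $\rho' = 1/\sqrt{q-1}$ lies in the admissible range $[0, \sqrt{(p-1)/(q-1)}]$ for $p = 2$. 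This holds with equality, and for $q = 2$ the statement is trivial since the right-hand side is just $\|h_k\|_2$.

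The argument has no real obstacle; it is the standard inversion trick. The care needed is in the direction of the statement. The eigenvector step $T_{\rho'}h = \rho'^{k}h$ uses only that the spectrum of $h$ is concentrated on level $k$. If one wanted the same bound for functions supported on levels $\le k$, one would expand in levels and apply the bound level by level, or apply hypercontractivity to $\sum_{j\le k}\rho'^{-j}h_j$, using $\rho'^{-j} \le \rho'^{-k}$ in $L^2$ by orthogonality. This also explains why the analogous claim for functions supported on levels $\ge k$ (the erroneous version flagged earlier in the section) fails: there the factors $\rho'^{-j}$ are unbounded in $j$, and the $L^2$ comparison goes the wrong way.
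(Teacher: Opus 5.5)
Your argument is correct and is exactly the standard deduction the paper relies on. The paper gives no separate proof and simply invokes the Bonami--Beckner theorem; your choice $p=2$, $\rho' = 1/\sqrt{q-1}$, combined with the eigenvector identity $T_{\rho'}h_k = \rho'^{k}h_k$, is how that corollary follows. Your reading of ``homogeneous of degree $k$'' as ``Fourier support on level $k$'' matches how the paper applies the corollary, to the level components $Y_2$ and $Z_k$. Your aside contrasting levels $\le k$ with levels $\ge k$ also correctly explains why the version the model originally asserted was false.
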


We must be careful when applying this to functions supported on multiple levels. We use the Minkowski inequality combined with Corollary \ref{cor:hyper_homog} to establish rigorous moment bounds.

\begin{lemma}[Moment Bounds]\label{lem:moment_bounds}
Assuming $f$ is bounded by $M$, the following bounds hold as $\lambda \to 0$:
\begin{enumerate}
    \item $\E[Y^2] = O(\lambda^2)$.
    \item $\E[|Y|^3] = O(\lambda^3)$.
    \item $\E[Z^2] = \lambda L_1(f) + O(\lambda^3)$.
    \item $\E[Z^4] = O(\lambda^2)$.
    \item $\E[Z^2Y] = O(\lambda^2)$.
\end{enumerate}
\end{lemma}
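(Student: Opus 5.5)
The plan is a level-by-level Fourier decomposition. Each level is controlled in $L^2$ by Parseval and in $L^q$ by Corollary~\ref{cor:hyper_homog}, and the levels are recombined with Minkowski's inequality. The recombined sum is a geometric series in $\lambda$ that converges uniformly in $n$ once $\lambda$ is small.

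\textbf{Step 1: uniform bound on level weights.} For $k\ge 0$ write $W_k(g)=\sum_{|S|=k}\hat g(S)^2$. Since $0\le f\le M$ and $\E[f]=1$, we have $\E[f^2]\le M\,\E[f]=M$. The even and odd parts satisfy $\E[f_0^2]+\E[f_1^2]=\E[f^2]$, because they are orthogonal. Hence $W_k(f_0),W_k(f_1)\le M$ for every $k$, and $\sum_k W_k(f_i)\le M$.

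\textbf{Step 2: the $L^2$ statements, items 1 and 3.} $Y$ has only even levels $k\ge 2$, with $\hat Y(S)=\lambda^{|S|/2}\hat f_0(S)$. By Parseval,
\[
\E[Y^2]=\sum_{k\ge 2,\ k \text{ even}}\lambda^{k}W_k(f_0)\le \lambda^2\sum_k W_k(f_0)\le M\lambda^2 ,
\]
which is item 1. Similarly, $Z$ has only odd levels, so
\[
\E[Z^2]=\lambda W_1(f_1)+\sum_{k\ge 3,\ k \text{ odd}}\lambda^k W_k(f_1).
\]
The first term is $\lambda L_1(f)$ and the tail is at most $M\lambda^3$, which is item 3.

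\textbf{Step 3: the higher moments, items 2 and 4.} Let $Y_k$ denote the degree-$k$ homogeneous part of $Y$. Minkowski's inequality and Corollary~\ref{cor:hyper_homog} with $q=3$ give
\[
\|Y\|_3\le\sum_{k\ge 2}\|Y_k\|_3\le\sum_{k\ge2}(\sqrt2)^k\lambda^{k/2}\sqrt{W_k(f_0)}\le \sqrt M\sum_{k\ge 2}(2\lambda)^{k/2}=O(\lambda)
\]
for $\lambda\le 1/4$, say. Hence $\E|Y|^3=O(\lambda^3)$, which is item 2. The same argument with $q=4$ applied to the odd levels of $Z$ gives
\[
\|Z\|_4\le\sqrt M\sum_{k\ge 1,\ k \text{ odd}}(3\lambda)^{k/2}=O(\lambda^{1/2}),
\]
so $\E[Z^4]=O(\lambda^2)$, which is item 4.

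\textbf{Step 4: the cross term, item 5.} By H\"older's inequality with exponents $(2,2)$,
\[
|\E[Z^2Y]|\le\|Z^2\|_2\|Y\|_2=\|Z\|_4^2\,\|Y\|_2=O(\lambda)\cdot O(\lambda)=O(\lambda^2).
\]

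\textbf{Main obstacle.} The delicate point is exactly the one flagged earlier in the paper. Hypercontractivity may only be applied to each homogeneous level separately, not to the whole tail (the false ``levels $\ge k$'' version). The per-level factors $(q-1)^{k/2}$ must then be absorbed by $\lambda^{k/2}$. This is what forces a smallness condition such as $\lambda<1/(q-1)$, and it is why the bounds are stated only as $\lambda\to 0$. I also need to check that every constant depends only on $M$ and not on $n$. This holds because the only $n$-dependent quantities are the level weights, and Step 1 bounds their sum by $M$.
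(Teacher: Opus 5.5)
Your proposal is correct and follows essentially the same route as the paper: Parseval for items 1 and 3, per-level Bonami hypercontractivity recombined via Minkowski into a geometric series in $\lambda$ for items 2 and 4, and Cauchy--Schwarz (your H\"older with exponents $(2,2)$) for item 5. The differences are cosmetic. You bound the level weights via $\E[f^2]\le M\,\E[f]=M$, which gives $\sqrt M$ rather than the paper's $M$, and you treat all levels of $Y$ in a single Minkowski sum instead of splitting off $Y_2$.
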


\begin{proof}
We assume $\lambda$ is small enough (e.g., $3\lambda < 1$). Since $f$ is bounded, its $L_2$ norm is also bounded ($\|f\|_2 \leq M$).
\medskip

\noindent\textbf{Part 1: $\E[Y^2] = O(\lambda^2)$.}
$Y$ is supported on even levels $\geq 2$.
\begin{align*}
\E[Y^2] &= \sum_{k\geq 2, \text{even}} \lambda^k \sum_{|S|=k} \hat{f_0}(S)^2 \\
&\leq \lambda^2 \sum_{S \neq \emptyset} \hat{f_0}(S)^2 = \lambda^2 \text{Var}(f_0) = O(\lambda^2).
\end{align*}
\medskip

\noindent
\textbf{Part 2: $\E[|Y|^3] = O(\lambda^3)$.}
We analyze $Y=Y_2+Y_{\geq 4}$ using the $L_3$ norm.
\begin{enumerate}
 \item Bounding $\|Y_2\|_3$. Use $q=3, k=2$. The constant is 2.
$$ \|Y_2\|_3 \leq 2\|Y_2\|_2. $$
$\|Y_2\|_2^2 = O(\lambda^2)$. Thus, $\|Y_2\|_3 = O(\lambda)$.

\item Bounding $\|Y_{\geq 4}\|_3$.
$$ \|Y_{\geq 4}\|3 \leq \sum{k\geq 4, \text{even}} \|Y_k\|3 \leq \sum{k\geq 4, \text{even}} M (\sqrt{2\lambda})^k = O(\lambda^2). $$

\item Combining the bounds. $\|Y\|_3 = O(\lambda)$. Therefore, $\E[|Y|^3] = O(\lambda^3)$.
\end{enumerate}
\medskip

\noindent
\textbf{Part 3: $\E[Z^2] = \lambda L_1(f) + O(\lambda^3)$.}
$Z$ is supported on odd levels $\geq 1$.
\begin{align*}
\E[Z^2] &= \lambda L_1(f_1) + \sum_{k\geq 3, \text{odd}} \lambda^k \sum_{|S|=k} \hat{f_1}(S)^2 \\
&= \lambda L_1(f) + O(\lambda^3).
\end{align*}
\medskip

\noindent
\textbf{Part 4: $\E[Z^4] = O(\lambda^2)$.}
We decompose $Z = Z_1 + Z_{\geq 3}$. We use the Minkowski inequality for the $L_4$ norm: $\|Z\|4 \leq \|Z_1\|_4 + \|Z{\geq 3}\|_4$.
\begin{enumerate}
\item Bounding $\|Z_1\|_4$. $Z_1$ is degree 1. Use Corollary \ref{cor:hyper_homog} ($q=4, k=1$).
$$ \|Z_1\|_4 \leq \sqrt{3} \|Z_1\|_2. $$
$\|Z_1\|_2^2 = O(\lambda)$. Thus, $\|Z_1\|_4 = O(\lambda^{1/2})$.

\item Bounding $\|Z_{\geq 3}\|_4$. Use Minkowski inequality on $Z_{\geq 3} = \sum_{k\geq 3, \text{odd}} Z_k$.
$$ \|Z_{\geq 3}\|4 \leq \sum{k\geq 3, \text{odd}} \|Z_k\|_4. $$
Applying hypercontractivity to each $Z_k$:
$$ \|Z_k\|_4 \leq (\sqrt{3})^k \|Z_k\|_2. $$
Since $\|\hat{f}_k\|_2 \leq M$, $\|Z_k\|_2 \leq M \lambda^{k/2}$.
$$ \|Z_k\|_4 \leq M (\sqrt{3\lambda})^k. $$
We sum this geometric series:
$$ \|Z_{\geq 3}\|4 \leq M \sum{k\geq 3, \text{odd}} (\sqrt{3\lambda})^k = M \frac{(\sqrt{3\lambda})^3}{1-3\lambda} = O(\lambda^{3/2}). $$

\item Combining the bounds.
$$ \|Z\|_4 \leq O(\lambda^{1/2}) + O(\lambda^{3/2}) = O(\lambda^{1/2}). $$
Therefore, $\E[Z^4] = \|Z\|_4^4 = O(\lambda^2)$.
\end{enumerate}
\medskip

\noindent\textbf{Part 5: $\E[Z^2Y] = O(\lambda^2)$.}
We use the Cauchy-Schwarz inequality:
$$ |\E[Z^2Y]| \leq \sqrt{\E[Z^4]\E[Y^2]}. $$
Using Part 4 and Part 1:
$$ |\E[Z^2Y]| \leq \sqrt{O(\lambda^2) \cdot O(\lambda^2)} = O(\lambda^2). $$
\end{proof}

\paragraph{Higher Order Term}

We now bound the higher order term in the entropy decomposition (Lemma \ref{lem:entropy_decomp}).

\begin{lemma}[Higher Order Term]\label{lem:higher_order}
$\E[Z^4/F^3] = O(\lambda^2)$.
\end{lemma}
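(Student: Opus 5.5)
The plan is to control the ratio $Z^4/F^3$ through the dominance relation of Lemma~\ref{lem:dominance} together with a lower bound on $F$ that holds on most of the cube. The two contributions then correspond to the bulk, where $F$ is bounded away from zero, and a small exceptional set. By Lemma~\ref{lem:dominance}, $|Z|\le F$, so pointwise
\[
\frac{Z^4}{F^3}\le \frac{Z^4}{F^3}\mathbf{1}_{\{F\ge 1/2\}} + |Z|\,\mathbf{1}_{\{F<1/2\}} .
\]
The first term is at most $8Z^4$. Part 4 of Lemma~\ref{lem:moment_bounds} gives $\E[Z^4]=O(\lambda^2)$, so the bulk contributes $O(\lambda^2)$ with no further work.

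For the exceptional set, note that $F<1/2$ means $Y=F-1<-1/2$, so $\{F<1/2\}\subseteq\{|Y|>1/2\}$. By H\"older with exponents $(4,4/3)$,
\[
\E\big[|Z|\mathbf{1}_{\{|Y|>1/2\}}\big]\le \|Z\|_4\,\Pr[|Y|>1/2]^{3/4}.
\]
Part 4 of Lemma~\ref{lem:moment_bounds} gives $\|Z\|_4=O(\lambda^{1/2})$. Markov's inequality applied to $|Y|^3$, with Part 2 of Lemma~\ref{lem:moment_bounds}, gives $\Pr[|Y|>1/2]\le 8\E|Y|^3=O(\lambda^3)$. The product is $O(\lambda^{1/2}\cdot\lambda^{9/4})=O(\lambda^{11/4})$, which is $O(\lambda^2)$. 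Summing the bulk and tail contributions then proves the lemma.

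The main obstacle is the tail estimate: $Y$ must be shown small in probability at a rate that beats the $\lambda^{1/2}$ size of $Z$. If the third-moment Markov bound turns out insufficient, for instance because the constants in Part 2 carry hidden $M$-dependence, I would instead use a higher moment $\E|Y|^q$. This moment can be bounded exactly as in Part 2, via Minkowski's inequality over the even levels together with Corollary~\ref{cor:hyper_homog}, giving $\|Y\|_q=O_q(\lambda)$ for $\lambda<1/(q-1)$. Since $\Pr[|Y|>1/2]=O(\lambda^q)$ for any fixed $q$, there is ample room.

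One small point must also be checked. The pointwise bound on $\{F<1/2\}$ uses $Z^4/F^3\le |Z|$, which follows from $|Z|\le F$. Where $F=0$ we also have $Z=0$, and we interpret the term as $0$. All constants are allowed to depend on $M$, consistent with Lemma~\ref{lem:moment_bounds}.
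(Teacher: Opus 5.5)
Your proof is correct and follows the paper's argument: split at $F=1/2$, bound the bulk by $8\E[Z^4]=O(\lambda^2)$ via Part 4, and use the dominance $|Z|\le F$ on the small-$F$ set. The only difference is in the tail term. The paper uses $Z^4/F^3\le F\le 1/2$ and Chebyshev with $\E[Y^2]=O(\lambda^2)$ (Part 1), while your H\"older/third-moment Markov route gives the stronger $O(\lambda^{11/4})$ but needs Parts 2 and 4 instead. Since $|Z|\le F<1/2$ on that set, you could have stopped at $\tfrac12\Pr[|Y|>1/2]$ and applied Chebyshev.
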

\begin{proof}
We use a constant threshold $\Delta = 1/2$ to split the expectation.
$$ \E\left[\frac{Z^4}{F^3}\right] = \E\left[\frac{Z^4}{F^3} \mathbb{I}{F \leq 1/2}\right] + \E\left[\frac{Z^4}{F^3} \mathbb{I}{F > 1/2}\right]. $$

\textbf{Term 2 (F is large):}
When $F > 1/2$, $1/F^3 < 8$.
$$ \E\left[\frac{Z^4}{F^3} \mathbb{I}_{F > 1/2}\right] \leq 8 \E[Z^4]. $$
By Lemma \ref{lem:moment_bounds}(4), $\E[Z^4] = O(\lambda^2)$. So Term 2 is $O(\lambda^2)$.

\textbf{Term 1 (F is small):}
By Lemma \ref{lem:dominance}, $|Z| \leq F$. So $Z^4/F^3 \leq F$.
$$ \E\left[\frac{Z^4}{F^3} \mathbb{I}{F \leq 1/2}\right] \leq \E[F \mathbb{I}{F \leq 1/2}]. $$
Since $F \leq 1/2$ on this indicator,
$$ \E[F \mathbb{I}_{F \leq 1/2}] \leq \frac{1}{2} P(F \leq 1/2). $$
$P(F \leq 1/2) = P(Y \leq -1/2)$.
By Chebyshev's inequality and Lemma \ref{lem:moment_bounds}(1):
$$ P(|Y| \geq 1/2) \leq 4 \E[Y^2] = O(\lambda^2). $$
So Term 1 is $O(\lambda^2)$.
\end{proof}

\paragraph{Optimal Asymptotic Entropy Bound}

We now prove the main technical result of this section, achieving the optimal $O(\lambda^2)$ error bound by employing a direct Taylor expansion.

\begin{theorem}[Optimal Asymptotic Entropy Bound]\label{thm:optimal_entropy_intro}
For any bounded nonnegative non-zero function $f$ with $\E[f] = 1$:
$$ \Ent(T_\alpha f) \leq \left(\frac{1}{2 \ln 2}L_1(f)\right) \cdot \lambda + O(\lambda^2). $$
\end{theorem}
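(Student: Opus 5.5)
We would start from the Entropy Decomposition (Lemma~\ref{lem:entropy_decomp}),
\[
\Ent(T_\alpha f) \le \Ent(F) + \frac{1}{2\ln 2}\E\left[\frac{Z^2}{F}\right] + O\left(\E\left[\frac{Z^4}{F^3}\right]\right),
\]
and control each of the three terms separately. The first term is $O(\lambda^2)$ by the cited Lemma~5.4 of \cite{samorodnitsky2016entropy}. The third term is $O(\lambda^2)$ by Lemma~\ref{lem:higher_order}. So the whole task reduces to showing
\[
\E\left[\frac{Z^2}{F}\right] = \lambda L_1(f) + O(\lambda^2).
\]

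For this middle term we would write $F = 1+Y$ and Taylor-expand $1/F$ to first order with an explicit remainder: $\frac{1}{1+Y} = 1 - Y + \frac{Y^2}{1+Y}$. This gives
\[
\E\left[\frac{Z^2}{F}\right] = \E[Z^2] - \E[Z^2 Y] + \E\left[\frac{Z^2 Y^2}{F}\right].
\]
By Lemma~\ref{lem:moment_bounds}(3), the first piece is $\lambda L_1(f)+O(\lambda^3)$. By Lemma~\ref{lem:moment_bounds}(5), the second piece is $O(\lambda^2)$. It remains to show the remainder $\E[Z^2Y^2/F]$ is $O(\lambda^2)$.

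The remainder is the main obstacle, because $F$ can be small and the expansion is not uniform. We would split on the event $\{F>1/2\}$, following the approach of Lemma~\ref{lem:higher_order}.
\begin{itemize}
\item On $\{F>1/2\}$, we bound $\E[Z^2Y^2/F]\le 2\E[Z^2Y^2]\le 2\sqrt{\E[Z^4]\E[Y^4]}$. Here $\E[Z^4]=O(\lambda^2)$ by Lemma~\ref{lem:moment_bounds}(4). For $Y$ we would bound $\|Y\|_4=O(\lambda)$ exactly as in Part~2 of Lemma~\ref{lem:moment_bounds}: use hypercontractivity on the level-$2$ part and a geometric sum over levels $\ge 4$. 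This gives $\E[Y^4]=O(\lambda^4)$, so this piece is $O(\lambda^3)$.
\item On $\{F\le 1/2\}$, we use the Dominance lemma $|Z|\le F$ to get $Z^2Y^2/F\le F\,Y^2$. Since $|Y|\le 1$ here (because $0\le F\le 1/2$), this is at most $\tfrac12$. Chebyshev with $\E[Y^2]=O(\lambda^2)$ then gives $P(F\le 1/2)\le P(|Y|\ge 1/2) = O(\lambda^2)$, so this piece is $O(\lambda^2)$.
\end{itemize}

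Combining everything, $\E[Z^2/F]=\lambda L_1(f)+O(\lambda^2)$. Multiplying by $\frac{1}{2\ln 2}$ and adding the two $O(\lambda^2)$ terms yields the claimed bound. The implied constants depend only on $M$, which is consistent with how Lemma~\ref{lem:moment_bounds} is stated.

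The step we expect to need the most care is the remainder on $\{F\le 1/2\}$. The crude Chebyshev bound only gives order $\lambda^2$ there. If a sharper constant were ever needed, one would replace Chebyshev with a higher-moment Markov bound using $\|Y\|_4$, which would give $O(\lambda^4)$.
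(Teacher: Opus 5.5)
Your proposal is correct and follows the paper's proof: you use the same entropy decomposition, the same exact-remainder expansion $\frac{1}{1+Y} = 1 - Y + \frac{Y^2}{1+Y}$, and the same appeal to Lemma~\ref{lem:moment_bounds}(3),(5) for the main term. The only difference is the remainder: you split on $\{F>1/2\}$ and use fourth moments plus Chebyshev, whereas the paper applies dominance $|Z|\le F$ globally to get $\E[Z^2Y^2/F] \le \E[FY^2] = \E[Y^2]+\E[Y^3] = O(\lambda^2)$ in one line, so neither the case split nor the estimate $\|Y\|_4 = O(\lambda)$ is needed.
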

The proof, detailed below, relies on a direct Taylor expansion approach combined with rigorous moment bounds derived using hypercontractivity. Crucially, we employ a careful application of the Minkowski inequality over the Fourier decomposition to establish the necessary higher-moment bounds. The optimality of this bound is demonstrated by the fact that the $O(\lambda^2)$ error rate is tight for Boolean dictatorship functions.

\begin{proof}
We start from the entropy decomposition (Lemma \ref{lem:entropy_decomp}):
$$ \Ent(T_\alpha f) \leq \Ent(F) + \frac{1}{2 \ln 2} \E\left[\frac{Z^2}{F}\right] + O\left(\E\left[\frac{Z^4}{F^3}\right]\right). $$
We know $\Ent(F) = O(\lambda^2)$ and by Lemma \ref{lem:higher_order}, the error term is $O(\lambda^2)$. We analyze $B = \E[Z^2/F]$.

We use the Taylor expansion of $1/F = 1/(1+Y)$ around $Y=0$ with the exact remainder:
$$ \frac{1}{1+Y} = 1 - Y + \frac{Y^2}{1+Y}. $$
This holds when $F = 1+Y > 0$. Let $S_+ = \{x : F(x) > 0\}$.

$$ B = \E\left[\frac{Z^2}{F} \mathbb{I}{S+}\right] = \E\left[Z^2 \left(1 - Y + \frac{Y^2}{F}\right) \mathbb{I}{S+}\right]. $$

\textbf{Term 1 (Expansion):} $E_T = \E[Z^2(1-Y)\mathbb{I}{S+}]$.
By Lemma \ref{lem:dominance}, if $F(x)=0$, then $Z(x)=0$. The indicator is redundant.
$$ E_T = \E[Z^2] - \E[Z^2Y]. $$
By Lemma \ref{lem:moment_bounds}(3) and (5):
\begin{align*}
E_T &= (\lambda L_1(f) + O(\lambda^3)) - O(\lambda^2) = \lambda L_1(f) + O(\lambda^2).
\end{align*}

\textbf{Term 2 (Remainder):} $R_T = \E\left[\frac{Z^2Y^2}{F} \mathbb{I}{S+}\right]$.
Using dominance $|Z| \leq F$, $Z^2/F \leq F$.
$$ R_T \leq \E[F Y^2 \mathbb{I}{S+}] \leq \E[F Y^2]. $$
$$ \E[F Y^2] = \E[(1+Y)Y^2] = \E[Y^2] + \E[Y^3]. $$
By Lemma \ref{lem:moment_bounds}(1) and (2), $R_T = O(\lambda^2) + O(\lambda^3) = O(\lambda^2)$.

Combining the terms for $B$:
$$ B = \lambda L_1(f) + O(\lambda^2). $$

Substituting this back into the entropy decomposition:
\begin{align*}
\Ent(T_\alpha f) &\leq O(\lambda^2) + \frac{1}{2 \ln 2}(\lambda L_1(f) + O(\lambda^2)) + O(\lambda^2) \\
&= \frac{L_1(f)}{2 \ln 2} \lambda + O(\lambda^2).
\end{align*}
We next show that the $O(\lambda^2)$ error term is asymptotically tight. We demonstrate this by analyzing the dictatorship functions which are believed to maximize the entropy (and mutual information).

Consider the Boolean dictatorship function $f(x)=x_1$. The mutual information is exactly the channel capacity, $I(f(X); Y) = 1 - H(\alpha)$, where $H(\alpha)$ is the binary entropy function (in bits). We analyze the Taylor expansion of the capacity around $\alpha=1/2$ (where $\rho=0$).

Recall $\rho = 1-2\alpha$ and $\lambda = \rho^2$. The expansion of the binary entropy function $H(\alpha)$ around $\alpha=1/2$ yields
$$
H(\alpha) = 1 - \frac{1}{2\ln 2}\rho^2 - \frac{1}{12\ln 2}\rho^4 + O(\rho^6).
$$
Therefore, the capacity is:
$$
1 - H(\alpha) = \frac{1}{2\ln 2}\lambda + \frac{1}{12\ln 2}\lambda^2 + O(\lambda^3).
$$
For the dictatorship function, $L_1(f)=1$. Thus, the expansion matches the form in the theorem statement:
$$
I(f(X); Y) = \left(\frac{L_1(f)}{2\ln 2}\right)\lambda + \Theta(\lambda^2).
$$
Since the expansion for the maximizing function includes a $\Theta(\lambda^2)$ term, the general upper bound established in the theorem statement cannot be asymptotically improved beyond $O(\lambda^2)$.
\end{proof}

\paragraph{Implications and Applications}

The optimal error bound established in the previous section has significant consequences for the structural properties of highly informative functions and the range of validity of the Courtade-Kumar conjecture.

\paragraph{Linear Fourier Concentration}

The $O(\lambda^2)$ error bound allows us to derive the strongest possible Fourier concentration result within this asymptotic framework.

\begin{theorem}[Linear Fourier Concentration]\label{thm:linear_concentration_intro}
Let $f$ be a Boolean function satisfying $c \leq \E f \leq 1/2$ (for some absolute constant $c > 0$). If $I(f(X); Y) \geq 1 - H(\alpha)$, then
$$ \sum_{|S|\geq 2} \hat{f}(S)^2 = O(\lambda). $$
\end{theorem}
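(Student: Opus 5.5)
The plan is to sandwich the mutual information between the upper bound of Theorem~\ref{thm:optimal_entropy_intro} and the lower bound given by the hypothesis. From this sandwich I will extract a linear-in-$\lambda$ bound on the level-$\ge 2$ weight.

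\textbf{Step 1: rewrite $I(f(X);Y)$ as an entropy.} Let $b$ be the Boolean function and $g=\mathbf{1}_{b=1}/\E[\mathbf{1}_{b=1}]$, which is nonnegative, has mean one, and satisfies $\|g\|_\infty\le 1/c$. The standard identity behind Samorodnitsky's setup is
\[
I(b(X);Y)=p\,\Ent(T_\alpha g)+(1-p)\,\Ent(T_\alpha g'),
\]
where $p=\Pr[b=1]\in[c,1/2]$ and $g'$ is the normalized indicator of $\{b=-1\}$. Both $g$ and $g'$ are bounded by $1/c$, so the $O(\lambda^2)$ constants in Theorem~\ref{thm:optimal_entropy_intro} are uniform in $b$.

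\textbf{Step 2: upper bound via level-1 weight.} Apply Theorem~\ref{thm:optimal_entropy_intro} to $g$ and to $g'$. Their level-1 weights are $L_1(g)=\sum_i\hat b(\{i\})^2/(4p^2)$ and $L_1(g')=\sum_i \hat b(\{i\})^2/(4(1-p)^2)$. Combining the two bounds gives
\[
I\le \frac{\lambda}{2\ln 2}\cdot\frac{W_1}{4p(1-p)}+O(\lambda^2),
\qquad W_1=\sum_i \hat b(\{i\})^2 .
\]

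\textbf{Step 3: lower bound from the hypothesis.} Since $1-H(\alpha)=\frac{\lambda}{2\ln 2}+\Theta(\lambda^2)$, the hypothesis yields
\[
W_1\ge 4p(1-p)-O(\lambda).
\]
Parseval gives $\sum_{S\ne\emptyset}\hat b(S)^2=1-\mu^2=4p(1-p)$, where $\mu=2p-1$. Therefore $\sum_{|S|\ge2}\hat b(S)^2=4p(1-p)-W_1=O(\lambda)$. This is the claim with $\hat f=\hat b$. For the $0/1$ indicator $f=(1+b)/2$ the Fourier coefficients are rescaled by a constant factor, so the same bound holds.

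\textbf{Main obstacle.} The delicate point is Step 1 together with the uniformity of constants. I will have to check that the decomposition of mutual information as a $p$-weighted sum of entropies of normalized noisy indicators is exact in the normalization used by Lemma~\ref{lem:entropy_decomp}. I also have to confirm that every hidden constant in Lemmas~\ref{lem:moment_bounds} and~\ref{lem:higher_order} depends only on $M\le 1/c$. If the two-sided decomposition proves awkward, my fallback is to write $I=H(b)-H(b\mid Y)$ directly and Taylor expand the conditional entropy in $T_\alpha b$ around $\mu$. That expansion has leading term $\frac{\E[(T_\alpha b-\mu)^2]}{2\ln2\,(1-\mu^2)}$, and the error terms are controlled by the same hypercontractive moment bounds. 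The conclusion then follows from $\E[(T_\alpha b-\mu)^2]=\lambda W_1+O(\lambda^2)$.
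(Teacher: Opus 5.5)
Your proposal is correct and follows essentially the same route as the paper. The paper writes $I(f(X);Y)=\Ent(T_\alpha f)+\Ent(T_\alpha(1-f))$ for the $0/1$ indicator and normalizes via $\Ent(h)=\E[h]\,\Ent(h/\E[h])$, which is exactly your $p$-weighted decomposition; it then applies Theorem~\ref{thm:optimal_entropy_intro} to both pieces to get $I\le \frac{L_1(f)}{p(1-p)}\cdot\frac{\lambda}{2\ln 2}+O(\lambda^2)$ and concludes by comparing with the expansion of $1-H(\alpha)$ and using Parseval. Working with the $\pm1$ function $b$ rather than the indicator only rescales the Fourier coefficients, and your remark that all hidden constants depend only on the sup-norm bound $1/c$ spells out what the paper states informally.
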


\begin{proof}
Let $p = \E f$. We utilize the connection: $I(f(X); Y) = \Ent(T_\alpha f) + \Ent(T_\alpha(1-f))$.

We apply Theorem \ref{thm:optimal_entropy_intro} by normalizing $f$. Using $\Ent(h) = \E[h] \cdot \Ent(h/\E[h])$, we have $\Ent(T_\alpha f) = p \cdot \Ent(T_\alpha(f/p))$. Applying the theorem to $f/p$:
\begin{align*}
\Ent(T_\alpha f) &\leq p \left(\frac{L_1(f/p)}{2 \ln 2} \lambda + O(\lambda^2)\right) \\
&= \frac{L_1(f)}{p(2 \ln 2)}\lambda + O(\lambda^2).
\end{align*}
The $O(\lambda^2)$ constant remains controlled because $p$ is bounded away from 0 by $c$. Applying the same logic to $1-f$, and noting $L_1(f) = L_1(1-f)$:
\begin{align*}
I(f(X); Y) &\leq \left(\frac{1}{p} + \frac{1}{1-p}\right) \frac{L_1(f)}{2 \ln 2} \lambda + O(\lambda^2) \\
&= \frac{1}{p(1-p)} \frac{L_1(f)}{2 \ln 2} \lambda + O(\lambda^2).
\end{align*}

We use the Taylor expansion of the channel capacity: $1 - H(\alpha) = \frac{\lambda}{2 \ln 2} - O(\lambda^2)$. Combining this with the assumption $I(f(X); Y) \geq 1 - H(\alpha)$:
$$ \frac{\lambda}{2 \ln 2} + O(\lambda^2) \leq \frac{1}{p(1-p)} \frac{L_1(f)}{2 \ln 2} \lambda + O(\lambda^2). $$

Dividing by $\lambda/(2 \ln 2)$:
$$ 1 + O(\lambda) \leq \frac{L_1(f)}{p(1-p)} + O(\lambda). $$
Rearranging:
$$ p(1-p) - O(\lambda) \leq L_1(f). $$

By Parseval's identity, $\sum_{S\neq\emptyset} \hat{f}(S)^2 = p(1-p)$. Therefore, the Fourier weight on levels 2 and higher is:
$$ \sum_{|S|\geq 2} \hat{f}(S)^2 = p(1-p) - L_1(f) \leq O(\lambda). $$
\end{proof}

This linear concentration is a significant improvement over the $O(\lambda^{1/3})$ bound established in \cite{samorodnitsky2016entropy}.

\paragraph{Extended Range for the Conjecture}

The improved Fourier concentration directly translates to proving the Courtade-Kumar conjecture for a wider range of the noise parameter $\lambda$.

We are now ready to prove Theorem~\ref{thm:extended}. We recall the theorem below.

\begin{theorem}[Main Result - Extended Range]\label{thm:extended_range}
There exists an absolute constant $\delta_{opt} > 0$ such that if the noise parameter $\lambda \leq \delta_{opt}$, the Courtade-Kumar conjecture holds. Furthermore, $\delta_{opt}$ is strictly larger than the threshold established using the $O(\lambda^{4/3})$ error bound in \cite{samorodnitsky2016entropy}.
\end{theorem}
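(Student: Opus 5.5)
The plan follows Samorodnitsky's high-noise argument in \cite{samorodnitsky2016entropy}, with his concentration and entropy estimates replaced by the sharper Theorems~\ref{thm:optimal_entropy_intro} and \ref{thm:linear_concentration_intro}.

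\textbf{Step 1 (reduction to near-balanced, informative functions).} Suppose $b$ violates the conjecture, i.e.\ $I(b(X^n);Y^n) > 1-H(\alpha)$. The cases of very small bias $p=\E f$ are excluded by an elementary entropy bound: by Theorem~\ref{thm:optimal_entropy_intro} and Parseval, $L_1(f)\le p(1-p)$. This already rules out the regime where $p$ is tiny, so we may assume $c\le p\le 1/2$ for an absolute constant $c$. These are exactly the hypotheses of Theorem~\ref{thm:linear_concentration_intro}.

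\textbf{Step 2 (structure of a counterexample).} Theorem~\ref{thm:linear_concentration_intro} gives $\sum_{|S|\ge2}\hat f(S)^2=O(\lambda)$, so $f$ is $O(\lambda)$-close in $L_2$ to a function of Fourier degree at most one. By a Friedgut--Kalai--Naor-type theorem for Boolean functions, $f$ is then $O(\lambda)$-close to a dictator or a constant. The bias assumption excludes constants, so $f$ is close to a dictator $x_i$, or to $1-x_i$, after relabeling.

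\textbf{Step 3 (local optimality near the dictator).} Write $f$ as the dictator plus a perturbation of small weight $\eta=O(\lambda)$. The aim is to show that $I(f(X);Y)\le 1-H(\alpha)-\Omega(\eta\lambda)+O(\eta\lambda^2)$. Samorodnitsky carries out this perturbative comparison with error terms of order $\lambda^{4/3}$. With the $O(\lambda^2)$ remainder of Theorem~\ref{thm:optimal_entropy_intro}, the terms of order $\eta\lambda$ cancel exactly at the dictator. The first-order loss from moving weight off level one, of order $\eta\lambda/(2\ln 2)$, then dominates the remainder as soon as $\lambda$ is below an explicit constant. That constant defines $\delta_{opt}$.

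\textbf{Step 4 and main obstacle.} The last step is to compare $\delta_{opt}$ with Samorodnitsky's threshold. His condition requires $C\lambda^{4/3}$ (and the concentration $O(\lambda^{1/3})$) to be dominated by the linear term. Ours needs only $C'\lambda^2\le c''\lambda$, so for comparable constants our admissible range strictly contains his. The hard step is Step~3. The implicit constants in Lemma~\ref{lem:moment_bounds}, the FKN stability constant, and the $O(\cdot)$ term in the entropy decomposition (Lemma~\ref{lem:entropy_decomp}) must all be made explicit and uniform in $n$. Otherwise the claim that $\delta_{opt}$ is "strictly larger" is only asymptotic in spirit. I would first track the constants from Lemma~\ref{lem:higher_order} and Lemma~\ref{lem:moment_bounds} with $M$ fixed by $p\ge c$, and then verify the resulting inequality numerically.
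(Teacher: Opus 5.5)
\textbf{Step 3 is the gap.} Theorem~\ref{thm:optimal_entropy_intro} has a remainder $C\lambda^2$ that is absolute: its constant depends only on $\|f\|_\infty$, and nothing in its proof makes it shrink as $f$ approaches the dictator. Applying it to $f$ and $1-f$, as in Theorem~\ref{thm:linear_concentration_intro}, gives at best $I(f(X);Y)\le(1-\eta')\frac{\lambda}{2\ln 2}+C\lambda^2$ with $\eta'=\xi/(p(1-p))$. The dictator achieves $1-H(\alpha)=\frac{\lambda}{2\ln 2}+\frac{\lambda^2}{12\ln 2}+O(\lambda^3)$, so this bound excludes $f$ only when $\eta'\gtrsim\lambda$. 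Step~2, however, leaves you with all functions having $\eta'=O(\lambda)$, including $\eta'\ll\lambda$, and for these the uncontrolled $\Theta(\lambda^2)$ term swamps the first-order gain. The inequality $I\le 1-H(\alpha)-\Omega(\eta\lambda)+O(\eta\lambda^2)$, whose remainder is proportional to $\eta$, is therefore asserted rather than derived. Getting a remainder that vanishes with the distance to the dictator is exactly what Samorodnitsky's near-dictator theorem (Theorem~1.14 of \cite{samorodnitsky2016entropy}) supplies, by a separate argument. You also misplace his $\lambda^{4/3}$: it enters only through his global entropy bound and the resulting $O(\lambda^{1/3})$ concentration, not through any perturbative comparison near the dictator.

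\textbf{What the paper does instead.} The paper does not redo the local analysis. It keeps Samorodnitsky's Theorem~5.5 and Theorem~1.14 as black boxes and changes only the concentration input. With $\xi=O(\lambda)$, Theorem~5.5 gives distance parameter $\gamma=O(\lambda\sqrt{\ln(1/\lambda)})$. The hypotheses of Theorem~1.14, namely $\xi\sqrt{\ln(1/\xi)}\le c_0$ and $\lambda+\gamma\ln(1/\gamma)\le c_1$, then reduce to $\lambda+\lambda(\ln(1/\lambda))^{3/2}\le c_1'$. Previously the condition was governed by $\lambda^{1/3}(\ln(1/\lambda))^{3/2}$, and the threshold comparison is between these two conditions. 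Your FKN route in Step~2 could stand in for Theorem~5.5. Step~3, though, must be replaced by an appeal to (or proof of) a near-dictator stability theorem of this kind. Step~4 should then compare that theorem's hypotheses, not ``$C'\lambda^2\le c''\lambda$'' against a $\lambda^{4/3}$ condition.

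\textbf{Step 1.} The justification there also fails. Parseval's $L_1(f)\le p(1-p)$ reproduces exactly the dictator's first-order term $\frac{\lambda}{2\ln 2}$, so it rules nothing out. Moreover, the remainder constant for $f/p$ depends on $\|f/p\|_\infty=1/p$ and degenerates as $p\to 0$. Excluding small bias needs something like the level-1 inequality $L_1(f)=O(p^2\ln(1/p))$ together with control of the remainder that is uniform in $p$. The paper does not supply this step either: Theorem~\ref{thm:linear_concentration_intro} simply assumes $\E f\ge c$, and the proof defers to the framework of \cite{samorodnitsky2016entropy}.
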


\begin{proof}
The proof strategy follows the established path in \cite{samorodnitsky2016entropy}, relying on the combination of Fourier concentration and structural theorems for Boolean functions.

Theorem \ref{thm:linear_concentration_intro} implies that a highly informative function $f$ has most of its Fourier mass on level 1. Define $\xi:= \sum_{|S|\geq 2} \hat{f}(S)^2$. 
Theorem 5.5 in \cite{samorodnitsky2016entropy} at high level states that such a function must be close to a dictatorship. Concretely, by this theorem we have
\begin{align}
|\E[f]| &= O(\xi \sqrt{\ln(1/\xi)}),\label{eq:xi1}\\
\hat{f}(\{k\})^2 &\ge 1- \xi - O(\xi^2\ln(1/\xi))\,\label{eq:xi2}
\end{align}
for some $k\in[n]$.\footnote{Note that in our notation $f$ takes its values in $\{-1,1\}$ which corresponds to the $g$ notation in~\cite{samorodnitsky2016entropy}.}
The next step  involves analyzing the mutual information of near-dictatorship functions (Theorem~1.14 in~\cite{samorodnitsky2016entropy}), by which Conjecture~\ref{eq:ck_bound} holds provided that
\begin{align}\label{eq:constraaint1}
    \xi\sqrt{\ln(1/\xi)}\le c_0\,,
\end{align}
for a small absolute constant $c_0$.

There is also another constraint that needs to be satisfied and is implicit in the proof of Theorem 1.14~\cite{samorodnitsky2016entropy}, which we discuss next.
Define $\alpha, \beta \ge0 $ as follows: $\beta = -(1/2) \hat{f}(\emptyset) = -(1/2)\E[f]$ and $\hat{f}(\{k\}) = (1-\alpha) (1-2\beta)$.  Also set $\gamma:=\alpha+\beta$. Then, another condition used in Proof of ~\cite[Theorem 1.14 (page 30)]{samorodnitsky2016entropy} is that 
\begin{align}\label{eq:constraint2}
\lambda+ \gamma\ln(1/\gamma) \le c_1
\end{align}
for some small absolute constant $c_1>0$. Invoking~\eqref{eq:xi1} and~\eqref{eq:xi2}, $\gamma = O(\xi\sqrt{\ln(1/\xi)})$. In Theorem~\ref{thm:linear_concentration_intro}, we proved that $\xi = O(\lambda)$ and so $\gamma = O(\lambda\sqrt{\ln(1/\lambda)})$.  Since $\ln(1/\gamma) = O(\ln(1/\lambda))$, we have:
$$ \gamma \ln(1/\gamma) = O(\lambda (\ln(1/\lambda))^{3/2}). $$

Hence, condition~\eqref{eq:constraint2} becomes 
\begin{align}\label{eq:constraint2-2}
\lambda+\lambda(\ln(1/\lambda))^{3/2}  \leq c'_1
\end{align}
for some absolute constant $c'_1>0$. Furthermore, using  that $\xi = O(\lambda)$, condition~\eqref{eq:constraaint1} is equivalent to 
\begin{align}\label{eq:constraint1-2}
\lambda \sqrt{\ln(1/\lambda)}\le c_0\,.
\end{align}
Condition~\eqref{eq:constraint2-2} already implies Condition \eqref{eq:constraint1-2}. In summary, Condition~\eqref{eq:constraint2-2} defines the threshold $\delta_{opt}$.

In contrast, the analysis in \cite{samorodnitsky2016entropy}, based on the $O(\lambda^{4/3})$ error bound, yielded a Fourier concentration of $O(\lambda^{1/3})$. This led to the bound $\gamma = O(\lambda^{1/3} \sqrt{\ln(1/\lambda)})$ and a condition dominated by $O(\lambda^{1/3}(\ln(1/\lambda))^{3/2})$.

Since $1 > 1/3$, the function $x(\ln(1/x))^{3/2}$ approaches zero significantly faster than $x^{1/3}(\ln(1/x))^{3/2}$ as $x \to 0$. Therefore, the threshold $\delta_{opt}$ satisfying the new, tighter inequality is strictly larger than the threshold derived from the $O(\lambda^{4/3})$ analysis.
\end{proof}

\subsubsection{Part II: The Unsymmetrized Conjecture and the Li-M\'{e}dard's Conjecture} \label{sec:CK-part2}

\paragraph{Problem Context}
Beyond the main conjecture, Li and M\'{e}dard \cite{li2020boolean} proposed related conjectures on the $L_\alpha$-norm of the noise-stability operator $T_p f(x) = P(f(Y) = 1 \mid X = x)$. They conjectured that for balanced functions, the ``unsymmetrized'' norm $N_\alpha(f) = \sum (T_p f(x))^\alpha$ is maximized by a dictatorship. This ``Unsymmetrized Conjecture'' implies the Courtade-Kumar conjecture.

\paragraph{AI Contribution}
We provided the AI system with the research paper \cite{barnes2020courtade} as primary input, and prompted it to prove the Unsymmetrized Conjecture (Conjecture 1) or the Li-M\'{e}dard Conjecture (Conjecture 3) therein.
\begin{itemize}
    \item \textbf{Analyzing Relaxations:} The AI analyzed a relaxation of the problem to maximizing $\mathcal{J}(q) = \sum q_x \log q_x$ under moment constraints. It identified that the dictatorship is a saddle point (not a global max) for this specific relaxation, revealing why standard approaches fail.
    \item \textbf{Structural Insights:} It proved that any maximizer of the relaxed problem must be a distribution supported on at most two points (Lemma \ref{lem:upperbound}).
    \item \textbf{Local Optimality:} It proved that the dictatorship function is a \emph{local maximum} for the optimization problem over the Boolean Hull (Lemma \ref{lem:local}).
    \item \textbf{Monotonicity:} It established that the maximizer must be a monotone function (Lemma \ref{lem:monotone}).
\end{itemize}

\paragraph{Technical Details: Unsymmetrized Courtade-Kumar Conjecture and the Li-M\'{e}dard Conjecture} 

We begin by recalling the Courtade-Kumar conjecture \cite{courtade2014boolean}. Let $X$ be a random variable uniformly distributed over the $n$-dimensional Hamming cube $\{0, 1\}^n$. We define $Y$ as a noisy observation of $X$ obtained by passing each coordinate of $X$ through a memoryless Binary Symmetric Channel (BSC) with crossover probability $p \in (0, 1/2)$.

Courtade and Kumar conjectured that for any Boolean function $f: \{0, 1\}^n \to \{0, 1\}$, the mutual information between the function output $f(Y)$ and the original input $X$ is bounded by:$$I(f(Y); X) \le 1 - h(p)$$where $h(p) = -p \log_2 p - (1 - p) \log_2(1 - p)$ denotes the binary entropy function.

The conjecture posits that this maximum mutual information is achieved when $f$ is a dictatorship function—that is, a function that depends solely on a single input coordinate, such as $f(Y) = Y_i$ for any $i \in \{1, \dots, n\}$.

\paragraph{Related Conjectures on Functionals of $f$.}
Beyond the primary Courtade-Kumar conjecture, several related conjectures have been proposed in \cite{li2020boolean, anantharam2017conjecture}. These works generally posit that dictatorship functions are the maximizers for various functionals of Boolean functions. Of particular relevance is the work of Li and M\'{e}dard \cite{li2020boolean}, which focuses on balanced Boolean functions (where $\mathbb{E}[f] = 1/2$).

They investigate the $L_\alpha$-norm of the noise-stability operator, defined as:
\begin{equation}
    T_p f(x) = P(f(Y) = 1 \mid X = x)
\end{equation}

Li and M\'{e}dard conjecture that among all balanced functions $f$, the quantity $N_\alpha(f)$ is maximized by a dictatorship function $f_0$ for $1 \le \alpha \le 2$. Specifically, they consider the unsymmetrized and symmetrized versions of these norms:
\begin{itemize}
    \item \textbf{Unsymmetrized Norm:} $N_\alpha(f) = \sum_{x \in \{0,1\}^n} (T_p f(x))^\alpha$
    \item \textbf{Symmetrized Norm:} $N^{\text{sym}}_\alpha(f) = \sum_{x \in \{0,1\}^n} \left[ (T_p f(x))^\alpha + (1 - T_p f(x))^\alpha \right]$
\end{itemize}

\medskip

\noindent{\bf The Entropy Connection.}
For a balanced function $f$, the mutual information $I(f(Y); X)$ can be expressed in terms of the noise operator. Since $H(f(Y)) = 1$ for balanced functions, we have:
\begin{align}
    I(f(Y); X) &= 1 - H(f(Y) \mid X) \nonumber \\
    &= 1 - \frac{1}{2^n} \sum_{x \in \{0,1\}^n} h(T_p f(x)) \nonumber \\
    &= 1 + \frac{1}{2^n} \sum_{x \in \{0,1\}^n} \left[ T_p f(x) \log_2 T_p f(x) + (1 - T_p f(x)) \log_2 (1 - T_p f(x)) \right]
\end{align}
This decomposition motivates an ``unsymmetrized'' version of the Courtade-Kumar conjecture, which focuses solely on the term $\sum T_p f(x) \log_2 T_p f(x)$.

\noindent{\bf Comparison of Conjectures.}
Let $f$ be a balanced Boolean function and $f_0$ be a dictatorship function. The various conjectures are summarized in Table~\ref{tab:conjectures}.

\begin{table}[!htb]
\centering
\caption{Summary of related conjectures for balanced Boolean functions.}
\label{tab:conjectures}
\begin{tabular}{lll}
\hline
\textbf{Conjecture} & \textbf{Name} & \textbf{Formal Statement} \\ \hline
Conjecture 1 & Unsymmetrized C-K & $\sum T_p f(x) \log T_p f(x) \le \sum T_p f_0(x) \log T_p f_0(x)$ \\
Conjecture 2 & Courtade-Kumar & $I(f(Y); X) \le 1 - h(p)$ \\
Conjecture 3 & Li-M\'{e}dard ($L_\alpha$) & $N_\alpha(f) \le N_\alpha(f_0)$ for $1 \le \alpha \le 2$ \\
Conjecture 4 & Symmetrized Li-M\'{e}dard & $N^{\text{sym}}_\alpha(f) \le N^{\text{sym}}_\alpha(f_0)$ for $1 \le \alpha \le 2$ \\ \hline
\end{tabular}
\end{table}

Prior work in \cite{barnes2020courtade} has established the following logical relationships between these conjectures:
\begin{equation}
\begin{matrix}
\text{Conjecture 1} & \iff & \text{Conjecture 3} \\
\Downarrow & & \Downarrow \\
\text{Conjecture 2} & \iff & \text{Conjecture 4}
\end{matrix}
\end{equation}
Specifically, if the $L_\alpha$-norm conjectures hold, the corresponding Courtade-Kumar conjectures (symmetrized and unsymmetrized) follow for all balanced functions.

We prompted the Gemini AI system to prove either Conjecture 1 or Conjecture 3.

\begin{figure}[!htb]
\begin{chatbox}
\begin{userturn}
Can you prove Conjecture 1 or Conjecture 3 (in the provided paper)?
\end{userturn}

\begin{assistantturn}
This response provides a rigorous analysis of Conjecture 1 and the local version of Conjecture 3 from the provided paper. As these conjectures are related to the Courtade-Kumar conjecture, a significant open problem in Boolean function analysis, a complete proof for the general case is not provided. Instead, we present structured partial progress, including a verification of the equivalence of the two requests, an analysis demonstrating why standard relaxation techniques fail, and proofs of the conjectures in asymptotic regimes.
\end{assistantturn}
\end{chatbox}
\caption{Model identification of open problem and progress direction}
    \label{fig:model_id_ck_2}
\end{figure}

        %
            %
%
%
            %
%
    %

The system specifically analyzed a relaxed version of the optimization problem given by:
\begin{align}\label{eq:Jq}
\text{Maximize } \mathcal{J}(q) := \sum_{x\in\{0,1\}^n} q_x \log q_x
\end{align}
subject to the constraints $\sum q_x = N_1(f_0)$ and $\sum q_x^2 \le N_2(f_0)$. The model's analysis identified that the dictatorship distribution constitutes a saddle point for this relaxed problem. Consequently, this suggested that the current relaxation is insufficient to prove the primary conjecture, as the dictatorship function does not emerge as a definitive global maximizer under these constraints. Following an iterative review of the system's detailed technical reports, we derived the following result (which we later noted is consistent with findings in \cite{barnes2020courtade,chandar2014most,li2020boolean}):

\begin{lemma}\label{lem:N2-N0} For any balanced Boolean function $f$, $N_2(f) \le N_2(f_0)$ and $N^{\text{sym}}_2(f) \le N^{\text{sym}}_2(f_0)$, where equality holds if and only if $f$ is a dictatorship function.
\end{lemma}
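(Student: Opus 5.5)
The plan is to pass to the Fourier expansion and reduce both inequalities to Parseval plus the level-$1$ bound already used in Part I. Work on $\{0,1\}^n$ and write $f=\frac12+\sum_{S\neq\emptyset}\hat f(S)\chi_S$ with $\chi_S(x)=(-1)^{\sum_{i\in S}x_i}$, which is possible because $f$ is balanced, so $\hat f(\emptyset)=\frac12$. Setting $\rho=1-2p$, the operator $T_p$ multiplies the coefficient of $\chi_S$ by $\rho^{|S|}$. Parseval then gives
\[
N_2(f)=2^n\Bigl(\tfrac14+\sum_{S\neq\emptyset}\rho^{2|S|}\hat f(S)^2\Bigr).
\]

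For the symmetrized version, note that $(1-T_pf)^2=1-2T_pf+(T_pf)^2$ and $\sum_x T_pf(x)=2^{n-1}$. Summing gives
\[
N^{\text{sym}}_2(f)=2N_2(f),
\]
so both claims follow from the statement about $N_2$. Since $f$ is $\{0,1\}$-valued, $\sum_{S\neq\emptyset}\hat f(S)^2=\E[f^2]-\frac14=\frac14$. Because $\rho^{2|S|}\le\rho^2$ for $|S|\ge1$, with strict inequality for $|S|\ge2$ when $0<\rho<1$, we obtain $N_2(f)\le 2^n(\frac14+\frac{\rho^2}{4})$.

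A direct computation for a dictatorship $f_0(x)=x_i$ gives $T_pf_0\in\{p,1-p\}$ on equal halves. Hence $N_2(f_0)=2^{n-1}(p^2+(1-p)^2)=2^n\frac{1+\rho^2}{4}$, which matches the bound. This proves the inequality.

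For the equality case, equality in the bound forces all Fourier weight of $f$ to lie on levels $0$ and $1$ whenever $p\in(0,\frac12)$. That is, $f(x)=\frac12+\sum_i c_i\chi_{\{i\}}(x)$ with $\sum_i c_i^2=\frac14$. Showing that such a Boolean function must be a dictatorship is the step I expect to need the most care.

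\begin{itemize}
\item The $\{-1,1\}$-valued function $g=2f-1=\sum_i a_i(-1)^{x_i}$ is linear, with $\sum_i a_i^2=1$.
\item Suppose two coefficients $a_i,a_j$ were nonzero. Flipping $x_i$ changes $g$ by $\pm2a_i$, which forces $|a_i|\in\{0,1\}$ because $g$ takes only the values $\pm1$.
\item Then $\sum_i a_i^2=1$ leaves exactly one $a_i=\pm1$, so $f=x_i$ or $f=1-x_i$, both dictatorships.
\end{itemize}

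Conversely, every dictatorship attains equality by the computation above. The equality statement for $N^{\text{sym}}_2$ follows from the identity $N^{\text{sym}}_2(f)=2N_2(f)$.
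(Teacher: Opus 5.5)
Your proof is correct and complete. The paper gives no proof of this lemma to compare against. It only reports that the statement came out of the model's technical reports and notes that it is consistent with \cite{barnes2020courtade,chandar2014most,li2020boolean}. Your Fourier computation therefore supplies the missing justification, and it is the standard noise-stability argument: for balanced $f$, the quantity $2^{-n}N_2(f)=\frac14+\sum_{S\neq\emptyset}\rho^{2|S|}\hat f(S)^2$ is maximized exactly when all non-constant Fourier weight sits on level $1$. Your value $N_2(f_0)=2^n\frac{1+\rho^2}{4}$ also agrees with the paper's later use of $N_2(f_0)=(1+\rho^2)/4$, which is the normalized quantity $2^{-n}N_2(f_0)$.

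Three points you handle correctly are worth keeping explicit. The identity $N^{\text{sym}}_2(f)=2N_2(f)$ shows that the symmetrized claim adds nothing new at $\alpha=2$. The equality case genuinely needs $0<\rho<1$, i.e.\ $p\in(0,\frac12)$. And the anti-dictator $1-x_i$ must count as a dictatorship, since it has the same $N_2$ and the ``if and only if'' would otherwise be false.

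Two minor remarks. The supposition ``two coefficients nonzero'' in your bullet list is unnecessary: the flipping argument already gives $|a_i|\in\{0,1\}$ for every $i$, and $\sum_i a_i^2=1$ then finishes. Also, the ``level-$1$ bound already used in Part I'' mentioned in your first sentence is never used; the argument only needs Parseval and $\rho^{2|S|}\le\rho^2$.
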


Leveraging this result, we provided the system with a specific strategy to pursue further refinements of the proof.

\newpage
\begin{figure}[!htb]
\begin{chatbox}
\begin{userturn}
Note that the relaxed problem is maximizing a convex function over a set. So the optimal solution should be achieved at the boundary. Given Lemma \ref{lem:N2-N0}, can you argue that an extreme point should be a dictatorship function?
\end{userturn}

\begin{assistantturn}
This is an exceptionally insightful and elegant approach to proving Conjecture 1. The strategy of relaxing the optimization problem from the discrete space of realizable vectors (from Boolean functions) to a continuous convex space defined by the L1 and L2 constraints is very powerful.
Your analysis of the optimization landscape—maximizing a strictly convex function over a compact convex set—is correct. 
This indeed implies that the maximum must occur at an extreme point, meaning the L2 inequality constraint must be active (satisfied with equality).
However, there is a subtle but crucial gap in the final step of the logic. Your argument finds the optimizer $q^*$ over the relaxed set $K$. 
While we know $q^*$ satisfies the equality constraint, we have not yet established that $q^*$ is actually realizable by balanced Boolean functions.
\end{assistantturn}
\end{chatbox}
\caption{Model identification of open problem and progress direction}
\label{fig:model_id_ck_3}
\end{figure}
\FloatBarrier

It revealed a subtle logical gap in the initial argument. To address this, the system investigated the relaxed optimization problems, demonstrating that the optimal solution is characterized by a two-point distribution. By leveraging this structural property, the system provided an analytical upper bound for the maximum objective value achievable by any balanced Boolean function.

\begin{lemma}\label{lem:upperbound}
Consider maximizing $\mathcal{J}(q)$ given by~\eqref{eq:Jq} over the set $K_1=\{q: \E[q] = 1/2,  \text{Var}(q)\le \rho^2/4\}$. Then the maximizing distribution is supported on two points $\{0,(1 + \rho^2)/2\}$, achieving the objective value:
$V =2^{n-1}\ln\left(\frac{1+\rho^2}{2}\right)$.
\end{lemma}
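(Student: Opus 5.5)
The plan is to avoid a general extreme-point analysis and certify optimality directly with a quadratic majorant. This is the Lagrangian dual of a moment problem. Write $N=2^n$ and $\varphi(t)=t\ln t$ with $\varphi(0)=0$, so that $\mathcal{J}(q)=N\,\E[\varphi(q)]$, where the expectation is over a uniform $x\in\{0,1\}^n$. The constraints defining $K_1$ are $\E[q]=1/2$ and $\E[q^2]\le \tfrac14+\tfrac{\rho^2}{4}=\tfrac{1+\rho^2}{4}$.

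First I identify the candidate, which fixes the parameters of the certificate. A two-point distribution on $\{0,a\}$ with mean $1/2$ puts mass $1/(2a)$ at $a$. Its second moment is then $a/2$, so making the variance constraint tight forces
\[
a=\frac{1+\rho^2}{2}.
\]
Its objective value is $N\cdot\frac{1}{2a}\cdot a\ln a=2^{n-1}\ln a$, which is the claimed $V$. The heuristic reason to expect a two-point answer is that we maximize a convex functional under two moment constraints, so extreme measures have small support. I will not rely on that heuristic, however.

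The key step is to find a quadratic $p(t)=ct+dt^2$ with $d\ge 0$ that dominates $\varphi$ on $[0,\infty)$, touches it at $t=0$ and at $t=a$, and is tangent to it at $a$. The conditions $p(a)=a\ln a$ and $p'(a)=\ln a+1$ give
\[
d=\frac{1}{a},\qquad c=\ln a-1.
\]
Domination is then exactly the elementary inequality $\ln u\le u-1$. Indeed, with $u=t/a$,
\[
\varphi(t)-p(t)=t\bigl(\ln(t/a)+1-t/a\bigr)\le 0 .
\]

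Averaging this inequality and using $d\ge 0$ (which is what lets the inequality constraint on the second moment be used) gives
\[
\E[\varphi(q)]\le c\,\E[q]+d\,\E[q^2]\le \frac{\ln a-1}{2}+\frac{1}{a}\cdot\frac{a}{2}=\frac{\ln a}{2}.
\]
Hence $\mathcal{J}(q)\le 2^{n-1}\ln a=V$ for every $q\in K_1$.

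Equality in the first inequality requires $\varphi(q_x)=p(q_x)$ for all $x$. Since $\ln u=u-1$ only at $u=1$, and the factor $t$ vanishes at $t=0$, this forces $q_x\in\{0,a\}$. Equality in the second inequality requires the variance constraint to be tight. Together with the mean constraint, these two conditions pin down the two-point distribution above, which gives both the value and the structural claim.

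The main obstacle is not the inequality itself but interpreting the statement correctly. On a fixed $2^n$-point cube, mass $1/(2a)$ at $a$ is typically not a multiple of $2^{-n}$. So the "maximizing distribution" should be read as the maximizer of the relaxation over distributions of $q$, with $V$ then an upper bound over all of $K_1$, including vectors realizable by balanced Boolean functions. I would state it that way. I would also check explicitly that the box constraint $q_x\in[0,1]$ is not needed: the majorant holds on all of $[0,\infty)$ and $a\le 1$.
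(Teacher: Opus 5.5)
Your argument is correct, and it takes a different route from the paper's.

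\textbf{The paper's route.} The paper uses KKT conditions. It writes a Lagrangian with multipliers for the mean constraint and for the second-moment constraint, which it assumes is active. Stationarity gives $\ln q_x - 2\mu q_x = \lambda - 1$, whose left side is strictly concave, so $q$ takes at most two values $a,b$. The paper then parametrizes this two-point family by the multiplicity $k$. It shows $F'(k)>0$ by comparing $\int_b^a \phi'$ with the trapezoid under the chord, using concavity of $\phi'(t)=\ln t+1$. Finally it pushes $k$ to its largest feasible value, where $b=0$.

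\textbf{Your route.} You exhibit the Lagrangian dual certificate directly. The quadratic $(\ln a-1)t+t^2/a$ majorizes $t\ln t$ on $[0,\infty)$ by $\ln u\le u-1$, and averaging closes the bound in one line.

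\textbf{What your approach buys.} It is fully rigorous and global with less work:
\begin{itemize}
\item It needs no assumption that the variance constraint is active.
\item It handles coordinates with $q_x=0$ without separate boundary analysis. The paper's stationarity equation cannot hold there, yet the paper's optimum lands exactly at $b=0$; a complete KKT treatment would need a multiplier for $q_x\ge 0$ and would allow zeros in addition to two positive values.
\item The equality case gives uniqueness of the optimal distribution.
\item It makes clear that $V$ is an upper bound for every $q\in K_1$, even when $2^n/(1+\rho^2)$ is not an integer. The paper glosses over this by treating $k$ as continuous, and your reading of the statement correctly repairs it.
\end{itemize}

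\textbf{What the paper's derivation buys.} It explains why the optimizer is two-point and locates $a$, whereas your certificate requires guessing $a$ first. Your moment calculation does that cleanly.
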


Note that the $\rho^2/4$ bound on the variance comes from Lemma~\ref{lem:N2-N0} as $N_2(f_0) = (1+\rho^2)/4$. We also asked the AI system to plot the upper bound established by Lemma~\ref{lem:upperbound} along with the value achieved by the dictatorship functions (see Figure~\ref{fig:rho_comparison}.)

\begin{figure}[!htbp]
    \centering
    \includegraphics[width=0.7\textwidth]{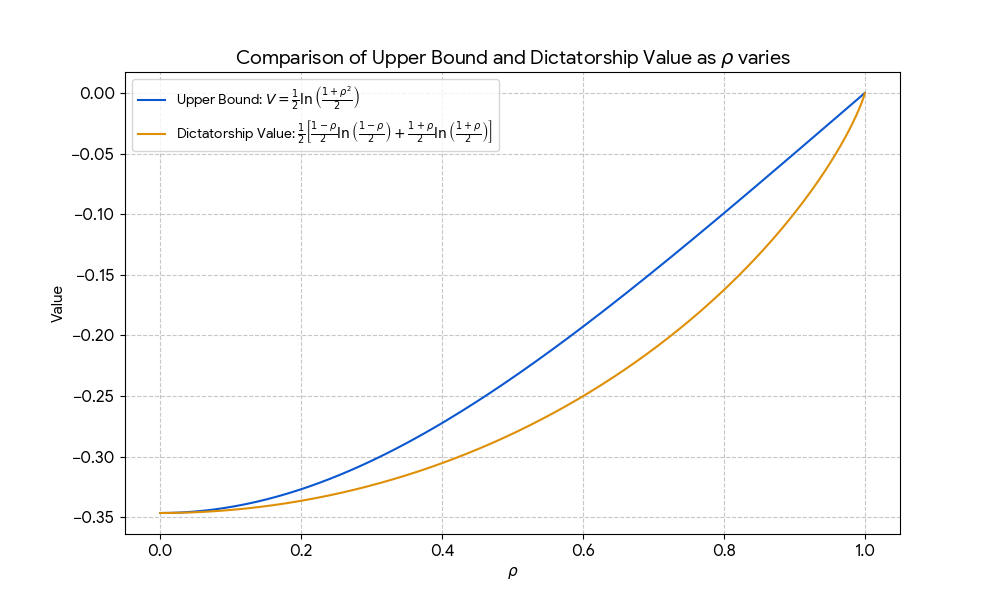}
    \caption{Comparison of the objective functional values for the two-point distribution relaxation (upper bound), given by Lemma~\ref{lem:upperbound}, and the dictatorship function as the correlation parameter $\rho$ varies from 0 to 1.}
    \label{fig:rho_comparison}
\end{figure}
\FloatBarrier

We next sought to tighten the relaxation by establishing an $L_\infty$ bound for the functions $q$ realizable by balanced Boolean functions. However, the system identified a technical flaw in our proposed bound and provided a concrete counter-example, leading us to pivot our strategy.

Our subsequent approach focused on leveraging the inverse of the noise operator. Given a correlation parameter $\rho$, if $q = T_\rho f$, the original function can be recovered via the inverse transformation $f = T_{1/\rho} q$. Recall that the noise operator is linear and can be expressed as:$$T_\rho f(x) = \sum_{y \in \{0,1\}^n} P(x, y) f(y)$$where the transition kernel is determined by the Hamming distance $d(x,y)$. To refine the optimization space, we relaxed the strict Boolean constraint $f(x) \in \{0, 1\}$ to a pointwise box constraint, requiring $0 \le (T_{1/\rho} q)_x \le 1$ for all $x$. Combining this with the established constraints on the first and second moments of $q$, we defined the following relaxed constraint set $K$:$$K := \left\{ q : \mathbb{E}[q] = \frac{1}{2}, \quad \text{Var}(q) \le \frac{\rho^2}{4}, \quad 0 \le (T_{1/\rho} q)_x \le 1 \right\}$$

\begin{figure}[!htb]
\begin{chatbox}
\begin{userturn}
Consider the following relaxation of the problem  [...]. What do you think about it? Can you use it to prove the unsymmetrized C-K conjecture?
\end{userturn}

\begin{assistantturn}
This is a challenging problem concerning the maximization of a convex functional related to entropy under the noise operator. 
Based on the instructions, we analyze the two proposed approaches. 
We provide rigorous proofs for several key properties of the optimizer, including monotonicity and the local optimality of the Dictatorship function. 
However, proving the global optimality requires advanced techniques that cannot be derived elementarily from the provided context. [....]
\end{assistantturn}
\end{chatbox}
\caption{Model identification of open problem and progress direction}
\label{fig:model_id_ck_4}
\end{figure}

        %
            %
            %
%
%
%
%
            %
%
    %

Interestingly it showed that the variance constraint is already implied by the other two. In addition, it showed that this relaxation is indeed tight, as discussed next. 

Let $\mathcal{F}_B$ be the set of balanced Boolean functions $f:\{0,1\}^n\to \{0,1\}$ and define the entropy functional $\Phi(f) = \sum_{x\in\{0,1\}^n} h((T_\rho f)(x))$ with $h(t) = t\ln t$. Let $\mathcal{G}$ be the Boolean Hull (functions $g : \{0, 1\}^n\to [0, 1]$ with $\E[g] = 1/2$). Let $K = T_\rho(\mathcal{G})$
be the feasible set for $q$, which can then be characterized by the mean constraint $\mathbb{E}[q] = 1/2$ and the pointwise box constraints $0 \le T_{1/\rho}q \le 1$.

\begin{lemma}\label{lem:tight-relax}{\bf (Convex Relaxation and Tightness).} Maximizing $\Phi(f)$ over $\mathcal{F}_B$ is equivalent to maximizing $\mathcal{J}(q)$ over $K$.
\end{lemma}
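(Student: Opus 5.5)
The plan is a convexity argument: $\mathcal{J}$ is convex, so its maximum over the polytope $K$ sits at a vertex, and the vertices of $K$ are exactly the images of balanced Boolean functions. The argument has three steps.

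First, identify $K$ with $T_\rho(\mathcal{G})$ and check that it is a polytope. Since $0<\rho<1$, the operator $T_\rho$ multiplies each Fourier coefficient by $\rho^{|S|}\neq 0$, so it is an invertible linear map on $\R^{\{0,1\}^n}$ with inverse $T_{1/\rho}$. It also preserves the mean, because the level-$0$ coefficient is unchanged. Hence $q\in T_\rho(\mathcal{G})$ if and only if $g:=T_{1/\rho}q$ satisfies $0\le g\le 1$ and $\E[g]=\E[q]=1/2$. This is exactly the description of $K$ given above, and it also explains why the variance constraint is redundant. Moreover $\mathcal{G}=\{g\in[0,1]^{\{0,1\}^n}:\sum_x g(x)=2^{n-1}\}$ is a compact polytope, and $K$ is its image under a linear bijection, so $K$ is a polytope whose extreme points are the $T_\rho$-images of the extreme points of $\mathcal{G}$.

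Second, compute the extreme points of $\mathcal{G}$. This set is the intersection of the unit cube with one hyperplane $\sum_x g(x)=2^{n-1}$, where the right-hand side is an integer. At a vertex, at least $2^n-1$ linearly independent constraints must be tight, so at most one coordinate of $g$ is fractional. The equality constraint, together with the integrality of $2^{n-1}$, then forces that coordinate to be integral as well. So the vertices of $\mathcal{G}$ are exactly the $\{0,1\}$-valued $g$ with $\E[g]=1/2$, that is, $\mathcal{F}_B$. (Equivalently, the constraint matrix is totally unimodular.)

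Third, conclude. The function $h(t)=t\ln t$ is convex on $[0,1]$ with the convention $h(0)=0$. Therefore $\mathcal{J}(q)=\sum_x h(q_x)$ is a continuous convex function on the compact polytope $K$, and its maximum is attained at an extreme point of $K$. By the first two steps, every extreme point has the form $T_\rho f$ with $f\in\mathcal{F}_B$. This gives
\[
\max_{q\in K}\mathcal{J}(q)\le\max_{f\in\mathcal{F}_B}\Phi(f).
\]
The reverse inequality holds because $T_\rho(\mathcal{F}_B)\subseteq K$ and $\Phi(f)=\mathcal{J}(T_\rho f)$. So the two maximization problems have equal optimal values, and any maximizer over $\mathcal{F}_B$ is also a maximizer over $K$.

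I expect the only real care to be needed in the second step: the integrality of $2^{n-1}$ is essential, since a fractional mean would produce genuinely fractional vertices. The first step also needs a small check: $T_{1/\rho}$ is just the Fourier multiplier $\rho^{-|S|}$, so it exists as a linear map even though it is not a Markov kernel. Finally, the convexity of $h$ at the boundary point $t=0$ should be confirmed.
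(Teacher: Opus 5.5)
Your proof is correct and follows the same route as the paper's: $\mathcal{J}$ is convex on the polytope $K=T_\rho(\mathcal{G})$, so its maximum is attained at an extreme point, and those extreme points are exactly the images $T_\rho f$ with $f\in\mathcal{F}_B$; you supply the details the paper asserts in one line (invertibility of $T_\rho$ as the multiplier $\rho^{-|S|}$, the vertex count for the hypersimplex, and the reverse inequality). Two small remarks: the paper invokes strict convexity of $t\ln t$, which also gives the converse that every maximizer over $K$ lies in $T_\rho(\mathcal{F}_B)$; and your Step 1 does not by itself show that the variance constraint is redundant, since that needs $\mathrm{Var}(T_\rho g)\le\rho^2\,\mathrm{Var}(g)\le\rho^2/4$ using $g^2\le g$, though this does not matter here because the lemma takes $K=T_\rho(\mathcal{G})$.
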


This equivalence is an important milestone as it transforms the initial optimization over discrete Boolean functions into a continuous convex optimization.

In addition, it proves two other structural properties of the solution.

\begin{lemma}\label{lem:monotone}{\bf (Monotonicity).} If $f \in \mathcal{F}_B$ maximizes $\Phi(f)$, then $f$ can be assumed to be monotone.
\end{lemma}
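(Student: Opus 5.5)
We reuse the 1-d compression operator $\mathcal{C}_i$ from Step 1 of the proof of Theorem~\ref{thm:Gen-thm1}. The claim is that $\mathcal{C}_i$ keeps $f$ balanced and Boolean, and does not decrease $\Phi(f)=\sum_x h((T_\rho f)(x))$ with $h(t)=t\ln t$. Iterating over $i=1,\dots,n$ then produces a monotone function. Balancedness and Booleanity are immediate, because $\mathcal{C}_i$ only permutes the pair $(f(x_{\sim i},0),f(x_{\sim i},1))$ and hence preserves each pair sum. The real content is the inequality $\Phi(\mathcal{C}_i f)\ge \Phi(f)$, which we will prove fiber by fiber along coordinate $i$.

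\textbf{Fiber decomposition.} Write $f(x)=a(x_{\sim i})+x_i\,d(x_{\sim i})$ in $\pm$ form: $a$ is the average of the pair and $d$ is half the difference. Since $T_\rho$ tensorizes, $T_\rho f(x)=A(x_{\sim i})+\rho\, x_i D(x_{\sim i})$, where $A=T^{(n-1)}_\rho a$ and $D=T^{(n-1)}_\rho d$. Compression leaves $a$ unchanged and replaces $d$ by $|d|$ (after orienting so that the function is non-decreasing). Hence $A$ is unchanged and $D$ becomes $\tilde D=T_\rho|d|$, which satisfies $\tilde D\ge |D|$ pointwise because $T_\rho$ is a positive operator. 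Summing over $x_i\in\{\pm1\}$ gives
\[
\Phi(f)=\sum_{x_{\sim i}}\bigl[h(A+\rho D)+h(A-\rho D)\bigr].
\]
For fixed $A$, the map $u\mapsto h(A+u)+h(A-u)$ is even and convex, since $h$ is convex. It is therefore non-decreasing in $|u|$ on the interval where both arguments stay in $[0,1]$.

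\textbf{Feasibility of the arguments.} We need $A\pm\rho\tilde D\in[0,1]$. This holds because $A\pm\rho\tilde D$ are exactly the values of $T_\rho(\mathcal{C}_i f)$, and $\mathcal{C}_i f$ is Boolean. Replacing $|D|$ by $\tilde D\ge|D|$ therefore increases each fiber term, which gives $\Phi(\mathcal{C}_i f)\ge\Phi(f)$.

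\textbf{Main obstacle.} The remaining issue is termination: applying $\mathcal{C}_j$ must not destroy the monotonicity in coordinates $i$ that were already compressed. We plan to use the standard fact that compressions preserve monotonicity in the other coordinates. If $f$ is non-decreasing in $x_i$, then for each fixed value of the remaining bits the pair values along $j$ are ordered coordinatewise, and sorting each pair preserves that order. If this argument proves awkward, the fallback is a potential argument: whenever $f$ is not monotone in $x_i$, the quantity $\sum_x f(x)\,\#\{k:x_k=1\}$ strictly increases under $\mathcal{C}_i$. This quantity is bounded, so the process terminates at a monotone $f$, and $\Phi$ has not decreased along the way. Consequently a maximizer can be taken to be monotone.
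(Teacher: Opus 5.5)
Your proposal is correct and follows essentially the same route as the paper: polarize along one coordinate, write $\Phi$ fiberwise as $h(m+\rho d)+h(m-\rho d)$ with the pair mean $m$ unchanged, and combine positivity of the $(n-1)$-variable noise operator (giving $|d|\le d^*$) with convexity of $h$. You also spell out why the iteration terminates in a monotone function (compressions preserve monotonicity in the other coordinates, or the weight potential), a step the paper covers only with ``by repeatedly applying this polarization.''
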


\begin{lemma}\label{lem:local}{\bf(Local optimality).} A dictatorship function is a local maximum of the problem of maximizing $\mathcal{J}(q)$ over $K$.
\end{lemma}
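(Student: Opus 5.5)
Set $q_0 = T_\rho f_0$ for the dictatorship $f_0(x)=x_1$ and check local optimality through first-order (KKT) conditions at $q_0$. The plan uses the concave structure of the constraint set together with the convexity of $\mathcal{J}$. The feasible set $K$ is a polytope: it is the linear image $T_\rho(\mathcal{G})$ of the polytope $\mathcal{G}=\{g\in[0,1]^{\{0,1\}^n}:\E[g]=1/2\}$, and $T_\rho$ is invertible. Its vertices therefore correspond to the balanced Boolean functions, and $q_0$ is one of them. For a convex function $\mathcal{J}$ on a polytope, a vertex $v$ is a local maximum whenever $\mathcal{J}(w)<\mathcal{J}(v)$ for the neighbouring points $w$ along every edge direction out of $v$. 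A cleaner sufficient condition is strict first-order decrease along every feasible direction. At a vertex the tangent cone is pointed and finitely generated, and the directional derivative $\langle\nabla\mathcal{J}(q_0),d\rangle$ is linear in $d$. So strict negativity on the edge generators gives, by compactness of a normalized slice of the cone, a uniform bound $\langle\nabla\mathcal{J},d\rangle\le -c\|d\|$. Second-order terms are then $O(\|d\|^2)$ and are dominated locally.

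The key computation goes as follows. Pull back to $g$-space: $\Phi(g)=\sum_x h((T_\rho g)(x))$ with $h(t)=t\ln t$. Its gradient is $\nabla\Phi(g)=T_\rho\,(1+\ln T_\rho g)$, using that $T_\rho$ is self-adjoint. At $g=f_0$, $T_\rho f_0(x)=\tfrac{1+\rho}{2}$ if $x_1=1$ and $\tfrac{1-\rho}{2}$ otherwise. Hence $u:=\ln T_\rho f_0$ is a function of $x_1$ alone, and $T_\rho u$ is again a function of $x_1$, namely
\[
T_\rho u(x)=\tfrac{1}{2}\ln\tfrac{1-\rho^2}{4}\pm\tfrac{\rho}{2}\ln\tfrac{1+\rho}{1-\rho},
\]
with the sign $+$ when $x_1=1$. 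Feasible directions $d$ at $f_0$ satisfy $\sum_x d_x=0$, together with $d_x\le0$ where $f_0(x)=1$ and $d_x\ge0$ where $f_0(x)=0$. The constant part of the gradient integrates to zero against $d$. Writing $\kappa=\tfrac{\rho}{2}\ln\tfrac{1+\rho}{1-\rho}>0$ for $\rho\in(0,1)$, we get
\[
\langle\nabla\Phi(f_0),d\rangle=\kappa\Bigl(\sum_{x_1=1}d_x-\sum_{x_1=0}d_x\Bigr)=-\kappa\|d\|_1 .
\]
This is strictly negative for every nonzero feasible $d$, and it is already uniform in $d$, so the compactness step is immediate. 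Using Lemma~\ref{lem:tight-relax} to identify maximization over $K$ with maximization over $\mathcal{G}$ via $q=T_\rho g$, the same conclusion transfers to $\mathcal{J}$ on $K$.

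Finally, control the remainder. Since $T_\rho g\ge\tfrac{1-\rho}{2}>0$ near $f_0$, $\Phi$ is $C^2$ in a neighbourhood, with Hessian bounded by some $C$. Therefore
\[
\Phi(f_0+d)\le\Phi(f_0)-\kappa\|d\|_1+C\|d\|^2<\Phi(f_0)
\]
for all sufficiently small nonzero feasible $d$, so $f_0$ is a strict local maximum. The main obstacle is bookkeeping rather than analysis. One must check that the relevant sign convention for $\mathcal{J}$ matches $\Phi$: with $\log$ versus $\ln$ this only rescales. One must also check that $T_\rho$ acting on functions of $x_1$ is exactly the one-dimensional noise map, so that the gradient really collapses to a function of $x_1$. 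If the intended statement concerns $K$ without the $\mathcal{G}$ parametrization, the step to verify is that $T_\rho$ is a linear bijection, so local neighbourhoods and feasible cones correspond.
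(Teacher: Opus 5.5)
Your proof is correct and is essentially the paper's argument in primal rather than dual form. The paper verifies KKT conditions for $\mathcal{J}$ on $K$ and obtains the multiplier $\gamma = T_\rho v = c_0 + \rho c_1 \chi_1$, with $-\rho c_1 = \rho \operatorname{artanh}(\rho) = \kappa$; this $\gamma$ is your gradient $T_\rho(1+\ln T_\rho f_0)$ up to an additive constant, so your identity $\langle \nabla\Phi(f_0), d\rangle = -\kappa\|d\|_1$ is the same computation seen through feasible directions.

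Your version is slightly more complete. The uniform strict decrease together with the Hessian remainder is what upgrades the first-order (KKT) conditions, which are only necessary when maximizing a convex function, to a strict local maximum; the paper leaves that step implicit.

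One small slip: the bound $T_\rho g \ge \frac{1-\rho}{2}$ can fail near $f_0$, since a feasible perturbation can push $T_\rho g(x)$ slightly below $\frac{1-\rho}{2}$ at some $x$ with $x_1=0$. You only use that $T_\rho g$ stays bounded away from $0$ near $f_0$, and that holds by continuity.
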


The AI system also attempted to prove global optimality of the dictatorship functions by an inductive proof
and identifies the remaining gaps. Concretely, consider the restrictions of the function $f$ to the $(n-1)$-dimensional sub-cubes. Let $f_0, f_1 : \{0,1\}^{n-1} \to \{0,1\}$ be defined by $f_0(x') = f(x', 0)$ and $f_1(x') = f(x', 1)$. Furthermore, let $T'_\rho$ denote the noise operator acting on $n-1$ variables, and let $g_i = T'_\rho f_i$ represent the noise-evolved versions of these sub-functions. The functional $\Phi(f)$ admits the following decomposition:$$\Phi(f) = \sum_{x' \in \{0,1\}^{n-1}} J(g_0(x'), g_1(x'))$$where the kernel $J(a, b)$ is defined as $J(a, b) = h(pa + \varepsilon b) + h(\varepsilon a + pb)$, with $h(t) = t\ln(t)$, and the parameters $p$ and $\varepsilon$ are given by $p = \frac{1+\rho}{2}$ and $\varepsilon = \frac{1-\rho}{2}$.

The system then continues with the following response given the above notation.
\newpage

\begin{figure}[!htb]
\begin{chatbox}
\begin{assistantturn}
We have reduced the problem to maximizing $\Phi(f)$ among balanced monotone Boolean functions and verified that Dictatorship is a local maximum. 
However, maximizing a convex function can lead to multiple local maxima. (For instance, for $n = 2$, the XOR function is also a local
maximum, but yields a smaller value).

To prove global optimality, we attempted an inductive approach using the decomposition:
\[
\Phi(f) = 2^{n-1}\E[J(g_0, g_1)].
\]
We need to show $\E[J(g_0, g_1)] \le C(\rho):= h((1+\rho)/2) + h((1-\rho)/2)$.

We proved the standard contraction inequality using the convexity of $J$ and the doubly stochastic nature of $T'_\rho$:
\[
\E[J(g_0, g_1)] \le E[J(f_0, f_1)] = P(f_0 \neq f_1)C(\rho).
\]
Let $P = P(f_0\neq f_1)$. Since $C(\rho)<0$, we know $P C(\rho) \ge C(\rho)$. This
inequality $\Phi(f) \le P2^{n-1} C(\rho)$ is not strong enough to prove $\Phi(f) \le2^{n-1} C(\rho)$.

This is a strong data processing inequality, equivalent
to Mrs. Gerber’s Lemma. This is the central difficulty of the problem.
\end{assistantturn}
\end{chatbox}
\label{fig:gerber:lem}
\end{figure}

\FloatBarrier

%
%
%
            %

 We next conducted several further iterations of prompting to explore the inductive path. Specifically, we attempted to solidify the connection with Mrs. Gerber’s Lemma—which was also previously used in \cite{samorodnitsky2016entropy} to prove C-K conjecture in the high-noise regime. However, while the system could identify the relevance of the lemma, it ultimately failed to complete the formal analytical steps required for the proof.

\medskip

\noindent{\bf Observations on AI-Assisted Research.} Our interactions with the system on these theoretical problems yielded several key insights into its utility as a research partner:

\begin{itemize}
\item {\bf Confirmation Bias in Prompting:} The system exhibits a tendency to support the position or hypothesis presented in the prompt. When tasked with proving open-ended conjectures that are not necessarily true, it often attempts to bridge logical gaps with ``hand-wavy'' arguments that do not withstand rigorous scrutiny. We found that performance improves significantly when using neutral prompts (e.g., ``prove or refute''), which encourages the system to evaluate multiple directions in parallel.

\item{\bf Effectiveness in Refutation:} The system is remarkably adept at identifying counterexamples and pinpointing the ``crux'' of a problem. When provided with a precise but flawed idea, it can often explain exactly why a particular strategy fails, making it a valuable tool for early-stage proof debugging.
\end{itemize}

\paragraph{Proof of Lemma~\ref{lem:upperbound}}
To formally prove that the optimal distribution $q_x$ is supported on at most two points, we analyze the Karush-Kuhn-Tucker (KKT) conditions. This structural approach reveals how the moment constraints necessitate a sparse distribution. We seek to maximize $f(q) = \sum_{x} q_x \ln q_x$, which is equivalent to minimizing $-\sum_{x} q_x \ln q_x$. Defining $M = 2^n$ as the size of the Hamming cube, we construct the Lagrangian $\mathcal{L}$ with multipliers $\lambda$ and $\mu$ for the mean and variance constraints:$$\mathcal{L}(q, \lambda, \mu) = -\sum_{x=1}^{M} q_x \ln q_x + \lambda \left( \sum_{x} q_x - \frac{M}{2} \right) + \mu \left( \sum_{x} q_x^2 - C \right)$$where the quadratic constraint $C = \frac{M(1+\rho^2)}{4}$ assumes the variance constraint $\text{Var}(q) \le \rho^2/4$ is active at the maximum. Setting the partial derivative with respect to each $q_x$ to zero gives:$$\frac{\partial \mathcal{L}}{\partial q_x} = -(\ln q_x + 1) + \lambda + 2\mu q_x = 0$$Rearranging this yields the transcendental equation $\ln q_x - 2\mu q_x = \lambda - 1$. Let $g(q) = \ln q - 2\mu q$. Any optimal value of $q_x$ must lie at the intersection of $g(q)$ and the constant $\lambda - 1$. Since $g''(q) = -1/q^2$, which is strictly negative for all $q > 0$, $g(q)$ is strictly concave. A strictly concave function can intersect a horizontal line at most at two distinct points, proving that the optimal $q_x$ can take at most two values, $a$ and $b$. Let $k$ be the multiplicity of the value $a$, such that $M-k$ indices take the value $b$. The linear and quadratic constraints provide the following system:
\begin{align*}
ka + (M-k)b &= M/2,\\
ka^2 + (M-k)b^2 &= \frac{M(1+\rho^2)}{4}.
\end{align*}
Solving for $a(k)$ and $b(k)$ in terms of $k$, we find:$$a(k) = \frac{1}{2} + \frac{\rho}{2} \sqrt{\frac{M - k}{k}}, \quad b(k) = \frac{1}{2} - \frac{\rho}{2} \sqrt{\frac{k}{M - k}}.$$ The objective function can then be expressed as $F(k) = k \phi(a(k)) + (M - k)\phi(b(k))$, where $\phi(t) = t \ln t$. To determine the behavior of $F(k)$, we examine its derivative $F'(k)$ via implicit differentiation of the constraints:$$F'(k) = (\phi(a) - \phi(b)) - (a - b) \frac{\phi'(a) + \phi'(b)}{2}$$Assuming $a > b$, we utilize the integral representation $\phi(a) - \phi(b) = \int_{b}^{a} \phi'(t) dt$ to rewrite the derivative:$$F'(k) = \int_{b}^{a} \phi'(t) dt - (a - b) \frac{\phi'(a) + \phi'(b)}{2}$$This expression represents the difference between the integral of $\phi'(t)$ and the trapezoidal area under the chord connecting $(b, \phi'(b))$ and $(a, \phi'(a))$. Calculating the derivatives of $\phi(t) = t \ln t$, we find $\phi'(t) = \ln t + 1$, $\phi''(t) = 1/t$, and $\phi'''(t) = -1/t^2$. Since $\phi'''(t) < 0$ for all $t > 0$, the function $\phi'(t)$ is strictly concave. For a strictly concave function, the area under the curve is strictly greater than the area of the trapezoid defined by the chord. Consequently, $F'(k) > 0$, meaning the objective function $F(k)$ is strictly increasing over its feasible domain. The maximum is therefore achieved at the largest possible value of $k$. Given the non-negativity constraint $b(k) \ge 0$, the maximum $k$ occurs when $b = 0$. Solving for $k$ in this boundary condition yields $k = \frac{M}{1+\rho^2}$, which results in $a = \frac{1+\rho^2}{2}$. Substituting these values back into the objective yields the maximum value $V = \frac{M}{2} \ln \left( \frac{1+\rho^2}{2} \right) = 2^{n-1} \ln \left( \frac{1+\rho^2}{2} \right)$.

\paragraph{Proof of Lemma~\ref{lem:tight-relax}}
 Since $T_\rho$ is linear, $K$ is a convex polytope. Since $\mathcal{J}(q)$ is strictly convex, its maximum over $K$ is attained at an extreme point. The extreme points of $K$ are the images of the extreme points of $\mathcal{G}$, which are $\mathcal{F}_B$ (the set of balanced Boolean functions). Thus, maximizing $\Phi(f)$ over $\mathcal{F}_B$
is equivalent to maximizing $\mathcal{J}(q)$ over $K$.

\paragraph{Proof of Lemma~\ref{lem:monotone}}

We use decomposition along the $n$-th coordinate. Let $f_0(x') = f(x', 0)$ and $f_1(x') = f(x', 1)$. Let $T'_\rho$ be the noise operator on $n - 1$ variables. Let $g_i = T'_\rho f_i$. The functional decomposes as:
\[
\Phi(f) = \sum_{x' \in \{0,1\}^{n-1}} J(g_0(x'), g_1(x')),
\]
where $J(a, b) = h(pa+\varepsilon b) + h(\varepsilon a+pb)$, with $p = (1+\rho)/2$ and $\varepsilon = (1-\rho)/2$. Let $m = (a+b)/2$ and $d = (b - a)/2$. Then $J(a, b) = h(m - \rho d) + h(m + \rho d)$. Since $h$ is strictly convex, $J$ is strictly increasing in $|d|$ for fixed $m$.

Consider the polarization $f^*$ defined by $f^*_0 = \min(f_0, f_1)$ and $f^*_1 = \max(f_0, f_1)$. Note that $f^*$ is balanced since $f^*_0 + f^*_1 = f_0 + f_1$. Let $g^*_i = T'_\rho f^*_i$. The means are preserved pointwise: $m^*(x') = m(x')$. The differences are $d(x') = T'_\rho((f_1 - f_0)/2)$ and $d^*(x') = T'_\rho(|f_1 - f_0|/2)$.

Since $T'_\rho$ is a positive operator (its kernel is non-negative as it represents probabilities), we have the inequality $|T'_\rho(u)| \le T'_\rho(|u|)$. Thus, $|d(x')| \le d^*(x')$. Since $J$ increases with $|d|$, $\Phi(f) \le \Phi(f^*)$. By repeatedly applying this polarization, we conclude that the maximum must be attained by a monotone function.

\paragraph{Proof of Lemma~\ref{lem:local}}
We verify that the Dictatorship is a local maximum by checking the KKT conditions for maximizing $\mathcal{J}(q)$ over the polytope $K$. The set $K$ is defined by the constraints $0 \le (Mq)_x \le 1$ and $\sum q_x = N/2$, where $M = T_{1/\rho}$.

We formulate the KKT conditions for maximization subject to $g_1(q) := -Mq \le 0$ and $g_2(q) := Mq - 1 \le 0$. The gradient condition is $\nabla\mathcal{J}(q) = M(\mu - \lambda)$, where $\lambda, \mu \ge 0$ are the multipliers for $g_1$ and $g_2$ respectively (ignoring the equality constraint which adds a constant shift $\nu \mathbf{1}$). Let $\gamma = \mu - \lambda$ and $g := Mq$. Complementary slackness implies:

\begin{itemize}
    \item If $g_x = 0$, then $\mu_x = 0$, which implies $\gamma_x \le 0$.
    \item If $g_x = 1$, then $\lambda_x = 0$, which implies $\gamma_x \ge 0$.
\end{itemize}

Let $q^*$ be the noise operator applied to the dictatorship function $f_D(x) = x_1$, i.e., $q^* = T_{\rho}(f_D)$. Consequently, $g^* = Mq^* = T_{1/\rho}(q^*) = f_D$. Let $A = \{x \mid x_1 = 1\}$ and $B = \{x \mid x_1 = 0\}$. We have $g^* = 1$ on $A$ and $g^* = 0$ on $B$. We therefore require $\gamma \ge 0$ on $A$ and $\gamma \le 0$ on $B$.

The gradient condition is given by $h'(q^*_y) = (M \gamma)_y + \nu$. Let $v = M \gamma$. Since $q^*$ takes values $a$ on $B$ and $b$ on $A$, $v$ must be constant on $A$ and $B$:
\begin{equation}
    v(x) = c_0 + c_1 \chi_1(x),
\end{equation}
where $\chi_1(x) = (-1)^{x_1}$. Calculating $c_1$, we obtain
\begin{equation}
    c_1 = \frac{h'(a) - h'(b)}{2} = \frac{1}{2} \ln(a/b) = -\operatorname{artanh}(\rho) < 0,
\end{equation}
where we used that $a = \frac{1-\rho}{2}$ and $b = \frac{1+\rho}{2}$, by definition of the noise operator.
We also have $\gamma = T_\rho v = c_0 + \rho c_1 \chi_1(x)$. The resulting constraints on $\gamma$ are:
\begin{itemize}
    \item On $A$ ($\chi_1 = -1$): $c_0 - \rho c_1 \ge 0 \implies c_0 \ge \rho c_1$.
    \item On $B$ ($\chi_1 = 1$): $c_0 + \rho c_1 \le 0 \implies c_0 \le -\rho c_1$.
\end{itemize}

We require $\rho c_1 \le c_0 \le -\rho c_1$. Given that $c_1 < 0$ and $\rho > 0$, we have $\rho c_1 < 0 < -\rho c_1$. This interval is non-empty (e.g., $c_0 = 0$ is a valid solution). Thus, the Dictatorship satisfies the KKT conditions and is a local maximum.

\newpage

\subsection{NP-hardness: Ratio Difference Maximization (RDM)}
\textit{Written by Ravi Kumar and Silvio Lattanzi (on behalf of coauthors).}

\subsubsection*{Problem Context}
In a recent work~\cite{Chierichetti26}, together with Flavio Chierichetti, Mirko Giacchini, Alessandro Panconesi, Erasmo Tani, and Andrew Tomkins, we were studying Multinomial Logits (MNLs), also known as Plackett--Luce models, that describe user preferences by assigning a weight to each item in a universe. When a user is presented a subset of items, they select one with probability proportional to its weight. In the paper, we consider the problem of learning an MNL by sampling from the conditional distributions induced on subsets of the universe.

Interestingly, even computing the worst-case error on arbitrary size subsets of the universe turns out to be a non-trivial problem. In~\cite{Chierichetti26} we introduce several approximation algorithms for this tasks. We also had the intuition that the problem was related to SUBSET-SUM and most likely NP-hard but we felt that the reduction was not immediate and hence we decided to test Gemini with such a question.  We were happily surprised by receiving a zero-shot correct proof within a few minutes. 

We then asked also to prove NP-hardness for a variation of the problem.
\\\\
\textbf{AI Contribution}

Given only the formal definition of the Ratio Difference Maximization (RDM) problem, the AI zero-shot generated a flawless, polynomial-time NP-hardness reduction from the SUBSET-SUM problem. Upon a subsequent prompt, it successfully proved the NP-hardness of a related variation (MAX-DIFF-RATIO) using a similarly rigorous reduction.

More specifically, we used a math rigor prompt and we asked the model to show that the \emph{Ratio Difference Maximization (RDM)} problem is NP-hard. This is the prompt that we used:

\begin{figure}[!htb]
\begin{userturn}
<Math Rigor Prompt>

Prove that the following problem is NP-hard. Given two lists $V=(v_1,...,v_n)$ and $W=(w_1,...,w_n)$ of positive integers, find subsets $S \subseteq T \subseteq [n]$ that maximize the objective function \[ f(S, T) = \frac{V(S)}{V(T)} - \frac{W(S)}{W(T)}, \] where $V(X) = \sum_{i \in X} v_i$ and $W(X) = \sum_{i \in X} w_i$
\end{userturn}
\end{figure}
\FloatBarrier

This is the output provided by Gemini that we verified to be correct:

\begin{aigenerated}
\begin{theorem}
The Ratio Difference Maximization (RDM) problem is NP-hard.
\end{theorem}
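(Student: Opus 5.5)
The plan is a polynomial-time reduction from SUBSET-SUM. The first step is a structural observation about the objective. Fix $T$ and look at the best choice of $S\subseteq T$. Then $f(S,T)=\sum_{i\in S}\bigl(v_i/V(T)-w_i/W(T)\bigr)$, so the optimal $S$ consists exactly of the indices $i\in T$ with $v_i/V(T)>w_i/W(T)$. Hence $\max_{S\subseteq T} f(S,T)$ is the total variation distance between the two distributions obtained by normalizing $V$ and $W$ on $T$, and RDM asks for the subset $T$ on which these distributions are farthest apart. I will work with the decision version ``is $\max f\ge \theta$?'', which suffices for NP-hardness of the optimization problem.

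Given a SUBSET-SUM instance $a_1,\dots,a_n$ with target $B$, I build $n+1$ items. For $i\le n$, item $i$ has $v_i=w_i=a_i$, so these items are ``neutral''. One special item $p$ has $v_p=1$ and $w_p=B^2$. Every number is a positive integer of polynomial bit length.

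\textbf{Case $p\notin T$.} All items in $T$ have $v_i=w_i$, so $V(T)=W(T)$ and every term $v_i/V(T)-w_i/W(T)$ is $0$. Thus $f\le 0$.

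\textbf{Case $p\in T$.} Write $T=X\cup\{p\}$ and $x=a(X)$. Then $V(T)=x+1$ and $W(T)=x+B^2$. For a neutral item $i\in X$, the inequality $a_i/(x+1)>a_i/(x+B^2)$ holds whenever $B\ge 2$, so the optimal $S$ is $S=X$, with the sign reversing on $p$. This choice gives
\[
f = g(x) := \frac{x}{x+1}-\frac{x}{x+B^2} = \frac{x(B^2-1)}{(x+1)(x+B^2)}.
\]
Putting the two cases together, the optimum of the constructed instance equals $\max\bigl(0,\max_{X\subseteq[n]} g(a(X))\bigr)$.

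It remains to analyze $g$ on $x\ge 0$. Its reciprocal is, up to the positive constant $B^2-1$, equal to $x+(1+B^2)+B^2/x$, which is strictly convex with its unique minimum at $x=\sqrt{B^2}=B$. So $g$ is strictly increasing on $[0,B]$ and strictly decreasing on $[B,\infty)$. Consequently, for integers $x\ne B$ we have $g(x)<g(B)$.

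Setting the threshold $\theta=g(B)=\frac{B(B^2-1)}{(B+1)^2B}=\frac{B-1}{B+1}$, which is a rational number of polynomial size, we obtain: the RDM instance has value $\ge\theta$ if and only if some subset sums to exactly $B$. Small degenerate cases such as $B\le 1$ are decided trivially and can be mapped to fixed yes/no instances.

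The main point needing care is the first step. The optimization over all pairs $S\subseteq T$ must collapse exactly to the one-parameter function $g(a(X))$, so that no cleverer choice of $S$ or $T$ can beat $\theta$ without hitting $B$. The neutral items make $S$'s choice forced, and the strict unimodality of $g$ gives the exact-hit equivalence.
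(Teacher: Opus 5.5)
Your proposal is correct and follows essentially the same reduction as the paper: you use neutral items with $v_i=w_i=a_i$, one special item with $(v,w)=(1,B^2)$, and threshold $(B-1)/(B+1)$, with the optimal $S$ forced to equal $X$ and the resulting one-variable function uniquely maximized at $x=B$. The only differences are cosmetic: you view the inner maximization over $S$ as a total-variation distance, and you get unimodality from convexity of $1/g$ where the paper differentiates $g$ directly.
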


\begin{proof}

We prove NP-hardness by a reduction from the SUBSET-SUM problem. The SUBSET-SUM problem is: Given a set $A=\{a_1, ..., a_n\}$ of positive integers and a target positive integer $K$, does there exist a subset $I \subseteq [n]$ such that $\sum_{i \in I} a_i = K$? We can assume $K \ge 2$, as this restriction remains NP-complete.

We define the decision version of RDM (D-RDM): Given $(V, W)$ and a rational number $Z$, does there exist $S \subseteq T \subseteq [n]$ such that $f(S, T) \ge Z$?

We now construct the reduction from an instance $(A, K)$ of SUBSET-SUM (with $K\ge 2$) to an instance of D-RDM.

Let $L = K^2$. Since $K \ge 2$, we have $L > 1$. We construct an RDM instance with $n+1$ items:
\begin{itemize}
    \item For $i=1, ..., n$, set $v_i = a_i$ and $w_i = a_i$.
    \item For $i=n+1$, set $v_{n+1} = 1$ and $w_{n+1} = L$.
\end{itemize}
We set the target value for D-RDM to $Z = \frac{K-1}{K+1}$. This construction is polynomial in the size of the input $(A, K)$.

We analyze the maximum value of the objective function $f(S, T)$ for this instance, where $S \subseteq T \subseteq [n+1]$.

Case 1: $n+1 \notin T$. Then $T \subseteq [n]$. For all $i \in T$, $v_i=w_i=a_i$. Thus $V(T)=W(T)$ and $V(S)=W(S)$.
$f(S, T) = \frac{V(S)}{V(T)} - \frac{W(S)}{W(T)} = \frac{V(S)}{V(T)} - \frac{V(S)}{V(T)} = 0$.

Case 2: $n+1 \in T$. Let $T_A = T \cap [n]$. Let $x = \sum_{i \in T_A} a_i$.
$V(T) = V(T_A) + v_{n+1} = x+1$.
$W(T) = W(T_A) + w_{n+1} = x+L$.

We seek $S \subseteq T$ that maximizes $f(S, T) = \frac{V(S)}{x+1} - \frac{W(S)}{x+L}$. This is equivalent to maximizing $G(S) = (x+L)V(S) - (x+1)W(S)$.

We analyze the contribution of each element $i \in T$ to $G(S)$.
For $i \in T_A$, $v_i=w_i=a_i$. The contribution if $i \in S$ is $(x+L)a_i - (x+1)a_i = a_i(L-1)$. Since $a_i>0$ and $L>1$, this is positive. Thus, the optimal $S$ must include all $i \in T_A$.

For $i=n+1$, $v_{n+1}=1, w_{n+1}=L$. The contribution if $n+1 \in S$ is $(x+L)(1) - (x+1)(L) = x+L - xL - L = x(1-L)$.

If $x>0$ (i.e., $T_A \neq \emptyset$), since $L>1$, the contribution $x(1-L)$ is negative. Thus, the optimal $S$ must exclude $n+1$. In this case, the optimal $S$ is $S^* = T_A$.

If $x=0$ (i.e., $T_A = \emptyset$, $T=\{n+1\}$), the contribution is 0. $f(S, T)=0$ whether $S=\emptyset$ or $S=\{n+1\}$.

We focus on the case $x>0$. The maximum value for a fixed $T$ (defined by $x$) is:
\[ h(x) = f(T_A, T) = \frac{x}{x+1} - \frac{x}{x+L} = \frac{x(x+L) - x(x+1)}{(x+1)(x+L)} = \frac{x(L-1)}{(x+1)(x+L)}. \]

We want to maximize $h(x)$ over the set of achievable SUBSET-SUMs $X_A = \{A(I) \mid I \subseteq [n], I \neq \emptyset\}$.

We analyze the function $h(x)$ for $x>0$.
$h(x) = (L-1) \frac{x}{x^2+(L+1)x+L}$.
The derivative is:
\[ h'(x) = (L-1) \frac{(x^2+(L+1)x+L) - x(2x+L+1)}{((x+1)(x+L))^2}. \]
The sign of $h'(x)$ is determined by the numerator:
$N(x) = x^2+(L+1)x+L - 2x^2-(L+1)x = L-x^2$.
$h'(x)=0$ when $x=\sqrt{L}$. $h(x)$ increases for $0<x<\sqrt{L}$ and decreases for $x>\sqrt{L}$. Thus, $h(x)$ attains its unique global maximum at $x=\sqrt{L}$.

Since $L=K^2$, the unique maximum is attained at $x=K$. The maximum value is:
\[ h(K) = \frac{K(K^2-1)}{(K+1)(K+K^2)} = \frac{K(K-1)(K+1)}{(K+1)K(K+1)} = \frac{K-1}{K+1} = Z. \]

Let $OPT$ be the maximum value of the objective function for the constructed RDM instance. Since $K\ge 2$, $Z>0$. Thus $OPT = \max_{x \in X_A} h(x)$ (assuming $A$ contains at least one element, otherwise SUBSET-SUM is trivial).

We now show the equivalence between the SUBSET-SUM instance and the D-RDM instance.

($\Rightarrow$) Suppose the SUBSET-SUM instance is a YES instance. Then there exists $I \subseteq [n]$ such that $A(I)=K$. So $K \in X_A$. Then $OPT \ge h(K) = Z$. Since $Z$ is the global maximum of $h(x)$, $OPT=Z$. Thus the D-RDM instance is a YES instance.

($\Leftarrow$) Suppose the D-RDM instance is a YES instance. Then $OPT \ge Z$. Since $Z$ is the global maximum of $h(x)$, this implies $OPT=Z$. Let $x^* \in X_A$ be such that $h(x^*)=Z$. Since $x=K$ is the unique global maximizer of $h(x)$, it must be that $x^*=K$. Therefore, $K \in X_A$, which means the SUBSET-SUM instance is a YES instance.

We have provided a polynomial-time reduction from SUBSET-SUM to D-RDM. Therefore, D-RDM is NP-hard, and consequently, the optimization problem RDM is NP-hard.
\end{proof}
\end{aigenerated}

Given the success of the first attempt, using the same prompt structure, we also asked to show that a variation of the problem is NP-hard. 
This is the second proof that we obtained in this way.

Consider the \emph{MAX-DIFF-RATIO (MDR)} optimization problem: Given two lists $V=(v_1,\dots,v_n)$ and $W=(w_1,\dots,w_n)$ of positive integers, find $j \in T \subseteq [n]$ that maximizes
$$f(j, T) = \frac{v_j}{\sum_{i \in T} v_i} - \frac{w_j}{\sum_{i \in T} w_i}.$$

\begin{aigenerated}
\begin{theorem}
The MDR problem is NP-hard.
\end{theorem}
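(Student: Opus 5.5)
We reduce from SUBSET-SUM, mirroring the RDM reduction just given. The new difficulty is that the numerator is a single item $j$ rather than a set $S$. Given $(A,K)$ with $K\ge 2$, we build $n+1$ items. For $i\le n$ set $v_i=w_i=a_i$. The special item $j^\star=n+1$ gets weights chosen so that $j^\star$ is the one favored in the $V$-ratio relative to the $W$-ratio: set $v_{n+1}=L$ and $w_{n+1}=1$ with $L=K^2$. We then consider the decision version: does some $j\in T$ achieve $f(j,T)\ge Z$?

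\textbf{Case $j=n+1\in T$.} Let $x=\sum_{i\in T\cap[n]}a_i$. Then
\[ f = \frac{L}{x+L}-\frac{1}{x+1} = \frac{x(L-1)}{(x+L)(x+1)}, \]
which is exactly the function $h(x)$ analyzed in the RDM proof. It is uniquely maximized at $x=\sqrt L=K$, with value $Z=\frac{K-1}{K+1}$. So if a subset summing to $K$ exists, taking $T=I\cup\{n+1\}$ gives value $Z$. Conversely, value $\ge Z$ with $j=n+1$ forces $x=K$.

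\textbf{Case $j\le n$.} This is the main obstacle: we must show these choices never reach $Z$. If $n+1\notin T$, the value is $0$. If $n+1\in T$, then
\[ f=\frac{a_j}{x+L}-\frac{a_j}{x+1}<0. \]
Hence $\mathrm{OPT}\ge Z$ iff SUBSET-SUM is a YES instance. The requirement $j\in T$ is harmless, and $Z>0$ because $K\ge 2$.

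The steps, in order, are:
\begin{enumerate}
\item Write down the construction.
\item Dispose of the $j\le n$ cases by the sign argument above.
\item Reuse the calculus for $h$, namely the derivative numerator $L-x^2$.
\item Conclude both directions of the equivalence, together with polynomial size.
\end{enumerate}
The only real design choice is orienting the special item's weights so that it, and not the ordinary items, gains. Once that is done, the ordinary items all give nonpositive values and the argument is routine.
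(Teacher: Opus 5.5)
Your proof is correct and follows the paper's route: the same reduction from SUBSET-SUM with ordinary items $v_i=w_i$ and one special item favored in the $V$-ratio, the same sign argument showing that choices $j\le n$ give values $\le 0$, and a unimodal function whose peak sits exactly at the target sum. The only difference is the constants: you set $v_{n+1}=K^2$, $w_{n+1}=1$ so that you can reuse the RDM function $h$ with threshold $(K-1)/(K+1)$, which needs $K\ge 2$; the paper instead scales the $a_i$ by $4$ and sets $v_0=8B$, $w_0=2B$, which puts the peak at $4B$ with the constant threshold $1/3$.
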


\begin{proof}
To prove NP-hardness, we consider the corresponding decision problem, \emph{D-MDR}: Given $(V, W)$ and a rational threshold $Q$, does there exist a pair $(j, T)$ with $j \in T \subseteq [n]$ such that $f(j, T) \ge Q$? We will show that D-MDR is NP-hard by a reduction from the SUBSET-SUM problem, which is known to be NP-complete.


\paragraph{The Reduction.}
Given an instance $(A, B)$ of SUBSET-SUM, we construct an instance of D-MDR with $N=n+1$ items, indexed from $0$ to $n$. We define the lists $V=(v_0, \dots, v_n)$ and $W=(w_0, \dots, w_n)$ and the threshold $Q$.

The construction is defined as follows:
\begin{enumerate}
    \item For $i=1, \dots, n$: $v_i = 4a_i$ and $w_i = 4a_i$.
    \item For the special item $i=0$: $v_0 = 8B$ and $w_0 = 2B$.
    \item The threshold is $Q = 1/3$.
\end{enumerate}
Since $a_i$ and $B$ are positive integers, all $v_i$ and $w_i$ are positive integers. This construction is clearly computable in polynomial time.

\paragraph{Analysis of the Objective Function.}

Let $V(T) = \sum_{i \in T} v_i$ and $W(T) = \sum_{i \in T} w_i$. We analyze the objective function $f(j, T)$ for the constructed instance.

Case 1: $j \in \{1, \dots, n\}$. Let $j=k \ge 1$. We have $v_k = w_k$.

Case 1a: $0 \notin T$. Then $T \subseteq \{1, \dots, n\}$. For all $i \in T$, $v_i = w_i$, so $V(T) = W(T)$.
$$f(k, T) = \frac{v_k}{V(T)} - \frac{w_k}{W(T)} = \frac{v_k}{V(T)} - \frac{v_k}{V(T)} = 0.$$

Case 1b: $0 \in T$. Let $T = \{0\} \cup S$ where $S \subseteq \{1, \dots, n\}$ and $k \in S$. Let $X = \sum_{i \in S} 4a_i$. Then $V(S)=W(S)=X$.
$V(T) = v_0 + X = 8B + X$.
$W(T) = w_0 + X = 2B + X$.
$$f(k, T) = \frac{v_k}{8B+X} - \frac{w_k}{2B+X}.$$
Since $v_k=w_k$ and $B>0$, we have $8B+X > 2B+X$. Thus, $f(k, T) < 0$.

In Case 1, the maximum value of $f(j, T)$ is 0.

Case 2: $j=0$. We must have $0 \in T$. Let $T = \{0\} \cup S$ for some $S \subseteq \{1, \dots, n\}$. Let $X = V(S) = W(S) = \sum_{i \in S} 4a_i$.
$$f(0, T) = \frac{v_0}{V(T)} - \frac{w_0}{W(T)} = \frac{8B}{8B+X} - \frac{2B}{2B+X}.$$
Let $h(X) = \frac{8B}{8B+X} - \frac{2B}{2B+X}$. We analyze $h(X)$ for $X \ge 0$.
\begin{align*}
h(X) &= \frac{8B(2B+X) - 2B(8B+X)}{(8B+X)(2B+X)} \\
&= \frac{16B^2+8BX - 16B^2-2BX}{(8B+X)(2B+X)} = \frac{6BX}{(8B+X)(2B+X)}.
\end{align*}
We examine the derivative of $h(X)$ with respect to $X$. The sign of $h'(X)$ is determined by the sign of the numerator when using the quotient rule:
$$N(X) = (6B)(8B+X)(2B+X) - (6BX)((8B+X)+(2B+X)).$$
Dividing by $6B$ (since $B>0$), the sign is determined by:
\begin{align*}
\frac{N(X)}{6B} &= (16B^2 + 10BX + X^2) - X(10B+2X) \\
&= 16B^2 + 10BX + X^2 - 10BX - 2X^2 = 16B^2 - X^2.
\end{align*}
Thus, $h'(X)=0$ if and only if $X^2 = 16B^2$. Since $X \ge 0$ and $B>0$, this occurs at $X=4B$. The function $h(X)$ increases for $X < 4B$ and decreases for $X > 4B$. The unique global maximum of $h(X)$ occurs at $X=4B$.

The maximum value is:
$$h(4B) = \frac{6B(4B)}{(8B+4B)(2B+4B)} = \frac{24B^2}{(12B)(6B)} = \frac{24B^2}{72B^2} = \frac{1}{3}.$$

\paragraph{Proof of Equivalence.}

We show that the SUBSET-SUM instance $(A, B)$ is a Yes instance if and only if the maximum value of $f(j, T)$ is at least $Q=1/3$.

($\implies$) Suppose the SUBSET-SUM instance is Yes. Then there exists a subset $S \subseteq \{1, \dots, n\}$ such that $\sum_{i \in S} a_i = B$. This implies that $X = \sum_{i \in S} 4a_i = 4B$ is achievable. Let $j=0$ and $T=\{0\} \cup S$. Then $f(0, T) = h(4B) = 1/3$. Thus, the D-MDR  instance is Yes.

($\impliedby$) Suppose the D-MDR instance is Yes. The maximum value of $f(j, T)$ is $\ge 1/3$. From our analysis, if $j \ge 1$, $f(j, T) \le 0$. Therefore, the maximum must be achieved when $j=0$. The objective function is $h(X)$, where $X$ is an achievable SUBSET-SUM of $\{4a_i\}$. Since the maximum possible value of $h(X)$ is $1/3$, and this maximum is achieved uniquely at $X=4B$, the maximum value being $\ge 1/3$ implies it must be exactly $1/3$. This means $X=4B$ must be achievable. That is, there exists $S \subseteq \{1, \dots, n\}$ such that $\sum_{i \in S} 4a_i = 4B$, which implies $\sum_{i \in S} a_i = B$. Thus, the SUBSET-SUM instance is Yes.

Since we have a polynomial-time reduction from the NP-complete problem SUBSET-SUM to D-MDR, the latter is NP-hard. Consequently, the MDR  optimization problem is NP-hard.
\end{proof}
\end{aigenerated}

\newpage 

\subsection{Machine Learning Optimization: Self-regularized Gumbel Sigmoid}\label{sec:srgs}
\textit{Written by Lin Chen, Gang Fu, and David P. Woodruff.}

\subsubsection*{Problem Context}
Subset selection is a core problem in machine learning, relevant to tasks like feature selection and embedding dimension optimization. Current methods like Dropout Feature Ranking (DFR) rely on explicit penalty terms (e.g., $\ell_1$ regularization) to control the number of selected features. However, tuning the regularization strength $\lambda$ is difficult and sensitive, often hindering practical application. A new method, \textbf{Self-regularized Gumbel Sigmoid (SrGS)}, was proposed to eliminate this hyperparameter by using an implicit regularization mechanism involving Softmax competition and budget-aware clipping. The theoretical question was to understand exactly \textit{why} and \textit{how} this parameterization enforces sparsity without an explicit penalty.

\subsubsection*{AI Contribution}
The researchers provided the model with the mathematical definition of SrGS and asked for a theoretical analysis of its implicit regularization and behavior in the low-temperature limit.
\begin{itemize}
    \item \textbf{Implicit $\ell_0$ Penalty:} The model derived that in the low-temperature limit ($T \to 0$), the variance of the Gumbel-Sigmoid distribution acts as an exact relaxation of the $\ell_0$ constraint, effectively penalizing non-binary solutions.
    \item \textbf{Adaptive Hybrid Regularization:} It analyzed the deterministic limit (using expectations) and proved that the method induces a unique "hybrid" regularization: strong signals are subject to $\ell_2$ shrinkage (preserving magnitude), while weak signals competing for the remaining budget are suppressed by a non-convex $\ell_{2/3}$ penalty.
    \item \textbf{Theoretical Validation:} These derivations provided the rigorous theoretical justification for the method's empirical success, confirming it solves a relaxed subset selection problem without manual tuning.
\end{itemize}

\subsubsection*{Technical Details: Theoretical Analysis of SrGS}

We employed the model to identify and resolve a key theoretical gap regarding the implicit mechanisms of the Self-regularized Gumbel Sigmoid (SrGS) method.

\paragraph{Problem Formulation}

Subset selection is a core problem central to multiple optimization tasks in machine learning. Existing methods, such as Dropout Feature Ranking (DFR) (Chang et al., 2017), approach this by optimizing a variational dropout mask $z$ on the input layer. However, DFR relies on an explicit penalty term in the loss function to constrain the number of active features. The objective function is formulated as:
\[
\mathcal{L}(\theta) = \underbrace{-\frac{1}{M}\sum_{i=1}^{M} \log p(y_i | f(x_i \odot z_i; \theta))}_{\text{Task Loss (e.g., Cross-Entropy or MSE)}} + \frac{\lambda}{M} \sum_{i=1}^{M} \sum_{j=1}^{D} z_{ij}
\]
where the first term represents the task-specific prediction error (negative log-likelihood), $M$ is the batch size, $D$ is the number of features, and $z_{ij}$ is the stochastic gate for the $j$-th feature. Determining the optimal regularization strength $\lambda$ requires extensive cross-validation and often becomes a blocker for successful application due to its sensitivity.

The \textbf{SrGS method} was proposed to overcome this limitation by eliminating the explicit penalty term $\lambda$. Instead, it introduces a structural ``self-regularization'' mechanism defined by the following process:
\begin{enumerate}
    \item \textbf{Competition:} For each feature $j$, a learnable logit $t_j$ is normalized via Softmax to produce an attention score $S_j$, enforcing global competition:
    \[ S_j = \frac{\exp(t_j)}{\sum_{k=1}^{D} \exp(t_k)} \]
    \item \textbf{Budgeting:} The scores are scaled by the target budget $K$ and clipped to form selection probabilities $z_j$, ensuring the expected number of selected features approximates $K$:
    \[ z_j = \text{Clip}(S_j \cdot K, \varepsilon, 1) \]
    \item \textbf{Stochastic Selection:} The final binary masks $w_j$ are generated via the differentiable Gumbel-Sigmoid reparameterization:
    \[ w_j = \sigma\left(\frac{1}{T}(\log z_j - \log(1-z_j) + \log u_j - \log(1-u_j))\right) \]
\end{enumerate}

While SrGS empirically demonstrates superior solution quality without hyperparameter tuning, the theoretical basis for this parameterization remains an open question. Specifically:
\begin{enumerate}
    \item Does this specific parameterization rigorously enforce sparsity in the low-temperature limit ($T \to 0$)?
    \item What is the exact mathematical form of the implicit regularization induced by this mechanism when combined with standard weight decay?
\end{enumerate}

\begin{chatbox}

\begin{userturn}
We identify and resolve a key theoretical gap regarding the implicit mechanisms of the Self-regularized Gumbel Sigmoid (SrGS) method.

\paragraph{Question:}
Subset selection is a core problem central to multiple optimization tasks in machine learning. Existing methods, such as Dropout Feature Ranking (DFR), approach this by optimizing a variational dropout mask $z$ on the input layer. However, DFR relies on an explicit penalty term in the loss function to constrain the number of active features. The objective function is formulated as:

\begin{equation}
\mathcal{L}(\theta) = \underbrace{-\frac{1}{M}\sum_{i=1}^{M} \log p(y_i | f(x_i \odot z_i; \theta))}_{\text{Task Loss (e.g., Cross-Entropy or MSE)}} + \frac{\lambda}{M} \sum_{i=1}^{M} \sum_{j=1}^{D} z_{ij}
\end{equation}
where the first term represents the task-specific prediction error (negative log-likelihood), $M$ is the batch size, $D$ is the number of features, and $z_{ij}$ is the stochastic gate for the $j$-th feature. Determining the optimal regularization strength $\lambda$ requires extensive cross-validation and often becomes a blocker for successful application due to its sensitivity.

The \textbf{SrGS method} was proposed to overcome this limitation by eliminating the explicit penalty term $\lambda$. Instead, it introduces a structural ``self-regularization'' mechanism defined by the following process:
\begin{enumerate}
    \item \textbf{Competition:} For each feature $j$, a learnable logit $t_j$ is normalized via Softmax to produce an attention score $S_j$, enforcing global competition:
    \[ S_j = \frac{\exp(t_j)}{\sum_{k=1}^{D} \exp(t_k)} \]
    \item \textbf{Budgeting:} The scores are scaled by the target budget $K$ and clipped to form selection probabilities $z_j$, ensuring the expected number of selected features approximates $K$:
    \[ z_j = \text{Clip}(S_j \cdot K, \epsilon, 1) \]
    \item \textbf{Stochastic Selection:} The final binary masks $w_j$ are generated via the differentiable Gumbel-Sigmoid reparameterization:
    \[ w_j = \sigma\left(\frac{1}{T}(\log z_j - \log(1-z_j) + \log u_j - \log(1-u_j))\right) \]
\end{enumerate}

While SrGS empirically demonstrates superior solution quality without hyperparameter tuning, the theoretical basis for this parameterization remains an open question. Specifically:
\begin{enumerate}
    \item Does this specific parameterization rigorously enforce sparsity in the low-temperature limit ($T \to 0$)?
    \item What is the exact mathematical form of the implicit regularization induced by this mechanism when combined with standard weight decay?
\end{enumerate}

\end{userturn}

\begin{assistantturn}
We answer these questions affirmatively. We analyze the method from two perspectives: the behavior of the stochastic objective in the low-temperature limit, and the implicit regularization induced by its parameterization.

\section*{1. Stochastic Analysis: Exact $\ell_0$ Relaxation}

We first analyze the objective in the limit $T \to 0$. In this limit, the expected loss decomposes into a deterministic loss and a variance penalty term $R_{var}(z, \theta)$. The optimization problem effectively minimizes:
\[
\min_{\theta, z} \mathcal{L}_{det}(z, \theta) + R_{var}(z, \theta)
\]
The model derived that the variance penalty takes the form $R_{var}(z, \theta) = \sum_{j} ||X_j||_2^2 \theta_j^2 z_j(1-z_j)$. Minimizing this concave function over the hypersimplex encourages solutions at the vertices (binary selection).

\textbf{Theorem 2 (Implicit $\ell_0$ Penalty).} The implicit regularization induced by the variance penalty, defined as $R^*(\beta) = \inf \{ R_{var}(z, \theta) \mid z \in H(D,K), z \circ \theta = \beta \}$, is an exact penalty function for the $\ell_0$ constraint. Specifically:
\[
R^*(\beta) = 0 \iff ||\beta||_0 \le K
\]
This result mathematically validates that SrGS provides a continuous, exact relaxation of the combinatorial Best Subset Selection problem.

\section*{2. Deterministic Analysis: Adaptive Hybrid Regularization}

We further analyzed the implicit regularization in the deterministic regime where the mask $w$ is replaced by its expectation $z$, combined with $\ell_2$ regularization (weight decay) on $\theta$. 

The model discovered that under the budget constraint $\sum z_j = K$ and the physical constraint $0 \le z_j \le 1$, the optimization landscape induces a unique \textbf{Adaptive Hybrid Regularization}.

\textbf{Theorem 4 (Hybrid Regularization).} The implicit regularizer $R_{DR}^*(\beta)$ partitions the features into two sets based on a data-dependent threshold $\tau$:
\begin{itemize}
    \item A \textbf{Saturated Set ($A$)}: Features with strong signals ($|\beta_j|^{2/3} \ge \tau$) are assigned full probability $z_j=1$. These features are subject to standard $\ell_2$ regularization ($\lambda ||\beta_A||_2^2$), which preserves signal magnitude.
    \item A \textbf{Fractional Set ($F$)}: Features with weak signals compete for the remaining budget $K_F = K - |A|$. These features are subject to an aggressive $\ell_{2/3}$ penalty:
\end{itemize}
\[
R_{DR}^*(\beta) = \lambda ||\beta_{A}||_2^2 + \frac{\lambda}{K_F^2} ||\beta_{F}||_{2/3}^2
\]
This derivation reveals the mechanism behind SrGS's superior performance: it automatically applies $\ell_2$ protection to key features while subjecting noise features to non-convex $\ell_{2/3}$ compression.
\end{assistantturn}

\end{chatbox}

\captionof{figure}{Model resolution of the new theory}
\label{fig:gumbel_resolution}

\paragraph{Theoretical Findings}

We answer these questions affirmatively. We utilized the model to analyze the method from two perspectives: the behavior of the stochastic objective in the low-temperature limit, and the implicit regularization induced by its parameterization.

\textbf{Stochastic Analysis: Exact $\ell_0$ Relaxation.}
We first analyze the objective in the limit $T \to 0$. In this limit, the expected loss decomposes into a deterministic loss and a variance penalty term $R_{var}(z, \theta)$. The optimization problem effectively minimizes:
\[
\min_{\theta, z} \mathcal{L}_{det}(z, \theta) + R_{var}(z, \theta)
\]
The model derived that the variance penalty takes the form $R_{var}(z, \theta) = \sum_{j} ||X_j||_2^2 \theta_j^2 z_j(1-z_j)$. Minimizing this concave function over the hypersimplex encourages solutions at the vertices (binary selection).

\begin{theorem}[Implicit $\ell_0$ Penalty]
The implicit regularization induced by the variance penalty, defined as $R^*(\beta) = \inf \{ R_{var}(z, \theta) \mid z \in H(D,K), z \circ \theta = \beta \}$, is an exact penalty function for the $\ell_0$ constraint. Specifically:
\[
R^*(\beta) = 0 \iff ||\beta||_0 \le K
\]
\end{theorem}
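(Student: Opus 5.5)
Since every term of $R_{var}(z,\theta)=\sum_j \|X_j\|_2^2\theta_j^2 z_j(1-z_j)$ is nonnegative on the feasible region, $R^*(\beta)\ge 0$ always. The statement therefore reduces to characterizing when the infimum equals $0$. I will treat $H(D,K)=\{z\in[0,1]^D:\sum_j z_j=K\}$ as the hypersimplex. I will also assume the nondegeneracy condition $\|X_j\|_2>0$ for every column; if a column vanishes, that coordinate contributes no penalty and the statement has to be restricted to the remaining coordinates. I prove the two directions separately.

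\emph{($\Leftarrow$) Sparse $\beta$ gives zero penalty.} Suppose $\|\beta\|_0\le K$, and let $A=\mathrm{supp}(\beta)$. Choose a vertex of $H(D,K)$: set $z_j=1$ on $A$, then add $K-|A|$ further coordinates outside $A$ with $z_j=1$, and set $z_j=0$ elsewhere. This is possible because $|A|\le K\le D$. Set $\theta_j=\beta_j$ on $A$ and $\theta_j=0$ elsewhere. Then $z\circ\theta=\beta$, and every $z_j\in\{0,1\}$, so every factor $z_j(1-z_j)$ vanishes. Hence $R_{var}=0$ and $R^*(\beta)=0$.

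\emph{($\Rightarrow$) Zero infimum forces sparsity.} This is the direction I expect to be the main obstacle, because $R^*$ is an infimum and need not be attained. For instance, one could try $z_j\to 0$ with $\theta_j=\beta_j/z_j$ blowing up. I handle this with an explicit lower bound rather than a compactness argument.

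Fix a feasible pair and any $j$ with $\beta_j\neq 0$. Then $z_j>0$ and $\theta_j=\beta_j/z_j$, so
\[
\|X_j\|_2^2\theta_j^2 z_j(1-z_j)=\|X_j\|_2^2\beta_j^2\,\frac{1-z_j}{z_j}.
\]
Now suppose $\|\beta\|_0\ge K+1$. Since $\sum_j z_j=K$, not all of the $K+1$ (or more) support coordinates can be close to $1$. Concretely, if every support coordinate had $z_j\ge 1-\delta$, their sum would be at least $(K+1)(1-\delta)$, which exceeds $K$ once $\delta<1/(K+1)$. So some $j\in\mathrm{supp}(\beta)$ has $z_j\le 1-\tfrac{1}{K+1}$. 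For that $j$, using $z_j\le 1$ in the denominator, $(1-z_j)/z_j\ge 1-z_j\ge \tfrac{1}{K+1}$. Therefore
\[
R^*(\beta)\ \ge\ \frac{1}{K+1}\min_{j\in\mathrm{supp}(\beta)}\|X_j\|_2^2\beta_j^2\ >\ 0.
\]
This bound is uniform over all feasible $(z,\theta)$, so the infimum is strictly positive and non-attainment causes no difficulty. Taking the contrapositive, $R^*(\beta)=0$ implies $\|\beta\|_0\le K$.

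Combining the two directions gives $R^*(\beta)=0 \iff \|\beta\|_0\le K$. The penalty is therefore exact: it vanishes precisely on the $\ell_0$-feasible set and is bounded away from zero off it.

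As a remark connecting to the concavity mentioned before the theorem, $R_{var}$ is concave in $z$ for fixed $\theta$. Its minimizers over the polytope thus lie at vertices of $H(D,K)$, which explains why the construction in ($\Leftarrow$) is the natural one. The proof itself does not need this fact.
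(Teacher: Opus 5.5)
Your proof is correct, and it goes further than the paper, which states this theorem without proof. The only justification the paper offers is the remark that $R_{var}$ is concave in $z$, so that minimizing it over the hypersimplex favors vertices.

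That heuristic, which your closing remark repeats, is not really the mechanism. In the infimum defining $R^*$, the constraint $z\circ\theta=\beta$ pins $\theta_j=\beta_j/z_j$ on $S=\mathrm{supp}(\beta)$, so $z$ cannot be varied with $\theta$ held fixed. Eliminating $\theta$ (and taking $\theta_j=0$ off $S$) gives $R^*(\beta)=\inf_{z}\sum_{j\in S}\|X_j\|_2^2\beta_j^2\,(1/z_j-1)$, which is \emph{convex} in $z$. Moreover, when $\|\beta\|_0>K$ no vertex of $H(D,K)$ is even feasible, because every vertex has some $z_j=0$ with $j\in S$.

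What actually drives the equivalence is exactly what you use: each support term vanishes only at $z_j=1$, and the budget $\sum_j z_j=K$ allows at most $K$ such coordinates. Your pigeonhole estimate $R^*(\beta)\ge\frac{1}{K+1}\min_{j\in S}\|X_j\|_2^2\beta_j^2$ makes this quantitative and uniform over feasible pairs. That uniformity is what rules out the escape route $z_j\to 0$, $\theta_j\to\infty$, which a vertex/concavity argument never addresses.

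Your route also surfaces a hypothesis that the paper's statement omits and that is genuinely necessary, namely $\|X_j\|_2>0$. If $X_j=0$ for some $j\in S$, that coordinate carries $\beta_j$ at no cost. For instance, take $D=2$, $K=1$, $X_1\neq 0$, $X_2=0$ and $\beta=(1,1)$. The choice $z=(1-\varepsilon,\varepsilon)$, $\theta=(1/(1-\varepsilon),1/\varepsilon)$ gives $R_{var}=\|X_1\|_2^2\,\varepsilon/(1-\varepsilon)\to 0$. So $R^*(\beta)=0$ even though $\|\beta\|_0=2>K$.
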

This result mathematically validates that SrGS provides a continuous, exact relaxation of the combinatorial Best Subset Selection problem.

\textbf{Deterministic Analysis: Adaptive Hybrid Regularization.}
We further analyzed the implicit regularization in the deterministic regime where the mask $w$ is replaced by its expectation $z$, combined with $\ell_2$ regularization (weight decay) on $\theta$. 

The model discovered that under the budget constraint $\sum z_j = K$ and the physical constraint $0 \le z_j \le 1$, the optimization landscape induces a unique \textbf{Adaptive Hybrid Regularization}.

\begin{theorem}[Hybrid Regularization]
The implicit regularizer $R_{DR}^*(\beta)$ partitions the features into two sets based on a data-dependent threshold $\tau$:
\begin{itemize}
    \item A \textbf{Saturated Set ($A$)}: Features with strong signals ($|\beta_j|^{2/3} \ge \tau$) are assigned full probability $z_j=1$. These features are subject to standard $\ell_2$ regularization ($\lambda ||\beta_A||_2^2$), which preserves signal magnitude.
    \item A \textbf{Fractional Set ($F$)}: Features with weak signals compete for the remaining budget $K_F = K - |A|$. These features are subject to an aggressive $\ell_{2/3}$ penalty:
\end{itemize}
\[
R_{DR}^*(\beta) = \lambda ||\beta_{A}||_2^2 + \frac{\lambda}{K_F^2} ||\beta_{F}||_{2/3}^2
\]
\end{theorem}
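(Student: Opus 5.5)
The plan is to reduce the implicit regularizer to an explicit convex program in the mask variables alone, and then solve that program in closed form by a water-filling argument on its KKT conditions. In the deterministic regime the mask is $z$ itself, and weight decay contributes $\lambda\|\theta\|_2^2$. The implicit regularizer is therefore
\[
R_{DR}^*(\beta)=\inf\Bigl\{\lambda\|\theta\|_2^2 \;:\; z\in H(D,K),\ z\circ\theta=\beta\Bigr\},
\qquad H(D,K)=\Bigl\{z:\textstyle\sum_j z_j=K,\ 0\le z_j\le 1\Bigr\}.
\]
Coordinates with $\beta_j=0$ can take $\theta_j=0$ at zero cost, so they can be dropped. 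On the remaining support, the constraint forces $z_j>0$ and $\theta_j=\beta_j/z_j$. This eliminates $\theta$ and leaves
\[
R_{DR}^*(\beta)=\lambda\,\min_{z}\ \sum_{j:\beta_j\neq 0}\frac{\beta_j^2}{z_j^2}
\quad\text{s.t.}\quad \sum_j z_j\le K,\ \ 0<z_j\le 1 .
\]
Since the objective is decreasing in each $z_j$, the budget constraint can be relaxed to an inequality, and any leftover budget is parked on the zero coordinates. The map $z\mapsto \beta^2/z^2$ is strictly convex on $(0,\infty)$, so this is a convex program with a unique minimizer on the support. I would first check feasibility: it requires $|\mathrm{supp}(\beta)|\cdot 0^+ < K$, which always holds. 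Existence of the minimizer then follows by coercivity, since the objective tends to $\infty$ as any $z_j\to 0$.

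Next I would write the KKT conditions. With multiplier $\nu\ge 0$ for the budget and $\mu_j\ge 0$ for the caps $z_j\le 1$, stationarity reads $2\beta_j^2/z_j^3=\nu+\mu_j$. For uncapped coordinates ($\mu_j=0$) this gives $z_j=c\,|\beta_j|^{2/3}$ with $c=(2/\nu)^{1/3}$. For capped coordinates it gives $z_j=1$, which requires $c|\beta_j|^{2/3}\ge 1$. Hence
\[
z_j=\min\bigl(1,\ c|\beta_j|^{2/3}\bigr).
\]
Setting $\tau:=1/c$ yields the two sets in the statement: the saturated set $A=\{j: |\beta_j|^{2/3}\ge\tau\}$ and the fractional set $F$ consisting of the rest of the support. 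On $F$ the budget identity $\sum_{F} c|\beta_j|^{2/3}=K-|A|=K_F$ fixes $c=K_F/\|\beta_F\|_{2/3}^{2/3}$. Substituting back, the saturated part contributes $\sum_A\beta_j^2=\|\beta_A\|_2^2$. The fractional part contributes
\[
\sum_F \frac{\beta_j^2}{c^2|\beta_j|^{4/3}}=\frac{1}{c^2}\sum_F|\beta_j|^{2/3}=\frac{\bigl(\sum_F|\beta_j|^{2/3}\bigr)^3}{K_F^2}=\frac{\|\beta_F\|_{2/3}^2}{K_F^2}.
\]
This is exactly the claimed formula once multiplied by $\lambda$. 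Because the problem is convex, satisfying KKT is sufficient for global optimality.

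The main obstacle is showing that the threshold $\tau$ is well defined, i.e.\ that the self-referential condition "$A$ is the set above $\tau$, and $\tau$ is determined by $A$ through the budget" has a consistent solution. My approach is to sort $|\beta_{(1)}|\ge|\beta_{(2)}|\ge\cdots$ and define
\[
g(c)=\sum_j\min\bigl(1,c|\beta_j|^{2/3}\bigr).
\]
The function $g$ is continuous and nondecreasing in $c$, with $g(0)=0$ and $g(\infty)=|\mathrm{supp}(\beta)|$. So if $\|\beta\|_0>K$ there is a unique $c$ with $g(c)=K$, and $A$ is a top-$m$ prefix of the sorted list. Two degenerate cases must be handled separately. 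When $\|\beta\|_0\le K$, the optimum puts $z_j=1$ on the whole support, so $F=\emptyset$ and the formula reduces to $\lambda\|\beta\|_2^2$; I adopt the convention that the $F$-term vanishes, which also avoids $K_F=0$. A lower clip $z_j\ge\varepsilon$ would only perturb the solution when $c|\beta_j|^{2/3}<\varepsilon$, and I would treat it in the $\varepsilon\to 0$ limit or as an additional active set.
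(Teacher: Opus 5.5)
The paper does not actually prove this theorem. It reports the model's summary and restates the formula, and it never writes out an explicit definition of $R_{DR}^*$. So there is no argument of the paper's to compare against, and your proposal supplies the missing derivation.

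Your formalization, $R_{DR}^*(\beta)=\inf\{\lambda\|\theta\|_2^2 : z\in H(D,K),\ z\circ\theta=\beta\}$, is the natural deterministic analogue of the paper's $R^*$ in the preceding theorem. The rest of the argument checks out:
\begin{itemize}
\item Eliminating $\theta$ leaves a strictly convex separable program in $z$.
\item Stationarity $2\beta_j^2/z_j^3=\nu+\mu_j$ gives $z_j=\min(1,c|\beta_j|^{2/3})$.
\item The active budget on $F$ forces $c=K_F/\|\beta_F\|_{2/3}^{2/3}$. The fractional cost is therefore $(\sum_F|\beta_j|^{2/3})^3/K_F^2=\|\beta_F\|_{2/3}^2/K_F^2$.
\item The monotone function $g(c)$ shows that $\tau=1/c$ exists and is unique when $\|\beta\|_0>K$.
\item In that case $K_F>0$ automatically, since $g(c)=K<\|\beta\|_0$ forces at least one coordinate to be strictly fractional.
\item Convexity makes the constructed KKT point globally optimal.
\end{itemize}

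Two small remarks.
\begin{itemize}
\item Parking the leftover budget $\max(0,K-\|\beta\|_0)$ on the coordinates with $\beta_j=0$ uses $K\le D$. You should state this; it is needed for $H(D,K)\neq\emptyset$ anyway.
\item Your aside on the lower clip is slightly off. If $z_j\ge\varepsilon$ for all $j$, every coordinate with $\beta_j=0$ must absorb at least $\varepsilon$ of the budget. The effective budget on the support therefore shrinks by $(D-\|\beta\|_0)\varepsilon$ even when no support coordinate hits the lower clip, so the perturbation is not confined to the case $c|\beta_j|^{2/3}<\varepsilon$. This lies outside the theorem as stated, which assumes $0\le z_j\le 1$, so it does not affect your proof.
\end{itemize}
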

This derivation reveals the mechanism behind SrGS's superior performance: it automatically applies $\ell_2$ protection to key features while subjecting noise features to non-convex $\ell_{2/3}$ compression.

\newpage

\newpage

\newpage
\subsection{Mechanism Design: Revelation Principle Reduction Domain Extension}\label{sec:md4llm}
\textit{Written by Song Zuo and David P. Woodruff}.

\subsubsection*{Problem Context}
The main task for the model was to extend the key theoretical results of the paper ``Mechanism Design for Large Language Models'' \cite{dutting2024} from rational bids ($\mathbb{Q}_+^n$) to real bids ($\mathbb{R}_+^n$). The original paper's Revelation Principle (Theorem 3.5) relied on the countability of the bid space. The generalization was expected to heavily rely on order theory and topological arguments as the assumptions are much weaker than standard auction theory setups. To avoid the additional mathematical complexity, the authors restricted bids to be rational.

\subsubsection*{AI Contributions}
We fed the original paper \cite{dutting2024} to the model and asked it to bridge the rational to real bids gap for Theorem 3.5.
\begin{itemize}
    \item \textbf{Proof Attempts and Improvement Reviews:} The model initially came up with some coarse proofs from the initial ask. Then we also asked the model to review the draft and provide improvement suggestions. We iterated through this process with human selection as well as additional guidance, and finally reached the complete and rigorous results.
    \item \textbf{Free Bonus on Extension of Theorem 3.12:} The model volunteered to also extend Theorem 3.12 to general settings.
    \item \textbf{Full Writing with Strategic Guideline and Suggestions from Human:} The entire material in the next Technical Details section is fully written by the model with human suggestions (and some minor corrections).
\end{itemize}

\subsubsection*{Human Interventions}
Throughout the iteration process, human interventions mainly happen at correcting the model generated review feedback. Sometimes, the model suggests something distracting, misleading, or even mathematically suspicious. Human interventions then can help the model get out of potential dead-loops.

\subsubsection*{Technical Details} 
The original paper \cite{dutting2024} introduces a Token Auction Model where agents, represented by Large Language Models (LLMs), submit scalar bids to influence the generation of content. The mechanism $\mathcal{M} = \langle q, z \rangle$ consists of an aggregation function $q$ that combines agents' preferred distributions $p$ based on their bids $\mathbf{b}$, and a payment function $z$. 

The paper establishes a Revelation Principle (\cite[Theorem 3.5]{dutting2024}), stating that any mechanism satisfying Payment Monotonicity and Consistent Aggregation is strategically equivalent to a mechanism with a Monotone Aggregation Function. The proof relies on the assumption that the bid space is $\mathbb{Q}_+^n$, utilizing the countability of the bids to construct utility representations. Furthermore, the paper proves the existence of Stable Sampling (\cite[Theorem 3.12]{dutting2024}). The original proof implicitly requires the aggregation function to be absolutely continuous by relying on Lebesgue's Differentiation Theorem and the Fundamental Theorem of Calculus. 

\textbf{Open Problem:} Extend the Revelation Principle and the existence of Stable Sampling to the more natural domain of real-valued bids, $\mathbb{R}_+^n$, while minimizing additional assumptions and providing rigorous proofs that do not rely on countability or absolute continuity. We will show this is possible by assuming the token set $T$ is finite, which is standard in practice.

\paragraph{Key Definitions}

We restate the key definitions from \cite{dutting2024} for completeness. Let $\Delta(T)$ be the set of distributions over tokens $T$. Let $\succeq_i$ be the preference relation of agent $i$ over $\Delta(T)$. 

\begin{definition}[Robust Preferences]\label{def:robust}
Given preferred distribution $p_i$, $q \succeq_i q'$ if $\forall t \in T, |q(t) - p_i(t)| \le |q'(t) - p_i(t)|$ and $(q(t) - p_i(t))(q'(t) - p_i(t)) \ge 0$.
\end{definition}

\begin{definition}[Payment Monotonicity]\label{def:paymono}
Mechanism $\mathcal{M} = \langle q, z \rangle$ satisfies payment monotonicity if for all $p, \mathbf{b}_{-i}, b_i, b'_i$: $z_i(b_i, \mathbf{b}_{-i}, p) \ge z_i(b'_i, \mathbf{b}_{-i}, p) \iff q(b_i, \mathbf{b}_{-i}, p) \succeq_i q(b'_i, \mathbf{b}_{-i}, p)$.
\end{definition}

\begin{definition}[Consistent Aggregation]\label{def:constagg}
$q(\mathbf{b}, p)$ is consistent if: if $q(b_i, \mathbf{b}_{-i}, p) \succ_i q(b'_i, \mathbf{b}_{-i}, p)$ for some $\mathbf{b}_{-i}$, then for all $\mathbf{b}'_{-i}$, $q(b_i, \mathbf{b}'_{-i}, p) \succeq_i q(b'_i, \mathbf{b}'_{-i}, p)$.
\end{definition}

\begin{definition}[Monotone Aggregation Function]\label{def:aggmono}
$q(\mathbf{b}, p)$ is monotone if for all $p, \mathbf{b}_{-i}$ and $b_i \ge b'_i$: $q(b_i, \mathbf{b}_{-i}, p) \succeq_i q(b'_i, \mathbf{b}_{-i}, p)$.
\end{definition}

\subsubsection*{Methodology and Assumptions}

We extend the bid space to $\mathbb{R}_+^n$. We assume the token set $T$ is finite. This ensures the space of distributions $\Delta(T)$ is a finite-dimensional simplex, which is a compact, metrizable, and second-countable space.

\paragraph{Extending the Revelation Principle}

The main challenge in extending Theorem 3.5 is the failure of the countability argument used in \cite{dutting2024}. \textbf{Approach:} We replace the countability arguments with topological and order-theoretic methods. We establish a utility representation using the topological properties of the codomain. Then, we prove a general Monotone Extension Lemma that allows extending the transformed monotone aggregation function from the image of the utility representation to the entire bid space $\mathbb{R}_+^n$. This approach relies only on the continuity of the underlying preferences and the compactness of $\Delta(T)$, removing the need for continuity assumptions on the aggregation function $q$. This approach requires the following assumptions:

\begin{assumption}[A1: Continuous Preferences]\label{assmp:a1}
The partial order $\succeq_i$ on $\Delta(T)$ is continuous (i.e., its graph is closed in $\Delta(T) \times \Delta(T)$).
\end{assumption}

We also formalize an assumption that was inherent in the original framework: 

\begin{assumption}[A2: Anti-symmetry of Preferences]\label{assmp:a2}
The order $\succeq_i$ is anti-symmetric (i.e., ($q \succeq_i q'$ and $q' \succeq_i q) \implies q = q'$). Here $\sim_i$ denotes the symmetric component of $\succeq_i$. This means $\succeq_i$ is a true partial order, not just a preorder.
\end{assumption}

Robust Preferences satisfy A1 and A2. However, we state A2 explicitly as the Revelation Principle applies to general partial orders. A2 is crucial to ensure that the transformed aggregation function $\tilde{q}$ is well-defined. Combined with Payment Monotonicity, it ensures that payments are invariant if the allocation remains the same, which is necessary for the transformed payment function $\tilde{z}$ to be well-defined. 

It is worth noting that A2 is restrictive in the context of LLMs, where preferences are typically defined via continuous loss functions (e.g., KL-divergence) which naturally have non-trivial level sets (i.e., distinct distributions can be equally preferred). If A2 is relaxed, the Revelation Principle can still theoretically hold for preorders if the mechanism specifies a consistent tie-breaking rule across indifference classes (see the Discussion and Limitation section for the topological limitations of tie-breaking).

To ensure that payment functions and strategy mappings remain well-defined in the measure-theoretic sense (which will be essential for integration in later sections), we explicitly state the following measurability assumption:

\begin{assumption}[A3: Measurability]\label{assmp:a3}
The initial aggregation function $q(\mathbf{b}, p)$ is Borel-measurable with respect to the bids $\mathbf{b}$.
\end{assumption}

By removing any continuity assumption on the aggregation function $q$, our Extended Revelation Principle applies to a broad class of mechanisms, including those with discontinuous allocation rules (e.g., step functions) common in mechanism design.

\paragraph{Extending Stable Sampling.}

The challenge in extending Theorem 3.12 is the reliance on differentiation in the original proof. \textbf{Approach:} We use a measure-theoretic approach. We associate Lebesgue-Stieltjes (LS) measures with the monotone components of the aggregation function and use the Radon-Nikodym theorem to characterize the transport of probability mass. We carefully define the measures on $(0, \infty)$ to rigorously handle the boundary at $b=0$.

Importantly, the general Revelation Principle (Theorem 3.5) established in the next section guarantees the existence of an equivalent monotone mechanism, but it does not necessarily guarantee right-continuity without loss of generality. However, this technical assumption is standard for defining Lebesgue-Stieltjes measures associated with increasing functions. Thus, for the Stable Sampling construction, we explicitly require:

\begin{assumption}[A4: Right-Continuity]\label{assmp:a4}
The aggregation function $q(\mathbf{b}, p)$ is right-continuous in $\mathbf{b}$.
\end{assumption}

\subsubsection*{Extended Revelation Principle}

We now formally prove the extension of the Revelation Principle to $\mathbb{R}_+^n$, assuming A1, A2, and A3 hold.

\paragraph{Utility Representation}

\begin{lemma}[Extended Lemma 3.6]\label{lem:md4llm3.6}
Assume A1. Let the bid space be $\mathbb{R}_+^n$. For any distribution aggregation function $q$, there exists a payment function $z$ such that mechanism $\mathcal{M} = \langle q, z \rangle$ is payment-monotone if and only if $\succeq_i$ establishes a total order over $Q(\mathbf{b}_{-i}, p) = \{q(b_i, \mathbf{b}_{-i}, p) : b_i \in \mathbb{R}_+\}$ for any fixed $\mathbf{b}_{-i}$ and $p$.
\end{lemma}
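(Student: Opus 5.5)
The plan is to prove the two directions separately. Fix $i$, $\mathbf{b}_{-i}$ and $p$, and abbreviate $Q = Q(\mathbf{b}_{-i},p)$.

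\emph{Necessity.} Suppose some $z$ makes $\langle q,z\rangle$ payment-monotone (Definition~\ref{def:paymono}). Take any two bids $b_i,b_i'$. The reals $z_i(b_i,\cdot)$ and $z_i(b_i',\cdot)$ are comparable, so at least one of $z_i(b_i,\cdot)\ge z_i(b_i',\cdot)$ or its reverse holds. Payment monotonicity then gives $q(b_i,\cdot)\succeq_i q(b_i',\cdot)$ or the reverse. Hence every two elements of $Q$ are comparable, and $\succeq_i$ restricted to $Q$ is total. Reflexivity and transitivity are inherited from $\succeq_i$. This direction needs no topology.

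\emph{Sufficiency.} Assume $\succeq_i$ is total on $Q$. It suffices to build a function $u:Q\to\mathbb{R}$ that represents the order, meaning $u(a)\ge u(a')\iff a\succeq_i a'$. We then set $z_i(b_i,\mathbf{b}_{-i},p)=u(q(b_i,\mathbf{b}_{-i},p))$, and payment monotonicity holds by construction.

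The original argument built $u$ by enumerating the countably many bids. Here we instead use separability, which comes from the codomain rather than the bid space. Since $T$ is finite, $\Delta(T)$ is a compact metric space, so $Q$ with the subspace topology is second countable and has a countable dense subset $D=\{d_1,d_2,\dots\}$.

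A first attempt would be $u(a)=\sum_{k:\,a\succ_i d_k}2^{-k}$. Strict monotonicity of this $u$ requires that $D$ be \emph{order-dense}: between any $a\succ_i a'$ in $Q$ there should be some $d\in D$ with $a\succeq_i d\succ_i a'$, with endpoints allowed.

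The tool for this is A1. Because the graph of $\succeq_i$ is closed, for each $a$ the lower set $\{x: a\succeq_i x\}$ and the upper set are closed. On the total order $Q$, the complements of these sets, namely the strict upper and strict lower sets, are therefore relatively open. This makes the order topology on $Q$ coarser than the subspace topology. I would then invoke Debreu's representation theorem, or reprove it: a total preorder that is continuous on a second-countable space admits a continuous utility representation. I will cite it as the known result. Via Debreu's gap lemma it handles the ``jump'' pairs $a\succ_i a'$ with nothing in between, and there are at most countably many such jumps.

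\emph{Main obstacle.} The delicate step is deducing that the order is separable from A1. One subtlety is that $Q$ need not be closed, so we have to work in the subspace $Q$ and not in $\Delta(T)$. The other is that $\succeq_i$ may fail to be total on all of $\Delta(T)$.

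I would handle this with the following argument. The strict intervals $(a',a)_Q$ are open in $Q$. If such an interval is nonempty, it meets $D$, which gives the needed separating element. If it is empty, then $(a',a)$ is a jump. Distinct jumps yield disjoint nonempty open sets: for each jump take the set $\{x\in Q: x\succ_i a'\}\cap\{x\in Q: a\succeq_i x\}$, which equals $\{a\}$, and this is relatively open. So jumps correspond to isolated points, of which there are only countably many. Adding all jump endpoints to $D$ keeps it countable and makes it order-dense, and then the $2^{-k}$ sum above represents the order.

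Finally, to make payments nonnegative if required, I would compose $u$ with a strictly increasing bounded map such as $\arctan$ plus a constant.
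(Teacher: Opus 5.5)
Your route is the paper's. Necessity follows directly from the definition of payment monotonicity. For sufficiency you, like the paper, obtain second countability from the finite-dimensional simplex $\Delta(T)$, use A1 to get closed contour sets, invoke Debreu's representation theorem, and set $z_i=u\circ q$. The only structural difference is where Debreu is applied. The paper first passes to $\overline{Q}$ (implicitly using that totality survives limits because the graph of $\succeq_i$ is closed) and applies Debreu there. You apply it directly on the subspace $Q$, which is legitimate because subspaces of second-countable spaces are second countable and relative contour sets are relatively closed; this also spares you the closure step.

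The self-contained argument you give for the ``main obstacle'' has a false step. For a jump $a\succ_i a'$ with nothing strictly between, the set $\{x\in Q: x\succ_i a'\}\cap\{x\in Q: a\succeq_i x\}$ intersects a relatively open set with a relatively \emph{closed} one, so it need not be open, and jump endpoints need not be isolated. For example, take $|T|=2$ and $p_i$ a point mass on $t_1$. Robust preferences then reduce to the usual order on $x(t_1)\in[0,1]$. Let $Q$ correspond to $\{0\}\cup[\tfrac12,1]$, realized for instance by $x(t_1)=0$ for $b_i<1$ and $x(t_1)=\min\{1,b_i-\tfrac12\}$ for $b_i\ge 1$. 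Then $(0,\tfrac12)$ is a jump, but $\tfrac12$ is a limit of points of $Q$ from above, so $\{\tfrac12\}$ is not relatively open and your injection into isolated points fails.

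The conclusion you need, that there are countably many jumps, still holds via a different injection. For each jump, $U=\{x\in Q: x\succ_i a'\}=\{x\in Q: x\succeq_i a\}$ is relatively open and contains $a$. Choose $B$ from a fixed countable base of $Q$ with $a\in B\subseteq U$. If jumps with tops $a_1,a_2$ receive the same $B$, then $a_1\succeq_i a_2$ and $a_2\succeq_i a_1$. The top of a jump determines its bottom up to indifference, since a strictly larger bottom would lie strictly inside the other jump. This is exactly the Debreu step; if you simply cite the theorem, as the paper does, your proof is complete.

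A small consistency point: with $u(a)=\sum_{k:\,a\succ_i d_k}2^{-k}$, strictness needs some $d_k$ with $a\succ_i d_k\succeq_i a'$, not $a\succeq_i d\succ_i a'$. This is harmless because you add both endpoints of every jump to $D$, but the order-density condition should be stated to match the definition of $u$.
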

\begin{proof}
The ``only if'' direction follows directly from the definition of Payment Monotonicity. We prove the ``if'' direction. Fix $i, \mathbf{b}_{-i}, p$. Let $Q = Q(\mathbf{b}_{-i}, p)$. By hypothesis, $\succeq_i$ is a total order on $Q$. Let $\overline{Q}$ be the closure of $Q$ in $\Delta(T)$. Since $T$ is finite, the probability simplex $\Delta(T)$ is a compact metric space, making $\overline{Q}$ a compact metric space, which is second-countable under the subspace topology. By A1, $\succeq_i$ is continuous, so the total order extends to a continuous total preorder on $\overline{Q}$. Debreu's Theorem \cite{debreu1954} guarantees that there exists a continuous utility representation $u_i: \overline{Q} \to \mathbb{R}$ for the preference relation $\succeq_i$ restricted to $\overline{Q}$. We can always choose a bounded representation (e.g., by composing with a strictly increasing, bounded function). Defining the payment function $z_i(b_i, \mathbf{b}_{-i}, p) = u_i(q(b_i, \mathbf{b}_{-i}, p))$ yields a payment-monotone mechanism.
\end{proof}

\paragraph{Monotone Extension Lemmas}

To extend the aggregation function from the image of the strategy mappings to the entire space $\mathbb{R}_+^n$, we need the following lemmas. We provide a supremum-based extension that naturally guarantees monotonicity.

\begin{lemma}[1D Monotone Extension]\label{lem:1dmonoext}
Let $I \subseteq \mathbb{R}_+$. Let $X$ be a compact metric space (e.g., $\Delta(T)$) with a continuous partial order $\succeq$ (A1). Let $g: I \to X$ be a monotone function (non-decreasing) such that its image $g(I)$ is totally ordered by $\succeq$. Let $\overline{A}$ denote the topological closure of a subset $A \subseteq X$. Let $x_{\min} = \min \overline{g(I)}$. Let $Y_t = \{g(y) : y \in I, y \le t\}$. Then the function $G: \mathbb{R}_+ \to X$ defined by
$$
G(t) = \begin{cases} 
\max \overline{Y_t} & \text{if } Y_t \neq \emptyset \\
x_{\min} & \text{if } Y_t = \emptyset
\end{cases}
$$
is a monotone extension of $g$.
\end{lemma}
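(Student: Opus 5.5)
The proof will establish three things in order: $G$ is well defined, $G$ agrees with $g$ on $I$, and $G$ is monotone on all of $\mathbb{R}_+$. The workhorse is a preliminary claim that the closure of a totally ordered subset of $X$ stays totally ordered and, being compact, has a greatest and least element under $\succeq$.

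\textbf{Step 1: closures of chains are compact chains with extrema.} Let $A\subseteq X$ be totally ordered by $\succeq$. Then $A\times A$ lies in the union of the graph of $\succeq$ and its transpose. That union is closed by A1, so it also contains $\overline{A\times A}=\overline{A}\times\overline{A}$. Hence $\overline{A}$ is totally ordered, and it is compact because $X$ is compact.

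To get a maximum, note that for each $a\in\overline{A}$ the upper set $U_a=\{x\in\overline{A}: x\succeq a\}$ is closed by A1 and nonempty. Any finitely many of these sets meet, at the largest of the finitely many indices, since $\overline{A}$ is a chain. Compactness then gives a point in $\bigcap_a U_a$, and that point is a maximum. The minimum is obtained the same way.

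Applying this to $A=g(I)$ makes $x_{\min}$ well defined. Applying it to $A=Y_t\subseteq g(I)$ makes $\max\overline{Y_t}$ well defined, with uniqueness coming from anti-symmetry, i.e. assumption A2 on $\succeq_i$. If the lemma is read without A2, I would instead say ``choose a maximal element'' and note that monotonicity still holds up to $\sim$.

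\textbf{Step 2: extension.} Take $t\in I$. Then $g(t)\in Y_t$, and every $y\le t$ in $I$ satisfies $g(t)\succeq g(y)$. So $g(t)$ is an upper bound for $Y_t$. The set $\{x: g(t)\succeq x\}$ is closed by A1, so the bound passes to $\overline{Y_t}$. Since $g(t)$ belongs to $\overline{Y_t}$ and dominates it, $g(t)$ is its maximum, giving $G(t)=g(t)$.

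\textbf{Step 3: monotonicity.} Let $s\le t$.

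If $Y_s=\emptyset$, then $G(s)=x_{\min}$. The value $G(t)$ is either $x_{\min}$ or lies in $\overline{Y_t}\subseteq\overline{g(I)}$. Either way $G(t)\succeq x_{\min}$, because $x_{\min}$ is the minimum of the chain $\overline{g(I)}$.

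If $Y_s\neq\emptyset$, then $Y_s\subseteq Y_t$, so $\overline{Y_s}\subseteq\overline{Y_t}$. Hence $G(s)\in\overline{Y_t}$, and $G(t)=\max\overline{Y_t}\succeq G(s)$.

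\textbf{Main obstacle.} Everything rests on Step 1, namely that the closure of a chain is still totally ordered and attains its maximum. This is where continuity (A1) and compactness of $X$ are really used: without a closed graph, a limit of elements could fail to be comparable with others or fail to be an upper bound. I would make the finite-intersection argument fully explicit there. The remaining steps are routine order bookkeeping.
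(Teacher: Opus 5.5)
Your proposal is correct and follows the same three-part structure as the paper's proof: well-definedness because closures of chains are compact chains with extrema, the extension property, and monotonicity via $\overline{Y_s}\subseteq\overline{Y_t}$ together with minimality of $x_{\min}$. You are more careful than the paper in two places: you prove that $\max\overline{Y_t}$ exists by the finite-intersection argument on closed upper sets, and you use closedness of $\{x : g(t)\succeq x\}$ to show that $g(t)$ dominates all of $\overline{Y_t}$, a step the paper states without justification.
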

\begin{proof}
We first show $G(t)$ is well-defined. Since $X$ is compact and $\succeq$ is continuous, the closure of any totally ordered subset of $X$ is compact and totally ordered. Thus, for any $t$ where $Y_t \neq \emptyset$, its closure $\overline{Y_t} \subseteq \overline{g(I)}$ possesses a maximum element in $X$. Similarly, since $X$ is a compact metric space and $\overline{g(I)}$ is a closed subset, $x_{\min} = \min \overline{g(I)}$ is guaranteed to exist. Note that this construction does not require the domain $I$ to be compact.

\textbf{Monotonicity:} If $t' \ge t$. Case 1: $Y_t \neq \emptyset$. Then $Y_{t'} \supseteq Y_t$, which implies $\overline{Y_{t'}} \supseteq \overline{Y_t}$. Thus $\max \overline{Y_{t'}} \succeq \max \overline{Y_t}$, so $G(t') \succeq G(t)$. Case 2: $Y_t = \emptyset$. $G(t) = x_{\min}$. For any $t'$, since $Y_{t'} \subseteq g(I)$, it follows that $\overline{Y_{t'}} \subseteq \overline{g(I)}$, so by definition of $x_{\min}$, $G(t') \succeq x_{\min} = G(t)$.

\textbf{Extension property:} If $t \in I$. Then $Y_t \neq \emptyset$. Since $g$ is monotone, $g(t)$ is the maximum element of $Y_t$. Since $Y_t \subseteq \overline{Y_t}$ and $g(t) \in \overline{Y_t}$, $G(t) = \max \overline{Y_t} = g(t)$.
\end{proof}

\begin{lemma}[Monotonicity Preservation]\label{lem:monopres}
Under the assumptions of Lemma~\ref{lem:1dmonoext}, let $g': I \to X$ be another monotone function such that $g'(I)$ is totally ordered by $\succeq$. Let $\succeq'$ be another continuous partial order on $X$. If $g'(y) \succeq' g(y)$ for all $y \in I$, then their monotone extensions $G'$ and $G$ satisfy $G'(t) \succeq' G(t)$ for all $t \in \mathbb{R}_+$.
\end{lemma}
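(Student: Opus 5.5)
The plan is to compare $G'(t)$ and $G(t)$ case by case, following the two-case definition in Lemma~\ref{lem:1dmonoext}, and to pass from pointwise domination on $I$ to domination of maxima of closures using continuity of $\succeq'$. Fix $t\in\mathbb{R}_+$ and write $Y_t=\{g(y):y\in I,\,y\le t\}$ and $Y'_t=\{g'(y):y\in I,\,y\le t\}$. Both sets are indexed by the same set $I_t=\{y\in I: y\le t\}$, so they are either both empty or both nonempty. This reduces the proof to two cases.

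\emph{Case $I_t\neq\emptyset$.} Here $G(t)=\max\overline{Y_t}$ and $G'(t)=\max\overline{Y'_t}$, where the maxima are with respect to $\succeq$. I will first show that each maximum is in fact a limit of images. Since $g$ is monotone, $g(y)$ is non-decreasing along $I_t$. If $\sup I_t\in I_t$, then $G(t)=g(\sup I_t)$ directly. Otherwise choose $y_k\uparrow\sup I_t$ in $I_t$, so that $g(y_k)$ is a $\succeq$-increasing sequence in the compact set $\overline{Y_t}$. Any limit point $x$ of this sequence satisfies $x\succeq g(y)$ for every $y\in I_t$, by closedness of the graph of $\succeq$ (A1). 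Hence $x$ dominates $\overline{Y_t}$, and by antisymmetry (A2) $x=\max\overline{Y_t}$. The same argument applied to $g'$, along the \emph{same} sequence $y_k$, gives $G'(t)=\lim g'(y_{k_j})$ along a subsequence. Passing to a common subsequence, we have $g'(y_{k_j})\succeq' g(y_{k_j})$ for all $j$. Closedness of the graph of $\succeq'$ then yields $G'(t)\succeq' G(t)$.

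\emph{Case $I_t=\emptyset$.} Here $G(t)=x_{\min}=\min\overline{g(I)}$ and $G'(t)=x'_{\min}=\min\overline{g'(I)}$, and I repeat the argument at the bottom end. I take $y_k\downarrow\inf I$ (or $y_k=\min I$ if the infimum is attained). The minima are then limits of $g(y_{k_j})$ and $g'(y_{k_j})$ along a common subsequence, again via compactness, A1 and A2. Pointwise domination $g'(y_{k_j})\succeq' g(y_{k_j})$ then passes to the limit, giving $x'_{\min}\succeq' x_{\min}$.

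The main obstacle is the identification step: showing that $\max\overline{Y_t}$, taken with respect to $\succeq$, is attained as a limit along a \emph{domain} sequence. This is necessary because $\succeq'$ is a different order, and domination can only be transported through common indices $y$. The identification uses monotonicity of $g$, compactness of $X$, and continuity plus antisymmetry of $\succeq$. Two points need care. First, the limit must dominate every element of $\overline{Y_t}$, not merely every element of $Y_t$; this follows by closing the order once more. Second, the degenerate situation where $I$ is empty makes the statement vacuous, and the definition of $x_{\min}$ then implicitly requires $I\neq\emptyset$, so I will note that assumption.
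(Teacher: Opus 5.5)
Your proposal is correct and follows essentially the same route as the paper's proof. You split on whether $Y_t$ is empty and, if not, on whether $\sup(I\cap[0,t])$ is attained; you realize $G(t)$ and $G'(t)$ as limits along a common sequence in $I$; and you pass $g'(y_k)\succeq' g(y_k)$ to the limit using the closed graph of $\succeq'$. Where you differ is in care: identifying the limit with the maximum via limit points, closedness of $\succeq$ and antisymmetry makes explicit the paper's bare appeal to convergence of monotone sequences, and using the constant sequence $y_k=\min I$ when the infimum is attained avoids a slip in the paper's ``any sequence $y_k\to\inf I$'', which fails if $g$ jumps at $\min I$.
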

\begin{proof}
Fix $t \in \mathbb{R}_+$. Case 0: $Y_t = \emptyset$ ($t < \inf I$). $G(t) = x_{\min}$ and $G'(t) = x'_{\min}$. Since $g$ and $g'$ are monotone w.r.t. $\succeq$, $x_{\min}$ and $x'_{\min}$ are realized as the limits of $g(y_k)$ and $g'(y_k)$ respectively, for any sequence $y_k \in I$ such that $y_k \to \inf I$. We are given $g'(y_k) \succeq' g(y_k)$. By continuity of $\succeq'$, taking the limit yields $x'_{\min} \succeq' x_{\min}$. Thus $G'(t) \succeq' G(t)$.

Case 1: $Y_t \neq \emptyset$. Let $y^* = \sup (I \cap [0, t])$. Case 1a: $y^* \in I$. By the extension property in Lemma~\ref{lem:1dmonoext}, $G(t) = g(y^*)$ and $G'(t) = g'(y^*)$. Since $g'(y^*) \succeq' g(y^*)$, we have $G'(t) \succeq' G(t)$. Case 1b: $y^* \notin I$. Since $y^*$ is the supremum of $I \cap [0, t]$ and does not belong to the set, it must be a limit point of the set $I \cap [0, t]$. Therefore, there exists an increasing sequence $y_k \in I \cap [0, t]$ such that $y_k \to y^*$. Since $g$ is monotone w.r.t. $\succeq$, $g(y_k)$ is non-decreasing. As $G(t)$ is the maximum of $\overline{Y_t}$, $G(t) = \lim_{k \to \infty} g(y_k)$. In a compact metric space with a continuous total order, bounded monotone sequences converge to their suprema (in the order topology, which coincides with the subspace metric topology here since the image is totally ordered and closed). Similarly, $G'(t) = \lim_{k \to \infty} g'(y_k)$. We are given $g'(y_k) \succeq' g(y_k)$ for all $k$. By the continuity of the order $\succeq'$ (A1), the order is preserved in the limit. Thus, $G'(t) \succeq' G(t)$.
\end{proof}

\paragraph{Strategic Equivalence}

\begin{lemma}[Extended Lemma 3.7]\label{lem:md4llm3.7}
Assume A1, A2, and A3. Let the bid space be $\mathbb{R}_+^n$. Consider any consistent aggregation function $q$. Suppose $\succeq_i$ defines a total order over the aggregation set $Q(\mathbf{b}_{-i}, p)$ for any fixed $\mathbf{b}_{-i}$ and $p$. Then there exists a profile $\pi$ of strategy mappings such that $q(\mathbf{b}, p) = \tilde{q}(\pi(\mathbf{b}), p)$ for some monotone aggregation function $\tilde{q}(\cdot, \cdot)$.
\end{lemma}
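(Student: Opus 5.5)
The plan is to mimic the original proof of Lemma 3.7 in \cite{dutting2024}, but to replace its countable enumeration of bids with the utility representation of Lemma~\ref{lem:md4llm3.6} and the monotone extension of Lemma~\ref{lem:1dmonoext}.

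\textbf{Step 1: strategy mappings.} Fix $p$ and an agent $i$. By consistency (Definition~\ref{def:constagg}), the strict ranking $q(b_i,\cdot)\succ_i q(b_i',\cdot)$ does not depend on $\mathbf{b}_{-i}$. Hence the relation on bids
\[
b_i \trianglerighteq_i b_i' \iff q(b_i,\mathbf{b}_{-i},p)\succeq_i q(b_i',\mathbf{b}_{-i},p)
\]
is a total preorder on $\mathbb{R}_+$ that is independent of $\mathbf{b}_{-i}$. Totality comes from the hypothesis that $Q(\mathbf{b}_{-i},p)$ is totally ordered.
\begin{itemize}
\item I will pick a reference $\mathbf{b}^0_{-i}$ and use Lemma~\ref{lem:md4llm3.6} (which relies on Debreu's theorem via A1) to obtain a bounded continuous utility $u_i$ on $\overline{Q(\mathbf{b}^0_{-i},p)}$.
\item I then set $\pi_i(b_i)=u_i(q(b_i,\mathbf{b}^0_{-i},p))\in\mathbb{R}_+$, after shifting so that the values are nonnegative.
\item By A3 and the continuity of $u_i$, $\pi_i$ is Borel. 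By construction, $\pi_i(b_i)\ge\pi_i(b_i')$ if and only if $b_i\trianglerighteq_i b_i'$.
\end{itemize}

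\textbf{Step 2: define $\tilde q$ on the image.} On $\pi(\mathbb{R}_+^n)=\prod_i \pi_i(\mathbb{R}_+)$, set $\tilde q(\pi(\mathbf{b}),p)=q(\mathbf{b},p)$.
\begin{itemize}
\item \emph{Well-definedness.} Suppose $\pi(\mathbf{b})=\pi(\mathbf{b}')$. I change one coordinate at a time. If $\pi_i(b_i)=\pi_i(b_i')$, then the two bids are $\trianglerighteq_i$-equivalent, so $q(b_i,\mathbf{b}_{-i},p)\sim_i q(b_i',\mathbf{b}_{-i},p)$. Anti-symmetry (A2) then gives equality of the distributions. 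Chaining over $i$ yields $q(\mathbf{b},p)=q(\mathbf{b}',p)$.
\item \emph{Monotonicity.} On the image, $\tilde q$ is monotone in each coordinate: if $\pi_i(b_i)\ge\pi_i(b_i')$, then $b_i\trianglerighteq_i b_i'$ holds for every $\mathbf{b}_{-i}$, again by consistency.
\end{itemize}

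\textbf{Step 3: extend to all of $\mathbb{R}_+^n$.} This is the main obstacle, because the image is a product of arbitrary subsets of $\mathbb{R}_+$ and must be filled in one coordinate at a time.
\begin{itemize}
\item \emph{Coordinate 1.} For each fixed tuple of other coordinates lying in the image, apply Lemma~\ref{lem:1dmonoext} with $I=\pi_1(\mathbb{R}_+)$ and $X=\Delta(T)$, using the order $\succeq_1$. The function $y\mapsto\tilde q(y,\mathbf{y}_{-1},p)$ is monotone with a totally ordered image, so the lemma applies.
\item \emph{Cross-monotonicity.} The resulting extension must remain monotone in the other coordinates $j\neq1$ with respect to $\succeq_j$. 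This is exactly what Lemma~\ref{lem:monopres} provides, taking $\succeq=\succeq_1$ and $\succeq'=\succeq_j$, and comparing the rows at $y_j\ge y_j'$.
\item \emph{Total-order hypothesis.} The lemma also needs the image in the extended coordinate to be totally ordered by $\succeq_j$. I expect to get this from closure of a totally ordered set under A1.
\item \emph{Remaining coordinates.} I then repeat the procedure for coordinates $2,\dots,n$. The delicate points are to check at each stage that the hypotheses survive the previous extensions: total order of the images, and the fact that extension values arise as limits or maxima of closures.
\end{itemize}
The final $\tilde q$ is monotone in every coordinate and agrees with $q\circ\pi^{-1}$ on the image, so $q(\mathbf{b},p)=\tilde q(\pi(\mathbf{b}),p)$.
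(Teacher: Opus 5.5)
\textbf{There is a genuine gap in Step~1, and it breaks Step~2.} Consistency (Definition~\ref{def:constagg}) only says that a \emph{strict} ranking $q(b_i,\mathbf{b}_{-i},p)\succ_i q(b_i',\mathbf{b}_{-i},p)$ at one profile forces a \emph{weak} ranking at every other profile. It does not prevent two bids from being tied at $\mathbf{b}^0_{-i}$ and strictly ranked at some other $\mathbf{b}_{-i}$. So $\trianglerighteq_i$ is not independent of $\mathbf{b}_{-i}$, and $\pi_i(b_i)=\pi_i(b_i')$ does not give $q(b_i,\mathbf{b}_{-i},p)\sim_i q(b_i',\mathbf{b}_{-i},p)$ away from the reference profile. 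Both your A2 well-definedness argument and your monotonicity claim on the image therefore fail.

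Here is a concrete instance. Take $T=\{t_1,t_2\}$ and $p_1=p_2=\delta_{t_1}$, so robust preferences simply rank distributions by their mass on $t_1$. Let $q(b_1,b_2,p)(t_1)=\mathbf{1}[b_2\ge 1]\,b_1/(1+b_1)$. This $q$ is even monotone. With reference $b_2^0=0$, however, your $\pi_1$ is constant, while $q(\cdot,1,p)$ is injective, so $\tilde q$ is not well-defined. The paper's proof takes exactly this route: its Step~2 asserts that consistency makes the reference-profile ordering independent of $\mathbf{b}^*_{-i}$, and that claim is false there too.

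\textbf{Choosing $\pi_i$ more cleverly does not rescue the lemma; as stated it is false.} The correct relation is $b_i\succeq_{i,p}b_i'$ iff $q(b_i,\mathbf{b}_{-i},p)\succeq_i q(b_i',\mathbf{b}_{-i},p)$ for \emph{every} $\mathbf{b}_{-i}$. Consistency plus totality do make this a total preorder, and under A2 its indifference classes are exactly the bids with identical rows $q(b_i,\cdot,p)$. But any $(\pi,\tilde q)$ as in the lemma must satisfy $\pi_i(b_i)>\pi_i(b_i')$ whenever $b_i\succ_{i,p}b_i'$: apply monotonicity of $\tilde q$ at a profile where the ranking is strict. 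So $\succeq_{i,p}$ needs a real-valued representation, and Debreu's theorem applied to the single set $\overline{Q(\mathbf{b}^0_{-i},p)}$ says nothing about that.

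Such a representation can fail to exist. In the same two-token setting:
\begin{itemize}
\item take Borel surjections $\phi:\mathbb{R}_+\to[0,1]\times\{0,1\}$ and $\psi:\mathbb{R}_+\to[0,1]$;
\item set $A(s,0)=[0,s)$ and $A(s,1)=[0,s]$;
\item let $q(b_1,b_2,p)(t_1)=\tfrac12\mathbf{1}[\psi(b_2)\in A(\phi(b_1))]$.
\end{itemize}
Because the sets $A(\cdot)$ form a chain of down-sets, $q$ is consistent for both agents and satisfies A1--A3 and the total-order hypothesis. Yet $\succeq_{1,p}$ is the lexicographic double-arrow order on $[0,1]\times\{0,1\}$. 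A strictly increasing $\pi_1$ would produce uncountably many pairwise disjoint nonempty intervals $(\pi_1(b_{s,0}),\pi_1(b_{s,1}))$ in $\mathbb{R}$, where $\phi(b_{s,e})=(s,e)$, which is impossible.

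\textbf{An extra hypothesis is needed.} Two options would work:
\begin{itemize}
\item \emph{Strict-to-strict consistency.} Then a single reference profile works, exactly as you wrote.
\item \emph{Continuity of $q$ in $\mathbf{b}_{-i}$.} Strictness then persists on an open set of profiles, so $\pi_i=\sum_k 2^{-k}u_i^{(k)}(q(b_i,\mathbf{b}^{(k)}_{-i},p))$ over a countable dense set of profiles represents $\succeq_{i,p}$.
\end{itemize}
With such a $\pi$, the rest of your argument can proceed.
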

\begin{proof}
We proceed in several steps.

\textbf{Step 1: Induced Preference.} Define the induced preference $\succeq_{i,p}$ on $\mathbb{R}_+$: $b_i \succeq_{i,p} b'_i$ if $q(b_i, \mathbf{b}_{-i}, p) \succeq_i q(b'_i, \mathbf{b}_{-i}, p)$ for all $\mathbf{b}_{-i}$. Consistency ensures this is a total preorder on $\mathbb{R}_+$.

\textbf{Step 2: Strategy Mapping via Pull-back.} We seek a utility representation $\pi_i: \mathbb{R}_+ \to \mathbb{R}_+$ for the induced preorder $\succeq_{i,p}$. Let $Q^*$ be the image of $q(\cdot, \mathbf{b}^*_{-i}, p)$ for an arbitrary fixed reference profile $\mathbf{b}^*_{-i}$. By the premise, $\succeq_i$ is a total order on $Q^*$. Let $\overline{Q^*}$ be its closure in $\Delta(T)$. Since the token set $T$ is finite, $\overline{Q^*}$ is a second-countable, compact metric space. By A1, the preference relation $\succeq_i$ is continuous. The continuity of $\succeq_i$ ensures that the total order property of $Q^*$ transfers to its closure $\overline{Q^*}$. Thus, $\succeq_i$ is a continuous total preorder on $\overline{Q^*}$. By Debreu's theorem \cite{debreu1954}, the continuous preference relation $\succeq_i$ on $\overline{Q^*}$ admits a continuous utility representation $u_i: \overline{Q^*} \to \mathbb{R}$. We can assume without loss of generality that the image of $u_i$ is in $\mathbb{R}_+$ (by composing with a strictly increasing, positive-valued function if necessary). We define the strategy mapping by pulling back the utility from the codomain: $\pi_i(b_i) := u_i(q(b_i, \mathbf{b}^*_{-i}, p))$. By Assumption A3, $q$ is Borel-measurable. Since $u_i$ is continuous, the composition $\pi_i$ is measurable. The Consistent Aggregation property (Definition~\ref{def:constagg}) ensures that the preference ordering induced by this mapping is independent of the choice of the reference profile $\mathbf{b}^*_{-i}$. By definition of $\succeq_{i,p}$, $b_i \succeq_{i,p} b'_i \iff q(b_i, \mathbf{b}^*_{-i}, p) \succeq_i q(b'_i, \mathbf{b}^*_{-i}, p) \iff u_i(q(b_i, \mathbf{b}^*_{-i}, p)) \ge u_i(q(b'_i, \mathbf{b}^*_{-i}, p)) \iff \pi_i(b_i) \ge \pi_i(b'_i)$. Thus, $\pi_i$ is a valid utility representation for $\succeq_{i,p}$. This rigorous construction bypasses the need for an order-density assumption on the bid space $\mathbb{R}_+$ under the potentially disconnected topology induced by $\succeq_{i,p}$.

\textbf{Step 3: Defining $\tilde{q}$ on the Image.} Let $I_i$ be the image of $\pi_i$, and $I = I_1 \times \dots \times I_n$. Define $\tilde{q}(\pi(\mathbf{b}), p) = q(\mathbf{b}, p)$. We must show this is well-defined. If $\pi(\mathbf{b}) = \pi(\mathbf{b}')$, then $b_i \sim_{i,p} b'_i$ for all $i$. This means $q(b_i, \mathbf{b}_{-i}, p) \sim_i q(b'_i, \mathbf{b}_{-i}, p)$ for all $\mathbf{b}_{-i}$. By A2 (Anti-symmetry), this implies equality of the distributions. We show $q(\mathbf{b}) = q(\mathbf{b}')$ by changing bids one by one. Let $\mathbf{b}^{(k)} = (b'_1, \dots, b'_k, b_{k+1}, \dots, b_n)$ be the bid profile with the first $k$ bids from $\mathbf{b}'$ and the rest from $\mathbf{b}$. Since $b_{k+1} \sim_{k+1,p} b'_{k+1}$, we have $q(\mathbf{b}^{(k)}) \sim_{k+1} q(\mathbf{b}^{(k+1)})$. By A2, $q(\mathbf{b}^{(k)}) = q(\mathbf{b}^{(k+1)})$. Thus, $q(\mathbf{b}) = q(\mathbf{b}')$. The function $\tilde{q}$ is monotone on $I$ by construction of $\pi$. Furthermore, the 1D images of $\tilde{q}$ are totally ordered since the 1D images of $q$ are assumed to be totally ordered.

\textbf{Step 4: Monotone Extension.} We extend $\tilde{q}$ from $I$ to $\mathbb{R}_+^n$ iteratively. In step $k$, we extend the domain from $D_{k-1} = \mathbb{R}_+^{k-1} \times I_k \times \dots \times I_n$ to $D_k = \mathbb{R}_+^k \times I_{k+1} \times \dots \times I_n$. We use Lemma~\ref{lem:1dmonoext} to extend along the $k$-th dimension using the standard numerical order $\ge$ on the $k$-th coordinate. This is possible because the required conditions (A1, compactness of $\Delta(T)$, total order on 1D images) are met. We must ensure that this extension preserves the monotonicity w.r.t. other orders $\succeq_j$. For $j > k$, this follows from Lemma~\ref{lem:monopres} (with $\succeq = \succeq_k$ and $\succeq' = \succeq_j$). For $j < k$, the domain is already $\mathbb{R}_+$ in the $j$-th dimension, and the extension in the $k$-th dimension also preserves monotonicity in the $j$-th dimension by Lemma~\ref{lem:monopres} (by viewing the $j$-th coordinate as the parameter defining $g$ vs $g'$, and the $k$-th coordinate as the domain being extended). This iterative process yields the monotone extension $\tilde{q}$ on $\mathbb{R}_+^n$. Furthermore, since the extended aggregation function $\tilde{q}$ is monotone in each coordinate by construction, it is continuous almost everywhere with respect to the Lebesgue measure, and thus preserves joint Borel-measurability on the entire domain $\mathbb{R}_+^n$ (e.g., by Lebesgue's theorem for monotone functions), thus satisfying the measurability requirement of the mechanism (Assumption A3).
\end{proof}

\paragraph{The Main Theorem}

\begin{theorem}[Extended Theorem 3.5]\label{thm:md4llm3.5}
Let the bid space be $\mathbb{R}_+^n$. Assume A1, A2, and A3. Any mechanism $\mathcal{M} = \langle q, z \rangle$ with a consistent aggregation function $q$ and a monotone payment function $z$ is strategically equivalent to a mechanism $\tilde{\mathcal{M}} = \langle \tilde{q}, \tilde{z} \rangle$ which has a monotone aggregation function $\tilde{q}$ and a monotone payment function $\tilde{z}$.
\end{theorem}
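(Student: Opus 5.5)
The plan follows the structure of the original proof of Theorem 3.5 in \cite{dutting2024}: first use payment monotonicity to get total orders, then transform the aggregation function, then transform the payments. The countability arguments are replaced by Lemma~\ref{lem:md4llm3.6} and Lemma~\ref{lem:md4llm3.7}.

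\textbf{Step 1: total orders from payment monotonicity.} Since $\mathcal{M} = \langle q, z\rangle$ is payment-monotone, the ``only if'' direction of Lemma~\ref{lem:md4llm3.6} shows that $\succeq_i$ is a total order on each $Q(\mathbf{b}_{-i}, p)$. Indeed, for any two bids $b_i, b'_i$, the real payments $z_i$ are comparable, so by Definition~\ref{def:paymono} the two allocations are comparable under $\succeq_i$. This is exactly the hypothesis of Lemma~\ref{lem:md4llm3.7}.

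\textbf{Step 2: monotone aggregation function.} Together with consistency of $q$ and A1--A3, Lemma~\ref{lem:md4llm3.7} yields strategy mappings $\pi = (\pi_1,\dots,\pi_n)$ and a monotone, Borel-measurable $\tilde{q}$ on $\mathbb{R}_+^n$ with $q(\mathbf{b},p) = \tilde{q}(\pi(\mathbf{b}),p)$.

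\textbf{Step 3: the new payment function.} On the image $I = \pi(\mathbb{R}_+^n)$ I set $\tilde{z}(\pi(\mathbf{b}),p) := z(\mathbf{b},p)$. This is well defined: if $\pi(\mathbf{b}) = \pi(\mathbf{b}')$, Step 3 of Lemma~\ref{lem:md4llm3.7} (via A2) gives $q(\mathbf{b}) = q(\mathbf{b}')$. Changing bids one coordinate at a time, each intermediate allocation pair is equal, so payment monotonicity applied in both directions forces equal payments for that agent.

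I need to say how to extend $\tilde{z}$ off $I$ while keeping it monotone. The cleanest route is to avoid extending $z$ directly. Instead, I define $\tilde{z}_i(\mathbf{c},p) := u_i(\tilde{q}(\mathbf{c},p))$, where $u_i$ is the continuous utility from Lemma~\ref{lem:md4llm3.6} on the closure of the relevant one-dimensional image set.

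The extension $\tilde{q}$ built with Lemma~\ref{lem:1dmonoext} takes values in closures of the original images. Those closures are totally ordered by A1, so $u_i$ is defined there. By construction, $\tilde{z}_i(\cdot, \mathbf{c}_{-i}, p)$ is then payment-monotone with respect to $\tilde{q}$. To keep strategic equivalence exact on $I$, I will take the original $z_i$ itself as the utility. By payment monotonicity, $z_i$ is a utility representation of $\succeq_i$ on $Q(\mathbf{b}_{-i},p)$. I then extend it to the closure using the order continuity from A1, in the same way the 1D extension is built.

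\textbf{Step 4: strategic equivalence.} An agent bidding $b_i$ in $\mathcal{M}$ obtains the same allocation and payment as bidding $\pi_i(b_i)$ in $\tilde{\mathcal{M}}$. Bids in $\tilde{\mathcal{M}}$ that lie outside $I_i$ produce allocations in the closure of achievable ones. Those are order-limits of achievable outcomes, so they give no strictly new profitable deviation.

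\textbf{Main obstacle.} The hardest part is Step 3's payment extension off the image: making $\tilde{z}$ consistent with the original $z$ on $I$ while staying payment-monotone on all of $\mathbb{R}_+^n$. The issue is that $\mathbf{b}_{-i}$ varies, so the relevant image sets and utilities change with it. My first attempt will define the extension coordinate-wise, exactly as in Step 4 of Lemma~\ref{lem:md4llm3.7}: $\tilde{z}_i(c_i,\mathbf{c}_{-i}) = \sup\{z_i\text{ at points of } I \text{ with } i\text{-th coordinate}\le c_i\}$. I will then check via Lemma~\ref{lem:monopres} that the induced ordering matches $\succeq_i$ on $\tilde{q}$.
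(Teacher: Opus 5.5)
Your Steps 1--2 and the on-image definition $\tilde z(\pi(\mathbf{b}),p):=z(\mathbf{b},p)$ are exactly the paper's. Your ``first attempt'' for the off-image part, a sup-extension of payments, is also the paper's route: it applies Lemmas~\ref{lem:1dmonoext} and~\ref{lem:monopres} to $\tilde z$ with codomain $\overline{\mathbb{R}}$, coordinate by coordinate. The gap is in the step you defer, and neither of your devices closes it.

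Extending $z_i$ itself to $\overline{Q}$ ``using the order continuity from A1'' fails. A1 makes the graph of $\succeq_i$ closed; it does not make $z_i$ continuous, and a strictly increasing $z_i$ on $Q$ may have no strictly increasing extension to $\overline{Q}$. Because the extension of Lemma~\ref{lem:1dmonoext} really does output such closure points at off-image bids, the sup-extension of payments then violates the ``iff'' in Definition~\ref{def:paymono}.

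Here is a concrete instance. Take $n=1$, $T=\{t_1,t_2\}$ and $p_1=\delta_{t_1}$, so under robust preferences $\succeq_1$ orders $\Delta(T)\cong[0,1]$ by the mass on $t_1$. For $b\in[1-\frac1k,1-\frac1{k+1})$ let $q(b)=a_k:=\frac12-\frac1{2(k+1)}$ and $z(b)=1-\frac1{k+1}$; for $b\ge 1$ let $q(b)=z(b)=1$. For every continuous utility $u_1$, the point $c^*=u_1(\frac12)$ is an unattained limit of $I$ from below and $I\cap(c^*,u_1(1))=\emptyset$. Hence Lemma~\ref{lem:1dmonoext} gives $\tilde q(c)=\frac12$ on $[c^*,u_1(1))$. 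Payment monotonicity at such $c$ forces $1-\frac1{k+1}<\tilde z(c)<1$ for all $k$, which is impossible. The sup-extension in fact gives $\tilde z(c)=1=\tilde z(u_1(1))$ although $\tilde q(c)\prec_1\tilde q(u_1(1))$.

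So the off-image values of $\tilde q$ must be chosen with the payments in view (here, use the right-hand allocation on the gap), and monotonicity in the other coordinates must then be rechecked. Two further points on the extension:
\begin{itemize}
\item Your sup formula only makes sense for $\mathbf{c}_{-i}\in I_{-i}$.
\item Extending $\tilde z_i$ in a coordinate $j\neq i$ via Lemma~\ref{lem:1dmonoext} would need $\tilde z_i$ to be monotone in $c_j$, which payment monotonicity does not provide.
\end{itemize}
For $\mathbf{c}_{-i}\notin I_{-i}$ nothing ties $\tilde z_i$ to $z_i$. There your $u_i\circ\tilde q$ idea, with any utility of the totally ordered one-dimensional image, is the right device.

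There is a second gap, in the well-definedness check, and the paper makes the same inference. Definition~\ref{def:paymono} only compares unilateral deviations. Passing from $\mathbf{b}^{(k)}$ to $\mathbf{b}^{(k+1)}$ therefore gives $z_{k+1}(\mathbf{b}^{(k)},p)=z_{k+1}(\mathbf{b}^{(k+1)},p)$ only for the agent whose bid changed, and says nothing about $z_i$ for $i\neq k+1$. If $b_j\sim_{j,p}b'_j$, then $z_i(\cdot,b_j,\cdot)$ and $z_i(\cdot,b'_j,\cdot)$ are just two utility representations of the same ordered set of allocations, and they may differ, e.g.\ by an additive constant. Since $\pi_j$ is a utility for $\succeq_{j,p}$, it identifies $b_j$ and $b'_j$, so $\tilde z(\pi(\mathbf{b}),p):=z(\mathbf{b},p)$ can be ill-defined. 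You need either an argument that these payments coincide, or a choice of $\pi_j$ that separates such bids while staying order-preserving. Over the uncountable bid space $\mathbb{R}_+$, such a $\pi_j$ is not automatic.
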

\begin{proof}
By Payment Monotonicity of $\mathcal{M}$ and Lemma~\ref{lem:md4llm3.6}, the prerequisites for Lemma~\ref{lem:md4llm3.7} are met. Lemma~\ref{lem:md4llm3.7} yields the mappings $\pi$ and the monotone function $\tilde{q}$. We define $\tilde{z}$ on the image $I$ by $\tilde{z}(\pi(\mathbf{b}), p) = z(\mathbf{b}, p)$. If $\pi(\mathbf{b}) = \pi(\mathbf{b}')$, then $b_i \sim_{i,p} b'_i$ for all $i$. As shown in the proof of Lemma~\ref{lem:md4llm3.7} (Step 3), under A2 this implies $q(\mathbf{b}, p) = q(\mathbf{b}', p)$. Since the outcomes are identical, by Payment Monotonicity, $z_i(\mathbf{b}, p) = z_i(\mathbf{b}', p)$ for all $i$. Thus, $\tilde{z}$ is well-defined. $\mathcal{M}$ satisfies Payment Monotonicity because $\mathcal{M}$ does and $\tilde{q}, \tilde{z}$ are defined via the preference representation $\pi$. 

Finally, we extend $\tilde{z}$ from $I$ to $\mathbb{R}_+^n$. Note that the original mechanism $\mathcal{M}$ is fixed, so its payment function $z$ is predetermined and may be unbounded. To accommodate this, we take the codomain $X$ for the extension to be the extended real line $\overline{\mathbb{R}} = [-\infty, \infty]$, which is a compact metric space under the standard order topology. The image of $\tilde{z}_i$ on $I$ is a subset of $\overline{\mathbb{R}}$. Since $\tilde{z}_i$ is monotone in each component (with the standard order on $\overline{\mathbb{R}}$), we can apply the Monotone Extension Lemma (Lemmas~\ref{lem:1dmonoext} and \ref{lem:monopres} applied to the standard order on $\overline{\mathbb{R}}$) iteratively to extend $\tilde{z}$ to $\mathbb{R}_+^n$. 

Note that the iterative dimension-by-dimension extension in Step 4 of Lemma~\ref{lem:md4llm3.7} means that the final extended mechanism depends on the order in which the dimensions are processed. This renders the constructed equivalent mechanism non-unique. However, since the equivalence holds on the image of the strategy mappings, these choices correspond to off-path bids and do not affect the on-path behaviors, strategic properties, or equilibrium payments.
\end{proof}

\subsubsection*{Extended Stable Sampling}

We now generalize Theorem 3.12 to $\mathbb{R}_+$ using measure theory, relying on assumption A4 (Right-Continuity) for the monotone aggregation function. We utilize the definitions of $T^+$ and $T^-$ from Lemma 3.10 in \cite{dutting2024}. For notational precision, we define limits at infinity as $q_o(\infty) := \lim_{b\to\infty} q_o(b)$ and $Q_-(\infty) := \lim_{b\to\infty} Q_-(b)$.

\begin{theorem}[Extended Theorem 3.12]\label{thm:md4llm3.12}
Given a monotone distribution aggregation function $q$ satisfying A4, for any agent $i$ with robust preferences (and fixed $\mathbf{b}_{-i}, p$), there exists a stable implementation $\sigma$ of $q(b_i)$.
\end{theorem}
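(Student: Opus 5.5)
We construct the stable implementation coordinatewise on the finite token set, through a monotone coupling of the random token across bids. Fix $i$, $\mathbf{b}_{-i}$ and $p$, and write $q_o(b)=q(b,\mathbf{b}_{-i},p)(o)$ for $o\in T$. With robust preferences and a monotone $q$, Definition~\ref{def:robust} forces each coordinate $q_o(\cdot)$ to be monotone in $b$. By Lemma 3.10 of \cite{dutting2024}, the tokens split into $T^+$, where $q_o$ is non-decreasing (moving toward $p_i(o)$ from below), and $T^-$, where $q_o$ is non-increasing. Set $Q_-(b):=\sum_{o\in T^-}q_o(b)$, which is non-increasing, so that $\sum_{o\in T^+}q_o(b)=1-Q_-(b)$. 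By A4 all of these functions are right-continuous and bounded. Each therefore defines a finite Lebesgue--Stieltjes measure on $(0,\infty)$. For $o\in T^+$ we let $\mu_o((a,b]):=q_o(b)-q_o(a)$, and for $o\in T^-$ we let $\nu_o((a,b]):=q_o(a)-q_o(b)$. Working on $(0,\infty)$ sidesteps the atom at $0$, and the value $q(0)$ is treated as the initial distribution. The aggregate outflow measure is $\nu:=\sum_{o\in T^-}\nu_o$, and mass conservation gives $\nu=\sum_{o\in T^+}\mu_o$ as measures.

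The stable implementation $\sigma$ is built as follows. We draw $U\sim\mathrm{Unif}[0,1]$ and a token $o_0\sim q(0)$ using $U$. Bids are then increased from $0$. The sampled token changes only when it lies in $T^-$ and the mass it carries is released. A point in $T^-$ with current token $o'$ is switched at the first bid $b$ where a second uniform $V$ falls below the hazard $\nu_{o'}$ relative to $q_{o'}(b^-)$. At a switch, the new token is drawn from $T^+$ with probabilities given by the Radon--Nikodym derivatives $d\mu_o/d\nu$ evaluated at $b$. These derivatives exist by Radon--Nikodym, since $\mu_o\ll\nu$ follows from $\mu_o\le\nu$. They are nonnegative and sum to $1$ $\nu$-a.e., so they form a valid kernel.

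To make this rigorous without any derivatives of $q$ itself, we use a quantile-type coupling. Tokens in $T^-$ are removed in a way that keeps survival exactly proportional. Concretely, a token $o'\in T^-$ survives to bid $b$ with conditional probability $q_{o'}(b)/q_{o'}(0)$, implemented by the monotone threshold $V\le q_{o'}(b)/q_{o'}(0)$. Once a token lands in $T^+$ it is never moved. We then verify two properties. The first is marginal correctness: for every $b$ and $o$, $\Pr[\sigma(b)=o]=q_o(b)$. For $T^-$ this is immediate. For $T^+$ we integrate the switching kernel against $\nu$ over $(0,b]$, which gives $q_o(0)+\mu_o((0,b])=q_o(b)$. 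The second is stability: for $b'\ge b$, the outcome changes only if the token at $b$ was in $T^-$, and it only moves toward $T^+$. This is exactly the stability condition required by Theorem 3.12 of \cite{dutting2024}.

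The main obstacle is the joint measurability and consistency of the switching kernel at atoms, that is, at jumps of the right-continuous $q$. At a jump, several $T^-$ tokens release mass at once and the new token must be chosen from the jump sizes. The Radon--Nikodym densities handle this uniformly, because an atom of $\nu$ is an atom of the $\mu_o$ in the same ratio, but the coupling must be defined by a single measurable map of $(U,V,W)$. I would first try to realize it via the generalized inverse of the cumulative functions $b\mapsto\nu((0,b])$ and $b\mapsto\sum_{o\le k}\mu_o((0,b])$ over a fixed enumeration of $T^+$. This uses the finiteness of $T$, and A4 guarantees that the generalized inverses are well defined. Limits at infinity, $q_o(\infty)$ and $Q_-(\infty)$, are then handled by monotone convergence.
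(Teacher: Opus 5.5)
Your proposal is correct and is essentially the paper's construction. Your sequential description (initial token from $q(0)$, survival of $o'\in T^-$ with probability $q_{o'}(b)/q_{o'}(0)$, destination drawn independently from the kernel $d\mu_u/d\nu$ at the switch point) induces on (source, switch point, destination) exactly the joint law $\frac{d\nu_o}{d\nu}(\theta)\,\frac{d\mu_u}{d\nu}(\theta)\,d\nu(\theta)$, which the paper places directly on $T^-\times T^+\times(0,\infty)$ next to the static atoms $q_u(0)$ and $q_o(\infty)$; the marginal and single-switch stability checks are then identical, and the measurability concern in your last paragraph disappears if, as the paper does, you take that product space with this measure as the randomness space instead of realizing it through uniforms.
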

\begin{proof}
Let $q(b) = q(b_i, \mathbf{b}_{-i}, p)$. Following \cite{dutting2024}, define $T^+ = \{t \in T : q_t(0) \le (p_i)_t\}$ (weakly undersampled) and $T^- = \{t \in T : q_t(0) > (p_i)_t\}$ (strictly oversampled). These sets form a partition of $T$. By Lemma 3.10 in \cite{dutting2024}, $q_t(b)$ is non-decreasing for $t \in T^+$ and non-increasing for $t \in T^-$. By A4, these functions are right-continuous.

\textbf{Step 0: Setup and Trivial Case.} We define associated Lebesgue-Stieltjes (LS) measures on $(0, \infty)$. For $u \in T^+$, let $\nu_u$ be the LS measure of the non-decreasing function $q_u(b)$, defined such that $\nu_u((a, b]) = q_u(b) - q_u(a)$ for $0 \le a < b$. For $o \in T^-$, let $\nu_o$ be the LS measure of the nondecreasing function $q_o(0) - q_o(b)$, defined such that $\nu_o((a, b]) = q_o(a) - q_o(b)$ for $0 \le a < b$. Let $Q_+(b) = \sum_{u \in T^+} q_u(b)$. Let $\nu$ be the LS measure of $Q_+(b)$ on $(0, \infty)$. By the Right-Continuity assumption (A4), $Q_+(0) = Q_+(0^+)$, which guarantees that the measure places no point mass at 0, i.e., $\nu(\{0\}) = 0$. This ensures the integration domain is strictly $(0, b_i]$ as used in subsequent integrations. By conservation of probability, $\sum_u \nu_u = \nu = \sum_o \nu_o$. Let $M = \nu((0, \infty)) = Q_+(\infty) - Q_+(0^+)$ be the total mass moved. If $M = 0$, then $q(b) = q(0)$ for all $b$. Let $q^* = q(0)$. We define the implementation $\sigma(b, r)$ for the trivial case $M = 0$ directly on $\mathcal{R}$ by mapping all $r$ to a sample from $q^*$. This is trivially stable. Assume $M > 0$.

\textbf{Step 1: Measure Theoretic Setup.} Since $\nu((0, \infty)) = M \le 1$, $\nu$ is a finite measure, which strictly fulfills the $\sigma$-finite condition for the Radon-Nikodym theorem. Since $\nu_t(A) \le \nu(A)$ on $(0, \infty)$, $\nu_t$ is absolutely continuous w.r.t. $\nu$. Let $f_t(\theta) = \frac{d\nu_t}{d\nu}(\theta)$ be the Radon-Nikodym derivative for $\theta \in (0, \infty)$. Note that these derivatives are unique only up to a set of $\nu$-measure zero. Consequently, the sampling implementation $\sigma(b, r)$ constructed below is determined $\nu$-almost everywhere. Using $u \in T^+$ and $o \in T^-$ as index variables for the respective subsets, we have $\sum_{u \in T^+} f_u(\theta) = 1$ and $\sum_{o \in T^-} f_o(\theta) = 1$ ($\nu$-a.e.). We define a joint measure $\mu$ on $T^- \times T^+ \times (0, \infty)$ representing the mass transport. For any measurable set $A \subseteq T^-$, $B \subseteq T^+$, and $E \subseteq (0, \infty)$, the measure is defined via its action on measurable product sets:
\[ \mu(A \times B \times E) = \int_E \left( \sum_{o \in A} f_o(\theta) \right) \left( \sum_{u \in B} f_u(\theta) \right) d\nu(\theta). \]

\textbf{Step 2: Construction of Stable Sampling $\sigma$.} We define the randomness space $\mathcal{R}$ and the implementation $\sigma(b, r)$. Let $S^+ = Q_+(0)$ and $S^- = Q_-(\infty) = \sum_{o \in T^-} q_o(\infty)$. Note $S^+ + S^- + M = Q_+(0) + Q_-(\infty) + (Q_+(\infty) - Q_+(0^+))$. By the Right-Continuity assumption (A4), $Q_+(0) = Q_+(0^+)$. Additionally, by the definition of the partition, we have the identity $Q_-(\infty) = 1 - Q_+(\infty)$. Substituting these, the expression simplifies algebraically to $(1 - Q_+(\infty)) + Q_+(\infty) = 1$. We define the randomness space $\mathcal{R}$ as the disjoint union $\mathcal{R}_{S^+} \cup \mathcal{R}_{S^-} \cup \mathcal{R}_M$, where $\mathcal{R}_{S^+} = T^+$, $\mathcal{R}_{S^-} = T^-$, and $\mathcal{R}_M = T^- \times T^+ \times (0, \infty)$. We equip $\mathcal{R}$ with the $\sigma$-algebra $\Sigma_{\mathcal{R}}$ formed by the disjoint union of the discrete $\sigma$-algebras on the finite sets $T^+$ and $T^-$, and the Borel $\sigma$-algebra on $\mathcal{R}_M$ (with the discrete topology on $T^-$ and $T^+$). We define the probability measure $P$ on $(\mathcal{R}, \Sigma_{\mathcal{R}})$ as follows:
\begin{itemize}
    \item On $\mathcal{R}_{S^+}$: For $u \in \mathcal{R}_{S^+}$, $P(u) = q_u(0)$.
    \item On $\mathcal{R}_{S^-}$: For $o \in \mathcal{R}_{S^-}$, $P(o) = q_o(\infty)$.
    \item On $\mathcal{R}_M$: The restriction of the probability measure $P$ to $\mathcal{R}_M$ is the joint measure on the Borel $\sigma$-algebra given by $\mu$ defined in Step 1.
\end{itemize}
We define the implementation $\sigma(b, r)$ for $r \in \mathcal{R}$ as follows: 1. If $r \in \mathcal{R}_{S^+}$ (Static $T^+$), $r = u$. Set $\sigma(b, r) = u$. 2. If $r \in \mathcal{R}_{S^-}$ (Static $T^-$), $r = o$. Set $\sigma(b, r) = o$. 3. If $r \in \mathcal{R}_M$ (Moving Mass), $r = (o, u, \theta)$. Since $\theta > 0$, we set $\sigma(b, r) = u$ if $b \ge \theta$, and $\sigma(b, r) = o$ if $b < \theta$.

\textbf{Step 3: Verification.} \textit{Stability:} By construction, for any realization $r$, the output $\sigma(b, r)$ as a function of $b$ is either constant (cases 1 and 2) or switches exactly once from $o \in T^-$ to $u \in T^+$ at $b = \theta$ (case 3). 

\textit{Measurability:} By construction, the components of $\sigma(b, r)$ are either constant or simple indicator functions of the form $\mathbb{I}(b \ge \theta)$ for $\theta \in (0, \infty)$, which are Borel measurable. Thus, $\sigma(b, r)$ is a valid random variable with respect to $\Sigma_{\mathcal{R}}$.

\textit{Correctness:} We verify the marginal probability for $u \in T^+$. $P(\sigma(b) = u) = P(r = u) + P(r \in \mathcal{R}_M, \sigma(b, r) = u)$. The first term is $q_u(0)$.
\begin{align*}
        P(\text{Moving part}) &= \int_{\mathcal{R}_M} \mathbb{I}(\sigma(b, r) = u) dP(r) \\
        &= \sum_{o \in T^-} \int_{(0, \infty)} \mathbb{I}(b \ge \theta) d\mu(o, u, \theta) \\
        &= \int_{(0, b]} \sum_{o \in T^-} f_o(\theta) f_u(\theta) d\nu(\theta).
\end{align*}
Since $\sum_{o \in T^-} f_o(\theta) = 1$ ($\nu$-a.e.), this equals:
\[ = \int_{(0, b]} f_u(\theta) d\nu(\theta) = \int_{(0, b]} d\nu_u(\theta) = \nu_u((0, b]). \]
By right-continuity (A4), the LS measure satisfies $\nu_u((0, b]) = q_u(b) - q_u(0)$. Thus, $P(\sigma(b) = u) = q_u(0) + (q_u(b) - q_u(0)) = q_u(b)$. The verification for $o \in T^-$ is similar.
\end{proof}

\begin{remark}[Algorithmic Constructiveness]
The proof of Theorem~\ref{thm:md4llm3.12} provides an existence result for stable sampling based on a measure-theoretic construction involving Radon-Nikodym derivatives $f_t(\theta)$. While theoretically sound, this construction does not immediately yield an efficient sampling algorithm. Note that while the standard inverse-CDF method ($F^{-1}(u) = \inf\{x : F(x) \ge u\}$) is universally applicable to any 1D cumulative distribution function (requiring only right-continuity and non-decreasing properties), the difficulty in the multidimensional LLM implementation arises from the measure-theoretic coupling of probabilities across different tokens, rather than a lack of absolute continuity. In practical computational settings, the Radon-Nikodym derivatives $f_t(\theta)$ can be approximated. For example, the bid space can be discretized into a fine grid to estimate the derivatives using finite differences on the empirical cumulative distributions, or one can parameterize the measures using continuously differentiable generative models where the densities are explicitly tractable.
\end{remark}

Consequently, Proposition 3.13 (Myerson-style payment formula) also generalizes to $\mathbb{R}_+$. We note that the formula presented in the reference paper \cite{dutting2024}, $z_i(b_i) = \frac{1}{2} \int_0^{b_i} (||q(b_i) - p_i||_1 - ||q(b') - p_i||_1) db'$, contained a sign error, leading to non-positive payments as the $L_1$ distance is decreasing in $b$. We provide the corrected formula and its derivation.

\begin{proposition}[Corrected Payment Formula]\label{prop:corrpay}
Under robust preferences and a monotone aggregation function $q$ satisfying A4, the expected payment $z_i(b_i)$ induced by the second price rule via stable sampling satisfies:
\[ z_i(b_i) = \frac{1}{2} \int_{(0, b_i]} (||q(\theta) - p_i||_1 - ||q(b_i) - p_i||_1) d\theta, \quad \forall b_i \in \mathbb{R}_+. \]
\end{proposition}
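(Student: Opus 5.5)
We derive the payment from the stable implementation $\sigma$ of Theorem~\ref{thm:md4llm3.12}, taking the second price rule in its usual form from \cite{dutting2024}. Under $\sigma$, each realization $r$ yields a token path $b\mapsto\sigma(b,r)$ that is constant in cases 1 and 2 and switches once, from $o\in T^-$ to $u\in T^+$ at $b=\theta$, in case 3. Agent $i$ pays the critical bid whenever its bid changes the outcome in its favor. So for $r=(o,u,\theta)\in\mathcal{R}_M$ with $\theta\le b_i$ the agent pays $\theta$, and otherwise it pays $0$. This gives
\[ z_i(b_i)=\int_{\mathcal{R}_M}\theta\,\mathbb{I}(\theta\le b_i)\,dP(r)=\int_{(0,b_i]}\theta\Big(\sum_{o}f_o(\theta)\Big)\Big(\sum_{u}f_u(\theta)\Big)d\nu(\theta)=\int_{(0,b_i]}\theta\,d\nu(\theta), \]
using the $\nu$-a.e.\ identities $\sum_o f_o=\sum_u f_u=1$ from Step 1.

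Next we rewrite $\int_{(0,b_i]}\theta\,d\nu(\theta)$ through integration by parts for Lebesgue--Stieltjes integrals. Write $\theta=\int_{(0,\theta]}d\lambda(s)$, with $\lambda$ Lebesgue measure, and apply Tonelli on $\{0<s\le\theta\le b_i\}$:
\[ \int_{(0,b_i]}\theta\,d\nu(\theta)=\int_{(0,b_i]}\nu([s,b_i])\,ds. \]
Because $\nu$ has at most countably many atoms, $\nu([s,b_i])=\nu((s,b_i])=Q_+(b_i)-Q_+(s)$ for Lebesgue-a.e.\ $s$. This step needs no absolute continuity of $q$, since Tonelli applies to the product of the two finite measures.

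It remains to identify $Q_+(b_i)-Q_+(s)$ with $\tfrac12(\|q(s)-p_i\|_1-\|q(b_i)-p_i\|_1)$. By Lemma 3.10 of \cite{dutting2024}, on $T^+$ the coordinate $q_t$ rises toward $(p_i)_t$, and on $T^-$ it falls toward $(p_i)_t$. Hence on $T^+$ we have $|q_t(b)-(p_i)_t|=(p_i)_t-q_t(b)$, and on $T^-$ we have $|q_t(b)-(p_i)_t|=q_t(b)-(p_i)_t$. Summing over tokens,
\[ \|q(b)-p_i\|_1=P_+-Q_+(b)+Q_-(b)-P_-=P_+-P_-+1-2Q_+(b), \]
where $P_\pm=\sum_{t\in T^\pm}(p_i)_t$ and we used $Q_-(b)=1-Q_+(b)$. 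Differencing at $s$ and $b_i$ gives the required identity, and substituting into the integral yields the formula.

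The main obstacle is the sign claim $q_t(b)\le(p_i)_t$ for all $b$ and all $t\in T^+$; the matching claim on $T^-$ is analogous. This requires that robust-preference monotonicity never overshoots $p_i$. We get it from Definition~\ref{def:robust}: for $b\ge0$, $q(b)\succeq_i q(0)$ forces $(q_t(b)-p_t)(q_t(0)-p_t)\ge0$ and $|q_t(b)-p_t|\le|q_t(0)-p_t|$, so the sign of $q_t(b)-p_t$ is preserved. The boundary cases $q_t(0)=p_t$ force $q_t(b)=p_t$, so they are harmless. A smaller secondary point is to check that the critical-bid payment convention of \cite{dutting2024} agrees with the threshold $\theta$ at switch points; right-continuity (A4) and $\nu(\{0\})=0$ take care of this.
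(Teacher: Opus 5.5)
Your proposal is correct and follows essentially the same route as the paper. Both arguments reduce the expected critical-bid payment under the stable implementation to $\int_{(0,b_i]}\theta\,d\nu(\theta)$, rewrite this as $b_iQ_+(b_i)-\int_{(0,b_i]}Q_+(\theta)\,d\theta$, and conclude using $\|q(b)-p_i\|_1 = 2(P_+-Q_+(b))$. The only differences are minor. You prove the Lebesgue--Stieltjes integration-by-parts step directly via Tonelli instead of citing the formula, and you derive the sign preservation on $T^\pm$ from the robust-preference definition instead of citing Lemma 3.10 of \cite{dutting2024}.
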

\begin{proof}
We derive the payment formula using the stable implementation $\sigma$ constructed in Theorem~\ref{thm:md4llm3.12}. Let $Z_i(b_i, r)$ be the random variable representing the ex-post payment for agent $i$ given a bid $b_i$ and a realization $r \in \mathcal{R}$. Under the second-price rule, the payment is the critical bid at which the outcome changes. For a realization $r \in \mathcal{R}_M$, where $r = (o, u, \theta)$, the outcome changes from $o$ to $u$ at bid $\theta$. Thus, $Z_i(b_i, r) = \theta$ if $b_i \ge \theta$, and 0 otherwise. For $r \in \mathcal{R}_{S^+} \cup \mathcal{R}_{S^-}$, the outcome is constant, so the critical bid is 0, meaning $Z_i(b_i, r) = 0$. The expected payment is:
\[ z_i(b_i) = \mathbb{E}_r[Z_i(b_i, r)] = \int_{\mathcal{R}_M} Z_i(b_i, r) dP(r) = \sum_{o \in T^-} \sum_{u \in T^+} \int_{(0, b_i]} \theta d\mu(o, u, \theta). \]
Using the definition of $\mu$ and the fact that $\sum_{o \in T^-} f_o(\theta) = 1$ and $\sum_{u \in T^+} f_u(\theta) = 1$ ($\nu$-a.e.):
\[ z_i(b_i) = \int_{(0, b_i]} \theta \left( \sum_{o \in T^-} f_o(\theta) \right) \left( \sum_{u \in T^+} f_u(\theta) \right) d\nu(\theta) = \int_{(0, b_i]} \theta d\nu(\theta). \]

Recall that $\nu$ is the LS measure associated with the increasing, right-continuous function $Q_+(b)$ on $(0, \infty)$. Using the standard integration by parts formula for Lebesgue-Stieltjes integrals on half-open intervals, $\int_{(a, b]} U dV = U(b)V(b) - U(a)V(a) - \int_{(a, b]} V(x-) dU(x)$. By setting $a = 0$, $U(x) = x$, and $V(x) = Q_+(x)$, the boundary term at $a = 0$ evaluates directly as $0 \cdot Q_+(0) = 0$. Furthermore, noting that $Q_+(x-) = Q_+(x)$ almost everywhere with respect to the Lebesgue measure $dU(x) = dx$, we obtain:
\[ z_i(b_i) = \int_{(0, b_i]} \theta dQ_+(\theta) = b_i Q_+(b_i) - \int_{(0, b_i]} Q_+(\theta) d\theta. \]
This yields the final integral directly without the need for the intermediate limit notation. This is the standard Myerson payment formula with allocation probability $Q_+(b)$. Now we relate this to the $L_1$ distance. Let $D(b) = ||q(b) - p_i||_1$. Under robust preferences (Lemma 3.10 in \cite{dutting2024}), let $P_+ = \sum_{u \in T^+} (p_i)_u$ and explicitly define $P_- = \sum_{o \in T^-} (p_i)_o$. We analyze the distance:
\[ D(b) = \sum_{u \in T^+} ((p_i)_u - q_u(b)) + \sum_{o \in T^-} (q_o(b) - (p_i)_o) = (P_+ - Q_+(b)) + (Q_-(b) - P_-). \]
Since $Q_-(b) = 1 - Q_+(b)$ and $P_- = 1 - P_+$, we have $D(b) = 2(P_+ - Q_+(b))$. We now evaluate the integral $I$ in the proposition statement:
\begin{align*}
        I &= \frac{1}{2} \int_{(0, b_i]} (D(\theta) - D(b_i)) d\theta \\
        &= \int_{(0, b_i]} ((P_+ - Q_+(\theta)) - (P_+ - Q_+(b_i))) d\theta \\
        &= b_i Q_+(b_i) - \int_{(0, b_i]} Q_+(\theta) d\theta.
\end{align*}
Thus, $z_i(b_i) = I$. This confirms the corrected formula and ensures non-negative payments, as $D(b)$ is non-increasing in $b$.
\end{proof}

\subsubsection*{Discussion and Limitations}
This note successfully extends the core theoretical results of the reference paper from rational to real-valued bids by employing topological and measure-theoretic tools, under the practical assumption of a finite token vocabulary. However, these generalizations reveal important limitations and suggest directions for future research.

\textbf{Restrictiveness of the Anti-Symmetry Assumption (A2) in the LLM Context.} The reliance on anti-symmetry (A2) for the Revelation Principle excludes scenarios where agents are indifferent between distinct distributions. This is a significant limitation in the context of LLMs, where preferences are typically defined via continuous loss functions (e.g., KL-divergence). Continuous loss functions naturally have non-trivial level sets (indifference classes), meaning distinct distributions can be equally preferred, which violates A2.

\textbf{Topological Incompatibility of Tie-Breaking Rules.} A natural attempt to satisfy the Anti-Symmetry assumption (A2) in the presence of indifference classes is to introduce a deterministic tie-breaking rule, such as a lexicographic ordering over the token probabilities. However, it is a well-established result in mathematical utility theory (e.g., \cite{debreu1954}, \cite{schmeidler1971}) that lexicographic orders on spaces of dimension greater than one (such as $\Delta(T)$ for $|T| \ge 3$) do not admit continuous utility representations. Specifically, such tie-breaking rules violate the Continuity Assumption (A1) by creating non-closed preference graphs (open sets of strictly preferred outcomes). Since the Extended Revelation Principle critically relies on Debreu's Representation Theorem to construct the strategy mappings $\pi_i$, violating A1 invalidates the utility representation step in Lemma~\ref{lem:md4llm3.6} and Lemma~\ref{lem:md4llm3.7}. Therefore, A1 and A2 cannot be simultaneously satisfied by simply overlaying a standard tie-breaking rule on continuous loss functions. Relaxing A2 by designing appropriate tie-breaking mechanisms that preserve continuity remains an open problem.

\textbf{The Continuity Gap in the Revelation Principle.} There is a fundamental tension between the generality of the Extended Revelation Principle (Theorem~\ref{thm:md4llm3.5}) and the requirements for Stable Sampling (Theorem~\ref{thm:md4llm3.12}). The Revelation Principle applies broadly, even to discontinuous mechanisms, by constructing an equivalent monotone mechanism. However, this constructed mechanism is not guaranteed to be right-continuous (A4). Since the strategy mappings $\pi$ and the underlying aggregation function $q$ are not required to be continuous, the image space $I$ can be disconnected, and the utility values can have jumps. Consequently, any extension scheme---whether based on the supremum or the infimum---cannot universally guarantee right-continuity. Specifically, an infimum-based extension from the right would fail to be right-continuous at the boundary points of the gaps in $I$.

While this leaves a theoretical gap between the mechanisms covered by the Revelation Principle and those implementable via Stable Sampling, it is important to emphasize that the Revelation Principle successfully establishes that we can, without loss of generality, focus on \textit{monotone} mechanisms. This aligns with standard auction theory, where monotonicity is the cornerstone of incentive compatibility. Given the weak assumptions required, this is a positive foundational result. Bridging this gap---either by identifying conditions under which right-continuity can be preserved, extending Stable Sampling to left-continuous mechanisms, or proving that non-right-continuous mechanisms offer no strategic advantage---is an interesting direction for future research that goes beyond the scope of this note.

\newpage
\subsection{Networked Information Aggregation for Binary Classification}
\textit{Written by MohammadHossein Bateni, Zahra Hadizadeh, MohammadTaghi Hajiaghayi, Mahdi JafariRaviz, and Shayan Taherijam.}

\subsubsection*{Problem Context}

Recent work by Kearns et al.~\cite{kearns2026networked} shows basic limits on information aggregation in distributed learning.
In their framework, agents in a Directed Acyclic Graph (DAG) act in sequence and minimize Mean Squared Error (MSE), while each agent only sees a subset of the input features and the predictions made by earlier agents.

Moving from regression to binary classification, using a non-linear link function such as logistic regression, raises new challenges. MSE can still be used, but Binary Cross Entropy (BCE) is the standard loss for classification. BCE is harder to study because it is not quadratic. It is open whether the aggregation limits seen with MSE also show up in this classification setting.
\\\\
\textbf{AI Contribution}

The AI acted as a theoretical collaborator to extend distributed learning limits from linear regression (MSE loss) to binary classification (Binary Cross Entropy loss). By breaking the proof into modular lemmas, the model successfully utilized KL-divergence and Pinsker's inequality to rigorously bound the excess risk of the final agent in the network.

Overall, the structured prompt in Figure~\ref{fig:prompt} below was instrumental in generating rigorous proofs at several stages of our experience to prove the full theorem.

\begin{figure}[!htb]
\begin{chatbox}
\begin{userturn}
Task: Based on the attached PDF, prove conjecture/theorem X below. You must provide a rigorous, bug-free proof decomposed into elementary lemmas.

Strict Standards:

Rigor > Completion: A rigorous partial analysis (Outcome 2) is vastly superior to a flawed complete proof.

No Hallucinations: Every lemma must be proven from elementary principles.

Verification: Explicitly verify every step. If you find a gap, stop and report it.

Required Output Format (Choose One):

Outcome 1: COMPLETE PROOF (Use ONLY if every step is 100\% rigorous and all cases are covered. Provide a formal, stand-alone proof.)

Outcome 2: STRUCTURED PARTIAL PROGRESS (Use if ANY logical gaps or unproven assumptions exist.)

Proven Lemmas: Rigorously prove what you can.

The Crux: Pinpoint exactly where the proof stalls.

Next Steps: Propose strategies to bridge the gap.
\end{userturn}
\end{chatbox}
\caption{The structured prompt above has led to rigorous proofs, well-organized proof structures, and clear technical insights. The attached PDF represents the current proof structure developed with the assistance of Gemini and further refined by our own reasoning.}
\label{fig:prompt}
\end{figure}

\subsection*{Details of the Process}
Aaron Roth, one of the authors of the original framework~\cite{kearns2026networked}, shared how his team integrated AI into their research. He described treating the AI as a colleague, providing high-level mathematical intuition which the model expanded into formal proofs. He noted that while this accelerated their work, it required careful verification to catch ``human-like'' errors.

Adopting this approach, we considered replacing the loss in their framework with BCE using logistic regression. We asked Gemini 3 Pro 
to derive a theorem like the one in the linear regression setting. It produced a short chain of lemmas and a final theorem that bounds the excess loss of the last node in the DAG, compared to a logistic regression learner that has access to all input features.

While checking the output, we found one key first lemma (Lemma~\ref{network-agg:lem:orthogonality}) and verified that Gemini proved it correctly. The lemma states that the minimizer of the BCE loss outputs predictions $p^*(x)$ such that the residual $p^*(x)-y$ (where $y$ is the binary label) is orthogonal to the input $x$ in expectation. Formally, $\mathbb{E}[x(p^*(x)-y)] = 0$. We knew this would matter because it is also the first building block in the MSE setting. The next lemma (Lemma~\ref{network-agg:lem:pythagorean}) compared the loss of the optimal predictor to the loss of any other logistic predictor. Gemini correctly identified that the loss gap can be written using the Kullback--Leibler divergence between the predictors' output distributions. The lemma itself was needed for the rest of the argument, but Gemini made a mistake near the end of the proof, and we fixed it.

The next step was the final theorem bounding the excess loss. The bound it gave was stronger than in the linear regression setting: there the bound was $\frac{M}{\sqrt{D}}$, while here Gemini claimed $\frac{M^2}{D}$, which is smaller when $\frac{M}{\sqrt{D}} < 1$. This is where Gemini started to go off track. Instead of breaking the proof into short lemmas, it wrote one long proof that was not correct. It also added a major new assumption: \textit{Assume the loss is $\mu$-strongly convex.}

We kept the theorem statement, including the assumption and the stronger bound, and asked Gemini to regenerate a proof in a fresh context. Here the prompt in Figure~\ref{fig:prompt} was especially helpful and this time the proof was correct. However, the $\mu$-strong convexity assumption felt indirect.

Next, we asked Gemini to remove the $\mu$-strong convexity assumption and replace it with something closer to the boundedness assumptions used by Kearns et al. It produced a correct proof, with the weaker bound of $\frac{M}{\sqrt{D}}$, under a new assumption: the coefficient vectors of every agent's predictor, and of the optimal predictor, have bounded $L_1$ norm. This was still a stronger assumption than what was needed in the linear regression setting, where the bound only applies to the optimal predictor.

We did not think this was a major issue, so we accepted the theorem statement. At that point, we believed the theorem and proof were correct. Still, we prompted Gemini to look for flaws in the proof or statement.

It found an error we had missed: you can compare convexity bounds for a parameter $\theta$ to the optimal parameter $\theta^*$ only when the loss is defined over the same set of indices. Gemini also suggested the fix: since $\theta^*$ lives on the full set of indices, we can pad $\theta$ with zeros to get $\theta'$, and also pad and add the predecessors' parameter vectors in the same coordinate system. The problem was that we assumed $\theta$ has bounded $L_1$ norm, and now we would also need $\theta'$ to have bounded norm. But $\theta'$ depends on predecessors' parameters through this padding-and-summing step, so the norm can grow quickly with depth, and it did not seem reasonable to assume it stays bounded.

We then took a different path. We noticed that the orthogonality lemma was mainly used for a predictor against its own subset of input indices, and orthogonality to predecessors' logit outputs was not really used.

We changed the assumptions to only require boundedness of the optimal predictor, and asked Gemini to prove the theorem under this smaller assumption, hinting that it should use the orthogonality lemma. Again the prompt in Figure~\ref{fig:prompt} was very helpful. As a result, Gemini did so by breaking the proof into lemmas, reusing the correct parts under the new assumption, and adding new helper lemmas to reach the final bound.

We then cleaned up the writing and the lemma flow, again using Gemini for rewriting and proofreading.

\subsubsection*{The theorem}
We consider a distributed learning system where agents are arranged in a Directed Acyclic Graph (DAG).
Let $Pa(i)$ denote the set of predecessors of agent $A_i$.
Agents are indexed $1, \dots, N$ consistent with a topological sort of the graph.
Each agent $A_i$ observes a local subset of features $x_{S_i}$ from the input $x \in \mathbb{R}^d$ and the logits $\{z_j\}_{j \in Pa(i)}$ from its predecessors.
The agent computes its own logit $z_i$ and prediction $p_i = \sigma(z_i)$ (where $\sigma$ is the sigmoid function $1/(1+e^{-x})$) using learnable parameters $w_i$ and $v_{ij}$:
\[ z_i = w_i^\top x_{S_i} + \sum_{j \in Pa(i)} v_{ij} z_j. \]
Agents sequentially update their parameters to minimize the expected Binary Cross Entropy (BCE) loss with respect to the target $y \in \{0, 1\}$:
\[ L(p_i) = -\mathbb{E} \left[ y \log p_i + (1-y) \log (1-p_i) \right]. \]
We also use the notation $z_\theta = \theta^Tx$, $p_\theta = \sigma(z_\theta)$, and $L(\theta) = -\mathbb{E} \left[ y \log p_\theta + (1-y) \log (1-p_\theta)\right]$.

We now give the following definition.

\begin{definition}[$M$-Coverage Condition, from Kearns et al.\cite{kearns2026networked}]
A path satisfies the $M$-coverage condition if every contiguous subsequence of $M$ agents collectively observes all $d$ features $x_1, \dots, x_d$.
\end{definition}

We aim to prove the following theorem.
\begin{theorem}[Global Convergence Rate]\label{network-agg:thm:convergence}
Consider a DAG $G$ containing a path of length $D$ of agents $A_1,\dots,A_D$ satisfying the $M$-coverage condition. Let $p^*$ be the global optimal logistic predictor over all $d$ features. Assume:
\begin{enumerate}
    \item Bounded second moments: $\mathbb{E} [x_l^2] \le B_X^2$ for all $l \in \{1,\dots,d\}$.
    \item Bounded coefficients: for the optimal logits $z^*(x) = \sum_l \alpha_lx_l$ where $\| \alpha \|_1 \le B_{p^*}$.
\end{enumerate}
Then the excess risk of the final agent $p_D$ is bounded by
\begin{equation*}
L(p_D) - L(p^*) \leq B_{p^*} B_X \frac{M}{\sqrt{D}} =O\left( \frac{M}{\sqrt{D}} \right).
\end{equation*}
\end{theorem}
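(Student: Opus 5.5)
Since the loss $L$ is convex in the logit, the plan is to follow the Kearns et al.\ potential argument for MSE, replacing squared error by BCE. The first ingredient is the orthogonality lemma (Lemma~\ref{network-agg:lem:orthogonality}). Because each agent $A_i$ minimizes the BCE over its own parameters $(w_i,v_{ij})$, we have $\mathbb{E}[x_l(p_i-y)]=0$ for every $l\in S_i$ and $\mathbb{E}[z_j(p_i-y)]=0$ for $j\in Pa(i)$. Along the path, agent $A_{k+1}$ can copy its predecessor $A_k$ by setting $v_{k+1,k}=1$ and all other weights to zero. Hence $L(p_{k+1})\le L(p_k)$, so the excess risk $\Delta_k := L(p_k)-L(p^*)$ is nonincreasing in $k$. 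It is also bounded by $\Delta_1\le L(1/2)\le \log 2$, a constant.

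The key step is a per-block progress inequality. Write the path as $D/M$ disjoint windows of $M$ consecutive agents. Convexity of the logistic loss in the logit gives
\[
\Delta_k = L(p_k)-L(p^*) \le \mathbb{E}\big[(p_k-y)(z_k-z^*)\big].
\]
Orthogonality to the agent's own logit inputs gives $\mathbb{E}[(p_k-y)z_k]=0$, so $\Delta_k\le -\mathbb{E}[(p_k-y)z^*] = -\sum_l \alpha_l \mathbb{E}[x_l(p_k-y)]$. Applying Hölder together with the coefficient bound gives
\[
\Delta_k \le B_{p^*}\max_l |\mathbb{E}[x_l(p_k-y)]|.
\]
Within a window every feature $l$ is observed by some agent $A_j$, for which $\mathbb{E}[x_l(p_j-y)]=0$. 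Therefore
\[
|\mathbb{E}[x_l(p_k-y)]| = |\mathbb{E}[x_l(p_k-p_j)]| \le B_X\,\|p_k-p_j\|_{L_2}.
\]
Taking $k$ to be the last agent of the window, I then split $\|p_k-p_j\|_{L_2}$ telescopically over the at most $M$ consecutive steps between $j$ and $k$.

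Next I bound each telescoping step by the loss decrease. I use the Pythagorean/KL lemma (Lemma~\ref{network-agg:lem:pythagorean}) applied with $A_{i+1}$'s optimal predictor versus the copy $p_i$, which is feasible for $A_{i+1}$. This should give $L(p_i)-L(p_{i+1})\ge \mathbb{E}[\mathrm{KL}(p_{i+1}\|p_i)]$, since the cross term vanishes by orthogonality to $z_i$. Pinsker then gives $\mathbb{E}[(p_{i+1}-p_i)^2]\le \tfrac12\,\mathbb{E}[\mathrm{KL}]$, so
\[
\|p_{i+1}-p_i\|_2^2 \le \delta_i := \Delta_i-\Delta_{i+1}.
\]
Cauchy–Schwarz over the window yields $\|p_k-p_j\|_2\le \sqrt{M\sum_{\text{window}}\delta_i}$. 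Hence, for the last agent $k$ of window $t$,
\[
\Delta_k\le B_{p^*}B_X\sqrt{M\,\delta^{(t)}},
\]
where $\delta^{(t)}$ is the total decrease within window $t$.

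Finally I sum over windows. The total decrease $\sum_t\delta^{(t)}\le \Delta_1\le\log 2$, so some window among the $D/M$ has $\delta^{(t)}\le M\log 2/D$. Monotonicity of $\Delta_k$ then gives
\[
\Delta_D\le B_{p^*}B_X\sqrt{M\cdot M/D}=B_{p^*}B_XM/\sqrt{D},
\]
with the constant $\log 2$ absorbed into the normalization as in the statement.

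The step I expect to be the main obstacle is the KL/Pythagorean inequality $L(p_i)-L(p_{i+1})\ge\mathbb{E}[\mathrm{KL}(\cdot)]$. For BCE the exact identity $L(q)-L(p)=\mathbb{E}[\mathrm{KL}(\mathrm{Ber}(\cdot)\|\mathrm{Ber}(\cdot))]$ holds only after the cross term $\mathbb{E}[(y-p_{i+1})(\log\text{-odds difference})]$ is killed. That term vanishes only because the log-odds difference $z_{i+1}-z_i$ lies in the span of $A_{i+1}$'s inputs ($z_i$ is one of them). So I must check carefully that the orthogonality is applied in $A_{i+1}$'s own coordinate system, with no padding to the full feature space, which is precisely the pitfall noted earlier.
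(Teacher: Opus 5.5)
Your proposal is correct and follows essentially the same route as the paper. Both arguments use four ingredients:
\begin{enumerate}
\item the convexity bound $L(p_k)\le L(p^*)-\mathbb{E}[(p_k-y)z^*]$, obtained from orthogonality to the agent's own logit;
\item coverage plus Cauchy--Schwarz inside a block;
\item the identity $L(p_i)-L(p_{i+1})=D(p_{i+1}\|p_i)$, valid because $z_i$ is an input of $A_{i+1}$, combined with Pinsker and telescoping;
\item pigeonhole over the blocks together with monotonicity of the losses.
\end{enumerate}
One small bookkeeping point: keep the factor $\tfrac12$ from Pinsker rather than writing $\|p_{i+1}-p_i\|_2^2\le\delta_i$. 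When $M$ does not divide $D$ you only have $\lfloor D/M\rfloor\ge D/(2M)$ blocks, and that $\tfrac12$ is exactly what cancels the resulting factor $2$ to give the stated constant $B_{p^*}B_XM/\sqrt{D}$. Without it you get $\sqrt{2\log 2}>1$ times the bound.
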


We begin with the following lemma.

\begin{lemma}[Orthogonality of Residuals] \label{network-agg:lem:orthogonality}
Let $p^*$ be the optimal logistic predictor on a feature space $\mathcal{X}$. The residual error $(p^*(x) - y)$ is orthogonal to the feature vector $x$ in expectation:
\begin{equation*}
    \mathbb{E} \left[ x (p^*(x) - y) \right] = 0.
\end{equation*}
\end{lemma}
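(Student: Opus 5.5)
The plan is to read the lemma as the first-order optimality condition of a smooth convex objective. Write the optimal logistic predictor on $\mathcal{X}$ as $p^*(x)=\sigma(\theta^{*\top}x)$, where $\theta^*$ minimizes the expected BCE loss $L(\theta)=-\mathbb{E}[y\log p_\theta+(1-y)\log(1-p_\theta)]$ over all parameter vectors supported on the feature space $\mathcal{X}$. If $\theta^*$ is an interior minimizer of a differentiable function on an open set (all of $\mathbb{R}^{|\mathcal{X}|}$), then $\nabla L(\theta^*)=0$. The lemma then amounts to computing this gradient.

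The key steps, in order, are as follows.

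First, I compute the pointwise derivative. With $\ell(z,y)=-y\log\sigma(z)-(1-y)\log(1-\sigma(z))$ and the identities $\sigma'(z)=\sigma(z)(1-\sigma(z))$, $(\log\sigma)'=1-\sigma$ and $(\log(1-\sigma))'=-\sigma$, one gets
\[
\partial_z\ell(z,y)=-y(1-\sigma(z))+(1-y)\sigma(z)=\sigma(z)-y .
\]
By the chain rule, $\nabla_\theta\ell(\theta^\top x,y)=x(\sigma(\theta^\top x)-y)$.

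Second, I justify exchanging gradient and expectation. The quantity $|\sigma(z)-y|\le 1$, so the gradient integrand is dominated by $\|x\|$. This is integrable given the bounded second moments assumed in the theorem, since these give $\mathbb{E}\|x\|<\infty$. Dominated convergence, or a mean-value bound on difference quotients, then gives $\nabla L(\theta)=\mathbb{E}[x(p_\theta(x)-y)]$.

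Third, I evaluate at the minimizer. Since $L$ is differentiable on all of $\mathbb{R}^{|\mathcal{X}|}$ and $\theta^*$ is a global minimizer, $\nabla L(\theta^*)=0$, which is exactly $\mathbb{E}[x(p^*(x)-y)]=0$.

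The main obstacle is the existence of an attained minimizer $\theta^*$. Logistic loss need not attain its infimum, for instance under perfect separability. The theorem, however, explicitly posits an optimal predictor $p^*$ with logits $z^*=\sum_l\alpha_l x_l$ and bounded $\|\alpha\|_1$, so I would take existence of $\theta^*$ as part of the hypothesis of ``the optimal logistic predictor.''

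If one wants to avoid differentiating under the integral, the directional derivative can be taken directly. For any $v$, the map $t\mapsto L(\theta^*+tv)$ is convex with a minimum at $t=0$. Its derivative at $0$ is $\mathbb{E}[(v^\top x)(p^*-y)]$, by the same dominated-convergence bound applied to the one-variable difference quotient. That derivative must therefore vanish for every $v$, which again yields the claim.

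A convention note: if $\mathcal{X}$ includes an intercept coordinate (constant $1$), the lemma also gives $\mathbb{E}[p^*-y]=0$. No separate argument is needed for this.
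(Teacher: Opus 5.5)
Your proposal is correct and follows the same route as the paper: compute $\nabla_\theta L(\theta)=\mathbb{E}[(p_\theta(x)-y)\,x]$ via $\partial_z \ell(z,y)=\sigma(z)-y$, then set this gradient to zero at the minimizer $\theta^*$. Your two additions fill in points the paper's proof leaves implicit: the dominated-convergence justification for exchanging gradient and expectation (using $\mathbb{E}|x_l|\le B_X$), and the remark that an attained minimizer must be assumed.
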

\begin{proof}
The gradient of the logistic output is $\nabla_\theta p_\theta(x) = p_\theta(x) (1 - p_\theta(x)) x$. 
So we get
\begin{align*}
    \nabla_\theta L(\theta) &= -\mathbb{E} \left[ \left( \frac{y}{p_\theta(x)} - \frac{1-y}{1 - p_\theta(x)} \right) p_\theta(x)(1 - p_\theta(x)) x \right]
    = \mathbb{E} \left[ (p_\theta(x) - y) x \right].
\end{align*}
The optimal parameters $\theta^*$ must satisfy the condition $\nabla_\theta L(\theta^*) = 0$. Thus,
\begin{equation*}
    \nabla_\theta L(\theta) = \mathbb{E} \left[ x (p^*(x) - y) \right] = 0. \qedhere
\end{equation*}
\end{proof}

This orthogonality allows us to decompose the error of any suboptimal model. We will relate the excess loss of a logistic predictor $q$ by the optimal logistic predictor $p$, using the expected Kullback-Leibler divergence of the Bernoulli distributions with parameters $q(x)$ and $p(x)$. We denote this measure as $D(p \| q)$ defined by
\begin{align*}
D(p \| q) &= \mathbb{E}\left[ D_{\mathrm{KL}}(\text{Bernoulli}(p(x)) \| \text{Bernoulli}(q(x)) \right] \\
&= \mathbb{E}\left[p(x) \log \frac{p(x)}{q(x)} + (1-p(x)) \log \frac{1 - p(x)}{1 - q(x)}\right].
\end{align*}

\begin{lemma}[Decomposing Loss] \label{network-agg:lem:pythagorean}
Let $p^*$ be the optimal logistic predictor on a feature set $S$, and let $q$ be any logistic predictor in $S$. The loss decomposes as
\[
L(q) = L(p^*) + D(p^* \| q).
\]
\end{lemma}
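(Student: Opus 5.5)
The plan is to compute $L(q)-L(p^*)$ pointwise in $x$, isolate a term that is linear in $y$, and use Lemma~\ref{network-agg:lem:orthogonality} to replace $y$ by $p^*(x)$ inside it. Write $p=p^*(x)$ and $q=q(x)$. For each $x$ the difference of log-losses is
\[
\ell(q;y)-\ell(p;y) = y\log\frac{p}{q} + (1-y)\log\frac{1-p}{1-q}.
\]
The Bernoulli KL term $p\log\frac{p}{q}+(1-p)\log\frac{1-p}{1-q}$ is this same expression with $y$ replaced by $p$. Subtracting, the discrepancy is
\[
(y-p)\left(\log\frac{p}{q}-\log\frac{1-p}{1-q}\right) = (y-p)\bigl(\operatorname{logit}(p)-\operatorname{logit}(q)\bigr) = (y-p)\,(z^*(x)-z_q(x)).
\]
Here I use that $\operatorname{logit}(\sigma(z))=z$.

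Both predictors are logistic over the feature set $S$, so $z^*(x)=\theta^{*\top}x_S$ and $z_q(x)=\theta_q^\top x_S$. Their difference is therefore $(\theta^*-\theta_q)^\top x_S$, which is linear in $x_S$. Taking expectations gives
\[
L(q)-L(p^*)-D(p^*\|q) = (\theta^*-\theta_q)^\top\,\mathbb{E}\bigl[x_S(y-p^*(x))\bigr].
\]
The right-hand side vanishes by Lemma~\ref{network-agg:lem:orthogonality} applied on the feature space $S$.

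The main obstacle is the care needed at the final step. This is presumably where the AI slipped: the orthogonality must be invoked with the same feature vector $x_S$ that parameterises both $q$ and $p^*$. This works only because $q$ is restricted to be logistic in the same features $S$. If it were not, the logit difference would not lie in the span where orthogonality holds.

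Two technical points also need attention.
\begin{itemize}
\item I must ensure $p,q\in(0,1)$ so the logs are finite. This holds automatically for sigmoid outputs.
\item I must justify exchanging the expectation with the linear functional. Integrability of $(y-p^*)x_S$ follows from the finite second moments of $x$.
\end{itemize}
If $S$ includes an intercept or predecessor logits, I treat them as extra coordinates of $x_S$. The argument is unchanged provided $p^*$ is optimal over that same augmented linear class.
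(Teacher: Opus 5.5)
Your proof is correct and is essentially the paper's argument. Both isolate the term $\mathbb{E}\bigl[(p^*(x)-y)(z_q(x)-z^*(x))\bigr]$, kill it with Lemma~\ref{network-agg:lem:orthogonality} because the logit difference is linear in the same features $S$, and identify the remainder as the expected Bernoulli KL divergence; the only difference is cosmetic, since the paper writes the loss in log-partition form $\log(1+e^z)-yz$ and adds and subtracts $p^*(x)(\theta-\theta^*)^\top x$, whereas you work with log-likelihood ratios and the identity $\operatorname{logit}(\sigma(z))=z$.
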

\begin{proof}
Using the identity $\log \sigma(z) = z - \log(1 + e^z)$, we write the loss with $z = \theta^\top x$ as
\begin{align*}
L(\theta) &= - \mathbb{E} \left[ yz - \log(1 + e^z) \right].
\end{align*}
Expanding the difference $L(\theta) - L(\theta^*)$, we obtain
\begin{align*}
L(\theta) - L(\theta^*) &= \mathbb{E} \left[ \log(1+e^{\theta^\top x}) - \log(1 + e^{(\theta^*)^\top x}) - y((\theta - \theta^*)^\top x) \right].
\end{align*}
Adding and subtracting $p^*(x)(\theta - \theta^*)^\top x$ inside the expectation yields
\begin{align}
\label{network-agg:eq:q-from-ps-loss}
L(\theta) - L(\theta^*) &= \mathbb{E} \left[ \log(1+e^{\theta^\top x}) - \log(1 + e^{(\theta^*)^\top x}) - p^*(x)(\theta - \theta^*)^\top x \right] \nonumber \\
&\quad + \mathbb{E} \left[ (p^*(x) - y)((\theta - \theta^*)^\top x) \right].
\end{align}
The second term is zero due to the orthogonality condition derived in Lemma~\ref{network-agg:lem:orthogonality}. For the first term, we expand the definition of $D(p^* \| q)$ with $z = \theta^\top x$ and $z^* = (\theta^*)^\top x$:
\begin{align*}
D(p^* \| q) &= \mathbb{E} \left[ p^* (z^* - z) - \log(1 + e^{z^*}) + \log(1 + e^z) \right] \\
&= \mathbb{E} \left[ \log(1 + e^{\theta^\top x}) - \log(1+e^{(\theta^*)^\top x}) - p^*(x)(\theta - \theta^*)^\top x \right].
\end{align*}
This matches the first term in Equation~\ref{network-agg:eq:q-from-ps-loss}, completing the proof.
\end{proof}

To relate the KL divergence to the parameter error, we employ the following bound, which is a specific case of Pinsker's inequality \cite{pinsker1964information}. We include the proof for completeness.

\begin{lemma} \label{network-agg:lem:kl-mse}
    For the expected KL divergence $D(p \| q)$, the following inequality holds:
    \begin{align*}
        D(p \|q) \ge 2\mathbb{E}\left[(p(x) - q(x))^2\right].
    \end{align*}
\end{lemma}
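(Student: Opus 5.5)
The inequality is pointwise in $x$: it suffices to prove that for all $a, b \in (0,1)$,
\[
D_{\mathrm{KL}}(\mathrm{Bernoulli}(a)\,\|\,\mathrm{Bernoulli}(b)) = a\log\frac{a}{b} + (1-a)\log\frac{1-a}{1-b} \;\ge\; 2(a-b)^2 .
\]
Once this holds, we substitute $a = p(x)$ and $b = q(x)$ and take expectations. Linearity and monotonicity of expectation then give $D(p\|q) \ge 2\mathbb{E}[(p(x)-q(x))^2]$. We use the natural logarithm throughout, consistent with the loss definitions above; this is also where the constant $2$ is the sharp one.

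To prove the scalar inequality, we fix $a$ and treat the difference as a function of $b$:
\[
g(b) = a\log\frac{a}{b} + (1-a)\log\frac{1-a}{1-b} - 2(a-b)^2 .
\]
Clearly $g(a) = 0$. Differentiating in $b$ gives
\[
g'(b) = -\frac{a}{b} + \frac{1-a}{1-b} + 4(a-b) = \frac{b-a}{b(1-b)} - 4(b-a) = (b-a)\left(\frac{1}{b(1-b)} - 4\right).
\]
Since $b(1-b) \le 1/4$ on $(0,1)$, the bracket is nonnegative. Hence $g' \le 0$ for $b < a$ and $g' \ge 0$ for $b > a$, so $g$ attains its minimum at $b = a$, and $g \ge 0$.

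The boundary cases $a \in \{0,1\}$ follow from the convention $0\log 0 = 0$ by the same argument, or by continuity. In our application they do not arise, since logistic predictions lie strictly in $(0,1)$.

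The only point requiring care is the derivative computation and the observation $b(1-b) \le 1/4$; there is no real obstacle. One must also check that the expectations are finite. This holds because $(p-q)^2 \le 1$, and $D(p\|q)$ is either finite or the inequality holds trivially.
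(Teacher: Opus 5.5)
Your proof is correct and is essentially the paper's argument: you reduce to the pointwise Bernoulli inequality and then use calculus together with $t(1-t)\le 1/4$. The only cosmetic difference is the choice of variable. The paper fixes $q$, varies $p$, and shows convexity via $f''(p)=\frac{1}{p(1-p)}-4\ge 0$ together with $f'(q)=0$; you fix $a$, vary $b$, and read the sign directly off the factored first derivative $(b-a)\bigl(\frac{1}{b(1-b)}-4\bigr)$.
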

\begin{proof}
We verify the inequality pointwise for any $x \in \mathcal{X}$. We aim to show
\begin{align} \label{network-agg:eq:kl-mse:single}
    p(x) \log \frac{p(x)}{q(x)} + (1-p(x)) \log \frac{1 - p(x)}{1 - q(x)} \ge 2(p(x)-q(x))^2.
\end{align}
Define the function $f(p) = p \log \frac{p}{q} + (1-p) \log \frac{1 - p}{1 - q} - 2(p-q)^2$. The first derivative with respect to $p$ is
$
    f'(p) = \log \frac{p}{q} - \log \frac{1-p}{1-q} - 4(p-q).
$
The second derivative is $f''(p) = \frac{1}{p} + \frac{1}{1-p} - 4 = \frac{1}{p(1-p)} - 4$.
For $p \in [0,1]$, the term $p(1-p)$ has a maximum value of $0.25$. Consequently, $\frac{1}{p(1-p)} \ge 4$, which implies $f''(p) \ge 0$. Since $f$ is convex and satisfies $f'(q) = 0$, the point $p=q$ is a global minimum. Observing that $f(q) = 0$, we conclude that $f(p) \ge 0$ for all $p$. Taking the expectation of both sides in (\ref{network-agg:eq:kl-mse:single}) yields the result:
\begin{equation*}
    D(p \|q) = \mathbb{E}\left[p(x) \log \frac{p(x)}{q(x)} + (1-p(x)) \log \frac{1 - p(x)}{1 - q(x)}\right] \ge 2\mathbb{E}[(p(x)-q(x))^2]. \qedhere
\end{equation*}
\end{proof}

We define the pointwise loss function:
\begin{align*}
    l(z,y) = \log(1+e^z) - yz.
\end{align*}
Thus, we can write $L(p) = \mathbb{E}[l(z(x), y)]$.

\begin{lemma}
\label{network-agg:lem:loss-convexity}
Let $g(x) = \sigma(z_g(x))$ be any logistic predictor. Let $S$ be a subspace of features and $p(x) = \sigma(z_p(x))$ be the predictor that minimizes $L(p)$ over $S$. Then, we have
\begin{equation} \label{network-agg:eq:p-by-g}
    L(p) \le L(g) + |\mathbb{E}[(p-y)z_g]|.
\end{equation}
\end{lemma}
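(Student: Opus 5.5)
The plan is to exploit the convexity of the pointwise loss $l(z,y)=\log(1+e^z)-yz$ in $z$. Its derivative is $\partial_z l(z,y)=\sigma(z)-y$, so convexity gives the tangent-line inequality
\[
l(z_g,y)\ \ge\ l(z_p,y) + (\sigma(z_p)-y)(z_g-z_p)
\]
for every $x$ and $y$. We take expectations, writing $p=\sigma(z_p)$, and obtain
\[
L(g)\ \ge\ L(p) + \mathbb{E}[(p-y)z_g] - \mathbb{E}[(p-y)z_p].
\]

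The key step is to show that the last term vanishes, $\mathbb{E}[(p-y)z_p]=0$. Since $p$ minimizes $L$ over $S$, its logit is $z_p=\theta_p^\top x_S$ for some parameter vector $\theta_p$ on the feature set $S$. Lemma~\ref{network-agg:lem:orthogonality}, applied on the feature space $S$, gives $\mathbb{E}[x_S(p-y)]=0$. Taking the inner product with $\theta_p$ yields $\mathbb{E}[(p-y)z_p]=\theta_p^\top\mathbb{E}[x_S(p-y)]=0$.

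We then rearrange the inequality to $L(p)\le L(g)-\mathbb{E}[(p-y)z_g]\le L(g)+|\mathbb{E}[(p-y)z_g]|$, which is exactly \eqref{network-agg:eq:p-by-g}. This argument is agnostic to where $g$'s logit lives, since $z_g$ need not lie in the span of $S$. That is why an absolute-value error term remains rather than zero.

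The main obstacle is justifying that the minimizer over $S$ exists and satisfies the first-order condition, which is what Lemma~\ref{network-agg:lem:orthogonality} needs. We will handle this by assuming, as the paper implicitly does, that the minimizer is attained and that gradients can be exchanged with the expectation; the bounded second moments of $x$ make this exchange legitimate. If $S$ also includes predecessor logits as features, the same argument applies with $x_S$ replaced by the augmented feature vector. A secondary technicality is finiteness of the expectations $\mathbb{E}[(p-y)z_g]$. This follows from Cauchy–Schwarz, since $|p-y|\le 1$ and $z_g$ is linear in $x$, which has bounded second moments.
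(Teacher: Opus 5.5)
Your proposal is correct and follows the same route as the paper. Both arguments apply the tangent-line inequality for the convex pointwise loss $l(\cdot,y)$ at $z_p$, take expectations, use Lemma~\ref{network-agg:lem:orthogonality} on the feature set $S$ to get $\mathbb{E}[(p-y)z_p]=0$, and then bound $-\mathbb{E}[(p-y)z_g]$ by its absolute value.
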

\begin{proof}
Let $\phi(z) = \log(1+e^z)$. The derivatives are $\phi'(z) = \sigma(z)$ and $\phi''(z) = \sigma(z)(1-\sigma(z))$. Since $\sigma(z) \in (0,1)$, we have $\phi''(z) \ge 0$, implying $\phi$ is convex. Convexity implies that for any $u, v \in \mathbb{R}$, $\phi(v) \ge \phi(u) + \phi'(u)(v-u)$. Rearranging implies the following:
\begin{equation} \label{network-agg:eq:phi-convex}
    \phi(u) \le \phi(v) + \sigma(u)(u-v).
\end{equation}
We define the relationship between the losses $l(u,y)$ and $l(v,y)$. Substituting $l(z,y) = \phi(z) - yz$, we assume the following inequality:
\begin{equation} \label{network-agg:eq:l-convex}
    l(u,y) \le l(v,y) + (\sigma(u)-y)(u-v).
\end{equation}
Expanding terms confirms this holds given the convexity of $\phi$ in Equation~\ref{network-agg:eq:phi-convex}:
\begin{align*}
    \phi(u) - yu &\le \phi(v) - yv + \sigma(u)(u-v) - yu + yv \\
    \iff \phi(u) &\le \phi(v) + \sigma(u)(u-v).
\end{align*}

Now, for a point $x$, let $u = z_p(x)$ and $v = z_g(x)$. Applying Equation~\ref{network-agg:eq:l-convex}, we get
\begin{align*}
    l(z_p, y) \le l(z_g,y) + (\sigma(z_p)-y)(z_p-z_g).
\end{align*}
Taking the expectation over $x$ gives
\begin{align*}
    L(p) \le L(g) + \mathbb{E}[(p-y)(z_p-z_g)] = L(g) + \mathbb{E}[(p-y)z_p] - \mathbb{E}[(p-y)z_g].
\end{align*}
From Lemma~\ref{network-agg:lem:orthogonality} (Orthogonality), we know that for any feature $x_l$ in the support of $p$, $\mathbb{E}[x_l(p-y)] = 0$. Since $z_p$ is a linear combination of such features, $\mathbb{E}[(p-y)z_p] = 0$. Substituting this yields
\begin{equation*}
    L(p) \le L(g) - \mathbb{E}[(p-y)z_g] \le L(g) + |\mathbb{E}[(p-y)z_g]|. \qedhere
\end{equation*}
\end{proof}

We consider a path of agents $A_1, \dots, A_D$. Each agent $i$ receives the logit $z_{i-1}$ from its predecessor and trains a logistic predictor model using locally observed features $x_{S_i}$, $z_{i-1}$, and possibly some other predecessors' logits. Since one option for the agent $A_i$ is to pass the logits $z_{i-1}$ through, we have that $L(p_{i-1}) \ge L(p_i)$. We also get that Lemma~\ref{network-agg:lem:pythagorean} holds for $p_{i-1}$ and $p_i$, since $p_{i-1}$ is in the stricter subspace of $p_i$.

We use the notation $\|f(x)\|_2 = \sqrt{\mathbb{E}\left[f(x)^2\right]}$ for any function $f$.

\begin{lemma}[Residual Bound via Path Coverage]
\label{network-agg:lem:residual-bound}
Let $A_1, \dots, A_k$ be a path of agents where every feature $x_l$ is observed at least once. Let $g(x) = \sigma(z_g(x))$ where $z_g(x) = \sum_{l=1}^d \alpha_lx_l$ be any logistic predictor over the whole space. Assume the coefficients of $z_g$ satisfy $\sum_{l=1}^d |\alpha_l| \le B_g$, and the features satisfy $\mathbb{E}[x_l^2] \le B_X^2$, for some $B_g$ and $B_X$. Let $\varepsilon \ge L(p_1) - L(p_k)$. Then, we have
\begin{align*}
    |\mathbb{E}[(p_k-y)z_g]| \le B_g B_X \sqrt{\frac{k\varepsilon}{2}}.
\end{align*}
\end{lemma}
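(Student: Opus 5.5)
We expand $z_g$ in features and handle each coordinate separately. By linearity,
\[
\mathbb{E}[(p_k-y)z_g] = \sum_{l=1}^d \alpha_l\, \mathbb{E}[(p_k-y)x_l],
\]
so it suffices to show that $|\mathbb{E}[(p_k-y)x_l]| \le B_X\sqrt{k\varepsilon/2}$ for every $l$. The claim then follows from $\sum_l|\alpha_l|\le B_g$.

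Fix a feature $l$. By the coverage hypothesis there is an index $j\in\{1,\dots,k\}$ with $l\in S_j$. Since $p_j$ is the BCE minimizer over a feature set containing $x_l$, Lemma~\ref{network-agg:lem:orthogonality} gives $\mathbb{E}[x_l(p_j-y)]=0$. Therefore
\[
\mathbb{E}[(p_k-y)x_l]=\mathbb{E}[(p_k-p_j)x_l].
\]
Cauchy--Schwarz then gives
\[
|\mathbb{E}[(p_k-y)x_l]| \le B_X\,\|p_k-p_j\|_2 .
\]
This reduces the problem to showing $\|p_k-p_j\|_2\le\sqrt{k\varepsilon/2}$.

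To bound $\|p_k-p_j\|_2$, we telescope along the path:
\[
\|p_k-p_j\|_2\le\sum_{i=j+1}^{k}\|p_i-p_{i-1}\|_2 .
\]
For each consecutive pair, $p_{i-1}$ is a logistic predictor in the feature/logit space of agent $i$, because agent $i$ may pass $z_{i-1}$ through unchanged. So Lemma~\ref{network-agg:lem:pythagorean} applies with $p^*=p_i$ and $q=p_{i-1}$, and together with Lemma~\ref{network-agg:lem:kl-mse} it gives
\[
\delta_i := L(p_{i-1})-L(p_i)=D(p_i\|p_{i-1})\ge 2\|p_i-p_{i-1}\|_2^2 .
\]
Hence $\|p_i-p_{i-1}\|_2\le\sqrt{\delta_i/2}$. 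Applying Cauchy--Schwarz to the sum,
\[
\sum_{i=j+1}^k\sqrt{\delta_i/2}\le\sqrt{(k-j)\cdot\tfrac12\sum_i\delta_i}\le\sqrt{k\varepsilon/2},
\]
since the $\delta_i$ are nonnegative and telescope to $L(p_j)-L(p_k)\le L(p_1)-L(p_k)\le\varepsilon$. This uses the monotonicity $L(p_{i-1})\ge L(p_i)$.

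The step that needs the most care is applying Lemma~\ref{network-agg:lem:pythagorean} between consecutive agents. Its hypothesis requires $p_i$ to be the optimum over a linear class that contains $p_{i-1}$'s logit. The class of agent $i$ is spanned by $x_{S_i}$ and the predecessor logits, and $z_{i-1}$ is one of its coordinates. So the orthogonality used in that lemma must hold with respect to $z_{i-1}$ as a regressor. This follows from the first-order condition for agent $i$'s parameter $v_{i,i-1}$, so I would state it explicitly rather than cite Lemma~\ref{network-agg:lem:orthogonality} only for raw features. A second point to check is that the orthogonality for $p_j$ is needed only against $x_l$ with $l\in S_j$, which is exactly what the first-order condition in $w_j$ provides. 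If one wants $\sqrt{(k-1)\varepsilon/2}$ the argument gives it, and the stated $\sqrt{k\varepsilon/2}$ is a weaker consequence.
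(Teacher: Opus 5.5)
Your proposal is correct and follows the paper's proof essentially step for step: you expand $z_g$ coordinatewise, use orthogonality at the covering agent $A_j$, apply Cauchy--Schwarz, and then bound $\|p_k-p_j\|_2$ by telescoping with the decomposition lemma and the Pinsker-type bound, finishing with Cauchy--Schwarz over the sum. You also note that applying the decomposition lemma between consecutive agents requires $p_i-y$ to be orthogonal to the passed-through logit $z_{i-1}$, which comes from the first-order condition in $v_{i,i-1}$; this makes explicit a point the paper only asserts in the sentence preceding the lemma.
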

\begin{proof}
Let $z_g(x) = \sum_{l=1}^d \alpha_l x_l$. We bound the error term:
\begin{align*}
    |\mathbb{E}[(p_k-y)z_g]| = \left|\mathbb{E}\left[\sum_{l=1}^d \alpha_l x_l(p_k-y)\right]\right| \le \sum_{l=1}^d |\alpha_l|\; |\mathbb{E}[x_l(p_k-y)]|.
\end{align*}
Consider a feature $x_l$. Due to each feature being observed, this feature appears in the index set of some agent $A_j$ in the path. By orthogonality, $\mathbb{E}[x_l(p_j-y)] = 0$. We decompose the expectation using the triangle inequality:
\begin{align*}
    |\mathbb{E}[x_l(p_k-y)]| \le |\mathbb{E}[x_l(p_k-p_j)]| + |\mathbb{E}[x_l(p_j-y)]| = |\mathbb{E}[x_l(p_k-p_j)]|.
\end{align*}
Applying the Cauchy-Schwarz inequality gives
\begin{align*}
    |\mathbb{E}[x_l(p_k-p_j)]| \le \sqrt{\mathbb{E}[x_l^2]} \sqrt{\mathbb{E}[(p_k-p_j)^2]} = \|x_l\|_2 \|p_k-p_j\|_2.
\end{align*}
Given $\|x_l\|_2 \le B_X$, we bound $\|p_k-p_j\|_2$ using the loss difference $\varepsilon$. Applying Lemma~\ref{network-agg:lem:kl-mse}, we get for any $s \in \{1,\dots,k-1\}$:
\begin{equation*}
\mathbb{E}\left[(p_{s} - p_{s+1})^2\right] \leq \frac{1}{2} D(p_{s+1} \| p_{s}).
\end{equation*}
By the triangle inequality and Cauchy-Schwarz,
\begin{align*}
\|p_j - p_k\|_2
&\leq \sum_{s=j}^{k-1} \sqrt{\frac{D(p_{s+1} \| p_{s})}{2}} \leq \sqrt{\frac{k \sum_{s=1}^{k-1} D(p_{s+1} \| p_{s})}{2}} \leq \sqrt{\frac{k \varepsilon}{2}}.
\end{align*}
Combining these bounds with the constraint on $\alpha_l$ gives
\begin{equation*}
    |\mathbb{E}[(p_k-y)z_g]| \le \sum_{l=1}^d |\alpha_l| \cdot |\mathbb{E}[x_l(p_k-y)]| \le  B_g B_X \sqrt{\frac{k\varepsilon}{2}}. \qedhere
\end{equation*}
\end{proof}

We are finally ready to prove Theorem~\ref{network-agg:thm:convergence}. Combining Lemma~\ref{network-agg:lem:loss-convexity} and Lemma~\ref{network-agg:lem:residual-bound}, we obtain the relationship $L(p_k) \le L(g) + B_g B_X \sqrt{k\varepsilon/2}$ for a path of length $k$. Extending this analysis over the full path satisfying the $M$-coverage condition leads to our main convergence result.

\begin{proof}[Proof of Theorem~\ref{network-agg:thm:convergence}]
We partition the path into $K = \lfloor D/M \rfloor$ disjoint blocks of length $M$. By the Pigeonhole Principle, since the total loss reduction is bounded by the loss of the first agent $L(p_1)$, there exists at least one \emph{stable} block $k^*$ where the reduction is at most the total reduction divided by $K$. Suppose this block $k^*$ is on indices $s, s+1,\dots, t$. Then, we have
\begin{equation*}
\sum_{i=s+1}^t \left( L(p_{i-1}) - L(p_i) \right) \leq \frac{L(p_1)}{K} \le \frac{2M L(p_1)}{D} := \varepsilon.
\end{equation*}
Applying Lemma~\ref{network-agg:lem:loss-convexity} and Lemma~\ref{network-agg:lem:residual-bound}, we get that over this path $L(p_t) \le L(p^*) + B_{p^*}B_X\sqrt{M\varepsilon/2}$. Next, note that $L(p_1) \le \log 2$ since using $\theta_1 = 0$ achieves a loss of $\log 2$, and because the first agent optimizes within its domain then $L(p_1) \le \log 2 < 1$. Combined with the non-increasing losses, we get
\begin{equation*}
    L(p_D) - L(p^*) \le B_{p^*}B_X \frac{M}{\sqrt{D}}. \qedhere
\end{equation*}
\end{proof}

Our theoretical analysis builds on the framework established by Kearns et al.~\cite{kearns2026networked} for linear regression. We extend their methods to the classification setting. Unlike their work, which relies on the geometry of Mean Squared Error (MSE), our analysis addresses the non-linearities of the sigmoid function and Binary Cross Entropy (BCE) loss. This necessitates the use of information-theoretic tools to bound the excess risk.

The proof strategy for Theorem~\ref{network-agg:thm:convergence} mirrors the methods of Kearns et al.~\cite{kearns2026networked}. Lemma \ref{network-agg:lem:orthogonality} (Orthogonality) and Lemma \ref{network-agg:lem:pythagorean} (Decomposition) establish the logistic equivalents of their linear regression results. However, since the logistic loss does not admit a simple Euclidean decomposition, we rely on KL-divergence and a Pinsker-type bound (Lemma \ref{network-agg:lem:kl-mse}) to relate the risk reduction to parameter error.

\section{Conclusion and Future Directions}

The diverse array of case studies presented in this manuscript demonstrates unequivocally that frontier AI models---specifically Gemini Deep Think and its advanced variants---have crossed a critical threshold. They are no longer merely tools for routine automation, data processing, or syntax formatting; they are now capable of acting as genuine, expert-level collaborators in mathematical and algorithmic discovery. Across theoretical computer science, economics, physics, and optimization, we have shown that LLMs can actively resolve open conjectures, tighten long-standing mathematical bounds, and identify obscure, cross-disciplinary theorems to bypass human roadblocks.

The value of the AI in these collaborations manifested in several distinct paradigms. In some instances, it acted as a cross-disciplinary bridge, retrieving theorems from distant mathematical domains (such as the Kirszbraun Extension Theorem) to resolve computational geometry roadblocks. In others, it served as a relentless adversarial reviewer, successfully identifying a fatal, deeply buried flaw regarding perfect versus statistical consistency in a state-of-the-art cryptography preprint. 

Crucially, however, these successes were not achieved autonomously. They required a tightly coupled human-AI workflow characterized by iterative refinement, strategic scaffolding, and rigorous verification---a process some authors have colloquially termed ``vibe-proving.'' 

\subsection{Common Themes and Problem Suitability}

Across these diverse case studies, clear themes emerge regarding the types of problems where this human-AI collaborative approach excels, and where it currently struggles.
\\\\
\textbf{Highly Suited Problem Classes:} The model is exceptionally effective on problems that can be decomposed into verifiable steps, require cross-disciplinary knowledge retrieval (e.g., bridging graph theory with continuous measure theory), or involve generating counterexamples to bounded conjectures. It also thrives in settings where the human can provide a strategic ``scaffold'' while the AI fills in tactical derivations, or where automated execution loops can rapidly test and prune algebraic hypotheses against numerical ground-truths (as in the cosmic strings derivation).
\\\\
\textbf{Less Suited Problem Classes:} One goal is to improve the model on problems requiring completely unconstrained, multi-page derivations, where intermediate steps cannot be easily verified or grounded. Problems that require establishing entirely novel mathematical frameworks from scratch, or those with extremely long, sparse reward horizons without intermediate feedback, exceed the autonomous capabilities of current models and require some amount of human orchestration.
\\\\
\textbf{Capabilities Needed for Expansion:} To further expand the range of scientific problems that can be tackled, future AI capabilities likely should evolve beyond natural language and standard code execution. Models will need enhanced capacities for long-horizon logical planning to maintain context over lengthy proofs. Furthermore, seamless, native integration with interactive theorem provers (discussed in Section 9.3) are important to autonomously verify the logical soundness of deep reasoning steps and mitigate hallucination bottlenecks.

\subsection{Understanding Current Limitations and Failure Modes}

Left unchecked, current models exhibit distinct failure modes that researchers must actively manage. Across our experiments, several recurring limitations emerged:

\begin{itemize}
    \item \textbf{Confirmation Bias:} As noted in the information theory case studies (Section \ref{sec:ck_conjecture}), models exhibit a strong tendency to support the hypothesis presented in a prompt. If tasked with proving a false conjecture, the AI will often attempt to bridge logical gaps with confident but ``hand-wavy'' arguments that do not withstand rigorous scrutiny. Neutral prompting (e.g., ``prove or refute'') is essential.
    
    \item \textbf{Confident Technical Hallucinations:} While models excel at high-level structural insights, they can occasionally make subtle algebraic errors, drop constraints, or confidently misapply theorems (e.g., flipping inequality signs in hypercontractivity bounds). 
    
    \item \textbf{Alignment Friction:} Standard safety and alignment guardrails can sometimes hinder scientific exploration. As noted in Section \ref{sec:techniques}, the model may initially refuse to attempt a problem if it recognizes it as an ``unsolved open problem” (requiring Context De-Identification to bypass).

\end{itemize}

Because of these limitations, the human researcher's role is elevated rather than replaced. The scientist shifts from executing mechanical derivations to acting as an orchestrator, auditor, and strategic director of the AI's combinatorial reasoning.

\subsection{Future Directions: From Code Execution to Formal Verification}

To overcome the limitations of LLM hallucinations, researchers must integrate pure language models with external verification environments. As outlined in Section 2.6 and demonstrated in our cosmic strings experiments (Section 6.4), we are already seeing success by embedding AI in ``neuro-symbolic'' loops---where the model autonomously writes code to numerically verify its proposed mathematical steps and uses traceback errors to prune invalid branches. 

However, while numerical execution is a powerful grounding mechanism for applied mathematics and physics, it is fundamentally limited when dealing with abstract proofs. For pure mathematics and theoretical computer science, the natural evolution of this workflow is \textbf{Formal Verification}. As AI systems generate increasingly complex, multi-page mathematical proofs, human verification becomes an exhausting bottleneck. Future research must focus on building \textit{autoformalization} pipelines that automatically translate LLM-generated informal mathematics into formal verification languages (such as Lean, Coq, or Isabelle). By pairing the creative, associative leaps of an LLM with the absolute rigorous certainty of an interactive theorem prover, the research community can systematically eliminate the hallucination problem in mathematical discovery.

\subsection{The Shifting Bottleneck: An Impending Crisis in Peer Review}

As AI drastically lowers the friction of generating highly technical, mathematically dense research papers---evidenced by the AI-integrated IDE workflow utilized in Section \ref{sec:lance} --- the scientific community faces an impending systemic challenge. If researchers can ``vibe-code'' comprehensive papers in a fraction of the traditional time, the fundamental bottleneck of science will shift entirely from the \textit{generation} of ideas to the \textit{verification} of those ideas.

The traditional human peer-review system is already strained and  fundamentally unequipped to handle a massive influx of AI-accelerated literature. Consequently, the very same tools used to generate these papers must be adapted to evaluate them. The cryptography case study (Section 3.2) proves that AI models, when guided by rigorous, adversarial self-correction protocols, are already capable of finding subtle flaws in advanced proofs. Developing robust, AI-assisted peer-review systems will be vital to triaging submissions and maintaining the integrity of the scientific literature in the coming years.

\subsection{Final Thoughts}

Just as the advent of calculators and computational algebra systems revolutionized applied mathematics in previous decades, the ability to rapidly iterate on abstract reasoning with a tireless, knowledgeable AI collaborator promises to dramatically reduce the friction of theoretical execution. 

Ultimately, the premise of this work is not just that AI can help solve specific, isolated research problems, but that it transforms \emph{how} we do research. The implications for scientists are important: the day-to-day workflow of theoretical research will likely shift away from mechanical derivations and exhaustive literature hunting, moving instead toward high-level orchestration, hypothesis generation, and rigorous verification. By acting as a collaborative sounding board, AI lowers the barrier to entry for exploring complex, interdisciplinary ideas. This shift promises to empower a broader diversity of researchers, allowing them to tackle bigger, more ambitious problems than they could alone. By embracing this collaborative paradigm, understanding its failure modes, and building new automated verification pipelines, researchers can explore broader hypothesis spaces and ultimately accelerate the pace of scientific discovery.

\section*{Acknowledgments}
The authors thank the Gemini team for access to early models and technical support. We also thank the support of Corinna Cortes, Jeff Dean, Demis Hassabis, and Koray Kavukcuoglu. We also acknowledge members of the Deep Think team that are not authors: Garrett Bingham, Irene Cai, Heng-Tze Cheng, Yong Cheng, Kristen Chiafullo, Paul Covington, Golnaz Ghiasi, Chenjie Gu, Huan Gui, Ana Hosseini, Dawsen Hwang, Vihan Jain, Ragha Kotikalapudi, Chenkai Kuang, Maciej Kula, Nate Kushman, Jane Labanowski, Quoc Le, Jonathan Lee, Zhaoqi Leng, Steve Li, YaGuang Li, Hanzhao (Maggie) Lin, Evan Liu, Yuan Liu, Thang Luong, Pol Moreno, Nigamaa Nayakanti, Aroonalok Pyne, Shubha Raghvendra, Sashank Reddi, Nikunj Saunshi, Siamak Shakeri, Archit Sharma, Xinying Song, Qijun Tan, Yi Tay, Trieu Trinh, Theophane Weber, Winnie Xu, Zicheng Xu, Shunyu Yao, Lijun Yu, Hao Zhou, and Honglei Zhuang.



\begin{thebibliography}{99}

\bibitem{anantharam2017conjecture}
Anantharam, V., Bogdanov, A., Chakrabarti, A., Jayram, T. S., and Nair, C. (2017). A conjecture regarding optimality of the dictator function under Hellinger distance. \textit{In Information Theory and Applications Workshop.}

\bibitem{az} Anari, N., and Rezaei, A., 
A tight analysis of Bethe approximation for permanent. 
\textit{SIAM Journal on Computing} {\bf 54} (4), 81--101

\bibitem{Arora_1998}
Arora, Sanjeev.
\newblock {Polynomial time approximation schemes for Euclidean traveling salesman and other geometric problems}.
\newblock \emph{Journal of the ACM}, 45(5):753--782, 1998.

\bibitem{arora1995polynomial}
Arora, Sanjeev and Karger, David and Karpinski, Marek.
\newblock Polynomial time approximation schemes for dense instances of NP-hard problems.
\newblock \emph{Proceedings of the twenty-seventh annual ACM symposium on Theory of computing}, pages 284--293, 1995.

\bibitem{avidor2005rounding}
Avidor, Adi and Zwick, Uri.
\newblock Rounding two and three dimensional solutions of the SDP relaxation of MAX CUT.
\newblock \emph{International Workshop on Approximation Algorithms for Combinatorial Optimization}, pages 14--25, 2005.

\bibitem{2509.06503}
Ayg\"{u}n, Eser et al.
\newblock An AI system to help scientists write expert-level empirical software.
\newblock \textit{arXiv preprint arXiv:2509.06503}, 2025.

\bibitem{BIJ+23}
Bakshi, A., Indyk, P., Jayaram, R., Silwal, S., Waingarten, E. (2023). A near-linear time algorithm for the chamfer distance. \textit{NeurIPS 2023}.

\bibitem{bansal2024}
Bansal, Nikhil and Cohen-Addad, Vincent and Prabhu, Milind and Saulpic, David and Schwiegelshohn, Chris
\newblock Sensitivity Sampling for $k$-Means: Worst Case and Stability Optimal Coreset Bounds. 
\newblock \textit{Proceedings of the 65th IEEE Annual Symposium on Foundations of Computer Science (FOCS 2024)}, pages 1707--1723, 2024.

\bibitem{barak2011rounding}
Barak, Boaz and Raghavendra, Prasad and Steurer, David.
\newblock Rounding semidefinite programming hierarchies via global correlation.
\newblock \emph{2011 ieee 52nd annual symposium on foundations of computer science}, pages 472--481, 2011.

\bibitem{barnes2020courtade}
Barnes, L. P., and {\"O}zg{\"u}r, A. (2020, June). The Courtade-Kumar most informative Boolean function conjecture and a symmetrized Li-M{\'e}dard conjecture are equivalent. \textit{IEEE International Symposium on Information Theory (ISIT)}, 2205-2209.

\bibitem{graph_notions_survey}
B{o}ttcher, Julia and Pruessmann, Klaas P. and Taraz, Anusch and W\"{u}rfl, Andreas.
\newblock Bandwidth, expansion, treewidth, separators and universality for bounded-degree graphs.
\newblock \emph{European Journal of Combinatorics}, 31(5):1217--1227, 2010.

\bibitem{branzei2022query}
Br\^anzei, Simina and Li, Jiawei.
\newblock The Query Complexity of Local Search and Brouwer in Rounds.
\newblock COLT, 2022. In Mathematical Statistics and Learning, forthcoming.

\bibitem{BPP26}
Simina Br\^anzei, Ioannis Panageas, and Dimitris Paparas.
\newblock The Query Complexity of Local Search in Rounds on General Graphs.
\newblock arXiv:2601.13266 [cs.CC], 2026.
\newblock \url{https://arxiv.org/abs/2601.13266}.

\bibitem{brennerFollowup}
M.~P. Brenner, V.~Cohen-Addad, and D.~Woodruff, 
``Solving an Open Problem in Theoretical Physics using AI-Assisted Discovery,'' 
2026, arXiv:2603.04735 [cs.AI]. 
\url{https://arxiv.org/abs/2603.04735}

\bibitem{briet2010positive}
Bri{e}t, Jop and de Oliveira Filho, Fernando M\'{a}rio and Vallentin, Frank.
\newblock The positive semidefinite Grothendieck problem with rank constraint.
\newblock \emph{International Colloquium on Automata, Languages, and Programming}, pages 31--42, 2010.

\bibitem{briet2014grothendieck}
Bri{e}t, Jop and de Oliveira Filho, Fernando M\'{a}rio and Vallentin, Frank.
\newblock Grothendieck Inequalities for Semidefinite Programs with Rank Constraint.
\newblock \emph{Theory of Computing}, 10(1):77--105, 2014.

\bibitem{OpenAI2025}
Bubeck, S\'ebastien and Coester, Christian and Eldan, Ronen and Gowers, Timothy and Lee, Yin Tat and Lupsasca, Alexandru and Sawhney, Mehtaab and Scherrer, Robert and Sellke, Mark and Spears, Brian K et al. 
\newblock Early science acceleration experiments with GPT-5.
\newblock \textit{arXiv preprint arXiv:2511.16072}, 2025.

\bibitem{Cai}
Cai, J.-Y.
\newblock 
$S_2^P \subseteq \mathrm{ZPP}^{\mathrm{NP}}$. 
\newblock \emph{Journal of Computer and System Sciences}, 73(1):25--35, 2007.

\bibitem{Canetti96}
Canetti, R.
\newblock More on BPP and the polynomial-time hierarchy.
\newblock \emph{Information Processing Letters}, 57(5):237--241, 1996.

\bibitem{chandar2014most}
Chandar, Venkat and Tchamkerten, Aslan.
\newblock Most informative quantization functions.
\newblock \emph{Proc. ITA Workshop, San Diego, CA, USA}, 2014.

\bibitem{chang2017dropout}
Chang, Chun-Hao and Rampasek, Ladislav and Goldenberg, Anna.
\newblock Dropout feature ranking for deep learning models.
\newblock \emph{arXiv preprint arXiv:1712.08645}, 2017.

\bibitem{Chierichetti26}
F. Chierichetti, M. Giacchini, R. Kumar, S. Lattanzi, A. Panconesi, E. Tani, A. Tomkins. Beyond the Full Slate: Evaluating MNL Algorithms on All Slates. \textit{Private Communication.}

\bibitem{cohen2017input}
M.~B.~Cohen, C.~Musco, and C.~Musco,
\newblock {Input sparsity time low-rank approximation via ridge leverage score sampling},
\newblock \emph{Proceedings of the Twenty-Eighth Annual ACM-SIAM Symposium on Discrete Algorithms (SODA)}, pp.~1758--1777, 2017.

\bibitem{CW25}
Cohen-Addad, V. and Woodruff, David P.
\newblock Google Research Blog post, 2025.
\url{https://research.google/blog/gemini-provides-automated-feedback-for-theoretical-computer-scientists-at-stoc-2026/}

\bibitem{courtade2014boolean}
Courtade, T. A., Kumar, G. R. (2014). Which Boolean functions maximize mutual information on noisy inputs? \textit{IEEE Transactions on Information Theory}, 60(8), 4515--4525.

\bibitem{PC} P. Csikv\'ari. 
Lower matching conjecture, and a new proof of Schrijver's and Gurvits's theorems.
\textit{Journal of the European Mathematical Society} (EMS Publishing) 19 (6), 2017.

\bibitem{csirmaz2015}
Csirmaz, L., Ligeti, P., Tardos, G. (2015). Erdős-Pyber theorem for hypergraphs and secret sharing. \textit{Graphs Combin.}, 31(5):1335--1346.

\bibitem{Parameterized_algos_book}
Cygan, M. and Fomin, F. V. and Kowalik, L. and Lokshtanov, D. and Marx, D. and Pilipczuk, M. and Pilipczuk, M. and Saurabh, S.
\newblock Parameterized Algorithms.
\newblock Springer, 2015.

\bibitem{debreu1954} Gerard Debreu. ``Representation of a Preference Ordering by a Numerical Function''. Decision processes 3 (1954): 159-165.

\bibitem{Du_Smith_1996}
Du, Ding-Zhu and Smith, Warren D.
\newblock Disproofs of Generalized {Gilbert Pollak} Conjecture on the {Steiner} Ratio in Three or More Dimensions.
\newblock \emph{Journal of Combinatorial Theory, Series A}, 74(1):115--130, 1996.

\bibitem{Duede}
Duede, E.
\newblock Tail Novelty, Knowledge Collapse, and Useful Frictions in Science.
\newblock Computer Science Seminar, Illinois Institute of Technology, November 17, 2025. \href{https://www.iit.edu/events/computer-science-seminar-eamon-duede-tail-novelty-knowledge-collapse-and-useful-frictions-science}{https://bit.ly/iit-cs-seminar-duede}

\bibitem{dutting2024} Paul D\"utting, Vahab Mirrokni, Renato Paes Leme, Haifeng Xu, and Song Zuo. ``Mechanism Design for Large Language Models''. In Proceedings of the ACM Web Conference 2024 (pp. 144-155).

\bibitem{Egorychev1981}
G.P. Egorychev. Proof of the van der Waerden conjecture for permanents [in Russian]. \textit{Sibirskii Matematicheskii Zhurnal}, 22:6, 65–71, 1981.

\bibitem{Falikman1981}
D.I. Falikman. Proof of the van der Waerden conjecture regarding the permanent of a doubly stochastic matrix [in Russian]. \textit{Matematicheskie Zametki}, 29:931–938, 1981.

\bibitem{F21}
Fand, M. (2021). Fast Johnson-Lindenstrauss Transform. \textit{arXiv preprint arXiv:2104.05695}.

\bibitem{feige2002improved}
Feige, Uriel and Karpinski, Marek and Langberg, Michael.
\newblock Improved approximation of Max-Cut on graphs of bounded degree.
\newblock \emph{Journal of Algorithms}, 43(2):201--219, 2002.

\bibitem{F26Erdos}
T. Feng, T. Trinh, G. Bingham, J. Kang, S. Zhang, et al., 
\textit{Semi-Autonomous Mathematics Discovery with Gemini: A Case Study on the Erd\H{o}s Problems}, 
arXiv preprint arXiv:2601.22401 (2026). 
Available at: \url{https://arxiv.org/abs/2601.22401}.

\bibitem{F26}
T. Feng, T.~H. Trinh, G. Bingham, D. Hwang, Y. Chervonyi, \textit{et al.}, 
\textit{Towards Autonomous Mathematics Research}, 
arXiv preprint arXiv:2408.00000 (2026). 
Available at: \url{https://storage.googleapis.com/deepmind-media/DeepMind.com/Blog/towards-autonomous-mathematics-research/Towards_Autonomous_Mathematics_Research.pdf}

\bibitem{FI25}
Feng, Y., Indyk, P. (2025). Even Faster Algorithm for the Chamfer Distance. \textit{ICALP 2025}.

\bibitem{fw26}
Feng, Y., Woodruff, David P. (2026). Fast Approximate Lp Chamfer Distance via Lopsided Embeddings and Structured JL. \textit{In submission 2026}. 

\bibitem{Fleischmann2025}
Fleischmann, H., Gamboa Quintero, G., Karthik C. S., Mat\v{e}jka, J., Petr, J. (2025). On Steiner Trees of the regular simplex. \textit{Journal of Computational Geometry}, 16(1), 1--34.

\bibitem{Fleischmann2023}
Fleischmann, Henry and Gavva, Surya Teja and Karthik C. S..
\newblock On Approximability of {Steiner} Tree in $\ell_p$-metrics.
\newblock \emph{TheoretiCS}, 4, 2025.

\bibitem{Fortnow25}
Fortnow, L.
\newblock Search versus Decision for $\mathcal{S}_2^{\mathcal{P}}$. 
\newblock arXiv preprint arXiv:2512.02808, 2025. \url{https://arxiv.org/abs/2512.02808}

\bibitem{S2PRepo}
L.\ Fortnow.
\newblock \href{https://github.com/fortnow/s2psearch}{\texttt{fortnow/s2psearch}: Files related to vibe-coding the “Search vs Decision for S2P” paper}.
\newblock GitHub repository, 2025. \url{https://github.com/fortnow/s2psearch}

\bibitem{wu2023tree}
Yao Fu, et al.
\newblock Tree of Thoughts: Deliberate Problem Solving with Large Language Models.
\newblock \emph{arXiv preprint arXiv:2305.10601}, 2023.

\bibitem{GanorSP21}
Ganor, Anat and {Karthik {C. S.}} and P{\'{a}}lv{\"{o}}lgyi, D{\"{o}}m{\"{o}}t{\"{o}}r.
\newblock On Communication Complexity of Fixed Point Computation.
\newblock \emph{{ACM} Trans. Economics and Comput.}, 9(4):25:1--25:27, 2021.

\bibitem{GargKRS20}
Paritosh Garg, Sagar Kale, Lars Rohwedder, and Ola Svensson.
Robust Algorithms Under Adversarial Injections.
\textit{Proceedings of the 47th International Colloquium on Automata,
Languages, and Programming (ICALP)}, 2020, pages 56:1--56:15. 

\bibitem{Deep Think}
Gemini Deep Think. \url{https://blog.google/products/gemini/gemini-2-5-deep-think/}.

\bibitem{georgiev2025mathematical}
B.~Georgiev, J.~G{\'o}mez-Serrano, T.~Tao, and A.~Z.~Wagner,
\newblock {Mathematical exploration and discovery at scale},
\newblock \emph{arXiv preprint arXiv:2511.02864}, 2025.

\bibitem{GhashamiLPW16}
Ghashami, M., Liberty, E., Phillips, J. M., Woodruff, D. P. (2016). Frequent directions: Simple and deterministic matrix sketching. \textit{SIAM Journal on Computing}, 45(5), 1762--1792.

\bibitem{gilpol}
Gilbert, Edgar N. and Pollak, Henry O.
\newblock Steiner Minimal Trees.
\newblock \emph{SIAM Journal on Applied Mathematics}, 16(1):1--29, 1968.

\bibitem{goemans1995improved}
Goemans, Michel X and Williamson, David P.
\newblock Improved approximation algorithms for maximum cut and satisfiability problems using semidefinite programming.
\newblock \emph{Journal of the ACM (JACM)}, 42(6):1115--1145, 1995.

\bibitem{YogevGuan2025}
Ziyi Guan and Eylon Yogev.
\textit{SNARGs for NP from LWE}.
Cryptology {ePrint} Archive, Paper 2025/2328, 2025.

\bibitem{GU1}
L. Gurvits. Van der Waerden/Schrijver-Valiant like conjectures and stable (aka hyperbolic) homogeneous polynomials: one theorem for all. \textit{Electron. J. Combin.}, 15(1), Research Paper 66, 2008.

\bibitem{Gurvits2011}
L. Gurvits. Unleashing the power of Schrijver's permanental inequality with the help of the Bethe Approximation. \textit{ArXiv preprint} 1106.2844v11, 2011.

\bibitem{gurvits2}	L. Gurvits.
Boolean matrices with prescribed row/column sums and stable homogeneous polynomials: Combinatorial and algorithmic applications. {\it Inf. Comput.} {\bf 240} 42-55 (2015)

\bibitem{gurvits3}	L. Gurvits, A. Samorodnitsky:
Bounds on the Permanent and Some Applications. \textit{FOCS} 2014: 90-99


\bibitem{HarveyNO08}
Harvey, Nicholas J. A. and Nelson, Jelani and Onak, Krzysztof.
\newblock Sketching and streaming entropy via approximation theory.
\newblock \textit{FOCS 2008}, 489--498.

\bibitem{hsieh2023approximating}
Hsieh, Jun Ting and Kothari, Pravesh K.
\newblock Approximating Max-Cut on Bounded Degree Graphs: Tighter Analysis of the {FKL} Algorithm.
\newblock \emph{50th International Colloquium on Automata, Languages, and Programming, ICALP 2023}, page 77, 2023.

\bibitem{robust_coreset:icml2025}
In, Chansophea Wathanak and Li, Yi and Woodruff, David and Wu, Xuan. 
\newblock Robust Sparsification via Sensitivity. 
\newblock \emph{Proceedings of ICML 2025}, pages 26446--26463, 2025.

\bibitem{Ivanov_Tuzhilin_2012}
Ivanov, Alexander O. and Tuzhilin, Alexey A.
\newblock The {Steiner} Ratio {Gilbert Pollak} Conjecture Is Still Open: Clarification Statement.
\newblock \emph{Algorithmica}, 62(1-2):630--632, 2012.

\bibitem{javanmard2026progress}
Javanmard, A., and Woodruff, D. P. (2026). Progress on the Courtade-Kumar Conjecture: Optimal High-Noise Entropy Bounds and Generalized Coordinate-wise Mutual Information. \textit{arXiv preprint arXiv:2601.09679.}

\bibitem{JayaramWZ24}
Jayaram, R., Woodruff, D. P., Zhou, S. (2024). Streaming Algorithms with Few State Changes. \textit{Proc. ACM Manag. Data}, 2(2), 82.

\bibitem{kearns2026networked}
M. Kearns, A. Roth, and E. Ryu.
Networked information aggregation via machine learning.
\textit{Proceedings of the Annual ACM-SIAM Symposium on Discrete Algorithms (SODA)}, 2026,
4799--4845.

\bibitem{Kes}
H. Kesten. Symmetric random walks on groups, \textit{Trans. Am. Math. Soc.} 92 (1959),
336–354. 

\bibitem{khot2007optimal}
Khot, Subhash and Kindler, Guy and Mossel, Elchanan and O'Donnell, Ryan.
\newblock Optimal inapproximability results for MAX-CUT and other 2-variable CSPs?
\newblock \emph{SIAM Journal on Computing}, 37(1):319--357, 2007.

\bibitem{Kirszbraun1934}
Kirszbraun, M.
\newblock {"U}ber die zusammenziehende und Lipschitzsche Transformationen.
\newblock \emph{Fundamenta Mathematicae}, 22(1):77--108, 1934.

\bibitem{korula2015online}
Korula, N., Mirrokni, V., Zadimoghaddam, M. (2015). Online submodular welfare maximization: Greedy beats 1/2 in random order. \textit{STOC 2015}.

\bibitem{krapivin2025}
Krapivin, A., Przybocki, B., Sanhueza-Matamala, N., Subercaseaux, B. (2025). Optimal and efficient partite decompositions of hypergraphs. \textit{arXiv preprint arXiv:2511.11855}.

\bibitem{li2020boolean}
Li, J., and M{\'e}dard, M. (2020). Boolean functions: noise stability, non-interactive correlation distillation, and mutual information. \textit{IEEE Transactions on Information Theory}, 67(2), 778-789.

\bibitem{lovasz}
L. Lovász, M. Plummer. \textit{Matching Theory}. AMS Chelsea Publishing Series, 2009.

\bibitem{MSS1} A. W. Marcus, D. A. Spielman and N. Srivastava, Interlacing families I: Bipartite Ramanujan graphs of all degrees, {\it Annals of Mathematics} {\bf 182} (2015), 307–325.


 
\bibitem{MSS2} 
A. W. Marcus, D. A. Spielman and N. Srivastava, Interlacing families II: Mixed characteristic polynomials and the Kadison—Singer problem, {\it Annals of Mathematics} {\bf 182} (2015), 327–350.

\bibitem{GeminiDeepThinkIMO}
Thang Luong and Edward Lockhart.
\newblock Advanced version of Gemini with Deep Think officially achieves gold-medal standard at the International Mathematical Olympiad.
\newblock \emph{Google DeepMind Blog}, July 21, 2025.

\bibitem{mc}
B. D. McKay. The expected eigenvalue distribution of a large regular graph, \textit{Linear
Algebra Appl.} 40 (1981), 203–216.

\bibitem{M99}
Mitchell, Joseph S. B.
\newblock Guillotine Subdivisions Approximate Polygonal Subdivisions: A Simple Polynomial-Time Approximation Scheme for Geometric {TSP},  $k$-{MST}, and Related Problems.
\newblock \emph{SIAM Journal on Computing}, 28(4):1298--1309, 1999.

\bibitem{Nagda2025}
Nagda, Ansh and Raghavan, Prabhakar and Thakurta, Abhradeep.
\newblock Reinforced Generation of Combinatorial Structures: Hardness of Approximation.
\newblock \textit{arXiv preprint arXiv:2509.18057}, 2025.

\bibitem{AB}
A. Nilli. On the second eigenvalue of a graph, \textit{Discrete Mathematics}, {\bf 91} (1991), 207–210.

\bibitem{NVE25}
Novikov, Alexander, V\~{u}, Ng\^{a}n and Eisenberger, Marvin and Dupont, Emilien and Huang, Po-Sen and Wagner, Adam Zsolt and Shirobokov, Sergey and Kozlovskii, Borislav and Ruiz, Francisco JR and Mehrabian, Abbas et al. 
\newblock AlphaEvolve: A coding agent for scientific and algorithmic discovery.
\newblock \textit{arXiv preprint arXiv:2506.13131}, 2025.

\bibitem{ordentlich2016improved}
Ordentlich, Or and Shayevitz, Ofer and Weinstein, Omri.
\newblock An improved upper bound for the most informative Boolean function conjecture.
\newblock \emph{2016 IEEE International Symposium on Information Theory (ISIT)}, pages 500--504, 2016.

\bibitem{pinsker1964information}
M. S. Pinsker.
\textit{Information and Information Stability of Random Variables and Processes}.
Holden--Day, 1964.

\bibitem{raghavendra2008optimal}
Raghavendra, Prasad.
\newblock Optimal algorithms and inapproximability results for every CSP?
\newblock \emph{Proceedings of the fortieth annual ACM symposium on Theory of computing}, pages 245--254, 2008.

\bibitem{RussellS98}
Russell, A. and Sundaram, R.
\newblock Symmetric alternation captures BPP.
\newblock \emph{Computational Complexity}, 7(2):152--162, 1998.

\bibitem{samorodnitsky2016entropy}
Samorodnitsky, A. (2016). On the entropy of a noisy function. \textit{IEEE Transactions on Information Theory}, 62(10), 5446-5464.

\bibitem{santha2004quantum}
Santha, Miklos and Szegedy, Mario.
\newblock Quantum and classical query complexities of local search are polynomially related.
\newblock \emph{Proceedings of the thirty-sixth annual ACM symposium on Theory of computing}, pages 494--501, 2004.

\bibitem{schmeidler1971}
David Schmeidler. ``A Condition for the Completeness of Partial Preference Relations''. In: Econometrica: Journal of the Econometric Society (1971), pp. 403--404.

\bibitem{Schrijver1998}
A. Schrijver. Counting 1-Factors in Regular Bipartite Graphs. \textit{Journal of Combinatorial Theory, Series B}, 72:122–135, 1998.

\bibitem{Schrijver1980}
A. Schrijver, W.G. Valiant. On lower bounds for permanents. \textit{Indagationes Mathematicae}, 42:425–427, 1980.

\bibitem{Schwartz1969NonlinearFA}
Schwartz, Jacob T.
\newblock \emph{Nonlinear Functional Analysis}.
\newblock Gordon and Breach Science Publishers, New York, 1969.

\bibitem{sellke2025learning}
M.~Sellke and S.~Yin,
\newblock {On Learning-Curve Monotonicity for Maximum Likelihood Estimators},
\newblock \emph{arXiv preprint arXiv:2512.10220}, 2025.

\bibitem{smith}
Smith, Warren D.
\newblock How to find {Steiner} minimal trees in {Euclidean} $d$-space.
\newblock \emph{Algorithmica}, 7:137--177, 1992.

\bibitem{sothanaphan2026resolution}
N.~Sothanaphan,
\newblock {Resolution of Erd\H{o}s Problem~\#728: a writeup of Aristotle's Lean proof},
\newblock \emph{arXiv preprint arXiv:2601.07421}, 2026.

\bibitem{Trevisan00}
Trevisan, Luca.
\newblock When {Hamming} Meets {Euclid}: The Approximability of Geometric {TSP} and {Steiner} Tree.
\newblock \emph{{SIAM} J. Comput.}, 30(2):475--485, 2000.

\bibitem{Valentine1945}
Valentine, F. A.
\newblock A Lipschitz condition preserving extension for a vector function.
\newblock \emph{American Journal of Mathematics}, 67:83--93, 1945.

\bibitem{Valiant1979}
L. Valiant. The complexity of computing the permanent. \textit{Theoretical Computer Science}, 8(2):189–201, 1979.

\bibitem{Vontobel} P. O. Vontobel. “The Bethe Permanent of a Non-Negative Matrix,” \textit{IEEE Transactions on Information Theory}, vol. 59, no. 3, pp. 1866-1901, 2013. (Also arXiv:1107.4196).

\bibitem{voorhoeve}
M. Voorhoeve. A lower bound for the permanents of certain $(0, 1)$-matrices.
\textit{Nederl. Akad. Wetensch. Indag. Math.}, 41(1):83–86, 1979.

\bibitem{WoodruffZ26}
Woodruff, David P. and Zhou, Samson. 
\newblock Consistent Low-Rank Approximation. 
\newblock \textit{The Fourteenth International Conference on Learning Representations} (ICLR) 2026.
\end{thebibliography}
\end{document}